\title{Computational Optimal Transport}
\author{%
Gabriel Peyr\'e\\CNRS and DMA, ENS
\and 
Marco Cuturi \\Google and CREST, ENSAE}}
\begin{document}

\maketitle

\begin{verbatim}@article{COTFNT,
year = {2019},
volume = {11},
journal = {Foundations and Trends in Machine Learning},
title = {Computational Optimal Transport},
number = {5-6},
pages = {355--607}
author = {Gabriel Peyr\'e and Marco Cuturi}
}\end{verbatim}

\tableofcontents

\mainmatter


\begin{abstract}
Optimal transport (OT) theory can be informally described using the words of the French mathematician Gaspard Monge (1746--1818): A worker with a shovel in hand has to move a large pile of sand lying on a construction site. The goal of the worker is to erect with all that sand a target pile with a prescribed shape (for example, that of a giant sand castle). Naturally, the worker wishes to minimize her total effort, quantified for instance as the total distance or time spent carrying shovelfuls of sand.
Mathematicians interested in OT cast that problem as that of comparing two probability distributions---two different piles of sand of the same volume. They consider all of the many possible ways to morph, \emph{transport} or reshape the first pile into the second, and associate a ``global'' cost to every such transport, using the ``local'' consideration of how much it costs to move a grain of sand from one place to another. Mathematicians are interested in the properties of that least costly transport, as well as in its efficient computation.  
That smallest cost not only defines a distance between distributions, but it also entails a rich geometric structure on the space of probability distributions. That structure is canonical in the sense that it borrows key geometric properties of the underlying ``ground'' space on which these distributions are defined. For instance, when the underlying space is Euclidean, key concepts such as interpolation, barycenters, convexity or gradients of functions extend naturally to the space of distributions endowed with an OT geometry.



OT has been (re)discovered in many settings and under different forms, giving it a rich history. While Monge's seminal work was motivated by an engineering problem, Tolstoi in the 1920s and Hitchcock, Kantorovich and Koopmans in the 1940s established its significance to logistics and economics. Dantzig solved it numerically in 1949 within the framework of linear programming, giving OT a firm footing in optimization. OT was later revisited by analysts in the 1990s, notably Brenier, while also gaining fame in computer vision under the name of earth mover's distances.
Recent years have witnessed yet another revolution in the spread of OT, thanks to the emergence of approximate solvers that can scale to large problem dimensions. As a consequence, OT is being increasingly used to unlock various problems in imaging sciences (such as color or texture processing), graphics (for shape manipulation) or machine learning (for regression, classification and generative modeling).

This paper reviews OT with a bias toward numerical methods, and covers the theoretical properties of OT that can guide the design of new algorithms.
We focus in particular on the recent wave of efficient algorithms that have helped OT find relevance in data sciences. We give a prominent place to the many generalizations of OT that have been proposed in but a few years, and connect them with related approaches originating from statistical inference, kernel methods and information theory. 
All of the figures can be reproduced using code made available in a companion website\footnote{\url{https://optimaltransport.github.io/}}. This website hosts the book project Computational Optimal Transport. You will also find slides and computational resources.
\end{abstract}
%

\chapter{Introduction}
\label{c-intro} 

%
%
%
%
%

The shortest path principle guides most decisions in life and sciences: When a commodity, a person or a single bit of information is available at a given point and needs to be sent at a target point, one should favor using the least possible effort. This is typically reached by moving an item along a straight line when in the plane or along geodesic curves in more involved metric spaces. The theory of optimal transport generalizes that intuition in the case where, instead of moving only one item at a time, one is concerned with the problem of moving simultaneously several items (or a continuous distribution thereof) from one configuration onto another. As schoolteachers might attest, planning the transportation of a group of individuals, with the constraint that they reach a given target configuration upon arrival, is substantially more involved than carrying it out for a single individual. Indeed, thinking in terms of groups or distributions requires a more advanced mathematical formalism which was first hinted at in the seminal work of~\citet{Monge1781}. Yet, no matter how complicated that formalism might look at first sight, that problem has deep and concrete connections with our daily life. Transportation, be it of people, commodities or information, very rarely involves moving only one item. All major economic problems, in logistics, production planning or network routing, involve moving distributions, and that thread appears in all of the seminal references on optimal transport. Indeed~\citet{tolstoi1930methods},~\citet{Hitchcock41} and~\citet{Kantorovich42} were all guided by practical concerns. It was only a few years later, mostly after the 1980s, that mathematicians discovered, thanks to the works of~\citet{Brenier91} and others, that this theory provided a fertile ground for research, with deep connections to convexity, partial differential equations and  statistics. At the turn of the millenium, researchers in computer, imaging and more generally data sciences understood that optimal transport theory provided very powerful tools to study distributions in a different and more abstract context, that of comparing distributions readily available to them under the form of bags-of-features or descriptors.

Several reference books have been written on optimal transport, including the two recent monographs by~\citeauthor{Villani03} (\citeyear{Villani03,Villani09}), those by~\citeauthor{rachev1998mass} (\citeyear{rachev1998mass,rachev1998mass2}) and more recently that by~\citet{SantambrogioBook}. As exemplified by these books, the more formal and abstract concepts in that theory deserve in and by themselves several hundred pages. Now that optimal transport has gradually established itself as an applied tool (for instance, in economics, as put forward recently by~\citet{galichon2016optimal}), we have tried to balance that rich literature with a computational viewpoint, centered on applications to data science, notably imaging sciences and machine learning. We follow in that sense the motivation of the recent review by~\citet{kolouri2017optimal} but try to cover more ground.
Ultimately, our goal is to present an overview of the main theoretical insights that support the practical effectiveness of OT and spend more time explaining how to turn these insights into fast computational schemes. 
The main body of Chapters \ref{c-continuous}, \ref{c-algo-basics}, \ref{c-entropic}, \ref{c-variational}, and \ref{c-extensions} is devoted solely to the study of the geometry induced by optimal transport in the space of probability vectors or discrete histograms. 
Targeting more advanced readers, we also give in the same chapters,  in light gray boxes, a more general mathematical exposition of optimal transport tailored for discrete measures. Discrete measures are defined by their probability weights, but also by the location at which these weights are defined. These locations are usually taken in a continuous metric space, giving a second important degree of freedom to model random phenomena.
%
Lastly, the third and most technical layer of exposition is indicated in dark gray boxes and deals with arbitrary measures that need not be discrete, and which can have in particular a density w.r.t. a base measure. This is traditionally the default setting for most classic textbooks on OT theory, but one that plays a less important role in general for practical applications.
Chapters~\ref{c-algo-semidiscr} to~\ref{c-statistical} deal with the interplay between continuous and discrete measures and are thus targeting a more mathematically inclined audience. 

The field of computational optimal transport is at the time of this writing still an extremely active one. There are therefore a wide variety of topics that we have not touched upon in this survey. Let us cite in no particular order the subjects of distributionally robust optimization \citep{NIPS2015_5745,esfahani2018data,NIPS2018_7534,NIPS2018_8015}, in which parameter estimation is carried out by minimizing the worst posssible empirical risk of any data measure taken within a certain Wasserstein distance of the input data; convergence of the Langevin Monte Carlo sampling algorithm in the Wasserstein geometry \citep{dalalyan2017user,pmlr-v65-dalalyan17a,pmlr-v75-bernton18a}; other numerical methods to solve OT with a squared Euclidian cost in low-dimensional settings using the Monge-Amp\`ere equation~\citep{froese2011convergent,benamou2014numerical,sulman2011efficient} which are only briefly mentioned in Remark~\ref{rem:MA}.


\section*{Notation}

\begin{itemize}
	\item $\range{n}$: set of integers $\{1,\dots,n\}$.
	\item $\ones_{n,m}$: matrix of $\RR^{n\times m}$ with all entries identically set to $1$. $\ones_{n}$: vector of ones.
	\item $\Identity_n$: identity matrix of size $n\times n$. 
	\item For $u\in\RR^n$, $\diag(u)$ is the $n\times n$ matrix with diagonal $u$ and zero otherwise.
	\item $\simplex_n$: probability simplex with $n$ bins, namely the set of probability vectors in $\RR^n_+$.	
	\item $(\a,\b)$:  histograms in the simplices $\simplex_n \times \simplex_m$. 
	\item $(\al,\be)$:  measures, defined on spaces $(\X,\Y)$.
	\item $\frac{\d\al}{\d\be}$: relative density of a measure $\al$ with respect to $\be$.
	\item $\density{\al} = \frac{\d\al}{\d x}$: density of a measure $\al$ with respect to Lebesgue measure.
	\item $(\al=\sum_i \a_i \delta_{x_i},\be=\sum_j \b_j \delta_{y_j})$: discrete measures supported on $x_1,\dots,x_n\in\X$ and $y_1,\dots,y_m\in\Y$.
	\item $\c(x,y)$: ground cost, with associated pairwise cost matrix $\C_{i,j}=(\c(x_i,y_j))_{i,j}$ evaluated on the support of $\al,\be$.
	\item $\pi$: coupling measure between $\al$ and $\be$, namely such that for any $A\subset\X, \pi(A\times \Y)= \al(A)$, and for any subset $B\subset \Y, \pi(\X\times B)= \be(B)$. For discrete measures $\pi = \sum_{i,j} \P_{i,j}\delta_{(x_i,y_j)}$.
	\item $\Couplings(\al,\be)$: set of coupling measures, for discrete measures $\CouplingsD(\a,\b)$.
	\item $\Potentials(\c)$: set of admissible dual potentials; for discrete measures $\PotentialsD(\C)$.
	\item $\T : \X \rightarrow \Y$: Monge map, typically such that $\T_\sharp \al = \be$.
	\item $(\al_t)_{t=0}^1$: dynamic measures, with $\al_{t=0}=\al_0$ and $\al_{t=1}=\al_1$.
	\item $\speed$: speed for Benamou--Brenier formulations; $\Moment=\al \speed$: momentum.
	\item $(\f,\g)$: dual potentials, for discrete measures $(\fD,\gD)$ are dual variables.
	\item $(\uD,\vD) \eqdef (e^{\fD/\varepsilon},e^{\gD/\varepsilon})$: Sinkhorn scalings.
	\item $\K \eqdef e^{-\C/\varepsilon}$: Gibbs kernel for Sinkhorn.
	\item $\flow$: flow for $\Wass_1$-like problem (optimization under divergence constraints).
	\item $\MKD_\C(\a,\b)$ and $\MK_\c(\al,\be)$: value of the optimization problem associated to the OT with cost $\C$ (histograms) and $\c$ (arbitrary measures).
	\item $\WassD_p(\a,\b)$ and $\Wass_p(\al,\be)$: $p$-Wasserstein distance associated to ground distance matrix $\distD$ (histograms) and distance $\dist$ (arbitrary measures).
	\item $\lambda\in\simplex_S$: weight vector used to compute the barycenters of $S$ measures.
	\item $\dotp{\cdot}{\cdot}$: for the usual Euclidean dot-product between vectors; for two matrices of the same size $A$ and $B$, $\dotp{A}{B} \eqdef \trace(A^\top B)$ is the Frobenius dot-product.
	\item $\f\oplus \g(x,y) \eqdef f(x)+g(y)$, for two functions $\f : \X \rightarrow \RR, \g : \Y \rightarrow \RR$, defines
		$\f \oplus \g : \X\times \Y \rightarrow \RR$.
	\item $\fD \oplus \gD \eqdef \fD \ones_m^\top + \ones_n \gD^\top \in \RR^{n\times m}$ for two vectors $\fD\in\RR^n$, $\gD\in\RR^m$.
	\item $\al \otimes \be$ is the product measure on $\X \times \Y$, \ie 
		$\int_{\X \times \Y} g(x,y) \d(\al\otimes\be)(x,y) \eqdef \int_{\X \times \Y} g(x,y) \d\al(x) \d\be(y)$.
	\item $\a \otimes \b \eqdef \a \b^\top \in \RR^{n \times m}$. 
	\item $\uD \odot \vD = (\uD_i \vD_i) \in \RR^n$ for $(\uD, \vD) \in (\RR^n)^2$.
\end{itemize}

\chapter{Theoretical Foundations}
\label{c-continuous}

This chapter describes the basics of optimal transport, introducing first the related notions of optimal matchings and couplings between probability vectors $(\a,\b)$, generalizing gradually this computation to transport between discrete measures $(\al,\be)$, to cover lastly the general setting of arbitrary measures. At first reading, these last nuances may be omitted and the reader can only focus on computations between probability vectors, namely histograms, which is the only requisite to implement algorithms detailed in Chapters~\ref{c-algo-basics} and~\ref{c-entropic}. More experienced readers will reach a better understanding of the problem by considering the formulation that applies to arbitrary measures, and will be able to apply it for more advanced problems (\eg in order to move positions of clouds of points, or in a statistical setting where points are sampled from continuous densities).

\section{Histograms and Measures}

We will use interchangeably the terms histogram and probability vector for any element $\a \in \simplex_n$ that belongs to the probability simplex
\eq{
	\simplex_n \eqdef \enscond{\a \in \RR_+^n}{ \sum_{i=1}^n \a_i = 1 }.
}
A large part of this review focuses exclusively on the study of the geometry induced by optimal transport on the simplex. 

\begin{rem1}{Discrete measures}
A discrete measure with weights $\a$ and locations $x_1,\dots,x_n\in\X$ reads
\eql{\label{eq-discr-meas}
	\al = \sum_{i=1}^n \a_i \de_{x_i},
}
where $\de_x$ is the Dirac at position $x$, intuitively a unit of mass which is infinitely concentrated at location $x$. Such a measure describes a probability measure if, additionally, $\a\in\simplex_n$ and more generally a positive measure if all the elements of vector $\a$ are nonnegative. To avoid degeneracy issues where locations with no mass are accounted for, we will assume when considering discrete measures that all the elements of $\a$ are positive.
\end{rem1}

\begin{rem2}{General measures}
A convenient feature of OT is that it can deal with measures that are either or both discrete and continuous within the same framework. To do so, one relies on the set of Radon measures $\Mm(\X)$ on the space $\X$. The formal definition of that set
requires that $\X$ is equipped with a distance, usually denoted $\dist$, because one can access a measure only by ``testing'' (integrating) it against continuous functions, denoted $f \in \Cc(\X)$.

Integration of $f \in \Cc(\X)$ against a discrete measure $\al$ computes a sum
\eq{
	\int_\X f(x) \d\al(x) = \sum_{i=1}^n \a_i f(x_i).
}
More general measures, for instance on $\X=\RR^\dim$ (where $\dim \in \NN^*$ is the dimension), can have a density $\d\al(x)=\density{\al}(x)\d x$ w.r.t. the Lebesgue measure, often denoted $\density{\al} = \frac{\d\al}{\d x}$, which means that
\eq{
	\foralls h \in \Cc(\RR^\dim), \quad
	\int_{\RR^\dim} h(x) \d\al(x) =  \int_{\RR^\dim} h(x) \density{\al}(x) \d x.
}
An arbitrary measure $\al \in \Mm(\X)$ (which need not have a density nor be a sum of Diracs) is defined by the fact that it can be integrated against any continuous function $f \in \Cc(\X)$ and obtain $\int_\X f(x) \d\al(x) \in \RR$. If $\X$ is not compact, one should also impose that $f$ has compact support or at least has $0$ limit at infinity.
Measures are thus in some sense ``less regular'' than functions but more regular than distributions (which are dual to smooth functions). For instance, the derivative of a Dirac is not a measure.
We denote $\Mm_+(\X)$ the set of all positive measures on $\X$. The set of probability measures is denoted $\Mm_+^1(\X)$, which means that any $\al \in \Mm_+^1(\X)$ is positive, and that $\al(\X)=\int_\Xx \d\al = 1$.
Figure~\ref{fig-measures} offers a visualization of the different classes of measures, beyond histograms, considered in this work.
\end{rem2}

\begin{figure}[h!]
\centering
\begin{tabular}{@{}c@{\hspace{1mm}}c@{\hspace{1mm}}c@{\hspace{1mm}}c@{}}
\includegraphics[width=.24\linewidth]{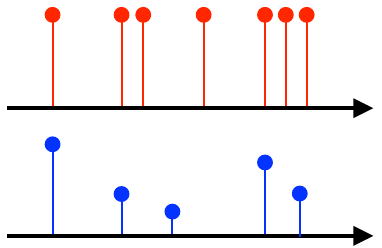}&
\includegraphics[width=.15\linewidth]{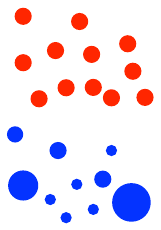}&
\includegraphics[width=.24\linewidth]{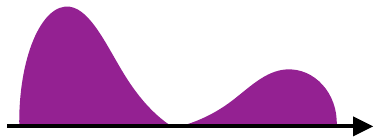}&
\includegraphics[width=.22\linewidth]{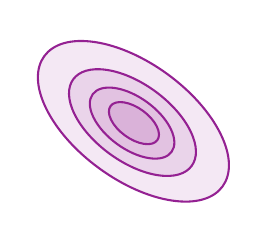}\\
Discrete $d=1$ & Discrete $d=2$ & Density $d=1$ & Density $d=2$
\end{tabular}
\caption{\label{fig-measures}
Schematic display of discrete distributions $\al = \sum_{i=1}^n \a_i \de_{x_i}$ (red corresponds to empirical uniform distribution $\a_i=1/n$, and blue to arbitrary distributions) and densities $\d\al(x)=\density{\al}(x)\d x$ (in purple), in both one and two dimensions. Discrete distributions in one-dimension are displayed as stem plots (with length equal to $\a_i$) and in two dimensions using point clouds (in which case their radius might be equal to $\a_i$ or, for a more visually accurate representation, their area).
}
\end{figure}

\section{Assignment and Monge Problem}

Given a cost matrix $(\C_{i,j})_{i \in \range{n}, j \in \range{m}}$, assuming $n=m$, the optimal assignment problem seeks for a bijection $\si$ in the set $\Perm(n)$ of permutations of $n$ elements solving
\eql{\label{eq-optimal-assignment}
	\umin{\si \in \Perm(n)} \frac{1}{n}\sum_{i=1}^n \C_{i,\si(i)}.
}
One could naively evaluate the cost function above using all permutations in the set $\Perm(n)$. However, that set has size $n!$, which is gigantic even for small $n$. Consider, for instance, that such a set has more than $10^{100}$ elements~\citep{Dantzig1983} when $n$ is as small as 70. That problem can therefore be solved only if there exist efficient algorithms to optimize that cost function over the set of permutations, which is the subject of~\S\ref{s-auction}.

\begin{rem}[Uniqueness] Note that the optimal assignment problem may have several optimal solutions. Suppose, for instance, that $n=m=2$ and that the matrix $\C$ is the pairwise distance matrix between the four corners of a 2-D square of side length $1$, as represented in the left plot of Figure~\ref{fig-non-unique-matching}. In that case only two assignments exist, and they are both optimal.
\end{rem}

\begin{figure}[h!]
\centering
\includegraphics[width=.8\linewidth]{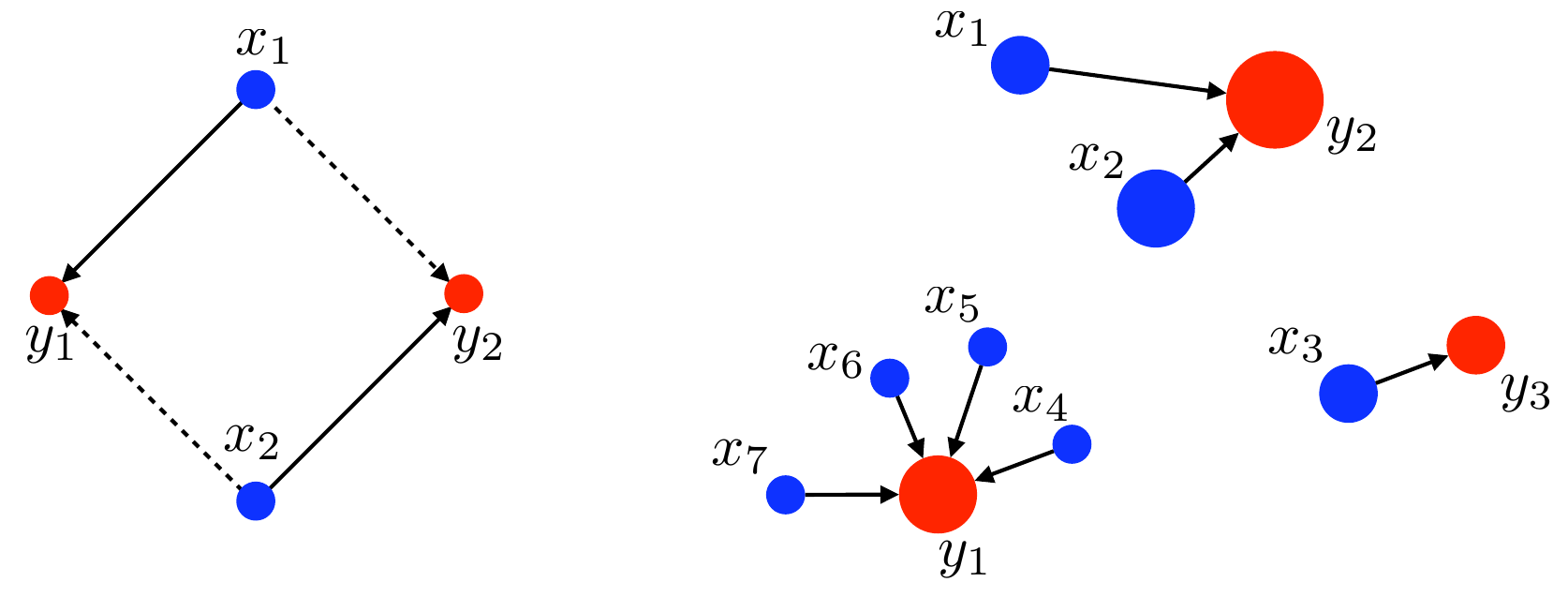}
\caption{\label{fig-non-unique-matching}
Left: blue dots from measure $\alpha$ and red dots from measure $\beta$ are pairwise equidistant. Hence, either matching $\sigma=(1,2)$ (full line) or $\sigma=(2,1)$ (dotted line) is optimal. Right: a Monge map can associate the blue measure $\alpha$ to the red measure $\beta$. The weights $\alpha_i$ are displayed proportionally to the area of the disk marked at each location. The mapping here is such that $T(x_1)=T(x_2)=y_2$, $T(x_3)=y_3$, whereas for $4\leq i\leq 7$ we have $T(x_i)=y_1$.
}
\end{figure}

\begin{rem1}{Monge problem between discrete measures}\label{rem-monge1}
For dis\-cre\-te measures
\eql{\label{eq-pair-discr}
	\al = \sum_{i=1}^n \a_i \de_{x_i}
	\qandq
	\be = \sum_{j=1}^m \b_j \de_{y_j},
}
the~\citeauthor{Monge1781} problem~\citeyearpar{Monge1781} seeks a map that associates to each point $x_i$ a single point $y_j$ and which must push the mass of $\al$ toward the mass of $\be$, namely, such a map $T:\{x_1,\dots, x_n\}\rightarrow \{y_1,\dots,y_m\}$ must verify that
\eql{\label{eq-monge-constr}
	\foralls j \in \range{m}, \quad
	\b_j = \sum_{ i : \T(x_i) = y_j } \a_i ,
}
which we write in compact form  as $\T_\sharp \al = \be$. Because all the elements of $\b$ are positive, that map is necessarily surjective.
This map should minimize some transportation cost, which is parameterized by a function $\c(x,y)$ defined for points $(x,y) \in \X \times \Y$,
\eql{\label{eq-monge-discr}
	\umin{\T} \enscond{ \sum_{i} \c(x_i,\T(x_i))  }{ \T_\sharp \al = \be } .
}
Such a map between discrete points can be of course encoded, assuming all $x$'s and $y$'s are distinct, using indices $\si : \range{n} \rightarrow \range{m}$ so that $j=\si(i)$, and the mass conservation is written as
\eq{
	\sum_{i \in \si^{-1}(j)} \a_i = \b_j,
}
where the inverse $\si^{-1}(j)$ is to be understood as the preimage set of $j$. In the special case when $n=m$ and all weights are uniform, that is, $\a_i=\b_j=1/n$, then the mass conservation constraint implies that $\T$ is a bijection, such that $\T(x_i)=y_{\si(i)}$, and the Monge problem is equivalent to the optimal matching problem~\eqref{eq-optimal-assignment}, where the cost matrix is
\eq{
	\C_{i,j} \eqdef \c(x_i,y_j).
}
When $n\ne m$, note that, optimality aside, Monge maps may not even exist between a discrete measure to another. This happens when their weight vectors are not compatible, which is always the case when the target measure has more points than the source measure, $n<m$. For instance, the right plot in Figure~\ref{fig-non-unique-matching} shows an (optimal) Monge map between $\al$ and $\be$, but there is no Monge map from $\be$ to $\al$.
\end{rem1}

\begin{rem2}{Push-forward operator}\label{rem-push-f}
For a continuous map $\T : \X \rightarrow \Y$, we define its corresponding push-forward operator $\T_\sharp : \Mm(\X) \rightarrow \Mm(\Y)$.
For discrete measures~\eqref{eq-discr-meas}, the push-forward operation consists simply in moving the positions of all the points in the support of the measure
\eq{
	\T_{\sharp} \al \eqdef \sum_i \a_i \de_{\T(x_i)}.
}
For more general measures, for instance, for those with a density, the notion of push-forward plays a fundamental role to describe the spatial modification (or transport) of a probability measure. The formal definition reads as follows.

\begin{defn}[Push-forward]\label{defn-pushfwd}
For $\T : \X \rightarrow \Y$, the push-forward measure $\be = \T_\sharp \al \in \Mm(\Y)$ of some $\al \in \Mm(\X)$ satisfies
\eql{\label{eq-push-fwd}
	\foralls h \in \Cc(\Y), \quad \int_\Y h(y) \d \be(y) = \int_\X h(\T(x)) \d\al(x).
}
Equivalently, for any measurable set $B \subset \Y$, one has
\eql{\label{eq-equiv-pushfwd}
	\be(B) = \al( \enscond{x \in \X}{\T(x) \in B} ) = \al( T^{-1}(B) ).
}
Note that $\T_\sharp$ preserves positivity and total mass, so that if $\al \in \Mm_+^1(\X)$ then $\T_\sharp \al \in \Mm_+^1(\Y)$.
\end{defn}

Intuitively, a measurable map $T: \X\rightarrow \Y$ can be interpreted as a function moving a single point from a measurable space to another. $T_\sharp$ is an extension of $T$ that can move an entire probability measure on $\X$ toward a new probability measure on $\Y$. The operator $T_\sharp$ \emph{pushes forward} each elementary mass of a measure $\al$ on $\X$ by applying the map $T$ to obtain then an elementary mass in $\Y$.  Note that a push-forward operator $\T_\sharp : \Mm_+^1(\X) \rightarrow \Mm_+^1(\Y)$ is \emph{linear} in the sense that for two measures $\al_1,\al_2$ on $\X$, $T_\sharp(\al_1+\al_2)=T_\sharp\al_1+ T_\sharp\al_2$.
\end{rem2}

\begin{rem2}{Push-forward for multivariate densities}
Explicitly doing the change of variables in formula~\eqref{eq-push-fwd} for measures with densities $(\density{\al},\density{\be})$ on $\RR^\dim$ (assuming $\T$ is smooth and bijective) shows that a push-forward acts on densities linearly as a change of variables in the integration formula. Indeed, one has
\eql{\label{eq-pfwd-density}
	\density{\al}(x) = |\det(\T'(x))|  \density{\be}(\T(x)),
}
where $\T'(x) \in \RR^{\dim \times \dim}$ is the Jacobian matrix of $T$ (the matrix formed by taking the gradient of each coordinate of $T$).
This implies
\eq{
	|\det(\T'(x))| = \frac{ \density{\al}(x) }{ \density{\be}(\T(x)) }.
}
\end{rem2}

\begin{rem2}{Monge problem between arbitrary measures}\label{rem-monge2}
The Mon\-ge problem~\eqref{eq-monge-discr} can be extended to the case where two arbitrary probability measures $(\al,\be)$, supported on two spaces $(\X,\Y)$ can be linked through a map $\T : \X \rightarrow \Y$ that minimizes
\eql{\label{eq-monge-continuous}
	\umin{\T} \enscond{ \int_{\X} \c(x,\T(x)) \d \al(x)  }{  \T_\sharp \al = \be }.
}
The constraint $\T_\sharp \al = \be$ means that $\T$ pushes forward the mass of $\al$ to $\be$, using the push-forward operator defined in Remark~\ref{rem-push-f}
\end{rem2}

\begin{rem2}{Push-forward vs. pull-back}
The push-forward $\T_\sharp$ of measures should not be confused with the pull-back of functions $\T^\sharp : \Cc(\Y) \rightarrow \Cc(\X)$ which corresponds to ``warping'' between functions, defined as the linear map which to $g \in \Cc(\Y)$ associates $\T^\sharp g = g \circ \T$. Push-forward and pull-back are actually adjoint to one another, in the sense that
\eq{
	\foralls (\al,g) \in \Mm(\X) \times \Cc(\Y), \quad
	\int_\Y g \d( \T_\sharp\al ) = \int_\X (\T^\sharp g) \d\al.
}
Note that even if $(\al,\be)$ have densities $(\density{\al},\density{\be})$ with respect to a fixed measure (\emph{e.g.} Lebesgue on $\RR^\dim$), $\T_\sharp \al$ does not have $\T^\sharp \density{\be}$ as density, because of the presence of the Jacobian in~\eqref{eq-pfwd-density}.
This explains why OT should be used with caution to perform image registration, because it does not operate as an image warping method.
Figure~\ref{fig-push-pull} illustrates the distinction between these push-forward and pull-back operators.
\end{rem2}

\begin{figure}[h!]
\centering
\begin{tabular}{@{}c@{\hspace{5mm}}c@{}}
\includegraphics[width=.3\linewidth]{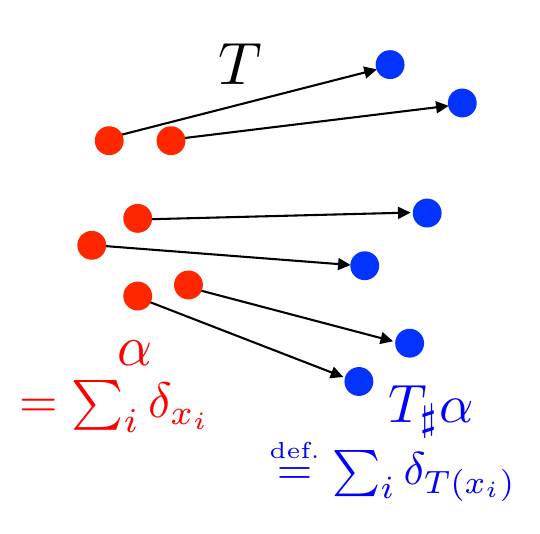}&
\includegraphics[width=.3\linewidth]{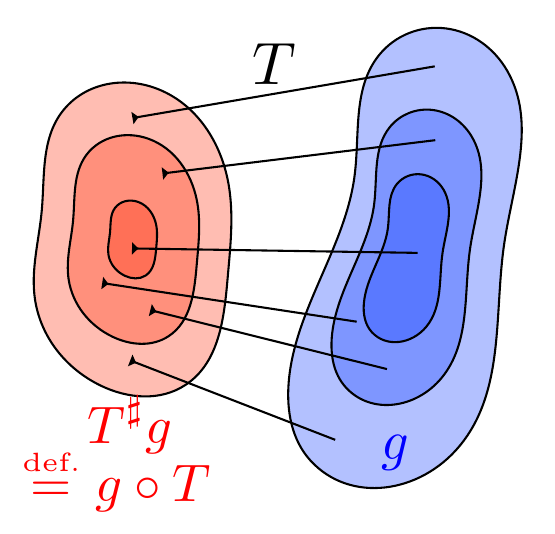}\\
Push-forward of measures & Pull-back of functions
\end{tabular}
\caption{\label{fig-push-pull}
Comparison of the push-forward operator $\T_\sharp$, which can take as an input any measure, and the pull-back operator $\T^\sharp$, which operates on functions, notaly densities.
}
\end{figure}

\begin{rem2}{Measures and random variables}\label{rem-meas-random}
	Radon measures can also be viewed as representing the distributions of random variables. A random variable $X$ on $\X$ is actually a map $X : \Om \rightarrow \X$ from some abstract (often unspecified) probability space $(\Om,\PP)$, and its distribution $\al$ is the Radon measure $\al \in \Mm_+^1(\X)$ such that $\PP(X \in A) = \al(A)=\int_A \d\al(x)$.
	Equivalently, it is the push-forward of $\PP$ by $X$, $\al=X_\sharp\PP$.
	Applying another push-forward $\be = \T_\sharp\al$ for $\T : \X \rightarrow \Y$, following~\eqref{eq-push-fwd}, is equivalent to defining another random variable $Y=\T(X) : \om \in \Om \rightarrow \T(X(\om)) \in Y$, so that $\be$ is the distribution of $Y$.
	Drawing a random sample $y$ from $Y$ is thus simply achieved by computing $y=\T(x)$, where $x$ is drawn from~$X$.
\end{rem2}

\section{Kantorovich Relaxation}\label{sec-kantorovich-relaxation}


The assignment problem, and its generalization found in the Monge problem laid out in Remark~\ref{rem-monge1}, is not always relevant to studying discrete measures, such as those found in practical problems. Indeed, because the assignment problem is formulated as a permutation problem, it can only be used to compare \emph{uniform} histograms of the \emph{same} size. A direct generalization to discrete measures with nonuniform weights can be carried out using Monge's formalism of push-forward maps, but that formulation may also be degenerate in the absence of feasible solutions satisfying the mass conservation constraint~\eqref{eq-monge-constr} (see the end of Remark~\ref{rem-monge1}). Additionally, the assignment problem~\eqref{eq-monge-discr} is combinatorial, and the feasible set for the Monge problem~\eqref{eq-monge-continuous}, despite being continuously parameterized as the set consisting in all push-forward measures that satisfy the mass conservation constraint, is \emph{nonconvex}. Both are therefore difficult to solve when approached in their original formulation.

The key idea of~\citet{Kantorovich42} is to relax the deterministic nature of transportation, namely the fact that a source point $x_i$ can only be assigned to another point or location $y_{\sigma_i}$ or $T(x_i)$ only. Kantorovich proposes instead that the mass at any point $x_i$ be potentially dispatched across several locations. Kantorovich moves away from the idea that mass transportation should be \emph{deterministic} to consider instead a \emph{probabilistic} transport, which allows what is commonly known now as \emph{mass splitting} from a source toward several targets. This flexibility is encoded using, in place of a permutation $\sigma$ or a map $T$, a coupling matrix $\P  \in \RR_+^{n \times m}$, where $\P_{i,j}$ describes the amount of mass flowing from bin $i$ toward bin $j$, or from the mass found at $x_i$ toward $y_j$ in the formalism of discrete measures~\eqref{eq-pair-discr}. Admissible couplings admit a far simpler characterization than Monge maps,
\eql{\label{eq-discr-couplings}
	\CouplingsD(\a,\b) \eqdef \enscond{ \P \in \RR_+^{n \times m} }{
		\P \ones_m = \a \qandq
		\transp{\P} \ones_n = \b
	},
}
where we used the following matrix-vector notation:
\eq{
	\P \ones_m = \left(\sum_j \P_{i,j}\right)_i \in \RR^n
	\qandq
	\transp{\P} \ones_n = \left(\sum_i \P_{i,j}\right)_j \in \RR^m.
}
The set of matrices $\CouplingsD(\a,\b)$ is bounded and defined by $n+m$ equality constraints, and therefore is a convex polytope (the convex hull of a finite set of matrices)~\citep[\S8.1]{brualdi2006combinatorial}.

Additionally, whereas the Monge formulation (as illustrated in the right plot of Figure~\ref{fig-non-unique-matching}) was intrisically asymmetric, Kantorovich's relaxed formulation is always symmetric, in the sense that a coupling $\P$ is in  $\CouplingsD(\a,\b)$ if and only if  $\transp{\P}$ is in $\CouplingsD(\b,\a)$. Kantorovich's optimal transport problem now reads
\eql{\label{eq-mk-discr}
	\MKD_{\C}(\a,\b) \eqdef
	\umin{\P \in \CouplingsD(\a,\b)}
		\dotp{\C}{\P} \eqdef \sum_{i,j} \C_{i,j} \P_{i,j}.
}
This is a linear program (see Chapter~\ref{c-algo-basics}), and as is usually the case with such programs, its optimal solutions are not necessarily unique.

\begin{rem}[Mines and factories]\label{rem-kantorovich-primal} The Kantorovich problem finds a very natural illustration in the following resource allocation problem (see also \citet{Hitchcock41}). Suppose that an operator runs $n$ warehouses and $m$ factories. Each warehouse contains a valuable raw material that is needed by the factories to run properly. More precisely, each warehouse is indexed with an integer $i$ and contains $\a_i$ units of the raw material. These raw materials must all be moved to the factories, with a prescribed quantity $\b_j$ needed at factory $j$ to function properly. To transfer resources from a warehouse $i$ to a factory $j$, the operator can use a transportation company that will charge $\C_{i,j}$ to move a single unit of the resource from location $i$ to location $j$. We assume that the transportation company has the monopoly to transport goods and applies the same linear pricing scheme to all actors of the economy: the cost of shipping $a$ units of the resource from $i$ to $j$ is equal to $a\times \C_{i,j}$.

Faced with the problem described above, the operator chooses to solve the linear program described in Equation~\eqref{eq-mk-discr} to obtain a transportation plan $\P^\star$ that quantifies for each pair $i,j$ the amount of goods $\P_{i,j}$ that must transported from warehouse $i$ to factory $j$. The operator pays on aggregate a total of $\dotp{\P^\star}{\C}$ to the transportation company to execute that plan.
\end{rem}

\paragraph{Permutation matrices as couplings.}

For a permutation $\si\in\Perm(n)$, we write $\P_{\si}$ for the corresponding permutation matrix,
	\eql{\label{eq-perm-matrices}
		\foralls (i,j) \in \range{n}^2, \quad
		(\P_{\si})_{i,j} = \choice{
			1/n \qifq j=\si_i, \\
			0 \quad\text{otherwise.}
		}
	}
One can check that in that case
\eq{\dotp{\C}{\P_{\si}}=\frac{1}{n}\sum_{i=1}^n \C_{i,\si_i},}
which shows that the assignment problem~\eqref{eq-optimal-assignment} can be recast as a Kantorovich problem~\eqref{eq-mk-discr} where the couplings $\P$ are restricted to be exactly permutation matrices:
\eq{\umin{\si \in \Perm(n)} \frac{1}{n}\sum_{i=1}^n \C_{i,\si(i)} = \umin{\si \in \Perm(n)} \dotp{\C}{\P_{\si}}.}
Next, one can easily check that the set of permutation matrices is strictly included in the \citeauthor{birkhoff} polytope $\CouplingsD(\ones_n/n,\ones_n/n)$. Indeed, for any permutation $\si$ we have $\P_{\si}\ones=\ones_n$ and $\transp{\P_{\si}}\ones=\ones_n$, whereas $\ones_n\transp{\ones_n}/n^2$ is a valid coupling but not a permutation matrix. Therefore, the minimum of $\dotp{\C}{\P}$ is necessarily smaller when considering all transportation than when considering only permutation matrices:
$$ \MKD_{\C}(\ones_n/n,\ones_n/n) \leq \umin{\si \in \Perm(n)} \dotp{\C}{\P_{\si}}.$$

The following proposition shows that these problems result in fact in the same optimum, namely that one can always find a permutation matrix that minimizes Kantorovich's problem~\eqref{eq-mk-discr} between two uniform measures $\a=\b=\ones_n/n$. The Kantorovich relaxation is therefore \emph{tight} when considered on assignment problems. 
Figure~\ref{fig-matching-kantorovitch} shows on the left a 2-D example of optimal matching corresponding to this special case.

\begin{prop}[Kantorovich for matching]\label{prop-matching-kanto}
	If $m=n$ and $\a=\b=\ones_n/n$, then there exists an optimal solution for Problem~\eqref{eq-mk-discr} $\P_{\si^\star}$, which is a permutation matrix associated to an optimal permutation $\si^\star \in \Perm(n)$ for Problem~\eqref{eq-optimal-assignment}.
\end{prop}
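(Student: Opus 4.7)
The plan is to invoke the Birkhoff--von Neumann theorem on the extreme points of the Birkhoff polytope together with the general principle that a linear program on a compact convex polytope attains its minimum at an extreme point. Concretely, I would proceed in four steps.

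First, I would observe that~\eqref{eq-mk-discr} specialized to $\a=\b=\ones_n/n$ is the linear program of minimizing $\dotp{\C}{\P}$ over the nonempty compact convex polytope $\CouplingsD(\ones_n/n,\ones_n/n)$. By standard LP theory (the fundamental theorem of linear programming), the minimum is attained, and in particular is attained at some extreme point of this polytope. Second, I would identify these extreme points: up to the global rescaling factor $1/n$, $\CouplingsD(\ones_n/n,\ones_n/n)$ is exactly the Birkhoff polytope of doubly stochastic matrices, whose extreme points are, by the \citeauthor{birkhoff} theorem, precisely the permutation matrices. Hence the extreme points of $\CouplingsD(\ones_n/n,\ones_n/n)$ are exactly the matrices $\P_\si$ defined in~\eqref{eq-perm-matrices} for $\si\in\Perm(n)$.

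Combining these two steps yields a permutation $\si^\star\in\Perm(n)$ such that $\P_{\si^\star}$ is optimal for~\eqref{eq-mk-discr}. Third, for such a permutation matrix the direct computation already given in the text shows
\[
\dotp{\C}{\P_{\si^\star}} \;=\; \frac{1}{n}\sum_{i=1}^n \C_{i,\si^\star(i)}.
\]
Fourth, I would close the loop: since any permutation matrix is admissible in~\eqref{eq-mk-discr}, we automatically have $\MKD_{\C}(\ones_n/n,\ones_n/n)\le \min_{\si\in\Perm(n)}\frac{1}{n}\sum_i \C_{i,\si(i)}$; but we have just exhibited a feasible permutation $\si^\star$ achieving the Kantorovich optimum, so this inequality is tight and $\si^\star$ must in turn minimize $\frac{1}{n}\sum_i \C_{i,\si(i)}$ over $\Perm(n)$, i.e.\ it solves the assignment problem~\eqref{eq-optimal-assignment}.

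The only nontrivial ingredient is Birkhoff's characterization of the extreme points of the doubly stochastic polytope, which I would simply cite rather than reprove; everything else is a routine appeal to LP optimality and to the cost identity for permutation couplings already derived above.
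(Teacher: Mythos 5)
Your proof is correct and follows essentially the same route as the paper's: invoke the fundamental theorem of linear programming to place an optimal solution at an extreme point, then use Birkhoff's theorem to identify extreme points of $\CouplingsD(\ones_n/n,\ones_n/n)$ with permutation matrices. The only difference is that you spell out the final step (that the resulting $\si^\star$ is also optimal for the assignment problem~\eqref{eq-optimal-assignment}), which the paper leaves implicit since the surrounding text has already established $\MKD_{\C}(\ones_n/n,\ones_n/n) \leq \min_{\si}\dotp{\C}{\P_\si}$ and the cost identity $\dotp{\C}{\P_\si}=\frac{1}{n}\sum_i \C_{i,\si(i)}$.
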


\begin{proof}
	\citeauthor{birkhoff}'s theorem \citeyearpar{birkhoff} states that the set of extremal points of $\CouplingsD(\ones_n/n,\ones_n/n)$ is equal to the set of permutation matrices. A fundamental theorem of linear programming \citep[Theorem 2.7]{bertsimas1997introduction} states that the minimum of a linear objective in a nonempty polyhedron, if finite, is reached at an extremal point of the polyhedron.
\end{proof}

\begin{figure}[h!]
\centering
\begin{tabular}{@{}c@{\hspace{5mm}}c@{}}
\includegraphics[width=.4\linewidth]{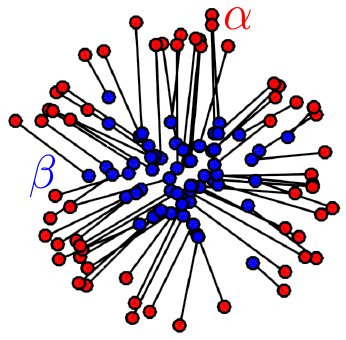}&
\includegraphics[width=.4\linewidth]{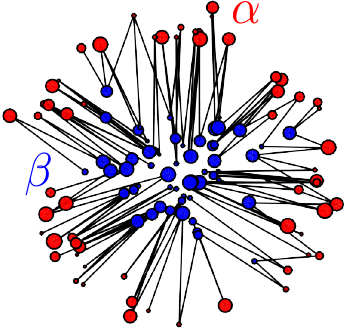}
\end{tabular}
\caption{\label{fig-matching-kantorovitch}
Comparison of optimal matching and generic couplings. A black segment between $x_i$ and $y_j$ indicates a nonzero element in the displayed optimal coupling $\P_{i,j}$ solving~\eqref{eq-mk-discr}.
Left: optimal matching, corresponding to the setting of Proposition~\ref{prop-matching-kanto} (empirical measures with the same number $n=m$ of points).
Right: these two weighted point clouds cannot be matched; instead a Kantorovich coupling can be used to associate two arbitrary discrete measures.
}
\end{figure}

\begin{rem1}{Kantorovich problem between discrete measures}
For discrete measures $\al,\be$ of the form~\eqref{eq-pair-discr}, we store in the matrix $\C$ all pairwise costs between points in the supports of $\al,\be$, namely $\C_{i,j} \eqdef \c(x_i,y_j)$, to define
\eql{\label{eq-kanto-discr}
	\MK_\c(\al,\be) \eqdef \MKD_{\C}(\a,\b).
}
Therefore, the Kantorovich formulation of optimal transport between discrete measures is the same as the problem between their associated probability weight vectors $\a,\b$  except that the cost matrix $\C$ depends on the support of $\al$ and $\be$. The notation $\MK_\c(\al,\be)$, however, is useful in some situations, because it makes explicit the dependency with respect to \emph{both} probability weights and supporting points, the latter being exclusively considered through the cost function $\c$.
\end{rem1}

\begin{figure}[h!]
\centering
\begin{tabular}{@{}c@{\hspace{5mm}}c@{\hspace{5mm}}c@{}}
\includegraphics[width=.3\linewidth]{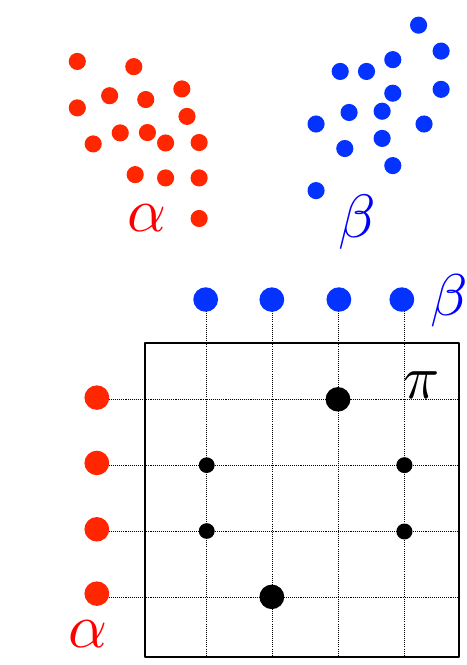}&
\includegraphics[width=.3\linewidth]{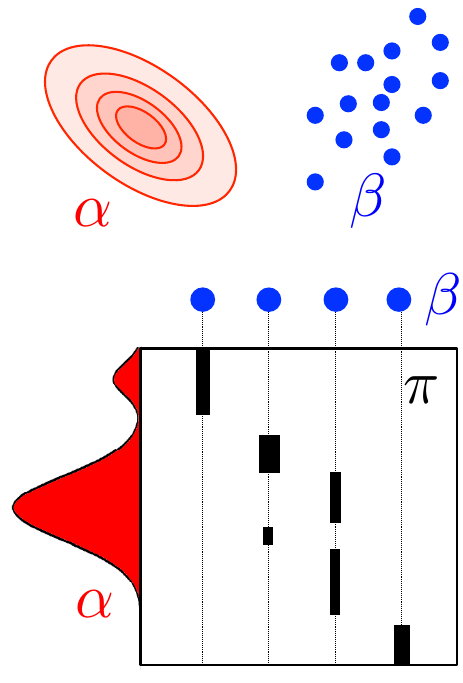}&
\includegraphics[width=.3\linewidth]{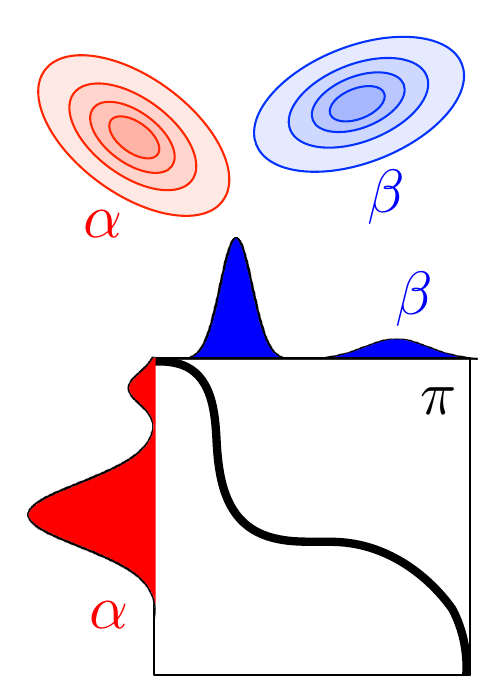}\\
Discrete & Semidiscrete & Continuous
\end{tabular}
\caption{\label{fig-settings}
Schematic viewed of input measures $(\al,\be)$ and couplings $\Couplings(\al,\be)$ encountered in the three main scenarios for Kantorovich OT. Chapter~\ref{c-algo-semidiscr} is dedicated to the semidiscrete setup.
}
\end{figure}

\begin{rem1}{Using optimal assignments and couplings}
The op\-ti\-mal transport plan itself (either as a coupling $\P$ or a Monge map $\T$ when it exists) has found many applications in data sciences, and in particular image processing. It has, for instance, been used for contrast equalization~\citep{delon2004midway} and texture synthesis~\citet{gutierrez2017optimal}.
A significant part of applications of OT to imaging sciences is for image matching~\citep{zhu2007image,wang2013linear,museyko2009application,li2013novel}, image fusion~\citep{courty2016optimal}, medical imaging~\citep{wang2011optimal} and shape registration~\citep{makihara2010earth,lai2014multi,su2015optimal}, and image watermarking~\citep{mathon2014optimal}. In astrophysics, OT has been used for reconstructing the early universe~\citep{FrischNaturee}.
OT has also been used for music transcription~\citep{flamary2016optimal}, and finds numerous applications in economics to interpret matching data~\citep{galichon2016optimal}.
Lastly, let us note that the computation of transportation maps computed using OT techniques (or inspired from them) is also useful to perform sampling~\citep{reich2013nonparametric,oliver2014minimization} and Bayesian inference~\citep{kim2013efficient,el2012bayesian}.
\end{rem1}

\begin{rem2}{Kantorovich problem between arbitrary measures} De\-fi\-ni\-tion~\eqref{eq-kanto-discr} of $\MK_\c$ is extended to arbitrary measures by considering couplings $\pi \in \Mm_+^1(\X \times \Y)$ which are joint distributions over the product space. The discrete case is a special situation where one imposes this product measure to be of the form $\pi = \sum_{i,j} \P_{i,j} \de_{(x_i,y_j)}$. In the general case, the mass conservation constraint~\eqref{eq-discr-couplings} should be rewritten as a marginal constraint on joint probability distributions
\eql{\label{eq-coupling-generic}
	\Couplings(\al,\be) \eqdef
	\enscond{
		\pi \in \Mm_+^1(\X \times \Y)
	}{
		P_{\X\sharp} \pi = \al
		\qandq
		P_{\Y\sharp} \pi = \be
	}.
}
Here $P_{\X\sharp}$ and $P_{\Y\sharp}$ are the push-forwards (see Definition~\ref{defn-pushfwd}) of the projections $P_\X(x,y)=x$ and $P_\Y(x,y)=y$.
Figure~\ref{fig-settings} shows how these coupling constraints translate for different classes of problems (discrete measures and densities).
Using~\eqref{eq-equiv-pushfwd}, these marginal constraints are equivalent to imposing that $\pi(A \times \Y)=\al(A)$ and $\pi(\X \times B)=\be(B)$ for sets $A \subset \X$ and $B \subset \Y$.
The Kantorovich problem~\eqref{eq-mk-discr} is then generalized as
\eql{\label{eq-mk-generic}
	\MK_\c(\al,\be) \eqdef
	\umin{\pi \in \Couplings(\al,\be)}
		\int_{\X \times \Y} \c(x,y) \d\pi(x,y).
}
This is an infinite-dimensional linear program over a space of measures. If $(\X,\Y)$ are compact spaces and $c$ is continuous, then it is easy to show that it always has solutions. Indeed $\Couplings(\al,\be)$ is compact for the weak topology of measures (see Remark~\ref{dfn-weak-conv}), $\pi \mapsto \int c \d\pi$ is a continuous function for this topology and the constraint set is nonempty (for instance, $\al \otimes \be \in \Couplings(\al,\be)$).
Figure~\ref{fig-couplings} shows examples of discrete and continuous optimal coupling solving~\eqref{eq-mk-generic}.
Figure~\ref{fig-couplings-simple} shows other examples of optimal 1-D couplings, involving discrete and continuous marginals.
\todoK{Explain that proof of existence of minimizers (\ie the inf is a min) follows from classical reasoning in variational calculus, especially if the spaces $(\X,\Y)$ are compact (or the support of the input measure) and the cost $\c$ is continuous. Indeed, the topology to use is the weak topology of measure (Definition~\ref{dfn-weak-conv}) and then the function is continuous on a compact. }
\end{rem2}

\begin{figure}[h!]
\centering
\begin{tabular}{@{}c@{\hspace{10mm}}c@{}}
\includegraphics[width=.3\linewidth]{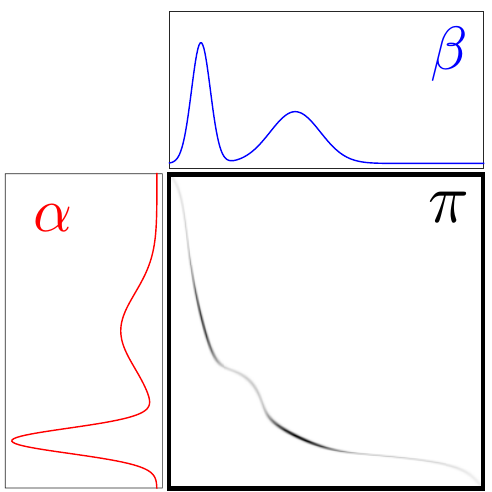}&
\includegraphics[width=.3\linewidth]{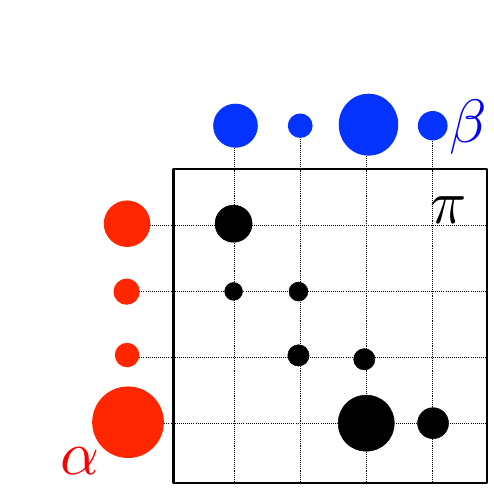}
\end{tabular}
\caption{\label{fig-couplings}
Left: ``continuous'' coupling $\pi$ solving~\eqref{eq-coupling-generic} between two 1-D measures with density. The coupling is localized along the graph of the Monge map $(x,\T(x))$ (displayed in black).
Right: ``discrete'' coupling $\T$ solving~\eqref{eq-mk-discr} between two discrete measures of the form~\eqref{eq-pair-discr}. The positive entries $\T_{i,j}$  are displayed as black disks at position $(i,j)$ with radius proportional to $\T_{i,j}$.
}
\end{figure}

\begin{figure}[h!]
\centering
\begin{tabular}{@{}c@{}c@{}c@{}c@{}}
\includegraphics[width=.24\linewidth]{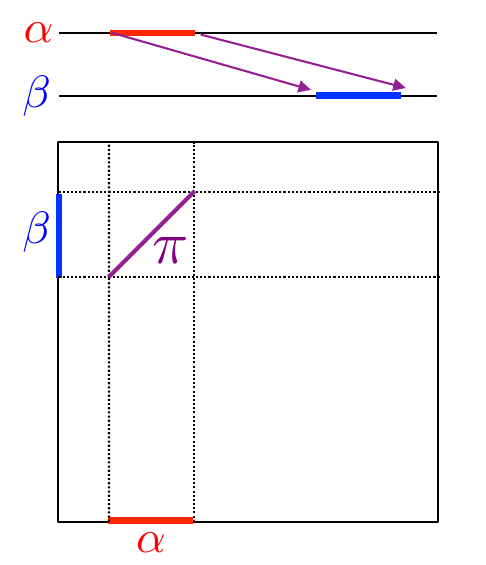}&
\includegraphics[width=.24\linewidth]{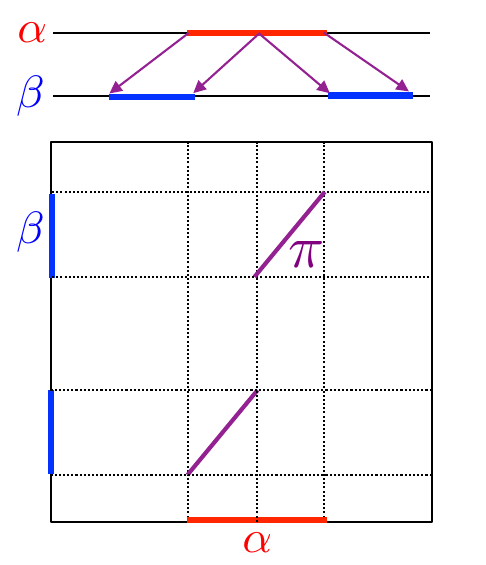}&
\includegraphics[width=.24\linewidth]{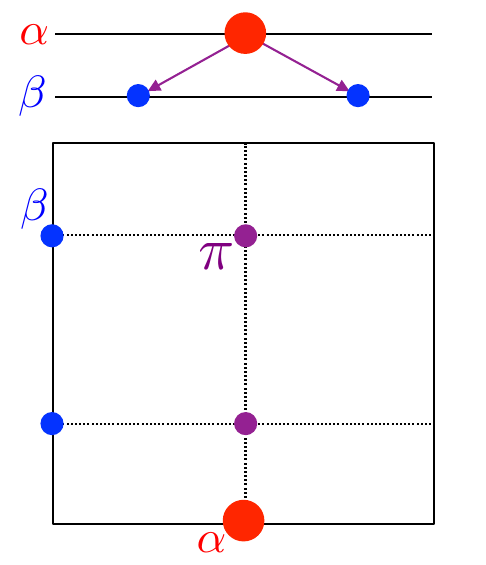}&
\includegraphics[width=.24\linewidth]{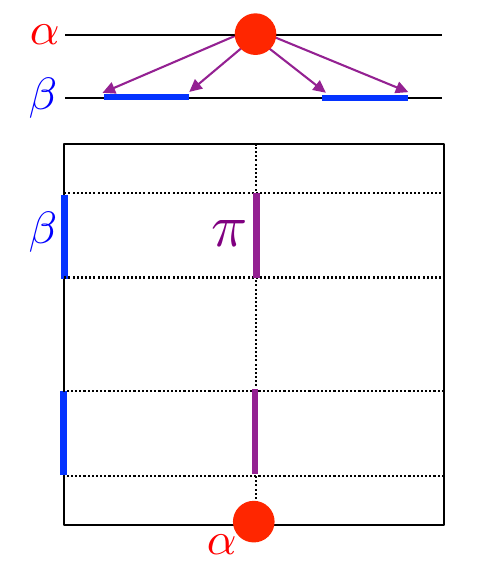}
\end{tabular}
\caption{\label{fig-couplings-simple}
Four simple examples of optimal couplings between 1-D distributions, represented as maps above (arrows) and couplings below. Inspired by~\citet{Levy2017review}.
}
\end{figure}

\begin{rem2}{Probabilistic interpretation}
Kantorovich's problem can be reinterpreted through the prism of random variables, following Remark~\ref{rem-meas-random}. Indeed, Problem~\eqref{eq-mk-generic} is equivalent to
\eql{\label{eq-ot-proba-interpretation}
	\MK_\c(\al,\be) = \umin{(X,Y)} \enscond{ \EE_{(X,Y)}(c(X,Y)) }{ X \sim \al, Y \sim \be },
}
where $(X,Y)$ is a couple of random variables over $\X \times \Y$ and $X \sim \al$ (resp., $Y \sim \be$) means that the law of $X$ (resp., $Y$), represented as a measure, must be $\al$ (resp., $\be$). The law of the couple $(X,Y)$ is then $\pi \in \Couplings(\al,\be)$ over the product space $\X \times \Y$.
\end{rem2}

\section{Metric Properties of Optimal Transport}

An important feature of OT is that it defines a distance between histograms and probability measures as soon as the cost matrix satisfies certain suitable properties. Indeed, OT can be understood as a canonical way to lift a ground distance between points to a distance between histogram or measures.

We first consider the case where, using a term first introduced by~\citet{RubTomGui00}, the ``ground metric'' matrix $\C$ is fixed, representing substitution costs between bins, and shared across several histograms we would like to compare. The following proposition states that OT provides a valid distance between histograms supported on these bins.

\begin{prop}\label{prop-metric-histo}
We suppose $n=m$ and that for some $p \geq 1$, $\C=\distD^p=(\distD_{i,j}^p)_{i,j} \in \RR^{n \times n}$, where $\distD \in \RR_+^{n \times n}$ is a distance on $\range{n}$, \ie
\begin{enumerate}[label=(\roman*)]
	\item $\distD \in \RR_+^{n \times n}$ is symmetric;
	\item $\distD_{i,j}=0$ if and only if $i=j$;
	\item $\foralls (i,j,k) \in \range{n}^3, \distD_{i,k} \leq \distD_{i,j}+\distD_{j,k}$.
\end{enumerate}
Then
\eql{\label{eq-wass-p-disc}
	\WassD_p(\a,\b) \eqdef \MKD_{\distD^p}(\a,\b)^{1/p}
}
(note that $\WassD_p$ depends on $\distD$) defines the $p$-Wasserstein distance on $\Si_n$, \ie $\WassD_p$ is symmetric, positive, $\WassD_p(\a,\b)=0$ if and only if $\a = \b$, and it satisfies the triangle inequality
\eq{
	\foralls \a,\b,\VectMode{c} \in \Si_n, \quad \WassD_p(\a,\VectMode{c}) \leq \WassD_p(\a,\b) + \WassD_p(\b,\VectMode{c}).
}
\end{prop}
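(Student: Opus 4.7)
The plan is to verify the three distance axioms in turn, with the triangle inequality being the substantive step. Symmetry and positivity are essentially formal, definiteness follows from the properties of $\distD$, and the triangle inequality will be handled via a gluing construction of couplings combined with Minkowski's inequality.

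First, for symmetry, I would observe that the map $\P \mapsto \transp{\P}$ is a bijection between $\CouplingsD(\a,\b)$ and $\CouplingsD(\b,\a)$ and that $\dotp{\distD^p}{\P} = \dotp{\distD^p}{\transp{\P}}$ because $\distD$ (hence $\distD^p$) is symmetric; this gives $\MKD_{\distD^p}(\a,\b) = \MKD_{\distD^p}(\b,\a)$. Positivity is immediate since $\distD^p \geq 0$. For definiteness, if $\a = \b$ then $\P = \diag(\a) \in \CouplingsD(\a,\a)$ achieves $\dotp{\distD^p}{\P} = \sum_i \distD_{i,i}^p \a_i = 0$. Conversely, if $\WassD_p(\a,\b) = 0$, take an optimal $\P^\star \in \CouplingsD(\a,\b)$. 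Then $\sum_{i,j} \distD_{i,j}^p \P^\star_{i,j} = 0$ forces $\P^\star_{i,j} = 0$ whenever $i \neq j$; thus $\P^\star$ is diagonal, and the marginal constraints $\P^\star\ones_n = \a$ and $\transp{\P^\star}\ones_n = \b$ yield $\a = \b$.

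The core step is the triangle inequality. Given $\a,\b,\VectMode{c} \in \Si_n$ with $\b$ having strictly positive entries (which we may assume by the convention on discrete measures in Remark~\ref{rem-monge1}; the general case follows by a limiting argument or direct modification), let $\P$ be optimal for $(\a,\b)$ and $\Q$ be optimal for $(\b,\VectMode{c})$. I would then define the \emph{glued coupling}
\eq{
  \R_{i,k} \eqdef \sum_{j=1}^n \frac{\P_{i,j}\Q_{j,k}}{\b_j} \in \RR_+^{n \times n}.
}
A direct check using $\sum_k \Q_{j,k} = \b_j$ and $\sum_i \P_{i,j} = \b_j$ shows $\R \ones_n = \a$ and $\transp{\R}\ones_n = \VectMode{c}$, so $\R \in \CouplingsD(\a,\VectMode{c})$.

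Using this admissible coupling in the definition of $\WassD_p$, the triangle inequality for $\distD$ gives
\eq{
  \WassD_p(\a,\VectMode{c})^p \leq \sum_{i,k} \distD_{i,k}^p \R_{i,k} \leq \sum_{i,j,k} (\distD_{i,j}+\distD_{j,k})^p \, \mu_{i,j,k},
}
where $\mu_{i,j,k} \eqdef \P_{i,j}\Q_{j,k}/\b_j$ is a probability vector on $\range{n}^3$. Viewing the right-hand side as the $p$-th power of the $L^p(\mu)$ norm of $(i,j,k) \mapsto \distD_{i,j}+\distD_{j,k}$ and applying Minkowski's inequality,
\eq{
  \WassD_p(\a,\VectMode{c}) \leq \Big(\sum_{i,j,k} \distD_{i,j}^p \mu_{i,j,k}\Big)^{1/p} + \Big(\sum_{i,j,k} \distD_{j,k}^p \mu_{i,j,k}\Big)^{1/p}.
}
Summing out $k$ in the first term (using $\sum_k \Q_{j,k}/\b_j = 1$) recovers $\sum_{i,j} \distD_{i,j}^p \P_{i,j} = \WassD_p(\a,\b)^p$, and symmetrically the second term collapses to $\WassD_p(\b,\VectMode{c})^p$, yielding the required inequality. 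The main obstacle is really bookkeeping: checking that $\R$ is a valid coupling and applying Minkowski on the correct auxiliary probability measure $\mu$; after that, the desired triangle inequality drops out.
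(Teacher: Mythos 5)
Your proof is correct and follows essentially the same route as the paper: gluing the two optimal couplings through their common $\b$-marginal, applying the pointwise triangle inequality for $\distD$, and then Minkowski's inequality on the auxiliary probability measure $\mu$ on $\range{n}^3$. The one place where you gesture rather than execute is the handling of zero entries in $\b$; the paper resolves this not by a limiting argument but by the ``direct modification'' you allude to, replacing $\b_j$ in the denominator by $\tilde\b_j \eqdef \b_j$ if $\b_j>0$ and $\tilde\b_j \eqdef 1$ otherwise, and noting that $\P_{i,j}=\Q_{j,k}=0$ whenever $\b_j=0$ so that the marginal identities $\SS\ones_n=\a$ and $\transp{\SS}\ones_n=\VectMode{c}$ still hold. (Your appeal to Remark~\ref{rem-monge1} is not quite apt, as that positivity convention concerns the weights of discrete \emph{measures}, not histograms in $\Si_n$; but the fix you name works.)
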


\begin{proof}
Symmetry and definiteness of the distance are easy to prove: since $\C = \distD^p$ has a null diagonal, $\WassD_p(\a,\a)=0$, with corresponding optimal transport matrix $\P^\star=\diag(\a)$; by the positivity of all off-diagonal elements of $\distD^p$, $\WassD_p(\a,\b)>0$ whenever $\a\ne \b$ (because in this case, an admissible coupling necessarily has a nonzero element outside the diagonal); by symmetry of $\distD^p$, $\WassD_p(\a,\b)$ is itself a symmetric function.

To prove the triangle inequality of Wasserstein distances for arbitrary measures, \citet[Theorem 7.3]{Villani03} uses the gluing lemma, which stresses the existence of couplings with a prescribed structure.
In the discrete setting, the explicit constuction of this glued coupling is simple.
Let $\a,\b,\VectMode{c} \in\simplex_n$. Let $\P$ and $\Q$ be two optimal solutions of the transport problems between $\a$ and $\b$, and $\b$ and $\VectMode{c}$, respectively.
To avoid issues that may arise from null coordinates in $\b$, we define a vector $\tilde{\b}$ such that $\tilde\b_j \eqdef \b_j$ if $\b_j>0$, and $\tilde{\b}_j\eqdef 1$ otherwise, to write
\eq{
	\SS \eqdef \P \diag(1/\tilde{\b}) \Q \in \RR_+^{n \times n},
}
and notice that $\SS \in \CouplingsD(\a,\VectMode{c})$ because
\eq{
	\SS \ones_n  = \P \diag(1/\tilde{\b}) \Q \ones_n = \P (\b / \tilde{\b}) = \P \ones_{\Supp(\b)} = \a,
}
where we denoted $\ones_{\Supp(\b)}$ the vector of size $n$ with ones located at those indices $j$ where $\b_j>0$ and zero otherwise, and we use the fact that $\P \ones_{\Supp(\b)} = \P \ones = \a$ because necessarily $\P_{i,j} = 0$ for those $j$ where $\b_j=0$. Similarly one verifies that $\transp{\SS} \ones_n = \VectMode{c}$. The triangle inequality follows then from
\begin{align*}
\WassD_p(\a,\VectMode{c})&=\left(\min_{\P\in U(\a,\VectMode{c})}\dotp{\P}{\distD^p}\right)^{1/p} \leq \dotp{\SS}{\distD^p}^{1/p}\\
&= \left(\sum_{ik} \distD^p_{ik}\sum_{j} \frac{\P_{ij}\Q_{jk}}{\tilde{\b}_j}\right)^{1/p} \leq \left(\sum_{ijk} \left(\distD_{ij}+\distD_{jk}\right)^p \frac{\P_{ij}\Q_{jk}}{\tilde{\b}_j}\right)^{1/p} \\
& \leq \left(\sum_{ijk} \distD^p_{ij} \frac{\P_{ij}\Q_{jk}}{\tilde{\b}_j}\right)^{1/p} + \left(\sum_{ijk}\distD^p_{jk} \frac{\P_{ij}\Q_{jk}}{\tilde{\b}_j}\right)^{1/p}.
\end{align*}
The first inequality is due to the suboptimality of $\SS$, the second is the triangle inequality for elements in $\distD$, and the third comes from Minkowski's inequality. One thus has
\begin{align*}
\WassD_p(\a,\VectMode{c})& \leq \left(\sum_{ij} \distD^p_{ij}\P_{ij} \sum_k \frac{\Q_{jk}}{\tilde{\b}_j}\right)^{1/p} + \left(\sum_{jk} \distD^p_{jk} \Q_{jk} \sum_i \frac{\P_{ij}}{\tilde{\b}_j}\right)^{1/p}\\
&= \left(\sum_{ij} \distD^p_{ij}\P_{ij}\right)^{1/p} + \left(\sum_{jk} \distD^p_{jk} \Q_{jk}\right)^{1/p}\\
&= \WassD_p(\a,\b) +\WassD_p(\b,\c),
\end{align*}
which concludes the proof.
\end{proof}

\begin{rem}[The cases $0<p\leq 1$]
Note that if $0 < p \leq 1$, then $\distD^p$ is itself distance. This implies that while for $p\geq 1$, $\WassD_p(\a,\b)$ is a distance, in the case $p \leq 1$, it is actually $\WassD_p(\a,\b)^p$ which defines a distance on the simplex.
\end{rem}

\begin{rem1}{Applications of Wasserstein distances}
The fact that the OT distance automatically ``lifts'' a ground metric between bins to a metric between histograms on such bins makes it a method of choice for applications in computer vision and machine learning to compare histograms.
In these fields, a classical approach is to ``pool'' local features (for instance, image descriptors) and compute a histogram of the empirical distribution of features (a so-called bag of features) to perform retrieval, clustering or classification; see, for instance,~\citep{oliva2001modeling}.
Along a similar line of ideas, OT distances can be used over some lifted feature spaces to perform signal and image analysis~\citep{thorpe2017transportation}.
Applications to retrieval and clustering were initiated by the landmark paper~\citep{RubTomGui00}, with renewed applications following faster algorithms for threshold matrices $\C$ that fit for some applications, for example, in computer vision~\citep{pele2008linear,Pele-iccv2009}.
More recent applications stress the use of the earth mover's distance for bags-of-words, either to carry out dimensionality reduction~\citep{pmlr-v51-rolet16} and classify texts~\citep{kusner2015word,huang2016supervised}, or to define an alternative loss to train multiclass classifiers that output bags-of-words~\citep{FrognerNIPS}.
\citet{kolouri2017optimal} provides a recent overview of such applications to signal processing and machine learning.
\end{rem1}

\begin{rem1}{Wasserstein distance between measures}
Pro\-po\-si\-tion \ref{prop-metric-histo} can be generalized to deal with arbitrary measures that need not be discrete.

\begin{prop}\label{prop-metric-measure}
We assume $\X=\Y$ and that for some $p \geq 1$, $\c(x,y)=\dist(x,y)^p$, where $\dist$ is a distance on $\X$, \ie
\begin{enumerate}[label=(\roman*)]
	\item  $\dist(x,y) = \dist(y,x) \geq 0$;
	\item  $\dist(x,y)=0$ if and only if $x=y$;
	\item  $\foralls (x,y,z) \in \X^3, \dist(x,z) \leq \dist(x,y)+\dist(y,z)$.
\end{enumerate}
Then the $p$-Wasserstein distance on $\X$,
\eql{\label{eq-defn-wass-dist}
	\Wass_p(\al,\be) \eqdef \MK_{\dist^p}(\al,\be)^{1/p}
}
(note that $\Wass_p$ depends on $\dist$),  is indeed a distance, namely $\Wass_p$ is symmetric, nonnegative, $\Wass_p(\al,\be)=0$ if and only if $\al = \be$, and it satisfies the triangle inequality
\eq{
	\foralls (\al,\be,\ga) \in  \Mm_+^1(\X)^3, \quad \Wass_p(\al,\ga) \leq \Wass_p(\al,\be) + \Wass_p(\be,\ga).
}
\end{prop}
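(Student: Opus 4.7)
The plan is to mirror the proof of Proposition~\ref{prop-metric-histo}, replacing matrix manipulations with their measure-theoretic analogues. Symmetry and the vanishing condition are the easy parts: for any $\pi \in \Couplings(\al,\be)$, the pushforward by the swap map $(x,y)\mapsto(y,x)$ gives a coupling in $\Couplings(\be,\al)$ with the same cost, so $\Wass_p(\al,\be)=\Wass_p(\be,\al)$. For $\al=\be$, the ``diagonal'' coupling $\pi = (\mathrm{Id},\mathrm{Id})_\sharp \al$ is admissible and has zero cost, hence $\Wass_p(\al,\al)=0$. Conversely, if $\Wass_p(\al,\be)=0$, any optimal $\pi$ is concentrated on $\{\dist(x,y)=0\}=\{x=y\}$, which forces $\al=\be$ by equating the marginals.

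The real work is the triangle inequality, and the main obstacle is constructing the analogue of the matrix $\SS = \P\diag(1/\tilde\b)\Q$ in the continuous setting. The classical tool for this is the \emph{gluing lemma}: given $\pi_1 \in \Couplings(\al,\be)$ and $\pi_2 \in \Couplings(\be,\ga)$, there exists a probability measure $\mu$ on $\X \times \X \times \X$ whose projection on the first two coordinates is $\pi_1$ and whose projection on the last two coordinates is $\pi_2$. This is proved by disintegrating $\pi_1$ and $\pi_2$ with respect to their common $\be$-marginal, writing $\pi_1 = \int \de_y \otimes K_1(y,\cdot)\,\d\be(y)$ and $\pi_2 = \int K_2(y,\cdot)\otimes \de_y \,\d\be(y)$ (after swapping coordinates appropriately), and then defining $\mu = \int K_2(y,\cdot) \otimes \de_y \otimes K_1(y,\cdot) \, \d\be(y)$. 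The role played by $\diag(1/\tilde\b)$ in the discrete proof is exactly this disintegration; degeneracies where $\b_j=0$ were handled by the $\tilde\b$ trick, and are harmless here because disintegration is only $\be$-a.e. defined.

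Once the glued measure $\mu$ is in hand, let $\pi_{13}$ be its projection on the first and third coordinates; then $\pi_{13} \in \Couplings(\al,\ga)$. By suboptimality,
\[
\Wass_p(\al,\ga) \le \left(\int_{\X\times\X\times\X} \dist(x,z)^p \,\d\mu(x,y,z)\right)^{1/p}.
\]
Apply the triangle inequality $\dist(x,z)\le \dist(x,y)+\dist(y,z)$ pointwise, then Minkowski's inequality in $L^p(\d\mu)$ to split the right-hand side into two terms. Using the marginal properties of $\mu$, the first term equals $(\int \dist(x,y)^p\,\d\pi_1(x,y))^{1/p} = \Wass_p(\al,\be)$ and the second equals $\Wass_p(\be,\ga)$, yielding the triangle inequality.

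Finally, one should note that the infimum in \eqref{eq-defn-wass-dist} is attained (as was already pointed out in the remark preceding the proposition, by weak compactness of $\Couplings(\al,\be)$ and weak continuity of $\pi \mapsto \int \dist^p\,\d\pi$ when $\dist$ is continuous), so the ``optimal $\pi_1, \pi_2$'' used above genuinely exist; if one worries about cases where $\dist^p$ is not integrable, the argument still goes through by restricting to $\al,\be$ with finite $p$-th moment or by interpreting $\Wass_p = +\infty$ otherwise. The hardest conceptual step is the gluing lemma itself, which is really a statement about disintegration of measures; everything else is a direct transcription of the discrete argument.
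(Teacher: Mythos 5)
Your proof is correct and matches the paper's intended route: the paper's own proof simply points to the gluing lemma (and to the discrete analogue in Proposition~\ref{prop-metric-histo}), while you carry out the gluing construction explicitly via disintegration against the common $\be$-marginal and then apply Minkowski's inequality, exactly as the paper does. The only caveat is a small notational wobble in the order of factors in your expressions for $\pi_1$, $\pi_2$ and $\mu$ (you partly flag this with ``after swapping coordinates appropriately''), which does not affect the argument but should be tidied up in a polished write-up.
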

\begin{proof}
The proof follows the same approach as that for Proposition~\ref{prop-metric-histo} and relies on the existence of a coupling between $(\al,\ga)$ obtained by ``gluing'' optimal couplings between $(\al,\be)$ and $(\be,\ga)$.
\end{proof}
\end{rem1}

\begin{rem2}{Geometric intuition and weak convergence}
The Was\-ser\-stein distance $\Wass_p$ has many important properties, the most important being that it is a weak distance, \ie it allows one to compare singular distributions (for instance, discrete ones) whose supports do not overlap and to quantify the spatial shift between the supports of two distributions.
In particular, ``classical'' distances (or divergences) are not even defined between discrete distributions (the $L^2$ norm can only be applied to continuous measures with a density with respect to a base measure, and the discrete $\ell^2$ norm requires that positions $(x_i,y_j)$ take values in a predetermined discrete set to work properly). In sharp contrast, one has that for any $p> 0$, $\Wass_p^p(\de_x,\de_y) = \dist(x,y)$. Indeed, it suffices to notice that $\Couplings(\de_x,\de_y)=\{ \delta_{x,y}\}$ and therefore the Kantorovich problem having only one feasible solution, $\Wass_p^p(\de_x,\de_y)$ is necessarily $(\dist(x,y)^p)^{1/p}=\dist(x,y)$. This shows that $\Wass_p(\de_x,\de_y) \rightarrow 0$ if $x \rightarrow y$.
This property corresponds to the fact that $\Wass_p$ is a way to quantify the weak convergence, as we now define.

\begin{defn}[Weak convergence]\label{dfn-weak-conv}
	On a compact domain $\Xx$, $(\al_k)_k$ converges weakly to $\al$ in $\Mm_+^1(\Xx)$ (denoted $\al_k \rightharpoonup \al$) if and only if for any continuous function $g \in \Cc(\Xx)$, $\int_\Xx g \d\al_k \rightarrow \int_\Xx g \d\al$.
	One needs to add additional decay conditions on $g$ on noncompact domains.
	This notion of weak convergence corresponds to the convergence in the law of random vectors.
\end{defn}

This convergence can be shown to be equivalent to $\Wass_p(\al_k,\al) \rightarrow 0$~\citep[Theorem 6.8]{Villani09} (together with a convergence of the moments up to order $p$ for unbounded metric spaces).
\end{rem2}

\begin{rem1}{Translations}
A nice feature of the Wasserstein distance over a Euclidean space $\X=\RR^\dim$ for the ground cost $c(x,y)=\norm{x-y}^2$ is that one can factor out translations; indeed, denoting $T_{\tau} : x \mapsto x-\tau$ the translation operator, one has
\eq{
	\Wass_2(T_{\tau\sharp} \al,T_{\tau'\sharp} \be)^2  =
	\Wass_2(\al,\be)^2 - 2\dotp{\tau-\tau'}{ \mean_{\al}-\mean_{\be} } + \norm{\tau-\tau'}^2,
}
\todoK{Check this!}
where $\mean_{\al} \eqdef \int_\X x \d\al(x) \in \RR^\dim$ is the mean of $\al$.
In particular, this implies the nice decomposition of the distance as
\eq{
	\Wass_2(\al,\be)^2  = \Wass_2(\tilde\al,\tilde{\be})^2  + \norm{\mean_{\al}-\mean_{\be}}^2,
}
where $(\tilde\al,\tilde{\be})$ are the ``centered'' zero mean measures $\tilde\al=T_{\mean_{\al}\sharp}\al$.
\end{rem1}
\pagebreak
\begin{rem1}{The case $p = +\infty$}\label{rem-p-inf}
Informally, the limit of $\Wass_p^p$ as $p \rightarrow +\infty$ is
\eql{\label{eq-wass-infty}
	\Wass_{\infty}(\al,\be) \eqdef
	\umin{\pi \in \Couplings(\al,\be)}
		\usup{(x,y) \in \Supp(\pi)} d(x,y),
}
where the sup should be understood as the essential supremum according to the measure $\pi$ on $\X^2$.
In contrast to the cases $p<+\infty$, this is a nonconvex optimization problem, which is difficult to solve numerically and to study theoretically. The $\Wass_{\infty}$ distance is related to the Hausdorff distance between the supports of $(\al,\be)$; see \S~\ref{sec-hausdorff}. We refer to~\citep{champion2008wasserstein} for details.
\end{rem1}

\section{Dual Problem}

The Kantorovich problem~\eqref{eq-mk-discr} is a constrained convex minimization problem, and as such, it can be naturally paired with a so-called dual problem, which is a constrained concave maximization problem. The following fundamental proposition explains the relationship between the primal and dual problems.

\begin{prop}\label{prop-duality-discr}
The Kantorovich problem~\eqref{eq-mk-discr} admits the dual
\eql{\label{eq-dual}
	\MKD_\C(\a,\b) =
	\umax{(\fD,\gD) \in \PotentialsD(\C)} \dotp{\fD}{\a} + \dotp{\gD}{\b},
}
where the set of admissible dual variables is
\eql{\label{eq-feasible-potential}
	\PotentialsD(\C) \eqdef \enscond{
		(\fD,\gD) \in \RR^n \times \RR^m
	}{ \foralls (i,j) \in \range{n} \times \range{m}, \fD \oplus \gD \leq \C }.
}
Such dual variables are often referred to as ``Kantorovich potentials.''
\end{prop}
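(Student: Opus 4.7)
The plan is to recognize that the Kantorovich problem~\eqref{eq-mk-discr} is a finite-dimensional linear program in the standard form
\eq{
\min_{\P \geq 0} \dotp{\C}{\P} \quad \text{subject to} \quad \P\ones_m = \a, \; \transp{\P}\ones_n = \b,
}
and to apply linear programming duality. I will first derive the dual by a Lagrangian calculation to identify the correct feasible set $\PotentialsD(\C)$, and then invoke strong duality to conclude that the optimal values coincide.

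For the Lagrangian step, I would introduce multipliers $\fD \in \RR^n$ and $\gD \in \RR^m$ for the row-sum and column-sum equality constraints respectively, while keeping $\P \geq 0$ as an explicit inner constraint. This yields
\eq{
L(\P,\fD,\gD) \eqdef \dotp{\C}{\P} + \dotp{\fD}{\a - \P \ones_m} + \dotp{\gD}{\b - \transp{\P}\ones_n}
= \sum_{i,j}(\C_{i,j} - \fD_i - \gD_j)\P_{i,j} + \dotp{\fD}{\a} + \dotp{\gD}{\b}.
}
Minimizing over $\P \geq 0$ gives $-\infty$ unless every coefficient $\C_{i,j} - \fD_i - \gD_j$ is nonnegative, in which case the infimum is attained at $\P = 0$ and equals $\dotp{\fD}{\a} + \dotp{\gD}{\b}$. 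This forces the dual feasibility constraint $\fD \oplus \gD \leq \C$, so the dual problem is exactly the right-hand side of~\eqref{eq-dual}.

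For the equality of primal and dual values, the key step is strong duality. I would note that the primal is feasible (for example $\a\transp{\b} \in \CouplingsD(\a,\b)$ since $\sum_i \a_i = \sum_j \b_j = 1$), the feasible set $\CouplingsD(\a,\b)$ is compact (bounded closed polytope), and the objective is linear, so the primal attains a finite optimum. By the strong duality theorem for linear programs (see, e.g., \citet[Theorem 4.4]{bertsimas1997introduction}), the dual also attains its optimum and the two values coincide. One should also verify that the dual is feasible, which is immediate: any sufficiently negative pair $(\fD,\gD)$ satisfies $\fD \oplus \gD \leq \C$.

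The main obstacle — if one wants to avoid citing LP duality as a black box — is proving strong duality from scratch. This could be done either by a Farkas-lemma argument applied to the polyhedron $\CouplingsD(\a,\b)$, or by appealing to the Fenchel--Rockafellar theorem after writing the primal as a sum of a linear term and indicator functions of the affine constraint and the positive orthant (whose relative interiors intersect, since $\a\transp{\b}$ has all positive entries when $\a,\b$ have positive entries). Either route would complete the proof; the Lagrangian computation above is the only algebra specific to the transport structure.
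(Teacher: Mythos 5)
Your proposal is correct and takes essentially the same route as the paper: both form the Lagrangian with multipliers $(\fD,\gD)$ for the two marginal constraints, minimize over $\P\geq 0$ to see that the dual function is $-\infty$ unless $\fD\oplus\gD\leq\C$, and then invoke strong duality for finite-dimensional linear programs (citing the same result from \citet{bertsimas1997introduction}) to equate the optimal values. Your extra checks of primal feasibility (via $\a\transp{\b}$), boundedness of $\CouplingsD(\a,\b)$, and dual feasibility are a welcome tightening, but they do not change the underlying argument.
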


\begin{proof}
This result is a direct consequence of the more general result on the strong duality for linear programs~\citep[p. 148, Theo. 4.4]{bertsimas1997introduction}. The easier part of the proof, namely, establishing that the right-hand side of Equation~\eqref{eq-dual} is a lower bound of $\MKD_\C(\a,\b)$, is discussed in Remark~\ref{rem-duality} in the next section.
For the sake of completeness, let us derive our result using Lagrangian duality. The Lagangian associated to~\eqref{eq-mk-discr} reads
\eql{\label{eq-mk-lagr}
	\umin{\P \geq 0} \umax{ (\fD,\gD) \in \RR^n \times \RR^m }
		\dotp{\C}{\P} + \dotp{\a - \P\ones_m}{\fD} + \dotp{\b - \transp{\P} \ones_n}{\gD}.
}
We exchange the min and the max above, which is always possible when considering linear programs (in finite dimension)\todoK{donner une ref}, to obtain
\eq{
	\umax{ (\fD,\gD) \in \RR^n \times \RR^m }
	\dotp{\a}{\fD} + \dotp{\b}{\gD}
	+ \umin{\P \geq 0}
		\dotp{\C - \fD\transp{\ones_m} - \ones_n \transp{\gD}}{\P}.
}
We conclude by remarking that
\eq{
	\umin{\P \geq 0} \dotp{\Q}{\P} =
	\choice{
		0 \qifq \Q \geq 0,\\
		-\infty \quad \text{otherwise}
	}
}
so that the constraint reads $\C - \fD\transp{\ones_m} - \ones_n \transp{\gD} = \C-\fD\oplus \gD \geq 0$.
\end{proof}

The primal-dual optimality relation for the Lagrangian~\eqref{eq-mk-lagr} allows us to locate the support of the optimal transport plan (see also~\S\ref{s-complementary})
\eql{\label{eq-mk-pd-rel}
\{(i,j) \in \range{n} \times \range{m} : \P_{i,j}>0\} \subset \enscond{(i,j) \in \range{n} \times \range{m}}{ \fD_i+\gD_j=\C_{i,j} }.
}

\begin{rem}\label{rem-kantorovich-dual}
Following the interpretation given to the Kantorovich problem in Remark~\ref{rem-kantorovich-primal}, we follow with an intuitive presentation of the dual. Recall that in that setup, an operator wishes to move at the least possible cost an overall amount of resources from warehouses to factories. The operator can do so by solving~\eqref{eq-mk-discr}, follow the instructions set out in $\P^\star$, and pay $\dotp{\P^\star}{\C}$ to the transportation company.

\emph{Outsourcing logistics.} Suppose that the operator does not have the computational means to solve the linear program~\eqref{eq-mk-discr}. He decides instead to outsource that task to a vendor. The vendor chooses a pricing scheme with the following structure: the vendor splits the logistic task into that of collecting and then delivering the goods and will apply a collection price $\fD_i$ to collect a unit of resource at each warehouse $i$ (no matter where that unit is sent to) and a price $\gD_j$ to deliver a unit of resource to factory $j$ (no matter from which warehouse that unit comes from). On aggregate, since there are exactly $\a_i$ units at warehouse $i$ and $\b_j$ needed at factory $j$, the vendor asks as a consequence of that pricing scheme a price of $\dotp{\fD}{\a}+\dotp{\gD}{\b}$ to solve the operator's logistic problem.

\emph{Setting prices.} Note that the pricing system used by the vendor allows quite naturally for arbitrarily negative prices. Indeed, if the vendor applies a price vector $\fD$ for warehouses and a price vector $\gD$ for factories, then the total bill will not be changed by simultaneously decreasing all entries in $\fD$ by an arbitrary number and increasing all entries of $\gD$ by that same number, since the total amount of resources in all warehouses is equal to those that have to be delivered to the factories. In other words, the vendor can give the illusion of giving an extremely good deal to the operator by paying him to collect some of his goods, but compensate that loss by simply charging him more for delivering them. Knowing this, the vendor, wishing to charge as much as she can for that service, sets vectors $\fD$ and $\gD$ to be as high as possible.

\emph{Checking prices.} In the absence of another competing vendor, the operator must therefore think of a quick way to check that the vendor's prices are reasonable. A possible way to do so would be for the operator to compute the price $\MKD_{\C}(\a,\b)$ of the most efficient plan by solving problem~\eqref{eq-mk-discr} and check if the vendor's offer is at the very least no larger than that amount. However, recall that the operator cannot afford such a lengthy computation in the first place. Luckily, there is a far more efficient way for the operator to check whether the vendor has a competitive offer. Recall that $\fD_i$ is the price charged by the vendor for picking a unit at $i$ and $\gD_j$ to deliver one at $j$. Therefore, the vendor's pricing scheme implies that transferring one unit of the resource from $i$ to $j$ costs exactly $\fD_i+\gD_j$. Yet, the operator also knows that the cost of shipping one unit from $i$ to $j$ as priced by the transporting company is $\C_{i,j}$. Therefore, if for any pair $i,j$ the aggregate price $\fD_i+\gD_j$ is strictly larger that $\C_{i,j}$, the vendor is charging more than the fair price charged by the transportation company for that task, and the operator should refuse the vendor's offer.

\begin{figure}[h!]
\centering
\includegraphics[width=\linewidth]{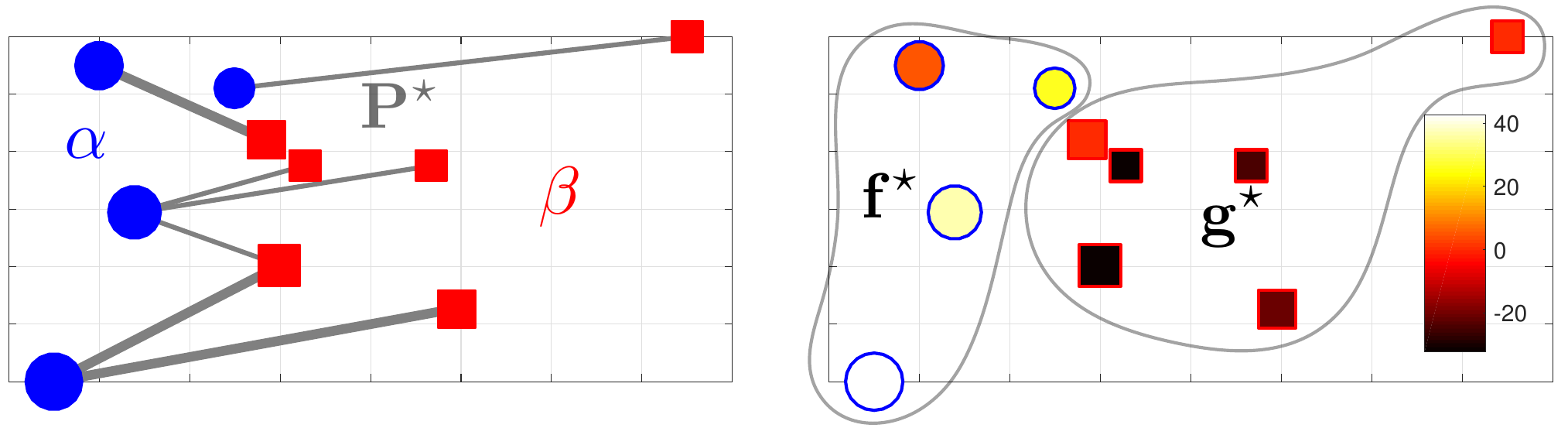}
\caption{\label{dual-solutions-overlay}
Consider in the left plot the optimal transport problem between two discrete measures $\al$ and $\be$, represented respectively by blue dots and red squares. The area of these markers is proportional to the weight at each location. That plot also displays the optimal transport $\P^\star$ using a quadratic Euclidean cost. The corresponding dual (Kantorovich) potentials $\fD^\star$ and $\gD^\star$ that correspond to that configuration are also displayed on the right plot. Since there is a ``price'' $\fD_i^\star$ for each point in $\alpha$ (and conversely for $\gD$ and $\beta$), the color at that point represents the obtained value using the color map on the right. These potentials can be interpreted as relative prices in the sense that they indicate the individual cost, under the best possible transport scheme, to move a mass away at each location in $\al$, or on the contrary to send a mass toward any point in $\be$. The optimal transport cost is therefore equal to the sum of the squared lengths of all the arcs on the left weighted by their thickness or, alternatively, using the dual formulation, to the sum of the values (encoded with colors) multiplied by the area of each marker on the right plot.}
\end{figure}

\emph{Optimal prices as a dual problem.} It is therefore in the interest of the operator to check that for all pairs $i,j$ the prices offered by the vendor verify $\fD_i+\gD_j \leq \C_{i,j}$. Suppose that the operator does check that the vendor has provided price vectors that do comply with these $n\times m$ inequalities. Can he conclude that the vendor's proposal is attractive? Doing a quick back of the hand calculation, the operator does indeed conclude that it is in his interest to accept that offer. Indeed, since any of his transportation plans $\P$ would have a cost
$\dotp{\P}{\C}=\sum_{i,j} \P_{i,j}\C_{i,j}$, the operator can conclude by applying these $n\times m$ inequalities that for any transport plan $\P$ (including the optimal one $\P^\star$), the marginal constraints imply
\eq{\begin{aligned}
	\sum_{i,j} \P_{i,j}\C_{i,j} &\geq \sum_{i,j} \P_{i,j} \left(\fD_{i} + \gD_j\right) = \left(\sum_{i} \fD_{i} \sum_{j} \P_{i,j}\right) + \left(\sum_{j} \gD_{j} \sum_{i} \P_{i,j}\right)\\ 	& =\dotp{\fD}{\a}+\dotp{\gD}{\b},
\end{aligned}}
and therefore observe that \emph{any} attempt at doing the job by himself would necessarily be more expensive than the vendor's price.

Knowing this, the vendor must therefore find a set of prices $\fD,\gD$ that maximize $\dotp{\fD}{\a}+\dotp{\gD}{\b}$ but that must satisfy at the very least for all $i,j$ the basic inequality that $\fD_i+\gD_j\leq \C_{i,j}$ for his offer to be accepted, which results in Problem~\eqref{eq-dual}. One can show, as we do later in~\S\ref{s-kantorovich}, that the best price obtained by the vendor is in fact exactly equal to the best possible cost the operator would obtain by computing $\MKD_{\C}(\a,\b)$.

Figure~\ref{dual-solutions-overlay} illustrates the primal and dual solutions resulting from the same transport problem. On the left, blue dots represent warehouses and red dots stand for factories; the areas of these dots stand for the probability weights $\a,\b$, links between them represent an optimal transport, and their width is proportional to transfered amounts. Optimal prices obtained by the vendor as a result of optimizing Problem~\eqref{eq-dual} are shown on the right. Prices have been chosen so that their mean is equal to 0. The highest relative prices come from collecting goods at an isolated warehouse on the lower left of the figure, and delivering goods at the factory located in the upper right area.
\end{rem}

\begin{rem2}{Dual problem between arbitrary measures}
	To extend this primal-dual construction to arbitrary measures, it is important to realize that measures are naturally paired in duality with continuous functions (a measure can be accessed only through integration against continuous functions). The duality is formalized in the following proposition, which boils down to Proposition~\ref{prop-duality-discr} when dealing with discrete measures.

	\begin{prop}
	One has
	\eql{\label{eq-dual-generic}
		\MK_\c(\al,\be) =
		\usup{(\f,\g) \in \Potentials(\c)}
			\int_\X \f(x) \d\al(x) + \int_\Y \g(y) \d\be(y),
	}
	where the set of admissible dual potentials is
	\eql{\label{eq-dfn-pot-dual}
		\Potentials(\c) \eqdef \enscond{
			(\f,\g) \in \Cc(\X) \times \Cc(\Y)
		}{
			\forall (x,y), \f(x)+\g(y) \leq \c(x,y)
		}.
	}
	Here, $(\f,\g)$ is a pair of continuous functions and are also called, as in the discrete case, ``Kantorovich potentials.''
	\end{prop}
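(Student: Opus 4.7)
The plan is to establish the equality by proving both inequalities separately: weak duality (primal $\geq$ dual) by direct integration against admissible couplings, and strong duality (primal $\leq$ dual) via an infinite-dimensional Fenchel--Rockafellar / minimax argument in the spirit of the Lagrangian derivation given for Proposition~\ref{prop-duality-discr}.

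For weak duality, I would fix arbitrary $\pi \in \Couplings(\al,\be)$ and $(\f,\g) \in \Potentials(\c)$. Since $\f \oplus \g \leq \c$ pointwise and $\pi$ is a positive measure, integrating this inequality against $\pi$ and then unfolding the two resulting terms using the push-forward identity~\eqref{eq-push-fwd} applied to the projections $P_\X, P_\Y$ together with the marginal constraints $P_{\X\sharp}\pi = \al$, $P_{\Y\sharp}\pi = \be$ yields
\[
\int_{\X\times\Y} \c \, \d\pi \;\geq\; \int_{\X\times\Y} (\f\oplus\g) \, \d\pi \;=\; \int_\X \f \, \d\al + \int_\Y \g \, \d\be .
\]
Taking the infimum on the left and the supremum on the right gives $\MK_\c(\al,\be) \geq \sup_{(\f,\g)\in\Potentials(\c)} \bigl(\int \f \d\al + \int \g \d\be\bigr)$.

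For the reverse inequality, I would mimic the discrete Lagrangian calculation. The idea is to encode the marginal constraints as $\sup_{\f\in\Cc(\X)}\int \f \, \d(\al - P_{\X\sharp}\pi) + \sup_{\g\in\Cc(\Y)}\int \g \, \d(\be - P_{\Y\sharp}\pi)$, which is an exact penalty (equal to $0$ when the marginals match and $+\infty$ otherwise) precisely because continuous functions separate Radon measures. This rewrites $\MK_\c(\al,\be)$ as an $\inf_{\pi\geq 0}\sup_{(\f,\g)}$ of an expression that is linear in $\pi$ and in $(\f,\g)$ separately. Swapping $\inf$ and $\sup$ and then computing the inner infimum of $\int(\c-\f\oplus\g)\,\d\pi$ over $\pi \geq 0$---which equals $0$ when $\f\oplus\g\leq\c$ and $-\infty$ otherwise---reproduces exactly the dual~\eqref{eq-dual-generic}.

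The main obstacle, and the only genuinely nontrivial step, is justifying the $\inf$--$\sup$ exchange, which in the discrete case was handled by finite-dimensional LP duality but here requires an infinite-dimensional tool. The standard route is to apply the Fenchel--Rockafellar theorem on the topological dual pairing $(\Cc(\X\times\Y),\Mm(\X\times\Y))$ equipped with the weak-$*$ topology: assuming $(\X,\Y)$ compact and $\c$ continuous, the qualification hypothesis follows from the feasibility of $\al\otimes\be \in \Couplings(\al,\be)$ and continuity of $\c$ on a compact set, while weak-$*$ compactness of $\Couplings(\al,\be)$ (already invoked to argue that the primal infimum is attained) delivers the required lower semicontinuity. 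For noncompact $(\X,\Y)$ one must impose decay conditions on $\c$ at infinity, or alternatively approximate $(\al,\be)$ by discrete measures, invoke Proposition~\ref{prop-duality-discr} on the approximations, and pass to the limit using weak-$*$ compactness on the primal side and Arzelà--Ascoli on the associated $c$-concave dual potentials.
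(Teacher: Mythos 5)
The paper does not actually provide a proof of this proposition; only the discrete case, Proposition~\ref{prop-duality-discr}, carries a written proof (finite-dimensional LP strong duality plus the Lagrangian derivation of Remark~\ref{rem-duality}). Your proposal is the correct and standard extension of that Lagrangian argument to arbitrary measures.

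The weak-duality half is exact and minimal: integrate the pointwise constraint $\f\oplus\g\leq\c$ against an arbitrary $\pi\in\Couplings(\al,\be)$, unfold the two terms using the marginal constraints $P_{\X\sharp}\pi=\al$, $P_{\Y\sharp}\pi=\be$, then take inf and sup. For the reverse inequality you correctly isolate the $\inf$--$\sup$ exchange as the only genuinely hard step and give the two canonical ways to justify it: Fenchel--Rockafellar on the pairing $(\Cc(\X\times\Y),\Mm(\X\times\Y))$ under the compactness of $(\X,\Y)$ and continuity of $\c$ that the paper already posits when introducing $\MK_\c$, or approximation of $(\al,\be)$ by discrete measures plus a passage to the limit via weak-$*$ compactness on the primal side and Arzel\`a--Ascoli on $c$-concave potentials. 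Both routes are the ones found in standard references (Villani, Santambrogio) and would carry through.

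One small imprecision worth flagging. In the Fenchel--Rockafellar route the theorem's hypothesis is a qualification condition---convexity of the two conjugate functionals and continuity of one of them at a common feasible point---and it is this hypothesis, not weak-$*$ compactness of $\Couplings(\al,\be)$, that closes the duality gap. Weak-$*$ compactness of the coupling set is what shows the primal infimum is attained, and Fenchel--Rockafellar in fact re-delivers that attainment automatically on the appropriate side of its pairing; the two are consistent but not the same thing. Your last paragraph slightly conflates them, but the structure of the argument is sound and, once tightened, proves the proposition.
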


	\todoK{\begin{proof}
	TODO.
	\end{proof}
	}

	The discrete case~\eqref{eq-dual} corresponds to the dual vectors being samples of the continuous potentials, \ie $(\fD_i,\gD_j)=(\f(x_i),\g(y_j))$.
	The primal-dual optimality conditions allow us to track the support of the optimal plan, and~\eqref{eq-mk-pd-rel} is generalized as
	\eql{\label{eq-mk-pd-rel-cont}
		\Supp(\pi) \subset \enscond{(x,y) \in \X \times \Y}{ \f(x)+\g(y)=\c(x,y) }.
	}

	Note that in contrast to the primal problem~\eqref{eq-mk-generic}, showing the existence of solutions to~\eqref{eq-dual-generic} is nontrivial, because the constraint set $\Potentials(\c)$ is not compact and the function to minimize noncoercive.
	Using the machinery of $c$-transform detailed in \S~\ref{s-c-transform}, in the case $c(x,y)=\dist(x,y)^p$ with $p\geq 1$, one can, however, show that optimal $(\f,\g)$ are necessarily Lipschitz regular, which enables us to replace the constraint by a compact one.
\end{rem2}

\begin{rem2}{Unconstrained dual}\label{rem-uncons-dual}
In the case $\int_\X \d\mu=\int_\Y\d\nu = 1$, the constrained dual problem~\eqref{eq-dual-generic} can be replaced by an unconstrained one,
\eql{\label{eq-dual-uncons}
	\MK_\c(\al,\be) =
		\usup{(\f,\g) \in \Cc(\X) \times \Cc(\Y)}
			\int_\X \f\d\al + \int_\Y \g \d\be
			+ \umin{\X \otimes \Y} ( c - f \oplus g ),
}
where we denoted $(f\oplus g)(x,y)=f(x)+g(y)$. Here the minimum should be considered as the essential supremum associated to the measure $\mu \otimes \nu$, i.e., it does not change if $f$ or $g$ is modified on sets of zero measure for $\mu$ and $\nu$.
This alternative dual formulation was pointed out to us by Francis Bach.
It is obtained from the primal problem~\eqref{eq-mk-generic} by adding the redundant constraint $\int \d\pi=1$.
\end{rem2}

\begin{rem2}{Monge--Kantorovich equivalence---Brenier theorem}\label{rem-exist-mongemap}
The following theorem is often attributed to~\citet{Brenier91} and ensures that in $\RR^\dim$ for $p=2$, if at least one of the two input measures has a density, and for measures with second order moments, then the Kantorovich and Monge problems are equivalent. The interested reader should also consult variants of the same result published more or less at the same time by \citet{cuesta1989,ruschendorf1990characterization}, including notably the original result in~\citep{MR923203} and a precursor by~\citet{knott1984optimal}.

\begin{thm}[Brenier]
	In the case $\X=\Y=\RR^\dim$ and $c(x,y)=\norm{x-y}^2$, if at least one of the two input measures (denoted $\al$) has a density $\density{\al}$ with respect to the Lebesgue measure, then the optimal $\pi$ in the Kantorovich formulation~\eqref{eq-mk-generic} is unique and is supported on the graph $(x,\T(x))$ of a ``Monge map'' $\T : \RR^\dim \rightarrow \RR^\dim$. This means that $\pi = (\Id,\T)_{\sharp} \mu$, \ie
\eql{\label{eq-brenier-map}
	\foralls h \in \Cc(\X \times \Y), \quad
		\int_{\X \times \Y} h(x,y) \d \pi(x,y) = \int_{\X} h(x,\T(x)) \d\mu(x).
}
Furthermore, this map $\T$ is uniquely defined as the gradient of a convex function $\phi$, $\T(x) = \nabla\phi(x)$, where $\phi$ is the unique (up to an additive constant) convex function such that $(\nabla\phi)_\sharp \mu=\nu$. This convex function is related to the dual potential $\f$ solving~\eqref{eq-dual-generic} as $\phi(x)=\frac{\norm{x}^2}{2}-\f(x)$.
\end{thm}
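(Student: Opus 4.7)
The plan is to reduce the quadratic Kantorovich problem to one involving only the bilinear cost $-\langle x,y\rangle$, at which point the Legendre--Fenchel machinery forces the optimal dual potentials to be a pair of conjugate convex functions, and hence forces the optimal coupling to be concentrated on the graph of a subgradient map. First I would expand
\[
\norm{x-y}^2 = \norm{x}^2 + \norm{y}^2 - 2\langle x,y\rangle,
\]
and observe that for any $\pi \in \Couplings(\al,\be)$ the two outer terms integrate to fixed constants (assumed finite, since otherwise $\MK_\c(\al,\be)=+\infty$ and the theorem is vacuous). Minimizing $\MK_\c(\al,\be)$ is therefore equivalent to maximizing $\int \langle x,y\rangle \,\d\pi$. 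Dually, the substitution $\phi(x) \eqdef \tfrac{1}{2}\norm{x}^2 - \f(x)$ and $\psi(y) \eqdef \tfrac{1}{2}\norm{y}^2 - \g(y)$ turns the admissibility constraint $\f \oplus \g \leq \c$ into the Fenchel-type inequality $\phi(x) + \psi(y) \geq \langle x,y\rangle$ for every $(x,y)$, and transforms the dual objective into (a constant minus) $\int \phi\,\d\al + \int \psi\,\d\be$, which is now to be minimized.

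Next I would apply the $c$-transform argument. For $\psi$ fixed, the pointwise smallest admissible $\phi$ is the Legendre conjugate $\phi = \psi^\ast$, which is automatically convex and lower semicontinuous; one further conjugation replaces $\psi$ by $\phi^\ast$, yielding a mutually conjugate pair $(\phi,\psi)$. Existence and Lipschitz regularity of such maximizers on the supports of the marginals follow from the $c$-transform machinery of Remark~\ref{rem-uncons-dual} and \S\ref{s-c-transform}, using the finite second moment hypothesis to ensure the conjugates are proper. The primal-dual support condition~\eqref{eq-mk-pd-rel-cont} then forces
\[
\Supp(\pi) \subset \{(x,y) : \phi(x) + \psi(y) = \langle x,y\rangle\} = \{(x,y) : y \in \partial \phi(x)\},
\]
where the second equality is precisely the equality case of Fenchel--Young between conjugate convex functions.

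Now I would invoke the assumption that $\al$ has a density with respect to the Lebesgue measure. Being convex, $\phi$ is locally Lipschitz and hence differentiable outside a set of Lebesgue measure zero (by Rademacher's theorem, or more precisely Alexandrov's theorem for convex functions). Absolute continuity of $\al$ makes this exceptional set $\al$-negligible, so $\partial \phi(x) = \{\nabla \phi(x)\}$ for $\al$-almost every $x$. Combined with the support inclusion above, this shows that for $\al$-a.e.\ $x$, any $y$ with $(x,y) \in \Supp(\pi)$ must equal $\nabla \phi(x)$. Hence $\pi$ concentrates on the graph of $\T \eqdef \nabla\phi$, i.e.\ $\pi = (\Id, \T)_\sharp \al$, which in turn forces $\T_\sharp \al = \be$ by reading off the second marginal. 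Uniqueness of $\pi$ follows at once from this graph representation, and uniqueness of $\phi$ up to an additive constant comes from the fact that $\nabla\phi$ is determined $\al$-a.e.: any two admissible convex potentials must have gradients agreeing on a set of full $\al$-measure, pinning them down on each connected component of $\Supp(\al)$ up to a constant. The identity $\phi(x) = \tfrac{1}{2}\norm{x}^2 - \f(x)$ is then simply the substitution made at the outset.

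The main technical obstacle is the first step: justifying that an optimal dual pair can be chosen as a genuine pair of mutually conjugate convex functions with enough regularity to apply the Fenchel--Young characterization \emph{everywhere} on $\Supp(\pi)$, on the noncompact domain $\RR^\dim$. This is where the second-moment hypothesis, the $c$-transform iteration, and Lipschitz bounds of the resulting potentials are all needed simultaneously. Once that groundwork is laid, the remainder of the proof is a clean application of convex analysis: Fenchel--Young equality plus Alexandrov/Rademacher differentiability of the convex potential at $\al$-almost every point.
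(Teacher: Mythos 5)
Your proposal follows essentially the same route as the paper's sketch: expand the quadratic cost so the problem reduces to maximizing the bilinear $\int\dotp{x}{y}\,\d\pi$, pass to the dual under the substitution $\phi=\tfrac12\norm{\cdot}^2-\f$, use the Legendre transform to reduce to a convex potential, invoke the Fenchel--Young equality case on $\Supp(\pi)$ to get $y\in\partial\phi(x)$, and then use differentiability of convex functions a.e.\ together with absolute continuity of $\al$ to pin down $y=\nabla\phi(x)$. The only difference is presentational — you spell out the Rademacher/Alexandrov citation and the uniqueness-of-$\phi$ step slightly more explicitly than the paper does — but the decomposition, the key lemma, and the logical order are the same.
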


\begin{proof}
	We sketch the main ingredients of the proof; more details can be found, for instance, in~\citep{SantambrogioBook}.
	We remark that $\int c \d\pi = C_{\al,\be} - 2 \int \dotp{x}{y}\d\pi(x,y)$, where the constant is $C_{\al,\be} = \int \norm{x}^2\d\al(x) + \int \norm{y}^2\d\be(y)$. Instead of solving~\eqref{eq-mk-generic}, one can thus consider the problem
	\eq{
		\umax{\pi \in \Couplings(\al,\be)}
			\int_{\X \times \Y} \dotp{x}{y} \d\pi(x,y),
	}
	whose dual reads
	\eql{\label{eq-brenier-proof-1}
		\umin{(\phi,\psi)}
		\enscond{
			\int_\X \phi \d\al + \int_\Y \psi \d\be
		}{
			\forall (x,y), \quad \phi(x)+\psi(y) \geq \dotp{x}{y}
		}.
	}
	The relation between these variables and those of~\eqref{eq-dfn-pot-dual} is
	$(\phi,\psi) = (\frac{\norm{\cdot}^2}{2}-\f,\frac{\norm{\cdot}^2}{2}-\g)$.
	One can replace the constraint by
	\eql{\label{eq-brenier-proof-leg}
		\foralls y, \quad \psi(y) \geq \phi^*(y) \eqdef \usup{x} \dotp{x}{y} - \phi(x).
	}
	Here $\phi^*$ is the Legendre transform of $\phi$ and is a convex function as a supremum of linear forms (see also~\eqref{eq-legendre}). Since the objective appearing in~\eqref{eq-brenier-proof-2} is linear and the integrating measures positive, one can minimize explicitly with respect to $\psi$ and set $\psi=\phi^*$ in order to consider the unconstrained problem
	\eql{\label{eq-brenier-proof-2}
		\umin{\phi} \int_\X \phi \d\al + \int_\Y \phi^* \d\be;
	}
	see also~\S\ref{sec-c-transforms} and~\S\ref{s-c-transform}, where that idea is applied respectively in the discrete setting and for generic costs $\c(x,y)$.
	By iterating this argument twice, one can replace $\phi$ by $\phi^{**}$, which is a convex function, and thus impose in~\eqref{eq-brenier-proof-2} that $\phi$ is convex.
	Condition~\eqref{eq-mk-pd-rel-cont} shows that an optimal $\pi$ is supported on $\enscond{(x,y)}{\phi(x)+\phi^*(y)=\dotp{x}{y}}$, which shows that such a $y$ is optimal for the minimization~\eqref{eq-brenier-proof-leg} of the Legendre transform, whose optimality condition reads $y \in \partial \phi(x)$.
	Since $\phi$ is convex, it is differentiable almost everywhere, and since $\al$ has a density, it is also differentiable $\al$-almost everywhere.
	This shows that for each $x$, the associated $y$ is uniquely defined $\al$-almost everywhere as $y = \nabla\phi(x)$, and it shows that necessarily $\pi = (\Id,\nabla\phi)_\sharp \al$.
\end{proof}

This result shows that in the setting of $\Wass_2$ with no-singular densities, the Monge problem~\eqref{eq-monge-continuous} and its Kantorovich relaxation~\eqref{eq-mk-generic} are equal (the relaxation is tight). This is the continuous counterpart of Proposition~\ref{prop-matching-kanto} for the assignment case~\eqref{prop-matching-kanto}, which states that the minimum of the optimal transport problem is achieved at a permutation matrix (a discrete map) when the marginals are equal and uniform.
Brenier's theorem, stating that an optimal transport map must be the gradient of a convex function, provides a useful generalization of the notion of increasing functions in dimension more than one. This is the main reason why optimal transport can be used to define quantile functions in arbitrary dimensions, which is in turn useful for applications to quantile regression problems~\citep{carlier2016vector}.

Note also that this theorem can be extended in many directions.
The condition that $\al$ has a density can be weakened to the condition that it does not give mass to ``small sets'' having Hausdorff dimension smaller than $\dim-1$ (\emph{e.g.} hypersurfaces).
One can also consider costs of the form $\c(x,y)=h(x-y)$, where $h$ is a strictly convex function.
\end{rem2}

\begin{rem2}{Monge--Amp\`ere equation}\label{rem:MA}
For measures with densities, using~\eqref{eq-pfwd-density}, one obtains that $\phi$ is the unique (up to the addition of a constant) convex function which solves the following Monge--Amp\`ere-type equation:
\eql{\label{eq-monge-ampere}
	\det(\partial^2\phi(x))  \density{\be}(\nabla\phi(x)) = \density{\al}(x)
}
where $\partial^2\phi(x) \in \RR^{\dim \times \dim}$ is the Hessian of $\phi$. The Monge--Amp\`ere operator $\det(\partial^2\phi(x))$ can be understood as a nonlinear degenerate Laplacian. In the limit of small displacements, $\phi=\Id + \epsilon\psi$, one indeed recovers the Laplacian $\Delta$ as a linearization since for smooth maps
\eq{
	\det(\partial^2\phi(x)) = 1 + \epsilon \Delta \psi(x) + o(\epsilon).
}
The convexity constraint forces $\det(\partial^2\phi(x)) \geq 0$ and is necessary for this equation to have a solution.
There is a large body of literature on the theoretical analysis of the Monge--Amp\`ere equation, and in particular the regularity of its solution---see, for instance,~\citep{gutierrez2016monge}; we refer the interested read to the review paper by~\citet{caffarelli2003monge}.
A major difficulty is that in full generality, solutions need not be smooth, and one has to resort to the machinery of Alexandrov solutions when the input measures are arbitrary (\emph{e.g.} Dirac masses).
Many solvers have been proposed in the simpler case of the Monge--Amp\`ere equation  $\det(\partial^2 \phi(x)) = f(x)$ for a fixed right-hand-side $f$; see, for instance,~\citep{benamou2016monotone} and the references therein. In particular, capturing anisotropic convex functions requires special care, and usual finite differences can be inaccurate.
For optimal transport, where $f$ actually depends on $\nabla \phi$, the discretization of Equation~\eqref{eq-monge-ampere}, and the boundary condition result in technical challenges outlined in~\citep{benamou2014numerical} and the references therein.
Note also that related solvers based on fixed-point iterations have been applied to image registration~\citep{haker2004optimal}.
\end{rem2}

\section{Special Cases}\label{sec:specialcases}

In general, computing OT distances is numerically involved. Before detailing in \S\S \ref{c-algo-basics},\ref{c-entropic}, and \ref{c-dynamic} different numerical solvers, we first review special favorable cases where the resolution of the OT problem is relatively easy.

\begin{rem}[Binary cost matrix and 1-norm]\label{rem-binary}
One can easily check that when the cost matrix $\C$ is 0 on the diagonal and $1$ elsewhere, namely, when $\C=\ones_{n\times n}-\Identity_n$, the 1-Wasserstein distance between $\a$ and $\b$ is equal to the 1-norm of their difference, $\MKD_\C(\a,\b)=\norm{\a-\b}_1$.
\end{rem}

\begin{rem1}{Kronecker cost function and total variation}
In addition to Remark~\ref{rem-binary} above, one can also easily check that this result extends to arbitrary measures in the case where $c(x,y)$ is $0$ if $x=y$ and 1 when $x\ne y$. The OT distance between two discrete measures $\al$ and $\be$ is equal to their total variation distance (see also Example~\ref{exmp-tv}).
\end{rem1}

\begin{rem1}{1-D case---Empirical measures}\label{rem-1d-empir}
Here $\X=\RR$. Assuming $\al = \frac{1}{n}\sum_{i=1}^n \de_{x_i}$ and $\be = \frac{1}{n}\sum_{j=1}^n \de_{y_j}$, and assuming (without loss of generality) that the points are ordered, \ie $x_1 \leq x_2 \leq \cdots \leq x_n$ and $y_1 \leq y_2 \leq \cdots \leq y_n$, then one has the simple formula
\eql{\label{eq-1d-empirical}
	\Wass_p(\al,\be)^p = \frac{1}{n} \sum_{i=1}^n |x_i-y_i|^p,
}
\ie locally (if one assumes distinct points), $\Wass_p(\al,\be)$ is the $\ell^p$ norm between two vectors of ordered values of $\al$ and $\be$. That statement is valid only locally, in the sense that the order (and those vector representations) might change whenever some of the values change. That formula is a simple consequence of the more general setting detailed in Remark~\ref{rem-1d-ot-generic}.
Figure~\ref{fig-1d-discrete}, top row, illustrates the 1-D transportation map between empirical measures with the same number of points.  The bottom row shows how this monotone map generalizes to arbitrary discrete measures.

It is also possible to leverage this 1-D computation to also compute efficiently OT on the circle as shown by~\citet{delon-circle}.
Note that if the cost is a concave function of the distance, notably when $p<1$, the behavior of the optimal transport plan is very different, yet efficient solvers also exist~\citep{delon-concave}.
\end{rem1}

\begin{figure}[h!]
\centering
\includegraphics[width=.65\linewidth]{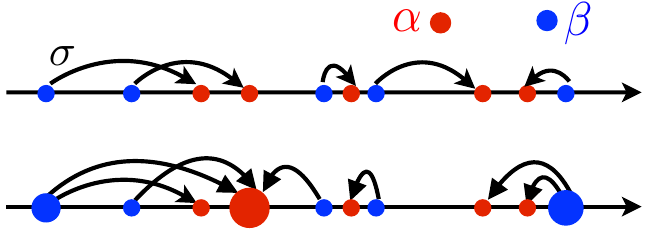}
\caption{\label{fig-1d-discrete}
1-D optimal couplings: each arrow $x_i \rightarrow y_j$ indicates a nonzero $\P_{i,j}$ in the optimal coupling.
Top: empirical measures with same number of points (optimal matching).
Bottom: generic case.
This corresponds to monotone rearrangements, if $x_i \leq x_{i'}$ are such that $\P_{i,j} \neq 0, \P_{i',j'} \neq 0$, then necessarily $y_j \leq y_{j'}$.
}
\end{figure}

\begin{rem1}{Histogram equalization}
One-dimensional op\-ti\-mal tr\-ans\-port can be used to perform histogram equalization, with applications to the normalization of the palette of grayscale images, see Figure~\ref{fig-hist-eq}. In this case, one denotes $(\bar x_i)_i$ and $(\bar y_j)_j$ the gray color levels ($0$ for black, $1$ for white, and all values in between) of all pixels of the two input images enumerated in a predefined order (\ie columnwise). Assuming the number of pixels in each image is the same and equal to $n\times m$, sorting these color levels defines $x_i = \bar x_{\si_1(i)}$ and $y_j = \bar y_{\si_2(j)}$ as in Remark~\ref{rem-1d-empir}, where $\si_1, \si_2 : \{1,\ldots,nm\} \rightarrow \{1,\ldots,nm\}$ are permutations, so that $\si \eqdef \si_2 \circ \si_1^{-1}$ is the optimal assignment between the two discrete distributions. For image processing applications, $(\bar y_{\si(i)})_i$ defines the color values of an equalized version of $\bar x$, whose empirical distribution matches exactly the one of $\bar y$. The equalized version of that image can be recovered by folding back that $nm$-dimensional vector as an image of size $n\times m$. Also, $t \in [0,1] \mapsto (1-t) \bar x_{i} + t \bar y_{\si(i)}$ defines an interpolation between the original image and the equalized one, whose empirical distribution of pixels is the displacement interpolation (as defined in~\eqref{eq-displacement-1d-cumul}) between those of the inputs.
\end{rem1}

\begin{figure}[h!]
\centering
\begin{tabular}{@{}c@{\hspace{1mm}}c@{\hspace{1mm}}c@{\hspace{1mm}}c@{\hspace{1mm}}c@{\hspace{1mm}}@{}}
\includegraphics[width=.19\linewidth]{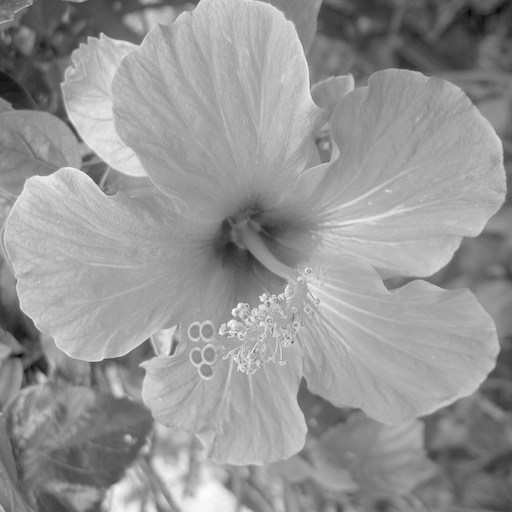}&
\includegraphics[width=.19\linewidth]{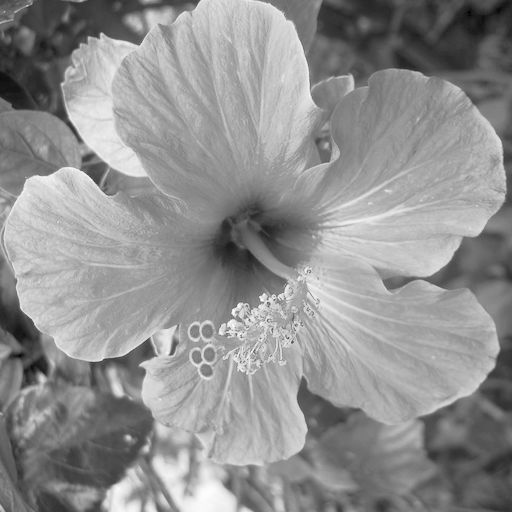}&
\includegraphics[width=.19\linewidth]{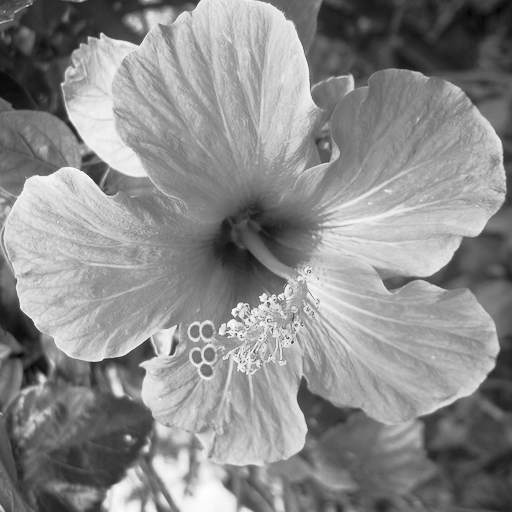}&
\includegraphics[width=.19\linewidth]{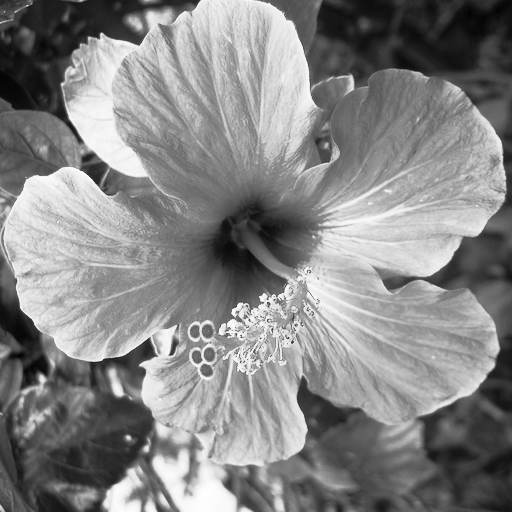}&
\includegraphics[width=.19\linewidth]{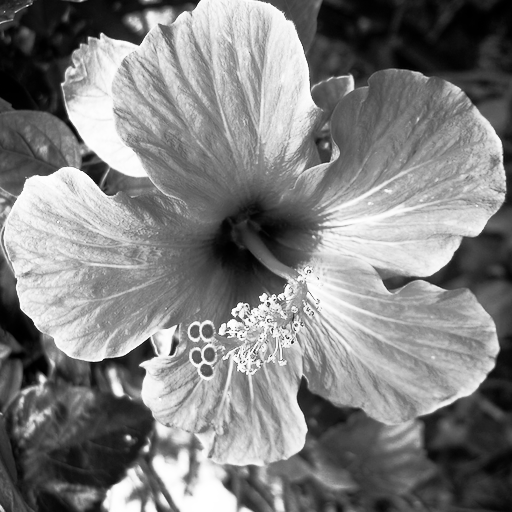}\\
\includegraphics[width=.19\linewidth]{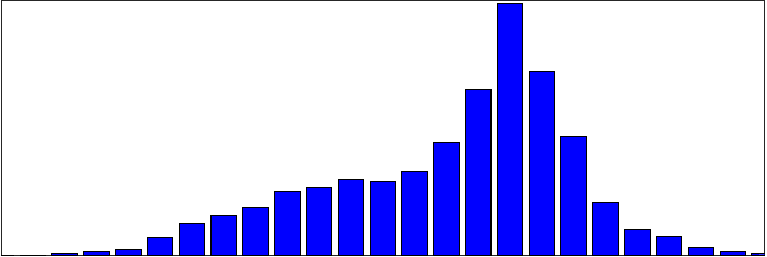}&
\includegraphics[width=.19\linewidth]{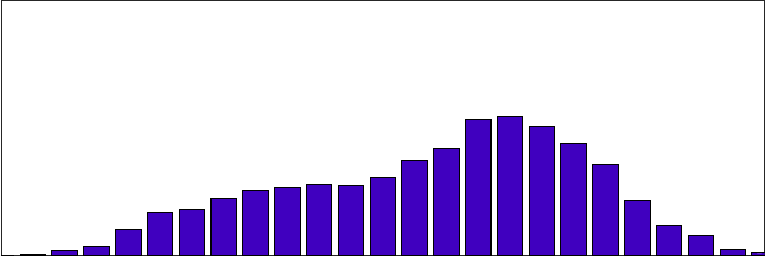}&
\includegraphics[width=.19\linewidth]{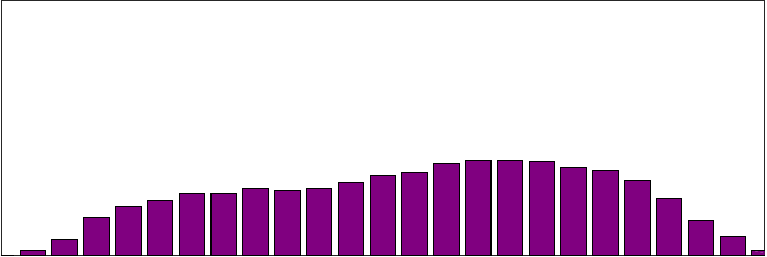}&
\includegraphics[width=.19\linewidth]{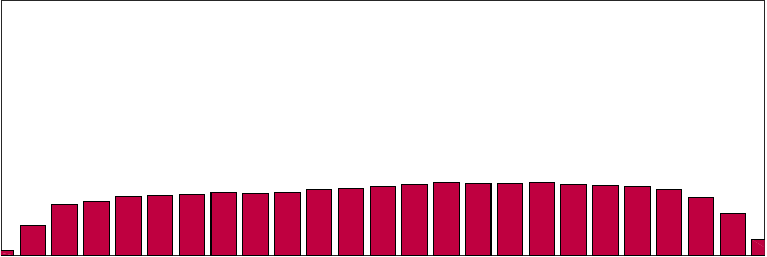}&
\includegraphics[width=.19\linewidth]{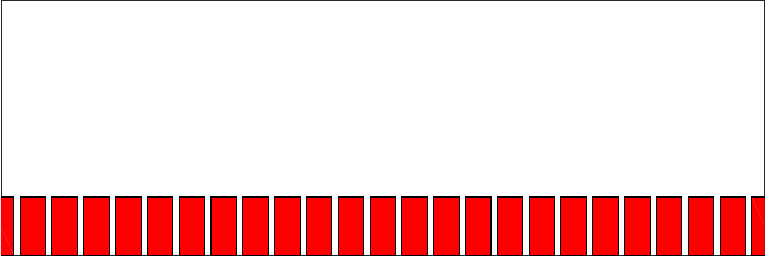}\\
$t=0$ & $t=0.25$& $t=0.5$& $t=.75$& $t=1$
\end{tabular}
\caption{\label{fig-hist-eq}
Histogram equalization for image processing, where $t$ parameterizes the displacement interpolation between the histograms.
}
\end{figure}

\begin{rem2}{1-D case---Generic case}\label{rem-1d-ot-generic}
For a measure $\al$ on $\RR$, we introduce the cumulative distribution function from $\RR$ to $\rightarrow [0,1]$ defined as
\eql{\label{eq-cumul-defn}
	\foralls x \in \RR, \quad \cumul{\al}(x) \eqdef \int_{-\infty}^x \d\al,
}
and its pseudoinverse  $\cumul{\al}^{-1} : [0,1] \rightarrow \RR \cup \{-\infty\}$
\eql{\label{eq-pseudo-inv-cum}
	\foralls r \in [0,1], \quad \cumul{\al}^{-1}(r) = \umin{x} \enscond{x \in \RR \cup \{-\infty\} }{ \cumul{\al}(x) \geq r }.
}
That function is also called the generalized quantile function of $\alpha$. For any $p \geq 1$, one has
\eql{\label{eq-wass-cumul}
	\Wass_p(\al,\be)^p = \norm{ \cumul{\al}^{-1} - \cumul{\be}^{-1} }_{L^p([0,1])}^p = \int_0^1 | \cumul{\al}^{-1}(r) - \cumul{\be}^{-1}(r) |^p \d r.
}
This means that through the map $\al \mapsto \cumul{\al}^{-1}$, the Wasserstein distance is isometric to a linear space equipped with the $L^p$ norm or, equivalently, that the Wasserstein distance for measures on the real line is a Hilbertian metric.
This makes the geometry of 1-D optimal transport very simple but also very different from its geometry in higher dimensions, which is not Hilbertian as discussed in Proposition~\ref{prop-negative-definite} and more generally in~\S\ref{sec-non-embeddability}.
For $p=1$, one even has the simpler formula
\begin{align}\label{eq-w1-1d}
	\Wass_1(\al,\be) &= \norm{ \cumul{\al} - \cumul{\be} }_{L^1(\RR)} =
	\int_\RR | \cumul{\al}(x) - \cumul{\be}(x) | \d x \\
	&= \int_\RR \abs{ \int_{-\infty}^x \d(\al-\be) } \d x,
\end{align}
which shows that $\Wass_1$ is a norm (see~\S\ref{sec-w1-eucl} for the generalization to arbitrary dimensions).
An optimal Monge map $\T$ such that $\T_\sharp \al=\be$ is then defined by
\eql{\label{eq-OT-map-1d}
 	\T = \cumul{\be}^{-1} \circ \cumul{\al}.
}
Figure~\ref{fig-1d-ot} illustrates the computation of 1-D OT through cumulative functions. It also displays displacement interpolations, computed as detailed in~\eqref{eq-displacement-1d-cumul}; see also Remark~\ref{rem-bary-1d}. For a detailed survey of the properties of optimal transport in one dimension, we refer the reader to~\cite[Chapter 2]{SantambrogioBook}.
\end{rem2}

\newcommand{\MyFigCumulMeas}[1]{\includegraphics[width=.33\linewidth]{1d-cumulative/#1}}
\newcommand{\MyFigCumulCum}[1]{\includegraphics[width=.24\linewidth]{1d-cumulative/#1}}
\begin{figure}[ht!]
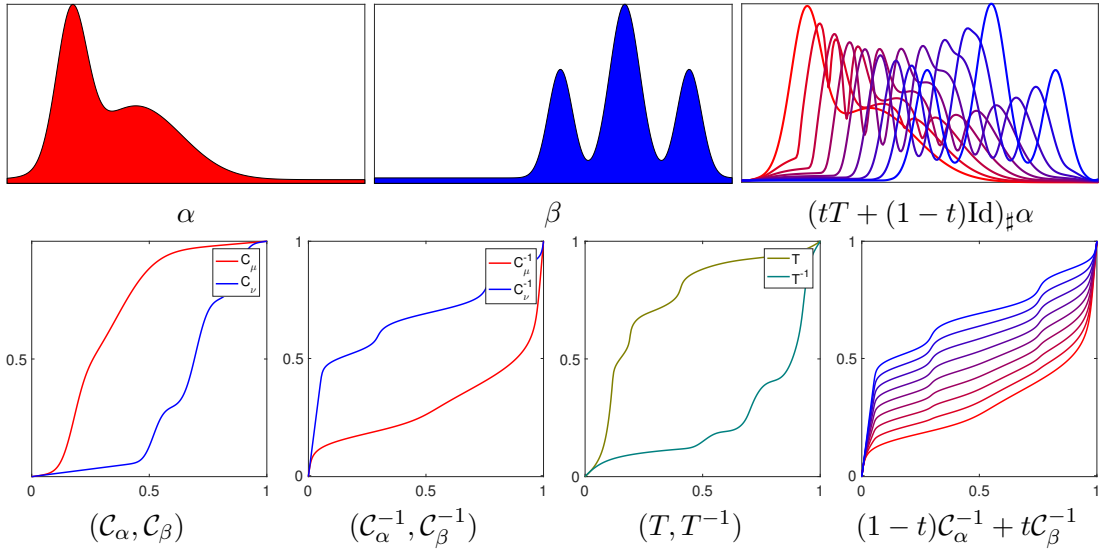

\centering
\begin{tabular}{@{}c@{\hspace{1mm}}c@{\hspace{1mm}}c@{}}
\MyFigCumulMeas{input-mu}&
\MyFigCumulMeas{input-nu}&
\MyFigCumulMeas{interp-bary}\\
$\mu$ & $\nu$ & ${ (t\T+(1-t)\Id)_\sharp \mu}$
\end{tabular}
\begin{tabular}{@{}c@{\hspace{2mm}}c@{\hspace{2mm}}c@{\hspace{2mm}}c@{}}
\MyFigCumulCum{cumul}&
\MyFigCumulCum{icumul}&
\MyFigCumulCum{transports}&
\MyFigCumulCum{interp-cumul}\\
$(\cumul{\al},\cumul{\be})$ &
$(\cumul{\al}^{-1},\cumul{\be}^{-1})$ &
$(T,T^{-1})$ &
$(1-t)\cumul{\al}^{-1}+t\cumul{\be}^{-1}$
\end{tabular}
\caption{\label{fig-1d-ot}
Computation of OT and displacement interpolation between two 1-D measures, using cumulant function as detailed in~\eqref{eq-OT-map-1d}.
}
\end{figure}

\begin{rem2}{Distance between Gaussians}\label{rem-dist-gaussians}
If $\al = \Nn(\mean_\al,\cov_\al)$ and $\be = \Nn(\mean_\be,\cov_\be)$ are two Gaussians in $\RR^\dim$, then one can show that the following map
\eql{\label{eq-transport-Bures}T:x\mapsto \mean_\be + A(x-\mean_\al),}
where
$$A=\cov_\al^{-\tfrac{1}{2}}\Big(\cov_\al^{\tfrac{1}{2}}\cov_\be\cov_\al^{\tfrac{1}{2}}\Big)^{\tfrac{1}{2}}\cov_\al^{-\tfrac{1}{2}}=\transp{A},$$
is such that $T_\sharp \rho_\al = \rho_\be$. Indeed, one simply has to notice that the change of variables formula~\eqref{eq-pfwd-density} is satisfied since
$$
\begin{aligned}\rho_\be(T(x))&=\det(2\pi\cov_\be)^{-\tfrac{1}{2}} \exp(-\dotp{T(x)-\mean_\be}{\cov_\be^{-1}(T(x)-\mean_\be)})\\
&= \det(2\pi\cov_\be)^{-\tfrac{1}{2}} \exp(-\dotp{ x-\mean_\al}{\transp{A}\cov_\be^{-1}A(x-\mean_\al)}) \\
&= \det(2\pi\cov_\be)^{-\tfrac{1}{2}} \exp(-\dotp{ x-\mean_\al}{\cov_\al^{-1}(x-\mean_\al)}),
\end{aligned}$$
and since $T$ is a linear map we have that
$$|\det T'(x)|= \det A = \left(\frac{\det\cov_\be}{\det\cov_\al}\right)^{\tfrac{1}{2}}$$
 and we therefore recover $\rho_\al=|\det T'| \rho_\be$ meaning $T_\sharp \al = \be$. Notice now that $T$ is the gradient of the convex function $\psi:x\mapsto \tfrac{1}{2}\dotp{x-\mean_\al}{A (x-\mean_\al)} + \dotp{\mean_\be}{x}$ to conclude, using~\citeauthor{Brenier91}'s theorem~\citeyearpar{Brenier91} (see Remark~\ref{rem-exist-mongemap}), that $T$ is optimal. Both that map $T$ and the corresponding potential $\psi$ are illustrated in Figures~\ref{fig-gaussians-2d-T} and~\ref{fig-gaussians-2d-psi}

With additional calculations involving first and second order moments of $\rho_\al$, we obtain that the transport cost of that map is
\eql{\label{eq-dist-gauss}
	\Wass_2^2( \al,\be ) = \norm{ \mean_\al - \mean_\be }^2 + \Bb(\cov_\al,\cov_\be)^2,
}
where $\Bb$ is the so-called~\citeauthor{bures1969extension} metric~\citeyearpar{bures1969extension} between positive definite matrices (see also~\citet{,forrester2016relating}),
\eql{\label{eq-bure-defn}
	\Bb(\cov_\al,\cov_\be)^2 \eqdef \tr\pa{
		\cov_\al + \cov_\be - 2 ( \cov_\al^{1/2} \cov_\be \cov_\al^{1/2} )^{1/2}
	},
}
where $\cov^{1/2}$ is the matrix square root. One can show that $\Bb$ is a distance on covariance matrices and that $\Bb^2$ is convex with respect to both its arguments. \todoK{ref ? proof ? }
In the case where $\cov_\al = \diag(r_i)_i$ and $\cov_\be = \diag(s_i)_i$ are diagonals, the Bures metric is the Hellinger distance
\eq{
	\Bb(\cov_\al,\cov_\be) = \norm{ \sqrt{r}-\sqrt{s} }_2.
}
For 1-D Gaussians, $\Wass_2$ is thus the Euclidean distance on the 2-D plane plotting the mean and the standard deviation of a Gaussian $(\mean,\sqrt{\cov})$, as illustrated in Figure~\ref{fig-1d-gaussian}.
For a detailed treatment of the Wasserstein geometry of Gaussian distributions, we refer to~\citet{takatsu2011wasserstein}, and for additional considerations on the Bures metric the reader can consult the very recent references~\citep{malago2018wasserstein,bhatia2018bures}. One can also consult~\citep{NIPS2018_8226} for a a recent application of this metric to compute probabilistic embeddings for words,~\citep{NIPS2018_8067} to see how it is used to compute a robust extension to Kalman filtering, or~\citep{NIPS2017_7149} in which it is applied to covariance functions in reproducing kernel Hilbert spaces.
\end{rem2}
\begin{figure}[h!]
\centering
\includegraphics[width=.7\linewidth]{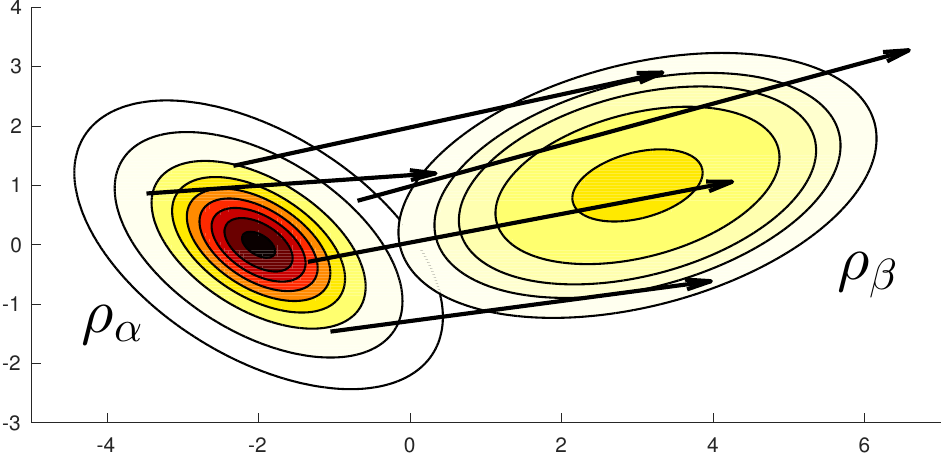}
\caption{\label{fig-gaussians-2d-T} Two Gaussians $\rho_\al$ and $\rho_\be$, represented using the contour plots of their densities, with respective mean and variance matrices $\mean_\al=(-2,0),\cov_\al=\frac{1}{2}\left(1 -\tfrac{1}{2};-\tfrac{1}{2}  1\right)$ and $\mean_\be=(3,1), \cov_\be=\left(2, \tfrac{1}{2}; \tfrac{1}{2}, 1\right)$. The arrows originate at random points $x$ taken on the plane and end at the corresponding mappings of those points $T(x)=\mean_\be + A(x-\mean_\al)$.}
\end{figure}

\begin{rem2}{Distance between elliptically contoured distributions}\label{rem-dist-elliptic}
\citeauthor{gelbrich1990formula} provides a more general result than that provided in Remark~\ref{rem-dist-gaussians}: the Bures metric between Gaussians extends more generally to \emph{elliptically contoured distributions}~\citeyearpar{gelbrich1990formula}.
In a nutshell, one can first show that for two measures with given mean and covariance matrices, the distance between the two Gaussians with these respective parameters is a lower bound of the Wasserstein distance between the two measures~\citep[Theorem 2.1]{gelbrich1990formula}. Additionally, the closed form~\eqref{eq-dist-gauss} extends to families of elliptically contoured densities: If two densities $\rho_\al$ and $\rho_\be$ belong to such a family, namely when $\rho_\al$ and $\rho_\be$ can be written for any point $x$ using a mean and positive definite parameter,

\eq{\begin{aligned}\rho_\al(x) = \frac{1}{\sqrt{\det(\A)}}h(\dotp{x-\mean_\al}{\A^{-1}(x-\mean_\al)})\\ \rho_\be(x)= \frac{1}{\sqrt{\det(\B)}} h(\dotp{x-\mean_\be}{\B^{-1}(x-\mean_\be)}),\end{aligned}}
for the same nonnegative valued function $h$ such that the integral $$\int_{\RR^\dim} h(\dotp{x}{x})dx=1,$$ then their optimal transport map is also the linear map \eqref{eq-transport-Bures} and their Wasserstein distance is also given by the expression $\eqref{eq-dist-gauss}$, with a slightly different scaling of the Bures metric that depends only the generator function $h$. For instance, that scaling is $1$ for Gaussians ($h(t)=e^{-t/2}$) and $1/(\dim+2)$ for uniform distributions on ellipsoids ($h$ the indicator function for $[0,1]$). This result follows from the fact that the covariance matrix of an elliptic distribution is a constant times its positive definite parameter~\citep[Theo. 4(ii)]{gomez2003survey} and that the Wasserstein distance between elliptic distributions is a function of the Bures distance between their covariance matrices~\citep[Cor. 2.5]{gelbrich1990formula}.
\end{rem2}
\begin{figure}[h!]
\centering
\includegraphics[width=.85\linewidth]{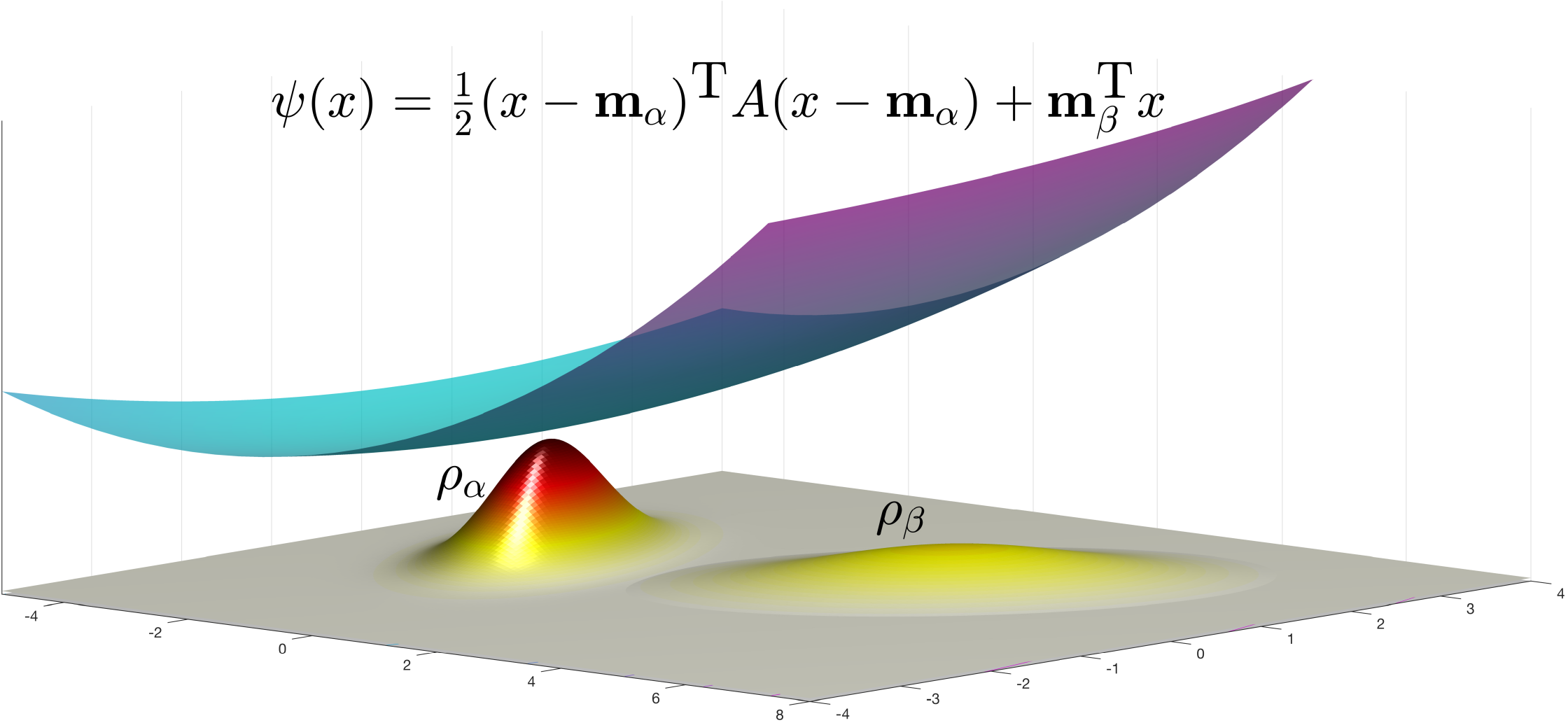}
\caption{\protect\label{fig-gaussians-2d-psi} Same Gaussians $\rho_\al$ and $\rho_\be$ as defined in Figure~\ref{fig-gaussians-2d-T}, represented this time as surfaces. The surface above is the Brenier potential $\psi$ defined up to an additive constant (here +50) such that $T=\nabla \psi$. For visual purposes, both Gaussian densities have been multiplied by a factor of 100.}
\end{figure}

\begin{figure}[h!]
\centering
\begin{tabular}{@{}c@{\hspace{1mm}}c@{}}
\includegraphics[width=.35\linewidth]{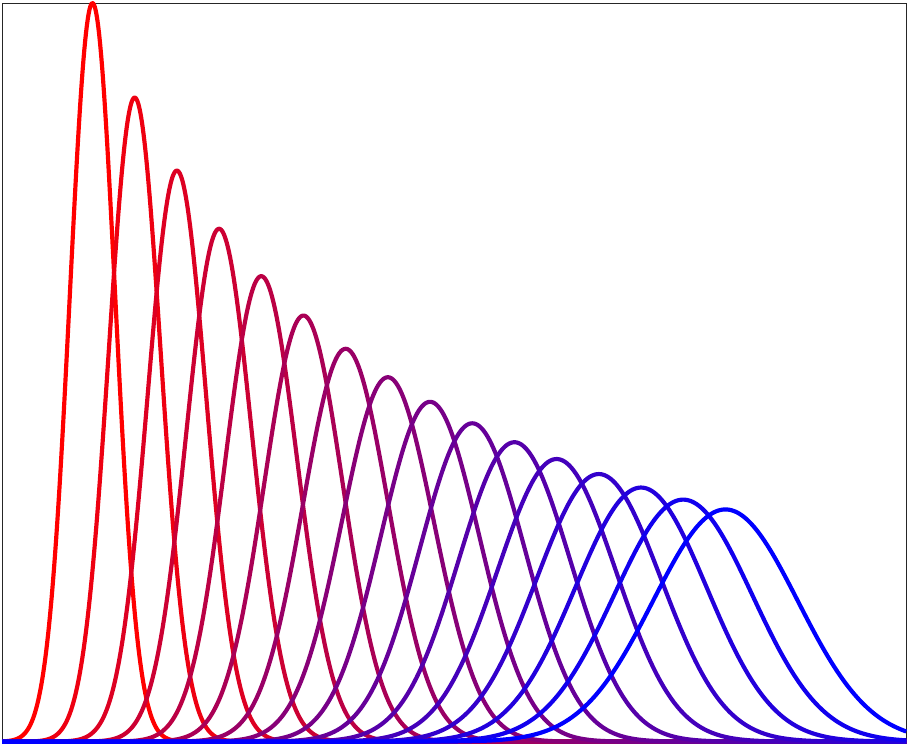}
\includegraphics[width=.25\linewidth]{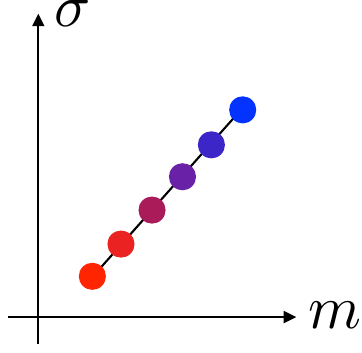}
\end{tabular}
\caption{\label{fig-1d-gaussian}
Computation of displacement interpolation between two 1-D Gaussians.
Denoting $\Gg_{m,\si}(x) \eqdef \frac{1}{\sqrt{2\pi}s}e^{-\frac{(x-m)^2}{2s^2}}$ the Gaussian density, it thus shows
the interpolation $\Gg_{(1-t)m_0+t m_1,(1-t)\si_0+t \si_1}$.
}
\end{figure}

\todoK{Maybe cover the case of translation / scaling of the same input. Explain that $\Wass$ is invariant to isometries. }


\chapter{Algorithmic Foundations}
\label{c-algo-basics} 

This chapter describes the most common algorithmic tools from combinatorial optimization and linear programming that can be used to solve the discrete formulation of optimal transport, as described in the primal problem~\eqref{eq-mk-discr} or alternatively its dual~\eqref{eq-dual}.

The origins of these algorithms can be traced back to World War II, either right before with \citeauthor{tolstoi1930methods}'s seminal work \citeyearpar{tolstoi1930methods} or during the war itself, when~\citet{Hitchcock41} and \citet{Kantorovich42} formalized the generic problem of dispatching available resources toward consumption sites in an optimal way. Both of these formulations, as well as the later contribution by~\citet{koopmans1949optimum}, fell short of providing a \emph{provably} correct algorithm to solve that problem (the cycle violation method was already proposed as a heuristic by~\citet{tolstoi1939metody}). One had to wait until the field of linear programming fully blossomed, with the proposal of the simplex method, to be at last able to solve rigorously these problems.

The goal of linear programming is to solve optimization problems whose objective function is linear and whose constraints are linear (in)equalities in the variables of interest. The optimal transport problem fits that description and is therefore a particular case of that wider class of problems. One can argue, however, that optimal transport is truly special among all linear program. First,~\citeauthor{dantzig1991}'s early motivation to solve linear programs was greatly related to that of solving transportation problems~\citep[p. 210]{dantzig49econometrica}. Second, despite being only a particular case, the optimal transport problem remained in the spotlight of optimization, because it was understood shortly after that optimal transport problems were related, and in fact equivalent, to an important class of linear programs known as minimum cost network flows~\citep[p. 213, Lem. 9.3]{korte2012combinatorial} thanks to a result by~\citet{ford1962flows}. As such, the OT problem has been the subject of particular attention, ever since the birth of mathematical programming~\citep{Dantzig51}, and is still widely used to introduce optimization to a new audience~\citep[\S1, p. 4]{nocedal}. 

\section{The Kantorovich Linear Programs}\label{s-kantorovich}
We have already introduced in Equation~\eqref{eq-mk-discr} the primal OT problem:
\eql{\label{eq-mk-discr-algo}
	\MKD_{\C}(\a,\b) = 
	\umin{\P \in \CouplingsD(\a,\b)}
		\sum_{i\in \range{n}, j \in \range{m}} \C_{i,j} \P_{i,j}. 
}
To make the link with the linear programming literature, one can cast the equation above as a linear program in \emph{standard} form, that is, a linear program with a linear objective; equality constraints defined with a matrix and a constant vector; and nonnegative constraints on variables. Let $\Identity_n$ stand for the identity matrix of size $n$ and let $\otimes$ be Kronecker's product. The ${(n+m) \times nm}$ matrix
$$
\mathbf{A}= \begin{bmatrix}
	\transp{\ones_{n}} \otimes \Identity_m \\
	\Identity_n \otimes \transp{\ones_{m}}
\end{bmatrix}\in\RR^{(n+m)\times nm}
$$
can be used to encode the row-sum and column-sum constraints that need to be satisfied for any $\P$ to be in $\CouplingsD(\a,\b)$. To do so, simply cast a matrix $\P\in\RR^{n\times m}$ as a vector $\p\in\RR^{nm}$ such that the $i+n(j-1)$'s element of $\p$ is equal to $\P_{ij}$ ($\P$ is enumerated columnwise) to obtain the following equivalence:
$$\P\in \RR^{n\times m}\in \CouplingsD(\a,\b) \Leftrightarrow \p\in\RR^{nm}_+, \mathbf{A}\p = \bigl[\begin{smallmatrix}\a\\ \b \end{smallmatrix} \bigr].$$
Therefore we can write the original optimal transport problem as
\eql{\label{eq-mk-discr-primal}
	\MKD_{\C}(\a,\b) =
	\umin{\substack{\p \in \RR^{nm}_+\\
		\mathbf{A}\p = \bigl[\begin{smallmatrix}\a\\ \b \end{smallmatrix} \bigr]}} \transp{\cc}\p,
}
where the $nm$-dimensional vector $\cc$ is equal to the stacked columns contained in the cost matrix $\C$. 

\begin{rem}\label{rem-transportation-polytope} Note that one of the $n+m$ constraints described above is redundant or that, in other words, the line vectors of matrix $A$ are not linearly independent. Indeed, summing all $n$ first lines and the subsequent $m$ lines results in the same vector (namely $A \bigl[\begin{smallmatrix} \ones_n\\\zeros_m \end{smallmatrix}\bigr] = A \bigl[\begin{smallmatrix} \zeros_n\\\ones_m\end{smallmatrix}\bigr]=\transp{\ones_{nm}}$). One can show that removing a line in $A$ and the corresponding entry in $\bigl[\begin{smallmatrix}\a\\ \b \end{smallmatrix}\bigr]$ yields a properly defined linear system. For simplicity, and to avoid treating asymmetrically $\a$ and $\b$, we retain in what follows a redundant formulation, keeping in mind that degeneracy will pop up in some of our computations.\end{rem}

The dual problem corresponding to Equation~\eqref{eq-mk-discr-primal} is, following duality in linear programming~\citep[p. 143]{bertsimas1997introduction} defined as
\eql{\label{eq-mk-discr-dual}
	\MKD_{\C}(\a,\b) = 
	\umax{\substack{\hD \in \RR^{n+m}\\
		\transp{A}\hD \leq \cc}}  \transp{\bigl[\begin{smallmatrix}\a\\ \b \end{smallmatrix} \bigr]}\hD.
}

Note that this program is exactly equivalent to that presented in Equation~\eqref{prop-duality-discr}.

\begin{rem}\label{rem-duality} We provide a simple derivation of the duality result above, which can be seen as a direct formulation of the arguments developed in Remark~\ref{rem-kantorovich-dual}. Strong duality, namely the fact that the optima of both primal~\eqref{eq-mk-discr-primal} and dual~\eqref{eq-mk-discr-dual} problems do indeed coincide, requires a longer proof~\citep[\S4.10]{bertsimas1997introduction}. To simplify notation, we write $\q=\bigl[\begin{smallmatrix}\a\\ \b \end{smallmatrix} \bigr]$. 
Consider now a relaxed primal problem of the optimal transport problem, where the constraint $\mathbf{A}\p=\q$ is no longer necessarily enforced but bears instead a cost $\transp{\hD}(\mathbf{A}\p-\q)$ parameterized by an arbitrary vector of costs $\hD\in\RR^{n+m}$. 
This relaxation, whose optimum depends directly on the cost vector $\hD$, can be written as
$$\LagrangeMKD(\hD)\eqdef \umin{\p \in \RR^{nm}_+} \transp{\cc}\p-\transp{\hD}(\mathbf{A}\p-\q).$$ 
Note first that this relaxed problem has no marginal constraints on~$\p$. Because that minimization allows for many more $\p$ solutions, we expect $\LagrangeMKD(\hD)$ to be smaller than $\bar{z}=\MKD_{\C}(\a,\b)$. Indeed, writing $\p^\star$ for any optimal solution of the primal problem \eqref{eq-mk-discr-algo}, we obtain
$$\umin{\p \in \RR^{nm}_+} \transp{\cc}\p-\transp{\hD}(\mathbf{A}\p-\q) \leq \transp{\cc}\p^\star-\transp{\hD}(\mathbf{A}\p^\star-\q)=\transp{\cc}\p^\star=\bar{z}.$$
The approach above defines therefore a problem which can be used to compute an optimal upper bound for the original problem~\eqref{eq-mk-discr-algo}, for any cost vector $\hD$; that function is called the Lagrange dual function of $\MKD$. 
The goal of duality theory is now to compute the best lower bound $\underline{z}$ by \emph{maximizing} $\LagrangeMKD$ over \emph{any} cost vector $\hD$, namely
$$
	\underline{z}=\umax{\hD} \pa{ \LagrangeMKD(\hD)=\umax{\hD} \transp{\hD}\q + \umin{\p \in \RR^{nm}_+} \transp{(\cc-\transp{A}\hD)}\p }.
$$
The second term involving a minimization on $\p$ can be easily shown to be $-\infty$ if any coordinate of $\transp{\cc}- \transp{A}\hD$ is negative. 
Indeed, if for instance for a given index $i\leq n+m$ we have $\cc_i-(\transp{A}\hD)_i<0$, then it suffices to take for $\p$ the canonical vector $\e_i$ multiplied by any arbitrary large positive value to obtain an unbounded value. 
When trying to maximize the lower bound $\LagrangeMKD(\hD)$ it therefore makes sense to restrict vectors $\hD$ to be such that $\transp{A}\hD\leq \cc$, in which case the best possible lower bound becomes $$\underline{z}=	\umax{\substack{\hD \in \RR^{n+m}\\\transp{A}\hD \leq \cc}}\transp{\hD}\q.$$ We have therefore proved a weak duality result, namely that $\underline{z}\leq \bar{z}$.
\end{rem}

\section{$\C$-Transforms}\label{sec-c-transforms} We present in this section an important property of the dual optimal transport problem~\eqref{eq-mk-discr-dual} which takes a more important meaning when used for the semidiscrete optimal transport problem in~\S\ref{s-c-transform}. This section builds upon the original formulation~\eqref{eq-dual} that splits dual variables according to row and column sum constraints:
\eql{\label{eq-dual-discrete-split}\MKD_\C(\a,\b) = \umax{(\fD,\gD) \in \PotentialsD(\C)} \dotp{\fD}{\a} + \dotp{\gD}{\b}. 
}
Consider any dual feasible pair $(\fD,\gD)$. If we ``freeze'' the value of $\fD$, we can notice that there is no better vector solution for $\gD$ than the $\C$-transform vector of $\fD$, denoted $\fD^{\,\C}\in\RR^m$ and defined as
\eq{(\fD^{\,\C})_j = \min_{i\in \range{n}} \C_{ij}-\fD_i,}
since it is indeed easy to prove that $(\fD,\fD^{\,\C})\in\PotentialsD(\C)$ and that $\fD^{\,\C}$ is the largest possible vector such that this constraint is satisfied. We therefore have that
\eq{\dotp{\fD}{\a} + \dotp{\gD}{\b} \leq \dotp{\fD}{\a} + \dotp{\fD^{\,\C}}{\b}.}
This result allows us first to reformulate the dual problem as a piecewise affine concave maximization problem expressed in a single variable $\fD$ as
\eql{\label{eq-semidual-discret}\MKD_\C(\a,\b) = \umax{\fD\in\RR^{n}} \dotp{\fD}{\a} + \dotp{\fD^{\,\C}}{\b}.}

Putting that result aside, the same reasoning applies of course if we now ``freeze'' the values of $\gD$ and consider instead the $\bar{\C}$-transform of $\gD$, namely vector $\gD^{\bar{\C}}\in\RR^n$ defined as
\eq{(\gD^{\bar{\C}})_i = \min_{j\in \range{m}} \C_{ij}-\gD_j,}
with a different increase in objective
\eq{\dotp{\fD}{\a} + \dotp{\gD}{\b} \leq \dotp{\gD^{\bar{\C}}}{\a} + \dotp{\gD}{\b}.}
Starting from a given $\fD$, it is therefore tempting to alternate $\C$ and $\bar{\C}$ transforms several times to improve $\fD$. Indeed, we have the sequence of inequalities
\eq{
	\dotp{\fD}{\a} + \dotp{\fD^{\,\C}}{\b} \leq \dotp{\fD^{\,\C\bar{\C}}}{\a} + \dotp{\fD^{\,\C}}{\b}\leq  \dotp{\fD^{\,\C\bar{\C}}}{\a} + \dotp{\fD^{\,\C\bar{\C}\C}}{\b}\leq  \dots
} 
One may hope for a strict increase in the objective at each of these iterations. However, this does not work because alternating $\C$ and $\bar{\C}$ transforms quickly hits a plateau.

\begin{prop}\label{prop-ccc-2}The following identities, in which the inequality sign between vectors should be understood elementwise, hold:
	\begin{enumerate}[label=(\roman*)]
	\item $\fD\leq \fD\,' \Rightarrow \fD^{\,\C}\geq \fD\,'^{\,\C}$, 
	\item $\fD^{\,\C\bar{\C}} \geq \fD$, $\gD^{\bar{\C}\C} \geq \gD$, 
	\item $\fD^{\,\C\bar{\C}\C}=\fD^{\,\C}.$
\end{enumerate}	
\end{prop}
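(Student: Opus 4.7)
The plan is to establish the three claims in the listed order, because each one feeds into the next; the only genuinely computational step is (ii), and (iii) follows cheaply from (i) and (ii) by a standard antitone-Galois-connection argument.

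For (i), I would simply unfold the definition. If $\fD \leq \fD'$ componentwise, then for every fixed column index $j$ and every row index $i$, $\C_{ij}-\fD_i \geq \C_{ij}-\fD'_i$. Taking the minimum over $i$ on both sides preserves the inequality, and gives $(\fD^{\,\C})_j \geq (\fD'^{\,\C})_j$ directly. This is one line and carries no obstacle.

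For (ii), I would expand the double transform. By definition,
\[
(\fD^{\,\C\bar{\C}})_i \;=\; \min_{j} \bigl( \C_{ij} - (\fD^{\,\C})_j \bigr) \;=\; \min_{j}\Big( \C_{ij} - \min_{k}(\C_{kj}-\fD_k) \Big) \;=\; \min_{j}\max_{k}\bigl(\C_{ij}-\C_{kj}+\fD_k\bigr).
\]
The key trick is to test the inner $\max$ at the specific choice $k=i$, which yields $\C_{ij}-\C_{ij}+\fD_i=\fD_i$. Hence every term in the outer $\min$ is at least $\fD_i$, so $(\fD^{\,\C\bar{\C}})_i\geq \fD_i$. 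The symmetric argument (swapping the roles of the two marginals) gives $\gD^{\bar{\C}\C}\geq\gD$. I expect this symmetric reasoning to be the only place where care is needed, since one must be consistent about which variable is ``frozen'' when defining the two transforms.

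For (iii), I would combine (i) and (ii). Applying (ii) to $\fD$ gives $\fD^{\,\C\bar{\C}}\geq \fD$; then (i) applied to this inequality (order-reversing) yields $(\fD^{\,\C\bar{\C}})^{\,\C}\leq \fD^{\,\C}$, i.e., $\fD^{\,\C\bar{\C}\C}\leq \fD^{\,\C}$. For the reverse inequality, I would apply (ii) in its second form $\gD^{\bar{\C}\C}\geq \gD$ to the specific choice $\gD=\fD^{\,\C}$, which immediately gives $\fD^{\,\C\bar{\C}\C}\geq \fD^{\,\C}$. The two inequalities together close the argument, showing that alternating $\C$- and $\bar{\C}$-transforms stabilizes after at most three applications. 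The only subtle point is recognizing that the two statements of (ii) are exactly what is needed to bracket $\fD^{\,\C\bar{\C}\C}$ on both sides, so the proof ultimately requires no new inequality beyond (i) and (ii).
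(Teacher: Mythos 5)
Your proposal is correct and follows the same argument as the paper: (i) is immediate from the definition, (ii) is obtained by plugging $k=i$ (resp. $i'=i$) into the inner minimization, and (iii) combines (i) applied to the first half of (ii) with the second half of (ii) at $\gD=\fD^{\,\C}$, exactly as in the text. The only cosmetic difference is that you rewrite the double transform as a $\min$-$\max$, while the paper keeps it as a nested $\min$ and uses $-\min \geq -(\cdot)$; these are the same step.
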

\begin{proof} The first inequality follows from the definition of $\C$-transforms. Expanding the definition of $\fD^{\,\C\bar{\C}}$ we have
\eq{\left(\fD^{\,\C\bar{\C}}\right)_i= \min_{j\in \range{m}} \C_{ij}-\fD^{\,\C}_j = \min_{j\in \range{m}} \C_{ij}- \min_{i'\in \range{n}} \C_{i'j}-\fD_{i
'}.}
Now, since $-\min_{i'\in \range{n}} \C_{i'j}-\fD_{i'} \geq -(\C_{ij}-\fD_i)$, we recover 
\eq{\left(\fD^{\,\C\bar{\C}}\right)_i \geq \min_{j\in \range{m}} \C_{ij}- \C_{ij}+\fD_i = \fD_i.}
The relation $\gD^{\bar{\C}\C} \geq \gD$ is obtained in the same way. Now, set $\gD=\fD^{\,\C}$. Then, $\gD^{\bar{\C}}=\fD^{\,\C\bar{\C}}\geq \fD$. Therefore, using result (i) we have $\fD^{\,\C\bar{\C}\C}\leq \fD^{\,\C}$. Result (ii) yields $\fD^{\,\C\bar{\C}\C}\geq \fD^{\,\C}$, proving the equality.
\end{proof}

\section{Complementary Slackness}\label{s-complementary}

Primal~\eqref{eq-mk-discr-primal} and dual~\eqref{eq-mk-discr-dual},~\eqref{eq-dual} problems can be solved independently to obtain optimal primal $\P^{\star}$ and dual $(\fD^{\star},\gD^{\star})$ solutions. The following proposition characterizes their relationship. 

\begin{prop}\label{prop-primal-dual-optimal}Let $\P^\star$ and $\fD^\star,\gD^\star$ be optimal solutions for the primal~\eqref{eq-dual-generic} and dual~\eqref{eq-mk-discr} problems, respectively. Then, for any pair $(i,j)\in\range{n}\times\range{m}$, $\P^\star_{i,j}(\C_{i,j}-\fD^\star_i+\gD^\star_j)=0$ holds. In other words, if $\P^\star_{i,j}>0$, then necessarily $\fD^\star_i+\gD^\star_j=\C_{i,j}$; if $\fD^\star_i+\gD^\star_j<\C_{i,j}$ then necessarily $\P^\star_{i,j}=0$.
\end{prop}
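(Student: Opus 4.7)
The plan is to derive this as a direct consequence of strong duality (Proposition~\ref{prop-duality-discr}) combined with dual feasibility and primal marginal constraints. Note first that the statement in the proposition must be read with the sign convention matching $\PotentialsD(\C)$, namely the relevant slack is $\C_{i,j} - \fD^\star_i - \gD^\star_j \geq 0$; what we want to show is that $\P^\star_{i,j}\bigl(\C_{i,j} - \fD^\star_i - \gD^\star_j\bigr) = 0$ for every pair $(i,j)$.

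First I would invoke strong duality, which gives
\eq{
    \dotp{\C}{\P^\star} \;=\; \dotp{\fD^\star}{\a} + \dotp{\gD^\star}{\b}.
}
Next, using the marginal constraints $\P^\star \ones_m = \a$ and $\transp{(\P^\star)} \ones_n = \b$ satisfied by $\P^\star \in \CouplingsD(\a,\b)$, I rewrite the right-hand side as
\eq{
    \dotp{\fD^\star}{\a} + \dotp{\gD^\star}{\b}
    = \sum_i \fD^\star_i \sum_j \P^\star_{i,j} + \sum_j \gD^\star_j \sum_i \P^\star_{i,j}
    = \sum_{i,j} (\fD^\star_i + \gD^\star_j)\, \P^\star_{i,j}.
}
Subtracting this from $\sum_{i,j}\C_{i,j}\P^\star_{i,j}$ yields
\eq{
    \sum_{i,j} \bigl(\C_{i,j} - \fD^\star_i - \gD^\star_j\bigr)\, \P^\star_{i,j} \;=\; 0.
}

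The final step is the standard nonnegative-sum argument. By primal feasibility each $\P^\star_{i,j} \geq 0$, and by dual feasibility $(\fD^\star,\gD^\star) \in \PotentialsD(\C)$ each factor $\C_{i,j} - \fD^\star_i - \gD^\star_j \geq 0$. A sum of nonnegative terms equals zero only if every term is zero, so $\P^\star_{i,j}\bigl(\C_{i,j} - \fD^\star_i - \gD^\star_j\bigr) = 0$ for every $(i,j)$, from which the two implications stated in the proposition follow immediately. There is no real obstacle here: the only nontrivial ingredient is strong duality itself, which is invoked as a black box from Proposition~\ref{prop-duality-discr}; everything else is bookkeeping with the marginal constraints.
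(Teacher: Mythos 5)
Your proof is correct and follows essentially the same route as the paper: invoke strong duality, rewrite $\dotp{\fD^\star}{\a}+\dotp{\gD^\star}{\b}$ as $\sum_{i,j}(\fD^\star_i+\gD^\star_j)\P^\star_{i,j}$ via the marginal constraints, obtain $\dotp{\P^\star}{\C-\fD^\star\oplus\gD^\star}=0$, and conclude by the nonnegativity of both factors. You also correctly noted the sign typo in the displayed slack in the statement, which the paper's argument implicitly fixes the same way.
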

\begin{proof} We have by strong duality that $\dotp{\P^\star}{\C} = \dotp{\fD^\star}{\a}+\dotp{\gD^\star}{\b}$. Recall that $\P^\star\ones_m = \a$ and $\transp{\P^\star}\ones_n = \b$; therefore 
	$$\begin{aligned}\dotp{\fD^\star}{\a}+\dotp{\gD^\star}{\b} &= \dotp{\fD^\star}{\P^\star\ones_m}+\dotp{\gD^\star}{\transp{\P^\star}\ones_n}\\&=  \dotp{\fD^\star\transp{\ones_m}}{\P^\star}+\dotp{\ones_n\transp{\gD^\star}}{\P^\star},\end{aligned}$$
	which results in $$\dotp{\P^\star}{\C- \fD^\star\oplus \gD^\star}=0.$$
Because $(\fD^\star,\gD^\star)$ belongs to the polyhedron of dual constraints~\eqref{eq-feasible-potential}, each entry of the matrix $\C- \fD^\star\oplus \gD^\star$ is necessarily nonnegative. Therefore, since all the entries of $\P$ are nonnegative, the constraint that the dot-product above is equal to $0$ enforces that, for any pair of indices $(i,j)$ such that $\P_{i,j}>0$, $\C_{i,j}-(\fD_i+\gD_j)$ must be zero, and for any pair of indices $(i,j)$ such that $\C_{i,j}>\fD_i+\gD_j$ that $\P_{i,j}=0$.
\end{proof}

The converse result is also true. We define first the idea that two variables for the primal and dual problems are complementary.

\begin{defn}\label{def-complementary}
A matrix $\P\in\RR^{n\times m}$ and a pair of vectors $(\fD,\gD)$ are complementary w.r.t. $\C$ if for all pairs of indices $(i,j)$ such that  $\P_{i,j}>0$ one also has $\C_{i,j}=\fD_i+\gD_j$.
\end{defn}

If a pair of feasible primal and dual variables is complementary, then we can conclude they are optimal.

\begin{prop}\label{prop-primal-dual-optimality} If $\P$ and $(\fD,\gD)$ are complementary and feasible solutions for the primal~\eqref{eq-dual-generic} and dual~\eqref{eq-mk-discr} problems, respectively, then $\P$ and $(\fD,\gD)$ are both primal and dual optimal.
\end{prop}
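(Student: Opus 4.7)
The plan is to exploit weak duality, which was established in Remark~\ref{rem-duality}: any feasible primal objective is at least as large as any feasible dual objective. It therefore suffices to show that the complementary pair $(\P, (\fD,\gD))$ achieves equal primal and dual objective values. Then, since no feasible primal solution can dip below the dual value $\dotp{\fD}{\a}+\dotp{\gD}{\b}$, and no feasible dual solution can exceed the primal value $\dotp{\C}{\P}$, both must be optimal.

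The main calculation is direct. Complementarity (Definition~\ref{def-complementary}) says that $\P_{i,j}(\C_{i,j}-\fD_i-\gD_j)=0$ for every pair $(i,j)$: if $\P_{i,j}>0$ then the parenthesized factor vanishes by assumption, and otherwise the product is trivially zero. Summing over all $(i,j)$ yields
\eq{
    \sum_{i,j} \P_{i,j}\C_{i,j} \;=\; \sum_{i,j} \P_{i,j}(\fD_i+\gD_j).
}
Now the right-hand side splits into $\sum_i \fD_i \sum_j \P_{i,j} + \sum_j \gD_j \sum_i \P_{i,j}$, and the primal feasibility $\P \in \CouplingsD(\a,\b)$ implies $\sum_j \P_{i,j}=\a_i$ and $\sum_i \P_{i,j}=\b_j$, so the right-hand side equals $\dotp{\fD}{\a}+\dotp{\gD}{\b}$. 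Hence $\dotp{\C}{\P} = \dotp{\fD}{\a}+\dotp{\gD}{\b}$.

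To conclude, let $\P'\in\CouplingsD(\a,\b)$ be any primal feasible point. Weak duality (the inequality chain displayed at the end of Remark~\ref{rem-kantorovich-dual}, using $\fD_i+\gD_j\leq \C_{i,j}$ together with nonnegativity of $\P'$ and the marginal constraints) gives $\dotp{\C}{\P'} \geq \dotp{\fD}{\a}+\dotp{\gD}{\b} = \dotp{\C}{\P}$, so $\P$ is primal optimal. Symmetrically, for any dual feasible $(\fD',\gD')\in\PotentialsD(\C)$, weak duality gives $\dotp{\fD'}{\a}+\dotp{\gD'}{\b} \leq \dotp{\C}{\P} = \dotp{\fD}{\a}+\dotp{\gD}{\b}$, so $(\fD,\gD)$ is dual optimal.

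There is really no hard step here; this is a converse direction that is genuinely easier than Proposition~\ref{prop-primal-dual-optimal}, since we need not invoke strong duality (which was the nontrivial ingredient before). The only thing to be careful about is to use \emph{both} the feasibility of $\P$ (to turn $\sum \P(\fD\oplus\gD)$ into $\dotp{\fD}{\a}+\dotp{\gD}{\b}$) and the dual feasibility inequality $\fD\oplus\gD\leq \C$ (to invoke weak duality against an arbitrary competitor). Complementarity is then exactly the equality case of that weak-duality inequality for this particular pair.
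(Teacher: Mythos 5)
Your proof is correct and takes essentially the same route as the paper: complementarity plus the marginal constraints force the primal and dual objectives to coincide, and weak duality then squeezes both to optimality. The paper just compresses this into the one-line chain $\MKD_{\C}(\a,\b)\leq \dotp{\P}{\C}=\dotp{\P}{\fD\oplus\gD}=\dotp{\a}{\fD}+\dotp{\b}{\gD}\leq \MKD_{\C}(\a,\b)$, which is your argument stated against the optimal value rather than against arbitrary competitors.
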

\begin{proof}
	By weak duality, we have that
$$\MKD_{\C}(\a,\b)\leq \dotp{\P}{\C}=\dotp{\P}{\fD\oplus\gD}=\dotp{\a}{\fD}+\dotp{\b}{\gD}\leq \MKD_{\C}(\a,\b)$$
and therefore $\P$ and $(\fD,\gD)$ are respectively primal and dual optimal.
\end{proof}

\todoK{
\begin{prop}[Primal-dual relationships]
	\todo{something on this?}
\end{prop}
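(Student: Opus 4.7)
The plan is to show that the complementarity condition together with feasibility forces the primal and dual objectives to coincide at the candidate pair, and then invoke weak duality to sandwich both values at the common optimum.

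First I would compute $\dotp{\P}{\C}$ directly using complementarity. Writing $\dotp{\P}{\C} = \sum_{i,j} \P_{i,j}\C_{i,j}$, I split the sum into pairs with $\P_{i,j} > 0$ and pairs with $\P_{i,j} = 0$. The latter contribute nothing; for the former, the complementarity hypothesis of Definition~\ref{def-complementary} lets me replace $\C_{i,j}$ with $\fD_i + \gD_j$. Hence
\[
\dotp{\P}{\C} \;=\; \sum_{i,j} \P_{i,j}(\fD_i + \gD_j) \;=\; \dotp{\P}{\fD \oplus \gD}.
\]

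Next I would unfold the right-hand side using the marginal structure of $\P$. Expanding,
\[
\dotp{\P}{\fD \oplus \gD} = \sum_i \fD_i \sum_j \P_{i,j} + \sum_j \gD_j \sum_i \P_{i,j} = \dotp{\fD}{\P\ones_m} + \dotp{\gD}{\transp{\P}\ones_n},
\]
and feasibility of $\P$, namely $\P \in \CouplingsD(\a,\b)$, gives $\P\ones_m = \a$ and $\transp{\P}\ones_n = \b$. So the primal value equals $\dotp{\fD}{\a} + \dotp{\gD}{\b}$, the dual objective evaluated at the feasible pair $(\fD,\gD)$.

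Finally I would close the loop with weak duality, which was established in Remark~\ref{rem-duality}: every feasible $\P'$ and every feasible $(\fD',\gD') \in \PotentialsD(\C)$ satisfy $\dotp{\P'}{\C} \geq \dotp{\fD'}{\a} + \dotp{\gD'}{\b}$, so in particular $\MKD_\C(\a,\b) \geq \dotp{\fD}{\a} + \dotp{\gD}{\b}$ and $\dotp{\P}{\C} \geq \MKD_\C(\a,\b)$. Combined with the equality just derived, this produces
\[
\dotp{\P}{\C} \;=\; \dotp{\fD}{\a} + \dotp{\gD}{\b} \;\leq\; \MKD_\C(\a,\b) \;\leq\; \dotp{\P}{\C},
\]
forcing equality throughout and thereby optimality of both $\P$ and $(\fD,\gD)$. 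There is no real obstacle here: the proof is essentially a one-line computation once complementarity is used to collapse $\dotp{\P}{\C}$ onto $\dotp{\P}{\fD\oplus\gD}$; the only thing to keep in mind is to invoke the already-proven weak duality rather than re-deriving it.
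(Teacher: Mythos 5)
Your argument is correct and is essentially identical to the paper's own proof of Proposition~\ref{prop-primal-dual-optimality}, which reads off the chain $\MKD_{\C}(\a,\b)\leq \dotp{\P}{\C}=\dotp{\P}{\fD\oplus\gD}=\dotp{\a}{\fD}+\dotp{\b}{\gD}\leq \MKD_{\C}(\a,\b)$ by weak duality. The only cosmetic difference is the ordering in which you stack the two inequalities; the key steps (complementarity collapses the cost, feasibility passes to marginals, weak duality sandwiches) are the same.
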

}

\section{Vertices of the Transportation Polytope}\label{s-extremal}

Recall that a vertex or an extremal point of a convex set is formally a point $\mathbf{x}$ in that set such that, if there exiss $\mathbf{y}$ and $\mathbf{z}$ in that set with $\mathbf{x}=(\mathbf{y}+\mathbf{z})/2$, then necessarily $\mathbf{x}=\mathbf{y}=\mathbf{z}$. A linear program with a nonempty and bounded feasible set attains its minimum at a vertex (or extremal point) of the feasible set~\citep[p.~65, Theo.~2.7]{bertsimas1997introduction}. Since the feasible set $\CouplingsD(\a,\b)$ of the primal optimal transport problem~\eqref{eq-mk-discr-primal} is bounded, one can restrict the search for an optimal $\P$ to the set of extreme points of the polytope $\CouplingsD(\a,\b)$. Matrices $\P$ that are extremal in $\CouplingsD(\a,\b)$ have an interesting structure that has been the subject of extensive research~\citep[\S8]{brualdi2006combinatorial}. That structure requires describing the transport problem using the formalism of bipartite graphs.
 
\subsection{Tree Structure of the Support of All Vertices of $\CouplingsD(\a,\b)$} 
Let $V=(1,2,\dots,n)$ and $V'=(1',2',\dots,m')$ be two sets of nodes. Note that we add a prime to the labels of set $V'$ to disambiguate them from those of $V$. Consider their union $V\cup V'$, with $n+m$ nodes, and the set $\alledges$ of all $nm$ directed edges $\{ (i,j'), i \in \range{n}, j\in \range{m}\}$ between them (here we just add a prime to an integer $j\leq m$ to form $j'$ in $V'$). To each edge $(i,j')$ we associate the corresponding cost value $\C_{ij}$. The complete bipartite graph $\mathcal{G}$ between $V$ and $V'$ is $(V\cup V',E)$. A transport plan is a flow on that graph satisfying source ($\a_i$ flowing out of each node $i$) and sink ($\b_j$ flowing into each node $j'$) constraints, as described informally in Figure~\ref{fig-simplex}. An extremal point in $\CouplingsD(\a,\b)$ has the following property~\citep[p.~338,~Theo.~8.1.2]{brualdi2006combinatorial}.

\begin{figure}[h!]
\centering
\includegraphics[width=\linewidth]{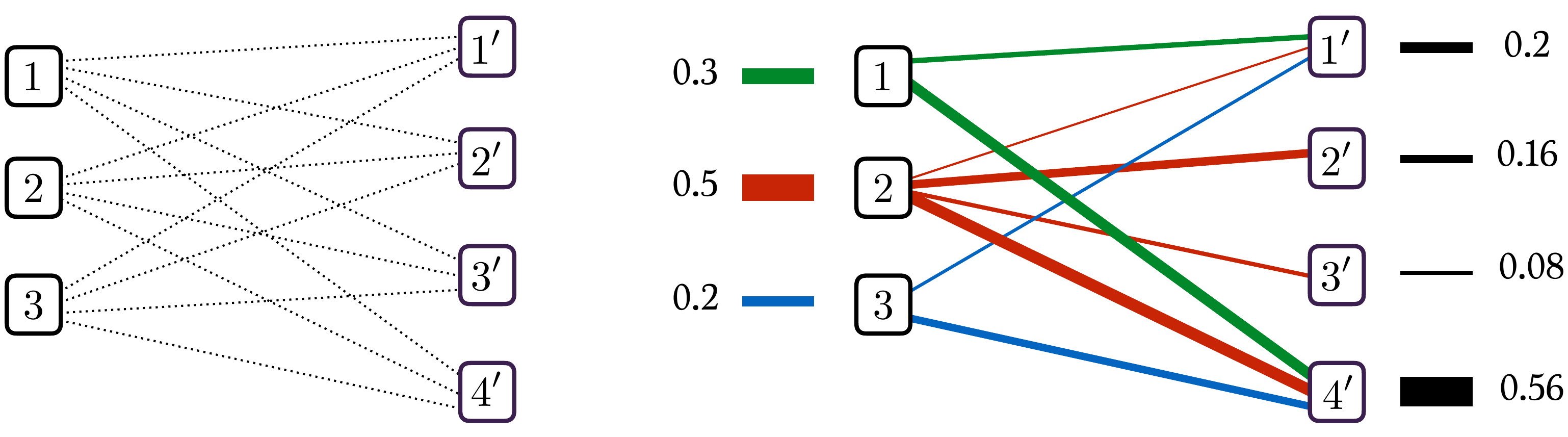}
\caption{\label{fig-simplex}
The optimal transport problem as a bipartite network flow problem. Here $n=3,m=4$. All coordinates of the source histogram, $\a$, are depicted as source nodes on the left labeled $1,2,3$, whereas all coordinates of the target histogram $\b$ are labeled as nodes $1',2',3',4'$. The graph is bipartite in the sense that all source nodes are connected to all target nodes, with no additional edges. To each edge $(i,j')$ is associated a cost $\C_{ij}$. A feasible flow is represented on the right. Proposition~\ref{prop-extremal} shows that this flow is not extremal since it has at least one cycle given by $((1,1'),(2,1'),(2,4'),(1,4'))$. 
}
\end{figure}

\begin{figure}[h!]
\centering
\includegraphics[width=\linewidth]{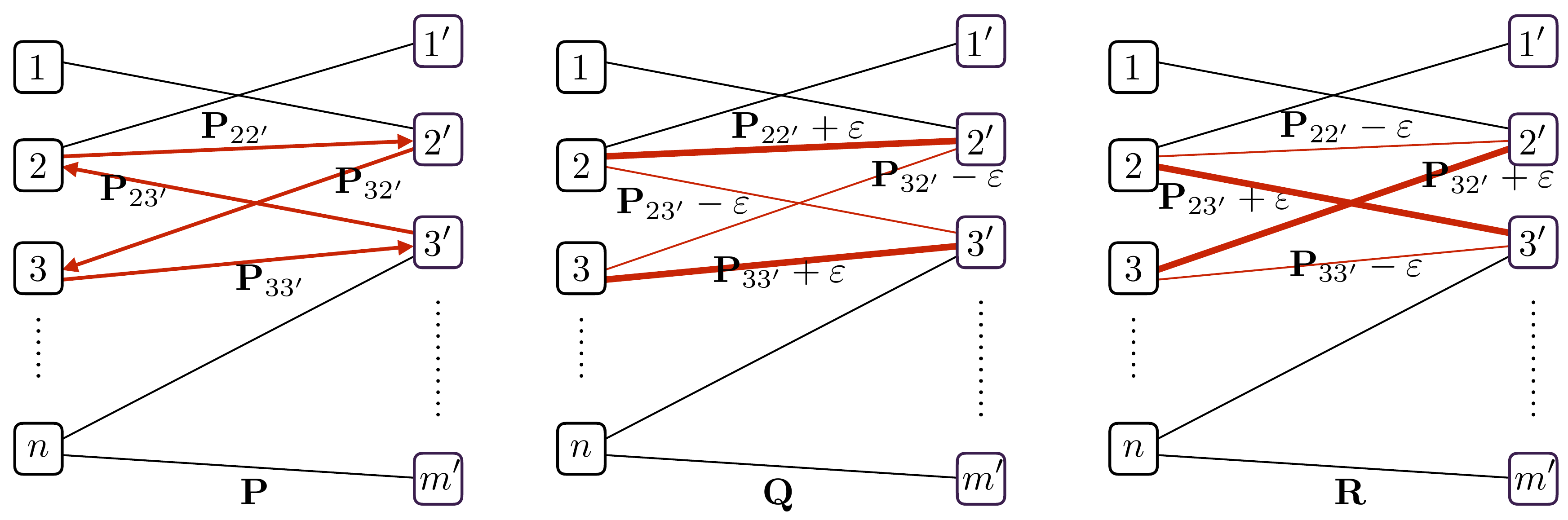}
\caption{\label{fig-perturb}
A solution $\P$ with a cycle in the graph of its support can be perturbed to obtain two feasible solutions $\Q$ and $\RRd$ such that $\P$ is their average, therefore disproving that $\P$ is extremal. 
}
\end{figure}

\begin{prop}[Extremal solutions]\label{prop-extremal}
Let $\P$ be an extremal point of the polytope $\CouplingsD(\a,\b)$. Let $S(\P)\subset \alledges$ be the subset of edges $\{(i,j'), i\in\range{n}, j\in\range{m} \text{ such that } \P_{ij}>0\}$. Then the graph $G(\P)\eqdef (V\cup V',S(\P))$ has no cycles. In particular, $\P$ cannot have more than $n+m-1$ nonzero entries.
 \end{prop}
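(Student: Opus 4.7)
The plan is to prove the contrapositive: if the support graph $G(\P)$ contains a cycle, then $\P$ cannot be extremal in $\CouplingsD(\a,\b)$. The natural strategy, which is already hinted at by Figure~\ref{fig-perturb}, is to exhibit an explicit perturbation of $\P$ along such a cycle that yields two distinct feasible couplings whose average is $\P$.

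Suppose $G(\P)$ contains a cycle. Since $G(\P)$ is bipartite (edges only go between $V$ and $V'$), any cycle has even length $2k$ and its edges can be listed in alternating fashion as $(i_1, j_1'), (i_2, j_1'), (i_2, j_2'), (i_3, j_2'), \dots, (i_k, j_k'), (i_1, j_k')$. I would define a perturbation matrix $\Delta \in \RR^{n \times m}$ supported on the edges of this cycle, with entries $+1$ on the ``odd'' edges $(i_\ell, j_\ell')$ and $-1$ on the ``even'' edges $(i_{\ell+1}, j_\ell')$, and zero elsewhere. The key observation is that at each vertex traversed by the cycle, exactly one incoming and one outgoing cycle edge meet, so the signed sum of $\Delta$ along any row or column of $\Delta$ vanishes. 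Consequently, $\Delta \ones_m = 0$ and $\transp{\Delta}\ones_n = 0$.

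Now pick $\eps \in (0, \min_{(i,j') \in \text{cycle}} \P_{ij}]$; this is strictly positive because every edge of the cycle lies in $S(\P)$. Setting $\Q \eqdef \P + \eps \Delta$ and $\RRd \eqdef \P - \eps \Delta$, the marginal identities for $\Delta$ show that both $\Q$ and $\RRd$ satisfy $\Q\ones_m=\RRd\ones_m=\a$ and $\transp{\Q}\ones_n=\transp{\RRd}\ones_n=\b$, and the choice of $\eps$ guarantees that both matrices remain in $\RR_+^{n \times m}$. Thus $\Q, \RRd \in \CouplingsD(\a,\b)$, they are distinct since $\Delta \neq 0$, and $\P = \tfrac12(\Q + \RRd)$, contradicting the extremality of $\P$.

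For the final cardinality bound, once we know $G(\P)$ is acyclic, it is a forest on $n+m$ vertices, and any forest on $N$ vertices has at most $N-1$ edges. Hence $|S(\P)| \leq n+m-1$, i.e., $\P$ has at most $n+m-1$ nonzero entries. The main (minor) obstacle is purely bookkeeping: carefully orienting the cycle and checking that the $\pm 1$ pattern on $\Delta$ does indeed cancel at every endpoint; once that is in place the rest is immediate from the definition of an extremal point and the elementary forest inequality.
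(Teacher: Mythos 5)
Your proof is correct and follows essentially the same route as the paper's: contradiction via a cycle in the bipartite support graph, a $\pm\varepsilon$ perturbation supported on the alternating edges of that cycle (you factor it as $\eps\Delta$ with $\Delta \in \{0,\pm 1\}^{n\times m}$, the paper builds $\E$ with entries $\pm\varepsilon$ directly), verification that the perturbation has zero row and column sums, and the observation that a forest on $n+m$ vertices has at most $n+m-1$ edges. The only cosmetic differences are that you make the even-cycle-length consequence of bipartiteness explicit and allow $\eps$ to attain the minimum (still valid since entries stay nonnegative); otherwise the two arguments coincide.
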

\begin{proof}
We proceed by contradiction. Suppose that $\P$ is an extremal point of the polytope $\CouplingsD(\a,\b)$ and that its corresponding set $S(\P)$ of edges, denoted $F$ for short, is such that the graph $G=(V\cup V',F)$ contains a cycle, namely there exists $k>1$ and a sequence of distinct indices $i_1,\dots,i_{k-1}\in\range{n}$ and $j_1,\dots,j_{k-1}\in\range{m}$ such that the set of edges $H$ given below forms a subset of $F$. $$H=\left\{(i_1,j_1'), (i_2,j_1'), (i_2,j_2'),\dots,(i_k,j_k'),(i_1,j_k')\right\}.$$ 
	We now construct two feasible matrices $\Q$ and $\RRd$ such that $\P=(\Q+\RRd)/2$. To do so, consider a \emph{directed} cycle $\bar{H}$ corresponding to $H$, namely the sequence of pairs $i_1\rightarrow j_1', j_1' \rightarrow i_2, i_2 \rightarrow j_2',\dots, i_k \rightarrow j_k' , j_k' \rightarrow i_1$, as well as the elementary amount of flow $\varepsilon < \min_{(i,j')\in F}\P_{ij}$.
	Consider a perturbation matrix $\E$ whose $(i,j)$ entry is equal to $\varepsilon$ if $i\rightarrow j' \in \bar{H}$, $-\varepsilon$ if $ j\rightarrow i'\in \bar{H}$, and zero otherwise. Define matrices $\Q=\P+\E$ and $\RRd=\P-\E$ as illustrated in Figure~\ref{fig-perturb}. Because $\varepsilon$ is small enough, all elements in $\Q$ and $\RRd$ are nonnegative. By construction, $\E$ has either lines (resp., columns) with all entries equal to $0$ or exactly one entry equal to $\varepsilon$ and another equal to $-\varepsilon$ for those indexed by $i_1,\dots,i_k$ (resp., $j_1,\dots,j_k$). Therefore, $\E$ is such that $\E\ones_{m}=\zeros_{n}$ and $\transp{\E}\ones_{n}=\zeros_{m}$, and we have that $\Q$ and $\RRd$ have the same marginals as $\P$, and are therefore feasible. Finally $\P=(\Q+\RRd)/2$ which, since $\Q,\RRd\ne \P$, contradicts the fact that $\P$ is an extremal point. Since a graph with $k$ nodes and no cycles cannot have more than $k-1$ edges, we conclude that $S(\P)$ cannot have more than $n+m-1$ edges, and therefore $\P$ cannot have more than $n+m-1$ nonzero entries. 
\end{proof}

\subsection{The North-West Corner Rule}\label{subsec-northwest}
The north-west (NW) corner rule is a heuristic that produces a vertex of the polytope $\CouplingsD(\a,\b)$ in up to $n+m$ operations. This heuristic can play a role in initializing any algorithm working on the primal, such as the network simplex outlined in the next section. 

The rule starts by giving the highest possible value to $\P_{1,1}$ by setting it to $\min(\a_1,\b_1)$. At each step, the entry $\P_{i,j}$ is chosen to saturate either the row constraint at $i$, the column constraint at $j$, or both if possible. The indices $i,j$ are then updated as follows: $i$ is incremented in the first case, $j$ is in the second, and both $i$ and $j$ are in the third case. The rule proceeds until $\P_{n,m}$ has received a value. 

Formally, the algorithm works as follows: $i$ and $j$ are initialized to $1$, $r\leftarrow\a_1,c\leftarrow \b_1$. While $i\leq n$ and $j\leq m$, set $t\leftarrow \min(r,c)$, $\P_{i,j}\leftarrow t$, $r\leftarrow r-t$, $c\leftarrow s-t$; if $r=0$ then increment $i$, and update $r\leftarrow\a_i$ if $i\leq n$; if $c=0$ then increment $j$, and update $c\leftarrow \b_j$ if $j\leq n$; repeat. Here is an example of this sequence assuming $\a=[0.2,0.5,0.3]$  and $\b=[0.5,0.1,0.4]$:
$$\begin{aligned}\begin{bmatrix} \bullet & 0 & 0 \\ 0 & 0 & 0 \\ 0& 0 & 0\end{bmatrix} &\rightarrow \begin{bmatrix} 0.2 & 0 & 0 \\ \bullet & 0 & 0 \\ 0& 0 & 0\end{bmatrix} &\rightarrow \begin{bmatrix} 0.2 & 0 & 0 \\ 0.3 & \bullet & 0 \\ 0& 0 & 0\end{bmatrix}\\ &\rightarrow \begin{bmatrix} 0.2 & 0 & 0 \\ 0.3 &0.1 &\bullet \\ 0& 0 & 0\end{bmatrix} &\rightarrow \begin{bmatrix} 0.2 & 0 & 0 \\ 0.3 &0.1 &0.1 \\ 0& 0 & \bullet\end{bmatrix} & \rightarrow \begin{bmatrix} 0.2 & 0 & 0 \\ 0.3 &0.1 &0.1 \\ 0& 0 & 0.3\end{bmatrix}\end{aligned}$$
We write $\NW(\a,\b)$ for the unique plan that can be obtained through this heuristic. 

Note that there is, however, a much larger number of NW corner solutions that can be obtained by permuting arbitrarily the order of $\a$ and $\b$ first, computing the corresponding NW corner table, and recovering a table of $\CouplingsD(\a,\b)$ by inverting again the order of columns and rows: setting $\sigma=(3,1,2),\sigma'=(3,2,1)$ gives $\a_\sigma=[0.3,0.2,0.5], \b_{\sigma'}=[0.4,0.1,0.5]$, and $\sigma^{-1}=(2,3,1),\sigma'=(3,2,1)$. Observe that
\begin{gather*}
\NW(\a_\sigma,\b_{\sigma'}) = \begin{bmatrix} 0.3 & 0 & 0 \\ 0.1 & 0.1 & 0 \\ 0& 0 & 0.5\end{bmatrix} \in \CouplingsD(\a_\sigma,\b_{\sigma'}),\\
\NW_{\sigma^{-1}\sigma'^{-1}}(\a_\sigma,\b_{\sigma'})= \begin{bmatrix} 0 & 0.1 & 0.1 \\ 0.5 & 0 & 0 \\ 0& 0 & 0.3\end{bmatrix}\in \CouplingsD(\a,\b).
\end{gather*}

Let $\mathcal{N}(\a,\b)$ be the set of all NW corner solutions that can be produced this way:
$$\mathcal{N}(\a,\b)\eqdef\{ \NW_{\sigma^{-1}\sigma'^{-1}}(r_\sigma,c_{\sigma'}), \sigma,\sigma'\in S_d\}.$$
All NW corner solutions have by construction up to $n+m-1$ nonzero elements. The NW corner rule produces a table which is by construction unique for $\a_\sigma$ and $\b_\sigma'$, but there is an exponential number of pairs or row/column permutations $(\sigma,\sigma')$ that may yield the same table~\citep[p. 2]{stougie2002polynomial}. $\mathcal{N}(\a,\b)$ forms a subset of (usually strictly included in) the set of extreme points of $\CouplingsD(\a,\b)$~\citep[Cor. 8.1.4]{brualdi2006combinatorial}.

\section{A Heuristic Description of the Network Simplex}\label{s-networksimplex}

Consider a feasible matrix $\P$ whose graph $G(\P)=(V\cup V',S(\P))$ has no cycles. $\P$ has therefore no more than $n+m-1$ nonzero entries and is a vertex of $\CouplingsD(\a,\b)$ by Proposition~\ref{prop-extremal}. 
Following Proposition~\ref{prop-primal-dual-optimality}, it is therefore sufficient to obtain a dual solution $(\fD,\gD)$ which is feasible (\ie $\C-\fD\oplus\gD$ has nonnegative entries) and complementary to $\P$ (pairs of indices $(i,j')$ in $S(\P)$ are such that $\C_{i,j}=\fD_i+\gD_j$), to prove that $\P$ is optimal. 
The network simplex relies on two simple principles: to each feasible primal solution $\P$ one can associate a complementary pair $(\fD,\gD)$. 
If that pair is feasible, then we have reached optimality. If not, one can consider a modification of $\P$ that remains feasible and whose complementary pair $(\fD,\gD)$ is modified so that it becomes closer to feasibility.

\subsection{Obtaining a Dual Pair Complementary to $\P$}\label{subsec-obtaining}
The simplex proceeds by associating first to any extremal solution $\P$ a pair of $(\fD,\gD)$ complementary dual variables. This is simply carried out by finding two vectors $\fD$ and $\gD$ such that for any $(i,j')$ in $S(\P)$, $\fD_i+\gD_j$ is equal to $\C_{i,j}$. Note that this, in itself, does not guarantee that $(\fD,\gD)$ is feasible.

\begin{figure}[h!]
	\centering
	\includegraphics[width=\linewidth]{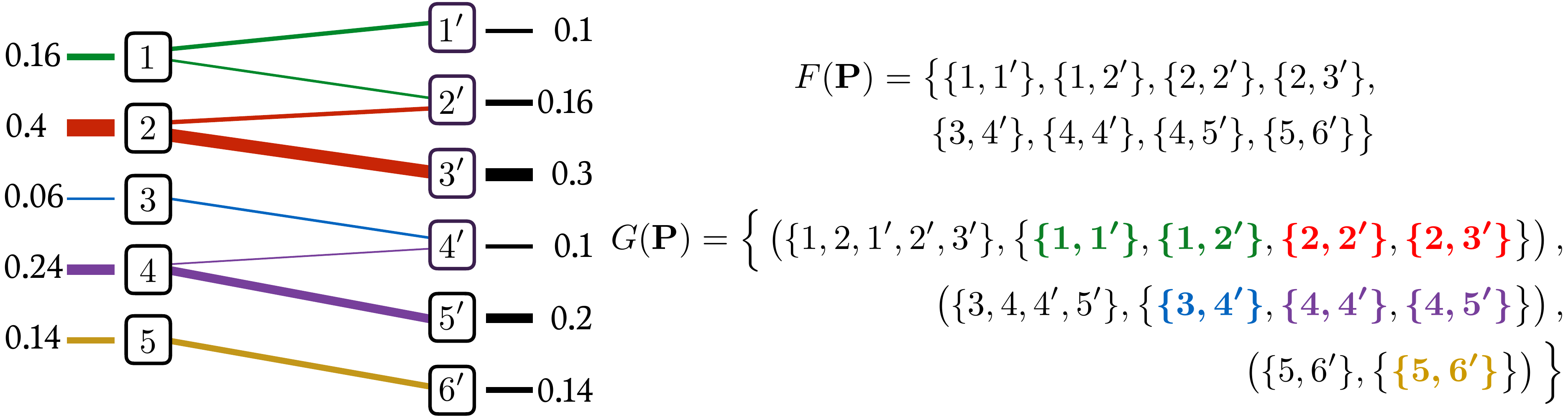}
	\caption{\label{fig-f_of_P}
	A feasible transport $\P$ and its corresponding set of edges $S(\P)$ and graph $G(\P)$. As can be seen, the graph $G(\P)=(\{1,\dots,5,1',\dots,6'\},S(\P))$ is a forest, meaning that it can be expressed as the union of tree graphs, three in this case.}
\end{figure}

Let $s$ be the cardinality of $S(\P)$. Because $\P$ is extremal, $s\leq n+m-1$. Because $G(\P)$ has no cycles, $G(\P)$ is either a tree or a forest (a union of trees), as illustrated in Figure~\ref{fig-f_of_P}. Aiming for a pair $(\fD,\gD)$ that is complementary to $\P$, we consider the following set of $s$ linear equality constraints on $n+m$ variables:
\begin{equation}\label{eq-dual-variables}
	\begin{array}{ccc}
\fD_{i_1}+\gD_{j_1}&=&\C_{i_1,j_1}\\
\fD_{i_2}+\gD_{j_1}&=&\C_{i_2,j_1}\\
\vdots & = & \vdots\\
\fD_{i_s}+\gD_{j_s}&=&\C_{i_s,j_s},\\
\end{array}
\end{equation}
where the elements of $S(\P)$ are enumerated as $(i_1,j_1'),\dots,(i_s,j_s')$.

Since $s\leq n+m-1 < n+m$, the linear system~\eqref{eq-dual-variables} above is always undetermined. This degeneracy can be interpreted in part because the parameterization of $\CouplingsD(\a,\b)$ with $n+m$ constraints results in $n+m$ dual variables. A more careful formulation, outlined in Remark~\ref{rem-transportation-polytope}, would have resulted in an equivalent formulation with only $n+m-1$ constraints and therefore $n+m-1$ dual variables. However, $s$ can also be strictly smaller than $n+m-1$: This happens when $G(\P)$ is the disjoint union of two or more trees. For instance, there are $5+6=11$ dual variables (one for each node) in Figure~\ref{fig-f_of_P}, but only $8$ edges among these $11$ nodes, namely $8$ linear equations to define $(\fD,\gD)$. Therefore, there will be as many undetermined dual variables under that setting as there will be connected components in $G(\P)$. 

Consider a tree among those listed in $G(\P)$. Suppose that tree has $k$ nodes $i_1,\dots,i_k$ among source nodes and $l$ nodes $j_1',\dots,j_{l}'$ among target nodes, resulting in $r\eqdef k+l$, and $r-1$ edges, corresponding to $k$ variables in $\fD$ and $l$ variables in $\gD$, linked with $r-1$ linear equations. To lift an indetermination, we can choose arbitrarily a root node in that tree and assign the value $0$ to its corresponding dual variable. From there, we can traverse the tree using a breadth-first or depth-first search to obtain a sequence of simple variable assignments that determines the values of all other dual variables in that tree, as illustrated in Figure~\ref{fig-tree-linear-solve}. That procedure can then be repeated for all trees in the graph of $\P$ to obtain a pair of dual variables $(\fD,\gD)$ that is complementary to $\P$.

\begin{figure}[h!]
	\centering
	\includegraphics[width=\linewidth]{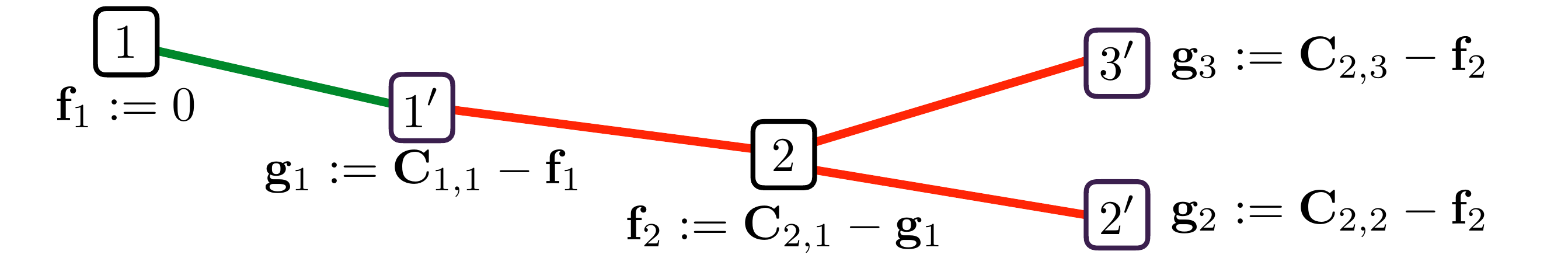}
	\caption{\label{fig-tree-linear-solve}
 The five dual variables $\fD_1,\fD_2,\gD_1,\gD_2,\gD_3$ corresponding to the five nodes appearing in the first tree of the graph $G(\P)$ illustrated in Figure \ref{fig-f_of_P} are linked through four linear equations that involve corresponding entries in the cost matrix $\C$. Because that system is degenerate, we choose a root in that tree (node 1 in this example) and set its corresponding variable to $0$ and proceed then by traversing the tree (either breadth-first or depth-first) from the root to obtain iteratively the values of the four remaining dual variables.}
\end{figure}

\subsection{Network Simplex Update}
The dual pair $(\fD,\gD)$ obtained previously might be feasible, in the sense that for all $i,j$ we have $\fD_i+\gD_j\leq \C_{i,j}$, in which case we have reached the optimum by Proposition~\ref{prop-primal-dual-optimality}. When that is not the case, namely when there exists $i,j$ such that $\fD_i+\gD_j> \C_{i,j}$, the network simplex algorithm kicks in. We first initialize a graph $G$ to be equal to the graph $G(\P)$ corresponding to the feasible solution $\P$ and add the violating edge $(i,j')$ to $G$. Two cases can then arise:
\begin{enumerate}[label={(\alph*)}]
	\item $G$ is (still) a forest, which can happen if $(i,j')$ links two existing subtrees. The approach outlined in~\S\ref{subsec-obtaining} can be used on graph $G$ to recover a new complementary dual vector $(\fD,\gD)$. Note that this addition simply removes an indetermination among the $n+m$ dual variables and does not result in any change in the primal variable $\P$. That update is usually called degenerate in the sense that $(i,j')$ has now entered graph $G$ although $\P_{i,j}$ remains $0$. $G(\P)$ is, however, contained in $G$.
	
	\item $G$ now has a cycle. In that case, we need to remove an edge in $G$ to ensure that $G$ is still a forest, yet also modify $\P$ so that $\P$ is feasible and $G(\P)$ remains included in $G$. These operations can all be carried out by increasing the value of $\P_{i,j}$ and modifying the other entries of $\P$ appearing in the detected cycle, in a manner very similar to the one we used to prove Proposition~\ref{prop-extremal}. To be more precise, let us write that cycle $(i_1,j_1'), (j_1',i_2), (i_2,j_2'),\dots, (i_l,j_l'),(j_l',i_{l+1})$ with the convention that $i_1=i_{l+1}=i$ to ensure that the path is a cycle that starts and ends at $i$, whereas $j_1=j$, to highlight the fact that the cycle starts with the added edge $\{i,j\}$, going in the right direction. Increase now the flow of all ``positive'' edges $(i_k,j_k')$ (for $k\leq l$), and decrease that of ``negative'' edges $(j_k',i_{k+1})$ (for $k\leq l$), to obtain an updated primal solution $\tilde{\P}$, equal to $\P$ for all but the following entries:
$$\forall k\leq l,\quad \tilde{\P}_{i_k,j_k}:= \P_{i_k,j_k}+\theta;\quad \tilde{\P}_{i_{k+1},j_k}:= \P_{i_{k+1},j_k}-\theta.$$
Here, $\theta$ is the largest possible increase at index $i,j$ using that cycle. The value of $\theta$ is controlled by the smallest flow negatively impacted by the cycle, namely $\min_{k} \P_{i_{k+1},j_k}$. That update is illustrated in Figure~\ref{fig-simplex-update}. Let $k^\star$ be an index that achieves that minimum. We then close the update by removing $(i_{k^\star+1},j_{k^\star})$ from $G$, to compute new dual variables $(\fD,\gD)$ using the approach outlined in~\S\ref{subsec-obtaining}.  
\end{enumerate}
	
\subsection{Improvement of the Primal Solution} Although this was not necessarily our initial motivation, one can show that the manipulation above can only improve the cost of $\P$. If the added edge has not created a cycle, case {(a)} above, the primal solution remains unchanged. When a cycle is created, case {(b)}, $\P$ is updated to $\tilde{\P}$, and the following equality holds:
\eq{
\dotp{\tilde{\P}}{\C}-\dotp{\P}{\C}= \theta \left(\;\sum_{k=1}^{l} \C_{i_k,j_k} - \sum_{k=1}^{l}\C_{i_{k+1},j_k}\right).
}
We now use the dual vectors $(\fD,\gD)$ computed at the end of the previous iteration. They are such that $\f_{i_k}+\g_{i_k}=\C_{i_k,j_k}$ and $\f_{i_{k+1}}+\g_{i_k}=\C_{i_{k+1},j_k}$ for \emph{all} edges initially in $G$, resulting in the identity
\eq{
\begin{aligned}\sum_{k=1}^{l} \C_{i_k,j_k} - \sum_{k=1}^{l}\C_{i_{k+1},j_k} &= \C_{i,j} + \sum_{k=2}^{l} \fD_{i_k}+\gD_{j_k} - \sum_{k=1}^{l} \fD_{i_{k+1}}+\gD_{j_k}\\
	&= \C_{i,j} - (\fD_{i} +\gD_{j}).
\end{aligned}
}
That term is, by definition, negative, since $i,j$ were chosen because $C_{i,j} < \fD_{i} -\gD_{j}$. Therefore, if $\theta>0$, we have that \eq{\dotp{\tilde{\P}}{\C}=\dotp{\P}{\C}+\theta\left(\C_{i,j}-(\fD_{i} -\fD_{g})\right)<\dotp{\P}{\C}.} 
If $\theta=0$, which can happen if $G$ and $G(\P)$ differ, the graph $G$ is simply changed, but $\P$ is not.

\begin{figure}[h!]
	\centering
	\includegraphics[width=\linewidth]{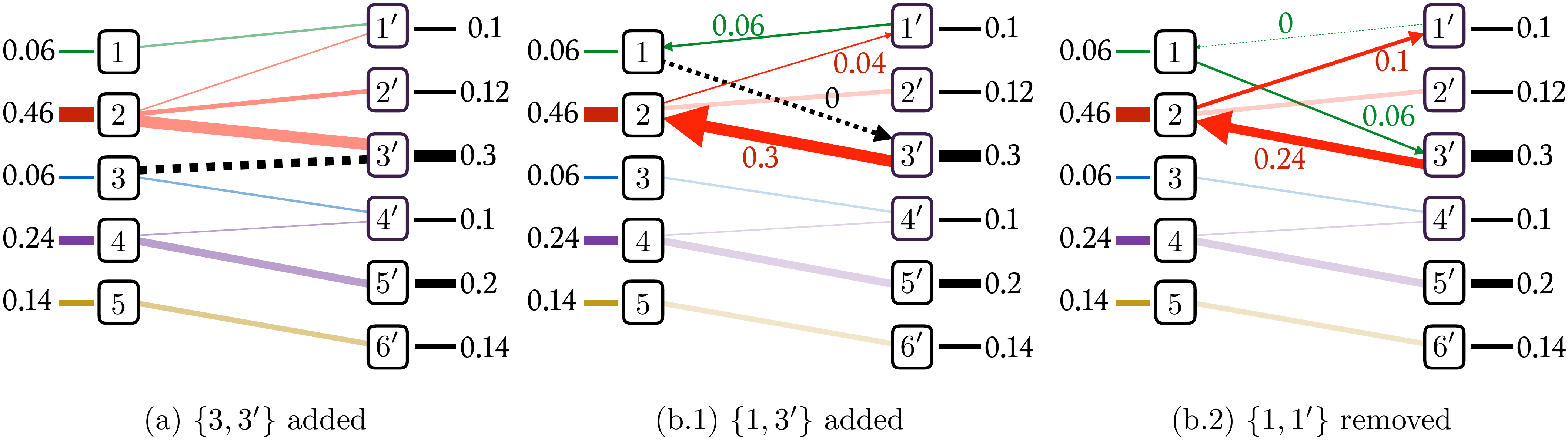}
	\caption{\label{fig-simplex-update}
Adding an edge $\{i,j\}$ to the graph $G(\P)$ can result in either (a) the graph remains a forest after this addition, in which case $\fD,\gD$ can be recomputed following the approach outlined in~\S\ref{subsec-obtaining}; (b.1) the addition of that edge creates a cycle, from which we can define a directed path; (b.2) the path can be used to increase the value of $\P_{i,j}$ and propagate that change along the cycle to maintain the flow feasibility constraints, until the flow of one of the edges that is negatively impacted by the cycle is decreased to $0$. This removes the cycle and updates $\P$.}
\end{figure}

The network simplex algorithm can therefore be summarized as follows: Initialize the algorithm with an extremal solution $\P$, given for instance by the NW corner rule as covered in~\S\ref{subsec-northwest}. Initialize the graph $G$ with $G(\P)$. Compute a pair of dual variables $(\fD,\gD)$ that are complementary to $\P$ using the linear system solve using the tree structure(s) in $G$ as described in~\S\ref{subsec-obtaining}. {(i)} Look for a violating pair of indices to the constraint $\C- \fD\oplus \gD\geq 0$; if none, $\P$ is optimal and stop. If there is a violating pair $(i,j')$, {(ii)} add the edge $(i,j')$ to $G$. If $G$ still has no cycles, update $(\fD,\gD)$ accordingly; if there is a cycle, direct it making sure $(i,j')$ is labeled as positive, and remove a negative edge in that cycle with the smallest flow value, updating $\P,G$ as illustrated in Figure~\ref{fig-simplex-update}, then build a complementary pair $\fD,\gD$ accordingly; return to {(i)}. Some of the operations above require graph operations (cycle detection, tree traversals) which can be implemented efficiently in this context, as described in (\cite[\S5]{bertsekas1998network}).

\citet{Orlin1997} was the first to prove the polynomial time complexity of the network simplex.~\citet{Tarjan1997} provided shortly after an improved bound in $O\left(\,(n+m)nm \log(n+m)\log\left((n+m)\|\C\|_\infty\right)\,\right)$ which relies on more efficient data structures to help select pivoting edges.


\section{Dual Ascent Methods}\label{s-dual-ascent}

Dual ascent methods precede the network simplex by a few decades, since they can be traced back to work by~\citet{borchardt1865investigando} and later K\"onig and Egerv\'ary, as recounted by~\citet{Kuhn1955}. The Hungarian algorithm is the best known algorithm in that family, and it can work only in the particular case when $\a$ and $\b$ are equal and are both uniform, namely $\a=\b=\ones_n/n$. We provide in what follows a concise description of the more general family of dual ascent methods. This requires the knowledge of the maximum flow problem (\cite[\S7.5]{bertsimas1997introduction}). By contrast to the network simplex, presented above in the primal, dual ascent methods maintain at each iteration dual feasible solutions whose objective is progressively improved by adding a sparse vector to $\fD$ and $\gD$. Our presentation is mostly derived from that of (\cite[\S7.7]{bertsimas1997introduction}) and starts with the following definition.

\begin{defn} For $S\subset\range{n},S'\subset\range{m}'\eqdef \{1',\dots,m'\}$ we write $\ones_S$ for the vector in $\RR^n$ of zeros except for ones at the indices enumerated in $S$, and likewise for the vector $\ones_{S'}$ in $\RR^m$ with indices in $S'$. 
\end{defn}	

In what follows, $(\fD,\gD)$ is a feasible dual pair in $\PotentialsD(\C)$. Recall that this simply means that for all pairs $(i,j')\in\range{n}\times\range{m}'$, $\fD_i+\gD_j\leq\C_{ij}$. We say that $(i,j')$ is a \emph{balanced} pair (or edge) if $\fD_i+\gD_j=\C_{ij}$ and \emph{inactive} otherwise, namely if $\fD_i+\gD_j<\C_{ij}$. With this convention, we start with a simple result describing how a feasible dual pair $(\fD,\gD)$ can be perturbed using sparse vectors indexed by sets $S$ and $S'$ and still remain feasible.
	
\begin{prop}\label{prop-feasibility-dual-pair}
$(\tilde{\fD},\tilde{\gD})\eqdef(\fD,\gD)+\varepsilon(\ones_S,-\ones_{S'})$ is dual feasible for a small enough $\varepsilon>0$ if for all $i\in S$, the fact that $(i,j')$ is balanced implies that $j'\in S'$.
\end{prop}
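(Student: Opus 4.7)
The plan is to verify the inequality $\tilde{\fD}_i + \tilde{\gD}_j \leq \C_{ij}$ for every pair $(i,j') \in \range{n} \times \range{m}'$ by a simple case analysis on whether $i \in S$ and whether $j' \in S'$. In two of the four cases the perturbation leaves $\fD_i + \gD_j$ unchanged (either both $i \in S, j' \in S'$, where the $+\varepsilon$ and $-\varepsilon$ cancel, or neither is in the selected sets), so feasibility is inherited from $(\fD,\gD) \in \PotentialsD(\C)$. In the case $i \notin S, j' \in S'$ the value actually decreases by $\varepsilon$, so feasibility is preserved for any $\varepsilon > 0$.

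The only delicate case is $i \in S$ and $j' \notin S'$, where $\tilde{\fD}_i + \tilde{\gD}_j = \fD_i + \gD_j + \varepsilon$. Here the hypothesis enters through its contrapositive: since $i \in S$ but $j' \notin S'$, the edge $(i,j')$ cannot be balanced, hence the slack $\delta_{ij} \eqdef \C_{ij} - \fD_i - \gD_j$ is strictly positive. Thus it suffices to choose
\[
0 < \varepsilon \leq \min\enscond{ \C_{ij} - \fD_i - \gD_j }{ i \in S,\ j' \in \range{m}' \setminus S' },
\]
which is a positive quantity because it is the minimum of a nonempty finite collection of strictly positive numbers (the case where the set is empty being trivial, any $\varepsilon$ works).

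I expect no real obstacle here; the argument is purely a bookkeeping verification on four disjoint cases, and the key observation is just that the hypothesis rules out the only problematic configuration from having zero slack. I will organize the proof as a short enumeration of the four cases, concluding with the explicit formula for the admissible range of $\varepsilon$.
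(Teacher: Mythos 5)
Your proof is correct and takes essentially the same approach as the paper: both reduce to showing that the only potentially dangerous case, $i\in S$ with $j'\notin S'$, has strictly positive slack, which is exactly what the contrapositive of the hypothesis guarantees. The only cosmetic difference is organizational — you enumerate four cases on membership in $(S,S')$, while the paper first fixes $i\in S$ and splits into balanced vs.\ inactive edges — and your explicit admissible range for $\varepsilon$ is in fact slightly tighter than the paper's (which minimizes over all inactive edges at $i\in S$, not just those with $j'\notin S'$), but both are valid.
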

\begin{proof}
For any $i\in S$, consider the set $\mathcal{I}_{i}$ of all $j'\in\range{m}'$ such that $(i,j')$ is inactive, namely such that $\fD_i+\gD_j<\C_{ij}$. Define $\varepsilon_i \eqdef \min_{j\in I_{i}} \C_{i,j}-\fD_i-\gD_j$, the smallest margin by which $\fD_i$ can be increased without violating the constraints corresponding to $j'\in \mathcal{I}_i$. Indeed, one has that if $\varepsilon\leq \varepsilon_i$ then $\tilde{\fD}_i+\tilde{\gD}_j<\C_{i,j}$ for any $j'\in \mathcal{I}_i$. Consider now the set $\mathcal{B}_i$ of balanced edges associated with $i$. Note that $\mathcal{B}_i=\range{m}'\setminus \mathcal{I}_i$. The assumption above is that $j'\in\mathcal{B}_i\Rightarrow j'\in S'$. Therefore, one has that for $j'\in\mathcal{B}_i$, $\tilde{\fD}_i+\tilde{\gD}_j=\fD_i+\gD_j=\C_{i,j}$. As a consequence, the inequality $\tilde{\fD}_i+\tilde{\gD}_j\leq \C_{i,j}$ is ensured for any $j\in \range{m}'$. Choosing now an increase $\varepsilon$ smaller than the smallest possible allowed, namely $\min_{i\in S} \varepsilon_i$, we recover that $(\tilde{\fD},\tilde{\gD})$ is dual feasible.
\end{proof}

The main motivation behind the iteration of the network simplex presented in \S\ref{subsec-obtaining} is to obtain, starting from a feasible primal solution $\P$, a complementary feasible dual pair $(\fD,\gD)$. To reach that goal, $\P$ is progressively modified such that its complementary dual pair reaches dual feasibility. A symmetric approach, starting from a feasible dual variable to obtain a feasible primal $\P$, motivates dual ascent methods. The proposition below is the main engine of dual ascent methods in the sense that it guarantees (constructively) the existence of an ascent direction for $(\fD,\gD)$ that maintains feasibility. That direction is built, similarly to the network simplex, by designing a candidate primal solution $\P$ whose infeasibility guides an update for $(\fD,\gD)$.

\begin{prop}\label{prop-dual-ascent-dir} Either $(\fD,\gD)$ is optimal for Problem~\eqref{eq-dual-discrete-split} or there exists $S\subset\range{n},S'\subset\range{m}'$ such that $(\tilde{\fD},\tilde{\gD})\eqdef(\fD,\gD)+\varepsilon(\ones_S,-\ones_{S'})$ is feasible for a small enough $\varepsilon>0$ and has a strictly better objective.
\end{prop}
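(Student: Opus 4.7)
The plan is to pair $(\fD,\gD)$ with an auxiliary maximum-flow problem built on the graph of \emph{balanced} edges (those $(i,j')$ with $\fD_i+\gD_j=\C_{ij}$), and then argue that either (i) a max-flow of value one exists, in which case it yields a primal-feasible transport plan complementary to $(\fD,\gD)$, so by Proposition~\ref{prop-primal-dual-optimality} the pair is already optimal; or (ii) the max-flow value is strictly less than one, in which case the associated min-cut delivers the sets $S,S'$ whose existence we want to prove.

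Concretely, I would introduce the capacitated network $\mathcal{N}$ with source $s$, sink $t$, an arc $s\to i$ of capacity $\a_i$ for each $i\in\range{n}$, an arc $j'\to t$ of capacity $\b_j$ for each $j'\in\range{m}'$, and an uncapacitated arc $i\to j'$ for every balanced pair $(i,j')$. Since the total source capacity equals $\sum_i \a_i=1$, any $s$-$t$ flow $F$ satisfies $F\leq 1$. If $F=1$, every source and sink arc is saturated, so the restriction of the flow to the middle layer defines a matrix $\P\in\CouplingsD(\a,\b)$ supported on balanced edges only; then $\P$ and $(\fD,\gD)$ are complementary in the sense of Definition~\ref{def-complementary}, and Proposition~\ref{prop-primal-dual-optimality} gives that $(\fD,\gD)$ is dual-optimal.

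If instead $F<1$, by max-flow/min-cut there is a partition $(U,\bar U)$ with $s\in U$, $t\in\bar U$ whose cut capacity equals $F$ and is in particular finite. Finiteness forces: whenever $i\in U\cap\range{n}$ and $(i,j')$ is balanced, then $j'\in U\cap\range{m}'$. Set $S\eqdef U\cap\range{n}$ and $S'\eqdef U\cap\range{m}'$. This is precisely the hypothesis of Proposition~\ref{prop-feasibility-dual-pair}, so $(\tilde{\fD},\tilde{\gD})\eqdef(\fD,\gD)+\varepsilon(\ones_S,-\ones_{S'})$ stays in $\PotentialsD(\C)$ for $\varepsilon>0$ small. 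The cut capacity equals $\sum_{i\notin S}\a_i+\sum_{j'\in S'}\b_j$, and being $<1=\sum_i\a_i$ this rearranges into
\[
\dotp{\ones_S}{\a}-\dotp{\ones_{S'}}{\b}\;=\;\sum_{i\in S}\a_i-\sum_{j'\in S'}\b_j\;>\;0,
\]
so the dual objective increases by $\varepsilon\,(\dotp{\ones_S}{\a}-\dotp{\ones_{S'}}{\b})>0$, as required.

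The main obstacle I anticipate is the first case: one must argue cleanly that a max-flow of value $1$ in $\mathcal{N}$ actually produces an element of $\CouplingsD(\a,\b)$ (both marginal constraints saturated, entries off the balanced set vanishing). Because $\a,\b$ need not be rational one cannot lean on the integrality theorem for max-flow; instead one uses the fact that capacity-feasible flows form a compact convex set and that any flow of value $1$ must saturate every capacitated arc. An alternative route, avoiding network-flow machinery altogether, is to run a Hall-type expansion argument: enumerate subsets $S\subset\range{n}$ together with their balanced-neighborhoods $S'=N(S)$, and show that either some $S$ violates $\sum_{i\in S}\a_i\leq\sum_{j'\in N(S)}\b_j$ (yielding the ascent direction), or else the reverse inequality holds uniformly and one can assemble a feasible $\P$ supported on balanced edges by a standard set-cover/flow construction. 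The max-flow/min-cut framing is cleaner, however, because both branches of the dichotomy drop out of a single duality statement.
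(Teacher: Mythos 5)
Your construction is the same as the paper's: you build the identical capacitated bipartite network on the set of balanced edges with source/sink capacities $\a_i,\b_j$, you dispose of the case of unit throughput exactly as the paper does via complementary slackness (Proposition~\ref{prop-primal-dual-optimality}), and in the other case you appeal to Proposition~\ref{prop-feasibility-dual-pair} for feasibility of the perturbed pair. The one place you diverge is the argument that $\dotp{\ones_S}{\a}-\dotp{\ones_{S'}}{\b}>0$: the paper runs the Ford--Fulkerson labeling algorithm, takes $S,S'$ to be the labeled nodes, and then proves positivity through a flow-conservation bookkeeping step ($A=C$, $A<\transp{\ones_S}\a$, $C=\transp{\ones_{S'}}\b$), whereas you invoke max-flow/min-cut directly, take $S,S'$ from the source side of a minimum cut, read off the cut capacity $\sum_{i\notin S}\a_i+\sum_{j'\in S'}\b_j<1$, and rearrange. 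These are two faces of the same duality (the labeled set \emph{is} the source side of a min cut), and your version is, if anything, a bit cleaner because it avoids reasoning about individual arc saturation. Your worry about irrational capacities is legitimate as a concern about algorithmic termination, but it does not affect the proof: the max-flow/min-cut theorem is an LP-duality fact that holds for real capacities, and the existence of a maximum flow follows from compactness of the feasible-flow polytope; no appeal to integrality is needed, and the paper's appeal to Ford--Fulkerson should likewise be read as invoking existence rather than a terminating algorithm. Your closing Hall-type alternative is a reasonable observation but is not pursued by the paper and is not needed here.
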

\begin{proof}
We consider first a complementary primal variable $\P$ to $(\fD,\gD)$. To that effect, let $\mathcal{B}$ be the set of balanced edges, namely all pairs $(i,j')\in\range{n}\times\range{m}'$ such that $\fD_i+\gD_j=\C_{i,j}$, and form the bipartite graph whose vertices $\{1,\dots,n,1',\dots,m'\}$ are linked with edges in $\mathcal{B}$ only, complemented by a source node $s$ connected with \emph{capacitated} edges to all nodes $i\in\range{n}$ with respective capacities $\a_i$, and a terminal node $t$ also connected to all nodes $j'\in\range{m}'$ with edges of respective capacities $\b_j$, as seen in Figure~\ref{fig-dualascent}. The Ford--Fulkerson algorithm (\cite[p. 305]{bertsimas1997introduction}) can be used to compute a maximal flow $\mathbf{F}$ on that network, namely a family of $n+m+|\mathcal{B}|$ nonnegative values indexed by $(i,j')\in \mathcal{B}$ as $f_{si}\leq \a_i, f_{ij'}, f_{j't}\leq\b_j$ that obey flow constraints and such that $\sum_i f_{si}$ is maximal. If the throughput of that flow $\mathbf{F}$ is equal to $1$, then a feasible primal solution $\P$, complementary to $\fD,\gD$ by construction, can be extracted from $\mathbf{F}$ by defining $\P_{i,j}=f_{ij'}$ for $(i,j')\in \mathcal{B}$ and zero elsewhere, resulting in the optimality of $(\fD,\gD)$ and $\P$ by Proposition~\ref{prop-primal-dual-optimality}. If the throughput of $\mathbf{F}$ is strictly smaller than $1$, the labeling algorithm proceeds by labeling (identifying) those nodes reached iteratively from $s$ for which $\mathbf{F}$ does not saturate capacity constraints, as well as those nodes that contribute flow to any of the labeled nodes. Labeled nodes are stored in a nonempty set $Q$, which does not contain the terminal node $t$ per optimality of $\mathbf{F}$ (see \citealt[p. 308]{bertsimas1997introduction}, for a rigorous presentation of the algorithm). $Q$ can be split into two sets $S=Q\cap\range{n}$ and $S'=Q\cap\range{m}'$. Because we have assumed that the total throughput is strictly smaller than $1$, $S\ne\emptyset$. Note first that if $i\in S$ and $(i,j)$ is balanced, then $j'$ is necessarily in $S'$. Indeed, since all edges $(i,j')$ have infinite capacity by construction, the labeling algorithm will necessarily reach $j'$ if it includes $i$ in $S$. By Proposition~\ref{prop-feasibility-dual-pair}, there exists thus a small enough $\varepsilon$ to ensure the feasibility of $\tilde{\fD},\tilde{\gD}$. One still needs to prove that $\transp{\ones_S}{\a}-\transp{\ones_{S'}}{\b}>0$ to ensure that $(\tilde{\fD},\tilde{\gD})$ has a better objective than $(\fD,\gD)$. Let $\bar{S}=\range{n}\setminus S$ and $\bar{S'}=\range{m}'\setminus S'$ and define
$$A = \sum_{i\in S} f_{si}, \quad B = \sum_{i\in \bar{S}} f_{si}, \quad C = \sum_{j'\in S'} f_{j't}, \quad D = \sum_{j'\in \bar{S'}} f_{j't}.$$
The total maximal flow starts from $s$ and is therefore equal to $A+B$, but also arrives at $t$ and is therefore equal to $C+D$. Flow conservation constraints also impose that the very same flow is equal to $B+C$, therefore $A=C$. On the other hand, by definition of the labeling algorithm, we have for all $i$ in $S$ that $f_{si}<\a_i$, whereas $f_{j't}=\b_j$ for $j'\in \bar{S}'$ because $t$ cannot be in $S'$ by optimality of the considered flow. We therefore have $A<\transp{\ones_S}\a$ and $C=\transp{\ones_S'}\b$. Therefore $\transp{\ones_S}\a-\transp{\ones_S'}\b>A-C=0$.
\end{proof}

\begin{figure}[h!]
\centering
\includegraphics[width=\linewidth]{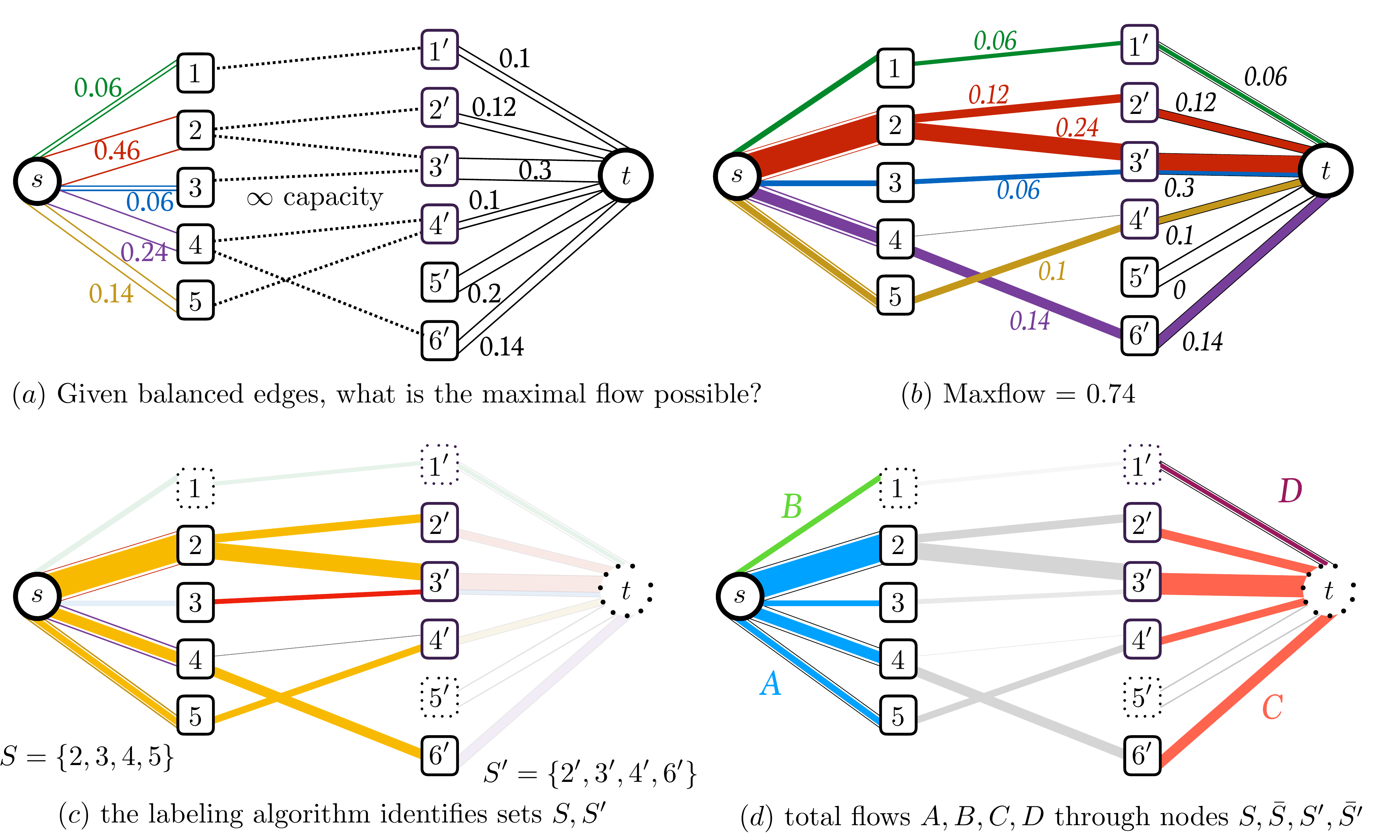}
\caption{\label{fig-dualascent}
Consider a transportation problem involving the marginals introduced first in Figure~\ref{fig-f_of_P}, with $n=5, m=6$. Given two feasible dual vectors $\fD,\gD$, we try to obtain the ``best'' flow matrix $P$ that is complementary to $(\fD,\gD)$. Recall that this means that $\P$ can only take positive values on those edges $(i,j')$ corresponding to indices for which $\fD_i+\gD_j=\C_{i,j}$, here represented with dotted lines in plot (a). The best flow that can be achieved with that graph structure can be formulated as a max-flow problem in a capacitated network, starting from an abstract source node $s$ connected to all nodes labeled $i\in\range{n}$, terminating at an abstract terminal node $t$ connected to all nodes labeled $j'$, where $j\in\range{m'}$, and such that the capacities of edge $(s,i),(j',t), i\in \range{n},j\in\range{m}$ are respectively $\a_i,\b_j$ and all others infinite. The Ford--Fulkerson algorithm (\cite[p. 305]{bertsimas1997introduction}) can be applied to compute such a max-flow, which, as represented in plot (b), only achieves $0.74$ units of mass out of $1$ needed to solve the problem. One of the subroutines used by max-flow algorithms, the labeling algorithm (\cite[p. 308]{bertsimas1997introduction}), can be used to identify nodes that receive an unsaturated flow from $s$ (and recursively, all of its successors), denoted by orange lines in plot (c). The labeling algorithm also adds by default nodes that send a positive flow to any labeled node, which is the criterion used to select node $3$, which contributes with a red line to $3'$. Labeled nodes can be grouped in sets $S,S'$ to identify nodes which can be better exploited to obtain a higher flow, by modifying $\fD,\gD$ to obtain a different graph. The proof involves partial sums of flows described in plot (d)}
\end{figure}

The dual ascent method proceeds by modifying any feasible solution $(\fD,\gD)$ by any vector generated by sets $S,S'$ that ensure feasibility and improve the objective. When the sets $S,S'$ are those given by construction in the proof of Proposition~\ref{prop-dual-ascent-dir}, and the steplength $\varepsilon$ is defined as in the proof of Proposition~\ref{prop-feasibility-dual-pair}, we recover a method known as the \emph{primal-dual} method. That method reduces to the Hungarian algorithm for matching problems. Dual ascent methods share similarities with the dual variant of the network simplex, yet they differ in at least two important aspects. Simplex-type methods always ensure that the current solution is an \emph{extreme point} of the feasible set, $\PotentialsD(\C)$ for the dual, whereas dual ascent as presented here does not make such an assumption, and can freely produce iterates that lie in the interior of the feasible set. Additionally, whereas the dual network simplex would proceed by modifying $(\fD,\gD)$ to produce a primal solution $\P$ that satisfies linear (marginal constraints) but only nonnegativity upon convergence, dual ascent builds instead a primal solution $\P$ that is always nonnegative but which does not necessarily satisfy marginal constraints.

\section{Auction Algorithm}\label{s-auction}
The auction algorithm was originally proposed by~\citet{bertsekas1981new} and later refined in~\citep{bertsekas1988dual}. Several economic interpretations of this algorithm have been proposed (see \emph{e.g.} \citet{bertsekas1992auction}). The algorithm can be adapted for arbitrary marginals, but we present it here in its formulation to solve optimal assignment problems.

\paragraph{Complementary slackness.} Notice that in the optimal assignment problem, the primal-dual conditions presented for the optimal transport problem become easier to formulate, because any extremal solution $\P$ is \todoK{there always exist a permutation solution, but the converse it not true} necessarily a permutation matrix $\P_\si$ for a given $\si$ (see Equation~\eqref{prop-primal-dual-optimality}). Given primal $\P_{\si^\star}$ and dual $\fD^\star,\gD^\star$ optimal solutions we necessarily have that
$$
\fD^\star_i + \gD^\star_{\si^\star_i} = \C_{i,{\si_i^\star}}.
$$
Recall also that, because of the principle of $\C$-transforms enunciated in~\S\ref{sec-c-transforms}, that one can choose $\fD^\star$ to be equal to $\gD^{\bar\C}$. We therefore have that
\eql{\label{eq-dual-conditions-assign-1}
\C_{i,\si^\star_i} - \gD^\star_{\si_i}  = \min_{j} \C_{i,j} - \gD^\star_{j}. 
}
On the contrary, it is easy to show that if there exists a vector $\gD$ and a permutation $\sigma$ such that 
\eql{\label{eq-dual-conditions-assign}
\C_{i,\si_i} - \gD_{\si_i}  = \min_{j} \C_{i,j} - \gD_{j} 
}
holds, then they are both optimal, in the sense that $\si$ is an optimal assignment and $\gD^{\bar\C},\gD$ is an optimal dual pair.

\paragraph{Partial assignments and $\varepsilon$-complementary slackness.} The goal of the auction algorithm is to modify iteratively a triplet $S,\xi,\gD$, where $S$ is a subset of $\range{n}$, $\xi$ a partial assignment vector, namely an injective map from $S$ to $\range{n}$, and $\gD$ a dual vector. The dual vector is meant to converge toward a solution satisfying an \emph{approximate} complementary slackness property~\eqref{eq-dual-conditions-assign}, whereas $S$ grows to cover $\range{n}$ as $\xi$ describes a permutation. The algorithm works by maintaining the three following properties after each iteration:
\begin{enumerate}[label={(\alph*)}]
	\item $\forall i\in S,\quad \C_{i,\xi_i} - \gD_{\xi_i}  \leq \varepsilon + \min_{j} \C_{i,j} - \gD_{j}$ ($\varepsilon$-CS).
	\item The size of $S$ can only increase at each iteration.
	\item There exists an index $i$ such that $\gD_i$ decreases by at least $\varepsilon$.
\end{enumerate}

\paragraph{Auction algorithm updates.} Given a point $j$ the auction algorithm uses not only the optimum appearing in the usual $\C$-transform but also a second best,
$$j^1_i \in \argmin_{j} \C_{i,j} - \gD_{j}, \quad j^2_i \in \argmin_{j\ne j^1_i} \C_{i,j} - \gD_{j}, $$
to define the following updates on $\gD$ for an index $i\notin S$, as well as on $S$ and $\xi$:
\begin{enumerate}\item \textbf{update $\gD$}: Remove to the $j^1_i$th entry of $\gD$ the sum of $\varepsilon$ and the difference between the second lowest and lowest adjusted cost $\{\C_{i,j}-\gD_j\}_j$,
\eql{
\begin{aligned}
	\gD_{j^1_{i}} &\leftarrow \gD_{j^1_{i}} - \underbrace{\left((\C_{i,j^2_{i}}-\gD_{j^2_{i}})-(\C_{i,j^1_{i}}-\gD_{j^1_{i}})+\varepsilon\right)}_{\geq \varepsilon >0}\label{eq-auction-update}\\
	&= \C_{i,j^1_{i}} - (\C_{i,j^2_{i}}-\gD_{j^2_{i}}) -\varepsilon.\\
\end{aligned}
}
\item \textbf{update $S$ and $\xi$}: If there exists an index $i'\in S$ such that $\xi_{i'}=j^1_{i}$, remove it by updating $S\leftarrow S\setminus \{i'\}$. Set $\xi_{i}=j^1_{i}$ and add $i$ to $S$, $S\leftarrow S \cup \{i\}.$
\end{enumerate}

\paragraph{Algorithmic properties.} The algorithm proceeds by starting from an empty set of assigned points $S=\emptyset$ with no assignment and empty partial assignment vector $\xi$, and $\gD=\zeros_n$, terminates when $S=\range{n}$, and loops through both steps above until it terminates. The fact that properties {(b)} and {(c)} are valid after each iteration is made obvious by the nature of the updates (it suffices to look at Equation~\eqref{eq-auction-update}). $\varepsilon$-complementary slackness is easy to satisfy at the first iteration since in that case $S=\emptyset$. The fact that iterations preserve that property is shown by the following proposition.
\begin{prop} The auction algorithm maintains $\varepsilon$-complementary slackness at each iteration. 
\end{prop}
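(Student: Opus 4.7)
The plan is to check directly that after one update step producing a new dual vector $\tilde\gD$ and new assignment data $(\tilde S,\tilde\xi)$, the inequality $\C_{i'',\tilde\xi_{i''}} - \tilde\gD_{\tilde\xi_{i''}} \leq \varepsilon + \min_j \C_{i'',j} - \tilde\gD_j$ holds for every $i''\in \tilde S$. We split this into two cases: the newly added index $i$ (with $\tilde\xi_i = j^1_i$) and the old indices of $S$ that survive the potential eviction of $i'$.

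First I would handle the newly added index $i$. Since only $\gD_{j^1_i}$ is modified, and $\tilde\gD_{j^1_i}$ has been set precisely to $\C_{i,j^1_i}-(\C_{i,j^2_i}-\gD_{j^2_i})-\varepsilon$, the adjusted cost at $j^1_i$ becomes
\eq{
\C_{i,j^1_i} - \tilde\gD_{j^1_i} = (\C_{i,j^2_i} - \gD_{j^2_i}) + \varepsilon,
}
while for every other $j$ the adjusted cost is unchanged. By definition of $j^2_i$, the quantity $\C_{i,j^2_i}-\gD_{j^2_i}$ is precisely $\min_{j\neq j^1_i}\C_{i,j}-\gD_j$, so the overall new minimum over $j$ of $\C_{i,j}-\tilde\gD_j$ is exactly $\C_{i,j^2_i}-\gD_{j^2_i}$ (the value at $j^1_i$ having increased by $\varepsilon$ to land just above it). Therefore $\C_{i,\tilde\xi_i}-\tilde\gD_{\tilde\xi_i} = \min_j \C_{i,j}-\tilde\gD_j + \varepsilon$, so $\varepsilon$-CS holds for $i$ (with equality).

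Next I would handle an arbitrary surviving index $i''\in \tilde S\setminus\{i\}$. The key observation, which is exactly the role of the eviction step in the update of $S,\xi$, is that no surviving $i''$ can have $\xi_{i''} = j^1_i$: such an $i'$, if it existed in $S$, has been removed. Consequently $\tilde\xi_{i''} = \xi_{i''} \neq j^1_i$, and hence $\tilde\gD_{\tilde\xi_{i''}} = \gD_{\xi_{i''}}$, so the left-hand side of the $\varepsilon$-CS inequality for $i''$ is unchanged by the update. On the right-hand side, since $\tilde\gD_{j^1_i} < \gD_{j^1_i}$ and all other coordinates are unchanged, one has $\C_{i'',j}-\tilde\gD_j \geq \C_{i'',j}-\gD_j$ for every $j$, and thus $\min_j \C_{i'',j}-\tilde\gD_j \geq \min_j \C_{i'',j}-\gD_j$. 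The previous-iteration $\varepsilon$-CS at $i''$ therefore transports to the new iteration unchanged.

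The main potential obstacle is simply making sure the monotonicity goes the right way: decreasing $\gD_{j^1_i}$ can only \emph{raise} the adjusted costs $\C_{i'',j}-\gD_j$, hence weakly raise their minimum, which is what makes the inequality persist; and it is the eviction of the previous owner of $j^1_i$ from $S$ (step~2 of the update) that prevents the left-hand side from moving in a bad direction. Once these two observations are laid out, both cases reduce to elementary inequalities, giving $\varepsilon$-CS at the end of the iteration.
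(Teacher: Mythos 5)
Your proof is correct and follows essentially the same two-case decomposition as the paper's: first verifying $\varepsilon$-CS at the newly assigned index $i$ (using the exact formula for the update of $\gD_{j^1_i}$, together with the monotonicity $-\gD\leq-\tilde\gD$), then at the surviving indices (using that only $\gD_{j^1_i}$ decreased, so minima of adjusted costs can only weakly increase). You are, if anything, slightly more careful than the paper: you explicitly note that the eviction of any $i'$ with $\xi_{i'}=j^1_i$ is precisely what guarantees $\tilde\gD_{\tilde\xi_{i''}}=\gD_{\xi_{i''}}$ for the surviving indices, a point the paper's proof uses implicitly without comment.
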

\begin{proof} 
	Let $\gD,\xi,S$ be the three variables at the beginning of a given iteration. We therefore assume that for any $i'\in S$ the relationship 
	$$\C_{i,\xi_{i'}} - \gD_{\xi_{i'}}  \leq \varepsilon + \min_{j} \C_{{i'},j} - \gD_{j} $$
holds. Consider now the particular $i\notin S$ considered in an iteration. Three updates happen: $\gD,\xi,S$ are updated to $\gD^{\text{n}},\xi^{\text{n}},S^{\text{n}}$ using indices $j^1_i$ and $j^2_i$. More precisely, $\gD^{\text{n}}$ is equal to $\gD$ except for element $j^1_i$, whose value is equal to
$$\gD_{j^1_{i}}^\text{n} = \gD_{j^1_{i}} - \left((\C_{i,j^2_{i}}-\gD_{j^2_{i}})-(\C_{i,j^1_{i}}-\gD_{j^1_{i}})\right)-\varepsilon \leq \gD_{j^1_{i}}-\varepsilon$$,
$\xi^{\text{n}}$ is equal to $\xi$ except for its $i$th element equal to $j^1_i$, and $S^{\text{n}}$ is equal to the union of $\{i\}$ with $S$ (with possibly one element removed). The update of $\gD^{\text{n}}$ can be rewritten
$$\gD^{\text{n}}_{j^1_i}= \C_{i,j^1_{i}} - (\C_{i,j^2_{i}}-\gD_{j^2_{i}}) -\varepsilon;$$
therefore we have
$$\C_{i,j^1_{i}} - \gD^{\text{n}}_{j^1_i}=  \varepsilon + (\C_{i,j^2_{i}}-\gD_{j^2_{i}}) = \varepsilon+\min_{j\ne j^1_i} (\C_{i,j}-\gD_{j}).$$
Since $-\gD\leq -\gD^{\text{n}}$ this implies that 
$$\C_{i,j^1_{i}} - \gD^{\text{n}}_{j^1_i} = \varepsilon + \min_{j\ne j^1_i} (\C_{i,j}-\gD_{j})\leq \varepsilon+\min_{j\ne j^1_i} (\C_{i,j}-\gD^{\text{n}}_{j}),$$
and since the inequality is also obviously true for $j=j^1_i$ we therefore obtain the $\varepsilon$-complementary slackness property for index $i$. For other indices $i'\ne i$, we have again that since $\gD^{\text{n}}\leq\gD$ the sequence of inequalities holds,
$$\C_{i,\xi^{\text{n}}_{i'}} - \gD^{\text{n}}_{\xi^{\text{n}}_{i'}} = \C_{i,\xi_{i'}} - \gD_{\xi_{i'}} \leq \varepsilon + \min_{j} \C_{{i'},j} - \gD_{j} \leq \varepsilon + \min_{j} \C_{{i'},j} - \gD^{n}_{j}. $$
\end{proof}

\begin{prop} The number of steps of the auction algorithm is at most $N=n\|\C\|_\infty/\varepsilon$.
\end{prop}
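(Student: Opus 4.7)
The strategy is to translate the iteration count into a statement about cumulative decrease of the dual vector $\gD$. Property (c) guarantees that each iteration drops some coordinate $\gD_{j^1_i}$ by at least $\varepsilon$, so after $N$ iterations the quantity $\sum_j (-\gD_j)$ has grown by at least $N\varepsilon$ relative to its initial value of $0$. Hence, if one can establish a uniform lower bound $\gD_j \geq -\|\C\|_\infty$ that holds for every $j$ throughout execution, then $N\varepsilon \leq n\|\C\|_\infty$ and the claim follows.

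The core of the proof is therefore a per-coordinate lower bound on $\gD$, and my plan starts with a structural invariant needed for it: the set $\xi(S)$ of currently assigned targets is monotonically non-decreasing. Indeed, the only mechanism by which an index leaves $\xi(S)$ is when an incumbent $i'$ is evicted from $S$; but this eviction happens precisely because a new $i$ is assigned the \emph{same} target $j^1_i$, which thus remains in $\xi(S)$ (now mapped from $i$). A direct corollary is that any $j \notin \xi(S)$ satisfies $\gD_j = 0$, since $\gD_j$ is modified only in iterations that concurrently insert $j$ into $\xi(S)$.

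Now consider an iteration at which $j = j^1_i$ is chosen and $\gD_j$ is decreased. Since $i \notin S$, we have $|S| < n$, so by the invariant above there is some unassigned index $j^{\mathrm{u}} \notin \xi(S)$ with $\gD_{j^{\mathrm{u}}} = 0$. Assume first $j^{\mathrm{u}} \neq j$; then by definition of $j^2_i$ as the second minimizer of $C_{i,\cdot} - \gD_\cdot$ one has
\begin{equation*}
C_{i,j^2_i} - \gD_{j^2_i} \;\leq\; C_{i,j^{\mathrm{u}}} - \gD_{j^{\mathrm{u}}} \;=\; C_{i,j^{\mathrm{u}}} \;\leq\; \|\C\|_\infty.
\end{equation*}
Substituting into the update rule~\eqref{eq-auction-update} yields
\begin{equation*}
\gD_j^{\mathrm{new}} \;=\; C_{i,j} - \bigl(C_{i,j^2_i} - \gD_{j^2_i}\bigr) - \varepsilon \;\geq\; -\|\C\|_\infty - \varepsilon,
\end{equation*}
which is the desired bound up to a lower-order $\varepsilon$ term. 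The corner case $j^{\mathrm{u}} = j$, which can occur only when $|S| = n-1$ and $j$ is the single remaining unassigned target, is the main obstacle; I would handle it by induction on the iteration count, using that $\gD_{j^2_i}$ (which refers to an index of $\xi(S)$) was bounded below by the same invariant at the previous iteration in which it was modified.

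Combining the two ingredients, the cumulative decrease $\sum_j(-\gD_j)$ is bounded above by $n\|\C\|_\infty$ while each iteration contributes at least $\varepsilon$ to this sum, yielding $N \leq n\|\C\|_\infty/\varepsilon$. The only real subtlety is the boundary case discussed above, together with the bookkeeping of additive $\varepsilon$ terms that arise naturally from the $\varepsilon$-CS relaxation; these are typically absorbed into the stated bound either by assuming integer-valued costs or by noting that the $O(\varepsilon)$ slack is dominated by the leading $n\|\C\|_\infty/\varepsilon$ term.
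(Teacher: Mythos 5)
Your argument is correct in outline and reaches the stated bound (up to the lower-order $+\varepsilon$ slack you already flag, and which the paper's own derivation also silently incurs), but it reaches the key lower bound $\gD_j\gtrsim-\|\C\|_\infty$ by a genuinely different route. Both proofs begin from the same structural fact that you carefully establish: while the algorithm runs there is some $j\notin\xi(S)$ whose price has never been touched, so $\gD_j=0$; your monotonicity argument for $\xi(S)$ is a clean explicit justification of a step the paper uses without comment. From there the paper simply invokes the $\varepsilon$-CS invariant from the preceding proposition: for the $i'\in S$ currently holding $j'$, one has $\C_{i',j'}-\gD_{j'}\leq\varepsilon+\min_{j''}(\C_{i',j''}-\gD_{j''})\leq\varepsilon+\C_{i',j}\leq\varepsilon+\|\C\|_\infty$, giving $\gD_{j'}\geq-\|\C\|_\infty-\varepsilon$ for every assigned $j'$ in one line. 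You instead re-derive essentially the same bound from the raw update formula~\eqref{eq-auction-update}, dominating the \emph{second}-best adjusted cost $\C_{i,j^2_i}-\gD_{j^2_i}$ by $\C_{i,j^{\mathrm u}}$. The two counting schemes — updates per coordinate in the paper versus your potential $\sum_j(-\gD_j)$ — are interchangeable, since each iteration decreases exactly one coordinate by at least $\varepsilon$.

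The one genuine weak spot is the corner case, and it appears precisely because your bound goes through the second minimum rather than the full minimum that $\varepsilon$-CS controls. The inductive fix you sketch does not close: plugging $\gD_{j^2_i}\geq-\|\C\|_\infty-\varepsilon$ into the update rule only yields $\gD_{j^1_i}^{\mathrm{new}}\geq-2\|\C\|_\infty-2\varepsilon$, which is strictly weaker than the induction hypothesis, so the invariant degrades rather than propagates. What actually saves you is a remark implicit in your own description: $j^{\mathrm u}=j^1_i$ forces the current iteration to be the terminal one (it grows $|S|$ from $n-1$ to $n$ without evicting anyone), so the degraded bound hits a single coordinate exactly once and never feeds back into later iterations, leaving the $O(n\|\C\|_\infty/\varepsilon)$ total unaffected. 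Alternatively, replacing your update-rule estimate by the $\varepsilon$-CS inequality you already proved removes both the corner case and the need for any induction, and recovers the paper's argument.
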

\begin{proof}Suppose that the algorithm has not stopped after $T>N$ steps. Then there exists an index $j$ which is not in the image of $\xi$, namely whose price coordinate $\gD_j$ has never been updated and is still $\gD_j=0$. In that case, there cannot exist an index $j'$ such that $\gD_{j'}$ was updated $n$ times with $n>\|\C\|_\infty/\varepsilon$. Indeed, if that were the case then for any index $i$
$$\gD_{j'}\leq-n\varepsilon<-\|\C\|_\infty\leq -\C_{i,j} = \gD_j-\C_{i,j},$$
which would result in, for all $i$,
$$\C_{i,j'}-\gD_{j'}> \C_{i,j}+ (\C_{i,j}-\gD_j),$$
which contradicts $\varepsilon$-CS. Therefore, since there cannot be more than $\|C\|_\infty/\varepsilon$ updates for each variable, the total number of iterations $T$ cannot be larger than $n\|\C\|_\infty/\varepsilon=N$.
\end{proof}

\begin{rem} Note that this result yields a naive number of operations of $N^3\|\C\|_\infty/\varepsilon$ for the algorithm to terminate. That complexity can be reduced to $N^3 \log\|\C\|_\infty$ when using a clever method known as $\varepsilon$-scaling, designed to decrease the value of $\varepsilon$ with each iteration (\cite[p. 264]{bertsekas1998network}).
\end{rem}

\begin{prop} The auction algorithm finds an assignment whose cost is $n\varepsilon$ suboptimal.
\end{prop}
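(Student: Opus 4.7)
The plan is to exploit the $\varepsilon$-complementary slackness property at termination together with the fact that $\xi$ becomes a full permutation. First I would observe that when the algorithm terminates, $S = \range{n}$ and the partial assignment vector $\xi$ is an injection from $\range{n}$ to $\range{n}$, hence a bijection (a permutation). The $\varepsilon$-CS invariant, which has been shown to be preserved at every iteration, therefore guarantees that for every $i \in \range{n}$,
\[
\C_{i,\xi_i} - \gD_{\xi_i} \leq \varepsilon + \min_{j} (\C_{i,j} - \gD_j).
\]

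Next I would compare $\xi$ to an arbitrary optimal assignment $\si^\star \in \Perm(n)$. Using the inequality above with $j = \si^\star_i$ on the right-hand side yields, for every $i$,
\[
\C_{i,\xi_i} - \gD_{\xi_i} \leq \varepsilon + \C_{i,\si^\star_i} - \gD_{\si^\star_i}.
\]
Summing these $n$ inequalities over $i \in \range{n}$ gives
\[
\sum_{i=1}^{n} \C_{i,\xi_i} - \sum_{i=1}^{n} \gD_{\xi_i} \leq n\varepsilon + \sum_{i=1}^{n} \C_{i,\si^\star_i} - \sum_{i=1}^{n} \gD_{\si^\star_i}.
\]

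The key (and essentially only) observation is that since both $\xi$ and $\si^\star$ are permutations of $\range{n}$, the sums $\sum_i \gD_{\xi_i}$ and $\sum_i \gD_{\si^\star_i}$ coincide, each being equal to $\sum_{j=1}^n \gD_j$. These two terms cancel, leaving
\[
\sum_{i=1}^{n} \C_{i,\xi_i} \leq \sum_{i=1}^{n} \C_{i,\si^\star_i} + n\varepsilon,
\]
which is precisely the claim that the cost of the assignment returned by the auction algorithm exceeds the optimal cost by at most $n\varepsilon$.

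There is no real obstacle here; the whole argument rests on the $\varepsilon$-CS invariant proved in the preceding proposition and on the cancellation of dual sums across permutations. The only subtlety to flag is that the algorithm indeed terminates with a \emph{complete} assignment (so $\xi$ is a genuine permutation and the cancellation applies), which follows from the termination condition $S = \range{n}$ combined with the finiteness result established in the previous proposition.
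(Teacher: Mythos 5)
Your proof is correct, and it is a mild but genuine simplification of the paper's argument. The paper first writes the optimal value $t^\star$ via strong duality of the assignment LP as $t^\star = \sum_i \min_j (\C_{i,j}-\gD^\star_j) + \sum_j \gD^\star_j$, then lower-bounds $t^\star$ by the dual objective at the computed prices $\gD$ (``suboptimality of $\gD$''), and finally applies $\varepsilon$-CS and the permutation cancellation to get $t^\star \geq \sum_i \C_{i,\xi_i} - n\varepsilon$. You avoid the detour through the dual objective entirely: instead of bounding by $\min_j(\C_{i,j}-\gD_j)$ and invoking duality, you specialize the $\varepsilon$-CS inequality at $j = \si^\star_i$ and compare $\xi$ directly to the optimal permutation $\si^\star$, after which the cancellation $\sum_i \gD_{\xi_i} = \sum_i \gD_{\si^\star_i}$ finishes the proof. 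What your route buys is that it uses no LP duality at all—only that $\xi$ is injective at termination (hence a bijection), the $\varepsilon$-CS invariant, and the choice $j = \si^\star_i$. The paper's route, by contrast, makes explicit that the computed $\gD$ is a near-optimal dual feasible point, which is structurally informative but not needed for the stated bound. Both reach the same inequality $\sum_i \C_{i,\xi_i} \leq \sum_i \C_{i,\si^\star_i} + n\varepsilon$; yours does so with fewer ingredients.
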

\begin{proof}
	Let $\sigma,\gD^\star$ be the primal and dual optimal solutions of the assignment problem of matrix $\C$, with optimum $$t^\star=\sum \C_{i,\sigma_i} = \sum_i \min_{j}\C_{i,j}-\gD^\star_j+\sum_j\gD^\star_j.$$ 
Let $\xi,\gD$ be the solutions output by the auction algorithm upon termination. The $\varepsilon$-CS conditions yield that for any $i\in S$, 
$$\min_{j} \C_{i,j} - \gD_{j}\geq \C_{i,\xi_i} - \gD_{\xi_i} -\varepsilon.$$
Therefore by simple suboptimality of $\gD$ we first have
	$$\begin{aligned}t^\star&\geq \sum_i \left(\min_{j} \C_{i,j}-\gD_{j}\right) + \sum_j \gD_j\\
&\geq 
	\sum_i -\varepsilon+\left(\C_{i,\xi_i}-\gD_{\xi_i}\right) + \sum_j \gD_j = -n\varepsilon + \sum_i \C_{i,\xi_j} \geq  - n\varepsilon + t^\star.\end{aligned},$$
where the second inequality comes from $\varepsilon$-CS, the next equality by cancellation of the sum of terms in $\gD_{\xi_i}$ and $\gD_{j}$, and the last inequality by the suboptimality of $\xi$ as a permutation.	
\end{proof}

The auction algorithm can therefore be regarded as an alternative way to use the machinery of $\C$-transforms. Next we explore another approach grounded on regularization, the so-called Sinkhorn algorithm, which also bears similarities with the auction algorithm as discussed in~\citep{schmitzer2016stabilized}.
%

Note finally that, on low-dimensional regular grids in Euclidean space, it is possible to couple these classical linear solvers with multiscale strategies, to obtain a significant speed-up~\citep{schmitzer2016sparse,oberman2015efficient}.

\chapter{Entropic Regularization of Optimal Transport}
\label{c-entropic}

This chapter introduces a family of numerical schemes to approximate solutions to Kantorovich formulation of optimal transport and its many generalizations. It operates by adding an entropic regularization penalty to the original problem. This regularization has several important advantages, which make it, when taken altogether, a very useful tool: the minimization of the regularized problem can be solved using a simple alternate minimization scheme; that scheme translates into iterations that are simple matrix-vector products, making them particularly suited to execution of GPU; for some applications, these matrix-vector products do not require storing an $n\times m$ cost matrix, but instead only require access to a kernel evaluation; in the case where a large group of measures share the same support, all of these matrix-vector products can be cast as matrix-matrix products with significant speedups; the resulting approximate distance is smooth with respect to input histogram weights and positions of the Diracs and can be differentiated using automatic differentiation.

\section{Entropic Regularization}

The discrete entropy of a coupling matrix is defined as
\eql{\label{eq-discr-entropy}
	\HD(\P) \eqdef -\sum_{i,j} \P_{i,j} (\log(\P_{i,j})-1), 
}
with an analogous definition for vectors, with the convention that $\HD(\a) = -\infty$ if one of the entries $\a_j$ is 0 or negative. The function $\HD$ is 1-strongly concave, because its Hessian is $\partial^2 \HD(P)=-\diag(1/\P_{i,j})$ and $\P_{i,j} \leq 1$. The idea of the entropic regularization of optimal transport is to use $-\HD$ as a regularizing function to obtain approximate solutions to the original transport problem~\eqref{eq-mk-discr}:
\eql{\label{eq-regularized-discr}
	\MKD_\C^\varepsilon(\a,\b) \eqdef 
	\umin{\P \in \CouplingsD(\a,\b)}
		\dotp{\P}{\C} - \varepsilon \HD(\P). 
} 
Since the objective is an $\varepsilon$-strongly convex function, Problem (\ref{eq-regularized-discr}) has a unique optimal solution. The idea to regularize the optimal transport problem by an entropic term can be traced back to modeling ideas in transportation theory~\citep{wilson1969use}: Actual traffic patterns in a network do not agree with those predicted by the solution of the optimal transport problem. Indeed, the former are more diffuse than the latter, which tend to rely on a few routes as a result of the sparsity of optimal couplings for~\eqref{eq-mk-discr}. To mitigate this sparsity, researchers in transportation proposed a model, called the ``gravity'' model~\citep{erlander1980optimal}, that is able to form a more ``blurred'' prediction of traffic given marginals and transportation costs.

Figure~\ref{fig-impact-eps} illustrates the effect of the entropy to regularize a linear program over the simplex $\simplex_3$ (which can thus be visualized as a triangle in two dimensions). Note how the entropy pushes the original LP solution away from the boundary of the triangle. The optimal $\P_\varepsilon$ progressively moves toward an ``entropic center'' of the triangle. This is further detailed in the proposition below. The convergence of the solution of that regularized problem toward an optimal solution of the original linear program has been studied by~\citet{CominettiAsympt}, with precise asymptotics.

\begin{figure}[h!]
\centering
\includegraphics[width=.7\linewidth]{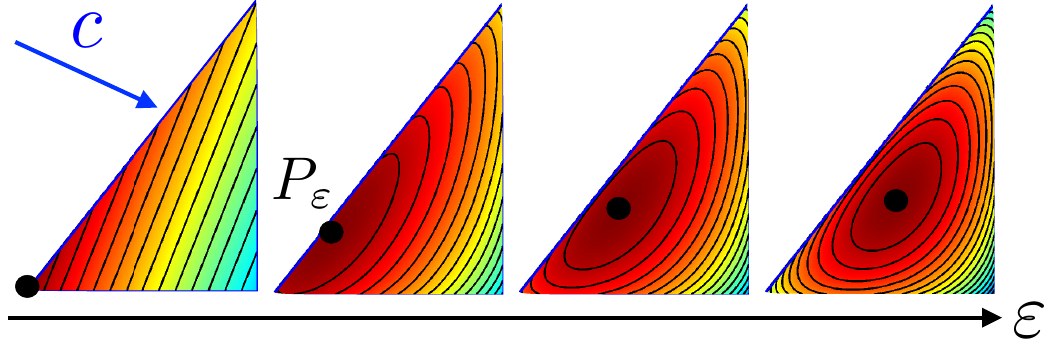}
\caption{\label{fig-impact-eps}
Impact of $\varepsilon$ on the optimization of a linear function on the simplex, solving $\P_\varepsilon = \argmin_{\P \in \simplex_3} \dotp{\C}{\P}-\varepsilon\HD(\P)$ for a varying $\varepsilon$. 
}
\end{figure}

\begin{prop}[Convergence with $\varepsilon$]\label{prop-convergence-eps}
The unique solution $\P_\varepsilon$ of~\eqref{eq-regularized-discr} converges to the optimal solution with maximal entropy within the set of all optimal solutions of the Kantorovich problem, namely
\eql{\label{eq-entropy-conv-1}
	\P_\varepsilon \overset{\varepsilon \rightarrow 0}{\longrightarrow}
	\uargmin{\P} \enscond{ -\HD(\P) }{
		\P \in \CouplingsD(\a,\b), \dotp{\P}{\C} = \MKD_\C(\a,\b),
	}
}
so that in particular
\eq{
	\MKD_\C^\varepsilon(\a,\b) \overset{\varepsilon \rightarrow 0}{\longrightarrow} \MKD_\C(\a,\b).
}
One also has
\eql{\label{eq-entropy-conv-2}
	\P_\varepsilon \overset{\varepsilon \rightarrow \infty}{\longrightarrow}
	\a \otimes \b = \a \transp{\b} = (\a_i \b_j)_{i,j}.
}
\end{prop}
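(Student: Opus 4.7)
My plan is to prove both asymptotic regimes by the same recipe: existence and uniqueness of $\P_\varepsilon$ come for free from the strict concavity of $\HD$ together with the compactness of the polytope $\CouplingsD(\a,\b)$; I then extract subsequential limits and identify them as the \emph{unique} minimizers of an appropriate limiting problem, from which convergence of the whole family follows by the standard ``unique cluster point in a compact set'' argument. The workhorse in both regimes is the continuity and boundedness of $\HD$ and of $\P \mapsto \dotp{\C}{\P}$ on the compact set $\CouplingsD(\a,\b)$.

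For the regime $\varepsilon \to 0$, I would fix a sequence $\varepsilon_k \to 0$ and a subsequential limit $\P_{\varepsilon_k} \to \P^\star \in \CouplingsD(\a,\b)$. The optimality of $\P_{\varepsilon_k}$ tested against an arbitrary $\P \in \CouplingsD(\a,\b)$ gives
\[
\dotp{\P_{\varepsilon_k}}{\C} - \varepsilon_k \HD(\P_{\varepsilon_k}) \leq \dotp{\P}{\C} - \varepsilon_k \HD(\P),
\]
and passing to the limit (using that $\HD$ is bounded on $\CouplingsD(\a,\b)$) yields $\dotp{\P^\star}{\C} \leq \dotp{\P}{\C}$, so $\P^\star$ is OT-optimal. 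To identify \emph{which} OT-optimum it is, I would apply the same inequality to any OT-optimum $\tilde\P$ and rearrange it as
\[
\HD(\tilde\P) - \HD(\P_{\varepsilon_k}) \leq \varepsilon_k^{-1}\bigl(\dotp{\tilde\P}{\C} - \dotp{\P_{\varepsilon_k}}{\C}\bigr) \leq 0,
\]
where the second inequality uses that $\tilde\P$ is OT-optimal and $\P_{\varepsilon_k}$ is feasible. Letting $k \to \infty$ delivers $\HD(\tilde\P) \leq \HD(\P^\star)$, so $\P^\star$ maximizes entropy on the set of OT-optima. Since $-\HD$ is strictly convex on the nonempty convex set of OT-optima, this maximum-entropy selection is unique, hence the whole family $\P_\varepsilon$ converges, and continuity of $\dotp{\cdot}{\C}$ then gives $\MKD_\C^\varepsilon(\a,\b) \to \MKD_\C(\a,\b)$.

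For the regime $\varepsilon \to \infty$, I would divide the regularized objective by $\varepsilon$ so that $\P_\varepsilon$ also minimizes $\varepsilon^{-1}\dotp{\P}{\C} - \HD(\P)$. The same subsequential argument, now using that $\varepsilon^{-1}\dotp{\cdot}{\C} \to 0$ uniformly on $\CouplingsD(\a,\b)$, shows that every cluster point of $(\P_\varepsilon)$ maximizes $\HD$ over $\CouplingsD(\a,\b)$. To identify this maximizer as $\a \otimes \b$, I would write the Lagrange stationarity conditions for the marginal constraints: any interior maximizer $\P^\star$ must satisfy $-\log \P^\star_{i,j} = \fD_i + \gD_j$ for some potentials, hence $\P^\star_{i,j} = u_i v_j$, and plugging this into the marginal constraints together with $\sum_{i,j}\P^\star_{i,j}=1$ forces $u_i v_j = \a_i \b_j$. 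Strict concavity of $\HD$ makes this maximizer unique, so the whole family converges to $\a \otimes \b$.

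The main obstacle is the $\varepsilon \to 0$ regime: one cannot simply pass to the limit in the optimality inequality to see the entropy selection, since the leading cost term dominates and the entropy term vanishes. The trick of dividing through by $\varepsilon_k$ so that the cost differences, which are nonpositive for OT-optima, vanish while the entropy differences survive is what reveals the max-entropy selection rule. A secondary subtlety, easily handled, is verifying that the set of OT-optima is nonempty, compact, and convex, so that a unique maximum-entropy OT-optimum indeed exists.
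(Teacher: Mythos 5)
Your proposal is correct and follows essentially the same approach as the paper: extract a subsequential limit using compactness, test the $\varepsilon$-optimality inequality to show the limit is OT-optimal, then divide by $\varepsilon$ to reveal the entropy-selection rule, and conclude by strict concavity that the limit is unique so the whole family converges. The paper collapses your two-pass argument into a single chained inequality $0 \leq \dotp{\C}{\P_\ell} - \dotp{\C}{\P} \leq \varepsilon_\ell(\HD(\P_\ell)-\HD(\P))$ by testing directly against an OT-optimal $\P$, and it sketches the $\varepsilon\to\infty$ case rather than spelling out the Lagrangian identification of $\a\otimes\b$, but these are cosmetic differences.
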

\begin{proof}
	 We consider a sequence $(\varepsilon_\ell)_\ell$ such that $\varepsilon_\ell \rightarrow 0$ and $\varepsilon_\ell > 0$.	
 	We denote $\P_\ell$ the solution of~\eqref{eq-regularized-discr} for $\varepsilon=\varepsilon_\ell$. 
	Since $\CouplingsD(\a,\b)$ is bounded, we can extract a sequence (that we do not relabel for the sake of simplicity) such that $\P_\ell \rightarrow \P^\star$. Since $\CouplingsD(\a,\b)$ is closed, $\P^\star \in \CouplingsD(\a,\b)$. We consider any $\P$ such that $\dotp{\C}{\P} = \MKD_\C(\a,\b)$. By optimality of $\P$ and $\P_\ell$ for their respective optimization problems (for $\varepsilon=0$ and $\varepsilon=\varepsilon_\ell$), one has
 	\eql{\label{eq-proof-gamma-conv}
 		0 \leq \dotp{\C}{\P_\ell} - \dotp{\C}{\P} \leq \varepsilon_\ell ( \HD(\P_\ell)-\HD(\P) ).
 	}
 	Since $\HD$ is continuous, taking the limit $\ell \rightarrow +\infty$ in this expression shows that 
 	$\dotp{\C}{\P^\star} = \dotp{\C}{\P}$ so that $\P^\star$ is a feasible point of~\eqref{eq-entropy-conv-1}. Furthermore, dividing by $\varepsilon_\ell$ in~\eqref{eq-proof-gamma-conv} and taking the limit shows that 
 	$\HD(\P) \leq \HD(\P^\star)$, which shows that $\P^\star$ is a solution of~\eqref{eq-entropy-conv-1}. Since the solution $\P_0^\star$ to this program is unique by strict convexity of $-\HD$, one has $\P^\star = \P_0^\star$, and the whole sequence is converging. 
	In the limit $\varepsilon\rightarrow+\infty$, a similar proof shows that one should rather consider the problem
	\eq{
		\umin{\P \in \CouplingsD(\a,\b)} -\HD(\P),
	}
	the solution of which is $\a \otimes \b$.
\end{proof}

Formula~\eqref{eq-entropy-conv-1} states that for a small regularization $\varepsilon$, the solution converges to the maximum entropy optimal transport coupling.
In sharp contrast,~\eqref{eq-entropy-conv-2} shows that for a large regularization, the solution converges to the coupling with maximal entropy between two prescribed marginals $\a,\b$, namely the joint probability between two independent random variables distributed following $\a,\b$.
A refined analysis of this convergence is performed in~\citet{CominettiAsympt}, including a first order expansion in $\varepsilon$ (resp., $1/\varepsilon$) near $\varepsilon=0$ (resp., $\varepsilon=+\infty$).
Figures~\ref{fig-entropic-densities} and~\ref{fig-entropic} show visually the effect of these two convergences. A key insight is that, as $\varepsilon$ increases, the optimal coupling becomes less and less sparse (in the sense of having entries larger than a prescribed threshold), which in turn has the effect of both accelerating computational algorithms (as we study in~\S\ref{sec-sinkhorn}) and leading to faster statistical convergence (as shown in~\S\ref{sec-entropy-ot-mmd}). 

\newcommand{\myfigSinkEps}[1]{%
\imgBox{\includegraphics[width=.23\linewidth]{sinkhorn-eps/evol-img-#1}}}
\newcommand{\myfigSinkEpsV}[1]{\includegraphics[width=.24\linewidth]{sinkhorn-eps/evol-3d-#1}}

\begin{figure}[h!]
\centering
\begin{tabular}{@{}c@{}c@{}c@{}c@{}}
\myfigSinkEps{1} &
\myfigSinkEps{3} &
\myfigSinkEps{4} &
\myfigSinkEps{5} \\
\myfigSinkEpsV{1} &
\myfigSinkEpsV{3} &
\myfigSinkEpsV{4} &
\myfigSinkEpsV{5} \\
$\varepsilon=10$ &
$\varepsilon=1$ &
$\varepsilon=10^{-1}$ &
$\varepsilon=10^{-2}$ 
\end{tabular}
\caption{\label{fig-entropic-densities}
Impact of $\varepsilon$ on the couplings between two 1-D densities, illustrating Proposition~\ref{prop-convergence-eps}.
Top row: between two 1-D densities. Bottom row: between two 2-D discrete empirical densities with the same number $n=m$ of points (only entries of the optimal $(\P_{i,j})_{i,j}$ above a small threshold are displayed as segments between $x_i$ and $y_j$).
}
\end{figure}

\begin{figure}[h!]
\centering
\includegraphics[width=\linewidth]{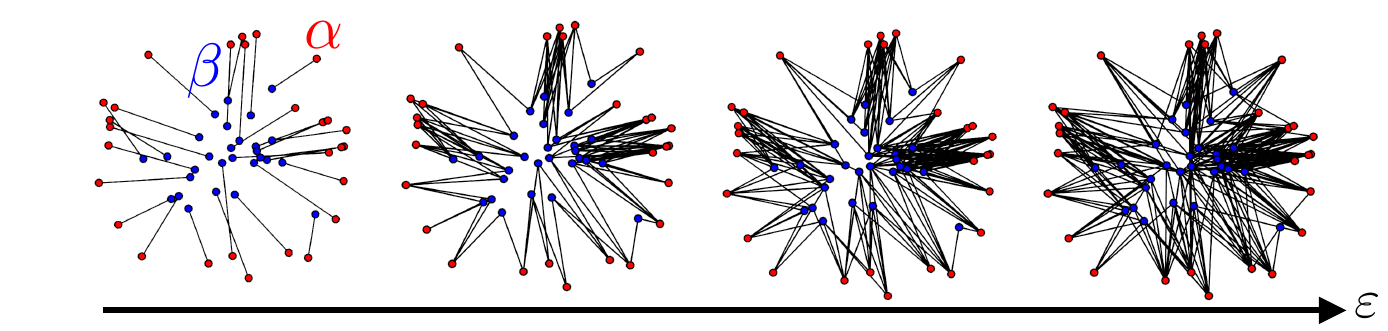}
\caption{\label{fig-entropic}
Impact of $\varepsilon$ on coupling between two 2-D discrete empirical densities with the same number $n=m$ of points (only entries of the optimal $(\P_{i,j})_{i,j}$ above a small threshold are displayed as segments between $x_i$ and $y_j$).
}
\end{figure}

Defining the Kullback--Leibler divergence between couplings as
\eql{\label{eq-kl-defn}
	\KLD(\P|\K) \eqdef \sum_{i,j}  \P_{i,j} \log\pa{\frac{\P_{i,j}}{\K_{i,j}}} - \P_{i,j} + \K_{i,j},
}
the unique solution $\P_\varepsilon$ of~\eqref{eq-regularized-discr} is a projection onto $\CouplingsD(\a,\b)$ of the Gibbs kernel associated to the cost matrix $\C$ as
\eq{
	\K_{i,j} \eqdef e^{-\frac{\C_{i,j}}{\varepsilon}}.
}
Indeed one has that using the definition above
\eql{\label{eq-kl-proj}
	\P_\varepsilon = \Proj_{\CouplingsD(\a,\b)}^\KLD(\K) \eqdef \uargmin{\P \in \CouplingsD(\a,\b)} \KLD(\P|\K).
}

\begin{rem1}{Entropic regularization between discrete measures}\label{rem:entrop-reg}
For discrete measures of the form~\eqref{eq-discr-meas}, the definition of regularized transport extends naturally to
\eql{\label{eq-entropic-disc-meas}
	\MK_\c^\varepsilon(\al,\be) \eqdef \MKD_\C^\varepsilon(\a,\b), 
} 
with cost $\C_{i,j}=\c(x_i,y_j)$, to emphasize the dependency with respect to the positions $(x_i,y_j)$ supporting the input measures.
\end{rem1}

\begin{rem2}{General formulation}
One can consider arbitrary measures by replacing the discrete entropy by the relative entropy with respect to the product measure $\d\al\otimes\d\be(x,y) \eqdef \d\al(x)\d\be(y)$, and propose a regularized counterpart to~\eqref{eq-mk-generic} using
\eql{\label{eq-entropic-generic}
	\MK_\c^\varepsilon(\al,\be) \eqdef 
	\umin{\pi \in \Couplings(\al,\be)}
		\int_{\X \times \Y} c(x,y) \d\pi(x,y) + \varepsilon \KL(\pi|\al\otimes\be),
}
where the relative entropy is a generalization of the discrete Kullback--Leibler divergence~\eqref{eq-kl-defn}
\eql{\label{eq-defn-rel-entropy}
\begin{aligned}	 \KL(\pi|\xi) \eqdef \int_{\X \times \Y} \log\Big( \frac{\d \pi}{\d\xi}(x,y) \Big) \d\pi(x,y)\\
	  + \int_{\X \times \Y} (\d\xi(x,y)-\d\pi(x,y)), 
	  \end{aligned}
}
and by convention $\KL(\pi|\xi)=+\infty$ if $\pi$ does not have a density $\frac{\d \pi}{\d\xi}$ with respect to $\xi$. 
It is important to realize that the reference measure $\al\otimes\be$ chosen in~\eqref{eq-entropic-generic} to define the entropic regularizing term $\KL(\cdot|\al\otimes\be)$ plays no specific role; only its support matters, as noted by the following proposition.

\begin{prop}
	For any $\pi \in \Couplings(\al,\be)$, and for any $(\al',\be')$  having the same 0 measure sets as $(\al,\be)$ (so that they have both densities with respect to one another) one has
	\eq{
		\KL(\pi|\al\otimes\be) = \KL(\pi|\al'\otimes\be') - \KL(\al\otimes\be|\al'\otimes\be').
	}
\end{prop}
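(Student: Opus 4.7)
My plan is to prove the identity by a direct Radon--Nikodym chain-rule computation, exploiting the crucial fact that $\pi$ has marginals $\al$ and $\be$ to kill mixed terms.

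First, I would observe that the hypothesis that $(\al',\be')$ shares null sets with $(\al,\be)$ forces $\al \ll \al'$, $\al' \ll \al$ and similarly for $\be,\be'$, so that $\al\otimes\be$ and $\al'\otimes\be'$ are mutually absolutely continuous, with
\eq{
\frac{\d(\al\otimes\be)}{\d(\al'\otimes\be')}(x,y) = \frac{\d\al}{\d\al'}(x) \cdot \frac{\d\be}{\d\be'}(y).
}
If $\pi$ is not absolutely continuous with respect to $\al\otimes\be$, it is also not absolutely continuous with respect to $\al'\otimes\be'$, and both $\KL$ terms containing $\pi$ on each side equal $+\infty$, so the identity reduces to a tautology (with the convention $+\infty - c = +\infty$ for finite $c$). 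So assume $\pi \ll \al\otimes\be$, in which case the chain rule yields
\eq{
\frac{\d\pi}{\d(\al'\otimes\be')} = \frac{\d\pi}{\d(\al\otimes\be)} \cdot \frac{\d(\al\otimes\be)}{\d(\al'\otimes\be')}.
}

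Taking logarithms, integrating against $\pi$ and using definition~\eqref{eq-defn-rel-entropy} gives
\eq{
\KL(\pi | \al'\otimes\be') = \KL(\pi | \al\otimes\be) + \int \log \frac{\d(\al\otimes\be)}{\d(\al'\otimes\be')} \d\pi + R,
}
where $R$ collects the difference of the normalization terms $\int(\d\xi - \d\pi)$, which vanishes since both $\pi$ and $\al\otimes\be$ are probability measures, so both reduce to $\int \d(\al'\otimes\be') - 1$. The remaining integrand being product separable, namely $\log \frac{\d\al}{\d\al'}(x) + \log \frac{\d\be}{\d\be'}(y)$, the marginal constraints $\pi \in \Couplings(\al,\be)$ let me rewrite
\eq{
\int \log \frac{\d\al}{\d\al'}(x)\, \d\pi(x,y) = \int \log \frac{\d\al}{\d\al'}(x)\, \d\al(x),
}
and analogously for $\be$. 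These are precisely the terms obtained by integrating $\log \frac{\d(\al\otimes\be)}{\d(\al'\otimes\be')}$ against $\al\otimes\be$ (here using that $\al$ and $\be$ are probability measures, so tensoring with $\be$ or $\al$ leaves a scalar factor of $1$). Hence
\eq{
\int \log \frac{\d(\al\otimes\be)}{\d(\al'\otimes\be')} \d\pi = \int \log \frac{\d(\al\otimes\be)}{\d(\al'\otimes\be')} \d(\al\otimes\be),
}
and the right-hand side, up to the same normalization adjustment that cancelled $R$, is exactly $\KL(\al\otimes\be | \al'\otimes\be')$. Rearranging yields the claim.

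The only delicate step is the third one: collapsing an integral of a separable function against the \emph{coupling} $\pi$ into integrals against its \emph{marginals}, and then recognizing that the same collapse transforms the integral against $\al\otimes\be$ into the identical expression. The rest is bookkeeping of the additive mass-correction terms in~\eqref{eq-defn-rel-entropy}, which all cancel since $\pi$ and $\al\otimes\be$ share total mass $1$.
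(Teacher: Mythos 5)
Your proof is correct and uses the natural approach: mutual absolute continuity of the product measures, the Radon--Nikodym chain rule, and the fact that integrating a separable function $\log\frac{\d\al}{\d\al'}(x)+\log\frac{\d\be}{\d\be'}(y)$ against a coupling $\pi\in\Couplings(\al,\be)$ produces the same value as integrating against $\al\otimes\be$, precisely because the marginals agree. The paper leaves this proposition's proof as an unfilled TODO, so there is nothing to compare against.

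One small wrinkle in the write-up: as stated, the quantity $R$ you introduce equals $\int\d(\al'\otimes\be')-1$, which is \emph{not} zero unless $\al'$ and $\be'$ are normalized (the statement does not assume this). It does not vanish on its own; rather, it is exactly the mass-correction term $\int(\d(\al'\otimes\be')-\d(\al\otimes\be))$ inside $\KL(\al\otimes\be\,|\,\al'\otimes\be')$, so it gets absorbed when you recognize that term. You in fact say this at the end (``up to the same normalization adjustment that cancelled $R$''), so the logic is sound, but the earlier sentence ``which vanishes'' should be replaced by ``which equals $\int\d(\al'\otimes\be')-1$ and will reappear as the mass-correction in $\KL(\al\otimes\be\,|\,\al'\otimes\be')$.'' With that rephrasing, the argument is complete and clean.
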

\todoK{
\begin{proof}
	\todo{Prove it}
\end{proof}
}

This proposition shows that choosing $\KL(\cdot|\al'\otimes\be')$ in place of $\KL(\cdot|\al\otimes\be)$ in~\eqref{eq-entropic-generic} results in the same solution.

Formula~\eqref{eq-entropic-generic} can be refactored as a projection problem
\eql{\label{eq-entropic-generic-proj}
	\umin{\pi \in \Couplings(\al,\be)} \KL(\pi|\Kk),
}
where $\Kk$ is the Gibbs distributions $\d\Kk(x,y) \eqdef e^{-\frac{c(x,y)}{\varepsilon}} \d\mu(x)\d\nu(y)$.
This problem is often referred to as the ``static Schr\"odinger problem''~\citep{LeonardSchroedinger,RuschendorfThomsen}, since it was initially considered by Schr\"odinger in statistical physics~\citep{Schroedinger31}. 
As $\varepsilon \rightarrow 0$, the unique solution to~\eqref{eq-entropic-generic-proj} converges to the maximum entropy solution to~\eqref{eq-mk-generic}; see~\citep{leonard2012schrodinger,2017-carlier-SIMA}.
Section \ref{sec-entropic-dynamic} details an alternate ``dynamic'' formulation of the Schr\"odinger problem over the space of paths connecting the points of two measures.
\end{rem2}

\begin{rem2}{Mutual entropy}
Similarly to~\eqref{eq-ot-proba-interpretation}, one can re\-phra\-se \eqref{eq-entropic-generic} using random variables 
\eq{
	\MK_\c^\varepsilon(\al,\be) = \umin{(X,Y)} \enscond{ \EE_{(X,Y)}(c(X,Y)) + \epsilon \text{I}(X,Y) }{ X \sim \al, Y \sim \be },
}
where, denoting $\pi$ the distribution of $(X,Y)$, $\text{I}(X,Y) \eqdef \KL(\pi|\al\otimes\be)$ is the so-called mutual information between the two random variables. One has $\text{I}(X,Y) \geq 0$ and $\text{I}(X,Y)=0$ if and only if the two random variables are independent.  
\end{rem2}

\begin{rem2}{Independence and couplings}
	A coupling $\pi \in \Couplings(\al,\be)$ describes the distribution of a couple of random variables $(X,Y)$ defined on $(\X,\Y)$, where $X$ (resp., $Y$) has law $\al$ (resp., $\be$).
	Proposition~\ref{prop-convergence-eps} carries over for generic (nonnecessary discrete) measures, so that the solution $\pi_\varepsilon$ of~\eqref{eq-entropic-generic} converges to the tensor product coupling $\al \otimes \be$ as $\varepsilon \rightarrow+\infty$. This coupling $\al \otimes \be$ corresponds to the random variables $(X,Y)$ being independent. 
	In contrast, as $\varepsilon \rightarrow 0$, $\pi_\varepsilon$ convergence to a solution $\pi_0$ of the OT problem~\eqref{eq-mk-generic}. On $\X=\Y=\RR^\dim$, if $\al$ and $\be$ have densities with respect to the Lebesgue measure, as detailed in Remark~\ref{rem-exist-mongemap}, then $\pi_0$ is unique and supported on the graph of a bijective Monge map $\T : \RR^\dim \rightarrow \RR^\dim$. In this case, $(X,Y)$ are in some sense fully dependent, since $Y=\T(X)$ and $X=\T^{-1}(Y)$. 
	In the simple 1-D case $\dim=1$, a convenient way to visualize the dependency structure between $X$ and $Y$ is to use the copula $\xi_\pi$ associated to the joint distribution $\pi$. The cumulative function defined in~\eqref{eq-cumul-defn} is extended to couplings as 
	\eq{
		\foralls (x,y) \in \RR^2, \quad
		\cumul{\pi}(x,y) \eqdef \int_{-\infty}^x \int_{-\infty}^y \d\pi.  
	}
	The copula is then defined as
	\eq{
		\foralls (s,t) \in [0,1]^2, \quad
		\xi_{\pi}(s,t) \eqdef \cumul{\pi}( \cumul{\al}^{-1}(s),\cumul{\be}^{-1}(t) ),
	}
	where the pseudoinverse of a cumulative function is defined in~\eqref{eq-pseudo-inv-cum}. 
	For independent variables, $\varepsilon=+\infty$, \ie $\pi=\al\otimes \be$, one has $\xi_{\pi_{+\infty}}(s,t)=st$.
	In contrast, for fully dependent variables, $\varepsilon=+\infty$, one has $\xi_{\pi_0}(s,t)=\min(s,t)$.
	Figure~\ref{fig-entropic-copula} shows how entropic regularization generates copula $\xi_{\pi_\varepsilon}$ interpolating between these two extreme cases. 
\end{rem2}

\newcommand{\myfigSinkCopul}[1]{\imgBox{\includegraphics[width=.185\linewidth,trim=65 55 45 40,clip]{sinkhorn-copula/evol-#1}}}
\begin{figure}[h!]
\centering
\begin{tabular}{@{}c@{\hspace{5mm}}c@{}}
\includegraphics[width=.4\linewidth]{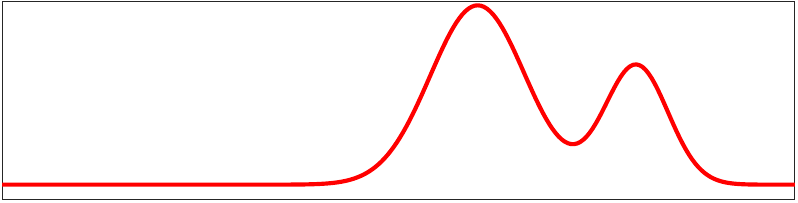} & 
\includegraphics[width=.4\linewidth]{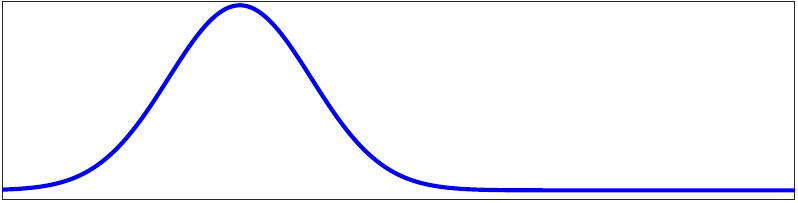}\\
{\color{red}$\al$} & {\color{blue}$\be$}
\end{tabular}
\begin{tabular}{@{}c@{}c@{}c@{}c@{}c@{}}
\myfigSinkCopul{levelsets-1} &
\myfigSinkCopul{levelsets-2} &
\myfigSinkCopul{levelsets-3} &
\myfigSinkCopul{levelsets-4} &
\myfigSinkCopul{levelsets-5} \\
\myfigSinkCopul{copula-1} &
\myfigSinkCopul{copula-2} &
\myfigSinkCopul{copula-3} &
\myfigSinkCopul{copula-4} &
\myfigSinkCopul{copula-5} \\
{\color[rgb]{1,0,1} $\varepsilon=10$} &
{\color[rgb]{.75,.25,.75} $\varepsilon=1$} &
{\color[rgb]{.5,.5,.5} $\varepsilon=0.5 \cdot 10^{-1}$} &
{\color[rgb]{.25,.75,.25} $\varepsilon=10^{-1}$} &
{\color[rgb]{0,1,0} $\varepsilon=10^{-3}$} 
\end{tabular}
\caption{\label{fig-entropic-copula}
Top: evolution with $\varepsilon$ of the solution $\pi_\varepsilon$ of~\eqref{eq-entropic-generic}. 
Bottom: evolution of the copula function $\xi_{\pi_\varepsilon}$. 
}
\end{figure}

\section{Sinkhorn's Algorithm and Its Convergence}
\label{sec-sinkhorn}

The following proposition shows that the solution of~\eqref{eq-regularized-discr} has a specific form, which can be parameterized using $n+m$ variables. That parameterization is therefore essentially dual, in the sense that a coupling $\P$ in $\CouplingsD(\a,\b)$ has $nm$ variables but $n+m$ constraints.

\begin{prop}\label{prop-regularized-primal}
The solution to~\eqref{eq-regularized-discr} is unique and has the form
\eql{\label{eq-scaling-form}
	\foralls (i,j) \in \range{n} \times \range{m}, \quad \P_{i,j} = \uD_i \K_{i,j} \vD_j
}
for two (unknown) scaling variable $(\uD,\vD) \in \RR_+^n \times \RR_+^m$. 
\end{prop}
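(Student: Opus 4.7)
The plan is to establish uniqueness first by a strict convexity argument, and then derive the multiplicative form from first order optimality of a Lagrangian.

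For uniqueness, I would observe that the objective $\P \mapsto \dotp{\P}{\C} - \varepsilon\HD(\P)$ is strictly convex in $\P$ because $-\HD$ is (as noted in the excerpt) $1$-strongly concave on the interior of the simplex of joint distributions, and $\dotp{\P}{\C}$ is linear. Since $\CouplingsD(\a,\b)$ is a nonempty compact convex polytope, the minimization admits a unique minimizer $\P^\star$. A secondary but important observation is that $-\HD$ acts as a barrier for the nonnegativity constraints: since $\partial(-\varepsilon\HD)/\partial\P_{i,j} = \varepsilon\log\P_{i,j} \to -\infty$ as $\P_{i,j}\to 0^+$, no inequality constraint $\P_{i,j}\geq 0$ can be active at $\P^\star$ (one can push mass into any entry that vanished and strictly decrease the objective, as long as the marginals remain feasible, which they do since $\a,\b$ have full support by assumption). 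Hence the only binding constraints are the $n+m$ equality marginals.

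Next I would write the (partial) Lagrangian using multipliers $\fD\in\RR^n$ and $\gD\in\RR^m$ for the row-sum and column-sum constraints:
\eq{
\Ll(\P,\fD,\gD) = \dotp{\P}{\C} - \varepsilon\HD(\P) - \dotp{\fD}{\P\ones_m - \a} - \dotp{\gD}{\transp{\P}\ones_n - \b}.
}
Since the nonnegativity constraints are inactive at $\P^\star$, the KKT conditions reduce to $\nabla_{\P}\Ll = 0$ together with the marginal constraints. Using $\partial\HD(\P)/\partial\P_{i,j} = -\log\P_{i,j}$, the stationarity condition reads
\eq{
\C_{i,j} + \varepsilon\log\P^\star_{i,j} - \fD_i - \gD_j = 0,
}
so that
\eq{
\P^\star_{i,j} = \exp\!\Big(\tfrac{\fD_i}{\varepsilon}\Big)\,\exp\!\Big(-\tfrac{\C_{i,j}}{\varepsilon}\Big)\,\exp\!\Big(\tfrac{\gD_j}{\varepsilon}\Big).
}
Setting $\uD_i \eqdef e^{\fD_i/\varepsilon}$ and $\vD_j \eqdef e^{\gD_j/\varepsilon}$ and recalling $\K_{i,j} = e^{-\C_{i,j}/\varepsilon}$ yields the claimed scaling form $\P^\star_{i,j} = \uD_i\K_{i,j}\vD_j$, with $(\uD,\vD)\in\RR_+^n\times\RR_+^m$.

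The main subtlety I anticipate is justifying that the nonnegativity constraints are inactive so that the simple stationarity equation above is valid; this is where the entropic barrier plays its essential role and relies on the positivity of $\a$ and $\b$ (otherwise an entire row or column of $\P$ is forced to be zero). Once this is settled, the rest amounts to solving a linear equation in $\log\P_{i,j}$ and exponentiating, which is routine. One could alternatively derive the same factorization by dualizing to obtain~\eqref{eq-kl-proj}, i.e.\ writing $\P^\star = \argmin_{\P\in\CouplingsD(\a,\b)}\KLD(\P|\K)$, and noting that KL projections onto an intersection of row- and column-sum constraints produce diagonal left- and right-rescalings of the reference kernel; but the Lagrangian route is the most direct.
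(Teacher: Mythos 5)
Your proof follows essentially the same Lagrangian route as the paper: introduce multipliers for the two marginal constraints, take first-order conditions, and exponentiate to get the scaling form $\P_{i,j}=e^{\fD_i/\varepsilon}\K_{i,j}e^{\gD_j/\varepsilon}$. The one place where you are more careful than the paper's terse sketch is in justifying that the nonnegativity constraints are inactive (so that the Lagrangian with only equality constraints correctly captures the KKT system): you invoke the entropic barrier and the standing assumption that $\a,\b$ have full support to argue $\P^\star>0$. The paper simply writes the partial Lagrangian and differentiates, implicitly relying on that fact. Your extra step is a genuine and correct piece of rigor, not a different method; the uniqueness argument via strict convexity is also the one the paper states (just before the proposition) rather than inside the proof.
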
 

\begin{proof} 
Introducing two dual variables $\fD\in\RR^n,\gD\in\RR^m$ for each marginal constraint, the Lagrangian of~\eqref{eq-regularized-discr} reads
\eq{\label{eq-sinkhorn-lagrangian}
	\Lag(\P,\fD,\gD)= \dotp{\P}{\C} - \varepsilon \HD(\P) - \dotp{\fD}{\P\ones_m-\a}-\dotp{\gD}{\transp{\P}\ones_n-\b}.
}
First order conditions then yield
$$
	\frac{\partial\Lag(\P,\fD,\gD)}{\partial \P_{i,j}}= \C_{i,j} + \varepsilon \log(\P_{i,j}) - \fD_i -\gD_j=0,
$$
which result, for an optimal $\P$ coupling to the regularized problem, in the expression $\P_{i,j}=e^{\fD_i/\varepsilon}e^{-\C_{i,j}/\varepsilon}e^{\gD_j/\varepsilon}$, which can be rewritten in the form provided above using nonnegative vectors $\uD$ and $\vD$.
\end{proof} 

\paragraph{Regularized OT as matrix scaling.} 

The factorization of the optimal solution exhibited in Equation~\eqref{eq-scaling-form} can be conveniently rewritten in matrix form as $\P=\diag(\uD)\K\diag(\vD)$.
The variables $(\uD,\vD)$ must therefore satisfy the following nonlinear equations which correspond to the mass conservation constraints inherent to $\CouplingsD(\a,\b)$:
\eql{\label{eq-dualsinkhorn-constraints}
	\diag(\uD)\K\diag(\vD)\ones_m=\a,
	\qandq
	\diag(\vD)\K^\top \diag(\uD)\ones_n=\b.
}
These two equations can be further simplified, since $\diag(\vD)\ones_m$ is simply $\vD$, and the multiplication of $\diag(\uD)$ times $\K \vD$ is 
\eql{\label{eq-dualsinkhorn-constraints2}
	\uD \odot (\K \vD) = \a
	\qandq
	\vD \odot (\transp{\K}\uD) = \b,
}
where $\odot$ corresponds to entrywise multiplication of vectors. That problem is known in the numerical analysis community as the matrix scaling problem (see~\citep{nemirovski1999complexity} and references therein).
An intuitive way to handle these equations is to solve them iteratively, by modifying first $\uD$ so that it satisfies the left-hand side of Equation~\eqref{eq-dualsinkhorn-constraints2} and then $\vD$ to satisfy its right-hand side. These two updates define Sinkhorn's algorithm,
\eql{\label{eq-sinkhorn}	
	\itt{\uD} \eqdef \frac{\a}{\K \it{\vD}}
	\qandq
	\itt{\vD} \eqdef \frac{\b}{\transp{\K}\itt{\uD}},
}
initialized with an arbitrary positive vector $\init{\vD} = \ones_m$. The division operator used above between two vectors is to be understood entrywise. Note that a different initialization will likely lead to a different solution for $\uD,\vD$, since $\uD,\vD$ are only defined up to a multiplicative constant (if $\uD,\vD$ satisfy \eqref{eq-dualsinkhorn-constraints} then so do $\lambda\uD,\vD/\lambda$ for any $\lambda>0$).
It turns out, however, that these iterations converge (see Remark~\ref{rem-iterative-projection} for a justification using iterative projections, and see Remark~\ref{rem-global-conv-sinkh} for a strict contraction result) and all result in the same optimal coupling $\diag(\uD)\K\diag(\vD)$. 
Figure~\ref{fig-sinkhorn-convergence}, top row, shows the evolution of the coupling $\diag(\it{\uD})\K\diag(\it{\vD})$ computed by Sinkhorn iterations. It  evolves from the Gibbs kernel $\K$ toward the optimal coupling solving~\eqref{eq-regularized-discr} by progressively shifting the mass away from the diagonal.

\begin{rem}[Historical perspective] The iterations~\eqref{eq-sinkhorn} first appeared in~\citep{yule1912methods,kruithof}. They were later known as the iterative proportional fitting procedure (IPFP)~\citet{DemingStephanIPFP} and RAS~\citep{bacharach1965estimating} methods~\citep{ReviewSinkhorn}. The proof of their convergence is attributed to~\citet{Sinkhorn64}, hence the name of the algorithm. This algorithm was later extended in infinite dimensions by~\citet{Ruschendorf95}. This regularization was used in the field of economics to obtain approximate solutions to optimal transport problems, under the name of gravity models~\citep{wilson1969use,erlander1980optimal,erlander1990gravity}.
It was rebranded as ``softassign'' by~\citet{kosowsky1994invisible} in the assignment case, namely when $\a=\b=\ones_n/n$, and used to solve matching problems in economics more recently by~\citet{Galichon-Entropic}.
This regularization has received renewed attention in data sciences (including machine learning, vision, graphics and imaging) following~\citep{CuturiSinkhorn}, who showed that Sinkhorn's algorithm provides an efficient and scalable approximation to optimal transport, thanks to seamless parallelization when solving several OT problems simultaneously (notably on GPUs; see Remark~\ref{rem-parallel}), and that this regularized quantity also defines, unlike the linear programming formulation, a differentiable loss function (see~\S\ref{sec-regularized-cost}).
There exist countless extensions and generalizations of the Sinkhorn algorithm (see for instance~\S\ref{sec-generalized}). For instance, when $\a=\b$, one can use averaged projection iterations to maintain symmetry~\citep{knight2014symmetry}. 
\end{rem}

\begin{figure}[h!]
\centering
\includegraphics[width=\linewidth]{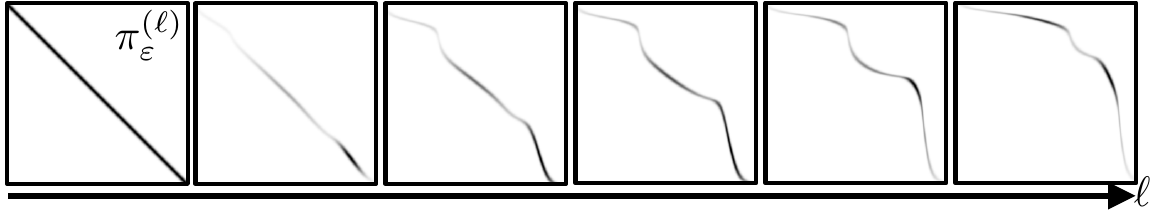}
\begin{tabular}{c}
\includegraphics[width=.5\linewidth]{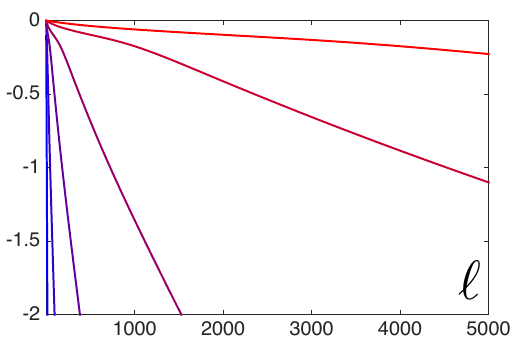}\\
{\color{blue}$\varepsilon=10$} \quad {\color[rgb]{0.5,0,0.5}$\varepsilon=0.1$} \quad {\color{red}$\varepsilon=10^{-3}$}
\end{tabular}
\caption{\label{fig-sinkhorn-convergence}
Top: evolution of the coupling $\it{\pi_\varepsilon}=\diag(\it{\uD})\K\diag(\it{\vD})$ computed at iteration $\ell$ of Sinkhorn's iterations, for 1-D densities on $\X=[0,1]$, $c(x,y)=|x-y|^2$, and $\varepsilon=0.1$.
Bottom: impact of $\varepsilon$ the convergence rate of Sinkhorn, as measured in term of marginal constraint violation $\log( \|\it{\pi_\varepsilon} \ones_m - \b \|_1 )$.
}
\end{figure}

\begin{rem1}{Overall complexity}\label{rem-complexity-rounding}
	By doing a careful convergence analysis (assuming $n=m$ for the sake of simplicity), \citet{altschuler2017near} showed that by setting $\varepsilon = \frac{4\log(n)}{\tau}$, $O(\norm{\C}_\infty^3 \log(n) \tau^{-3})$ Sinkhorn iterations (with an additional rounding step to compute a valid coupling $\hat \P \in \CouplingsD(\a,\b)$) are enough to ensure that $\dotp{\hat \P}{\C} \leq \MKD_{\C}(\a,\b) + \tau$. This implies that Sinkhorn computes a $\tau$-approximate solution of the unregularized OT problem in $O(n^2\log(n) \tau^{-3})$ operations. The rounding scheme consists in, given two vectors $\uD\in\RR^n,\vD\in\RR^m$ to carry out the following updates (\cite[Alg. 2]{altschuler2017near}):
	$$\begin{aligned}
		\uD'& \eqdef \uD\odot\min\left(\frac{\a}{\uD\odot(\K\vD)},\ones_n\right), 
		\vD' \eqdef \vD\odot \min\left(\frac{\b}{\vD\odot(\transp{\K}\uD')},\ones_n\right), \\
	\Delta_{\a}& \eqdef \a- \uD'\odot (\K \vD'), \Delta_{\b} \eqdef \b- \vD'\odot (\transp{\K} \uD), \\
	\hat\P& \eqdef \diag(\uD')\K\diag(\vD')+\Delta_{\a} \transp{(\Delta_{\b})}/\norm{\Delta_{\a}}_1.
	\end{aligned}$$
	This yields a matrix $\hat\P \in \CouplingsD(\a,\b)$ such that the $1$-norm between $\hat\P$ and $\diag(\uD)\K\diag(\vD)$ is controlled by the marginal violations of $\diag(\uD)\K\diag(\vD)$, namely
	$$ \norm{\hat\P-\diag(\uD)\K\diag(\vD)}_1 \leq \norm{\a-\uD\odot(\K\vD)}_1+\norm{\b-\vD\odot(\transp{\K}\uD)}_1.$$ 
This field remains active, as shown by the recent improvement on the result above by \citet{pmlr-v80-dvurechensky18a}.
\end{rem1}

\begin{rem}[Numerical stability of Sinkhorn iterations]\label{rem-stability}
As we discuss in Remarks~\ref{rem-global-conv-sinkh} and \ref{rem-local-conv}, the convergence of Sinkhorn's algorithm deteriorates as $\varepsilon\rightarrow 0$. In numerical practice, however, that slowdown is rarely observed in practice for a simpler reason: Sinkhorn's algorithm will often fail to terminate as soon as some of the elements of the kernel $\K$ become too negligible to be stored in memory as positive numbers, and become instead null. This can then result in a matrix product $\K\vD$ or $\transp{\K}\uD$ with ever smaller entries that become null and result in a division by $0$ in the Sinkhorn update of Equation~\eqref{eq-sinkhorn}. Such issues can be partly resolved by carrying out computations on the multipliers $\uD$ and $\vD$ in the log domain. That approach is carefully presented in Remark~\ref{rem-log-sinkh} and is related to a direct resolution of the dual of Problem~\eqref{eq-regularized-discr}.
\end{rem}

\begin{rem}[Relation with iterative projections]\label{rem-iterative-projection}
Denoting 
\eq{
	\Cc^1_\a \eqdef \enscond{\P}{\P\ones_m=\a}
	\qandq
	\Cc^2_\b \eqdef \enscond{\P}{\transp{\P}\ones_m=\b}
}
the rows and columns constraints, one has $\CouplingsD(\a,\b) = \Cc^1_\a \cap \Cc^2_\b$. One can use Bregman iterative projections~\citep{bregman1967relaxation},
\eql{\label{eq-kl-sinkh-proj}
	\itt{\P} \eqdef \Proj_{\Cc^1_\a}^{\KLD}(\it{\P})
	\qandq
	\ittt{\P} \eqdef \Proj_{\Cc^2_\b}^{\KLD}(\itt{\P}).
}
Since the sets $\Cc^1_\a$ and $\Cc^2_\b$ are affine, these iterations are known to converge to the solution of~\eqref{eq-kl-proj}; see~\citep{bregman1967relaxation}. These iterates are equivalent to Sinkhorn iterations~\eqref{eq-sinkhorn} since defining 
\eq{\label{eq-sink-matrix}\P^{(2\ell)} \eqdef \diag(\it{\uD}) \K \diag(\it{\vD}),}
one has
\begin{align*}
	\P^{(2\ell+1)} &\eqdef \diag(\itt{\uD}) \K \diag(\it{\vD}) \\
	\qandq
	\P^{(2\ell+2)} &\eqdef \diag(\itt{\uD}) \K \diag(\itt{\vD}).
\end{align*}
In practice, however, one should prefer using~\eqref{eq-sinkhorn}, which only requires manipulating scaling vectors and multiplication against a Gibbs kernel, which can often be accelerated (see Remarks~\ref{rem-separable} and~\ref{rem-geod-heat} below). 
\end{rem}

\begin{rem}[Proximal point algorithm]
In order to approximate a solution of the unregularized ($\epsilon=0$) problem~\eqref{eq-mk-discr}, it is possible to use iteratively the Sinkhorn algorithm, using the so-called proximal point algorithm for the $\KL$ metric. 
We denote $F(\P) \eqdef \dotp{\P}{\pi} + \iota_{\CouplingsD(\a,\b)}(\P)$ the unregularized objective function.
The proximal point iterations for the $\KLD$ divergence computes a minimizer of $F$, and hence a solution of the unregularized OT problem~\eqref{eq-mk-discr}, by computing iteratively
\eql{\label{eq-prox-point}
	\itt{\P} \eqdef \Prox_{\frac{1}{\epsilon} F}^{\KLD}(\it{\P})
	\eqdef 
	\uargmin{\P \in \RR_+^{n \times m}} \KLD(\P|\it{\P}) + \frac{1}{\epsilon} F(\P)
}
starting from an arbitrary $\init{\P}$ (see also~\eqref{eq-prox-kl}).
The proximal point algorithm is the most basic proximal splitting method. 
Initially introduced for the Euclidean metric (see, for instance, (\citealt{rockafellar1976monotone})), it extends to any Bregman divergence~\citep{censor1992proximal}, so in particular it can be applied here for the $\KLD$ divergence (see Remark~\ref{rem-bregman}).   
The proximal operator is usually not available in closed form, so some form of subiterations are required. 
The optimization appearing in~\eqref{eq-prox-point} is very similar to the entropy regularized problem~\eqref{eq-regularized-discr}, with the relative entropy $\KLD(\cdot|\it{\P})$ used in place of the negative entropy $-\HD$. 
Proposition~\ref{prop-regularized-primal} and Sinkhorn iterations~\eqref{eq-sinkhorn} carry over to this more general setting when defining the Gibbs kernel as $\K=e^{-\frac{\C}{\epsilon}} \odot \it{\P} = ( e^{-\frac{\C_{i,j}}{\epsilon}} \it{\P}_{i,j} )_{i,j}$.
Iterations~\eqref{eq-prox-point} can thus be implemented by running the Sinkhorn algorithm at each iteration.
Assuming for simplicity $\init{\P}=\ones_n \ones_m^\top$, these iterations thus have the form 
\begin{align*}
	\itt{P} &= 
		\diag(\it{\uD}) ( e^{-\frac{\C}{\epsilon}} \odot \it{\P} ) \diag(\it{\vD})\\
		&=
		\diag(\it{\uD} \odot \cdots \odot \init{\uD})  
			e^{-\frac{(\ell+1)\C}{\epsilon}} \odot \it{\P} ) 
		\diag(\it{\vD} \odot \cdots \odot \init{\vD}).		
\end{align*}
The proximal point iterates apply therefore iteratively Sinkhorn's algorithm with a kernel $e^{-\frac{\C}{\epsilon/\ell}}$, i.e., with a decaying regularization parameter $\epsilon/\ell$.
This method is thus tightly connected to a series of works which combine Sinkhorn with some decaying schedule on the regularization; see, for instance,~\citep{kosowsky1994invisible}. They are efficient in small spacial dimension, when combined with a multigrid strategy to approximate the coupling on an adaptive sparse grid~\citep{schmitzer2016stabilized}.  
\end{rem}

\begin{rem}[Other regularizations]
It is possible to replace the entropic term $-\HD(\P)$ in~\eqref{eq-regularized-discr} by any strictly convex penalty $R(\P)$, as detailed, for instance, in~\citep{dessein2016regularized}. A typical example is the squared $\ell^2$ norm 
\eql{\label{eq-quad-regul}
	R(\P) = \sum_{i,j} \P_{i,j}^2 + \iota_{\RR_+}(\P_{i,j}); 
} 
see~\citep{essid2017quadratically}. 
Another example is the family of Tsallis entropies~\citep{muzellec2017tsallis}.
Note, however, that if the penalty function is defined even when entries of $\P$ are nonpositive, which is, for instance, the case for a quadratic regularization~\eqref{eq-quad-regul}, then one must add back a nonnegativity constraint $\P \geq 0$, in addition to the marginal constraints $\P\ones_m=\a$ and $\P^\top \ones_n=\b$. Indeed, one can afford to ignore the nonnegativity constraint using entropy because that penalty incorporates a logarithmic term which forces the entries of $\P$ to stay in the positive orthant. This implies that the set of constraints is no longer affine and iterative Bregman projections do not converge anymore to the solution. 
A workaround is to use instead~\citeauthor{Dykstra83}'s algorithm~(\citeyear{Dykstra83,Dykstra85}) (see also \citealt{bauschke-lewis}), as detailed in~\citep{2015-benamou-cisc}. This algorithm uses projections according to the Bregman divergence associated to $R$. We refer to Remark~\ref{rem-bregman} for more details regarding Bregman divergences. An issue is that in general these projections cannot be computed explicitly. 
For the squared norm~\eqref{eq-quad-regul}, this corresponds to computing the Euclidean projection on $(\Cc^1_\a,\Cc^2_\b)$ (with the extra positivity constraints), which can be solved efficiently using projection algorithms on simplices~\citep{condat2015fast}. The main advantage of the quadratic regularization over entropy is that it produces sparse approximation of the optimal coupling, yet this comes at the expense of a slower algorithm that cannot be parallelized as efficiently as Sinkhorn to compute several optimal transports simultaneously (as discussed in~\S\ref{rem-parallel}). Figure~\ref{fig-quad-regul} contrasts the approximation achieved by entropic and quadratic regularizers. 
\end{rem}

\newcommand{\myfigRegQuad}[1]{\imgBox{\includegraphics[width=.185\linewidth,trim=65 55 45 40,clip]{quadratic-regul/quadratic-levelsets-#1}}}
\newcommand{\myfigRegEntrop}[1]{\imgBox{\includegraphics[width=.185\linewidth,trim=65 55 45 40,clip]{quadratic-regul/entropy-levelsets-#1}}}

\begin{figure}[h!]
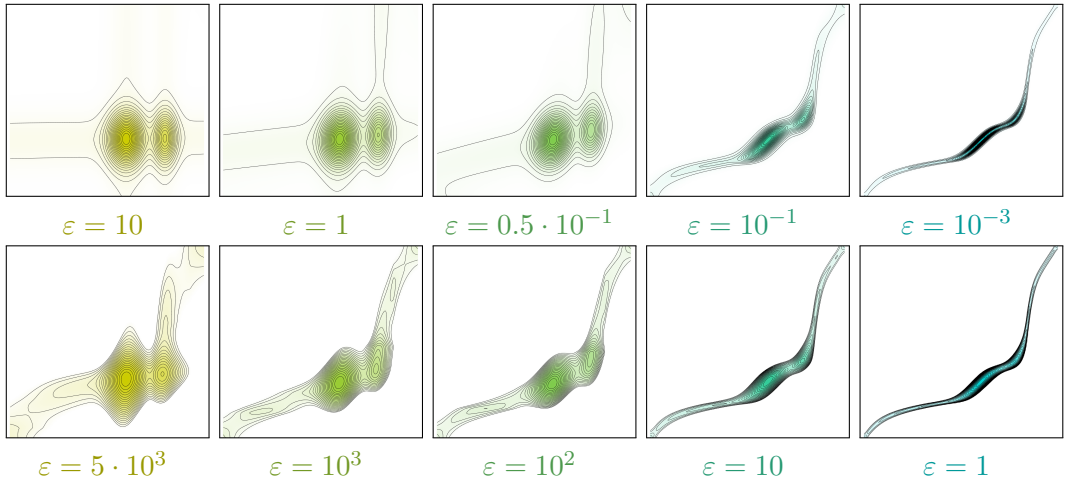

\begin{tabular}{@{}c@{}c@{}c@{}c@{}c@{}}
\myfigRegEntrop{1} &
\myfigRegEntrop{3} &
\myfigRegEntrop{4} &
\myfigRegEntrop{7} &
\myfigRegEntrop{9} \\
{\color[rgb]{.6,.6,0} $\varepsilon=10$} &
{\color[rgb]{.45,.6,.15} $\varepsilon=1$} &
{\color[rgb]{.3,.6,.3} $\varepsilon=0.5 \cdot 10^{-1}$} &
{\color[rgb]{.15,.6,.45} $\varepsilon=10^{-1}$} &
{\color[rgb]{0,.6,.6} $\varepsilon=10^{-3}$} \\
\myfigRegQuad{1} &
\myfigRegQuad{3} &
\myfigRegQuad{4} &
\myfigRegQuad{7} &
\myfigRegQuad{9} \\
{\color[rgb]{.6,.6,0} $\varepsilon=5 \cdot 10^3$} &
{\color[rgb]{.45,.6,.15}$\varepsilon=10^3$} &
{\color[rgb]{.3,.6,.3} $\varepsilon=10^2$} &
{\color[rgb]{.15,.6,.45} $\varepsilon=10$} &
{\color[rgb]{0,.6,.6} $\varepsilon=1$} 
\end{tabular}
\caption{\label{fig-quad-regul}
Comparison of entropic regularization $R=-\HD$ (top row) and quadratic regularization $R=\norm{\cdot}^2+\iota_{\RR_+}$ (bottom row). 
The $(\al,\be)$ marginals are the same as for Figure~\ref{fig-entropic-copula}.
}
\end{figure}

\begin{rem1}{Barycentric projection}\label{rem-barycenric-proj}
	Consider again the setting of Remark~\ref{rem:entrop-reg} in which we use entropic regularization to approximate OT between discrete measures. The Kantorovich formulation in \eqref{eq-mk-discr} and its entropic regularization~\eqref{eq-regularized-discr} both yield a coupling $\P \in \CouplingsD(\a,\b)$. In order to define a transportation map $\T : \X \rightarrow \Y$, in the case where $\Y=\RR^\dim$, one can define the so-called barycentric projection map
	\eql{\label{eq-baryproj}
		\T : x_i \in \X \longmapsto \frac{1}{\a_i} \sum_j \P_{i,j} y_j \in \Y,
	}
	where the input measures are discrete of the form~\eqref{eq-pair-discr}. Note that this map is only defined for points $(x_i)_i$ in the support of $\al$. 
	In the case where $\T$ is a permutation matrix (as detailed in Proposition~\ref{prop-matching-kanto}), then $\T$ is equal to a Monge map, and as $\varepsilon \rightarrow 0$, the barycentric projection progressively converges to that map if it is unique.
	For arbitrary (not necessarily discrete) measures, solving~\eqref{eq-mk-generic} or its regularized version~\eqref{eq-entropic-generic} defines a coupling $\pi \in \Couplings(\al,\be)$. Note that this coupling $\pi$ always has a density $\frac{\d\pi(x,y)}{\d\al(x)\d\be(y)}$ with respect to $\al \otimes \be$. A map can thus be retrieved by the formula
	\eql{\label{eq-bary-proj}
		\T : x \in \X \longmapsto \int_\Yy y \frac{\d\pi(x,y)}{\d\al(x)\d\be(y)} \d\be(y).
	}
	In the case where, for $\varepsilon=0$, $\pi$ is supported on the graph of the Monge map (see Remark~\ref{rem-exist-mongemap}), then using $\varepsilon>0$ produces a smooth approximation of this map.
	Such a barycentric projection is useful to apply the OT Monge map to solve problems in imaging; see Figure~\ref{fig-colors} for an application to color modification. It has also been used to compute approximations of principal geodesics in the space of probability measures endowed with the Wasserstein metric; see~\citep{SeguyCuturi}.
\end{rem1}

\begin{rem1}{Hilbert metric}
As initially explained by~\citep{franklin1989scaling}, the global convergence analysis of Sinkhorn is greatly simplified using the Hilbert projective metric on $\RR_{+,*}^n$ (positive vectors), defined as
\eq{
	\foralls (\uD,\uD') \in (\RR_{+,*}^n)^2, \quad
	\Hilbert(\uD,\uD') \eqdef \log \umax{i,j} \frac{ \uD_i \uD_{j}' }{ \uD_{j} \uD_{i}'  }.
}
It can be shown to be a distance on the projective cone $\RR_{+,*}^n/\sim$, where $\uD \sim \uD'$ means that $\exists r>0, \uD=r\uD'$ (the vectors are equal up to rescaling, hence the name ``projective'').  
This means that $\Hilbert$ satisfies the triangular inequality and $\Hilbert(\uD,\uD')=0$ if and only if $\uD \sim \uD'$. 
This is a projective version of Hilbert's original distance on bounded open convex sets~\citep{hilbert1895gerade}.
The projective cone $\RR_{+,*}^n/\sim$ is a complete metric space for this distance. 
By a logarithmic change of variables, the Hilbert metric on the rays of the positive cone is isometric to the variation seminorm (it is a norm between vectors that are defined up to an additive constant)
\eql{\label{eq-hilbert-var}
	\Hilbert(\uD,\uD') = \norm{\log(\uD)-\log(\uD')}_{\text{var}}
}
\eq{
	\qwhereq
	\norm{\fD}_{\text{var}} \eqdef (\max_i \fD_i) - (\min_i \fD_i).
}
This variation seminorm is closely related to the $\ell^\infty$ norm since one always has $\norm{\fD}_{\text{var}}  \leq 2 \norm{\fD}_\infty$. If one imposes that $\fD_i=0$ for some fixed $i$, then a converse inequality also holds since $\norm{\fD}_\infty \leq \norm{\fD}_{\text{var}}$. These bounds are especially useful to analyze Sinkhorn convergence (see Remark~\ref{rem-global-conv-sinkh} below), because dual variables $\fD=\log(\uD)$ solving~\eqref{eq-dualsinkhorn-constraints2} are defined up to an additive constant, so that one can impose that $\fD_i=0$ for some $i$.
The Hilbert metric was introduced independently by~\citep{birkhoff1957extensions} and~\citep{samelson1957perron}. They proved the following fundamental theorem, which shows that a positive matrix is a strict contraction on the cone of positive vectors.

\begin{thm}\label{thm-birkoff}
	Let $\K \in \RR_{+,*}^{n \times m}$; then for $(\vD,\vD') \in (\RR_{+,*}^m)^2$
	\eq{
		\Hilbert(\K \vD,\K \vD') \leq \la(\K) \Hilbert(\vD,\vD'),
		\text{ where }
		\choice{
			\la(\K) \eqdef \frac{ \sqrt{\eta(\K)}-1 }{ \sqrt{\eta(\K)}+1 } < 1, \\
			\eta(\K) \eqdef \umax{i,j,k,\ell} \frac{ \K_{i,k} \K_{j,\ell} }{ \K_{j,k} \K_{i,\ell} }.
		}
	}
\end{thm}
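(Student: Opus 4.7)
My plan is to reduce the claim to a one-dimensional Möbius inequality via cross-ratio expansion, recognizing $\log\eta(\K)$ as the projective diameter of the image cone and invoking the classical Birkhoff--Hopf contraction principle.

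I would first set $r_k = \vD_k/\vD'_k$, $M = \max_k r_k$, $m = \min_k r_k$, so that by~\eqref{eq-hilbert-var} one has $\Hilbert(\vD, \vD') = \log(M/m)$. The key structural observation is that
$$\frac{(\K\vD)_i}{(\K\vD')_i} = \sum_k \frac{\K_{ik}\vD'_k}{(\K\vD')_i}\, r_k$$
realizes each $(\K\vD)_i/(\K\vD')_i$ as a convex combination of the $r_k$, which immediately gives the weak contraction $\Hilbert(\K\vD, \K\vD') \leq \Hilbert(\vD, \vD')$. In this language, $\log\eta(\K)$ is precisely the projective diameter of the image cone $\K(\RR_{+,*}^m)$: direct evaluation yields $\Hilbert(\K e_k, \K e_\ell) = \log \max_{i,j} (\K_{ik}\K_{j\ell})/(\K_{i\ell}\K_{jk})$, and taking the supremum over $(k,\ell)$ gives $\log\eta(\K)$. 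Since every $\K\vD$ is a positive combination of columns, this extremal-ray diameter governs arbitrary pairs.

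The next step is the cross-ratio expansion
$$e^{\Hilbert(\K\vD, \K\vD')} = \max_{i,j} \frac{(\K\vD)_i(\K\vD')_j}{(\K\vD)_j(\K\vD')_i}.$$
Expanding each product as $\sum_{k,\ell} \K_{ik}\K_{j\ell}\vD_k\vD'_\ell$ and symmetrizing under $k \leftrightarrow \ell$ cancels the diagonal $k=\ell$ contributions; the remaining ratio is organized as a weighted average of Möbius-type quotients $(at+b)/(bt+a)$ with $a = \K_{ik}\K_{j\ell}$, $b = \K_{i\ell}\K_{jk}$, $t = r_k/r_\ell$. By definition of $\eta(\K)$, $a/b \in [\eta^{-1}, \eta]$, and $t$ ranges in $[m/M, M/m]$, reducing the multi-dimensional estimate to a scalar question.

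The main obstacle is extracting the sharp factor $\tanh(\tfrac{1}{4}\log\eta) = (\sqrt\eta-1)/(\sqrt\eta+1)$ rather than the crude $(\eta-1)/(\eta+1) = \tanh(\tfrac{1}{2}\log\eta)$ that naive term-by-term maximization produces. Restricting to the worst-case two-valued configurations $r_k \in \{m, M\}$ and optimizing over the single remaining weight parameter (the critical point turns out to be $c^\star = 1/\sqrt{R\eta}$, at which complementary telescoping yields $(1+\sqrt{R\eta})^2$ in the numerator and $(\sqrt R + \sqrt\eta)^2$ in the denominator), I would reduce the claim to the pointwise Möbius inequality
$$\bigl(\tfrac{\sqrt{R\eta}+1}{\sqrt R + \sqrt\eta}\bigr)^{\!2} \leq R^{(\sqrt\eta-1)/(\sqrt\eta+1)} \qquad \text{for all } R \geq 1, \; \eta > 1,$$
with $R = e^{\Hilbert(\vD,\vD')}$. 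Both sides equal $1$ at $R=1$ with matching logarithmic derivative $\la(\K)$, so tangency holds; the global inequality is cleanest via the hyperbolic interpretation of the Hilbert metric, wherein on a one-dimensional positive ray the Hilbert metric coincides with twice the standard hyperbolic distance, positive linear maps act as Möbius isometries, and a Möbius map whose image has diameter $D$ contracts transverse hyperbolic distances by exactly $\tanh(D/4)$. The factor-of-two between the Hilbert and hyperbolic metrics is precisely what converts the naive exponent $\tanh(\tfrac12\log\eta)$ into the sharp $\tanh(\tfrac14\log\eta) = \la(\K)$; chaining this back through the cross-ratio reduction concludes the proof.
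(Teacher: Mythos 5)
The paper does not prove Theorem~\ref{thm-birkoff}; it only attributes the result to Birkhoff (1957) and Samelson (1957). Your reconstruction follows the classical Birkhoff--Hopf--Bushell argument and is essentially correct, with the reduction and the extremal value both checking out, but two points warrant attention.

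First, the cross-ratio expansion into a weighted average of M\"obius quotients $(at+b)/(bt+a)$ is a true identity, but the mediant bound it suggests only gives the crude factor $(\eta-1)/(\eta+1)$, as you yourself note. The real work happens when you restrict to two-valued $r_k\in\{m,M\}$ and optimize; here the exposition skips a step. After two-valuing $r$ on a subset $S$, the cross-ratio becomes a function of the four partial sums $P=\sum_{k\in S}\K_{ik}\uD_k$, $Q=\sum_{k\notin S}\K_{ik}\uD_k$, $P'=\sum_{k\in S}\K_{jk}\uD_k$, $Q'=\sum_{k\notin S}\K_{jk}\uD_k$; one must still normalize away the two scaling freedoms and argue that the active constraint is $(P/P')/(Q/Q')=\eta$ before a single degree of freedom survives. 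Having done that, the critical point $c^\star=1/\sqrt{R\eta}$ and the value $\bigl((\sqrt{R\eta}+1)/(\sqrt R+\sqrt\eta)\bigr)^2$ are correct (I have verified both).

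Second, the closing appeal to hyperbolic geometry is mildly circular: the assertion that a positive M\"obius map with image of hyperbolic diameter $D$ contracts by $\tanh(D/4)$ is precisely Birkhoff's theorem for $2\times 2$ matrices, so invoking it black-box at the end is using the $n=m=2$ case of what you are trying to prove. Since you have already reduced to the scalar inequality, it is cleaner to finish it directly. Writing $\rho=\sqrt R\ge 1$, $\sigma=\sqrt\eta\ge 1$, the claim is
\[
\log\frac{\rho\sigma+1}{\rho+\sigma}\;\le\;\frac{\sigma-1}{\sigma+1}\,\log\rho.
\]
The function $g(\rho)=\frac{\sigma-1}{\sigma+1}\log\rho-\log(\rho\sigma+1)+\log(\rho+\sigma)$ vanishes at $\rho=1$, and clearing denominators in $g'$ gives
\[
g'(\rho)\;=\;\frac{\sigma(\sigma-1)(\rho-1)^2}{(\sigma+1)\,\rho\,(\rho\sigma+1)(\rho+\sigma)}\;\ge\;0,
\]
so $g\ge 0$ for $\rho\ge 1$, which is exactly $\la(\K)=\tanh\bigl(\tfrac14\log\eta(\K)\bigr)$. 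With the reduction spelled out and the hyperbolic lemma replaced by this one-line derivative calculation, your proof is complete and self-contained.
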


Figure~\ref{fig-h-m} illustrates this theorem. 
\end{rem1}

\begin{figure}[h!]
\centering
\includegraphics[width=.22\linewidth]{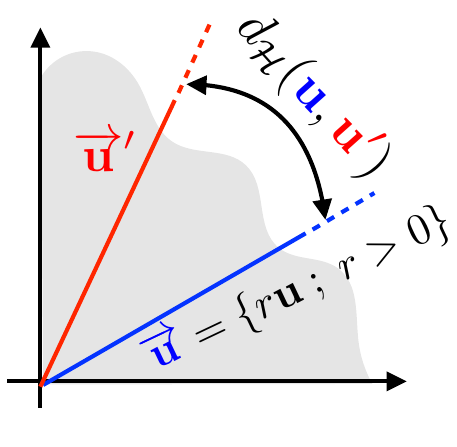} \qquad
\includegraphics[width=.7\linewidth]{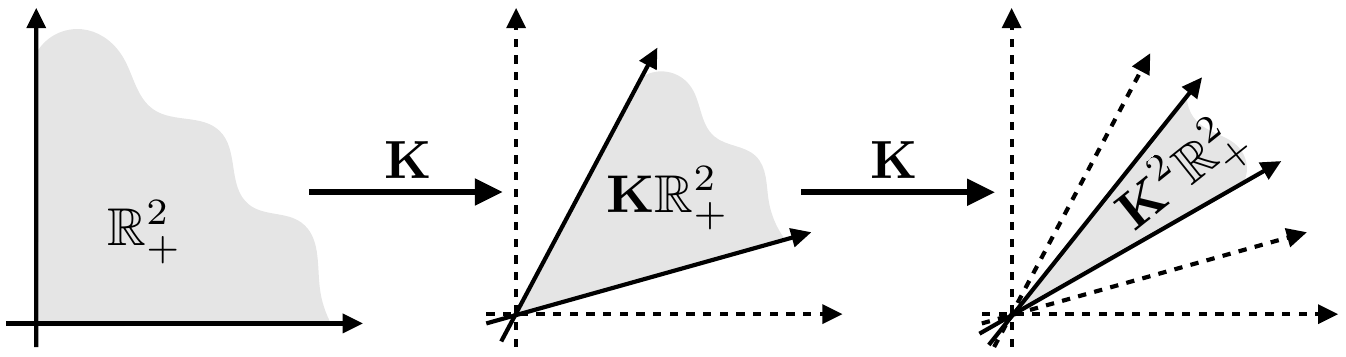}\\
\caption{\label{fig-h-m}
Left: the Hilbert metric $\Hilbert$ is a distance over rays in cones (here positive vectors).
Right: visualization of the contraction induced by the iteration of a positive matrix $\K$. 
}
\end{figure}

\begin{rem1}{Perron--Frobenius}
A typical application of Theorem~\ref{thm-birkoff} is to provide a quantitative proof of the Perron--Frobenius theorem, which, as explained in Remark~\ref{rem-local-conv}, is linked to a local linearization of Sinkhorn's iterates. A matrix $\K \in \RR_+^{n \times n}$ with $\K^\top \ones_n = \ones_n$ maps $\simplex_n$ into $\simplex_n$. If furthermore $\K>0$, then according to Theorem~\ref{thm-birkoff}, it is strictly contractant for the metric $\Hilbert$, hence there exists a unique invariant probability distribution $p^\star \in \simplex_n$ with $\K p^\star=p^\star$. Furthermore, for any $p_0 \in \simplex_n$, $\Hilbert(\K^\ell p_0,p^\star) \leq \la(\K)^\ell \Hilbert(p_0,p^\star)$, \ie one has linear convergence of the iterates of the matrix toward $p^\star$. This is illustrated in Figure~\ref{fig-perron}.
\end{rem1}

\begin{figure}[h!]
\centering
\begin{tabular}{@{}c@{}c@{}c@{\hspace{10mm}}c@{}}
\includegraphics[width=.22\linewidth]{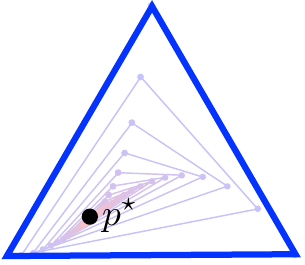}&
\includegraphics[width=.22\linewidth]{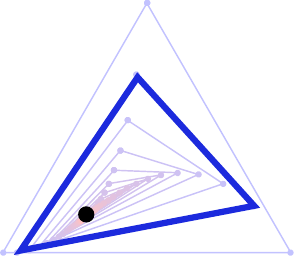}&
\includegraphics[width=.22\linewidth]{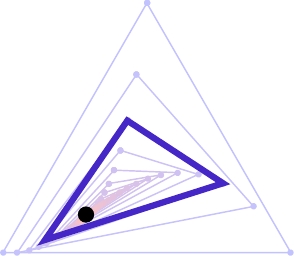}&
\includegraphics[width=.22\linewidth]{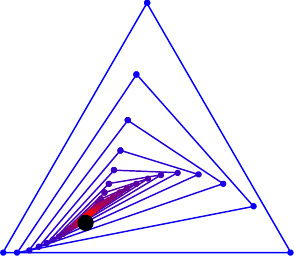}\\
$\simplex_3$ & $\K \simplex_3$ & $\K^2 \simplex_3$ &  $\{ \K^\ell \simplex_3 \}_\ell$
\end{tabular}
\caption{\label{fig-perron}
Evolution of $\K^\ell \simplex_3 \rightarrow \{p^\star\}$ the invariant probability distribution of $\K \in \RR_{+,*}^{3 \times 3}$ with $\K^\top \ones_3=\ones_3$.
}
\end{figure}

\begin{rem11}{Global convergence}{rem-global-conv-sinkh}
The following theorem, proved by~\citep{franklin1989scaling}, makes use of Theorem~\ref{thm-birkoff} to show the linear convergence of Sinkhorn's iterations.

\begin{thm}
	One has $(\it{\uD},\it{\vD}) \rightarrow (\uD^\star,\vD^\star)$ and
	\eql{\label{eq-convlin-sinkh}
		\Hilbert(\it{\uD}, \uD^\star) = O(\la(\K)^{2\ell}), \quad
		\Hilbert(\it{\vD}, \vD^\star) = O(\la(\K)^{2\ell}).
	}
	One also has
	\eql{\label{eq-convsinkh-control}
		\begin{split}
		\Hilbert(\it{\uD}, \uD^\star) &\leq \frac{\Hilbert( \it{\P}\ones_m,\a )}{1-\la(\K)^2}, \\
		\Hilbert(\it{\vD}, \vD^\star) &\leq \frac{\Hilbert( \P^{(\ell),\top} \ones_n,\b )}{1-\la(\K)^2},
		\end{split}
	}
	where we denoted $\it{\P} \eqdef \diag(\it{\uD}) \K \diag(\it{\vD})$. Last, one has
	\eql{\label{eq-convlin-sinkh-prim}
		\|\log(\it{\P}) - \log(\P^\star)\|_\infty \leq \Hilbert(\it{\uD}, \uD^\star) + \Hilbert(\it{\vD}, \vD^\star),
	}
	where $\P^\star$ is the unique solution of~\eqref{eq-regularized-discr}. 
\end{thm}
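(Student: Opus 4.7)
The plan is to exploit Theorem~\ref{thm-birkoff} to show that the composed Sinkhorn iteration $\uD \mapsto \a/(\K(\b/(\K^\top \uD)))$ is a strict contraction for the Hilbert projective metric $\Hilbert$ on the cone $\RR^n_{+,*}/\sim$, and then derive, in order, the geometric convergence~\eqref{eq-convlin-sinkh}, the a posteriori bound~\eqref{eq-convsinkh-control}, and the log-$\ell^\infty$ estimate~\eqref{eq-convlin-sinkh-prim}. The key auxiliary observation, used throughout, is that $\Hilbert$ is invariant under two operations: global rescaling, $\Hilbert(r\uD,\uD')=\Hilbert(\uD,\uD')$ for $r>0$, and elementwise reciprocal against a common positive vector, $\Hilbert(w/\uD,\,w/\uD')=\Hilbert(\uD,\uD')$ for $w\in\RR^n_{+,*}$. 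Both follow immediately from the formula $\Hilbert(\uD,\uD')=\log\max_{i,j}\uD_i\uD'_j/(\uD_j\uD'_i)$, the reciprocal case using that the max is unchanged under $i\leftrightarrow j$.

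First, combining the Sinkhorn update $\itt{\uD}=\a/(\K\it{\vD})$ with the fixed-point identity $\uD^\star=\a/(\K\vD^\star)$ and the reciprocal invariance gives $\Hilbert(\itt{\uD},\uD^\star)=\Hilbert(\K\it{\vD},\K\vD^\star)$, which by Theorem~\ref{thm-birkoff} is bounded by $\la(\K)\Hilbert(\it{\vD},\vD^\star)$. Applying the same argument to the $\vD$-step (noting $\la(\K^\top)=\la(\K)$ since $\eta$ is symmetric under transposition) and chaining yields
\[
\Hilbert(\itt{\uD},\uD^\star)\le\la(\K)^2\,\Hilbert(\it{\uD},\uD^\star),
\]
and analogously for $\vD$. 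Since $\la(\K)<1$ and $(\RR^n_{+,*}/\sim,\Hilbert)$ is a complete metric space, the Banach fixed-point theorem yields existence and uniqueness (up to rescaling) of $(\uD^\star,\vD^\star)$ as well as the geometric rate~\eqref{eq-convlin-sinkh}.

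For the a posteriori bound~\eqref{eq-convsinkh-control}, I would apply the triangle inequality together with the contraction just established,
\[
\Hilbert(\it{\uD},\uD^\star)\le\Hilbert(\it{\uD},\itt{\uD})+\Hilbert(\itt{\uD},\uD^\star)\le\Hilbert(\it{\uD},\itt{\uD})+\la(\K)^2\,\Hilbert(\it{\uD},\uD^\star),
\]
so that $(1-\la(\K)^2)\Hilbert(\it{\uD},\uD^\star)\le\Hilbert(\it{\uD},\itt{\uD})$. To identify the residual with a marginal violation, observe that $\itt{\uD}=\a/(\K\it{\vD})$ while $\it{\uD}=\bigl(\it{\uD}\odot\K\it{\vD}\bigr)/(\K\it{\vD})=(\it{\P}\ones_m)/(\K\it{\vD})$; reciprocal invariance collapses $\Hilbert(\it{\uD},\itt{\uD})$ to $\Hilbert(\it{\P}\ones_m,\a)$. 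The $\vD$-version is obtained symmetrically using the column marginal.

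Finally, for~\eqref{eq-convlin-sinkh-prim}, the factorization $\it{\P}_{i,j}=\it{\uD}_i\K_{i,j}\it{\vD}_j$ and its analogue for $\P^\star$ makes the kernel $\K$ cancel, so $(\log\it{\P}-\log\P^\star)_{i,j}=\phi_i+\psi_j$ with $\phi\eqdef\log\it{\uD}-\log\uD^\star$ and $\psi\eqdef\log\it{\vD}-\log\vD^\star$; by~\eqref{eq-hilbert-var} one has $\|\phi\|_{\mathrm{var}}=\Hilbert(\it{\uD},\uD^\star)$ and similarly for $\psi$. Choosing the scaling representatives of the dual pairs so that $\phi$ is centered (yielding $\|\phi\|_\infty\le\|\phi\|_{\mathrm{var}}$) and likewise for $\psi$, the trivial bound $|\phi_i+\psi_j|\le\|\phi\|_\infty+\|\psi\|_\infty$ closes the argument. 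The main obstacle will be this last step: because the dual pairs $(\it{\uD},\it{\vD})$ and $(\uD^\star,\vD^\star)$ are each only defined up to a single-parameter rescaling $(\uD,\vD)\mapsto(r\uD,\vD/r)$, one must carefully verify that the available normalizations are sufficient to simultaneously control $\|\phi\|_\infty$ and $\|\psi\|_\infty$ in terms of the intrinsic Hilbert-metric quantities on the right-hand side of~\eqref{eq-convlin-sinkh-prim}. The contraction argument itself is, by contrast, clean once the scale and reciprocal invariances of $\Hilbert$ are in hand.
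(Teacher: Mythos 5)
Your arguments for the linear rate~\eqref{eq-convlin-sinkh} and the a posteriori bound~\eqref{eq-convsinkh-control} match the paper's proof essentially verbatim: both rely on the scale- and reciprocal-invariance of $\Hilbert$ to reduce the Sinkhorn update to the action of $\K$, apply Theorem~\ref{thm-birkoff}, and then chain with a triangle inequality for~\eqref{eq-convsinkh-control}.

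For~\eqref{eq-convlin-sinkh-prim} the two routes genuinely differ: the paper simply cites~\citep[Lem.\ 3]{franklin1989scaling}, while you attempt a self-contained argument via the rank-one decomposition $(\log\it{\P}-\log\P^\star)_{i,j}=\phi_i+\psi_j$. You correctly identify the obstruction yourself---there is only a single rescaling parameter $c$ available (replacing $\phi\mapsto\phi+c$, $\psi\mapsto\psi-c$), so you cannot center $\phi$ and $\psi$ independently, and for an arbitrary pair of vectors the feasible intervals $c\in[-\max_i\phi_i,\,-\min_i\phi_i]$ and $c\in[\min_j\psi_j,\,\max_j\psi_j]$ need not intersect. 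As stated your proof of~\eqref{eq-convlin-sinkh-prim} therefore has a genuine gap.

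The gap is closable, however, if you invoke the shared-marginal constraint you have not yet used: with $\it{\P}=\diag(\it{\uD})\K\diag(\it{\vD})$ and $\it{\vD}=\b/(\K^\top\it{\uD})$, one has $\it{\P}^\top\ones_n=\b=\P^{\star\top}\ones_n$. Writing $\it{\P}_{i,j}=\P^\star_{i,j}e^{\phi_i+\psi_j}$ and summing over $i$ for fixed $j$ gives $e^{-\psi_j}=\sum_i\bigl(\P^\star_{i,j}/\b_j\bigr)e^{\phi_i}$, i.e.\ $e^{-\psi_j}$ is a convex combination of the $e^{\phi_i}$, hence $-\max_i\phi_i\le\psi_j\le-\min_i\phi_i$ for every $j$. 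This forces $[\min_j\psi_j,\max_j\psi_j]\subset[-\max_i\phi_i,-\min_i\phi_i]$, so the two intervals do intersect and a single $c$ achieves both $\norm{\phi+c}_\infty\le\norm{\phi}_{\mathrm{var}}$ and $\norm{\psi-c}_\infty\le\norm{\psi}_{\mathrm{var}}$, which completes your bound. (In fact this stronger constraint gives $|\phi_i+\psi_j|\le\norm{\phi}_{\mathrm{var}}=\Hilbert(\it{\uD},\uD^\star)$ outright, which is better than the stated inequality.) With this step supplied, your route is a clean self-contained replacement for the citation in the paper.
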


\begin{proof}
	One notices that for any $(\vD,\vD') \in (\RR_{+,*}^m)^2$, one has 
	\eq{	
		\Hilbert(\vD,\vD') = \Hilbert(\vD/\vD',\ones_m) = \Hilbert(\ones_m/\vD,\ones_m/\vD').
	}
	This shows that
	\begin{align*}
		\Hilbert(\itt{\uD},\uD^\star) &= \Hilbert\pa{ \frac{\a}{\K \it{\vD}}, \frac{\a}{\K \vD^\star} } \\
		&= \Hilbert( \K \it{\vD}, \K \vD^\star ) \leq \la(\K) \Hilbert( \it{\vD}, \vD^\star ),
	\end{align*}
	where we used Theorem~\ref{thm-birkoff}. This shows~\eqref{eq-convlin-sinkh}.  One also has, using the triangular inequality,
	\begin{align*}
		\Hilbert(\it{\uD},\uD^\star) &\leq \Hilbert(\itt{\uD},\it{\uD}) + \Hilbert(\itt{\uD},\uD^\star) \\
		&\leq \Hilbert\pa{ \frac{\a}{\K \it{\vD}},\it{\uD} } + \la(\K)^2 \Hilbert(\it{\uD},\uD^\star) \\
		&= \Hilbert\pa{ \a,\it{\uD} \odot  ( \K \it{\vD} ) } + \la(\K)^2 \Hilbert(\it{\uD},\uD^\star), 
	\end{align*}
	which gives the first part of~\eqref{eq-convsinkh-control} since 
	$\it{\uD} \odot  ( \K \it{\vD} ) = \it{\P}\ones_m$ (the second one being similar).
	The proof of~\eqref{eq-convlin-sinkh-prim} follows from~\citep[Lem. 3]{franklin1989scaling}.
\end{proof}
 
The bound~\eqref{eq-convsinkh-control} shows that some error measures on the marginal constraints violation, for instance, $\| \it{\P} \ones_m - \a \|_1$ and $\|\transp{\it{\P}} \ones_n - \b \|_1$, are useful stopping criteria to monitor the convergence.
Note that thanks to~\eqref{eq-hilbert-var}, these Hilbert metric rates on the scaling variable $(\it{\uD},\it{\vD})$ give a linear rate on the dual variables $(\it{\fD},\it{\gD}) \eqdef (\varepsilon\log(\it{\uD}),\varepsilon\log(\it{\vD}))$ for the variation norm $\norm{\cdot}_{\text{var}}$.

Figure~\ref{fig-sinkhorn-convergence}, bottom row, highlights this linear rate on the constraint violation and shows how this rate degrades as $\varepsilon\rightarrow 0$. 
These results are proved in~\citep{franklin1989scaling} and are tightly connected to nonlinear Perron--Frobenius theory~\citep{lemmens2012nonlinear}. Perron--Frobenius theory corresponds to the linearization of the iterations; see~\eqref{eq-linearized-sinkh}. This convergence analysis is extended by~\citep{linial1998deterministic}, who show that each iteration of Sinkhorn increases the permanence of the scaled coupling matrix. 
\end{rem11}

\begin{rem1}{Local convergence}\label{rem-local-conv}
The global linear rate~\eqref{eq-convlin-sinkh-prim} is often quite pessimistic, typically in $\X=\Y=\RR^d$ for cases where there exists a Monge map when $\varepsilon=0$ (see Remark~\ref{rem-monge2}). The global rate is in contrast rather sharp for more difficult situations where the cost matrix $\C$ is close to being random, and in these cases, the rate scales exponentially bad with $\varepsilon$, $1-\la(\K) \sim e^{-1/\varepsilon}$.
To obtain a finer asymptotic analysis of the convergence (\eg if one is interested in a high-precision solution and performs a large number of iterations), one usually rather studies the local convergence rate. 
One can write a Sinkhorn update as iterations of a fixed-point map $\itt{\fD} = \Phi( \it{\fD} )$, where
\eq{
	\Phi \eqdef \Phi_2 \odot \Phi_1
	\qwhereq
	\choice{
		\Phi_1(\fD) = \varepsilon \log \transp{\K} ( e^{\fD/\varepsilon} ) - \log(\b), \\
		\Phi_2(\gD) = \varepsilon \log \K ( e^{\gD/\varepsilon} ) - \log(\a).
	}
}
For optimal $(\fD,\gD)$ solving~\eqref{eq-dual-formulation}, denoting $\P = \diag(e^{\fD/\varepsilon}) \K \diag(e^{\gD/\varepsilon})$ the optimal coupling solving~\eqref{eq-regularized-discr}, one has the following Jacobian:
\eql{\label{eq-linearized-sinkh}
	\partial \Phi(\fD) = \diag(\a)^{-1} \odot \P \odot \diag(\b)^{-1} \odot \transp{\P}.
} 
This Jacobian is a positive matrix with $\partial \Phi(\fD) \ones_n = \ones_n$, and thus by the Perron--Frobenius theorem, it has a single dominant eigenvector $\ones_m$ with associated eigenvalue $1$. Since $\fD$ is defined up to a constant, it is actually the second eigenvalue $1-\kappa<1$ which governs the local linear rate, and this shows that for $\ell$ large enough, 
\eq{
	\|\it{\fD}-\fD \| = O( (1-\kappa)^\ell ).
} 
Numerically, in ``simple cases'' (such as when there exists a smooth Monge map when $\varepsilon=0$), this rate scales like $\kappa \sim \varepsilon$. 
We refer to~\citep{knight2008sinkhorn} for more details in the bistochastic (assignment) case.
\end{rem1}

\section{Speeding Up Sinkhorn's Iterations}

The main computational bottleneck of Sinkhorn's iterations is the vector-matrix multiplication against kernels $\K$ and $\K^\top$, with complexity $O(nm)$ if implemented naively. We now detail several important cases where the complexity can be improved significantly.

\begin{rem}[Parallel and GPU friendly computation]\label{rem-parallel} The simplicity of Sinkhorn's algorithm yields an extremely efficient approach to compute simultaneously several regularized Wasserstein distances between pairs of histograms. 
Let $N$ be an integer, $\a_1,\ldots,\a_N$ be histograms in $\simplex_n$, and $\b_1,\ldots,\b_N$ be histograms in $\Sigma_m$. 
We seek to compute all $N$ approximate distances $\MKD_\C^\varepsilon(\a_1,\b_1),\dots,\MKD_\C^\varepsilon(\a_N,\b_N)$. 
In that case, writing $\A=[\a_1,\dots,\a_N]$ and $\B=[\b_1,\dots,\b_N]$ for the $n\times N$ and $m\times N$ matrices storing all histograms, one can notice that all Sinkhorn iterations for all these $N$ pairs can be carried out in parallel, by setting, for instance,
\eql{\label{eq-sinkhorn-par}	
	\itt{\UD} \eqdef \frac{\A}{\K \it{\VD}}
	\qandq
	\itt{\VD} \eqdef \frac{\B}{\transp{\K}\itt{\UD}},
}
initialized with $\init{\VD} = \ones_{m \times N}$. Here $\frac{\cdot}{\cdot}$ corresponds to the entrywise division of matrices.
One can further check that upon convergence of $\VD$ and $\UD$, the (row) vector of regularized distances simplifies to
$$ \transp{\ones_n} (    \UD\odot \log \UD \odot (   (\K\odot \C) \VD) +  \UD\odot(   (\K\odot \C)(\VD\odot \log \VD)    ) ) \in\RR^{N}.$$
Note that the basic Sinkhorn iterations described in Equation~\eqref{eq-sinkhorn} are intrinsically GPU friendly, since they only consist in matrix-vector products, and this was exploited, for instance, to solve matching problems in~\citet{slomp2011gpu}). However, the matrix-matrix operations presented in Equation~\eqref{eq-sinkhorn-par} present even better opportunities for parallelism, which explains the success of Sinkhorn's algorithm to compute OT distances between histograms at large scale.
\end{rem}

\begin{rem}[Speed-up for separable kernels]\label{rem-separable} We consider in this section an important particular case for which the complexity of each Sinkhorn iteration can be significantly reduced. That particular case happens when each index $i$ and $j$ considered in the cost-matrix can be described as a $d$-uple taken in the cartesian product of $d$ finite sets $\range{n_1},\dots,\range{n_d}$,
$$i=(i_k)_{k=1}^d, j=(j_k)_{k=1}^d \in \range{n_1}\times\dots\times\range{n_d}.$$
In that setting, if the cost $\C_{ij}$ between indices $i$ and $j$ is additive along these sub-indices, namely if there exists $d$ matrices $\C^1,\dots,\C^d$, each of respective size $n_1\times n_,\dots,n_d\times n_d$, such that
$$\C_{ij} = \sum_{k=1}^d \C^k_{i_k,j_k},$$
then one obtains as a direct consequence that the kernel appearing in the Sinkhorn iterations has a separable multiplicative structure,
\eql{\label{eq-speedup-separable}
\K_{i,j} = \prod_{k=1}^d \K^k_{i_k,j_k}.
}
Such a separable multiplicative structure allows for a very fast (exact) evaluation of $\K \uD$. Indeed, instead of instantiating $\K$ as a matrix of size $n\times n$, which would have a prohibitive size since $n=\prod_k n_k$ is usually exponential in the dimension $d$, one can instead recover $\K\uD$ by simply applying $\K^k$ along each ``slice'' of $\uD$. If $n=m$, the complexity reduces to $O(n^{1+1/d})$ in place of $O(n^2)$.

An important example of this speed-up arises when $\Xx=\Yy=[0,1]^d$; the ground cost is the $q$-th power of the $q$-norm, $$\c(x,y)=\norm{x-y}^q_q=\sum_{i=1}^d |x_i-y_i|^q, \; q>0;$$ and the space is discretized using a regular grid in which only points $x_i = (i_1/n_1,\ldots,i_d/n_d)$ for $i=(i_1,\dots,i_d)\in\range{n_1}\times \dots\times\range{n_d}$ are considered. In that case a multiplication by $\K$ can be carried out more efficiently by applying each 1-D $n_k\times n_k$ convolution matrix $$\K^k = \begin{bmatrix}\exp(-\left|\frac{r-s}{n_k}\right|^q/ \varepsilon)\end{bmatrix}_{1 \leq r,s\leq n_k}$$ to $\uD$ reshaped as a tensor whose first dimension has been permuted to match the $k$-th set of indices.
For instance, if $d=2$ (planar case) and $q=2$ (2-Wasserstein, resulting in Gaussian convolutions), histograms $\a$ and as a consequence Sinkhorn multipliers $\uD$ can be instantiated as $n_1\times n_2$ matrices. We write $\mathbf{U}$ to underline the fact that the multiplier $\uD$ is reshaped as a $n_1\times n_2$ matrix, rather than a vector of length $n_1n_2$. Then, computing $\K\uD$, which would naively require $(n_1 n_2)^2$ operations with a naive implementation, can be obtained by applying two 1-D convolutions separately, as $$(\K^2(\K^1 \mathbf{U})^T)^T=\K^1\mathbf{U}\K^2,$$ to recover a $n_1\times n_2$ matrix in $(n_1^2)n_2+n_1(n_2^2)$ operations instead of $n_1^2n_2^2$ operations. Note that this example agrees with the exponent $(1+1/d)$ given above. With larger $d$, one needs to apply these very same 1-D convolutions to each slice of $\uD$ (reshaped as a tensor of suitable size) an operation which is extremely efficient on GPUs.

This important observations underlies many of the practical successes found when applying optimal transport to shape data in 2-D and 3-D, as highlighted in~\citep{2015-solomon-siggraph,2016-bonneel-barycoord}, in which distributions supported on grids of sizes as large as $200^3=8 \times 10^6$ are handled.
\end{rem}

\begin{rem}[Approximated convolutions]\label{rem-convol-sinkh}
The main computational bottleneck of Sinkhorn's iterations~\eqref{eq-sinkhorn} lies in the multiplication of a vector by $\K$ or by its adjoint. Besides using separability~\eqref{eq-speedup-separable}, it is also possible to exploit other special structures in the kernel. The simplest case is for translation invariant kernels $\K_{i,j} = k_{i-j}$, which is typically the case when discretizing the measure on a fixed uniform grid in Euclidean space $\X=\RR^\dim$. Then $\K \vD = k \star \vD$ is a convolution, and there are several algorithms to approximate the convolution in nearly linear time. 
The most usual one is by Fourier transform $\Ff$, assuming for simplicity periodic boundary conditions, because $\Ff(k \star \vD)=\Ff(k) \odot \Ff(\vD)$. This leads, however, to unstable computations and is often unacceptable for small $\varepsilon$.
Another popular way to speed up computation is by approximating the convolution using a succession of autoregressive filters, using, for instance, the Deriche filtering method~\citet{deriche1993recursively}. We refer to~\citep{getreuer2013survey} for a comparison of various fast filtering methods.
\end{rem}

\begin{rem}[Geodesic in heat approximation]\label{rem-geod-heat}
For nonplanar domains, the kernel $\K$ is not a convolution, but in the case where the cost is $\C_{i,j}=d_\Mm(x_i,y_j)^p$ where $d_\Mm$ is a geodesic distance on a surface $\Mm$ (or a more general manifold), it is also possible to perform fast approximations of the application of $\K=e^{-\frac{d_\Mm}{\varepsilon}}$ to a vector. Indeed, \citeauthor{varadhan-1967}'s formulas~\citeyearpar{varadhan-1967} assert that this kernel is close to the Laplacian kernel (for $p=1$) and the heat kernel (for $p=2$). 
The first formula of Varadhan states
\eql{\label{eq-varadhan-1}
	-\frac{\sqrt{t}}{2} \log( \Pp_t(x,y) ) = d_\Mm(x,y) + o(t)
	\qwhereq
	\Pp_t \eqdef (\Id - t\De_{\Mm})^{-1},
}
where $\De_{\Mm}$ is the Laplace--Beltrami operator associated to the manifold $\Mm$ (which is negative semidefinite), so that 
$\Pp_t$ is an integral kernel and $g = \int_\Mm \Pp_t(x,y) f(y) \d y$ is the solution of $g - t \De_{\Mm} g = f$. 
The second formula of Varadhan states
\eql{\label{eq-varadhan-2}
	\sqrt{ -4 t \log( \Hh_t(x,y) ) } = d_\Mm(x,y) + o(t),
}
where  $\Hh_t$ is the integral kernel defined so that $g_t = \int_\Mm \Hh_t(x,y) f(y) \d y$ is the solution at time $t$ of the heat equation
\eq{
	\frac{\partial g_t(x)}{\partial t} = (\Delta_\Mm g_t)(x).
}
The convergence in these formulas~\eqref{eq-varadhan-1} and~\eqref{eq-varadhan-2} is uniform on compact manifolds.
Numerically, the domain $\Mm$ is discretized (for instance, using finite elements) and $\De_\Mm$ is approximated by a discrete Laplacian matrix $L$. A typical example is when using piecewise linear finite elements, so that $L$ is the celebrated cotangent Laplacian (see~\citep{botsch-2010} for a detailed account for this construction).
These formulas can be used to approximate efficiently the multiplication by the Gibbs kernel $\K_{i,j}=e^{-\frac{d(x_i,y_j)^p}{\varepsilon}}$.
Equation~\eqref{eq-varadhan-1} suggests, for the case $p=1$, to use $\varepsilon=\frac{\sqrt{t}}{2}$ and to replace the multiplication by $\K$ by the multiplication by $(\Id-t L)^{-1}$, which necessitates the resolution of a positive symmetric linear system.
Equation~\eqref{eq-varadhan-2}, coupled with $R$ steps of implicit Euler for the stable resolution of the heat flow, suggests for $p=2$ to trade the multiplication by $\K$ by the multiplication by $(\Id-\frac{t}{R} L)^{-R}$ for $4 t=\varepsilon$, which in turn necessitates $R$ resolutions of linear systems. Fortunately, since these linear systems are supposed to be solved at each Sinkhorn iteration, one can solve them efficiently by precomputing a sparse Cholesky factorization. By performing a reordering of the rows and columns of the matrix~\citep{george1989evolution}, one obtains a nearly linear sparsity for 2-D manifolds and thus each Sinkhorn iteration has linear complexity (the performance degrades with the dimension of the manifold). 
The use of Varadhan's formula to approximate geodesic distances was initially proposed in~\citep{Crane2013} and its use in conjunction with Sinkhorn iterations in~\citep{2015-solomon-siggraph}.
\todoK{Maybe add some pedagological figure illustrating the method.}
\end{rem}

\begin{rem}[Extrapolation acceleration]
	Since the Sinkhorn algorithm is a fixed-point algorithm (as shown in Remark~\ref{rem-local-conv}), one can use standard linear or even nonlinear extrapolation schemes to enhance the conditioning of the fixed-point mapping near the solution, and improve the linear convergence rate. 
	This is similar to the successive overrelaxation  method (see, for instance,~\citep{hadjidimos2000successive}), so that the local linear rate of convergence is improved from $O((1-\kappa)^\ell)$ to $O((1-\sqrt{\kappa})^\ell)$ for some $\kappa>0$ (see Remark~\ref{rem-local-conv}). 
	We refer to~\citep{2016-peyre-qot} for more details\todoK{citer papier recent Lenaic}.
\end{rem}

\section{Stability and Log-Domain Computations}

As briefly mentioned in Remark~\ref{rem-stability}, the Sinkhorn algorithm suffers from numerical overflow when the regularization parameter $\epsilon$ is small compared to the entries of the cost matrix $\C$. This concern can be alleviated to some extent by carrying out computations in the log domain. The relevance of this approach is made more clear by considering the dual problem associated to~\eqref{eq-regularized-discr}, in which these log-domain computations arise naturally.

\begin{prop}
One has
\eql{\label{eq-dual-formulation}
	\MKD_\C^\varepsilon(\a,\b) = \umax{\fD \in \RR^n,\gD \in \RR^m}
		 \dotp{\fD}{\a} + \dotp{\gD}{\b} 
		- \varepsilon \dotp{e^{\fD/\varepsilon} }{ \K e^{\gD/\varepsilon}}.
} 
The optimal $(\fD,\gD)$ are linked to scalings $(\uD,\vD)$ appearing in~\eqref{eq-scaling-form} through 
\eql{\label{eq-entropy-pd}
	(\uD,\vD)=(e^{\fD/\varepsilon},e^{\gD/\varepsilon}).
}
\end{prop}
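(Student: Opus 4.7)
The plan is to derive~\eqref{eq-dual-formulation} by Lagrangian duality, picking up exactly where the proof of Proposition~\ref{prop-regularized-primal} left off. That earlier proof already introduced multipliers $(\fD,\gD)\in\RR^n\times\RR^m$ for the two marginal constraints $\P\ones_m=\a$ and $\transp{\P}\ones_n=\b$ and wrote
\[
\Lag(\P,\fD,\gD)= \dotp{\P}{\C} - \varepsilon \HD(\P) - \dotp{\fD}{\P\ones_m-\a}-\dotp{\gD}{\transp{\P}\ones_n-\b}.
\]
Note that, crucially, no explicit nonnegativity constraint on $\P$ is needed: the entropic barrier $-\HD$ takes value $+\infty$ as soon as any $\P_{i,j}$ vanishes, so the inner minimum is automatically attained on the positive orthant. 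Since the objective is $\varepsilon$-strongly convex and continuous, the constraints are affine, and the interior point $\a\otimes\b\in\CouplingsD(\a,\b)$ is strictly feasible, Slater's condition holds and strong duality applies, so $\MKD_\C^\varepsilon(\a,\b)=\max_{\fD,\gD}\min_{\P>0}\Lag(\P,\fD,\gD)$.

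The next step is to compute the inner minimum in closed form. The first-order condition derived in Proposition~\ref{prop-regularized-primal},
\[
\frac{\partial\Lag}{\partial\P_{i,j}}=\C_{i,j}+\varepsilon\log(\P_{i,j})-\fD_i-\gD_j=0,
\]
gives the minimizer $\P_{i,j}^\star(\fD,\gD)=e^{\fD_i/\varepsilon}\K_{i,j}e^{\gD_j/\varepsilon}$, which in the notation $(\uD,\vD)=(e^{\fD/\varepsilon},e^{\gD/\varepsilon})$ reproduces the scaling form~\eqref{eq-scaling-form} and hence identifies the relation~\eqref{eq-entropy-pd} claimed in the proposition.

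It remains to plug $\P^\star$ back into $\Lag$ and simplify. Using the optimality identity $\log\P^\star_{i,j}=(\fD_i+\gD_j-\C_{i,j})/\varepsilon$, one gets
\[
-\varepsilon\HD(\P^\star)=\sum_{i,j}\P^\star_{i,j}(\fD_i+\gD_j-\C_{i,j})-\varepsilon\sum_{i,j}\P^\star_{i,j},
\]
so that $\dotp{\P^\star}{\C}-\varepsilon\HD(\P^\star)=\sum_{i,j}\P^\star_{i,j}(\fD_i+\gD_j)-\varepsilon\sum_{i,j}\P^\star_{i,j}$. Substituting into $\Lag$, the terms in $\sum_{i,j}\P^\star_{i,j}\fD_i$ and $\sum_{i,j}\P^\star_{i,j}\gD_j$ cancel exactly against the marginal multiplier terms, leaving
\[
\Lag(\P^\star,\fD,\gD)=\dotp{\fD}{\a}+\dotp{\gD}{\b}-\varepsilon\sum_{i,j}e^{\fD_i/\varepsilon}\K_{i,j}e^{\gD_j/\varepsilon}=\dotp{\fD}{\a}+\dotp{\gD}{\b}-\varepsilon\dotp{e^{\fD/\varepsilon}}{\K e^{\gD/\varepsilon}},
\]
which is exactly~\eqref{eq-dual-formulation}.

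The only nonroutine point is justifying strong duality and the validity of the swap $\min\max=\max\min$; everything else is bookkeeping. I would handle this either by invoking Fenchel--Rockafellar with the interior feasible point $\a\otimes\b$, or by noting that the primal is coercive and strongly convex so that the unique primal optimum is automatically the saddle point of $\Lag$. Uniqueness of the primal optimizer identifies $\P^\star$ with $\diag(\uD)\K\diag(\vD)$ at the optimal dual pair, closing the link~\eqref{eq-entropy-pd}.
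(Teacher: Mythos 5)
Your proof is correct and follows essentially the same Lagrangian-duality route as the paper: pick up the Lagrangian from Proposition~\ref{prop-regularized-primal}, solve the inner minimization in $\P$ via the first-order condition to get the scaling form, then substitute back and simplify. You are in fact a bit more careful than the paper's exposition at the substitution step, since you keep the multiplier terms $\dotp{\fD}{\P^\star\ones_m-\a}$ and $\dotp{\gD}{\transp{\P^\star}\ones_n-\b}$ explicit and show they cancel against $\sum_{i,j}\P^\star_{i,j}\fD_i$ and $\sum_{i,j}\P^\star_{i,j}\gD_j$, whereas the paper's intermediate formulas silently conflate $\P^\star\ones_m$ with $\a$ before the cancellation is performed; you also explicitly flag the strong-duality/min--max swap, which the paper leaves implicit.
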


\begin{proof}
We start from the end of the proof of Proposition~\ref{prop-regularized-primal}, which links the optimal primal solution $\P$ and dual multipliers $\fD$ and $\gD$ for the marginal constraints as 
\eq{
	\label{eq-prim-dual-ent}\P_{i,j}=e^{\fD_i/\varepsilon}e^{-\C_{i,j}/\varepsilon}e^{\gD_j/\varepsilon}.
} 
Substituting in the Lagrangian $\Lag(\P,\fD,\gD)$ of Equation~\eqref{eq-sinkhorn-lagrangian} the optimal $\P$ as a function of $\fD$ and $\gD$, we obtain that the Lagrange dual function equals
\eql{\label{eq-lagrange-dual-naive}
	\fD,\gD \mapsto \dotp{e^{\fD/\varepsilon}}{\left(\K\odot \C\right)e^{\gD/\varepsilon}} - \varepsilon \HD(\diag(e^{\fD/\varepsilon}) \K \diag(e^{\gD/\varepsilon})).
}
The neg-entropy of $\P$ scaled by $\varepsilon$, namely $\varepsilon \dotp{\P}{\log \P - \ones_{n\times m}}$, can be stated explicitly as a function of $\fD,\gD, \C$,
\begin{align*}
&\dotp{\diag(e^{\fD/\varepsilon}) \K \diag(e^{\gD/\varepsilon})}{\fD\transp{\ones_m}+\ones_n\transp{\gD}-\C-\varepsilon\ones_{n\times m}}\\
&= -\dotp{e^{\fD/\varepsilon}}{\left(\K\odot \C\right)e^{\gD/\varepsilon}} + \dotp{\fD}{\a}+ \dotp{\gD}{\b}-\varepsilon \dotp{e^{\fD/\varepsilon}}{\K e^{\gD/\varepsilon}};
\end{align*}
therefore, the first term in~\eqref{eq-lagrange-dual-naive} cancels out with the first term in the entropy above. The remaining terms are those appearing in~\eqref{eq-dual-formulation}.
\end{proof}

\begin{rem}[Sinkhorn as a block coordinate ascent on the dual problem]\label{rem-sinkh-block}
A simple approach to solving the unconstrained maximization problem~\eqref{eq-dual-formulation} is to use an exact \emph{block coordinate ascent} strategy, namely to update alternatively $\fD$ and $\gD$ to cancel the respective gradients in these variables of the objective of \eqref{eq-dual-formulation}. Indeed, one can notice after a few elementary computations that, writing $Q(\fD,\gD)$ for the objective of~\eqref{eq-dual-formulation},
\begin{align}
	\label{eq-dualupadate-sinkh-1}\nabla|_\fD\, Q(\fD,\gD) &=  \a - e^{\fD/\varepsilon}\odot \left(\K e^{\gD/\varepsilon}\right),\\
	\label{eq-dualupadate-sinkh-2}\nabla|_\gD\, Q(\fD,\gD) &=  \b - e^{\gD/\varepsilon}\odot \left(\transp{\K} e^{\fD/\varepsilon}\right).
\end{align}
Block coordinate ascent can therefore be implemented in a closed form by applying successively the following updates, starting from any arbitrary $\init{\gD}$, for $l\geq 0$:
\begin{align}
	\label{eq-slse-sinkh-1}\itt{\fD} &= \varepsilon \log \a -\varepsilon\log\left(\K e^{\it{\gD}/\varepsilon}\right), \\
	\label{eq-slse-sinkh-2}\itt{\gD} &= \varepsilon \log \b- \varepsilon\log\left(\transp{\K} e^{\itt{\fD}/\varepsilon}\right).
\end{align}
Such iterations are mathematically equivalent to the Sinkhorn iterations~\eqref{eq-sinkhorn} when considering the primal-dual relations highlighted in~\eqref{eq-entropy-pd}. Indeed, we recover that at any iteration
\eq{
	(\it{\fD},\it{\gD}) = \varepsilon ( \log(\it{\uD}), \log(\it{\vD}) ). 
}
\end{rem}

\begin{rem}[Soft-min rewriting]
Iterations~\eqref{eq-slse-sinkh-1} and~\eqref{eq-slse-sinkh-2} can be given an alternative interpretation, using the following notation. Given a vector $\z$ of real numbers we write $\smine\z$ for the \emph{soft-minimum} of its coordinates, namely
\eql{\label{eq-defn-softmin}
	\smine\z= -\varepsilon \log \sum_i e^{-\z_i/\varepsilon}.
}
Note that $\smine(\z)$ converges to $\min\,\z$ for any vector $\z$ as $\varepsilon\rightarrow 0$. Indeed, $\smine$ can be interpreted as a differentiable approximation of the $\min$ function, as shown in Figure~\ref{fig-softmin}. 

\begin{figure}[h!]
\centering
\begin{tabular}{@{}c@{\hspace{1mm}}c@{\hspace{1mm}}c@{\hspace{1mm}}c@{\hspace{1mm}}c@{}}
\includegraphics[width=.19\linewidth]{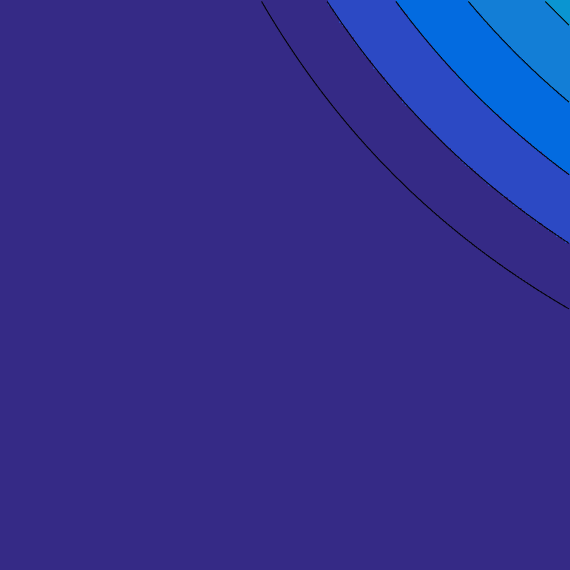}&
\includegraphics[width=.19\linewidth]{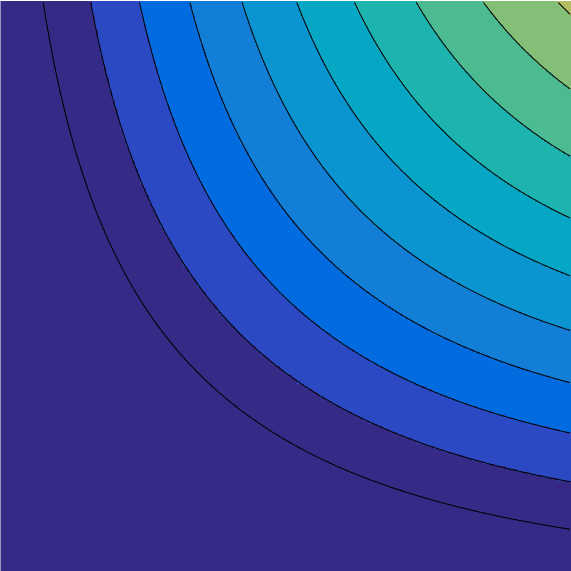}&
\includegraphics[width=.19\linewidth]{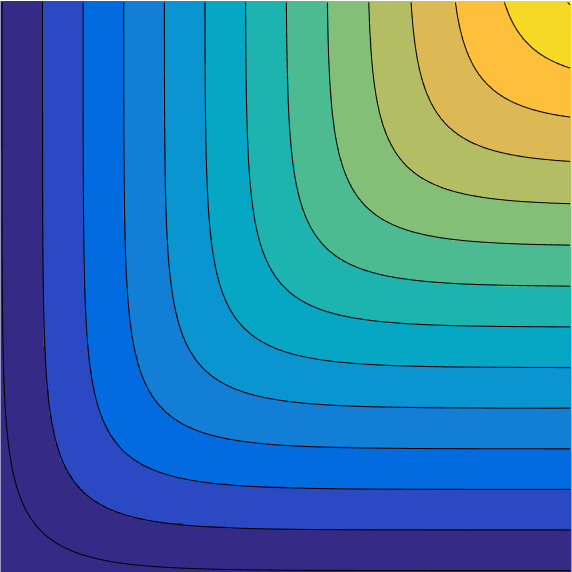}&
\includegraphics[width=.19\linewidth]{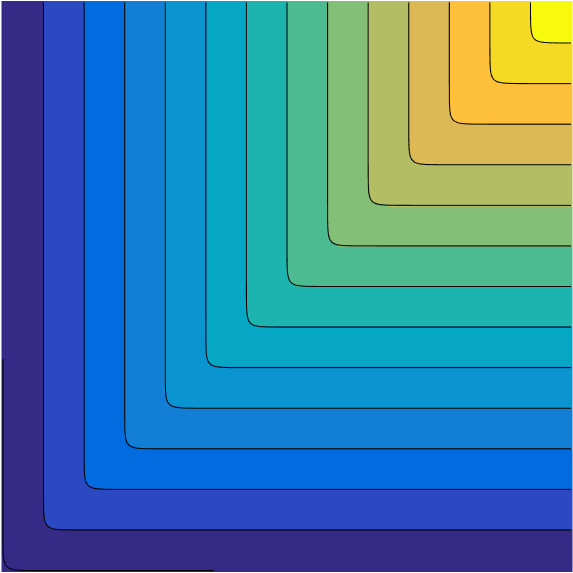}&
\includegraphics[width=.19\linewidth]{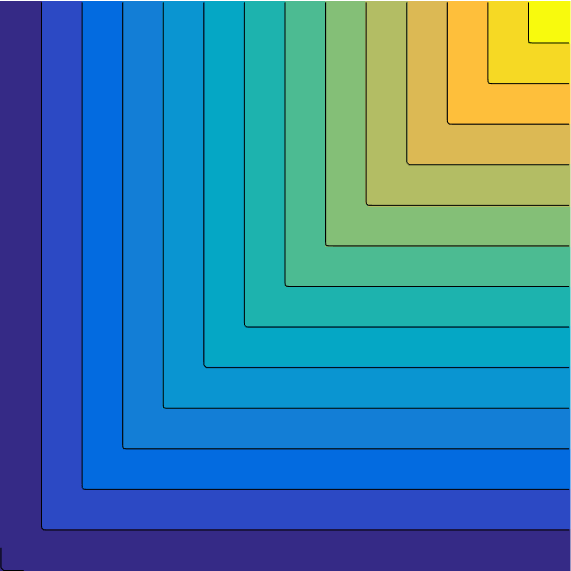} \\
$\varepsilon=1$ &
$\varepsilon=0.5$ &
$\varepsilon=10^{-1}$ &
$\varepsilon=10^{-2}$ &
$\varepsilon=10^{-3}$ 
\end{tabular}
\caption{\label{fig-softmin}
Display of the function $\smine(\z)$ in 2-D, $\z \in \RR^2$, for varying~$\varepsilon$.
}
\end{figure}

Using this notation, Equations~\eqref{eq-slse-sinkh-1} and~\eqref{eq-slse-sinkh-2} can be rewritten
\begin{align}
\label{eq-slse-smin-1}(\itt{\fD})_i &= \smine\, (\C_{ij}-\it{\gD}_j)_j + \varepsilon \log \a_i, \\
\label{eq-slse-smin-2}(\itt{\gD})_j &= \smine\, (\C_{ij}-\it{\fD}_i)_i + \varepsilon \log \b_j.
\end{align}
Here the term $\smine\, (\C_{ij}-\it{\gD}_j)_j$ denotes the soft-minimum of all values of the $j$th column of matrix $(\C-\ones_n (\it{\gD})^\top )$. To simplify notations, we introduce an operator that takes a matrix as input and outputs now a column vector of the soft-minimum values of its columns or rows. Namely, for any matrix $A\in\RR^{n\times m}$, we define
\begin{align*}
	\SMINEr(\A) \eqdef \pa{\smine \pa{\A_{i,j}}_j}_i\in\RR^n,\\ 
	\SMINEc(\A) \eqdef \pa{\smine \pa{\A_{i,j}}_i}_j\in\RR^m.
\end{align*}
Note that these operations are equivalent to the entropic $\c$-transform introduced in~\S\ref{sec-semi-discr-entropy} (see in particular~\eqref{eq-sinkh-c-transf}).
Using this notation, Sinkhorn's iterates read
\begin{align}
	\label{eq-lse-dual-1}\itt{\fD} &= \SMINEr\, (\C-\ones_n\transp{\it{\gD}}) + \varepsilon \log \a, \\
	\label{eq-lse-dual-2}\itt{\gD} &= \SMINEc\, (\C-\it{\fD}\transp{\ones_m})  + \varepsilon \log \b.
\end{align}
Note that as $\varepsilon \rightarrow 0$, $\smine$ converges to $\min$, but the iterations do not converge anymore in the limit $\varepsilon=0$, because alternate minimization does not converge for constrained problems, which is the case for the unregularized dual~\eqref{eq-dual}.
\end{rem}

\begin{rem}[Log-domain Sinkhorn]\label{rem-log-sinkh}
While mathematically equivalent to the Sinkhorn updates~\eqref{eq-sinkhorn}, iterations~\eqref{eq-slse-smin-1} and~\eqref{eq-slse-smin-2} suggest using the \emph{log-sum-exp} stabilization trick to avoid underflow for small values of $\varepsilon$. Writing $\underbar{z}=\min \z$, that trick suggests evaluating $\smine \z$ as
\eql{\label{eq-log-domain-min}
	\smine \z = \underbar{z} -\varepsilon \log \sum_i e^{-(\z_i-\underbar{z})/\varepsilon}.
}
Instead of substracting $\underbar{z}$ to stabilize the log-domain iterations as in~\eqref{eq-log-domain-min}, one can actually substract the previously computed scalings. 
This leads to the stabilized iteration
\begin{align}
	\label{eq-lse-sinkh-1}\itt{\fD} &= \SMINEr( \logP(\it{\fD},\it{\gD}))  + \it{\fD} + \varepsilon\log(\a), \\
	\label{eq-lse-sinkh-2}\itt{\gD} &= \SMINEc( \logP(\itt{\fD},\it{\gD})) + \it{\gD} + \varepsilon \log(\b), 
\end{align}
where we defined
\eq{
	\logP(\fD,\gD) = \pa{\C_{i,j} - \fD_i - \gD_j}_{i,j}.
}
In contrast to the original iterations~\eqref{eq-sinkhorn}, these log-domain iterations~\eqref{eq-lse-sinkh-1} and~\eqref{eq-lse-sinkh-2} are stable for arbitrary $\varepsilon>0$,
because the quantity $\logP(\fD,\gD)$ stays bounded during the iterations. 
The downside is that it requires $nm$ computations of $\exp$ at each step. 
Computing a $\SMINEr$ or $\SMINEc$ is typically substantially slower than matrix multiplications and requires computing line by line soft-minima of matrices $\logP$. There is therefore no efficient way to parallelize the application of Sinkhorn maps for several marginals simultaneously.
In Euclidean domains of small dimension, it is possible to develop efficient multiscale solvers with a decaying $\varepsilon$ strategy to significantly speed up the computation using sparse grids~\citep{schmitzer2016stabilized}.
\end{rem}

\begin{rem2}{Dual for generic measures}
For generic and not necessarily discrete input measures $(\al,\be)$, the dual problem~\eqref{eq-dual-formulation} reads
\eql{\label{eq-dual-entropic}
	\usup{(\f,\g) \in \Cc(\X)\times\Cc(\Y)} \int_\X \f \d\al + \int_\Y \g \d\be 
		 - \varepsilon \int_{\X\times\Y} e^{ \frac{-c(x,y)+f(x)+g(y)}{\varepsilon} } \d\al(x)\d\be(y).
}
This corresponds to a smoothing of the constraint $\Potentials(\c)$ appearing in the original problem~\eqref{eq-dual-generic}, which is retrieved in the limit $\varepsilon \rightarrow 0$.
Proving existence (\ie the sup is actually a max) of these Kantorovich potentials $(\f,\g)$ in the case of entropic transport is less easy than for classical OT, because one cannot use the $c$-transform and potentials are not automatically Lipschitz. Proof of existence can be done using the convergence of Sinkhorn iterations; see~\citep{2016-chizat-sinkhorn} for more details. 
\end{rem2}

\begin{rem2}{Unconstrained entropic dual}
As in Remark~\ref{rem-uncons-dual}, in the case $\int_\X \d\mu=\int_\Y\d\nu = 1$, one can consider an  alternative dual formulation
\eql{\label{eq-dual-entropic-alt}
	\usup{(\f,\g) \in \Cc(\X)\times\Cc(\Y)} \int_\X \f \d\al + \int_\Y \g \d\be 
		+ \smine( c - f \oplus g ),
}
which achieves the same optimal value as~\eqref{eq-dual-entropic}.
Similarly to~\eqref{eq-defn-softmin}, the soft-minimum (here on $\X \times \Y$) is defined as
\eq{
	\foralls S \in \Cc(\X\times\Y), \quad
	\smine S \eqdef 
	- \varepsilon \int_{\X\times\Y} e^{ \frac{-S(x,y)}{\varepsilon} } \d\al(x)\d\be(y)
}
(note that it depends on $(\al,\be)$).
As $\epsilon \rightarrow 0$, $\smine \rightarrow \min$, as used in the unregularized and unconstrained formulation~\eqref{eq-dual-uncons}.
Note that while both~\eqref{eq-dual-entropic} and~\eqref{eq-dual-entropic-alt} are unconstrained problems, 
a chief advantage of~\eqref{eq-dual-entropic-alt} is that it is better conditioned, 
in the sense that the Hessian of the functional is uniformly bounded by $\epsilon$. 
Another way to obtain such a conditioning improvement is to consider semidual problems; 
see \S\ref{sec-semi-discr-entropy} and in particular Remark~\ref{rem-second-order-smoothness}.
A disadvantage of this alternative dual formulation is that the presence of a log prevents the use of stochastic optimization methods as detailed in \S\ref{sec-sgd}; see in particular Remark~\ref{rem-sgd-cont-cont}.
\end{rem2}

\section{Regularized Approximations of the Optimal Transport Cost}
\label{sec-regularized-cost}

The entropic dual~\eqref{eq-dual-formulation} is a smooth unconstrained concave maximization problem, which approximates the original Kantorovich dual~\eqref{eq-dual}, as detailed in the following proposition. \todoK{explain soft indicator constraint, transition to result below.}

\begin{prop}\label{prop-feasibility-dual}
Any pair of optimal solutions $(\fD^\star,\gD^\star)$ to~\eqref{eq-dual-formulation} are such that $(\fD^\star,\gD^\star) \in \PotentialsD(\C)$, the set of feasible Kantorovich potentials defined in~\eqref{eq-feasible-potential}. As a consequence, we have that for any $\varepsilon$,
\eq{
	\dotp{\fD^\star}{\a} + \dotp{\gD^\star}{\b} \leq  \MKD_\C(\a,\b).
}
\end{prop}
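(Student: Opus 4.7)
The plan is to use the first-order optimality conditions for the smooth concave problem~\eqref{eq-dual-formulation} to reconstruct a primal coupling, and then to exploit the fact that all entries of a coupling between probability vectors must lie in $[0,1]$.

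First I would differentiate the objective of~\eqref{eq-dual-formulation} with respect to $\fD$ and $\gD$. Setting the gradients to zero at the optimum $(\fD^\star,\gD^\star)$ recovers exactly the two identities~\eqref{eq-dualupadate-sinkh-1} and~\eqref{eq-dualupadate-sinkh-2}, namely
\eq{
	\a = e^{\fD^\star/\varepsilon} \odot (\K e^{\gD^\star/\varepsilon})
	\qandq
	\b = e^{\gD^\star/\varepsilon} \odot (\transp{\K} e^{\fD^\star/\varepsilon}).
}
Defining $\P^\star_{i,j} \eqdef e^{\fD^\star_i/\varepsilon}\K_{i,j}e^{\gD^\star_j/\varepsilon} = e^{(\fD^\star_i+\gD^\star_j-\C_{i,j})/\varepsilon}$, the two identities above read $\P^\star\ones_m=\a$ and $\transp{\P^\star}\ones_n=\b$, so that $\P^\star \in \CouplingsD(\a,\b)$.

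The key observation is then the following elementary bound: since $\P^\star_{i,j} \geq 0$ and $\sum_{i,j} \P^\star_{i,j} = \transp{\ones_n}\a = 1$, every entry satisfies $\P^\star_{i,j} \leq 1$. Substituting the exponential expression for $\P^\star_{i,j}$ gives $\exp((\fD^\star_i+\gD^\star_j-\C_{i,j})/\varepsilon) \leq 1$, i.e.,
\eq{
	\foralls (i,j) \in \range{n} \times \range{m}, \quad \fD^\star_i + \gD^\star_j \leq \C_{i,j},
}
which is precisely the statement that $(\fD^\star,\gD^\star) \in \PotentialsD(\C)$ by the definition~\eqref{eq-feasible-potential}.

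The second inequality is then immediate from the Kantorovich duality~\eqref{eq-dual}: since $(\fD^\star,\gD^\star)$ is a feasible point of the (unregularized) dual problem, its objective value is bounded above by the supremum, which by Proposition~\ref{prop-duality-discr} equals $\MKD_\C(\a,\b)$. Hence $\dotp{\fD^\star}{\a} + \dotp{\gD^\star}{\b} \leq \MKD_\C(\a,\b)$. There is essentially no hard step here; the only minor point requiring care is the justification that the supremum in~\eqref{eq-dual-formulation} is actually attained, so that the first-order conditions apply. This follows from strict concavity of the objective (after quotienting by the one-dimensional invariance $(\fD,\gD)\mapsto(\fD+\lambda,\gD-\lambda)$) together with coercivity in the remaining directions, which can itself be checked using the convergence of the Sinkhorn iterates discussed in Remark~\ref{rem-global-conv-sinkh}.
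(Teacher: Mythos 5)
Your proposal is correct and uses essentially the same argument as the paper: recover the primal coupling $\P^\star_{i,j}=e^{(\fD^\star_i+\gD^\star_j-\C_{i,j})/\varepsilon}$ from the stationarity conditions, note that its entries are bounded by $1$ since they sum to one, and deduce feasibility for the unregularized dual. The paper states this more tersely by invoking the primal--dual relation~\eqref{eq-prim-dual-ent} directly, while you spell out the gradient computation and the existence of a maximizer; these are elaborations rather than a different route.
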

\begin{proof}
	Primal-dual optimality conditions in~\eqref{eq-prim-dual-ent} with the constraint that $\P$ is a probability and therefore $\P_{i,j}\leq 1$ for all $i,j$ yields that $\exp(-(\fD^\star_i+\gD^\star_j-\C_{i,j})/\varepsilon)\leq 1$ and therefore that $\fD^\star_i+\gD^\star_j\leq\C_{i,j}$.
\end{proof}

A chief advantage of the regularized transportation cost $\MKD_\C^\varepsilon$ defined in~\eqref{eq-regularized-discr} is that it is smooth and convex, which makes it a perfect fit for integrating as a loss function in variational problems (see Chapter~\ref{c-variational}).

\begin{prop}\label{prop-convexity-dual}
$\MKD_\C^\varepsilon(\a,\b)$ is a jointly convex function of $\a$ and $\b$ for $\epsilon\geq 0$. When $\epsilon>0$, its gradient is equal to
\eq{
	\nabla\MKD_\C^\varepsilon(\a,\b)= \begin{bmatrix} \fD^\star \\ \gD^\star \end{bmatrix},
}
where $\fD^\star$ and $\gD^\star$ are the optimal solutions of Equation~\eqref{eq-dual-formulation} chosen so that their coordinates sum to 0.
\end{prop}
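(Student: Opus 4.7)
My plan is to derive both the joint convexity statement and the gradient formula from the dual representation~\eqref{eq-dual-formulation}, exploiting the fact that the dual objective is jointly affine in $(\a,\b)$ and concave in $(\fD,\gD)$. For joint convexity, I would simply observe that
\[
\MKD^\varepsilon_\C(\a,\b) \;=\; \sup_{(\fD,\gD)\in\RR^n\times\RR^m} \Big[\dotp{\fD}{\a}+\dotp{\gD}{\b}-\varepsilon\dotp{e^{\fD/\varepsilon}}{\K e^{\gD/\varepsilon}}\Big]
\]
is a pointwise supremum of functions that are, for each fixed $(\fD,\gD)$, jointly affine in $(\a,\b)$, and hence jointly convex. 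The $\varepsilon=0$ case is handled identically via the unregularized dual~\eqref{eq-dual}, whose feasible set $\PotentialsD(\C)$ does not depend on $(\a,\b)$. (An equivalent primal proof uses the fact that, given optimal plans $\P_1 \in \CouplingsD(\a_1,\b_1)$ and $\P_2\in \CouplingsD(\a_2,\b_2)$, the convex combination $t\P_1+(1-t)\P_2$ lies in $\CouplingsD(t\a_1+(1-t)\a_2,t\b_1+(1-t)\b_2)$, combined with convexity of $\P\mapsto \dotp{\P}{\C}-\varepsilon \HD(\P)$; but the dual route also delivers the gradient formula almost for free.)

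For the gradient when $\varepsilon>0$, I would invoke a Danskin-type envelope argument. The dual objective $Q_\varepsilon(\fD,\gD;\a,\b)$ is smooth, strictly concave in $(\fD,\gD)$ up to the one-parameter symmetry $(\fD,\gD)\mapsto(\fD+c\ones_n,\gD-c\ones_m)$ (which leaves $Q_\varepsilon$ invariant since $\a$ and $\b$ have unit mass and the Gibbs term is multiplicatively invariant), and a maximizer $(\fD^\star,\gD^\star)$ exists (e.g.\ obtained as the Sinkhorn limit; see Remark~\ref{rem-global-conv-sinkh}). Since $Q_\varepsilon$ is jointly affine in $(\a,\b)$, Danskin's theorem immediately yields
\[
\nabla_{(\a,\b)}\MKD^\varepsilon_\C(\a,\b)\;=\;\nabla_{(\a,\b)}Q_\varepsilon(\fD^\star,\gD^\star;\a,\b)\;=\;(\fD^\star,\gD^\star).
\]
This can also be checked by hand: plugging any suboptimal $(\fD^\star,\gD^\star)$ into the dual at a perturbed pair $(\a',\b')$ provides the affine lower bound $\MKD^\varepsilon_\C(\a',\b')\geq Q_\varepsilon(\fD^\star,\gD^\star;\a',\b')$, which coincides with $\MKD^\varepsilon_\C(\a,\b)$ at $(\a,\b)$; so $(\fD^\star,\gD^\star)$ is a subgradient, and smoothness of $\MKD^\varepsilon_\C$ (strict concavity of $Q_\varepsilon$ modulo the shift) promotes it to a gradient.

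The main obstacle is neither the convexity argument nor the envelope calculation, both of which are routine once the dual is in hand; it is rather the normalization ambiguity of the dual optimizer. Because $(\a,\b)\in\simplex_n\times\simplex_m$ are constrained to sum to one, the gradient is only intrinsically defined on the tangent space to the product of simplices, and shifts of the form $(\fD,\gD)\mapsto(\fD+c\ones_n,\gD-c\ones_m)$ are orthogonal to this tangent space; hence every dual maximizer produces the same directional derivatives on the simplex. The one-parameter shift freedom matches exactly one linear constraint, so imposing $\sum_i\fD^\star_i+\sum_j\gD^\star_j=0$ selects a unique representative and yields the formula as stated. Making this identification precise is the only delicate point of the proof.
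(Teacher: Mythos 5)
The paper leaves this proposition's proof as a TODO, so there is no official argument to compare against; your proposal is thus judged on its own merits, and the core strategy (joint convexity as a pointwise supremum over $(\fD,\gD)$ of functions affine in $(\a,\b)$, then a Danskin-type envelope argument for the gradient) is exactly the right one and is correct as far as it goes, including the observation that the same argument handles $\epsilon=0$ with the feasible set $\PotentialsD(\C)$.

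The one place where your proof goes slightly wrong is precisely the part you flag as delicate: the normalization of the dual optimizer. You interpret ``their coordinates sum to $0$'' as the single scalar constraint $\sum_i\fD^\star_i+\sum_j\gD^\star_j=0$ and claim it removes the one-parameter shift $(\fD,\gD)\mapsto(\fD+c\ones_n,\gD-c\ones_m)$. But that shift changes $\sum_i\fD_i+\sum_j\gD_j$ by $(n-m)c$, so for $n=m$ your normalization is shift-invariant and fails to select anything; it is not a valid gauge-fixing. The paper's convention, spelled out in the analogous ``Derivative with respect to histograms'' proposition later in the text, is the two-constraint centering $\sum_i\fD_i=0$ and $\sum_j\gD_j=0$. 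Note that this pair of constraints cannot in general be met within the one-parameter family of dual maximizers; it describes the orthogonal projection of any dual maximizer onto the tangent space $\{(v,w):\sum_iv_i=0,\sum_jw_j=0\}$ of $\simplex_n\times\simplex_m$, and that projected vector need not itself solve the dual. The conceptual point you miss is that there are \emph{two} distinct ambiguity spaces in play: the dual-maximizer ambiguity is one-dimensional, spanned by $(\ones_n,-\ones_m)$, but the ambiguity in representing the relative gradient as a vector in $\RR^{n+m}$ is two-dimensional, spanned by $(\ones_n,0)$ and $(0,\ones_m)$ (the normal space to the product simplex). Your sentence ``the one-parameter shift freedom matches exactly one linear constraint'' conflates these. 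Once this is disentangled, the convexity and envelope parts of your argument go through verbatim and give the correct statement, with the gradient read as the relative gradient on the affine slice $\{\sum_i\a_i=\sum_j\b_j\}$ (where $\MKD^\varepsilon_\C$ is actually differentiable; as an extended convex function on all of $\RR^{n+m}$ it has a line of subgradients, the full set of dual maximizers).
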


\todoK{
\begin{proof}
	\todo{TODO}
\end{proof}
}

In \citep{CuturiSinkhorn}, lower and upper bounds to approximate the Wasserstein distance between two histograms were proposed. These bounds consist in evaluating the primal and dual objectives at the solutions provided by the Sinkhorn algorithm.

\begin{defn}[Sinkhorn divergences]
	Let $\fD^\star$ and $\gD^\star$ be optimal solutions to~\eqref{eq-dual-formulation} and $\P^\star$ be the solution to~\eqref{eq-regularized-discr}. The Wasserstein distance is approximated using the following primal and dual Sinkhorn divergences:
\begin{align*}
	\SINKHORNP_\C^\varepsilon(\a,\b)& \eqdef \dotp{\C}{\P^\star} =  \dotp{e^{\frac{\fD^\star}{\varepsilon}}}{(\K\odot \C)e^{\frac{\gD^\star}{\varepsilon}}}, \\
	\SINKHORND_\C^\varepsilon(\a,\b) & \eqdef \dotp{\fD^\star}{\a} + \dotp{\gD^\star}{\b},
\end{align*}
where $\odot$ stands for the elementwise product of matrices, 
\end{defn}

\begin{prop}\label{prop-sinkhorn-div}
The following relationship holds:
\eq{
	\SINKHORND_\C^\varepsilon(\a,\b)\leq \MKD_\C^\varepsilon(\a,\b) \leq \SINKHORNP_\C^\varepsilon(\a,\b).
}
Furthermore
\eql{\label{eq-sinkh-div-gap}
	\SINKHORNP_\C^\varepsilon(\a,\b) - \SINKHORND_\C^\varepsilon(\a,\b) = \varepsilon ( \HD(\P^\star)+1 ) .
}
\end{prop}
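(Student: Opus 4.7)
The plan is to derive both claims directly from the primal-dual structure of the regularized problem already assembled in the preceding propositions.

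For the sandwich inequalities, the key observation will be that the regularized optima $\P^\star$ and $(\fD^\star,\gD^\star)$ remain \emph{feasible} for the original, unregularized primal and dual problems, so feasibility alone supplies both bounds. On the primal side, $\P^\star \in \CouplingsD(\a,\b)$ is a valid coupling and hence an admissible candidate for the unregularized Kantorovich problem; this gives $\MKD_\C(\a,\b) \leq \dotp{\P^\star}{\C} = \SINKHORNP_\C^\varepsilon(\a,\b)$. On the dual side, Proposition~\ref{prop-feasibility-dual} asserts $(\fD^\star,\gD^\star) \in \PotentialsD(\C)$, so by weak duality for the unregularized problem $\SINKHORND_\C^\varepsilon(\a,\b) = \dotp{\fD^\star}{\a} + \dotp{\gD^\star}{\b} \leq \MKD_\C(\a,\b)$. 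If the middle quantity is read as the regularized $\MKD_\C^\varepsilon$, the upper bound still follows immediately from positivity of $\HD$ on couplings, since $\MKD_\C^\varepsilon = \SINKHORNP_\C^\varepsilon - \varepsilon\HD(\P^\star)$.

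For the identity~\eqref{eq-sinkh-div-gap}, I would use the explicit primal-dual relation $\P^\star_{i,j} = e^{(\fD^\star_i + \gD^\star_j - \C_{i,j})/\varepsilon}$ recorded in the proof of Proposition~\ref{prop-regularized-primal} (see also~\eqref{eq-prim-dual-ent}). Taking logarithms gives $\varepsilon \log \P^\star_{i,j} = \fD^\star_i + \gD^\star_j - \C_{i,j}$; then multiplying through by $\P^\star_{i,j}$, summing over $(i,j)$, and invoking the marginal constraints $\sum_j \P^\star_{i,j} = \a_i$ and $\sum_i \P^\star_{i,j} = \b_j$ yields
\eq{
\varepsilon \sum_{i,j} \P^\star_{i,j}\log \P^\star_{i,j}
= \dotp{\fD^\star}{\a} + \dotp{\gD^\star}{\b} - \dotp{\P^\star}{\C}
= \SINKHORND_\C^\varepsilon(\a,\b) - \SINKHORNP_\C^\varepsilon(\a,\b).
}
To conclude, I would re-express $\sum_{i,j}\P^\star_{i,j}\log\P^\star_{i,j}$ in terms of $\HD(\P^\star)$ using the definition $\HD(\P) = -\sum_{i,j} \P_{i,j}(\log\P_{i,j} - 1)$ together with $\sum_{i,j}\P^\star_{i,j} = 1$, and rearrange to the claimed form $\varepsilon(\HD(\P^\star)+1)$.

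The main obstacle is essentially careful bookkeeping of the constant arising from the nonstandard convention $\HD(\P) = -\sum \P_{i,j}(\log \P_{i,j} - 1)$, which offsets the usual Shannon entropy by $\sum \P^\star_{i,j}=1$ at the optimum; this constant has to be tracked to match the right-hand side of~\eqref{eq-sinkh-div-gap} exactly. A slightly cleaner alternative is to invoke strong duality twice: the primal objective reads $\MKD_\C^\varepsilon = \SINKHORNP_\C^\varepsilon - \varepsilon \HD(\P^\star)$, and evaluating the dual functional~\eqref{eq-dual-formulation} at $(\fD^\star,\gD^\star)$ while using the identity $\dotp{e^{\fD^\star/\varepsilon}}{\K e^{\gD^\star/\varepsilon}} = \sum_{i,j} \P^\star_{i,j} = 1$ gives a closed form for $\MKD_\C^\varepsilon$ purely in terms of $\SINKHORND_\C^\varepsilon$; subtracting the two expressions for $\MKD_\C^\varepsilon$ then produces the gap identity directly.
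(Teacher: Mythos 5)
Your approach is sound and the second ``cleaner alternative'' is in fact exactly what the paper does: express $\MKD_\C^\varepsilon$ twice via strong duality, once from the primal as $\SINKHORNP_\C^\varepsilon - \varepsilon\HD(\P^\star)$ and once from the dual as $\SINKHORND_\C^\varepsilon - \varepsilon\dotp{e^{\fD^\star/\varepsilon}}{\K e^{\gD^\star/\varepsilon}} = \SINKHORND_\C^\varepsilon - \varepsilon$, then subtract. Your first route (taking logs of $\P^\star_{i,j} = e^{(\fD^\star_i+\gD^\star_j-\C_{i,j})/\varepsilon}$, weighting by $\P^\star$, summing, and invoking the marginals) is a mild variant that computes the Shannon entropy directly rather than passing through two expressions for $\MKD_\C^\varepsilon$; it uses the same primal--dual relation and lands in the same place.

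The one genuine issue is the sign of the constant, and it is exactly the ``careful bookkeeping'' you flagged. Carry out the rearrangement you describe: from $\HD(\P) = -\sum\P_{ij}\log\P_{ij} + \sum\P_{ij}$ and $\sum\P^\star_{ij}=1$ one gets $\sum\P^\star_{ij}\log\P^\star_{ij} = 1 - \HD(\P^\star)$, and your identity $\varepsilon\sum\P^\star_{ij}\log\P^\star_{ij} = \SINKHORND_\C^\varepsilon - \SINKHORNP_\C^\varepsilon$ then gives
\begin{equation*}
\SINKHORNP_\C^\varepsilon(\a,\b) - \SINKHORND_\C^\varepsilon(\a,\b) = \varepsilon\bigl(\HD(\P^\star) - 1\bigr),
\end{equation*}
not $+1$. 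The strong-duality route gives the same: $\SINKHORNP_\C^\varepsilon - \varepsilon\HD(\P^\star) = \SINKHORND_\C^\varepsilon - \varepsilon$ rearranges to $\varepsilon(\HD(\P^\star)-1)$. So you should not try to ``rearrange to the claimed form $\varepsilon(\HD(\P^\star)+1)$''; the correct constant is $-1$, and the $+1$ in~\eqref{eq-sinkh-div-gap} is a typo (the paper's own displayed chain of equalities yields $-1$ as well). Note as a sanity check that $\HD(\P)\geq 1$ for any probability coupling under this convention, so $\varepsilon(\HD(\P^\star)-1)\geq 0$ is consistent with the sandwich.

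On the sandwich itself: you are right to read the middle term as the unregularized $\MKD_\C(\a,\b)$ and to argue by feasibility ($\P^\star$ admissible for the unregularized primal, $(\fD^\star,\gD^\star)$ admissible for the unregularized dual via Proposition~\ref{prop-feasibility-dual}). As written with $\MKD_\C^\varepsilon$, the lower inequality would actually be false, since $\MKD_\C^\varepsilon = \SINKHORND_\C^\varepsilon - \varepsilon < \SINKHORND_\C^\varepsilon$; only the upper inequality survives in the regularized reading, exactly as you note.
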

\begin{proof}
Equation~\eqref{eq-sinkh-div-gap} is obtained by writing that the primal and dual problems have the same values at the optima (see~\eqref{eq-dual-formulation}), and hence
\eq{
	\MKD_\C^\varepsilon(\a,\b) = \SINKHORNP_\C^\varepsilon(\a,\b) - \varepsilon \HD(\P^\star) = 
	\SINKHORND_\C^\varepsilon(\a,\b) - \varepsilon \dotp{e^{\fD^\star/\varepsilon} }{ \K e^{\gD^\star/\varepsilon}}
}
The final result can be obtained by remarking that $\dotp{e^{\fD^\star/\varepsilon} }{ \K e^{\gD^\star/\varepsilon}}=1$, since the latter amounts to computing the sum of all entries of $\P^\star$.
\end{proof}

The relationships given above suggest a practical way to bound the actual OT distance, but they are, in fact, valid only upon convergence of the Sinkhorn algorithm and therefore never truly useful in practice. Indeed, in practice Sinkhorn iterations are always terminated after a certain accuracy threshold is reached. When a predetermined number of $L$ iterations is set and used to evaluate $\SINKHORND_\C^\varepsilon$ using iterates $\itL{\fD}$ and $\itL{\gD}$ instead of optimal solutions $\fD^\star$ and $\gD^\star$, one recovers, however, a lower bound: Using notation appearing in Equations~\eqref{eq-lse-sinkh-1} and~\eqref{eq-lse-sinkh-2}, we thus introduce the following finite step approximation of $\MKD_\C^\varepsilon$:
\eql{\label{eq-algorithmic-loss}
	\itL{\SINKHORND_\C}(\a,\b) \eqdef \dotp{\itL{\fD}}{\a} + \dotp{\itL{\gD}}{\b}.
}
This ``algorithmic'' Sinkhorn functional lower bounds the regularized cost function as soon as $L\geq 1$.

\begin{prop}[Finite Sinkhorn divergences]\label{defn-sinkhorn-div}
The following relationship holds:
\eq{
	\itL{\SINKHORND_\C}(\a,\b) \leq \MKD_\C^\varepsilon(\a,\b).
}
\end{prop}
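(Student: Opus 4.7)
The plan is to obtain the inequality as a direct consequence of weak duality for the dual problem~\eqref{eq-dual-formulation}. The dual objective
$$ Q(\fD,\gD) \eqdef \dotp{\fD}{\a} + \dotp{\gD}{\b} - \varepsilon \dotp{e^{\fD/\varepsilon}}{\K e^{\gD/\varepsilon}} $$
satisfies $\sup_{\fD,\gD} Q(\fD,\gD) = \MKD_\C^\varepsilon(\a,\b)$ by strong duality, so for any pair $(\fD,\gD) \in \RR^n \times \RR^m$ one has the pointwise weak-duality bound $Q(\fD,\gD) \leq \MKD_\C^\varepsilon(\a,\b)$. The entire proof reduces to evaluating this inequality at the specific iterate $(\itL{\fD},\itL{\gD})$ and simplifying the penalty term using the Sinkhorn update rule.

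The key calculation is to show that $\dotp{e^{\itL{\fD}/\varepsilon}}{\K e^{\itL{\gD}/\varepsilon}} = 1$. Recalling from Remark~\ref{rem-sinkh-block} that the log-domain Sinkhorn updates~\eqref{eq-slse-sinkh-1}--\eqref{eq-slse-sinkh-2} are exact block-coordinate ascent on $Q$, each half-step annihilates one of the gradients~\eqref{eq-dualupadate-sinkh-1}--\eqref{eq-dualupadate-sinkh-2}. Concretely, after a $\gD$-update via~\eqref{eq-slse-sinkh-2} the identity
$$ e^{\itL{\gD}_j/\varepsilon}\bigl(\transp{\K} e^{\itL{\fD}/\varepsilon}\bigr)_j = \b_j $$
holds for every $j \in \range{m}$; summing over $j$ and using $\sum_j \b_j = 1$ yields $\dotp{e^{\itL{\fD}/\varepsilon}}{\K e^{\itL{\gD}/\varepsilon}} = 1$, and a symmetric computation applies after an $\fD$-update via~\eqref{eq-slse-sinkh-1}. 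This is exactly the same saturation identity that underlies the final step in the proof of Proposition~\ref{prop-sinkhorn-div}, and it is available as soon as $L \geq 1$.

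Combining these two steps gives
$$ Q(\itL{\fD},\itL{\gD}) = \itL{\SINKHORND_\C}(\a,\b) - \varepsilon \leq \MKD_\C^\varepsilon(\a,\b), $$
which is the desired bound (the additive $\varepsilon$ matches the optimal gap identified in~\eqref{eq-sinkh-div-gap}). The main subtlety, rather than the main obstacle, is that the saturation identity genuinely fails at the arbitrary initialization $\init{\gD}$; this is why the statement requires $L \geq 1$, i.e., at least one completed coordinate update. No convergence rate, stability estimate, or contraction argument is involved: the bound is valid at \emph{every} such iterate as a one-line consequence of weak duality together with the explicit form of the Sinkhorn updates.
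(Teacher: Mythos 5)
Your derivation is correct up through the evaluation $Q(\itL{\fD},\itL{\gD}) = \itL{\SINKHORND_\C}(\a,\b) - \varepsilon$, but the last step does not give the stated inequality: you conclude $\itL{\SINKHORND_\C}(\a,\b) - \varepsilon \leq \MKD_\C^\varepsilon(\a,\b)$, which is weaker by exactly $\varepsilon$ than what the proposition asserts, and that $\varepsilon$ cannot be discarded. Your own comment that the additive $\varepsilon$ matches the optimality gap in~\eqref{eq-sinkh-div-gap} in fact makes the problem manifest: at the Sinkhorn optimum one has $\SINKHORND_\C^\varepsilon(\a,\b) = \MKD_\C^\varepsilon(\a,\b) + \varepsilon$, so the inequality $\itL{\SINKHORND_\C}(\a,\b) \leq \MKD_\C^\varepsilon(\a,\b)$ must actually fail for $L$ large. (The superscript $\varepsilon$ on the right-hand side of the proposition appears to be a typo for the unregularized cost $\MKD_\C(\a,\b)$, which is the quantity the paper's proof, and Proposition~\ref{prop-feasibility-dual} to which it appeals, genuinely controls.)

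The paper's proof uses the same starting observation --- after one $\gD$-update the matrix with entries $\itL{\P}_{i,j} = \exp(-(\C_{i,j} - \itL{\fD}_i - \itL{\gD}_j)/\varepsilon)$ has column sums equal to $\b$ --- but extracts a different consequence. Because each column sums to $\b_j \leq 1$ and the entries are nonnegative, every entry satisfies $\itL{\P}_{i,j} \leq 1$, i.e., $\itL{\fD}_i + \itL{\gD}_j \leq \C_{i,j}$ for all $(i,j)$. This is exactly the constraint defining $\PotentialsD(\C)$, so the iterate is feasible for the \emph{unregularized} Kantorovich dual, and weak duality for the unregularized linear program gives $\dotp{\itL{\fD}}{\a} + \dotp{\itL{\gD}}{\b} \leq \MKD_\C(\a,\b)$ with no slack. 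Routing through the regularized dual objective $Q$ as you do inevitably carries the normalization term $-\varepsilon\dotp{e^{\itL{\fD}/\varepsilon}}{\K e^{\itL{\gD}/\varepsilon}} = -\varepsilon$; to absorb it you would need the auxiliary bound $\MKD_\C^\varepsilon(\a,\b) + \varepsilon \leq \MKD_\C(\a,\b)$, but establishing that already requires the dual-feasibility observation, so the detour through the regularized dual gains nothing.
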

\begin{proof}Similarly to the proof of Proposition~\ref{prop-feasibility-dual}, we exploit the fact that after even just one single Sinkhorn iteration, we have, following~\eqref{eq-slse-sinkh-1} and~\eqref{eq-slse-sinkh-2}, that $\itL{\fD}$ and $\itL{\gD}$ are such that the matrix with elements $\exp(-(\itL{\fD}_i+\itL{\gD}_j-\C_{i,j})/\varepsilon)$ has column sum $\b$ and its elements are therefore each upper bounded by $1$, which results in the dual feasibility of $(\itL{\fD}_i,\itL{\gD})$.
\end{proof}

\begin{rem}[Primal infeasibility of the Sinkhorn iterates]
Note that the primal iterates provided in \eqref{eq-sink-matrix} are not primal feasible, since, by definition, these iterates are designed to satisfy upon convergence marginal constraints. Therefore, it is not valid to consider $\dotp{\C}{\P^{(2L+1)}}$ as an approximation of $\MKD_\C(\a,\b)$ since $\P^{(2L+1)}$ is not feasible. Using the rounding scheme of~\citet{altschuler2017near} laid out in Remark~\ref{rem-complexity-rounding} one can, however, yield an upper bound on $\MKD_\C^\varepsilon(\a,\b)$ that can, in addition, be conveniently computed using matrix operations in parallel for several pairs of histograms, in the same fashion as Sinkhorn's algorithm~\citep{NIPS2018_8184}.
\end{rem}

\begin{rem}[Nonconvexity of finite dual Sinkhorn divergence] Unlike the regularized expression $\MKD_\C^\varepsilon$ in~\eqref{eq-dual-formulation}, the finite Sinkhorn divergence $\itL{\SINKHORND_\C}(\a,\b)$ is \emph{not}, in general, a convex function of its arguments (this can be easily checked numerically). $\itL{\SINKHORND_\C}(\a,\b)$ is, however, a differentiable function which can be differentiated using automatic differentiation techniques (see Remark~\ref{rem-auto-diff}) with respect to any of its arguments, notably $\C,\a$, or $\b$.
\end{rem}
\todoK{ automatically differentiated using the following recursion:\todo{write recursion}.}

\section{Generalized Sinkhorn}
\label{sec-generalized}

The regularized OT problem~\eqref{eq-regularized-discr} is a special case of a structured convex optimization problem of the form
\eql{\label{eq-generalized-ot-regul}
	\umin{\P}
		\sum_{i,j} \C_{i,j} \P_{i,j} - \varepsilon \HD(\P) + F( \P\ones_m ) + G( \transp{\P}\ones_n ).
} 
Indeed, defining $F=\iota_{\{\a\}}$ and $G=\iota_{\{\b\}}$, where the indicator function of a closed convex set $\Cc$ is
\eql{\label{eq-iota-function}
	\iota_{\Cc}(x) = \choice{
		0 \qifq x \in \Cc, \\
		+\infty \quad\text{otherwise},
	}
}
one retrieves the hard marginal constraints defining $\Couplings(\a,\b)$.
The proof of Proposition~\ref{prop-regularized-primal} carries to this more general problem~\eqref{eq-generalized-ot-regul}, so that  the unique solution of~\eqref{eq-generalized-ot-regul} also has the form~\eqref{eq-scaling-form}. 

As shown in~\citep{2015-Peyre-siims,FrognerNIPS,2016-chizat-sinkhorn,karlsson2016generalized}, Sinkhorn iterations~\eqref{eq-sinkhorn} can hence be extended to this problem, and they read
\eql{\label{eq-gen-sinkh}
	\uD \leftarrow \frac{\Prox_{F}^{\KLD}(\K \vD)}{\K \vD}
	\qandq
	\vD \leftarrow \frac{\Prox_{G}^{\KLD}(\transp{\K}\uD)}{\transp{\K}\uD},	
}
where the proximal operator for the $\KLD$ divergence is
\eql{\label{eq-prox-kl}
	\foralls \uD \in \RR_+^N, \quad
	\Prox_{F}^{\KLD}(\uD) = \uargmin{\uD' \in \RR_+^N}	\KLD(\uD'|\uD) + F(\uD').
}
For some functions $F,G$ it is possible to prove the linear rate of convergence for iterations~\eqref{eq-gen-sinkh}, and these schemes can be generalized to arbitrary measures; see~\citep{2016-chizat-sinkhorn} for more details.

Iterations~\eqref{eq-gen-sinkh} are thus interesting in the cases where $\Prox_{F}^{\KLD}$ and $\Prox_{G}^{\KLD}$ can be computed in closed form or very efficiently. This is in particular the case for separable functions of the form $F(\uD) = \sum_i F_i(\uD_i)$ since in this case
\eq{
	\Prox_{F}^{\KLD}(\uD) = \pa{ \Prox_{F_i}^{\KLD}(\uD_i) }_i.
}
Computing each $\Prox_{F_i}^{\KLD}$ is usually simple since it is a scalar optimization problem.
Note that, similarly to the initial Sinkhorn algorithm, it is also possible to stabilize the computation using log-domain computations~\citep{2016-chizat-sinkhorn}.

This algorithm can be used to approximate the solution to various generalizations of OT, and in particular unbalanced OT problems of the form~\eqref{eq-unbalanced-pbm} (see~\S\ref{sec-unbalanced} and in particular iterations~\eqref{eq-iterate-gen-sinkh}) and gradient flow problems of the form~\eqref{eq-grad-flow-discr} (see~\S\ref{sec-grad-flows}).

\begin{rem1}{Duality and Legendre transform}
The dual problem to~\eqref{eq-generalized-ot-regul} reads
\eql{\label{eq-dual-generalized}
	\umax{\fD,\gD}
		- F^*( \fD ) -G^*(\gD)
		- \varepsilon\sum_{i,j} e^{ \frac{\fD_i+\gD_j-\C_{i,j}}{\varepsilon} }
}
so that $(\uD,\vD) = (e^{\fD/\varepsilon},e^{\gD/\varepsilon})$ are the associated scalings appearing in~\eqref{eq-scaling-form}.
Here, $F^*$ and $G^*$ are the Fenchel--Legendre conjugate, which are convex functions defined as
\eql{\label{eq-legendre}
		\foralls \fD \in \RR^n, \quad
		F^*(\fD) \eqdef \umax{\a \in \RR^n} \dotp{\fD}{\a} - F(\a). 
}
The generalized Sinkhorn iterates~\eqref{eq-gen-sinkh} are a special case of Dykstra's algorithm~\citep{Dykstra83,Dykstra85} (extended to Bregman divergence~\citep{bauschke-lewis,CensorReich-Dykstra}; see also Remark~\ref{rem-bregman}) and is an alternate maximization scheme on the dual problem~\eqref{eq-dual-generalized}. 
\end{rem1}

The formulation~\eqref{eq-generalized-ot-regul} can be further generalized to more than two functions and more than a single coupling; we refer to~\citep{2016-chizat-sinkhorn} for more details. This includes as a particular case the Sinkhorn algorithm~\eqref{eq-sinkh-multimarg} for the multimarginal problem, as detailed in~\S\ref{sec-multimarginal}. 
It is also possible to rewrite the regularized barycenter problem~\eqref{eq-entropic-bary} this way, and the iterations~\eqref{eq-sinkhorn-bary} are in fact a special case of this generalized Sinkhorn.

\if 0 
\eq{
	\umin{(\P_s)_{s=1}^S}
		\sum_{i,j,s} \C_{s,i,j} \P_{s,i,j} - \varepsilon \HD(\P_s) + 
		\sum_{k} F_k( (\P_s\ones_m)_s, (\transp{\P_s}\ones_n)_s )
} 
where $\{\C_s\}_s$ are given cost matrices.

The solution has the form 
\eql{\label{eq-scaling-form-multi}
	\foralls (s,i,j) \in \range{1,S} \times \range{n} \times \range{m}, \quad 
	\P_{s,i,j} =  \K_{s,i,j} \prod_{k} \uD_{k,i} \prod_\ell \vD_{k,j}
}
where $\K_s=e^{-\C_s/\varepsilon}$ is a Gibbs kernel, and $(\uD_k,\vD_\ell)_{k,\ell}$ are scaling vectors that need to be computed using Generalized Sinkhorn iterations similar to~\eqref{eq-gen-sinkh}~\citep{2016-chizat-sinkhorn}. 
In the case where the functions $F_s$ are indicator functions, one retrieves the Sinkhorn algorithm~\eqref{eq-sinkh-multimarg} for the multi-marginal problem, as detailed in~\S\ref{sec-multimarginal}. 
It is also possible to rewrite the regularized barycenter problem~\eqref{eq-entropic-bary} this way, and the iterations~\eqref{eq-sinkhorn-bary} are in fact a special case of this generalized Sinkhorn.

\fi 

\todoK{Write down the generalized iterations}



\chapter{Semidiscrete Optimal Transport}
\label{c-algo-semidiscr} 

This chapter studies methods to tackle the optimal transport problem when one of the two input measures is discrete (a sum of Dirac masses) and the other one is arbitrary, including notably the case where it has a density with respect to the Lebesgue measure.
When the ambient space has low dimension, this problem has a strong geometrical flavor because one can show that the optimal transport from a continuous density toward a discrete one is a piecewise constant map, where the preimage of each point in the support of the discrete measure is a union of disjoint cells.
When the cost is the squared Euclidean distance, these cells correspond to an important concept from computational geometry, the so-called Laguerre cells, which are Voronoi cells offset by a constant. This connection allows us to borrow tools from computational geometry to obtain fast computational schemes.
In high dimensions, the semidescrete formulation can also be interpreted as a stochastic programming problem, which can also benefit from a bit of regularization, extending therefore the scope of applications of the entropic regularization scheme presented in Chapter~\ref{c-entropic}.
All these constructions rely heavily on the notion of the $c$-transform, this time for general cost functions and not only matrices as in \S\ref{sec-c-transforms}. The $c$-transform is a generalization of the Legendre transform from convex analysis and plays a pivotal role in the theory and algorithms for OT. 

\section{$c$-Transform and $\bar c$-Transform}
\label{s-c-transform}

Recall that the dual OT problem~\eqref{eq-dual-generic} reads
\eq{
	\usup{(\f,\g)}
			\Ee(\f,\g) \eqdef
			\int_\X \f(x) \d\al(x) + \int_\Y \g(y) \d\be(y) + \iota_{\Potentials(\c)}(\f,\g),
}
where we used the useful indicator function notation~\eqref{eq-iota-function}. Keeping either dual potential $\f$ or $\g$ fixed and optimizing w.r.t. $\g$ or $\f$, respectively, leads to closed form solutions that  provide the definition of the $\c$-transform:
\begin{align}\label{eq-c-transform}
	\foralls y \in \Y, \quad
	\f^\c(y) &\eqdef \uinf{x \in \X} \c(x,y) - \f(x), \\ 
	\foralls x \in \X, \quad
	\g^{\bar\c}(x) &\eqdef \uinf{y \in \Y} \c(x,y) - \g(y), 
\end{align}
where we denoted $\bar\c(y,x) \eqdef c(x,y)$.
Indeed, one can check that 
\eql{\label{eq-alternate-c-transf}
	\f^\c \in \uargmax{\g} \Ee(\f,\g)
	\qandq
	\g^{\bar\c} \in \uargmax{\f} \Ee(\f,\g).
}
Note that these partial minimizations define maximizers on the support of respectively $\al$ and $\be$, while the definitions~\eqref{eq-c-transform} actually define functions on the whole spaces $\X$ and $\Y$. This is thus a way to extend in a canonical way solutions of~\eqref{eq-dual-generic} on the whole spaces. 
When $\X=\RR^d$ and $c(x,y)=\norm{x-y}_2^p=(\sum_{i=1}^d \abs{x_i-y_i})^{p/2}$, then the $\c$-transform~\eqref{eq-c-transform} $\f^\c$ is the so-called inf-convolution between $-\f$ and $\norm{\cdot}^p$. The definition of $\f^\c$ is also often referred to as a ``Hopf--Lax formula.'' 

The map $(\f,\g) \in \Cc(\X) \times \Cc(\Y) \mapsto (\g^{\bar\c},\f^\c) \in \Cc(\X) \times \Cc(\Y)$ replaces dual potentials by ``better'' ones (improving the dual objective $\Ee$). Functions that can be written in the form $\f^\c$ and $\g^{\bar\c}$ are called $c$-concave and $\bar c$-concave functions. 
In the special case $\c(x,y)=\dotp{x}{y}$ in $\X=\Y=\RR^\dim$, this definition coincides with the usual notion of concave functions.
Extending naturally Proposition~\ref{prop-ccc-2} to a continuous case, one has the property that
\eq{
	\f^{c\bar c c} = \f^c
	\qandq
	\g^{\bar c c \bar c} = \g^{\bar c},
}
where we denoted $\f^{c\bar c} = (\f^{c})^{\bar c}$. This invariance property shows that one can ``improve'' only once the dual potential this way. Alternatively, this means that alternate maximization does not converge (it immediately enters a cycle), which is classical for functionals involving a nonsmooth (a constraint) coupling of the optimized variables. This is in sharp contrast with entropic regularization of OT as shown in Chapter~\ref{c-entropic}. In this case, because of the regularization, the dual objective~\eqref{eq-dual-formulation} is smooth, and alternate maximization corresponds to Sinkhorn iterations~\eqref{eq-lse-sinkh-1} and~\eqref{eq-lse-sinkh-2}. These iterates, written over the dual variables, define entropically smoothed versions of the $c$-transform, where $\min$ operations are replaced by a ``soft-min.'' 

Using~\eqref{eq-alternate-c-transf}, one can reformulate~\eqref{eq-dual-generic} as an unconstrained convex program over a single potential,
\begin{align}\label{eq-semi-dual-cont}
	\MK_\c(\al,\be)  
		&= \usup{\f \in \Cc(\X)}
			\int_\X \f(x) \d\al(x) + \int_\Y \f^{c}(y) \d\be(y) \\
		&= \usup{\g \in \Cc(\Y)}
			\int_\X \g^{\bar\c}(x) \d\al(x) + \int_\Y \g(y) \d\be(y).
\end{align}
Since one can iterate the map $(\f,\g) \mapsto (\g^{\bar\c},\f^\c)$, it is possible to add the constraint that $\f$ is $\bar c$-concave and $\g$ is $\c$-concave, which is important to ensure enough regularity on these potentials and show, for instance, existence of solutions to~\eqref{eq-dual-generic}.  

\section{Semidiscrete Formulation}
\label{s-semidiscrete}

A case of particular interest is when $\be = \sum_j \b_j \de_{y_j}$ is discrete (of course the same construction applies if $\al$ is discrete by exchanging the role of $\al,\be$).
One can adapt the definition of the $\bar c$ transform~\eqref{eq-c-transform} to this setting by restricting the minimization to the support $(y_j)_j$ of $\be$,
\eql{\label{eq-disc-c-transfo}
	\foralls \gD \in \RR^m, \;
	\foralls x \in \Xx, \quad
	\gD^{\bar \c}(x) \eqdef \umin{j \in \range{m}} \c(x,y_j) - \gD_j.
}
This transform maps a vector $\gD$ to a continuous function $\gD^{\bar \c} \in \Cc(\Xx)$.
Note that this definition coincides with~\eqref{eq-c-transform} when imposing that the space $\X$ is equal to the support of $\be$. 
Figure~\ref{fig-c-transform-discrete} shows some examples of such discrete $\bar c$-transforms in one and two dimensions.

\newcommand{\MyFigCTrans}[1]{\includegraphics[width=.24\linewidth,trim=63 40 48 30,clip]{c-transform-2d/c-transform-p#1}}
\begin{figure}[h!]
\centering
\includegraphics[width=.8\linewidth]{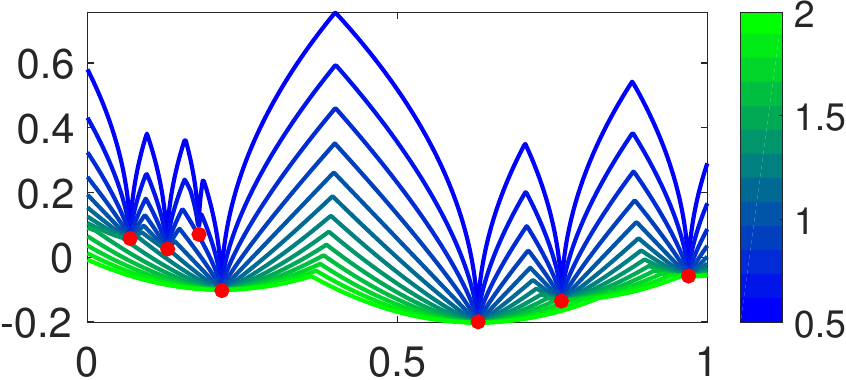}\vspace{2mm}
\begin{tabular}{@{}c@{\hspace{1mm}}c@{\hspace{1mm}}c@{\hspace{1mm}}c@{}}
\MyFigCTrans{5}&
\MyFigCTrans{10}&
\MyFigCTrans{15}&
\MyFigCTrans{20}\\
$p=1/2$ & $p=1$ & $p=3/2$ & $p=2$ 
\end{tabular}
\caption{\label{fig-c-transform-discrete}
Top: examples of semidiscrete $\bar c$-transforms $\gD^{\bar \c}$ in one dimension, for ground cost $c(x,y)=|x-y|^p$ for varying $p$ (see colorbar). 
The red points are at locations $(y_j,-\gD_j)_j$. 
Bottom: examples of semidiscrete $\bar c$-transforms $\gD^{\bar \c}$ in two dimensions, for ground cost $c(x,y)=\norm{x-y}_2^p=(\sum_{i=1}^d \abs{x_i-y_i})^{p/2}$ for varying $p$. 
The red points are at locations $y_j \in \RR^2$, and their size is proportional to $\gD_j$. 
The regions delimited by bold black curves are the Laguerre cells $(\Laguerre_{j}(\gD))_j$ associated to these points $(y_j)_j$. 
}
\end{figure}

Crucially, using the discrete $\bar c$-transform in the semidiscrete problem~\eqref{eq-semi-dual-cont} yields a finite-dimensional optimization, 
\eql{\label{eq-semi-dual-discr}
	\MK_\c(\al,\be) = 
		\umax{\gD \in \RR^m}
			\Ee(\gD) \eqdef 
			\int_\X \gD^{\bar \c}(x) \d\al(x) + \sum \gD_y \b_j.
}

The Laguerre cells associated to the dual weights $\gD$
\eq{
	\Laguerre_{j}(\gD) \eqdef \enscond{x \in \X}{ \foralls j' \neq j, \c(x,y_j) - \gD_j \leq \c(x,y_{j'}) - \gD_{j'} }
}
induce a disjoint decomposition of $\X = \bigcup_j \Laguerre_{j}(\gD)$. When $\gD$ is constant, the Laguerre cells decomposition corresponds to the Voronoi diagram partition of the space. 
Figure~\ref{fig-c-transform-discrete}, bottom row, shows examples of Laguerre cells segmentations in two dimensions. 

This allows one to conveniently rewrite the minimized energy as
\eql{\label{eq-semi-disc-energy}
	\Ee(\gD) = \sum_{j=1}^m \int_{\Laguerre_{j}(\gD)} \pa{ c(x,y_j) - \gD_j } \d\al(x) + \dotp{\gD}{\b}.
}
The gradient of this function can be computed as follows:
\eq{
	\foralls j \in \range{m}, \quad
	\nabla\Ee(\gD)_j = - \int_{\Laguerre_{j}(\gD)} \d\al(x) + \b_j.
}
Figure~\ref{fig-semi-discr} displays iterations of a gradient descent to minimize $\Ee$.
Once the optimal $\gD$ is computed, then the optimal transport map $\T$ from $\al$ to $\be$ is mapping any $x \in \Laguerre_{j}(\gD)$ toward $y_j$, so it is piecewise constant. 

In the special case $\c(x,y)=\norm{x-y}^2$, the decomposition in Laguerre cells is also known as a ``power diagram.'' 
The cells are polyhedral and can be computed efficiently using computational geometry algorithms; see~\citep{aurenhammer1987power}. 
The most widely used algorithm relies on the fact that the power diagram of points in $\RR^\dim$ is equal to the projection on $\RR^\dim$ of the convex hull of the set of points $( (y_j,\norm{y_j}^2 - \gD_j) )_{j=1}^m \subset \RR^{\dim+1}$. There are numerous algorithms to compute convex hulls; for instance, that of~\citet{chan1996optimal} in two and three dimensions has complexity $O(m\log(Q))$, where $Q$ is the number of vertices of the convex hull.
\todoK{Maybe make a drawing of this projection.}


\newcommand{\MyFigSemiD}[1]{\includegraphics[width=.19\linewidth]{semidiscrete-gd/#1}}
\begin{figure}[h!]
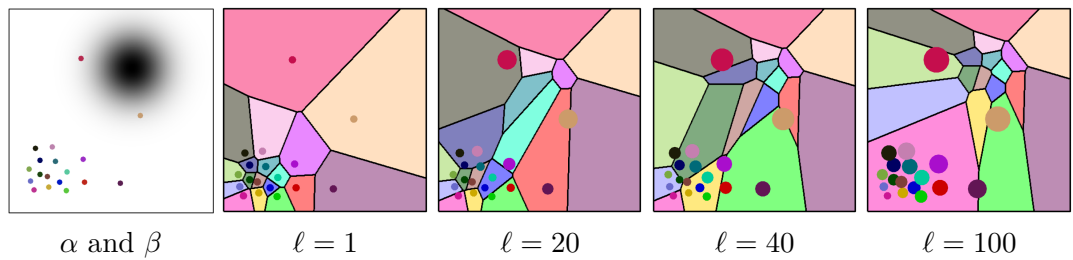

\centering
\begin{tabular}{@{}c@{\hspace{1mm}}c@{\hspace{1mm}}c@{\hspace{1mm}}c@{\hspace{1mm}}c@{}}
\MyFigSemiD{inputs} &
\MyFigSemiD{001} &
\MyFigSemiD{020} &
\MyFigSemiD{040} &
\MyFigSemiD{100}   \\ 
$\al$ and $\be$ &
$\ell=1$ &
$\ell=20$ &
$\ell=40$ &
$\ell=100$ 
\end{tabular}
\caption{\label{fig-semi-discr}
Iterations of the semidiscrete OT algorithm minimizing~\eqref{eq-semi-disc-energy} (here a simple gradient descent is used).
The support $(y_j)_j$ of the discrete measure $\be$ is indicated by the colored points, while the continuous measure $\al$ is the uniform measure on a square. 
The colored cells display the Laguerre partition $( \Laguerre_{j}( \it{\gD} ) )_j$ where $\it{\gD}$ is the discrete dual potential computed at iteration $\ell$. 
}
\end{figure}

The initial idea of a semidiscrete solver for Monge--Amp\`ere equations was proposed by~\citet{oliker1989numerical}, and its relation to the dual variational problem was shown by~\citet{AurenhammerHA98}.
A theoretical analysis and its application to the reflector problem in optics is detailed in~\citep{caffarelli1999problem}. 
The semidiscrete formulation was used in~\citep{carlier2010knothe} in conjunction with a continuation approach based on Knothe's transport. 
The recent revival of this methods in various fields is due to~\citet{Merigot11}, who proposed a quasi-Newton solver and clarified the link with concepts from computational geometry. We refer to~\citep{Levy2017review} for a recent overview. The use of a Newton solver which is applied to sampling in computer graphics is proposed in~\citep{de2012blue}; see also~\citep{levy2015numerical} for applications to 3-D volume and surface processing. 
An important area of application of the semidiscrete method is for the resolution of the incompressible fluid dynamic (Euler's equations) using Lagrangian methods~\citep{deGoes2015,gallouet2017lagrangian}. The semidiscrete OT solver enforces incompressibility at each iteration by imposing that the (possibly weighted) points cloud approximates a uniform distribution inside the domain.  
The convergence (with linear rate) of damped Newton iterations is proved in~\citep{mirebeau2015discretization} for the Monge--Amp\`ere equation and is refined in~\citep{kitagawa2016newton} for optimal transport. Semidiscrete OT finds important applications to illumination design, notably reflectors; see~\citep{merigot2017light}.

\section{Entropic Semidiscrete Formulation}
\label{sec-semi-discr-entropy}

The dual of the entropic regularized problem between arbitrary measures~\eqref{eq-entropic-generic} is a smooth unconstrained optimization problem:
\begin{equation}\label{eq-dual-entropic-generic}
	\MK_\c^\epsilon(\al,\be) = 
	\usup{(\f,\g) \in \Cc(\X) \times \Cc(\Y)}
		 \int_\X \f\d\al + \int_\Y \g\d\be 
		- \epsilon \int_{\X \times \Y}  e^{ \frac{-c + \f \oplus \g }{\epsilon} } \d\al\d\be,
\end{equation}
where we denoted $(f\oplus g)(x,y) \eqdef f(x)+g(y)$.

Similarly to the unregularized problem~\eqref{eq-c-transform}, one can minimize explicitly with respect to either $\f$ or $\g$ in~\eqref{eq-dual-entropic-generic}, which yields a smoothed $\c$-transform
\begin{align*}
	\foralls y \in \Y, \quad
	\f^{\c,\epsilon}(y) &\eqdef - \epsilon \log\pa{ 
			\int_{\X} e^{\frac{-\c(x,y) + \f(x)}{\epsilon}} \d\al(x)
	},\\
	\foralls x \in \X, \quad
	\g^{\bar\c,\epsilon}(x) &\eqdef - \epsilon \log\pa{ 
			\int_{\Y} e^{\frac{-\c(x,y) + \g(y)}{\epsilon}} \d\be(y)
	}.
\end{align*}
In the case of a discrete measure $\be = \sum_{j=1}^m \b_j\de_{y_j}$, the problem simplifies as with~\eqref{eq-semi-dual-discr} to a finite-dimensional problem expressed as a function of the discrete dual potential $\gD \in \RR^m$, 
\eql{\label{eq-entropic-smoothed-c}
	\foralls x \in \X, \quad
	\gD^{\bar\c,\epsilon}(x) \eqdef - \epsilon \log\pa{ 
		\sum_{j=1}^m e^{\frac{-\c(x,y_j) + \gD_j}{\epsilon}} \b_j
	}.
}
One defines similarly $\fD^{\bar\c,\epsilon}$ in the case of a discrete measure $\al$.
Note that the rewriting~\eqref{eq-lse-dual-1} and~\eqref{eq-lse-dual-2} of Sinkhorn using the soft-min operator $\smine$ corresponds to the alternate computation of entropic smoothed $\c$-transforms,
\eql{\label{eq-sinkh-c-transf}
	\itt{\fD}_i = \gD^{\bar\c,\epsilon}(x_i) 
	\qandq
	\itt{\gD}_j = \fD^{\,\c,\epsilon}(y_j) .
}

Instead of maximizing~\eqref{eq-dual-entropic-generic}, one can thus solve the following finite-dimensional optimization problem:
\eql{\label{eq-semi-disc-energy-entropy}
	\umax{\gD \in \RR^n}
		\Ee^\epsilon(\gD) \eqdef \int_\X \gD^{\bar\c,\epsilon}(x)\d\al(x) + \dotp{\gD}{\b}.
}
Note that this optimization problem is still valid even in the unregularized case $\epsilon=0$ and in this case $\gD^{\bar\c,\epsilon=0}=\gD^{\bar\c}$ is the $\bar \c$-transform defined in~\eqref{eq-disc-c-transfo} so that~\eqref{eq-semi-disc-energy-entropy} is in fact~\eqref{eq-semi-disc-energy}.
The gradient of this functional reads
\eql{\label{eq-grad-semid-entrop}
	\foralls j \in \range{m}, \quad
	\nabla\Ee^\epsilon(\gD)_j = - \int_{\X} \chi_j^\epsilon(x) \d\al(x) + \b_j, 
}
where $\chi_j^\epsilon$ is a smoothed version of the indicator $\chi_j^0$ of the Laguerre cell $\Laguerre_{j}(\gD)$,
\eq{
	\chi_j^\epsilon(x) = 
	\frac{
		e^{\frac{-\c(x,y_j) + \gD_j}{\epsilon}}
	}{
		\sum_\ell e^{\frac{-\c(x,y_\ell) + \gD_\ell}{\epsilon}}
	}.
}
Note once again that this formula~\eqref{eq-grad-semid-entrop} is still valid for $\epsilon=0$.
Note also that the family of functions $( \chi_j^\epsilon )_j$ is a partition of unity, \ie $\sum_j \chi_j^\epsilon=1$ and $\chi_j^\epsilon \geq 0$. Figure~\ref{fig-c-transform-discrete-eps}, bottom row, illustrates this. 

\begin{rem}[Second order methods and connection with logistic regression]\label{rem-second-order-smoothness}
	A crucial aspect of the smoothed semidiscrete formulation~\eqref{eq-semi-disc-energy-entropy} is that it corresponds to the minimization of a smooth function.
	Indeed, as shown in~\citep{genevay2016stochastic}, the Hessian of $\Ee^\epsilon$ is upper bounded by $1/\epsilon$, so that $\nabla \Ee^\epsilon$ is $\frac{1}{\epsilon}$-Lipschitz continuous.
	In fact, that problem is very closely related to a multiclass logistic regression problem (see Figure~\ref{fig-c-transform-discrete-eps} for a display of the resulting fuzzy classification boundary) and enjoys the same favorable properties (see~\citep{hosmer2013applied}), which are generalizations of self-concordance; see~\citep{bach2010self}. 
	In particular, the Newton method converges quadratically, and one can use in practice quasi-Newton techniques, such as L-BFGS, as advocated in~\citep{2016-Cuturi-siims}. Note that~\citep{2016-Cuturi-siims} studies the more general barycenter problem detailed in~\S\ref{sec-bary}, but it is equivalent to this semidiscrete setting when considering only a pair of input measures. 
	The use of second order schemes (Newton or L-BFGS) is also advocated in the unregularized case $\epsilon=0$ by~\citep{Merigot11,de2012blue,levy2015numerical}. In \citep[Theo. 5.1]{kitagawa2016newton}, the Hessian of $\Ee^0(\gD)$ is shown to be uniformly bounded as long as the volume of the Laguerre cells is bounded by below and $\al$ has a continuous density. \citeauthor{kitagawa2016newton} proceed by showing the linear convergence of a damped Newton algorithm with a backtracking to ensure that the Laguerre cells never vanish between two iterations. This result justifies the use of second order methods even in the unregularized case. The intuition is that, while the conditioning of the entropic regularized problem scales like $1/\epsilon$, when $\epsilon=0$, this conditioning is rather driven by $m$, the number of samples of the discrete distribution (which controls the size of the Laguerre cells).
	Other methods exploiting second order schemes were also recently studied by~\citep{knight2013fast,sugiyama2017tensor,cohen2017matrix,allen2017much}.
\end{rem}

\newcommand{\MyFigCTransEps}[1]{\includegraphics[width=.245\linewidth,trim=63 40 48 30,clip]{c-transform-2d/c-transform-eps#1}}
\begin{figure}[h!]
\centering
\includegraphics[width=.9\linewidth]{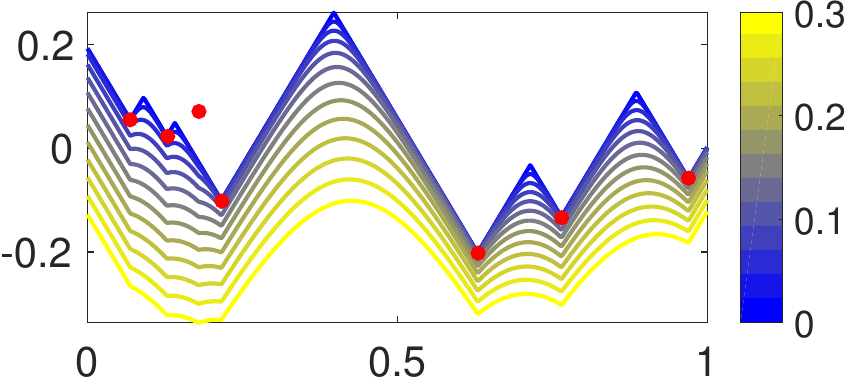}\vspace{2mm}
\begin{tabular}{@{}c@{\hspace{1mm}}c@{\hspace{1mm}}c@{\hspace{1mm}}c@{}}
\MyFigCTransEps{0}&
\MyFigCTransEps{1}&
\MyFigCTransEps{10}&
\MyFigCTransEps{30}\\
$\epsilon=0$ & $\epsilon=0.01$ & $\epsilon=0.1$ & $\epsilon=0.3$ 
\end{tabular}
\caption{\label{fig-c-transform-discrete-eps}
Top: examples of entropic semidiscrete $\bar c$-transforms $\gD^{\bar \c,\epsilon}$ in one dimension, for ground cost $c(x,y)=|x-y|$ for varying $\epsilon$ (see colorbar). 
The red points are at locations $(y_j,-\gD_j)_j$. 
Bottom: examples of entropic semidiscrete $\bar c$-transforms $\gD^{\bar \c,\epsilon}$ in two dimensions, for ground cost $c(x,y)=\norm{x-y}_2$ for varying $\epsilon$. The black curves are the level sets of the function $\gD^{\bar \c,\epsilon}$, while the colors indicate the smoothed indicator function of the Laguerre cells $\chi_j^\epsilon$.
The red points are at locations $y_j \in \RR^2$, and their size is proportional to $\gD_j$. 
}
\end{figure}

\begin{rem}[Legendre transforms of OT cost functions] 
	As stated in Proposition~\ref{prop-convexity-dual}, $\MKD_\C^\epsilon(\a,\b)$ is a convex function of $(\a,\b)$ (which is also true in the unregularized case $\epsilon=0$). 
	It is thus possible to compute its Legendre--Fenchel transform, which is defined in~\eqref{eq-legendre}.
	Denoting $F_\a(\b) = \MKD_\C^\epsilon(\a,\b)$, one has, for a fixed $\a$, following~\citet{2016-Cuturi-siims}:
	\eq{
		F_\a^*(\gD) =  - \epsilon H(\a) + \sum_i \a_i \gD^{\bar\c,\epsilon}(x_i).
	}
	Here $\gD^{\bar\c,\epsilon}$ is the entropic-smoothed $c$-transform introduced in~\eqref{eq-entropic-smoothed-c}.
	In the unregularized case $\epsilon=0$, and for generic measures,~\citet{Carlier-NumericsBarycenters} show, 	
	denoting $\Ff_\al(\be) \eqdef \MK_\c(\al,\be)$,
	\eq{
		\foralls \g \in \Cc(\Y), \quad
		\Ff_\al^*(\g) = \int_\X \g^{\bar\c}(x)  \d\al(x), 		
	}
	where the $\bar c$-transform $\g^{\bar c} \in \Cc(\X)$ of $\g$ is defined in~\S\ref{s-c-transform}.
	%
	Note that here, since $\Mm(\X)$ is in duality with $\Cc(\X)$, the Legendre transform is a function of continuous functions.
	Denoting now $G(\a,\b) \eqdef \MKD_\C^\epsilon(\a,\b)$, one can derive as in~\citep{2016-Cuturi-siims,cuturi2018semidual} the Legendre transform for both arguments,
	\eq{
		\foralls (\fD,\gD) \in \RR^n \times \RR^m, \quad
		G^*(\fD,\gD) = - \epsilon \log \sum_{i,j} e^{ \frac{-\C_{i,j}+\fD_i+\gD_j}{\epsilon} },
	} 
	which can be seen as a smoothed version of the Legendre transform of $\Gg(\al,\be) \eqdef \MK_\c(\al,\be)$,
	\eq{
		\foralls (\f,\g) \in \Cc(\X) \times \Cc(\Y), \quad
		\Gg^*(\f,\g) = \uinf{(x,y) \in \X \times \Y} \c(x,y)-\f(x)-\g(y).	
	}
\end{rem}

\section{Stochastic Optimization Methods}
\label{sec-sgd}

The semidiscrete formulation~\eqref{eq-semi-disc-energy} and its smoothed version~\eqref{eq-semi-disc-energy-entropy} are appealing because the energies to be minimized are written as an expectation with respect to the probability distribution $\al$,
\eq{
	\Ee^\epsilon(\gD) = \int_\X E^\epsilon(\gD,x) \d\al(x) = \EE_X(  E^\epsilon(\gD,X) )
}
\eq{
	\qwhereq
	 E^\epsilon(\gD,x) \eqdef 
	\gD^{\bar\c,\epsilon}(x) - \dotp{\gD}{\b},
}
and $X$ denotes a random vector distributed on $\X$ according to $\al$.
Note that the gradient of each of the involved functional reads
\eq{
	\nabla_{\gD} E^\epsilon(x,\gD) = (  \chi_j^\epsilon(x) - \b_j )_{j=1}^m \in \RR^m.
}
One can thus use stochastic optimization methods to perform the maximization, as proposed in~\citet{genevay2016stochastic}.
This allows us to obtain provably convergent algorithms without the need to resort to an arbitrary discretization of $\al$ (either approximating $\al$ using sums of Diracs or using quadrature formula for the integrals).
The measure $\al$ is used as a black box from which one can draw independent samples, which is a natural computational setup for many high-dimensional applications in statistics and machine learning. 
This class of methods has been generalized to the computation of Wasserstein barycenters (as described in \S\ref{sec-bary}) in~\citep{staib2017parallel}.

\paragraph{Stochastic gradient descent.}

Initializing $\gD^{(0)} = \zeros_{P}$, the stochastic gradient descent algorithm (SGD; used here as a maximization method) draws at step $\ell$ a point $x_\ell \in \X$ according to distribution $\al$ (independently from all past and future samples $(x_\ell)_\ell$) to form the update
\eql{\label{eq-sgd}
	\itt{\gD} \eqdef \it{\gD} + \tau_\ell  \nabla_{\gD} E^\epsilon(\it{\gD},x_\ell).
} 
The step size $\tau_\ell$ should decay fast enough to zero in order to ensure that the ``noise'' created by using  $\nabla_{\gD} E^\epsilon(x_\ell,\gD)$ as a proxy for the true gradient $\nabla \Ee^\epsilon(\gD)$ is canceled in the limit. 
A typical choice of schedule is 
\eql{\label{eq-step-size-sgd} 
	\tau_\ell \eqdef \frac{\tau_0}{1 + \ell/\ell_0}, 
}
where $\ell_0$ indicates roughly the number of iterations serving as a warmup phase.
One can prove the convergence result
\eq{ 
	\Ee^\epsilon(\gD^\star) - \EE( \Ee^\epsilon(\it{\gD}) ) = O\pa{ \frac{1}{\sqrt{\ell}} }, 
}
where $\gD^\star$ is a solution of~\eqref{eq-semi-disc-energy-entropy} and where $\EE$ indicates an expectation with respect to the i.i.d. sampling of $(x_\ell)_\ell$ performed at each iteration.
Figure~\ref{fig-semi-discrete-sgd} shows the evolution of the algorithm on a simple 2-D example, where $\al$ is the uniform distribution on $[0,1]^2$.

\begin{figure}[h!]
\centering
\includegraphics[width=.95\linewidth]{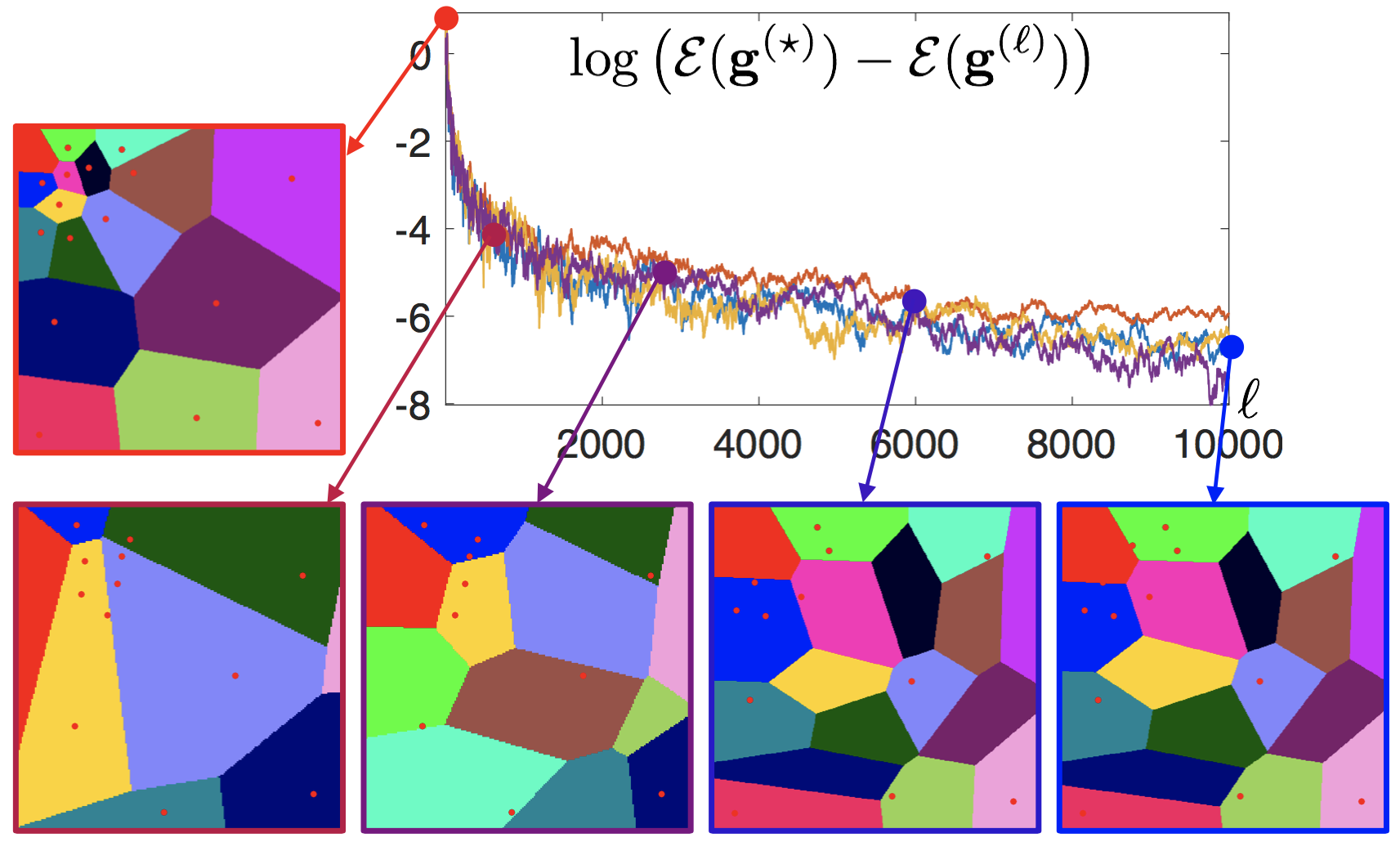}
\caption{\label{fig-semi-discrete-sgd}
Evolution of the energy $\Ee^\epsilon(\it{\gD})$, for $\epsilon=0$ (no regularization) during the SGD iterations~\eqref{eq-sgd}. Each colored curve shows a different randomized run.
The images display the evolution of the Laguerre cells $( \Laguerre_{j}(\it{\gD}) )_j$ through the iterations. 
}
\end{figure}

\paragraph{Stochastic gradient descent with averaging.}

SGD is slow because of the fast decay of
the stepsize $\tau_\ell$ toward zero.
To improve the convergence speed, it is possible to average the past
iterates, which is equivalent to running a ``classical'' SGD on auxiliary variables $(\it{\tilde\gD})_\ell$
\eq{ 	
	\itt{\tilde\gD} \eqdef  \it{\tilde\gD} + \tau_\ell \nabla_{\gD} E^\epsilon(\it{\tilde\gD},x_\ell),
}
where $x_\ell$ is drawn according to $\al$ (and all the $(x_\ell)_\ell$ are independent)
and output as estimated weight vector the average
\eq{ 
	\it{\gD} \eqdef \frac{1}{\ell} \sum_{k=1}^\ell  \tilde\gD^{(k)}. 
}
This defines the stochastic gradient descent with averaging (SGA)
algorithm.
Note that it is possible to avoid explicitly storing all the iterates by simply
updating a running average as follows:
\eq{ 
	\itt{\gD} = \frac{1}{\ell+1}  \itt{\tilde\gD} +  \frac{\ell}{\ell+1} \it{\gD}.
}
In this case, a typical choice of decay is rather of the form 
\eq{ 
	\tau_\ell \eqdef \frac{\tau_0}{1 + \sqrt{\ell/\ell_0}}. 
}
Notice that the step size now goes much slower to 0 than for~\eqref{eq-step-size-sgd}, at rate $\ell^{-1/2}$.
\citet{bach2014adaptivity} proves that SGA leads to a faster convergence (the constants involved are smaller) than SGD, since in contrast to SGD, SGA is adaptive to the local strong convexity (or concavity for maximization problems) of the functional.

\begin{rem}[Continuous-continuous problems]\label{rem-sgd-cont-cont}
When neither $\al$ nor $\be$ is a discrete measure, one cannot resort to semidiscrete strategies involving finite-dimensional dual variables, such as that given in Problem~\eqref{eq-semi-dual-discr}. The only option is to use stochastic optimization methods on the dual problem~\eqref{eq-dual-entropic}, as proposed in~\citep{genevay2016stochastic}. A suitable regularization of that problem is crucial, for instance by setting an entropic regularization strength $\epsilon>0$, to obtain an unconstrained problem that can be solved by stochastic descent schemes.
A possible approach to revisit Problem~\eqref{eq-dual-entropic} is to restrict that infinite-dimensional optimization problem over a space of continuous functions to a much smaller subset, such as that spanned by multilayer neural networks~\citep{seguy2018large}. This approach leads to nonconvex finite-dimensional optimization problems with no approximation guarantees, but this can provide an effective way to compute a proxy for the Wasserstein distance in high-dimensional scenarios.
Another solution is to use nonparametric families, which is equivalent to considering some sort of progressive refinement, as that proposed by~\citet{genevay2016stochastic} using reproducing kernel Hilbert spaces, whose dimension is proportional to the number of iterations of the SGD algorithm. 
\end{rem}


\todoK{results about embeddings, Will Leeb's work, Naor's work?}
\todoK{Indyk's embeddings for W1}

\chapter{$\Wass_1$ Optimal Transport}
\label{c-w1}

This chapter focuses on optimal transport problems in which the ground cost is equal to a distance.  Historically, this corresponds to the original problem posed by~\citeauthor{Monge1781} in \citeyear{Monge1781}; this setting was also that chosen in early applications of optimal transport in computer vision~\citep{RubTomGui00} under the name of ``earth mover's distances''.

Unlike the case where the ground cost is a \emph{squared} Hilbertian distance  (studied in particular in Chapter~\ref{c-dynamic}), transport problems where the cost is a metric are more difficult to analyze theoretically.
In contrast to Remark~\ref{rem-exist-mongemap} that states the uniqueness of a transport map or coupling between two absolutely continuous measures when using a squared metric, the optimal Kantorovich coupling is in general not unique when the cost is the ground distance itself.  Hence, in this regime it is often impossible to recover a uniquely defined Monge map, making this class of problems ill-suited for interpolation of measures.
We refer to works by~\citet{trudinger2001monge,caffarelli2002constructing,sudakov1979geometric,evans1999differential} for proofs of existence of optimal $\Wass_1$ transportation plans and detailed analyses of their geometric structure. 

Although more difficult to analyze in theory, optimal transport with a linear ground distance is usually more robust to outliers and noise than a quadratic cost. Furthermore, a cost that is a metric results in an elegant dual reformulation involving local flow, divergence constraints, or Lipschitzness of the dual potential, suggesting cheaper numerical algorithms that align with \emph{minimum-cost flow} methods over networks in graph theory.
This setting is also  popular because the associated OT distances define a norm that can compare arbitrary distributions, even if they are not positive; this property is shared by a larger class of so-called \emph{dual norms} (see \S\ref{sec-dual-norms} and Remark~\ref{rem-unb-dualnorms} for more details).

\section{$\Wass_1$ on Metric Spaces}
\label{sec-w1-metric}

Here we assume that $d$ is a distance on $\Xx=\Yy$, and we solve the OT problem with the ground cost $c(x,y)=d(x,y)$. The following proposition highlights key properties of the $c$-transform~\eqref{eq-c-transform} in this setup. In the following, we denote the Lipschitz constant of a function $f \in \Cc(\Xx)$ as
\eq{
	\Lip(f) \eqdef \sup \enscond{ \frac{|\f(x)-\f(y)|}{d(x,y)} }{ (x,y) \in \X^2, x \neq y }.
}
We define Lipschitz functions to be those functions $f$ satisfying $\Lip(f) < +\infty$; they form a convex subset of $\Cc(\X)$.

\begin{prop}Suppose $\X=\Y$ and $c(x,y)=d(x,y)$. 
Then, there exists $g$ such that $\f = g^c$ if and only $\Lip(f) \leq 1$. 
Furthermore, if $\Lip(f)\leq1$, then $f^c=-f$.
\end{prop}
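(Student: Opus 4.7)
The plan is to prove both directions by directly exploiting the triangle inequality for $d$, and I would in fact prove the moreover claim ($f^c = -f$ whenever $\Lip(f)\leq 1$) first, since it immediately yields the existence of $g$ with $f = g^c$ by taking $g = -f$.

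First, I would establish the forward implication: if $f = g^c$ for some $g$, then $\Lip(f)\leq 1$. Fix $y,y'\in \X$ and any $x\in\X$. The triangle inequality gives $d(x,y) \leq d(x,y') + d(y,y')$, hence $d(x,y)-g(x) \leq d(x,y')-g(x) + d(y,y')$. Taking the infimum over $x$ on both sides yields $g^c(y) - g^c(y') \leq d(y,y')$, and by symmetry $|f(y)-f(y')|\leq d(y,y')$.

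Next, I would show that if $\Lip(f)\leq 1$, then $f^c = -f$. By definition, $f^c(y) = \inf_{x}\, d(x,y)-f(x)$. Choosing the test point $x=y$ gives $f^c(y)\leq d(y,y)-f(y) = -f(y)$. For the reverse inequality, the 1-Lipschitz property gives $f(x)-f(y)\leq d(x,y)$ for every $x$, i.e., $-f(y) \leq d(x,y)-f(x)$; taking the infimum in $x$ yields $-f(y)\leq f^c(y)$. So $f^c = -f$.

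Finally, applying this same identity with $f$ replaced by $-f$ (which is also 1-Lipschitz whenever $f$ is) gives $(-f)^c = -(-f) = f$. Thus $g := -f$ realizes $f = g^c$, completing the reverse implication of the equivalence. I do not foresee a substantive obstacle: the only two ingredients used are the triangle inequality (for the forward direction and for lower-bounding $f^c$) and the evaluation at $x=y$ (for upper-bounding $f^c$); the argument does not rely on compactness or on any regularity of $\X$ beyond it being a metric space.
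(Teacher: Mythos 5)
Your proposal is correct and follows essentially the same route as the paper's proof: the forward implication rests on the triangle inequality for $d$ (you use it directly and pass to infima, while the paper routes it through $|\inf f - \inf g|\leq\sup|f-g|$ plus the reverse triangle inequality $|d(x,z)-d(y,z)|\leq d(x,y)$, but these are the same idea), and the reverse implication is obtained by taking $g=-f$ and sandwiching $g^c$ using the two sides of the Lipschitz bound. Your small streamlining — proving $f^c=-f$ first and then deducing $g^c=f$ by applying the identity to $-f$ — is a clean way to avoid repeating the sandwiching computation, which the paper simply carries out twice.
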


\begin{proof}
First, suppose $f=g^c$.  Then, for $x,y\in \X$,
\begin{align*}
|f(x)-f(y)|&=\left|\inf_{z\in\X} d(x,z)-g(z) \;\; - \;\;\inf_{z\in \X} d(y,z)-g(z) \right|\\
&\leq \sup_{z\in\X} |d(x,z)-d(y,z)|\leq d(x,y).
\end{align*}
The first equality follows from the definition of $g^c$, the next inequality from the identity $|\inf f-\inf g|\leq\sup|f-g|$, and the last from the triangle inequality. This shows that $\Lip(f)\leq1$.

Now, suppose $\Lip(f)\leq1$, and define $g\eqdef -f$. By the Lipschitz property, for all $x,y\in\X$,
$f(y)-d(x,y)\leq f(x)\leq f(y)+d(x,y)$.
Applying these inequalities,
\begin{align*}
g^c(y)&=\inf_{x\in\X}\left[ d(x,y) + f(x) \right]
\geq\inf_{x\in \X} \left[  d(x,y)+f(y)-d(x,y)\right]=f(y),\\
g^c(y)&=\inf_{x\in\X}\left[ d(x,y) + f(x) \right]
\leq \inf_{x\in\X}\left[ d(x,y) + f(y)+d(x,y) \right]=f(y).
\end{align*}
Hence, $f=g^c$ with $g=-f$.  Using the same inequalities shows
\begin{align*}
f^c(y)&=\inf_{x\in\X}\left[ d(x,y) - f(x) \right]
\geq\inf_{x\in \X} \left[  d(x,y)-f(y)-d(x,y)\right]=-f(y),\\
f^c(y)&=\inf_{x\in\X}\left[ d(x,y) - f(x) \right]
\leq \inf_{x\in\X}\left[ d(x,y) - f(y)+d(x,y) \right]=-f(y).
\end{align*}
This shows $f^c=-f$.
\end{proof}

Starting from the single potential formulation~\eqref{eq-semi-dual-cont}, one can iterate the construction and replace the couple $(\g,\g^c)$ by $(\g^c,(\g^c)^c)$. The last proposition shows that one can thus use $(\g^c,-\g^c)$, which in turn is equivalent to any pair $(f,-f)$ such that $\Lip(f) \leq 1$. This leads to the following alternative expression for the $\Wass_1$ distance:
\eql{\label{eq-w1-metric}
	\Wass_1(\al,\be) = 
	\umax{f} \enscond{ \int_{\Xx} f(x) (\d\al(x)-\d\be(x)) }{ \Lip(f) \leq 1 }.
}
This expression shows that $\Wass_1$ is actually a norm, \ie $\Wass_1(\al,\be) = \norm{\al-\be}_{\Wass_1}$, and that it is still valid for any measures (not necessary positive) as long as $\int_{\Xx} \al=\int_{\Xx} \be$.  This norm is often called the \citeauthor{kantorovich1958space} norm~\citeyearpar{kantorovich1958space}.

For discrete measures of the form~\eqref{eq-discr-meas}, writing $\al-\be = \sum_{k} \VectMode{m}_k \de_{z_k}$ with $z_k \in \Xx$ and $\sum_k \VectMode{m}_k=0$, the optimization~\eqref{eq-w1-metric} can be rewritten as
\eql{\label{eq-w1-discr}	
	\Wass_1(\al,\be) = 
	\umax{ (\fD_k)_k } \enscond{ \sum_k \fD_k \VectMode{m}_k }{ \foralls (k,\ell), 
		\abs{\fD_k-\fD_\ell} \leq d(z_k,z_\ell), }
}
which is a finite-dimensional convex program with quadratic-cone constraints.  It can be solved using interior point methods or, as we detail next for a similar problem, using proximal methods. 

When using $d(x,y)=|x-y|$ with $\X=\RR$, we can reduce the number of constraints by ordering the $z_k$'s via $z_1 \leq z_2 \leq \ldots$.  In this case, we only have to solve
\eq{	
	\Wass_1(\al,\be) = 
	\umax{ (\fD_k)_k } \enscond{ \sum_k \fD_k \VectMode{m}_k }{ \foralls k, \abs{\fD_{k+1}-\fD_k} \leq z_{k+1}-z_k }, 
}
which is a linear program. 
Note that furthermore, in this 1-D case, a closed form expression for $\Wass_1$ using cumulative functions is given in~\eqref{eq-w1-1d}.

\begin{rem}[$\Wass_p$ with $0<p \leq 1$]
	If $0<p \leq 1$, then $\tilde d(x,y) \eqdef d(x,y)^p$ satisfies the triangular inequality, and hence $\tilde d$ is itself a distance. One can thus apply the results and algorithms detailed above for $\Wass_1$ to compute $\Wass_p$ by simply using $\tilde d$ in place of $d$. This is equivalent to stating that $\Wass_p$ is the dual of $p$-H\"older functions $\ensconds{f}{\Lip_p(f) \leq 1}$, where
	\eq{
		\Lip_p(f) \eqdef \sup \enscond{ \frac{|\f(x)-\f(y)|}{d(x,y)^p} }{ (x,y) \in \X^2, x \neq y }.
	}
\end{rem}

\section{$\Wass_1$ on Euclidean Spaces}
\label{sec-w1-eucl}

In the special case of Euclidean spaces $\X=\Y=\RR^\dim$, using $\c(x,y) = \norm{x-y}$, the global Lipschitz constraint appearing in~\eqref{eq-w1-metric} can be made local as a uniform bound on the gradient of $f$, 
\eql{\label{eq-w1-cont}
	\Wass_1(\al,\be) = 
	\umax{f} \enscond{ \int_{\RR^\dim} \f(x) (\d\al(x)-\d\be(x)) }{ \norm{\nabla \f}_\infty \leq 1 }.
}
Here the constraint $\norm{\nabla \f}_\infty \leq 1$ signifies that the norm of the gradient of $\f$ at any point $x$ is upper bounded by $1$, $\norm{\nabla \f(x)}_2 \leq 1$ for any $x$.

Considering the dual problem to~\eqref{eq-w1-cont}, one obtains an optimization problem under fixed divergence constraint
\eql{\label{eq-w1-cont-div}
	\Wass_1(\al,\be) = 
	\umin{\flow} \enscond{ \int_{\RR^\dim} \norm{\flow(x)}_2 \d x }{  \diverg(\flow)=\al-\be }, 
}
which is often called the Beckmann formulation~\citep{Beckmann52}.
Here the vectorial function $\flow(x) \in \RR^2$ can be interpreted as a flow field, describing locally the movement of mass. Outside the support of the two input measures, $\diverg(\flow)=0$, which is the conservation of mass constraint. 
Once properly discretized using finite elements, Problems~\eqref{eq-w1-cont} and~\eqref{eq-w1-cont-div} become nonsmooth convex optimization problems. 
It is possible to use an off-the-shelf interior points quadratic-cone optimization solver, but as advocated in~\S\ref{sec-prox-solvers}, large-scale problems require the use of simpler but more adapted first order methods. One can thus use, for instance, Douglas--Rachford (DR) iterations~\eqref{eq-dr-iters} or the related alternating direction method of multipliers method. Note that on a uniform grid, projecting on the divergence constraint is conveniently handled using the fast Fourier transform. We refer to~\citet{SolomonEMDSurfaces2014} for a detailed account for these approaches and application to OT on triangulated meshes.
See also~\citet{LiRyuOsherYinGangbo2017_parallel,Ryu2017a,Ryu2017b} for similar approaches using primal-dual splitting schemes.
Approximation schemes that relax the Lipschitz constraint on the dual potentials $\f$ have also been proposed, using, for instance, a constraint on wavelet coefficients leading to an explicit formula~\citep{shirdhonkar2008approximate}, or by considering only functions $\f$ parameterized as multilayer neural networks with ``rectified linear'' $\max(0,\cdot)$ activation function and clipped weights~\citep{WassersteinGAN}.


\section{$\Wass_1$ on a Graph}

The previous formulations~\eqref{eq-w1-cont} and~\eqref{eq-w1-cont-div} of $\Wass_1$ can be generalized to the setting where $\X$ is a geodesic space, \ie $\c(x,y)=\dist(x,y)$ where $\dist$ is a geodesic distance. We refer to~\citet{FeldmanMacCann02} for a theoretical analysis in the case where $\X$ is a Riemannian manifold.
When $\X=\range{1,n}$ is a discrete set, equipped with undirected edges $(i,j) \in \Edges \subset \X^2$ labeled with a weight (length) $\weight_{i,j}$, we recover the important case where $\X$ is a graph equipped with the geodesic distance (or shortest path metric):
\eq{
	\distD_{i,j} \eqdef \umin{K \geq 0, (i_k)_k : i \rightarrow j }
	 \enscond{ \sum_{k=1}^{K-1} \weight_{i_k,i_{k+1}} }{ \foralls k \in \range{1,K-1}, (i_k,i_{k+1}) \in \Ee  },
}
where $i \rightarrow j$ indicates that $i_1=i$ and $i_K=j$, namely that the path starts at $i$ and ends at $j$.

We consider two vectors $(\a,\b) \in (\RR^n)^2$ defining (signed) discrete measures on the graph $\X$ such that $\sum_i \a_i = \sum_i \b_i$ (these weights do not need to be positive). 
The goal is now to compute $\WassD_1(\a,\b)$, as introduced in~\eqref{eq-wass-p-disc} for $p=1$, when the ground metric is the graph geodesic distance. This computation should be carried out without going as far as having to compute a ``full'' coupling $\P$ of size $n \times n$, to rely instead on local operators thanks to the underlying connectivity of the graph. These operators are discrete formulations for the  gradient and divergence differential operators.

A discrete dual Kantorovich potential $\fD \in \RR^n$ is a vector indexed by all vertices of the graph. The gradient operator $\nabla : \RR^n \rightarrow \RR^{\Edges}$ is defined as
\eq{
	\foralls (i,j) \in \Edges, \quad (\nabla \fD)_{i,j} \eqdef \fD_i-\fD_j.
}
A flow $\flowD = (\flowD_{i,j})_{i,j}$ is defined on edges, and the divergence operator $\diverg : \RR^{\Edges} \rightarrow \RR^n$, which is the adjoint of the gradient $\nabla$, maps flows to vectors defined on vertices and is defined as
\eq{
	\foralls i \in \range{1,n}, \quad \diverg(\flowD)_i \eqdef \sum_{j: (i,j) \in \Edges} (\flowD_{i,j} - \flowD_{j,i}) \in \RR^n.
}

Problem~\eqref{eq-w1-cont} becomes, in the graph setting, 
\eql{\label{eq-w1-graph-potential}
	\WassD_1(\a,\b) = 
	\umax{\fD \in \RR^n} \enscond{ \sum_{i=1}^n \fD_i (\a_i-\b_i)   }{  \forall (i,j) \in \Edges, \, |(\nabla \fD)_{i,j}| \leq \weight_{i,j}  }.
}
The associated dual problem, which is analogous to Formula~\eqref{eq-w1-cont-div}, is then
\eql{\label{eq-w1-graph-div}
	\WassD_1(\a,\b) = 
	\umin{\flowD \in \RR_+^\Edges} \enscond{ \sum_{(i,j) \in \Ee} \weight_{i,j} \flowD_{i,j}  }{  \diverg(\flowD)=\a-\b }.
}
This is a linear program and more precisely an instance of min-cost flow problems. Highly efficient dedicated simplex solvers have been devised to solve it; see, for instance,~\citep{TreeEMD2007}. 
Figure~\ref{fig-planar-graph} shows an example of primal and dual solutions. 
Formulation~\eqref{eq-w1-graph-div} is the so-called Beckmann formulation~\citep{Beckmann52} and has been used and extended to define and study traffic congestion models; see, for instance,~\citep{CarlierSantambrogioOTCongestion2008}.

\begin{figure}[h!]
\centering
\begin{tabular}{@{}c@{\hspace{10mm}}c@{}}
\includegraphics[width=.45\linewidth]{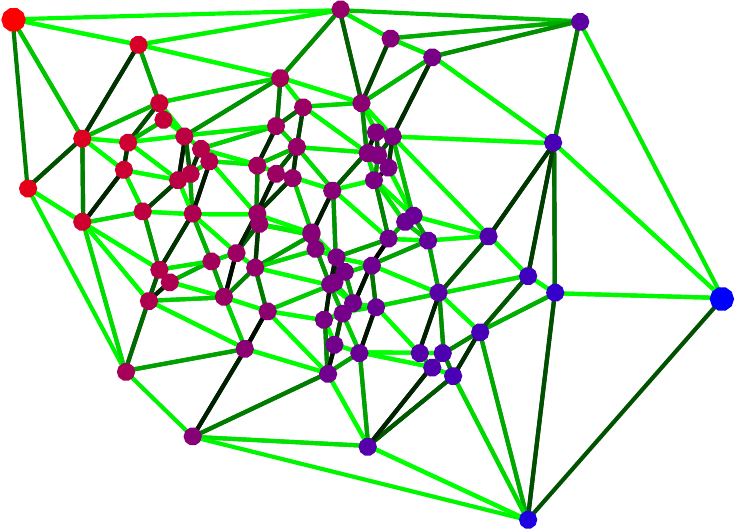}&
\includegraphics[width=.45\linewidth]{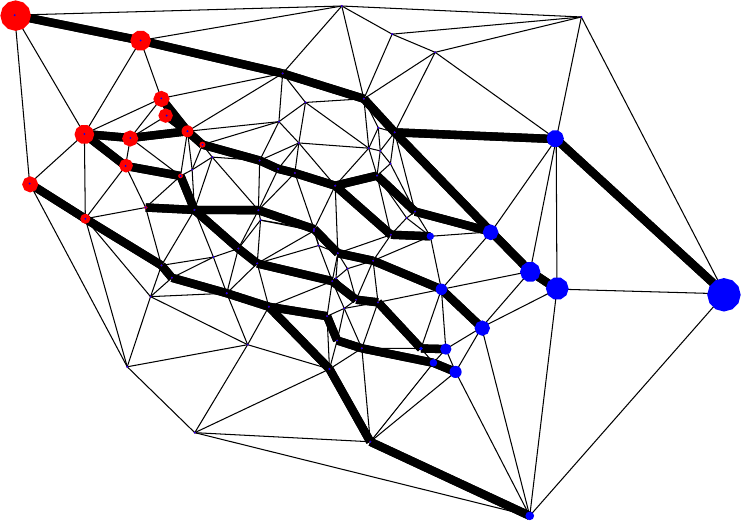}\\
$\fD$ & $({\color{red}\a},{\color{blue}\b})$ and $\flowD$ 
\end{tabular}
\caption{\label{fig-planar-graph}
Example of computation of $\WassD_1(\a,\b)$ on a planar graph with uniform weights $\weight_{i,j}=1$.
Left: potential $\fD$ solution of~\eqref{eq-w1-graph-potential} (increasing value from red to blue). The green color of the edges is proportional to $|(\nabla \fD)_{i,j}|$. 
Right: flow $\flowD$ solution of~\eqref{eq-w1-graph-div}, 
where bold black edges display nonzero $\flowD_{i,j}$, which saturate to $\weight_{i,j}=1$.
These saturating flow edge on the right match the light green edge on the left where $|(\nabla \fD)_{i,j}|=1$.
}
\end{figure}


\chapter{Dynamic Formulations}
\label{c-dynamic}

This chapter presents the geodesic (also called dynamic) point of view of optimal transport when the cost is a squared geodesic distance. This describes the optimal transport between two measures as a curve in the space of measures minimizing a total length. The dynamic point of view offers an alternative and intuitive interpretation of optimal transport, which not only allows us to draw links with fluid dynamics but also results in an efficient numerical tool to compute OT in small dimensions when interpolating between two densities. The drawback of that approach is that it cannot scale to large-scale sparse measures and works only in low dimensions on regular domains (because one needs to grid the space) with a squared geodesic cost.

In this chapter, we use the notation $(\al_0,\al_1)$ in place of $(\al,\be)$ in agreement with the idea that we start at time $t=0$ from one measure to reach another one at time $t=1$.

\section{Continuous Formulation}

In the case $\X=\Y=\RR^d$, and $c(x,y)=\norm{x-y}^2$, the optimal transport distance $\Wass_2^2(\al,\be) = \MK_\c(\al,\be)$ as defined in~\eqref{eq-mk-generic} can be computed by looking for a minimal length path $(\al_t)_{t=0}^1$ between these two measures. 
This path is described by advecting the measure using a vector field $\speed_t$ defined at each instant. The vector field $\speed_t$ and the path $\alpha_t$ must satisfy the conservation of mass formula, resulting in 
\eql{\label{eq-cons-bb-ncvx}
	\pd{\al_t}{t} + \diverg(\al_t \speed_t) = 0
	\qandq 	
	\al_{t=0}=\al_0, 
	\al_{t=1}=\al_1,
}
where the equation above should be understood in the sense of distributions on $\RR^d$. The infinitesimal length of such a vector field is measured using the $L^2$ norm associated to the measure $\al_t$, that is defined as 
\eq{\norm{\speed_t}_{L^2(\al_t)} = \left( \int_{\RR^\dim} \norm{\speed_t(x)}^2 \d\al_t(x) \right)^{1/2}.} 
This definition leads to the following minimal-path reformulation of $\Wass_2$, originally introduced by~\citet{benamou2000computational}:
\eql{\label{eq-bb-non-cvx}
	\Wass_2^2(\al_0,\al_1) = \umin{ (\al_t,\speed_t)_t \text{ sat. \eqref{eq-cons-bb-ncvx}} }
		\int_0^1 \int_{\RR^d}   \norm{\speed_t(x)}^2 \d\al_t(x) \d t,
}
where $\al_t$ is a scalar-valued measure and $\speed_t$ a vector-valued measure. Figure~\ref{fig-displacement} shows two examples of such paths of measures.

\newcommand{\MyFigDispl}[1]{\includegraphics[width=.19\linewidth]{displacement/#1}}
\begin{figure}[h!]
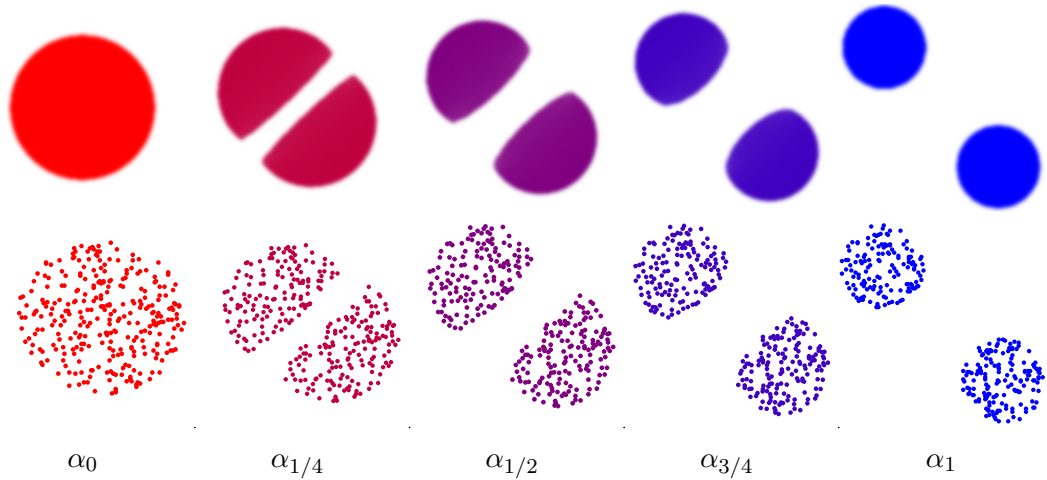

\centering
\begin{tabular}{@{}c@{\hspace{1mm}}c@{\hspace{1mm}}c@{\hspace{1mm}}c@{\hspace{1mm}}c@{}}
\MyFigDispl{interp-dens-1} & 
\MyFigDispl{interp-dens-2} & 
\MyFigDispl{interp-dens-3} & 
\MyFigDispl{interp-dens-4} & 
\MyFigDispl{interp-dens-5} \\
\MyFigDispl{interp-points-1} & 
\MyFigDispl{interp-points-2} & 
\MyFigDispl{interp-points-3} & 
\MyFigDispl{interp-points-4} & 
\MyFigDispl{interp-points-5} \\
$\al_0$ & $\al_{1/4}$ & $\al_{1/2}$ & $\al_{3/4}$ & $\al_{1}$
\end{tabular}
\caption{\label{fig-displacement}
Displacement interpolation $\al_t$ satisfying~\eqref{eq-bb-non-cvx}. 
Top: for two measures $(\al_0,\al_1)$ with densities with respect to the Lebesgue measure.
Bottom: for two discrete empirical measures with the same number of points (bottom).
}
\end{figure}

The formulation~\eqref{eq-bb-non-cvx} is a nonconvex formulation in the variables $(\al_t,\speed_t)_t$ because of the constraint~\eqref{eq-cons-bb-ncvx} involving the product $\al_t \speed_t$. Introducing a vector-valued measure (often called the ``momentum'')
\eq{
	\Moment_t \eqdef \al_t \speed_t, 
}
\citeauthor{benamou2000computational} showed in their landmark paper~\citeyearpar{benamou2000computational} that it is instead convex in the variable $(\al_t,\Moment_t)_t$ when writing 
\eql{\label{eq-bb-convex}
	\Wass_2^2(\al_0,\al_1) = \umin{ (\al_t,\Moment_t)_t  \in \BBConstr(\al_0,\al_1)  }
		\int_0^1 \int_{\RR^d} \theta(\al_t(x),\Moment_t(x)) \d x \d t,
}
where we define the set of constraints as
\eql{\label{eq-mass-conservation}
	\BBConstr(\al_0,\al_1) \eqdef 
	\enscond{ (\al_t,\Moment_t) }{
		\pd{\al_t}{t} + \diverg(\Moment_t) = 0, 
		\al_{t=0}=\al_0, 
		\al_{t=1}=\al_1
	}, 
}	
and where $\theta :  \rightarrow \RR^+ \cup \{+\infty\}$ is the following lower semicontinuous convex function
\eql{\label{eq-theta-dynamic}
	\foralls (a,b) \in \RR_+ \times \RR^\dim, \quad
	\th(a,b) = \choice{
		\frac{\norm{b}^2}{a} \qifq a>0, \\
		0 \qifq (a,b) = 0, \\
		+\infty \quad \text{otherwise}.
	}
}
This definition might seem complicated, but it is crucial to impose that the momentum $\Moment_t(x)$ should vanish when $\al_t(x)=0$. 
Note also that~\eqref{eq-bb-convex} is written in an informal way as if the measures $(\al_t,\Moment_t)$ were density functions, but this is acceptable because $\th$ is a 1-homogeneous function (and hence defined even if the measures do not have a density with respect to Lebesgue measure) and can thus be extended in an unambiguous way from density to functions. 

\begin{rem11}{Links with McCann's interpolation}{rem-displacement}
In the case (see Equation~\eqref{eq-brenier-map}) where there exists an optimal Monge map $\T : \RR^\dim \rightarrow \RR^\dim$ with $\T_\sharp \al_0=\al_1$, then $\al_t$ is equal to McCann's interpolation
\eql{\label{eq-mc-cann-interp}
	\al_t = ((1-t) \Id + t \T)_{\sharp} \al_0. 
}
In the 1-D case, using Remark~\ref{rem-1d-ot-generic}, this interpolation can be computed thanks to the relation
\eql{\label{eq-displacement-1d-cumul}
	\cumul{\al_t}^{-1} = (1-t)\cumul{\al_0}^{-1} + t \cumul{\al_1}^{-1}; 
}
see Figure~\ref{fig-1d-ot}. We refer to~\citet{gangbo1996geometry} for a detailed review on the Riemannian geometry of the Wasserstein space.
In the case that there is ``only'' an optimal coupling $\pi$ that is not necessarily supported on a Monge map, one can compute this interpolant as
\eql{\label{eq-displ-discr}
	\al_t = P_{t\sharp} \pi
	\qwhereq
	P_t : (x,y) \in \RR^\dim \times \RR^\dim \mapsto (1-t) x + t y.
}
For instance, in the discrete setup~\eqref{eq-pair-discr}, denoting $\P$ a solution to~\eqref{eq-mk-discr}, an interpolation is defined as
\eql{\label{eq-mccann-discrete}
	\al_t = \sum_{i,j} \P_{i,j} \de_{ (1-t) x_i + t y_j }.  
}   
Such an interpolation is typically supported on $n+m-1$ points, which is the maximum number of nonzero elements of $\P$.
Figure~\ref{fig-displacement-emp-weight} shows two examples of such displacement interpolation of discrete measures. 
This construction can be generalized to geodesic spaces $\Xx$ by replacing $P_t$ by the interpolation along geodesic paths.
McCann's interpolation finds many applications, for instance, color, shape, and illumination interpolations in computer graphics~\citep{Bonneel-displacement}. 
\end{rem11}

\newcommand{\MyFigDisplEmp}[1]{\includegraphics[width=.16\linewidth]{interpolation-discrete/empirical/interp-#1}}
\newcommand{\MyFigDisplWei}[1]{\includegraphics[width=.16\linewidth]{interpolation-discrete/weighted/interp-#1}}
\begin{figure}[h!]
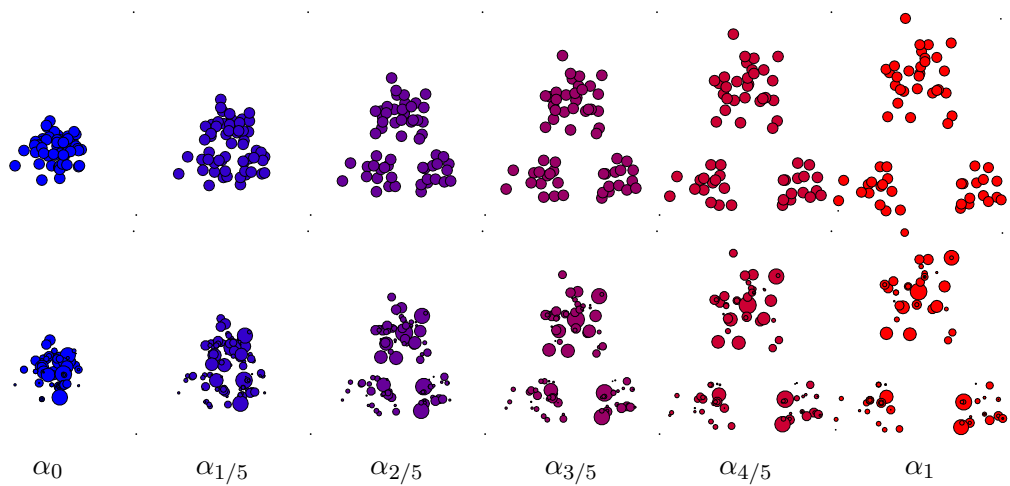

\centering
\begin{tabular}{@{}c@{}c@{}c@{}c@{}c@{}c@{}}
\MyFigDisplEmp{1}&
\MyFigDisplEmp{2}&
\MyFigDisplEmp{3}&
\MyFigDisplEmp{4}&
\MyFigDisplEmp{5}&
\MyFigDisplEmp{6}\\
\MyFigDisplWei{1}&
\MyFigDisplWei{2}&
\MyFigDisplWei{3}&
\MyFigDisplWei{4}&
\MyFigDisplWei{5}&
\MyFigDisplWei{6}\\
$\al_0$ & $\al_{1/5}$ & $\al_{2/5}$ & $\al_{3/5}$ & $\al_{4/5}$ & $\al_{1}$
\end{tabular}
\caption{\label{fig-displacement-emp-weight}
Comparison of displacement interpolation~\eqref{eq-displ-discr} of discrete measures. 
Top: point clouds (empirical measures $(\al_0,\al_1)$ with the same number of points).
Bottom: same but with varying weights. For $0<t<1$, the top example corresponds to an empirical measure interpolation $\mu_t$ with $N$ points, while the bottom one defines a measure supported on $2N-1$ points.
}
\end{figure}

\section{Discretization on Uniform Staggered Grids}

For simplicity, we describe the numerical scheme in dimension $\dim=2$; the extension to higher dimensions is straightforward. We follow the discretization method introduced by~\citet{FPapPeyOud13}, which is inspired by staggered grid techniques which are commonly used in fluid dynamics.
We discretize time as $t_k = k/T \in [0,1]$ and assume the space is uniformly discretized at points $x_i = (i_1/n_1,i_2/n_2) \in X = [0,1]^2$.
We use a staggered grid representation, so that $\al_t$ is represented using $\a \in \RR^{(T+1) \times n_1 \times n_2}$ associated to half grid points in time, whereas $\Moment$ is represented using $\MomentD=(\MomentD_1,\MomentD_2)$, where $\MomentD_1 \in \RR^{T \times (n_1+1) \times n_2}$ and $\MomentD_1 \in \RR^{T \times n_1 \times (n_2+1)}$ are stored at half grid points in each space direction.
Using this representation, for $(k,i_1,i_2) \in \range{1,T} \times \range{1,n_1} \times \range{1,n_2}$, the time derivative is computed as
\eq{
	(\partial_t \a)_{k,i} \eqdef \a_{k+1,i}-\a_{k,i}
}
and spatial divergence as
\eql{\label{eq-space-diverg-disc}
	\diverg(\MomentD)_{k,i} \eqdef
			\MomentD_{k,i_1+1,i_2}^1-\MomentD_{k,i_1,i_2}^1 
			+
			\MomentD_{k,i_1,i_2+1}^2-\MomentD_{k,i_1,i_2}^2,	
} 
which are both defined at grid points, thus forming arrays of $\RR^{T \times n_1 \times n_2}$.

In order to evaluate the functional to be optimized, one needs interpolation operators from midgrid points to grid points,
for all $(k,i_1,i_2) \in \range{1,T} \times \range{1,n_1} \times \range{1,n_2}$,  
\eq{
	\InterpBB_a( \a )_{k,i} \eqdef \InterpBB(\a_{k+1,i},\a_{k,i}),
}
\eq{
	\InterpBB_\Moment( \MomentD )_{k,i} \eqdef ( 
		\InterpBB(\MomentD_{k,i_1+1,i_2}^1,\MomentD_{k,i_1,i_2}^1),
		\InterpBB(\MomentD_{k,i_1,i_2+1}^2,\MomentD_{k,i_1,i_2}^2)	
	 ).
}
The simplest choice is to use a linear operator $\InterpBB(r,s)=\frac{r+s}{2}$, which is the one we consider next. The discrete counterpart to~\eqref{eq-bb-convex} reads
\eql{\label{eq-bb-discrete}
	\umin{(\a,\MomentD) \in \BBConstrD(\a_0,\a_1)} 
		\Theta(\InterpBB_a(\a),\InterpBB_\Moment( \MomentD )),
}
\eq{
		\qwhereq
		\Theta(\tilde\a,\tilde\MomentD) 
		\eqdef \sum_{k=1}^{T} \sum_{i_1=1}^{n_1} \sum_{i_2=1}^{n_2}
		\th( \tilde\a_{k,i}, \tilde\MomentD_{k,i} ), 
}
and where the constraint now reads
\eq{
	\BBConstrD(\a_0,\a_1) \eqdef 
	\enscond{ (\a, \MomentD)
	}{
		\partial_t \a + \diverg(\MomentD) = 0, 
		(\a_{0,\cdot},\a_{T,\cdot}) = (\a_0,\a_1)
	},
}	
where $\a \in \RR^{(T+1) \times n_1 \times n_2}$, $\MomentD=(\MomentD_1,\MomentD_2)$ with $\MomentD_1 \in \RR^{T \times (n_1+1) \times n_2}$, $\MomentD_2 \in \RR^{T \times n_1 \times (n_2+1)}$.
Figure~\ref{fig-dynamic-maze} shows an example of evolution $(\al_t)_t$ approximated using this discretization scheme. 

\begin{rem}[Dynamic formulation on graphs]
In the case where $\X$ is a graph and $c(x,y)=d_\X(x,y)^2$ is the squared geodesic distance, it is possible to derive faithful discretization methods that use a discrete divergence associated to the graph structure in place of the uniform grid discretization~\eqref{eq-space-diverg-disc}. In order to ensure that the heat equation has a gradient flow structure (see~\S\ref{sec-grad-flows} for more details about gradient flows) for the corresponding dynamic Wasserstein distance, ~\citet{Maas2011} and later ~\citet{MielkeCVPDE} proposed to use a logarithmic mean $\InterpBB(r,s)$ (see also~\citep{solomon2016continuous,ChowHuangLiZhou2012,chow2017entropy,chow2017discrete}). 
\end{rem}

\section{Proximal Solvers}
\label{sec-prox-solvers}

The discretized dynamic OT problem~\eqref{eq-bb-discrete} is challenging to solve because it requires us to minimize a nonsmooth optimization problem under affine constraints. Indeed, the function $\th$ is convex but nonsmooth for measures with vanishing mass $\a_{k,i}$. When interpolating between two compactly supported input measures $(\a_0,\a_1)$, one typically expects the mass of the interpolated measures $(\a_{k})_{k=1}^T$ to vanish as well, and the difficult part of the optimization process is indeed to track this evolution of the support. In particular, it is not possible to use standard smooth optimization techniques. 

There are several ways to recast~\eqref{eq-bb-discrete} into a quadratic-cone program, either by considering the dual problem or simply by replacing the functional $\th( \a_{k,i}, \MomentD_{k,i} )$ by a linear function under constraints,
\eq{
	\Theta(\tilde\a,\tilde\MomentD) = \umin{\tilde \zD} \enscond{
		\sum_{k,i} \tilde \zD_{k,i}
	}{
		\foralls (k,i), (\zD_{k,i},\tilde\a_{k,i},\tilde\MomentD_{i,j}) \in \Ll
	}, 
}
which thus requires the introduction of an extra variable $\tilde \zD$.
Here $\Ll \eqdef \ensconds{ (z,a,J) \in \RR \times \RR^+ \times \RR^\dim }{ \norm{J}^2 \leq z a }$ is a rotated Lorentz quadratic-cone.
With this extra variable, it is thus possible to solve the discretized problem using standard interior point solvers for quadratic-cone programs~\citep{nesterov1994interior}. These solvers have fast convergence rates and are thus capable of computing a solution with high precision. Unfortunately, each iteration is costly and requires the resolution of a linear system of dimension that scales with the number of discretization points. They are thus not applicable for large-scale multidimensional problems encountered in imaging applications.

An alternative to these high-precision solvers are low-precision first order methods, which are well suited for nonsmooth but highly structured problems such as~\eqref{eq-bb-discrete}. While this class of solvers is not new, it has recently been revitalized in the fields of imaging and machine learning because they are the perfect fit for these applications, where numerical precision is not the driving goal.
We refer, for instance, to the monograph~\citep{BauschkeCombettes11} for a detailed account on these solvers and their use for large-scale applications. 
We here concentrate on a specific solver, but of course many more can be used, and we refer to~\citep{FPapPeyOud13} for a study of several such approaches for dynamical OT. Note that the idea of using a first order scheme for dynamical OT was initially proposed by~\citet{benamou2000computational}.

The DR algorithm~\citep{Lions-Mercier-DR} is specifically tailored to solve nonsmooth structured problems of the form
\eql{\label{eq-splitting-opt}
	\umin{x \in \Hh} F(x) + G(x),
}
where $\Hh$ is some Euclidean space, 
and where $F, G : \Hh \rightarrow \RR \cup \{+\infty\}$ are two closed convex functions, for which one can ``easily '' (\eg in closed form or using a rapidly converging scheme) compute the so-called proximal operator 
\eql{\label{eq-prox-eucl}
	\foralls x \in \Hh, \quad
	\Prox_{\tau F}(x) \eqdef \uargmin{x' \in \Hh} \frac{1}{2}\norm{x-x'}^2 + \tau F(x)
}
for a parameter $\tau>0$. Note that this corresponds to the proximal map for the Euclidean metric and that this definition can be extended to more general Bregman divergence in place of $\norm{x-x'}^2$; see~\eqref{eq-prox-kl} for an example using the $\KLD$ divergence.
The iterations of the DR algorithm define a sequence $(\it{x}, \it{w}) \in \Hh^2$ using an initialization $(x^{(0)}, w^{(0)})\in \Hh^2$ and 
\begin{equation}\label{eq-dr-iters}
\begin{aligned}
	\itt{w} &\eqdef  \it{w} + \al (\Prox_{\ga F} (2\it{x}-\it{w})-\it{x}),\\
	\itt{x} &\eqdef  \Prox_{\ga G}(\itt{w}).
\end{aligned}
\end{equation}
If $0 < \al < 2$ and $\ga > 0$, one can show that $\it{x} \rightarrow z^\star$, where $z^\star$ is a solution of~\eqref{eq-splitting-opt}; see~\citep{Combettes2007} for more details. 
This algorithm is closely related to another popular method, the alternating direction method of multipliers~\citep{GabayMercier,GlowinskiMarroco} (see also~\citep{BoydADMM} for a review), which can be recovered by applying DR on a dual problem; see~\citep{FPapPeyOud13} for more details on the equivalence between the two, first shown by~\citep{Eckstein1992}.

There are many ways to recast Problem~\eqref{eq-bb-discrete} in the form~\eqref{eq-splitting-opt}, and we refer to~\citep{FPapPeyOud13} for a detailed account of these approaches. A simple way to achieve this is by setting $x=(\a,\MomentD,\tilde\a,\tilde\MomentD)$ and letting
\eq{
	F(x) \eqdef \Theta(\tilde\a,\tilde\MomentD) + \iota_{\BBConstrD(\a_0,\a_1)}(\a,\MomentD)
	\qandq
	G(x) = \iota_{\Dd}(\a,\MomentD,\tilde\a,\tilde\MomentD),
}
\eq{
	\qwhereq \Dd \eqdef \enscond{(\a,\MomentD,\tilde\a,\tilde\MomentD)}{
		\tilde\a=\InterpBB_a(\a), \tilde\MomentD=\InterpBB_\Moment( \MomentD )
	}.
}
The proximal operator of these two functions can be computed efficiently.
Indeed, one has
\eq{
	\Prox_{\tau F}(x) = ( \Prox_{\tau\Theta}(\tilde\a,\tilde\MomentD), \Proj_{\BBConstrD(\a_0,\a_1)}(\a,\MomentD) ).
}
The proximal operator $\Prox_{\tau\Theta}$ is computed by solving a cubic polynomial equation at each grid position. The orthogonal projection on the affine constraint $\BBConstrD(\a_0,\a_1)$ involves the resolution of a Poisson equation, which can be achieved in $O(N\log(N))$ operations using the fast Fourier transform, where $N=Tn_1 n_2$ is the number of grid points.
Lastly, the proximal operator $\Prox_{\tau G}$ is a linear projector, which requires the inversion of a small linear system.
We refer to~\citet{FPapPeyOud13} for more details on these computations. 
Figure~\ref{fig-dynamic-maze} shows an example in which that method is used to compute a dynamical interpolation inside a complicated planar domain. 
This class of proximal methods for dynamical OT has also been used to solve related problems such as mean field games~\citep{benamou2015augmented}.

\begin{figure}[h!]
\centering
\includegraphics[width=\linewidth]{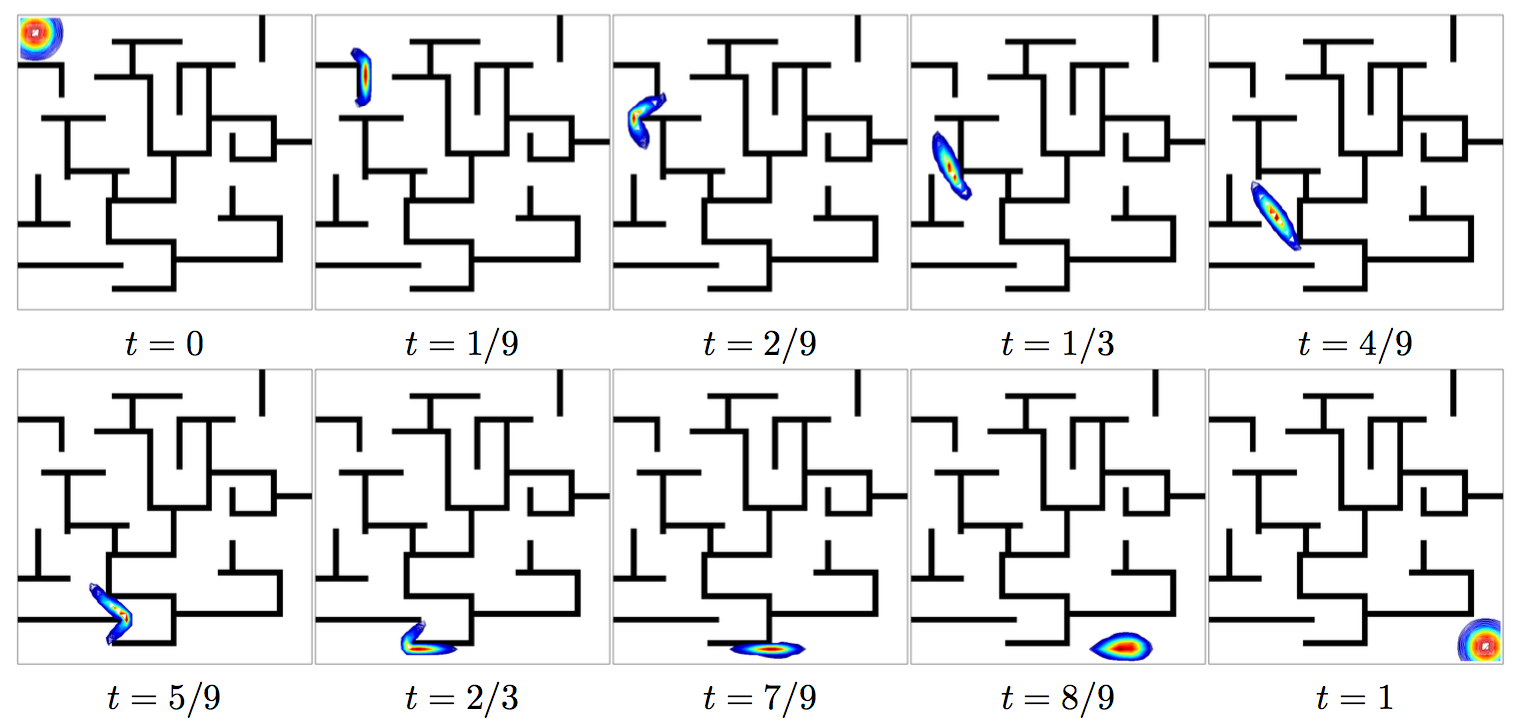}
\caption{\label{fig-dynamic-maze}
Solution $\al_t$ of dynamic OT computed with a proximal splitting scheme.
}
\end{figure}

\section{Dynamical Unbalanced OT}
\label{dynamic-unbalanced}

In order to be able to match input measures with different mass $\al_0(\Xx) \neq \al_1(\Xx)$ (the so-called ``unbalanced'' settings, the terminology introduced by~\citet{benamou2003numerical}), and also to cope with local mass variation,
several normalizations or relaxations have been proposed, in particular by relaxing the fixed marginal constraint; see~\S\ref{sec-unbalanced}. 
A general methodology consists in introducing a source term $s_t(x)$ in the continuity equation~\eqref{eq-mass-conservation}. We thus consider 
\eq{
	\bar\BBConstr(\al_0,\al_1) \eqdef 
	\enscond{ (\al_t,\Moment_t,s_t) }{
		\pd{\al_t}{t} + \diverg(\Moment_t) = s_t, 
		\al_{t=0}=\al_0, 
		\al_{t=1}=\al_1
	}.
}
The crucial question is how to measure the cost associated to this source term and introduce it in the original dynamic formulation~\eqref{eq-bb-convex}. Several proposals appear in the literature, for instance, using an $L^2$ cost~\citet{piccoli2014generalized}.  
In order to avoid having to ``teleport'' mass (mass which travels at infinite speed and suddenly grows in a region where there was no mass before), the associated cost should be infinite. It turns out that this can be achieved in a simple convex way, by also allowing $s_t$ to be an arbitrary measure (\emph{e.g.} using a 1-homogeneous cost) by penalizing $s_t$ in the same way as the momentum $\Moment_t$, 
\eql{\label{eq-wfr-dynamic}
	\WFR^2(\al_0,\al_1) = \umin{ (\al_t,\Moment_t,s_t)_t  \in \bar\BBConstr(\al_0,\al_1)  }
		\Theta(\al,\Moment,s),
}
\eq{
	\qwhereq \Theta(\al,\Moment,s) \eqdef 
		\int_0^1 \int_{\RR^d} \pa{ 
			\theta(\al_t(x),\Moment_t(x)) + \tau \theta(\al_t(x),s_t(x)) 
		}\d x \d t,
}
where $\th$ is the convex 1-homogeneous function introduced in~\eqref{eq-theta-dynamic}, and $\tau$ is a weight controlling the trade-off between mass transportation and mass creation/destruction. This formulation was proposed independently by several authors~\citep{LieroMielkeSavareShort,2017-chizat-focm,kondratyev2015}. This ``dynamic'' formulation has a ``static'' counterpart; see Remark~\ref{rem-wfr-static}. 
The convex optimization problem~\eqref{eq-wfr-dynamic} can be solved using methods similar to those detailed in~\S\ref{sec-prox-solvers}. 
Figure~\ref{fig-unbalanced-dynamic} displays a comparison of several unbalanced OT dynamic interpolations.
This dynamic formulation resembles ``metamorphosis'' models for shape registration~\citep{Metamorphosis2005}, and a more precise connection is detailed in~\citep{maas2015generalized,maas2016generalized}. 

As $\tau \rightarrow 0$, and if $\al_0(\Xx) = \al_1(\Xx)$, then one retrieves the classical OT problem, $\WFR(\al_0,\al_1) \rightarrow \Wass(\al_0,\al_1)$. In contrast, as $\tau \rightarrow +\infty$, this distance approaches the Hellinger metric over densities
\begin{align*}
	\frac{1}{\tau}\WFR(\al_0,\al_1)^2\; \overset{\tau \rightarrow +\infty} \longrightarrow 
	&\int_{\Xx} | \sqrt{\density{\al_0}(x)} - \sqrt{\density{\al_1}(x)} |^2 \d x \\
	=& \int_{\Xx} | 1 - \sqrt{ \frac{\d\al_1}{\d\al_0}(x) } |^2 \d \al_0(x).
\end{align*}

\begin{figure}[h!]
\centering
\includegraphics[width=\linewidth]{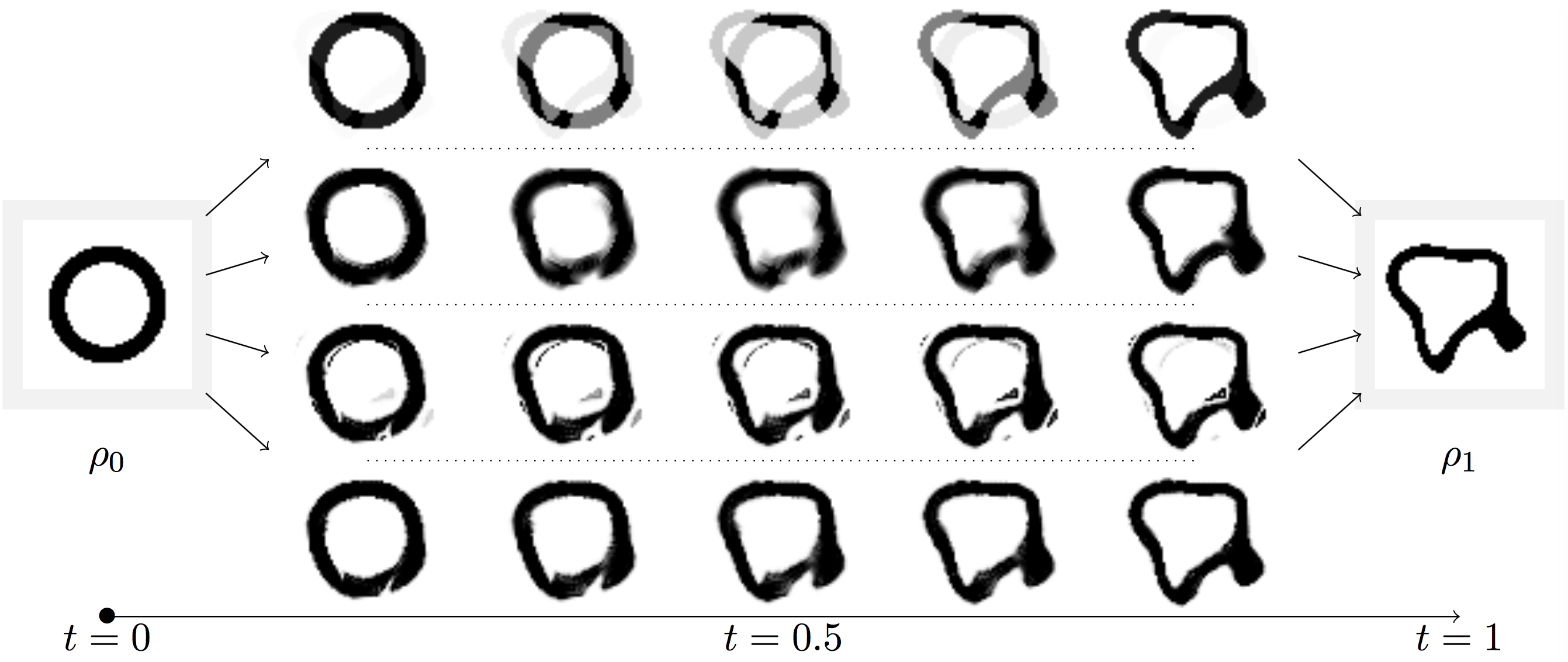}
\caption{\label{fig-unbalanced-dynamic}
Comparison of Hellinger (first row), Wasserstein (row 2), partial optimal transport (row 3), and Wasserstein--Fisher--Rao (row 4) dynamic interpolations.
}
\end{figure}

\section{More General Mobility Functionals}
\label{eq-generic-modibility}

It is possible to generalize the dynamic formulation~\eqref{eq-bb-convex} by considering other ``mobility functions'' $\th$ in place of the one defined in~\eqref{eq-theta-dynamic}. A possible choice for this mobility functional is proposed in~\citet{dolbeault2009new},
\eql{\label{eq?generic-mobility}
	\foralls (a,b) \in \RR_+ \times \RR^\dim, \quad
	\th(a,b) = a^{s-p} \norm{b}^{p},
}
where the parameter should satisfy $p \geq 1$ and $s \in [1,p]$ in order for $\th$ to be convex. 
Note that this definition should be handled with care in the case $1<s\leq p$ because $\th$ does not have a linear growth at infinity, so that solutions to~\eqref{eq-bb-convex} must be constrained to have a density with respect to the Lebesgue measure. 

The case $s=1$ corresponds to the classical OT problem and the optimal value of~\eqref{eq-bb-convex} defines $\Wass_p(\al,\be)$. In this case, $\th$ is 1-homogeneous, so that solutions to~\eqref{eq-bb-convex} can be arbitrary measures. The case $(s=1,p=2)$ is the initial setup considered in~\eqref{eq-bb-convex} to define $\Wass_2$. 

The limiting case $s=p$ is also interesting, because it corresponds to a dual Sobolev norm $W^{-1,p}$ and the value of~\eqref{eq-bb-convex} is then equal to
\eq{
	\norm{\al-\be}_{W^{-1,p}(\RR^\dim)}^p = 
	\umin{\f} \enscond{ \int_{\RR^\dim} \f  \d(\al-\be)}{  \int_{\RR^\dim} \norm{ \nabla f(x) }^q \d x \leq 1 }
}
for $1/q+1/p=1$.
In the limit $(p=s,q) \rightarrow (1,\infty)$, one recovers the $\Wass_1$ norm.
The case $s=p=2$ corresponds to the Sobolev $H^{-1}(\RR^\dim)$ Hilbert norm defined in~\eqref{eq-dual-sobolev-div}.

\todoK{Speak of the connexion with geodesic in shape space~\citet{beg2005computing}. }

\todoK{``Here many computation techniques related to optimal control and Hamilton-Jacobi equations have not mentioned. The connections between optimal transport and Mean field games may be the other interesting direction. Many techniques developed in numerical optimal transport can be applied to general control problems in the set of probability space. For example, a new Hopf-Lax formula is introduced for dynamical optimal transport problem. See related discussions in~\citet{gangbo2008hamilton}.''}

\section{Dynamic Formulation over the Paths Space}
\label{sec-entropic-dynamic}

There is a natural dynamical formulation of both classical and entropic regularized (see~\S\ref{c-entropic}) formulations of OT, which is based on studying abstract optimization problems on the space $\bar\Xx$ of all possible paths $\ga : [0,1] \rightarrow \Xx$ (\ie curves) on the space $\Xx$. 
For simplicity, we assume $\Xx=\RR^\dim$, but this extends to more general spaces such as geodesic spaces and graphs.
Informally, the dynamic of ``particles'' between two input measures $\al_0,\al_1$ at times $t=0,1$ is described by a probability distribution $\bar\pi \in \Mm_+^1(\bar\Xx)$. Such a distribution should satisfy that the distributions of starting and end points must match $(\al_0,\al_1)$, which is formally written using push-forward as
\eq{
	\bar\Couplings(\al_0,\al_1) \eqdef \enscond{ \bar\pi \in \Mm_+^1(\bar\Xx) }{ \bar P_{0\sharp} \bar\pi = \al_0, \bar P_{1\sharp} \bar\pi = \al_1  },
}
where, for any path $\ga \in \bar\Xx$, $P_0(\ga) = \ga(0), P_1(\ga) = \ga(1)$.

\paragraph{OT over the space of paths.}

The dynamical version of classical OT~\eqref{eq-mk-generic}, formulated over the space of paths, then reads
\eql{\label{eq-ot-pathsspace}
	\Wass_2(\al_0,\al_1)^2 =
	\umin{ \bar\pi \in \bar\Couplings(\al_0,\al_1) } \int_{\bar\Xx} \Ll(\ga)^2 \d\bar\pi(\ga),
}
where $\Ll(\ga) = \int_0^1 |\ga'(s)|^2 \d s$ is the kinetic energy of a path $s \in [0,1] \mapsto \ga(s) \in \Xx$. The connection between optimal couplings $\pi^\star$ and $\bar\pi^\star$ solving respectively~\eqref{eq-ot-pathsspace} and~\eqref{eq-mk-generic} is that $\bar\pi^\star$ only gives mass to geodesics joining pairs of points in proportion prescribed by $\pi^\star$. In the particular case of discrete measures, this means that 
\eq{
	\pi^\star = \sum_{i,j} \P_{i,j} \de_{(x_i,y_j)}
	\qandq
	\bar\pi^\star = \sum_{i,j} \P_{i,j} \de_{\ga_{x_i,y_j}},
} 
where $\ga_{x_i,y_j}$ is the geodesic between $x_i$ and $y_j$. Furthermore, the measures defined by the distribution of the curve points $\ga(t)$ at time $t$, where $\ga$ is drawn following $\bar\pi^\star$, \ie
\eql{\label{eq-interpolation-spacepaths}
	 t \in [0,1] \mapsto \al_t \eqdef P_{t\sharp} \bar\pi^\star 
	 \qwhereq
	 P_t(\ga) = \ga(t) \in \Xx,  
}
is a solution to the dynamical formulation~\eqref{eq-bb-convex}, \ie it is the displacement interpolation. 
In the discrete case, one recovers~\eqref{eq-mccann-discrete}. 

\paragraph{Entropic OT over the space of paths.}

We now turn to the re-interpretation of entropic OT, defined in Chapter~\ref{c-entropic}, using the space of paths. 
Similarly to~\eqref{eq-entropic-generic-proj}, this is defined using a Kullback--Leibler projection, but this time of a reference measure over the space of paths $\bar\Kk$ which is the distribution of a reversible Brownian motion (Wiener process), which has a uniform distribution at the initial and final times
\eql{\label{eq-shrodinger-paths}
	\umin{ \bar\pi \in \bar\Couplings(\al_0,\al_1) } \KL(\bar\pi|\bar\Kk).
}
We refer to the review paper by~\citet{LeonardSchroedinger} for an overview of this problem and an historical account of the work of~\citet{Schroedinger31}. 
One can show that the (unique) solution $\bar\pi_\varepsilon^\star$ to~\eqref{eq-shrodinger-paths} converges to a solution of~\eqref{eq-ot-pathsspace} as $\varepsilon \rightarrow 0$. Furthermore, this solution is linked to the solution of the static entropic OT problem~\eqref{eq-entropic-generic} using Brownian bridge $\bar\ga_{x,y}^\varepsilon \in \bar\Xx$ (which are similar to fuzzy geodesic and converge to $\de_{\ga_{x,y}}$ as $\varepsilon \rightarrow 0$). In the discrete setting, this means that 
\eql{\label{eq-brownian-bridge-interp}
	\pi_\varepsilon^\star = \sum_{i,j} \P_{\varepsilon,i,j}^\star \de_{(x_i,y_j)}
	\qandq
	\bar\pi_\varepsilon^\star = \sum_{i,j} \P_{\varepsilon,i,j}^\star \bar\ga_{x_i,y_j}^\varepsilon,
}
where $\P_{\varepsilon,i,j}^\star$ can be computed using Sinkhorn's algorithm. 
Similarly to~\eqref{eq-interpolation-spacepaths}, one then can define an entropic interpolation as 
\eq{
	\al_{\varepsilon,t} \eqdef \P_{t\sharp} \bar\pi_\varepsilon^\star. 
}
Since the law $\P_{t\sharp} \bar\ga_{x,y}^\varepsilon$ of the position at time $t$ along a Brownian bridge is a Gaussian $\Gg_{t(1-t)\varepsilon^2}(\cdot-\ga_{x,y}(t))$ of variance $t(1-t)\varepsilon^2$ centered at $\ga_{x,y}(t)$, one can deduce that $\al_{\varepsilon,t}$ is a Gaussian blurring of a set of traveling Diracs
\eq{
	\al_{\varepsilon,t} =
	\sum_{i,j} \P_{\varepsilon,i,j}^\star \Gg_{t(1-t)\varepsilon^2}( \cdot - \ga_{x_i,y_j}(t) ). 
}
The resulting mixture of Brownian bridges is displayed on Figure~\ref{fig-schrodinger-dynamic}.

\newcommand{\MyFigSchrDyn}[1]{\includegraphics[width=.245\linewidth,trim=50 45 38 35,clip]{schrodinger-dynamic/schrodinger-dynamic-eps#1}}
\begin{figure}[h!]
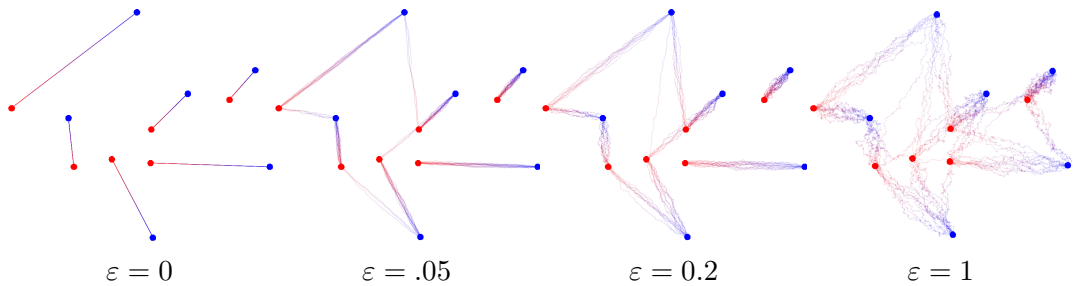
\centering
\centering
\begin{tabular}{@{}c@{}c@{}c@{}c@{}}
\MyFigSchrDyn{5}&
\MyFigSchrDyn{50}&
\MyFigSchrDyn{200}&
\MyFigSchrDyn{1000}\\
$\varepsilon=0$ & $\varepsilon=.05$ & $\varepsilon=0.2$ & $\varepsilon=1$
\end{tabular}
\caption{
Samples from Brownian bridge paths associated to the Schr\"odinger entropic interpolation~\eqref{eq-brownian-bridge-interp} over path space. Blue corresponds to $t=0$ and red to $t=1$. 
} \label{fig-schrodinger-dynamic}
\end{figure}

Another way to describe this entropic interpolation $(\al_t)_t$ is using a regularization of the Benamou--Brenier dynamic formulation~\eqref{eq-bb-non-cvx}, namely \todoK{check how $\varepsilon$ enters the picture}
\eql{\label{eq-bb-non-cvx-entropy}
	\umin{ (\al_t,\speed_t)_t \text{ sat. \eqref{eq-cons-bb-ncvx}} }
		\int_0^1 \int_{\RR^d}  \pa{
			 \norm{\speed_t(x)}^2  + \frac{\varepsilon}{4} \norm{ \nabla \log(\al_t)(x) }^2
			 } 
			 \d\al_t(x) \d t; 
}
see~\citep{gentil2015analogy,chen2016relation}.

\newcommand{\myFigRKHS}[1]{\includegraphics[width=.195\linewidth]{rkhs/#1}}
\chapter{Statistical Divergences}
\label{c-statistical}

We study in this chapter the statistical properties of the Wasserstein distance. More specifically, we compare it to other major distances and divergences routinely used in data sciences. We quantify how one can approximate the distance between two probability distributions when having only access to samples from said distributions. To introduce these subjects,~\S\ref{sec-phi-div} and~\S\ref{sec-dual-norms} review respectively divergences and integral probability metrics between probability distributions. A divergence $D$ typically satisfies $D(\al,\be) \geq 0$ and $D(\al,\be)=0$ if and only if $\al=\be$, but it does not need to be symmetric or satisfy the triangular inequality. An integral probability metric for measures is a dual norm defined using a prescribed family of test functions. These quantities are sound alternatives to Wasserstein distances and are routinely used as loss functions to tackle inference problems, as will be covered in~\S\ref{c-variational}. We show first in~\S\ref{sec-non-embeddability} that the optimal transport distance is not Hilbertian, \ie one cannot approximate it efficiently using a Hilbertian metric on a suitable feature representation of probability measures. We show in \S\ref{sec-empirical-wass} how to approximate $D(\al,\be)$ from discrete samples $(x_i)_i$ and $(y_j)_j$ drawn from $\al$ and $\be$. A good statistical understanding of that problem is crucial when using the Wasserstein distance in machine learning. Note that this section will be chiefly concerned with the statistical approximation of optimal transport between distributions supported on \emph{continuous} sets. The very same problem when the ground space is finite has received some attention in the literature following the work of~\citet{sommerfeld2018inference}, extended to entropic regularized quantities by~\citet{bigot2017central}.

\section{$\phi$-Divergences}
\label{sec-phi-div}

Before detailing in the following section ``weak'' norms, whose construction shares similarities with $\Wass_1$, let us detail a generic construction of so-called divergences between measures, which can then be used as loss functions when estimating probability distributions. Such divergences compare two input measures by comparing their mass \emph{pointwise}, without introducing any notion of mass transportation. Divergences are functionals which, by looking at the pointwise ratio between two measures, give a sense of how close they are. They have nice analytical and computational properties and build  upon \emph{entropy functions}.

\begin{defn}[Entropy function]
\label{def_entropy}
A function $\phi : \RR \to \RR \cup \{\infty\}$ is an entropy function if it is lower semicontinuous, convex, $\dom \phi\subset [0,\infty[$, and satisfies the following feasibility condition:  $\dom \phi \; \cap\;  ]0, \infty[\; \neq \emptyset$. The speed of growth of $\phi$ at $\infty$ is described by 
\eq{
\phi'_\infty = \lim_{x\rightarrow +\infty} \varphi(x)/x \in \RR \cup \{\infty\} \, .
}
\end{defn}

If $\phi'_\infty = \infty$, then $\phi$ grows faster than any linear function and $\phi$ is said \emph{superlinear}. Any entropy function $\phi$ induces a $\phi$-divergence (also known as Cisz\'ar divergence~\citep{ciszar1967information,ali1966general} or $f$-divergence) as follows.

\begin{defn}[$\phi$-Divergences]
\label{def_divergence}
Let $\phi$ be an entropy function.
For $\al,\be \in \Mm(\X)$, let $\frac{\d \al}{\d \be} \be + \al^{\perp}$ be the Lebesgue decomposition\footnote{The Lebesgue decomposition theorem asserts that, given $\be$, $\al$ admits a unique decomposition as the sum of two measures $\al^s + \al^\perp$ such that $\al^s$ is absolutely continuous with respect to $\be$ and $\al^\perp$ and $\be$ are singular.} of $\al$ with respect to $\be$. The divergence $\Divergm_\phi$ is defined by
\eql{\label{eq-phi-div}
	\Divergm_\phi (\al|\be) \eqdef \int_\X \phi\left(\frac{\d \al}{\d \be} \right) \d \be 
+ \phi'_\infty \al^{\perp}(\X)
}
if $\al,\be$ are nonnegative and $\infty$ otherwise.
\end{defn}%

The additional term $\phi'_\infty \al^{\perp}(\X)$ in~\eqref{eq-phi-div} is important to ensure that $\Divergm_\phi$ defines a continuous functional (for the weak topology of measures) even if $\phi$ has a linear growth at infinity, as this is, for instance, the case for the absolute value~\eqref{eq-tv-entropy} defining the TV norm. If $\phi$ as a superlinear growth, \eg the usual entropy~\eqref{eq-shannon-entropy}, then $\phi'_\infty=+\infty$ so that $\Divergm_\phi (\al|\be) = +\infty$ if $\al$ does not have a density with respect to $\be$. 

In the discrete setting, assuming 
\eql{\label{eq-div-disc-meas}
	\al=\sum_i \a_i \de_{x_i}
	\qandq \be=\sum_i \b_i \de_{x_i}
} 
are supported on the same set of $n$ points $(x_i)_{i=1}^n \subset \X$,~\eqref{eq-phi-div} defines a divergence on $\simplex_n$
\eql{\label{eq-discr-diverg}
	\DivergmD_\phi(\a|\b) = \sum_{i \in \Supp(\b)} \phi\pa{ \frac{\a_i}{\b_i} } \b_i + \phi'_\infty \sum_{i \notin \Supp(\b)} \a_i,
}
where $\Supp(\b) \eqdef \enscond{i \in \range{n}}{ b_i \neq 0 }$.

The proof of the following proposition can be found in~\cite[Thm 2.7]{LieroMielkeSavareLong}.

\begin{proposition}
If $\phi$ is an entropy function, then $\Divergm_\phi$ is jointly $1$-homogeneous, convex and weakly* lower semicontinuous in $(\al,\be)$.
\end{proposition}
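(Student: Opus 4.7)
The plan is to reduce the three assertions to convex-analytic properties of the \emph{perspective function} associated with $\phi$, defined by $\psi(a,b) \eqdef b\,\phi(a/b)$ for $b>0$, $\psi(a,0) \eqdef \phi'_\infty \cdot a$ for $a \geq 0$, and $\psi(a,b) \eqdef +\infty$ otherwise. Standard convex analysis shows that $\psi : \RR_+ \times \RR_+ \to \RR \cup \{+\infty\}$ is lower semicontinuous, convex, and positively $1$-homogeneous. The key reformulation is that for any positive Radon measure $\mu$ dominating both $\al$ and $\be$ (e.g., $\mu = \al + \be$),
$$\Divergm_\phi(\al|\be) = \int_\X \psi\!\left(\tfrac{\d\al}{\d\mu},\tfrac{\d\be}{\d\mu}\right)\d\mu,$$
where independence from the choice of $\mu$ follows from the $1$-homogeneity of $\psi$, and the identity itself from unpacking the Lebesgue decomposition of $\al$ relative to $\be$ and using $\psi(a,0) = \phi'_\infty a$.

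Given this formula, joint $1$-homogeneity and joint convexity come for free. For $t>0$, take $\mu = t(\al+\be)$: the densities $\d(t\al)/\d\mu$ and $\d(t\be)/\d\mu$ agree with those computed against $\al+\be$, so the integrand is unchanged while $\d\mu$ carries an extra factor of $t$, yielding $\Divergm_\phi(t\al|t\be) = t\,\Divergm_\phi(\al|\be)$. For convexity, given two pairs $(\al_i,\be_i)_{i=0,1}$ and $\lambda \in [0,1]$, take the common dominating measure $\mu = \al_0+\al_1+\be_0+\be_1$; since the Radon--Nikodym map $\al \mapsto \d\al/\d\mu$ is linear, pointwise convexity of $\psi$ gives after integration against $\mu$ the inequality $\Divergm_\phi(\al_\lambda|\be_\lambda) \leq (1-\lambda)\Divergm_\phi(\al_0|\be_0) + \lambda\,\Divergm_\phi(\al_1|\be_1)$, where $(\al_\lambda,\be_\lambda) \eqdef ((1-\lambda)\al_0+\lambda\al_1,(1-\lambda)\be_0+\lambda\be_1)$.

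For weak$^*$ lower semicontinuity, the approach is duality. Since $\psi$ is convex, lower semicontinuous, and positively $1$-homogeneous on $\RR_+^2$, it is the support function of a closed convex set, explicitly
$$\psi(a,b) = \sup_{(u,v) \in K} (ua+vb), \qquad K \eqdef \{(u,v) \in \RR^2 : u \leq \phi'_\infty,\ v + \phi^*(u) \leq 0\},$$
where $\phi^*$ denotes the Legendre transform of $\phi$. Combining this with Rockafellar's interchange theorem for integral functionals with normal convex integrands yields the dual representation
$$\Divergm_\phi(\al|\be) = \sup\left\{ \int_\X \f\,\d\al + \int_\X \g\,\d\be \;:\; (\f,\g) \in \Cc(\X) \times \Cc(\X),\ (\f(x),\g(x)) \in K\ \forall x \right\}.$$
Each individual map $(\al,\be) \mapsto \int \f\,\d\al + \int \g\,\d\be$ with $(\f,\g) \in \Cc(\X) \times \Cc(\X)$ is by definition weak$^*$ continuous on $\Mm(\X) \times \Mm(\X)$, and a pointwise supremum of continuous linear functionals is lower semicontinuous, so $\Divergm_\phi$ is weak$^*$ lower semicontinuous.

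The main obstacle is the rigorous justification of this dual formula: one must show that restricting the test functions to continuous $(\f,\g)$ (rather than merely Borel measurable selections into $K$) does not shrink the supremum, handling carefully the singular part of $\al$ relative to $\be$, where the optimal selection saturates the constraint at $u = \phi'_\infty$. This is exactly the content of the Rockafellar--Temam theory of integral functionals on $L^1$ and is the route followed in \cite[Thm.~2.7]{LieroMielkeSavareLong}, to which I would refer the reader for the technical details.
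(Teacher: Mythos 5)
Your argument is correct: the reduction to the perspective function $\psi$, the joint homogeneity/convexity via a common dominating measure $\mu$, and the duality representation as a supremum of weak$^*$ continuous linear functionals is precisely the route of \citet[Thm.~2.7]{LieroMielkeSavareLong}, which is the reference the paper cites in place of a proof. So you are not taking a different approach; you are unpacking the argument behind the citation, and you correctly flag that the only nontrivial step---replacing Borel selections into $K$ by continuous test functions in the dual formula---is where the Rockafellar--Temam machinery is actually needed.
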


\begin{figure}[h!]
\centering
\includegraphics[width=.6\linewidth]{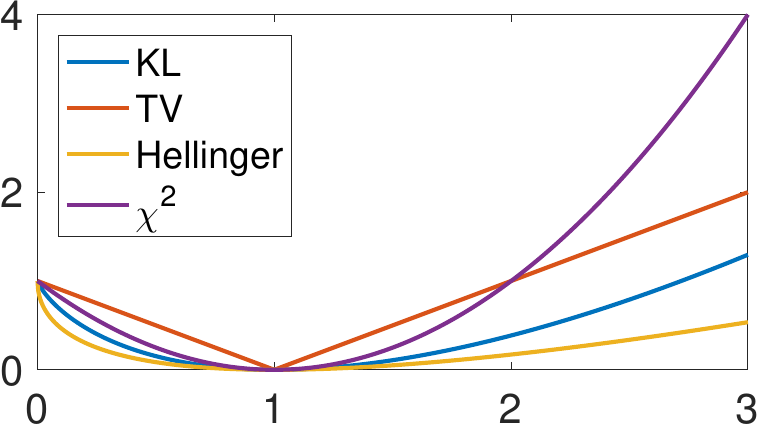}
\caption{\label{fig-divergences}
Example of entropy functionals.
}
\end{figure}

\begin{rem}[Dual expression]
	A $\phi$-divergence can be expressed using the Legendre transform 
	\eq{
		\phi^*(s) \eqdef \usup{t \in \RR} st - \phi(t)
	}
	of $\phi$ (see also~\eqref{eq-legendre}) as
	\eq{
		\Divergm_\phi (\al|\be) = \usup{f: \X \rightarrow \RR} \int_\X f(x) \d\al(x) - \int_\X \phi^*(f(x)) \d\be(x); 
	}
	see~\citet{LieroMielkeSavareLong} for more details.
\end{rem}

We now review a few popular instances of this framework. Figure~\ref{fig-divergences} displays the associated entropy functionals, while Figure~\ref{fig-divergences-relations} reviews the relationship between them.

\begin{figure}[h!]
\centering
\includegraphics[width=.6\linewidth]{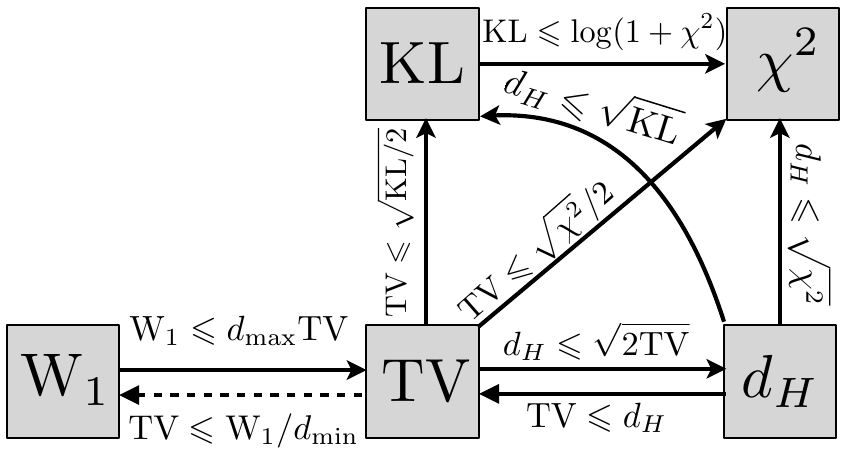}
\caption{\label{fig-divergences-relations}
Diagram of relationship between divergences (inspired by~\citet{gibbs2002choosing}).
For $\X$ a metric space with ground distance $d$, $d_{\max} = \sup_{(x,x')} d(x,x')$ is the diameter of $\X$. When $\X$ is discrete, $d_{\min} \eqdef \min_{x \neq x'} d(x,x')$. \todoK{Tripple check these bounds since they might use different definition/normalization. }\todoK{Change the notation for the Hellinger distance one the symbol is decided $\Hellinger$. }
}
\end{figure}

\begin{example}[Kullback--Leibler divergence]
\label{ex_KLdiv}
The Kullback--Leibler divergence $\KL \eqdef \Divergm_{\phi_{\KL}}$, also known as the relative entropy, was already introduced in~\eqref{eq-defn-rel-entropy} and~\eqref{eq-kl-defn}. It is the divergence associated to the Shannon--Boltzman entropy function $\phi_{\KL}$, given by
\eql{\label{eq-shannon-entropy}
	\phi_{\KL}(s)= \begin{cases}
		s\log(s)-s+1 & \textnormal{for } s>0 , \\
		1 & \textnormal{for } s=0 , \\
		+\infty & \textnormal{otherwise.}
		\end{cases}
}
\end{example}

\begin{remark}[Bregman divergence]\label{rem-bregman}
The discrete KL divergence, $\KLD \eqdef \DivergmD_{\phi_{\KL}}$, has the unique property of being both a $\phi$-divergence and a Bregman divergence.
For discrete vectors in $\RR^n$, a Bregman divergence~\citep{bregman1967relaxation} associated to a smooth strictly convex function $\psi : \RR^n \rightarrow \RR$ is defined as
\eql{\label{eq-bregman-discr}
	\VectMode{B}_\psi(\a|\b) \eqdef \psi(\a) - \psi(\b) - \dotp{\nabla \psi(\b)}{\a-\b}, 
}
where $\dotp{\cdot}{\cdot}$ is the canonical inner product on $\RR^n$. Note that $\VectMode{B}_\psi(\a|\b)$ is a convex function of $\a$ and a linear function of $\psi$.
Similarly to $\phi$-divergence, a Bregman divergence satisfies $\VectMode{B}_\psi(\a|\b) \geq 0$ and $\VectMode{B}_\psi(\a|\b)=0$ if and only if $\a=\b$. 
The KL divergence is the Bregman divergence for minus the entropy $\psi = -\HD$ defined in~\eqref{eq-discr-entropy}), \ie $\KLD = \VectMode{B}_{-\HD}$. 
A Bregman divergence is locally a squared Euclidean distance since
\eq{
	\VectMode{B}_\psi(\a + \varepsilon|\a + \eta) = \dotp{ \partial^2 \psi(\a)(\varepsilon-\eta) }{\varepsilon-\eta}
	+ o(\norm{\varepsilon-\eta}^2)
}
and the set of separating points $\enscond{\a}{\VectMode{B}_\psi(\a|\b) = \VectMode{B}_\psi(\a|\b')}$ is a hyperplane between $\b$ and $\b'$.
These properties make Bregman divergence suitable to replace Euclidean distances in first order optimization methods. The best know example is mirror gradient descent~\citep{beck2003mirror}, which is an explicit descent step of the form~\eqref{eq-explicit-metricflow}.
Bregman divergences are also important in convex optimization and can be used, for instance, to derive Sinkhorn iterations and study its convergence in finite dimension; see Remark~\ref{rem-iterative-projection}.
\end{remark}

\begin{remark}[Hyperbolic geometry of KL]
	It is interesting to contrast the geometry of the Kullback--Leibler divergence to that defined by quadratic optimal transport when comparing Gaussians. As detailed, for instance, by~\citet{costa2015fisher}, the Kullback--Leibler divergence has a closed form for Gaussian densities. In the univariate case, $\dim=1$, if $\al = \Nn(m_\al,\si_\al^2)$ and $\be = \Nn(m_\be,\si_\be^2)$, one has
	\eql{\label{eq-kl-gaussian}
		\KL(\al|\be) = \frac{1}{2}\pa{
			\frac{\si_\al^2}{\si_\be^2}  + 
			\log\pa{\frac{\si_\be^2}{\si_\al^2}}
			+ \frac{|m_\al-m_\be|}{\si_\be^2}
			 - 1
		}.
	}	
	This expression shows that the divergence between $\al$ and $\be$ diverges to infinity as $\si_\be$ diminishes to $0$ and $\be$ becomes a Dirac mass. In that sense, one can say that singular Gaussians are infinitely far from all other Gaussians in the KL geometry. That geometry is thus useful when one wants to avoid dealing with singular covariances. To simplify the analysis, one can look at the infinitesimal geometry of KL, which is obtained by performing a Taylor expansion at order 2,
	\eq{
		\KL( \Nn(m+\de_m,(\si+\de_\si)^2) | \Nn(m,\si^2) ) = \frac{1}{\si^2}\pa{
			\frac{1}{2}\de_m^2 + \de_\si^2
		} + 
		o( \de_m^2,\de_\si^2 ).
	}
	This local Riemannian metric, the so-called Fisher metric, expressed over $(m/\sqrt{2},\si) \in \RR \times \RR_{+,*}$, matches exactly that of the hyperbolic Poincar\'e half plane. 
	Geodesics over this space are half circles centered along the $\si=0$ line and have an exponential speed, \ie they only reach the limit $\si=0$ after an infinite time. 
	Note in particular that if $\si_\al=\si_\be$ but $m_\al \neq m_\al$, then the geodesic between $(\al,\be)$ over this hyperbolic half plane does not have a constant standard deviation. 
	
	The KL hyperbolic geometry over the space of Gaussian parameters $(m,\si)$ should be contrasted with the Euclidean geometry associated to OT as described in Remark~\ref{rem-dist-gaussians}, since in the univariate case
	\eql{\label{eq-univariate-gauss}
		\Wass_2^2( \al,\be ) = |m_\al - m_\be |^2 + |\si_\al-\si_\be|^2. 
	}  
	Figure~\ref{fig-kl-gaussians} shows a visual comparison of these two geometries and their respective geodesics.
	This interesting comparison was suggested to us by Jean Feydy.  
\end{remark}

\begin{figure}[h!]
\centering
\begin{tabular}{@{}c@{\hspace{3mm}}c@{}}
\includegraphics[width=.45\linewidth]{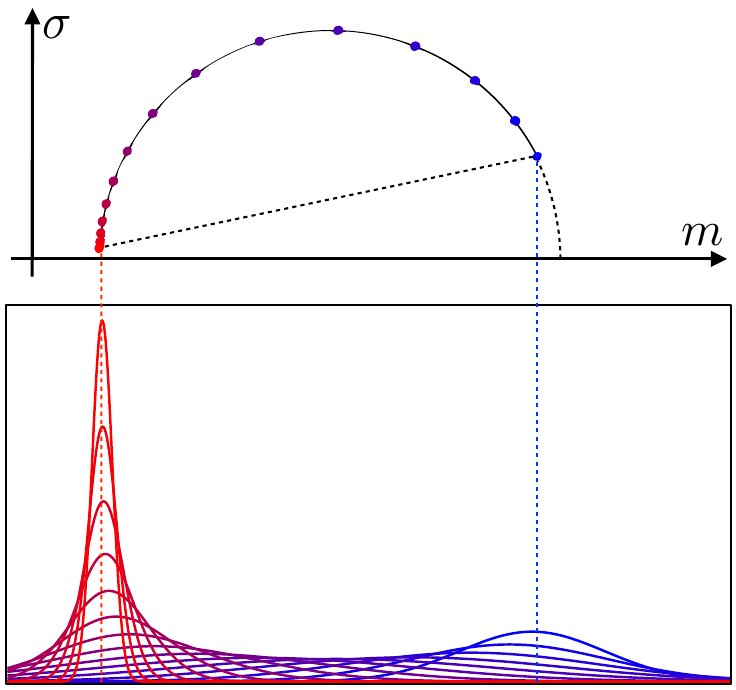}&
\includegraphics[width=.46\linewidth]{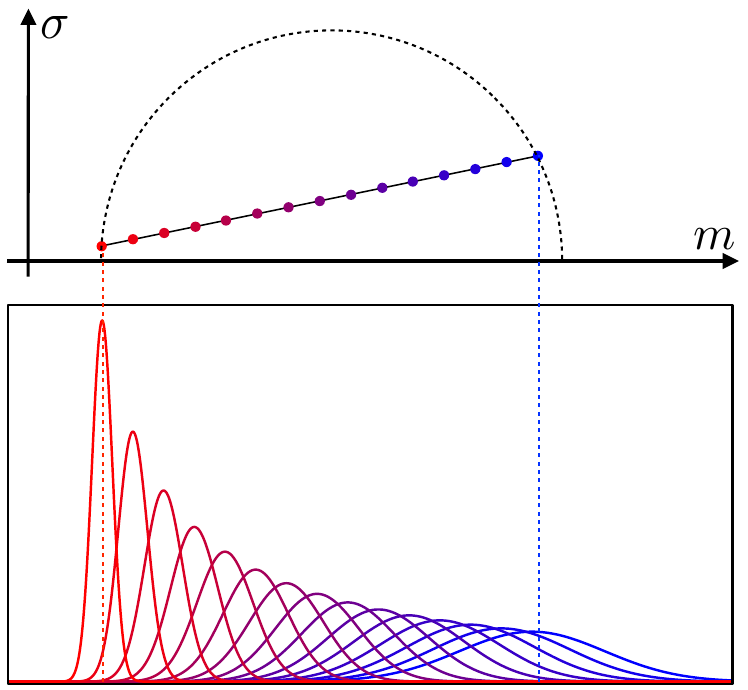}\\
KL & OT
\end{tabular}
\caption{\label{fig-kl-gaussians}
Comparisons of interpolation between Gaussians using KL (hyperbolic) and OT (Euclidean) geometries.
}
\end{figure}

\begin{example}[Total variation]\label{exmp-tv}
The total variation distance $\TV \eqdef \Divergm_{\phi_{\TV}}$ is the divergence associated to
\eql{\label{eq-tv-entropy}
	\phi_{\TV}(s)= \begin{cases}
		|s-1| & \textnormal{for } s\geq0 , \\
		+\infty & \textnormal{otherwise.}
		\end{cases}
}
It actually defines a norm on the full space of measure $\Mm(\X)$ where
\eql{\label{eq-defn-tv}
	\TV(\al|\be) = \norm{\al-\be}_{\TV},
	\qwhereq
	\norm{\al}_{\TV} = |\al|(\X) = \int_\X \d|\al|(x).
}
If $\al$ has a density $\density{\al}$ on $\X=\RR^\dim$, then the TV norm is the $L^1$ norm on functions, $\norm{\al}_{\TV} = \int_\X |\density{\al}(x)| \d x = \norm{\density{\al}}_{L^1}$.
If $\al$ is discrete as in~\eqref{eq-div-disc-meas}, then the TV norm is the $\ell^1$ norm of vectors in $\RR^n$, $\norm{\al}_{\TV}=\sum_i |\a_i| = \norm{\a}_{\ell^1}$.
\end{example}

\begin{rem}[Strong vs. weak topology]
	The total variation norm~\eqref{eq-defn-tv} defines the so-called ``strong'' topology on the space of measure. 
	On a compact domain $\X$ of radius $R$, one has 
	\eq{
		\Wass_1(\al,\be) \leq R \norm{\al-\be}_{\TV}
	}
	so that this strong notion of convergence implies the weak convergence metrized by Wasserstein distances. 
	The converse is, however, not true, since $\de_x$ does not converge strongly to $\de_y$ if $x \rightarrow y$ (note that
	$\norm{\de_x-\de_y}_{\TV}=2$ if $x \neq y$). 
	A chief advantage is that $\Mm_+^1(\Xx)$ (once again on a compact ground space $\X$) is compact for the weak topology, so that from any sequence of probability measures $(\al_k)_k$, one can always extract a converging subsequence, which makes it a suitable space for several optimization problems, such as those considered in Chapter~\ref{c-variational}.
\end{rem}

\begin{example}[Hellinger]\label{exmp-hellinger}
	The Hellinger distance $\Hellinger \eqdef \Divergm_{\phi_{H}}^{1/2}$ is the square root of the divergence associated to
	\eq{
		\phi_{H}(s)= \begin{cases}
			|\sqrt{s}-1|^2 & \textnormal{for } s\geq0 , \\
			+\infty & \textnormal{otherwise.}
			\end{cases}
	}
	As its name suggests, $\Hellinger$ is a distance on $\Mm_+(\X)$, which metrizes the strong topology as $\norm{\cdot}_{\TV}$. 
	If $(\al,\be)$ have densities $(\density{\al},\density{\be})$ on $\X=\RR^\dim$, then $\Hellinger(\al,\be) = \norms{\sqrt{\density{\al}}-\sqrt{\density{\be}}}_{L^2}$.
	If $(\al,\be)$ are discrete as in~\eqref{eq-div-disc-meas}, then $\Hellinger(\al,\be) = \norms{\sqrt{\a}-\sqrt{\b}}$.	
	Considering $\phi_{L^p}(s)=|s^{1/p}-1|^p$ generalizes the Hellinger ($p=2$) and total variation ($p=1$) distances
	and $\Divergm_{\phi_{L^p}}^{1/p}$ is a distance which metrizes the strong convergence for $0<p<+\infty$. 
\end{example}

\begin{example}[Jensen--Shannon distance]
	The KL divergence is not symmetric and, while being a Bregman divergence (which are locally quadratic norms), it is not the square of a distance. On the other hand, the Jensen--Shannon distance $\text{JS}(\al,\be)$, defined as
	\eq{
		\text{JS}(\al,\be)^2 \eqdef \frac{1}{2}\pa{
			\KL(\al|\xi) + \KL(\be|\xi)
		}
		\qwhereq
		\xi = \frac{\al+\be}{2}, 
	}
	is a distance~\citep{endres2003new,osterreicher2003new}. 
	$\text{JS}^2$ can be shown to be a $\phi$-divergence for $\phi(s) = t\log(t)-(t+1)\log(t+1)$.
	In sharp contrast with $\KL$, $\text{JS}(\al,\be)$ is always bounded; more precisely, it satisfies $0 \leq \text{JS}(\al,\be)^2 \leq \ln(2)$. 
	Similarly to the TV norm and the Hellinger distance, it metrizes the strong convergence. 
\end{example}

\begin{example}[$\chi^2$]\label{exmp-chisquare}
	The $\chi^2$-divergence $\chi^2 \eqdef \Divergm_{\phi_{\chi^2}}$ is the divergence associated to
	\eq{
		\phi_{\chi^2}(s)= \begin{cases}
			|s-1|^2 & \textnormal{for } s\geq0 , \\
			+\infty & \textnormal{otherwise.}
			\end{cases}
	}
	If $(\al,\be)$ are discrete as in~\eqref{eq-div-disc-meas} and have the same support, then 
	\eq{
		\chi^2(\al|\be) = \sum_i \frac{(\a_i-\b_i)^2}{\b_i}.
	}	
\end{example}

\section{Integral Probability Metrics}
\label{sec-dual-norms}

Formulation~\eqref{eq-w1-cont} is a special case of a dual norm. A dual norm is a convenient way to design ``weak'' norms that can deal with arbitrary measures. For a symmetric convex set $B$ of measurable functions, one defines 
\eql{\label{eq-dual-norm-cont}
	\norm{\al}_B \eqdef 
	\umax{\f} \enscond{ \int_{\X} \f(x) \d\al(x) }{ \f \in B}.
}
These dual norms are often called ``integral probability metrics'; see~\citep{sriperumbudur2012empirical}.


\begin{example}[Total variation]
The total variation norm (Example~\ref{exmp-tv}) is a dual norm associated to the whole space of continuous functions
\eq{
	B = \enscond{f \in \Cc(\X)}{\norm{f}_\infty \leq 1}.
}
The total variation distance is the only nontrivial divergence that is also a dual norm; see~\citep{sriperumbudur2009integral}. 
\end{example}

\begin{rem}[Metrizing the weak convergence]\label{rem-ipm-weak}
By using smaller ``balls'' $B$, which typically only contain continuous (and sometimes regular) functions, one defines weaker dual norms.
In order for $\norm{\cdot}_B$ to metrize the weak convergence (see Definition~\ref{dfn-weak-conv}), it is sufficient for the space spanned by $B$ to be dense in the set of continuous functions for the sup-norm $\norm{\cdot}_\infty$ (i.e. for the topology of uniform convergence); see~\citep[para. 5.1]{ambrosio2006gradient}.
\end{rem}

Figure~\ref{fig-dual-norms} displays a comparison of several such dual norms, which we now detail.

\begin{figure}[h!]
\centering
\begin{tabular}{@{}c@{\hspace{3mm}}c@{}}
\includegraphics[width=.45\linewidth]{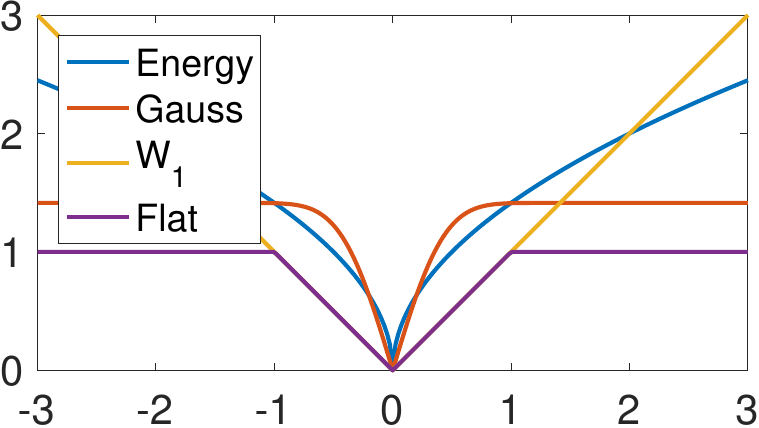} &
\includegraphics[width=.45\linewidth]{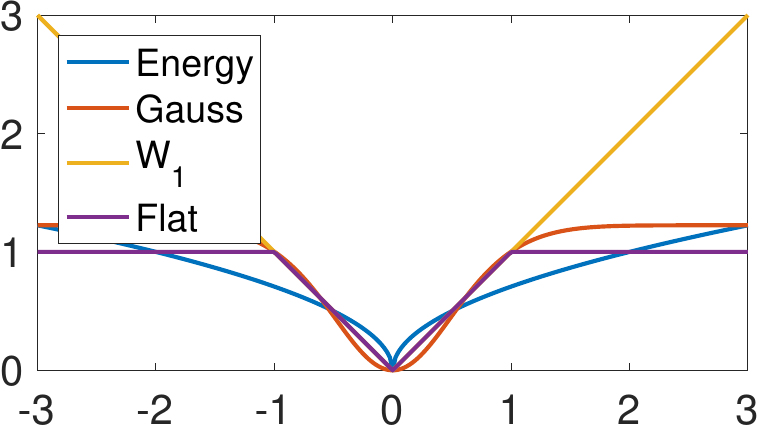} \\
$(\al,\be)=(\de_0,\de_t)$ &
$(\al,\be)=(\de_0,\frac{1}{2}(\de_{-t/2}+\de_{t/2}))$
\end{tabular}
\caption{\label{fig-dual-norms}
Comparison of dual norms. 
}
\end{figure}

\subsection{$\Wass_1$ and Flat Norm}

If the set $B$ is bounded, then $\norm{\cdot}_B$ is a norm on the whole space $\Mm(\Xx)$ of measures.
This is not the case of $\Wass_1$, which is only defined for $\al$ such that $\int_\X \d\al=0$ (otherwise $\norm{\al}_B=+\infty$). 
This can be alleviated by imposing a bound on the value of the potential $\f$, in order to define for instance the flat norm.

\begin{example}[$\Wass_1$ norm]
$\Wass_1$ as defined in~\eqref{eq-w1-cont}, is a special case of dual norm~\eqref{eq-dual-norm-cont}, using
\eq{ 
	B = \enscond{f}{\Lip(f) \leq 1}
} 
the set of 1-Lipschitz functions.
\end{example}

\begin{example}[Flat norm and Dudley metric]
The flat norm is defined using 
\eql{\label{eq-set-flatnorm}
	B=\enscond{f}{\norm{\nabla \f}_\infty \leq 1 \qandq \norm{\f}_\infty \leq 1}.
}
It metrizes the weak convergence on the whole space $\Mm(\X)$.
Formula~\eqref{eq-w1-discr} is extended to compute the flat norm by adding the constraint $\abs{\fD_k} \leq 1$.
The flat norm is sometimes called the ``Kantorovich--Rubinstein'' norm~\citep{hanin1992kantorovich} and has been used as a fidelity term for inverse problems in imaging~\citep{lellmann2014imaging}.
The flat norm is  similar to the Dudley metric, which uses
\eq{\label{eq-set-dudley}
	B=\enscond{f}{\norm{\nabla \f}_\infty + \norm{\f}_\infty \leq 1}.
}
\end{example}

\subsection{Dual RKHS Norms and Maximum Mean Discrepancies}
\label{sec-mmd}

It is also possible to define ``Euclidean'' norms (built using quadratic functionals) on measures using the machinery of kernel methods and more specifically reproducing kernel Hilbert spaces (RKHS; see~\citep{scholkopf2002learning} for a survey of their applications in data sciences), of which we recall first some basic definitions.

\begin{defn}\label{def-negativedefinitekernel}
A symmetric function $k$ (resp., $\varphi$) defined on a set $\X \times \X$ is said to be positive (resp., negative) definite if for any $n\geq0$, family $x_1,\dots,x_n\in\Z$, and vector $r\in\RR^n$ the following inequality holds:
\eql{\label{eq-dual-kern}
	\sum_{i,j=1}^n r_i r_j k(x_i,x_j)\geq 0, 
	\quad
	\left(\text{resp.}\quad\sum_{i,j=1}^n r_i r_j \varphi(x_i,x_j)\leq 0\right).
}
The kernel is said to be conditionally positive if positivity only holds in~\eqref{eq-dual-kern} for zero mean vectors $r$ (i.e. such that $\dotp{r}{\ones_n}=0$).  
\end{defn}

\if 0 
Using a positive definite kernel $k$, one can now define a Hilbertian norm $\norm{\cdot}_\Krkhs$ on functions 
\eq{
	\norm{f}_\Krkhs^2 \eqdef \int_{\X \times \X} \Krkhs(x,y) f(x) f(y) \d \rho(x) \d \rho(y)
}
where $\rho$ is some reference measure, typically Lebesgue's measure on $\X=\RR^\dim$. This norm can then be used to define a subset $B$ of functions with bounded RKHS norm,
\eql{\label{eq-dual-rkhs}
	B = \enscond{\f}{ \norm{\f}_\Krkhs \leq 1}
}
that is plugged into notation~\eqref{eq-dual-norm-cont}. The dual RKHS norm for measures that results from this construction is often referred to as the ``maximum mean discrepancy'' (MMD) (see~\citet{gretton2007kernel}), and can be revisited through the prism of Kernel Mean Embeddings (see~\citet{muandet2017kernel} for a review). 
\fi

If $k$ is conditionally positive, one defines the following norm:  
\eql{\label{eq-kernel-dual}
	\norm{\al}^2_{\Krkhs} \eqdef \int_{\X \times \X} \Krkhs(x,y) \d \al(x) \d \al(y). 
}
These norms are often referred to as ``maximum mean discrepancy'' (MMD) (see~\citep{gretton2007kernel}) and have also been called ``kernel norms'' in shape analysis~\citep{glaunes2004diffeomorphic}.
This expression~\eqref{eq-kernel-dual} can be rephrased, introducing two independent random vectors $(X,X')$ on $\X$ distributed with law $\al$, as
\eq{
	\norm{\al}^2_{\Krkhs} = \EE_{X,X'}( \Krkhs(X,X') ). 
}
One can show that $\norm{\cdot}^2_{\Krkhs}$ is the dual norm in the sense of~\eqref{eq-dual-norm-cont} associated to the unit ball $B$ of the RKHS associated to $k$. We refer to~\citep{berlinet03reproducing,Hofmann2008,scholkopf2002learning} for more details on RKHS functional spaces.  

\begin{rem}[Universal kernels]
According to Remark~\ref{rem-ipm-weak}, the MMD norm $\norm{\cdot}_k$ metrizes the weak convergence if the span of the dual ball $B$ is dense in the space of continuous functions $\Cc(\Xx)$. This means that finite sums of the form $\sum_{i=1}^n a_i k(x_i,\cdot)$ (for arbitrary choice of $n$ and points $(x_i)_i$) are dense in $\Cc(\Xx)$ for the uniform norm $\norm{\cdot}_\infty$.  For translation-invariant kernels over $\X=\RR^d$, $\Krkhs(x,y)=\Krkhs_0(x-y)$, this is equivalent to having a nonvanishing Fourier transform, $\hat \Krkhs_0(\om) >0$.
\end{rem}

In the special case where $\al$ is a discrete measure of the form~\eqref{eq-pair-discr}, one thus has the simple expression
\eq{
	\norm{\al}_{\Krkhs}^2 = \sum_{i=1}^n \sum_{i'=1}^n \a_i\a_{i'} \KrkhsD_{i,i'}  = \dotp{\KrkhsD\a}{\a}
	\qwhereq
	\KrkhsD_{i,i'} \eqdef \Krkhs(x_i,x_{i'}).
}
In particular, when $\al=\sum_{i=1}^n \a_i \de_{x_i}$ and $\be=\sum_{i=1}^n \b_i \de_{x_i}$ are supported on the same set of points, $\norm{\al-\be}_{\Krkhs}^2 = \dotp{\KrkhsD(\a-\b)}{\a-\b}$, so that $\norm{\cdot}_{\Krkhs}$ is a Euclidean norm (proper if $\KrkhsD$ is positive definite, degenerate otherwise if $\KrkhsD$ is semidefinite) on the simplex $\simplex_n$.
To compute the discrepancy between two discrete measures of the form~\eqref{eq-pair-discr}, one can use
\eql{\label{eq-mmd-discr}
	\norm{\al-\be}_{\Krkhs}^2 = 
		\sum_{i,i'} \a_i \a_{i'} \Krkhs(x_i,x_{i'})	+
		\sum_{j,j'} \b_j \b_{j'} \Krkhs(y_j,y_{j'}) - 2
		\sum_{i,j} \a_i \b_j \Krkhs(x_i,y_j). 
}

\begin{example}[Gaussian RKHS]\label{exmp-gaussian-kernel}
One of the most popular kernels is the Gaussian one $\Krkhs(x,y) = e^{-\frac{\norm{x-y}^2}{2 \si^2}}$, which is a positive universal kernel on $\Xx=\RR^d$. 
An attractive feature of the Gaussian kernel is that it is separable as a product of 1-D kernels, which facilitates computations when working on regular grids (see also Remark~\ref{rem-separable}).
However, an important issue that arises when using the Gaussian kernel is that one needs to select the bandwidth parameter $\si$. This bandwidth should match the ``typical scale'' between observations in the measures to be compared. If the measures have multiscale features (some regions may be very dense, others very sparsely populated), a Gaussian kernel is thus not well adapted, and one should consider a ``scale-free'' kernel as we detail next. An issue with such scale-free kernels is that they are global (have slow polynomial decay), which makes them typically computationally more expensive, since no compact support approximation is possible.
Figure~\ref{fig-rkhs} shows a comparison between several kernels. 
\end{example}

\begin{figure}[h!]
\centering
\begin{tabular}{@{}c@{\hspace{0mm}}c@{\hspace{0mm}}c@{\hspace{0mm}}c@{\hspace{0mm}}c@{}}
 &
\myFigRKHS{energy-dist-kernel}&
\myFigRKHS{gaussian-small-kernel}&
\myFigRKHS{gaussian-medium-kernel}&
\myFigRKHS{gaussian-large-kernel}\\
\myFigRKHS{input} &
\myFigRKHS{energy-dist} &
\myFigRKHS{gaussian-small}&
\myFigRKHS{gaussian-medium}&
\myFigRKHS{gaussian-large}\\
$({\color{red} \al},{\color{blue} \be})$ & $\text{ED}(\RR^2,\norm{\cdot})$ & $(G,.005)$ &  $(G,.02)$ & $(G,.05)$
\end{tabular}
\caption{\label{fig-rkhs}
Top row: display of $\psi$ such that $\norm{\al-\be}_k = \norm{\psi \star (\al-\be)}_{L^2(\RR^2)}$, formally defined over Fourier as $\hat\psi(\om) = \sqrt{ \hat k_0(\om) }$, where $\Krkhs(x,x')=k_0(x-x')$.
Bottom row: display of $\psi \star (\al-\be)$.
(G,$\si$) stands for Gaussian kernel of variance $\si^2$. The kernel for $\text{ED}(\RR^2,\norm{\cdot})$ is $\psi(x)=1/\sqrt{\norm{x}}$.
\todoK{check that in 2D the ED sqrt kernel is $1/\sqrt{|x|}$, I think this is the case.} \todoK{Explain that ED and large sigma are too global to match the feature size, medium sigma only capture one scale of the details ; small sigma too small}
}
\end{figure}

\begin{example}[$H^{-1}(\RR^\dim)$]\label{exp-sobolev-neg1}
Another important dual norm is $H^{-1}(\RR^\dim)$, the dual (over distributions) of the Sobolev space $H^1(\RR^\dim)$ of functions having derivatives in $L^2(\RR^\dim)$.
It is defined using the primal RKHS norm $\norm{\nabla \f}_{L^2(\RR^\dim)}^2$. It is not defined for singular measures (\emph{e.g.} Diracs) unless $\dim=1$ because functions in the Sobolev space $H^1(\RR^\dim)$ are in general not continuous.
This $H^{-1}$ norm (defined on the space of zero mean measures with densities) can also be formulated in divergence form,
\eql{\label{eq-dual-sobolev-div}
	\norm{\al-\be}_{H^{-1}(\RR^\dim)}^2 = 
	\umin{\flow} \enscond{ \int_{\RR^\dim} \norm{\flow(x)}_2^2 \d x }{  \diverg(\flow)=\al-\be },
}
which should be contrasted with~\eqref{eq-w1-cont-div}, where an $L^1$ norm of the vector field $\flow$ was used in place of the $L^2$ norm used here.
The ``weighted'' version of this Sobolev dual norm,
\eq{
	\norm{\rho}_{H^{-1}(\al)}^2 = 
	\umin{\diverg(\flow)=\rho} \int_{\RR^\dim} \norm{\flow(x)}_2^2 \d \al(x),
}
can be interpreted as the natural ``linearization'' of the Wasserstein $\Wass_2$ norm, in the sense that the Benamou--Brenier dynamic formulation can be interpreted infinitesimally as 
\eql{\label{eq-wass-asymp-sob}
	\Wass_2(\al,\al+\varepsilon \rho) = \varepsilon \norm{\rho}_{H^{-1}(\al)} + o(\varepsilon).
}
The functionals $\Wass_2(\al,\be)$ and $\norm{\al-\be}_{H^{-1}(\al)}$ can be shown to be equivalent~\citep{peyre2011comparison}.
The issue is that $\norm{\al-\be}_{H^{-1}(\al)}$ is not a norm (because of the weighting by $\al$), and one cannot in general replace it by $\norm{\al-\be}_{H^{-1}(\RR^\dim)}$ unless $(\al,\be)$ have densities. 
In this case, if $\al$ and $\be$ have densities on the same support bounded from below by $a>0$ and from above by $b<+\infty$, then 
\eql{\label{eq-compar-w2-sob}
	b^{-1/2} \norm{\al-\be}_{H^{-1}(\RR^\dim)} \leq W_2(\al,\be) \leq a^{-1/2} \norm{\al-\be}_{H^{-1}(\RR^\dim)};
}
see~\citep[Theo. 5.34]{SantambrogioBook}, and see~\citep{peyre2011comparison} for sharp constants. 
%
\end{example}

\begin{example}[Negative Sobolev spaces]\label{exp-sobolev-neg}
	One can generalize this construction by considering the Sobolev space $H^{-r}(\RR^\dim)$ of arbitrary negative index, which is the dual of the functional Sobolev space $H^{r}(\RR^\dim)$ of functions having $r$ derivatives (in the sense of distributions) in $L^2(\RR^\dim)$.
	In order to metrize the weak convergence, one needs functions in $H^{r}(\RR^\dim)$  to be continuous, which is the case when $r>\dim/2$. As the dimension $\dim$ increases, one thus needs to consider higher regularity.
	For arbitrary $\al$ (not necessarily integers), these spaces are defined using the Fourier transform, and for a measure $\al$ with Fourier transform $\hat \al(\om)$ (written here as a density with respect to the Lebesgue measure $\d\om$)
	\eq{
		\norm{\al}_{H^{-r}(\RR^\dim)}^2 \eqdef \int_{\RR^\dim} \norm{\om}^{-2r} |\hat \al(\om)|^{2}\d\om.
	}
	This corresponds to a dual RKHS norm with a convolutive kernel $k(x,y)=k_0(x-y)$ with $\hat k_0(\om) = \pm \norm{\om}^{-2r}$. Taking the inverse Fourier transform, one sees that (up to constant) one has
	 \eql{\label{eq-sob-kern}
	 	\foralls x \in \RR^\dim, \quad  
		k_0(x) = 
		\choice{
			\frac{1}{\norm{x}^{d-2r}} \qifq r < d/2,  \\
			-\norm{x}^{2r-d} \qifq r>d/2. 
		}
	}
\end{example}

\begin{example}[Energy distance]\label{exp-energy-dist}
The energy distance (or Cramer distance when $\dim=1$)~\citep{szekely2004testing} associated to a distance $\dist$ is defined as
\eql{\label{eq-defn-ed}
	\norm{\al-\be}_{\text{ED}(\X,\dist^p)} \eqdef \norm{\al-\be}_{\Krkhs_{\text{ED}}}
	\qwhereq
	\Krkhs_{\text{ED}}(x,y) = -\dist(x,y)^p
}
for $0 < p < 2$. \todoK{is it correct?}
It is a valid MMD norm over measures if $\dist$ is negative definite (see Definition~\ref{def-negativedefinitekernel}), a typical example being the Euclidean distance $\dist(x,y)=\norm{x-y}$. 
For $\X=\RR^\dim$, $\dist(x,y)=\norm{\cdot}$, using~\eqref{eq-sob-kern}, one sees that the energy distance is a Sobolev norm
\eq{
	\norm{\cdot}_{\text{ED}(\RR^\dim,\norm{\cdot}^p)} = \norm{\cdot}_{H^{-\frac{d+p}{2}}(\RR^\dim)}.
}
A chief advantage of the energy distance over more usual kernels such as the Gaussian (Example~\ref{exmp-gaussian-kernel}) is that it is scale-free and does not depend on a bandwidth parameter $\si$. More precisely, one has the following scaling behavior on $\X=\RR^\dim$, when denoting $f_s(x)=sx$ the dilation by a factor $s>0$,
\eq{
	\norm{f_{s\sharp}(\al-\be)}_{\text{ED}(\RR^\dim,\norm{\cdot}^p)} = s^{\frac{p}{2}} \norm{\al-\be}_{\text{ED}(\RR^\dim,\norm{\cdot}^p)},
}
while the Wasserstein distance exhibits a perfect linear scaling,
\eq{
	\Wass_p( f_{s\sharp}\al,f_{s\sharp}\be) ) = s \Wass_p(\al,\be) ).
}
Note, however, that for the energy distance, the parameter $p$ must satisfy $0 < p < 2$, and that for $p=2$, it degenerates to the distance between the means 
\eq{
	\norm{\al-\be}_{\text{ED}(\RR^\dim,\norm{\cdot}^2)}
	= 
	\norm{\int_{\RR^\dim} x (\d\al(x)-\d\be(x))  }, 
}
so it is not a norm anymore. This shows that it is not possible to get the same linear scaling under $f_{s\sharp}$ with the energy distance as for the Wasserstein distance. 
\todoK{compare the sqrt kernel associated to energy and sobolev and comment its slow decay}
\end{example}

\section{Wasserstein Spaces Are Not Hilbertian}\label{sec-non-embeddability}

Some of the special cases of the Wasserstein geometry outlined earlier in~\S\ref{sec:specialcases} have highlighted the fact that the optimal transport distance can sometimes be computed in closed form. They also illustrate that in such cases the optimal transport distance is a \emph{Hilbertian} metric between probability measures, in the sense that there exists a map $\tphi$ from the space of input measures onto a Hilbert space, as defined below.

\begin{defn}\label{def-Hilbertmetric}
A distance $d$ defined on a set $\Z\times \Z$ is said to be Hilbertian if there exists a Hilbert space $\RKHS$ and a mapping $\tphi:\Z\rightarrow \RKHS$ such that for any pair $z,z'$ in $\Z$ we have that $d(z,z')=\|\tphi(z)-\tphi(z')\|_\RKHS$.
\end{defn}

For instance, Remark~\ref{rem-1d-ot-generic} shows that the Wasserstein metric is a Hilbert norm between univariate distributions, simply by defining $\tphi$ to be the map that associates to a measure its generalized quantile function. Remark~\ref{rem-dist-gaussians} shows that for univariate Gaussians, as written in~\eqref{eq-univariate-gauss} in this chapter, the Wasserstein distance between two univariate Gaussians is simply the Euclidean distance between their mean and standard deviation.

Hilbertian distances have many favorable properties when used in a data analysis context~\citep{dattorro2010convex}. First, they can be easily cast as radial basis function kernels: for any Hilbertian distance $\dist$, it is indeed known that $e^{-\dist^p/t}$ is a positive definite kernel for any value $0\leq p\leq 2$ and any positive scalar $t$ as shown in~\citep[Cor. 3.3.3, Prop. 3.2.7]{berg84harmonic}. The Gaussian $(p=2)$ and Laplace $(p=1)$ kernels are simple applications of that result using the usual Euclidean distance. The entire field of kernel methods~\citep{Hofmann2008} builds upon the positive definiteness of a kernel function to define convex learning algorithms operating on positive definite kernel matrices. Points living in a Hilbertian space can also be efficiently embedded in lower dimensions with low distortion factors \citep{Johnson84}, \citep[\S V.6.2]{barvinok2002course} using simple methods such as multidimensional scaling~\citep{borg2005modern}.

Because Hilbertian distances have such properties, one might hope that the Wasserstein distance remains Hilbertian in more general settings than those outlined above, notably when the dimension of $\X$ is 2 and more. This can be disproved using the following equivalence.

\begin{prop}\label{prop-negative-definite} A distance $d$ is Hilbertian if and only if $d^2$ is negative definite.
\end{prop}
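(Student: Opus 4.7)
The plan is to prove both directions separately, following the classical argument of Schoenberg. The forward implication is a direct algebraic computation, while the converse requires synthesizing a Hilbert space out of the negative definite function $d^2$, which is the content of Schoenberg's embedding theorem.

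For the easy direction, suppose $d$ is Hilbertian with isometry $\tphi: \Z \to \RKHS$, so $d^2(z,z') = \|\tphi(z)-\tphi(z')\|_{\RKHS}^2$. Given $z_1,\dots,z_n \in \Z$ and $r \in \RR^n$ with $\dotp{r}{\ones_n}=0$, expand
\eq{
\sum_{i,j} r_i r_j d^2(z_i,z_j) = \sum_{i,j} r_i r_j \bigl(\|\tphi(z_i)\|^2 + \|\tphi(z_j)\|^2 - 2\dotp{\tphi(z_i)}{\tphi(z_j)}\bigr).
}
The zero-mean condition kills the first two sums (each factors as $\dotp{r}{\ones_n}$ times $\sum_k r_k \|\tphi(z_k)\|^2$), leaving $-2\|\sum_i r_i \tphi(z_i)\|_{\RKHS}^2 \leq 0$. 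So $d^2$ is negative definite in the sense of Definition~\ref{def-negativedefinitekernel}.

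For the converse, I would fix an arbitrary basepoint $z_0 \in \Z$ and define the symmetric kernel
\eq{
k(z,z') \eqdef \tfrac{1}{2}\bigl(d^2(z,z_0) + d^2(z',z_0) - d^2(z,z')\bigr).
}
The crucial step is to check that $k$ is positive definite. Given $z_1,\dots,z_n$ and arbitrary $r_1,\dots,r_n \in \RR$, introduce $r_0 \eqdef -\sum_{i=1}^n r_i$ so that the enlarged vector $(r_0,r_1,\dots,r_n)$ has zero mean on the enlarged tuple $(z_0,z_1,\dots,z_n)$. Using the negative definiteness of $d^2$, together with $d^2(z_0,z_0)=0$ and symmetry, a direct expansion of $\sum_{i,j=0}^n r_i r_j d^2(z_i,z_j) \leq 0$ rearranges into $\sum_{i,j=1}^n r_i r_j k(z_i,z_j) \geq 0$. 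Once $k$ is known to be positive definite, I would build the associated RKHS $\RKHS_k$ and take $\tphi(z) \eqdef k(z,\cdot)$; the reproducing property gives
\eq{
\|\tphi(z)-\tphi(z')\|_{\RKHS_k}^2 = k(z,z) - 2k(z,z') + k(z',z') = d^2(z,z'),
}
since $k(z,z) = d^2(z,z_0)$ by construction.

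The main obstacle is the positive definiteness of $k$: it requires the basepoint trick of absorbing the missing mass into an auxiliary coefficient $r_0$ to turn the hypothesis on zero-mean vectors into an unconditional inequality. After that, the Hilbert space embedding is essentially automatic, either via the standard RKHS completion or, equivalently, by building the free pre-Hilbert space on $\Z$ with Gram matrix $(k(z_i,z_j))_{i,j}$ and quotienting by its null space.
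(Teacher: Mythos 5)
Your proof is correct and follows the same route as the paper: the forward direction is the same expansion exploiting the zero-mean constraint, and the converse is Schoenberg's embedding theorem, which the paper simply cites (to \citet{schoenberg38} and \cite[p.~82, Prop.~3.2]{berg84harmonic}). You went further by unpacking that citation into an explicit argument — the basepoint kernel $k(z,z')=\tfrac12(d^2(z,z_0)+d^2(z',z_0)-d^2(z,z'))$, the auxiliary coefficient $r_0=-\sum_i r_i$ to show $k$ is positive definite, and the RKHS embedding — which is exactly the content of the cited result, so the mathematics is the same; your version is just self-contained where the paper defers.
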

\begin{proof} If a distance is Hilbertian, then $d^2$ is trivially negative definite. Indeed, given $n$ points in $\Z$, the sum $\sum r_i r_j d^2(z_i,z_j)$ can be rewritten as $\sum r_i r_j \|\tphi(z_i)-\tphi(z_j)\|_\RKHS^2$ which can be expanded, taking advantage of the fact that $\sum r_i=0$ to $-2\sum r_i r_j \dotp{\tphi(z_i)}{\tphi(z_j)}_\RKHS$ which is negative by definition of a Hilbert dot product. If, on the contrary, $d^2$ is negative definite, then the fact that $d$ is Hilbertian proceeds from a key result by \citet{schoenberg38} outlined in (\cite[p. 82, Prop. 3.2]{berg84harmonic}).
\end{proof}

It is therefore sufficient to show that the squared Wasserstein distance is not negative definite to show that it is not Hilbertian, as stated in the following proposition.

\begin{prop}\label{prop-negative-definite-negative} If $\X=\RR^\dim$ with $\dim\geq 2$ and the ground cost is set to $\dist(x,y)=\norm{x-y}_2$, then the $p$-Wasserstein distance is not Hilbertian for $p=1,2$.
\end{prop}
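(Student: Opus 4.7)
By Proposition~\ref{prop-negative-definite}, to prove that $\Wass_p$ is not Hilbertian it suffices to show that $\Wass_p^2$ is not negative definite. Concretely, the plan is to exhibit a finite family of probability measures $\alpha_1,\dots,\alpha_n$ on $\RR^2 \hookrightarrow \RR^\dim$ and real coefficients $(r_i)$ with $\sum_i r_i=0$ for which
\[
Q(r) \eqdef \sum_{i,j} r_i r_j \Wass_p^2(\alpha_i,\alpha_j) \; > \; 0.
\]
Since it suffices to produce a counterexample in dimension $\dim=2$ (any such family embeds isometrically into $\RR^\dim$ for $\dim\geq 2$), I would fix attention on the plane throughout.

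A first observation guiding the construction is that Diracs alone cannot witness the failure: for $\alpha_i=\delta_{x_i}$ one has $\Wass_p(\alpha_i,\alpha_j)=\norm{x_i-x_j}$, and the squared Euclidean distance on $\RR^\dim$ is already conditionally negative definite since $\RR^\dim\subset \ell^2$. The counterexample must therefore involve genuine mixtures. I would work with a small collection of measures supported on the four corners $x_1{=}(0,0),\,x_2{=}(1,0),\,x_3{=}(1,1),\,x_4{=}(0,1)$ of the unit square, consisting of the Diracs $\delta_{x_k}$ together with a few two--point mixtures $\mu=t\,\delta_{x_i}+(1-t)\,\delta_{x_j}$ for well--chosen pairs $(i,j)$ and weights $t\in(0,1)$. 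For any two such measures, each supported on at most two points, $\Wass_p^p$ reduces to the minimum over at most two admissible matchings of a convex combination of terms $\norm{x_i-x_j}^p$, and is therefore explicit. I would then tabulate the pairwise values $M_{ij}\eqdef \Wass_p^2(\alpha_i,\alpha_j)$, and check (either by computing the largest eigenvalue of the restriction of $M$ to the hyperplane $\{r:\sum_i r_i=0\}$, or by producing a specific vector $r$) that $Q(r)>0$ for some admissible $r$.

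The main obstacle is tuning the configuration so that a positive value of $Q$ actually appears: highly symmetric choices, such as the four corner Diracs augmented by the equally--weighted diagonal/antidiagonal mixtures $\tfrac12(\delta_{x_1}{+}\delta_{x_3})$ and $\tfrac12(\delta_{x_2}{+}\delta_{x_4})$, turn out to embed isometrically in $\RR^2$ for $\Wass_2$ and therefore yield only nonpositive values of $Q$, so the plan must break these symmetries---for instance by using asymmetric weights $t\ne\tfrac12$, or by adding a measure supported on a point off the square such as its centre $(\tfrac12,\tfrac12)$. The conceptual reason such a configuration exists is that, for $\dim\geq 2$, the Wasserstein space $(\Mm_+^1(\RR^\dim),\Wass_2)$ has strictly positive Alexandrov curvature, in sharp contrast with the flat case $\dim=1$ where the quantile embedding of Remark~\ref{rem-1d-ot-generic} gives a global Hilbertian embedding; positive curvature is equivalent to the failure of precisely the sort of quadrilateral inequality that a Hilbertian metric must satisfy, and it is this curvature that supplies a witness $r$. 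The case $p=1$ is handled by the same family by replacing $\Wass_2^2$ with $\Wass_1^2$; the individual $\Wass_1$ values between two--point discrete measures are again computable in closed form as a combinatorial assignment, and the same kind of asymmetric perturbation of the square configuration produces a vector $r$ with $Q(r)>0$.
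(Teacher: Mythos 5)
Your overall strategy matches the paper's exactly: invoke Proposition~\ref{prop-negative-definite} to reduce the claim to the failure of conditional negative definiteness of $\Wass_p^2$, reduce to the plane, fix measures supported on the four corners of the unit square, and turn the question into a finite matrix computation. Your preliminary observation that Diracs alone cannot supply a counterexample---because $\Wass_p(\delta_x,\delta_y)=\norm{x-y}$ and $\norm{\cdot}^2$ is already conditionally negative definite on $\RR^\dim$---is correct and is a useful point that the paper leaves implicit.

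The gap is that you never exhibit a concrete configuration and never verify positivity of $Q$. You propose "Diracs plus a few two-point mixtures with asymmetric weights," acknowledge that the natural symmetric choices fail, and leave the reader with "the plan must break these symmetries." That is a plan, not a proof. The paper closes exactly this gap by brute force: it takes \emph{all} $35$ probability vectors on the support $\{x^1,\dots,x^4\}$ whose weights lie on the $\tfrac14$-grid of $\Sigma_4$ (so measures supported on up to all four corners, not just one or two), computes the $35\times 35$ matrix $\D_p$ of pairwise Wasserstein distances, and verifies (numerically, in Figure~\ref{fig-counterexample}) that $\J\D_p^2\J$ has a positive eigenvalue, where $\J$ is the centering matrix. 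To complete your argument you would need to specify a concrete family and actually produce a vector $r$ in the mean-zero hyperplane with $Q(r)>0$; restricting to two-point mixtures may or may not suffice, and you have given no evidence either way. The appeal to positive Alexandrov curvature of $(\Mm_+^1(\RR^\dim),\Wass_2)$ is a good heuristic for why a counterexample should exist, but it is not used in the paper and would itself require an independent proof before it could replace the explicit computation.
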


\begin{proof} It suffices to prove the result for $\dim=2$ since any counterexample in that dimension suffices to obtain a counterexample in any higher dimension. We provide a nonrandom counterexample which works using measures supported on four vectors $x^1, x^2,x^3,x^4\in\RR^{2}$ defined as follows: $x^{1}=[0,0], x^{2}=[1,0], x^{3}=[0,1], x^{4}=[1,1]$. We now consider all points on the regular grid on the simplex of four dimensions, with increments of $1/4$. There are $35=\bigl(\!\bigl(\substack{4\\4}\bigl)\!\bigl)=\binom{4+4-1}{4}$ such points in the simplex. Each probability vector $\a^i$ on that grid is such that for $j\leq 4$, we have that $\a^i_j$ is in the set $\{0,\tfrac{1}{4},\tfrac{1}{2},\tfrac{3}{4},1\}$ and such that $\sum_{j=1}^4\a^i_j=1$. For a given $p$, the $35\times 35$ pairwise Wasserstein distance matrix $\D_p$ between these histograms can be computed. $\D_p$ is not negative definite if and only if its elementwise square $\D_p^2$ is such that $\J \D_p^2 \J$ has positive eigenvalues, where $\J$ is the centering matrix $\J=\Identity_n-\tfrac{1}{n}\ones_{n,n}$, which is the case as illustrated in Figure~\ref{fig-counterexample}.
\end{proof}
	
\begin{figure}[h!]
\centering
\includegraphics[width=.6\linewidth]{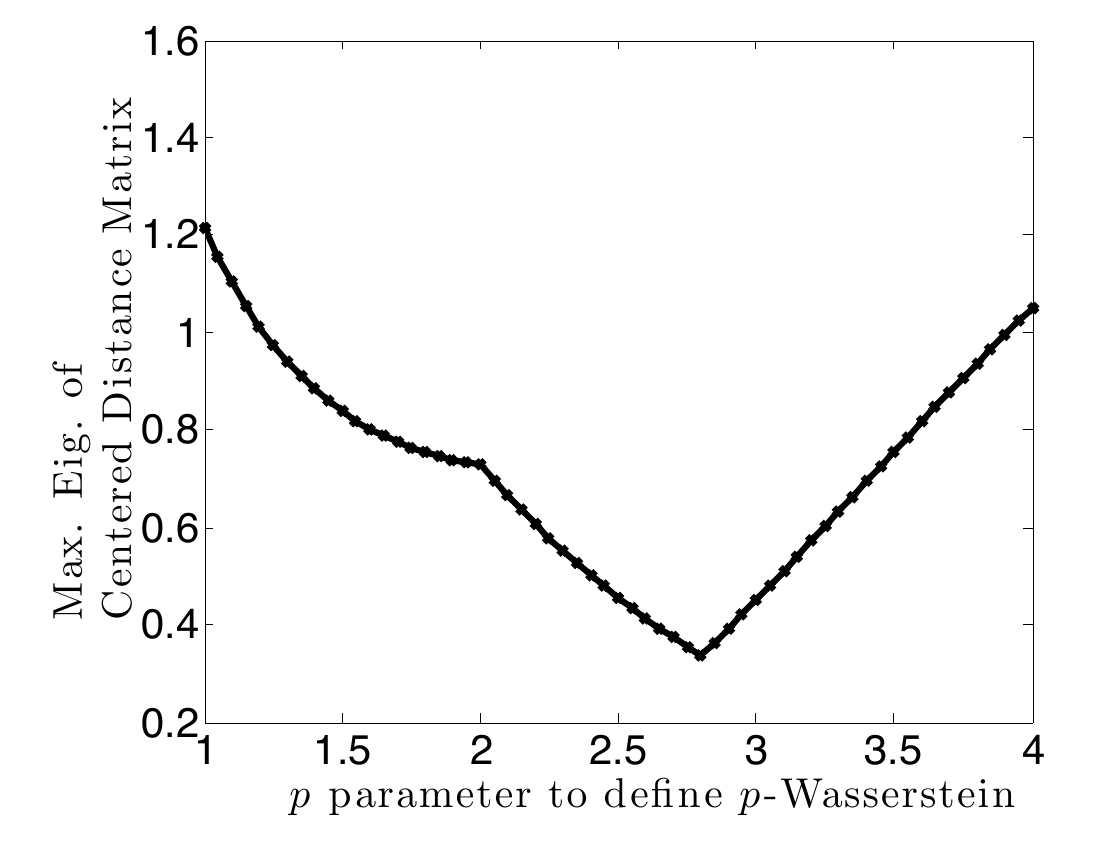}
\caption{\label{fig-counterexample}
One can show that a distance is \emph{not} Hilbertian by looking at the spectrum of the centered matrix $\J \D_p^2 \J$ corresponding to the pairwise squared-distance matrix $\D_p^2$ of a set of points. The spectrum of such a matrix is necessarily non-positive if the distance is Hilbertian. Here we plot the values of the maximal eigenvalue of that matrix for points selected in the proof of Proposition~\ref{prop-negative-definite-negative}. We do so for varying values of $p$, and display the maximal eigenvalues we obtain. These eigenvalues are all positive, which shows that for all these values of $p$, the $p$-Wasserstein distance is not Hilbertian.
}
\end{figure}

\subsection{Embeddings and Distortion}

An important body of work quantifies the hardness of approximating Wasserstein distances using Hilbertian embeddings. 
It has been shown that embedding measures in $\ell_2$ spaces incurs necessarily an important \todoK{need to be more precise here} distortion~(\citet{naor-2005,andoni2015snowflake}) as soon as $\X=\RR^\dim$ with $\dim\geq 2$.

It is possible to embed quasi-isometrically $p$-Wasserstein spaces for $0 < p \leq 1$ in $\ell_1$ (see~\citep{indyk,andoni2008earth,do2011sublinear}), but the equivalence constant between the distances grows fast with the dimension $\dim$. Note also that for $p=1$ the embedding is true only for discrete measures (\ie the embedding constant depends on the minimum distance between the spikes).  \todoK{need to be more precise here} 
A closely related embedding technique consists in using the characterization of $\Wass_1$ as the dual of Lipschitz functions $f$ (see~\S\ref{sec-w1-eucl}) and approximating the Lipschitz constraint $\norm{\nabla f}_1 \leq 1$ by a weighted $\ell_1$ ball over the wavelets coefficients; see~\citep{shirdhonkar2008approximate}. 
This weighted $\ell_1$ ball of wavelet coefficients defines a so-called Besov space of negative index~\citep{leeb2016holder}.
These embedding results are also similar to the bound on the Wasserstein distance obtained using dyadic partitions; see~\citep[Prop. 1]{weed2017sharp} and also~\citep{fournier2015rate}.
This also provides a quasi-isometric embedding in $\ell_1$ (this embedding being given by rescaled wavelet coefficients) and comes with the advantage that this embedding can be computed approximately in linear time when the input measures are discretized on uniform grids. We refer to~\citep{mallat2008wavelet} for more details on wavelets. Note that the idea of using multiscale embeddings to compute Wasserstein-like distances has been used extensively in computer vision; see, for instance,~\citep{ling2006diffusion,grauman2005pyramid,CuturiNIPS2006,lazebnik2006beyond}. 

\subsection{Negative/Positive Definite Variants of Optimal Transport}

We show later in~\S\ref{sec-sliced} that the \emph{sliced} approximation to Wasserstein distances, essentially a sum of 1-D directional transportation distance computed on random push-forwards of measures projected on lines, is negative definite as the sum of negative definite functions~\citep[\S 3.1.11]{berg84harmonic}. This result can be used to define a positive definite kernel~\citep{kolouri2016sliced}. Another way to recover a positive definite kernel is to cast the optimal transport problem as a soft-min problem (over all possible transportation tables) rather than a minimum, as proposed by~\citet{kosowsky1994invisible} to introduce entropic regularization. That soft-min defines a term whose neg-exponential (also known as a generating function) is positive definite~\citep{cuturi2012positivity}.

\section{Empirical Estimators for OT, MMD and $\phi$-divergences}
\label{sec-empirical-wass}

In an applied setting, given two input measures $(\al,\be) \in \Mm_+^1(\X)^2$, an important statistical problem is to approximate the (usually unknown) divergence $D(\al,\be)$ using only samples $(x_i)_{i=1}^n$ from $\al$ and $(y_j)_{j=1}^m$ from $\be$. These samples are assumed to be independently identically distributed from their respective distributions. 

\subsection{Empirical Estimators for OT and MMD}

For both Wasserstein distances $\Wass_p$ (see~\ref{eq-defn-wass-dist}) and MMD norms (see \S\ref{sec-dual-norms}), a straightforward estimator of the unknown distance between distriubtions is compute it directly between the empirical measures, hoping ideally that one can control the rate of convergence of the latter to the former,
\eq{
	D(\al,\be) \approx D(\hat \al_n,\hat \be_m) \qwhereq
	\choice{
		\hat \al_n \eqdef \frac{1}{n}\sum_i \de_{x_i},\\ 
		\hat \be_m \eqdef \frac{1}{m}\sum_j \de_{y_j}.
	}
}
Note that here both $\hat \al_n$ and $\hat \be_m$ are random measures, so $D(\hat \al_n,\hat \be_m)$ is a random number. 
For simplicity, we assume that $\X$ is compact (handling unbounded domain requires extra constraints on the moments of the input measures).

For such a dual distance that metrizes the weak convergence (see Definition~\ref{dfn-weak-conv}), since there is the weak convergence $\hat\al_n \rightarrow \al$, one has $D(\hat \al_n,\hat \be_n) \rightarrow D(\al,\be)$ as $n \rightarrow +\infty$.
But an important question is the speed of convergence of $D(\hat \al_n,\hat \be_n)$ toward $D(\al,\be)$, and this rate is often called the ``sample complexity'' of $D$. 

Note that for $D(\al,\be) = \norm{\cdot}_{\TV}$, since the TV norm does not metrize the weak convergence, $\norms{\hat \al_n - \hat \be_n}_{\TV}$ is not a consistent estimator, namely it does not converge toward $\norm{\al-\be}_{\TV}$. Indeed, with probability 1, $\norms{\hat \al_n - \hat \be_n}_{\TV}=2$ since the support of the two discrete measures does not overlap. Similar issues arise with other $\phi$-divergences, which cannot be estimated using divergences between empirical distributions.

\paragraph{Rates for OT.}

For $\X=\RR^\dim$ and measure supported on bounded domain, it is shown by~\citep{dudley1969speed} that for $\dim>2$, and $1 \leq p < +\infty$,  
\eq{
	\EE( |\Wass_p(\hat \al_n,\hat \be_n)-\Wass_p(\al,\be)| ) = O(n^{-\frac{1}{\dim}}),
}
where the expectation $\EE$ is taken with respect to the random samples $(x_i,y_i)_i$. This rate is tight in $\RR^\dim$ if one of the two measures has a density with respect to the Lebesgue measure. This result was proved for general metric spaces~\citep{dudley1969speed} using the notion of covering numbers and was later refined, in particular for $\X=\RR^d$ in~\citep{dereich2013constructive,fournier2015rate}. 
This rate can be refined when the measures are supported on low-dimensional subdomains:~\citet{weed2017sharp} show that, indeed, the rate depends on the intrinsic dimensionality of the support. \citeauthor{weed2017sharp} also study the nonasymptotic behavior of that convergence, such as for measures which are discretely approximated (\emph{e.g.} mixture of Gaussians with small variances).
It is also possible to prove concentration of $\Wass_p(\hat \al_n,\hat \be_n)$ around its mean $\Wass_p(\al,\be)$; see~\citep{bolley2007quantitative,boissard2011simple,weed2017sharp}.

\paragraph{Rates for MMD.}

For weak norms $\norm{\cdot}^2_{\Krkhs}$ which are dual of RKHS norms (also called MMD), as defined in~\eqref{eq-kernel-dual}, and contrary to Wasserstein distances, the sample complexity does not depend on the ambient dimension 
\eq{
	\EE( |\norms{\hat \al_n-\hat \be_n}_{\Krkhs} - \norm{\al-\be}_{\Krkhs} | ) = O(n^{-\frac{1}{2}}); 
}
see~\citep{sriperumbudur2012empirical}. 
Figure~\ref{fig-sample-complexity} shows a numerical comparison of the sample complexity rates for Wasserstein and MMD distances.  
Note, however, that $\norms{\hat \al_n-\hat \be_n}_{\Krkhs}^2$ is a slightly biased estimate of $\norm{\al-\be}_{\Krkhs}^2$. In order to define an unbiased estimator, and thus to be able to use, for instance, SGD when minimizing such losses, one should rather use the unbiased estimator 
\begin{align*}
	\text{MMD}_{\Krkhs}(\hat \al_n,\hat \be_n)^2 & \eqdef
		\frac{1}{n(n-1)} \sum_{i,i'} \Krkhs(x_i,x_{i'})	+
		\frac{1}{n(n-1)}\sum_{j,j'} \Krkhs(y_j,y_{j'}) \\
		 & \qquad - 2
		\frac{1}{n^2}\sum_{i,j} \Krkhs(x_i,y_j), 
\end{align*}
which should be compared to~\eqref{eq-mmd-discr}. It satisfies $\EE(\text{MMD}_{\Krkhs}(\hat \al_n,\hat \be_n)^2) = \norm{\al-\be}_{\Krkhs}^2$; see~\citep{gretton2012kernel}.

\begin{figure}[h!]
\centering
\begin{tabular}{@{}c@{\hspace{1mm}}c@{}}
\includegraphics[width=.49\linewidth]{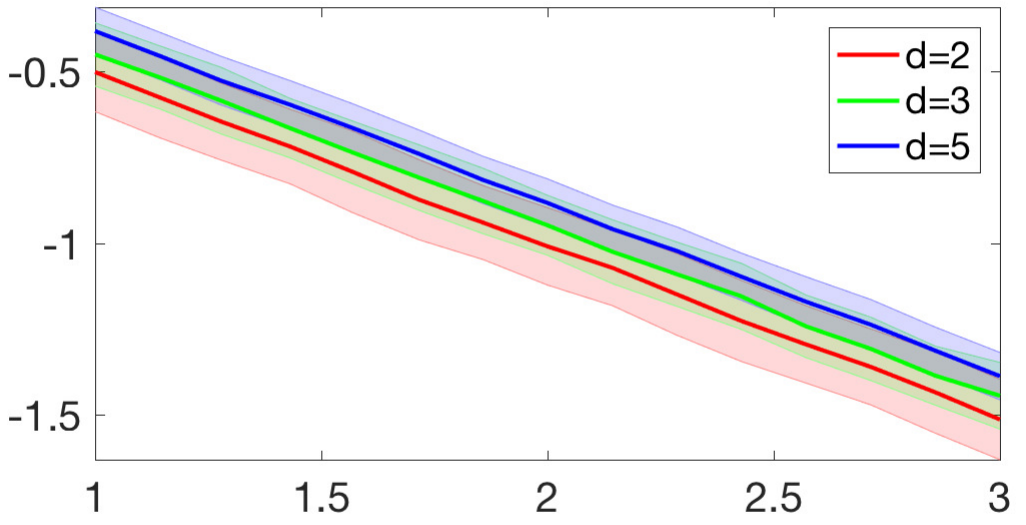}&
\includegraphics[width=.49\linewidth]{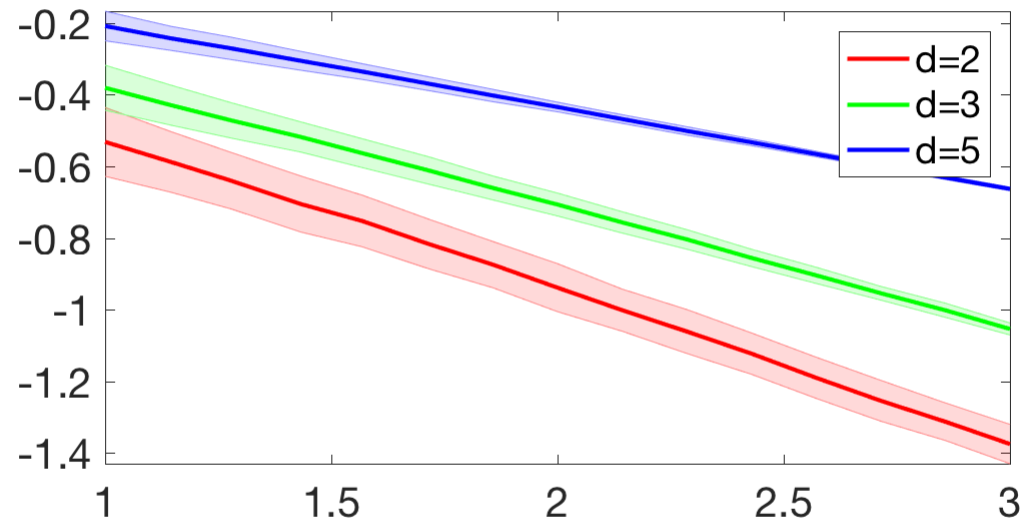}\\
Energy distance $\norm{\cdot}_{H^{-1}}$ & $\Wass_2$ 
\end{tabular}
\caption{\label{fig-sample-complexity}
Decay of $\log_{10}(D(\hat\al_n,\hat\al_n'))$ as a function of $\log_{10}(n)$ for $D$ being the energy distance $D=\norm{\cdot}_{H^{-1}}$ (\ie the $H^{-1}$ norm) as defined in Example~\ref{exp-energy-dist} (left) and the Wasserstein distance $D=\Wass_2$ (right).
Here $(\hat\al_n,\hat\al_n')$ are two independent empirical distributions of $\al$, the uniform distribution on the unit cube $[0,1]^\dim$, tested for several value of $\dim \in \{2,3,5\}$.
The shaded bar displays the confidence interval at $\pm$ the standard deviation of $\log(D(\hat\al_n,\al))$. 
}
\end{figure}

\subsection{Empirical Estimators for $\phi$-divergences}

It is not possible to approximate $\Divergm_\phi(\al|\be)$, as defined in~\eqref{def_divergence}, from discrete samples using $\Divergm_\phi(\hat \al_n|\hat \be_n)$. Indeed, this quantity is either $+\infty$ (for instance, for the $\KL$ divergence) or is not converging to $\Divergm_\phi(\al|\be)$ as $n \rightarrow +\infty$ (for instance, for the $\TV$ norm).
Instead, it is required to use a density estimator to somehow smooth the discrete empirical measures and replace them by densities; see~\citep{silverman1986density}. 
In a Euclidean space $\Xx=\RR^\dim$, introducing $h_\si=h(\cdot/\si)$ with a smooth windowing function and a bandwidth $\si>0$, a density estimator for $\al$ is defined using a convolution against this kernel, 
\eql{\label{eq-kernel-density-estimate}
	\hat \mu_n \star h_\si = \frac{1}{n}\sum_i h_\si(\cdot-x_i).
} 
One can then approximate the $\phi$ divergence using
\eq{
	\Divergm_\phi^\si(\hat \al_n |\hat \be_n)
	\eqdef
	\frac{1}{n} \sum_{j=1}^n
	\phi\pa{
		\frac{
			\sum_i h_\si(y_j-x_i)
		}{
			\sum_{j'} h_\si(y_j-y_{j'}),		
		}
	}
} 
where $\si$ should be adapted to the number $n$ of samples and to the dimension $\dim$. 
It is also possible to devise nonparametric estimators, bypassing the choice of a fixed bandwidth $\si$ to select instead a number $k$ of nearest neighbors. These methods typically make use of the distance between nearest neighbors~\citep{loftsgaarden1965nonparametric}, which is similar to locally adapting the bandwidth $\si$ to the local sampling density. Denoting $\De_k(x)$ the distance between $x \in \RR^\dim$ and its $k$th nearest neighbor among the $(x_i)_{i=1}^n$, a density estimator is defined as
\eql{\label{eq-knn-density-estimate}
	\rho_{\hat \al_n}^k(x) \eqdef \frac{k/n}{ |B_\dim| \De_k(x)^r },
}
where $|B_\dim|$ is the volume of the unit ball in $\RR^\dim$.
Instead of somehow ``counting'' the number of sample falling in an area of width $\si$ in~\eqref{eq-kernel-density-estimate}, this formula~\eqref{eq-knn-density-estimate} estimates the radius required to encapsulate $k$ samples. Figure~\ref{fig-density-estimation} compares the estimators~\eqref{eq-kernel-density-estimate} and~\eqref{eq-knn-density-estimate}.
A typical example of application is detailed in~\eqref{eq-discr-entropy} for the entropy functional, which is the KL divergence with respect to the Lebesgue measure. We refer to~\citep{BerishaHero} for more details.

\newcommand{\myFigDE}[1]{\includegraphics[width=.32\linewidth]{density-estimation/#1}}

\begin{figure}[h!]
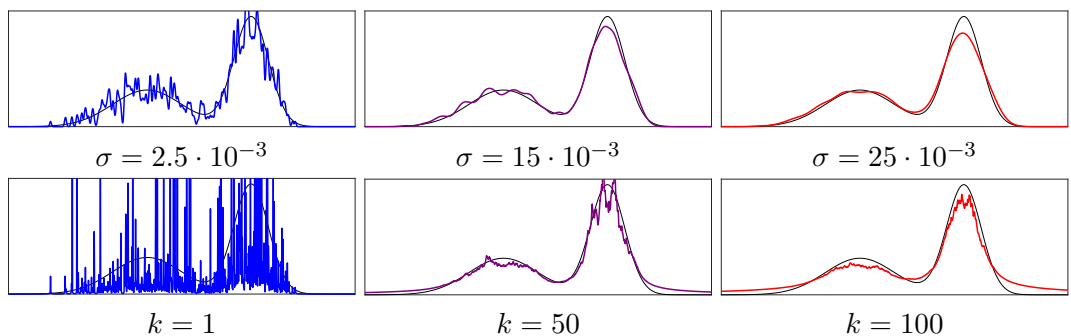

\centering
\begin{tabular}{@{}c@{\hspace{1mm}}c@{\hspace{1mm}}c@{}}
\myFigDE{kernel-1} & \myFigDE{kernel-3} & \myFigDE{kernel-5} \\
$\si=2.5 \cdot 10^{-3}$ & $\si=15 \cdot 10^{-3}$ & $\si=25 \cdot 10^{-3}$ \\
\myFigDE{nn-1} & \myFigDE{nn-3} & \myFigDE{nn-5} \\
$k=1$ & $k=50$ & $k=100$
\end{tabular}
\caption{\label{fig-density-estimation}
Comparison of kernel density estimation $\hat \mu_n \star h_\si$ (top, using a Gaussian kernel $h$) and $k$-nearest neighbors estimation $\rho_{\hat \al_n}^k$ (bottom) for $n=200$ samples from a mixture of two Gaussians.
}
\end{figure}

\section{Entropic Regularization: Between OT and MMD}
\label{sec-entropy-ot-mmd}

Following Proposition~\ref{prop-sinkhorn-div}, we recall that the Sinkhorn divergence is defined as 
\eq{
	\SINKHORNP_\C^\varepsilon(\a,\b) \eqdef 
	\dotp{\P^\star}{\C} = 
	\dotp{e^{\frac{\fD^\star}{\varepsilon}}}{(\K\odot \C)e^{\frac{\gD^\star}{\varepsilon}}},
}
where $\P^\star$ is the solution of~\eqref{eq-regularized-discr} while $(\fD^\star,\gD^\star)$ are solutions of~\eqref{eq-dual-formulation}.
Assuming $\C_{i,j}=\dist(x_i,x_j)^p$ for some distance $\dist$ on $\X$, for two discrete probability distributions of the form~\eqref{eq-pair-discr}, this defines a regularized Wasserstein cost
\eq{
	\Wass_{p,\varepsilon}(\al,\be)^p \eqdef \SINKHORNP_\C^\varepsilon(\a,\b).
}
This definition is generalized to any input distribution (not necessarily discrete) as
\eq{
	\Wass_{p,\varepsilon}(\al,\be)^p \eqdef \int_{\X \times \X} \dist(x,y)^p \d\pi^\star(x,y),
}	
where $\pi^\star$ is the solution of~\eqref{eq-entropic-generic}. 

In order to cancel the bias introduced by the regularization (in particular, $\Wass_{p,\varepsilon}(\al,\al) \neq 0$), we introduce a corrected regularized divergence
\eq{
	\tilde\Wass_{p,\varepsilon}(\al,\be)^p \eqdef
	2 \Wass_{p,\varepsilon}(\al,\be)^p
	-
	\Wass_{p,\varepsilon}(\al,\al)^p
	-
	\Wass_{p,\varepsilon}(\be,\be)^p.
}
It is proved in~\citep{feydy2018interpolating} that if $e^{-c/\epsilon}$ is a positive kernel, then a related corrected divergence (obtained by using $\MKD_\C^\varepsilon$ in place of $\SINKHORNP_\C^\varepsilon$) is positive. 
Note that it is possible to define other renormalization schemes using regularized optimal transport, as proposed, for instance, by~\citet{amari2017information}.

\begin{figure}[h!]
\centering
\begin{tabular}{@{}c@{\hspace{1mm}}c@{}}
\includegraphics[width=.49\linewidth]{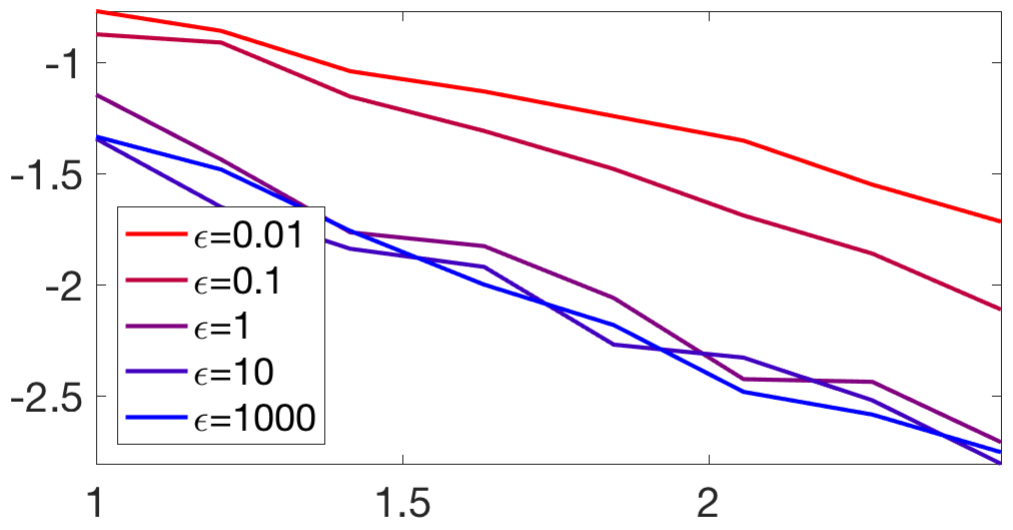}&
\includegraphics[width=.49\linewidth]{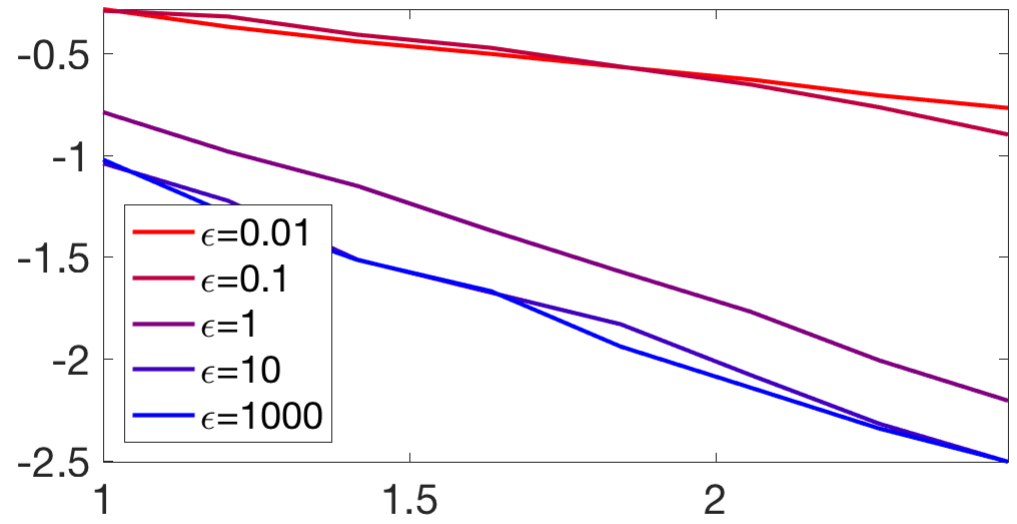}\\
$\dim=2$ & $\dim=5$
\end{tabular}
\caption{\label{fig-sample-complexity-regul}
Decay of $\EE(\log_{10}(\tilde\Wass_{p,\varepsilon}(\hat\al_n,\hat\al_n')))$, for $p=3/2$ for various $\varepsilon$, as a function of $\log_{10}(n)$ where $\al$ is the same as in Figure~\ref{fig-sample-complexity}. \todoK{redo the plot with higher precision}
}
\end{figure}

The following proposition, whose proof can be found in~\citep{ramdas2017wasserstein}, shows that this regularized divergence interpolates between the Wasserstein distance and the energy distance defined in Example~\ref{exp-energy-dist}.

\begin{prop}
	One has
	\eq{
		\tilde\Wass_{p,\varepsilon}(\al,\be)  \overset{\varepsilon \rightarrow 0}{\longrightarrow} 2 \Wass_{p}(\al,\be)
		\qandq
		\tilde\Wass_{p,\varepsilon}(\al,\be)^p \overset{\varepsilon \rightarrow +\infty}{\longrightarrow} \norm{\al-\be}_{\text{ED}(\X,\dist)}^2, 
	}
	where $\norm{\cdot}_{\text{ED}(\X,\dist)}$ is defined in~\eqref{eq-defn-ed}. 
\end{prop}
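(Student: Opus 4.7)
The strategy is to treat the two limits separately and reduce each one to a limit on the (unnormalized) regularized transport cost $\Wass_{p,\varepsilon}(\mu,\nu)^p = \int c\,d\pi^\star_\varepsilon$, where $c=d^p$ and $\pi^\star_\varepsilon$ solves~\eqref{eq-entropic-generic}. Once we know, for any pair of marginals $(\mu,\nu)$, how $\int c\, d\pi^\star_\varepsilon$ behaves, the statement for $\tilde\Wass_{p,\varepsilon}$ follows by plugging in $(\al,\be)$, $(\al,\al)$ and $(\be,\be)$ and combining linearly. I will assume throughout that $\X$ is compact (or that $c$ has the usual moment bounds) so that $\Couplings(\mu,\nu)$ is weakly compact and $c$ is bounded and continuous.

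\emph{Limit $\varepsilon\to 0$.} The first step is to invoke Proposition~\ref{prop-convergence-eps} (more precisely, its measure-theoretic counterpart mentioned after~\eqref{eq-entropic-generic-proj}), which states that $\pi^\star_\varepsilon$ converges weakly, as $\varepsilon\to 0$, to a solution $\pi^\star_0$ of the unregularized Kantorovich problem~\eqref{eq-mk-generic}. Because $c$ is bounded continuous, weak convergence gives $\Wass_{p,\varepsilon}(\mu,\nu)^p=\int c\,d\pi^\star_\varepsilon \longrightarrow \int c\,d\pi^\star_0 = \Wass_p(\mu,\nu)^p$. Applying this to $(\al,\be)$, $(\al,\al)$, $(\be,\be)$ and noting that $\Wass_p(\al,\al)=\Wass_p(\be,\be)=0$, the definition of $\tilde\Wass_{p,\varepsilon}$ yields $\tilde\Wass_{p,\varepsilon}(\al,\be)^p \to 2\Wass_p(\al,\be)^p$, from which the stated convergence (up to the usual $2^{1/p}$ factor in the $p$-th root) follows.

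\emph{Limit $\varepsilon\to +\infty$.} Here the second part of Proposition~\ref{prop-convergence-eps}, extended to arbitrary measures, says that the minimizer $\pi^\star_\varepsilon$ of~\eqref{eq-entropic-generic-proj} converges weakly to the independent coupling $\mu\otimes\nu$ (indeed, $\KL(\cdot|\mu\otimes\nu)$ dominates the linear cost term, and its unique minimizer over $\Couplings(\mu,\nu)$ is $\mu\otimes\nu$ itself). Hence, again by bounded continuity of $c$,
\[
\Wass_{p,\varepsilon}(\mu,\nu)^p \;\longrightarrow\; \int_{\X\times\X} d(x,y)^p\, d\mu(x)\, d\nu(y).
\]
Plugging the three marginal choices $(\al,\be),(\al,\al),(\be,\be)$ into the definition of $\tilde\Wass_{p,\varepsilon}$ and expanding, the limit equals
\[
2\!\!\iint\! d^p\, d\al\,d\be - \iint\! d^p\, d\al\,d\al - \iint\! d^p\, d\be\,d\be
= -\!\!\iint\! d^p\, d(\al-\be)\,d(\al-\be),
\]
which, recalling~\eqref{eq-defn-ed} with kernel $k(x,y)=-d(x,y)^p$, is exactly $\|\al-\be\|_{\text{ED}(\X,d^p)}^2$.

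\emph{Main obstacle.} The only nontrivial point is the interchange of the $\varepsilon$-limit with the integral $\int c\,d\pi^\star_\varepsilon$. On a compact ground space this is immediate from weak convergence of $\pi^\star_\varepsilon$ and continuity of $c$; on a noncompact domain one needs a uniform integrability/tightness argument, typically obtained by controlling the $\KL$-term to bound moments of $\pi^\star_\varepsilon$ and then applying a Skorokhod-type argument or dominated convergence against a suitable envelope. This is where a careful proof (as in~\citep{ramdas2017wasserstein} or~\citep{feydy2018interpolating}) spends most of its effort; everything else is algebraic manipulation of the definition of $\tilde\Wass_{p,\varepsilon}$.
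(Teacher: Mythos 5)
The paper does not give its own proof of this proposition — it defers to \citep{ramdas2017wasserstein} — so there is no in-paper argument to compare against line by line. Your plan is the natural one and is essentially correct: reduce both limits to the behavior of $\Wass_{p,\varepsilon}(\mu,\nu)^p=\int c\,d\pi^\star_\varepsilon$ on each of the three marginal pairs, using the continuous extension of Proposition~\ref{prop-convergence-eps} ($\pi^\star_\varepsilon\rightharpoonup\pi^\star_0$ as $\varepsilon\to 0$, $\pi^\star_\varepsilon\rightharpoonup\al\otimes\be$ as $\varepsilon\to+\infty$), and then observe that $\Wass_p(\al,\al)=0$ and that the combination $2\iint d^p\,d\al\,d\be-\iint d^p\,d\al\,d\al-\iint d^p\,d\be\,d\be$ is exactly the energy distance squared by definition of the kernel $-d^p$ in~\eqref{eq-defn-ed}. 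Your algebraic expansion in the $\varepsilon\to+\infty$ case is correct, and you correctly isolate the one genuinely delicate point, namely passing the limit inside $\int c\,d\pi^\star_\varepsilon$, which is immediate on compact $\X$ by weak convergence against the bounded continuous $c=d^p$ and requires a tightness/uniform-integrability argument otherwise.

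Two small remarks. First, the $2^{1/p}$ factor you flag is a genuine inconsistency in the statement as written: with the definition $\tilde\Wass_{p,\varepsilon}^p=2\Wass_{p,\varepsilon}^p-\Wass_{p,\varepsilon}(\al,\al)^p-\Wass_{p,\varepsilon}(\be,\be)^p$ and the fact that the self-terms vanish in the limit, one gets $\tilde\Wass_{p,\varepsilon}^p\to 2\Wass_p^p$, hence $\tilde\Wass_{p,\varepsilon}\to 2^{1/p}\Wass_p$; so the first display of the proposition should most likely read $\tilde\Wass_{p,\varepsilon}(\al,\be)^p\to 2\Wass_p(\al,\be)^p$ to be consistent with the $\varepsilon\to+\infty$ display, which is already stated for the $p$-th power. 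You are right to treat this as a convention issue and not as a flaw in your derivation. Second, it is worth making explicit that the \emph{full} sequence $\pi^\star_\varepsilon$ converges (not merely along subsequences); this holds because the limiting object is uniquely characterized in both regimes (maximum-entropy optimal coupling as $\varepsilon\to0$, the product coupling $\al\otimes\be$ as $\varepsilon\to+\infty$), which is exactly the content of Proposition~\ref{prop-convergence-eps} and its continuous counterpart, and it is what licenses passing to the limit without subsequence bookkeeping.
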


Figure~\ref{fig-sample-complexity-regul} shows numerically the impact of $\epsilon$ on the sample complexity rates. It is proved in~\cite{genevay2018sample}, in the case of $c(x,y)=\norm{x-y}^2$ on $\Xx=\RR^d$, that these rates interpolate between the ones of OT and MMD.


\chapter{Variational Wasserstein Problems}
\label{c-variational} 

In data analysis, common divergences between probability measures (\emph{e.g.} Euclidean, total variation, Hellinger, Kullback--Leibler) are often used to measure a fitting error or a loss in parameter estimation problems. Up to this chapter, we have made the case that the optimal transport geometry has a unique ability, not shared with other information divergences, to leverage physical ideas (mass displacement) and geometry (a ground cost between observations or bins) to compare measures. These two facts combined make it thus very tempting to use the Wasserstein distance as a loss function. This idea was recently explored for various applied problems. However, the main technical challenge associated with that idea lies in approximating and differentiating efficiently the Wasserstein distance. We start this chapter with a few motivating examples and show how the different numerical schemes presented in the first chapters of this book can be used to solve variational Wasserstein problems.

In image processing, the Wasserstein distance can be used as a loss to synthesize textures~\citep{2016-tartavel-siims}, to account for the discrepancy between statistics of synthesized and input examples. It is also used for image segmentation to account for statistical homogeneity of image regions~\citep{SchnorSegmentation,RabinPapadakisSSVM,peyre2012wasserstein,ni2009local,schmitzer2013object,liimage}.
The Wasserstein distance is also a very natural fidelity term for inverse problems when the measurements are probability measures, for instance, image restoration~\citep{lellmann2014imaging}, tomographic inversion~\citep{AbrahamRadon}, density regularization~\citep{Burger-JKO}, particle image velocimetry~\citep{saumier2015optimal}, sparse recovery and compressed sensing~\citep{indyk2011k}, and seismic inversion~\citep{metivier2016optimal}.
Distances between measures (mostly kernel-based as shown in~\S\ref{sec-mmd}) are routinely used for shape matching (represented as measures over a lifted space, often called currents) in computational anatomy~\citep{vaillant2005surface}, but OT distances offer an interesting alternative~\citep{2017-feydy-miccai}. 
To reduce the dimensionality of a dataset of histograms,~\citeauthor{lee1999learning} have shown that the nonnegative matrix factorization problem can be cast using the Kullback--Leibler divergence to quantify a reconstruction loss~\citep{lee1999learning}. When prior information is available on the geometry of the bins of those histograms, the Wasserstein distance can be used instead, with markedly different results~\citep{Sandler09,ZenICPR14,pmlr-v51-rolet16}.

All of these problems have in common that they require access to the gradients of Wasserstein distances, or approximations thereof. We start this section by presenting methods to approximate such gradients, then follow with three important applications that can be cast as variational Wasserstein problems.

\section{Differentiating the Wasserstein Loss}

In statistics, text processing or imaging, one must usually compare a probability distribution $\be$ arising from measurements to a model, namely a parameterized family of distributions $\{\al_\th,\th\in\Theta\}$, where $\Theta$ is a subset of a Euclidean space. Such a comparison is done through a ``loss'' or a ``fidelity'' term, which is the Wasserstein distance in this section.
In the simplest scenario, the computation of a suitable parameter $\th$ is obtained by minimizing directly
\eql{\label{eq-wloss-generic}
	\umin{\th \in \Theta} \Ee(\th) \eqdef \MK_\c(\al_\th,\be).
}
Of course, one can consider more complicated problems: for instance, the barycenter problem described in~\S\ref{sec-bary} consists in a sum of such terms. However, most of these more advanced problems can be usually solved by adapting tools defined for the basic case above, either using the chain rule to compute explicitly derivatives or using automatic differentiation as advocated in~\S\ref{rem-auto-diff}. 

\paragraph{Convexity.} The Wasserstein distance between two histograms or two densities is convex with respect to its two inputs, as shown by~\eqref{eq-dual} and~\eqref{eq-dual-generic}, respectively. Therefore, when the parameter $\theta$ is itself a histogram, namely $\Theta=\simplex_n$ and $\al_\theta=\theta$, or more generally when $\theta$ describes $K$ weights in the simplex, $\Theta=\simplex_K$, and $\alpha_\theta=\sum_{i=1}^K \theta_i \alpha_i$ is a convex combination of known atoms $\alpha_1,\dots,\alpha_K$ in $\simplex_N$, Problem~\eqref{eq-wloss-generic} remains convex (the first case corresponds to the barycenter problem, the second to one iteration of the dictionary learning problem with a Wasserstein loss~\citep{pmlr-v51-rolet16}). However, for more general parameterizations $\th\mapsto \al_\th$, Problem~\eqref{eq-wloss-generic} is in general not convex.

\paragraph{Simple cases.} For those simple cases where the Wasserstein distance has a closed form, such as univariate (see~\S\ref{rem-1d-ot-generic}) or elliptically contoured (see \S\ref{rem-dist-gaussians}) distributions, simple workarounds exist. They consist mostly in casting the Wasserstein distance as a simpler distance between suitable representations of these distributions (Euclidean on quantile functions for univariate measures, Bures metric for covariance matrices for elliptically contoured distributions of the same family) and solving Problem~\eqref{eq-wloss-generic} directly on such representations. 

In most cases, however, one has to resort to a careful discretization of $\al_\th$ to compute a local minimizer for Problem~\eqref{eq-wloss-generic}. Two approaches can be envisioned: Eulerian or Lagrangian. Figure~\ref{fig-eulerian-lagrangian} illustrates the difference between these two fundamental discretization schemes. At the risk of oversimplifying this argument, one may say that a Eulerian discretization is the most suitable when measures are supported on a low-dimensional space (as when dealing with shapes or color spaces), or for intrinsically discrete problems (such as those arising from string or text analysis). When applied to fitting problems where observations can take continuous values in high-dimensional spaces, a Lagrangian perspective is usually the only suitable choice.
%

\begin{figure}[h!]
\centering
\includegraphics[width=\linewidth]{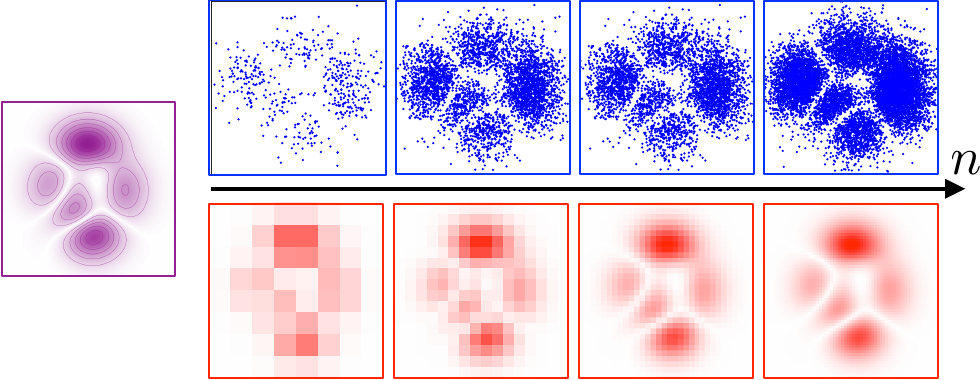}
\caption{\label{fig-eulerian-lagrangian}
Increasing fine discretization of a continuous distribution having a density (violet, left) using a Lagrangian representation $\frac{1}{n}\sum_i \de_{x_i}$ (blue, top) and an Eulerian representation $\sum_i \a_i \de_{x_i}$ with $x_i$ representing cells on a grid of increasing size (red, bottom). The Eulerian perspective starts from a pixelated image down to one with such fine resolution that it almost matches the original density. Weights $\a_i$ are directly proportional to each pixel-cell's intensity.
}
\end{figure}

\subsection{Eulerian Discretization}

A first way to discretize the problem is to suppose that both distributions $\be=\sum_{j=1}^m \b_j \de_{y_j}$ and $\al_\th=\sum_{i=1}^n \a(\th)_i \de_{x_i}$ are discrete distributions defined on fixed locations $(x_i)_i$ and $(y_j)_j$. Such locations might stand for cells dividing the entire space of observations in a grid, or a finite subset of points of interest in a continuous space (such as a family of vector embeddings for all words in a given dictionary~\citep{kusner2015word,pmlr-v51-rolet16}). The parameterized measure $\al_\th$ is in that case entirely represented through the weight vector $\a : \th \mapsto \a(\th) \in \Si_n$, which, in practice, might be very sparse if the grid is large. This setting corresponds to the so-called class of Eulerian discretization methods. In its original form, the objective of Problem~\eqref{eq-wloss-generic} is not differentiable. In order to obtain a smooth minimization problem, we use the entropic regularized OT and approximate~\eqref{eq-wloss-generic} using 
\eq{
	\umin{\th \in \Theta} \Ee_E(\th) \eqdef \MKD_\C^\epsilon(\a(\th),\b)
	\qwhereq
	\C_{i,j} \eqdef c(x_i,y_j).
}
We recall that Proposition~\ref{prop-convexity-dual} shows that the entropic loss function is differentiable and convex with respect to the input histograms, with gradient.

\begin{proposition}[Derivative with respect to histograms]
For $\epsilon>0$, 
$(\a,\b) \mapsto \MKD_\C^\epsilon(\a,\b)$ is convex and differentiable. Its gradient reads
\eql{\label{eq-diff-marginals}
		\nabla \MKD_\C^\epsilon(\a,\b) = (\fD,\gD),
}
where $(\fD,\gD)$ is the unique solution to~\eqref{eq-dual-formulation}, centered such that $\sum_i \fD_i=\sum_j \gD_j=0$. For $\epsilon=0$, this formula defines the elements of the sub-differential of $\MKD_\C^\epsilon$, and the function is differentiable if they are unique. 
\end{proposition}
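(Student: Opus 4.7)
The plan is to read everything off the dual representation~\eqref{eq-dual-formulation}, namely
\begin{equation*}
\MKD_\C^\varepsilon(\a,\b) \;=\; \sup_{(\fD,\gD)\in\RR^n\times\RR^m} Q_\varepsilon(\fD,\gD;\a,\b),
\end{equation*}
with $Q_\varepsilon(\fD,\gD;\a,\b) \eqdef \dotp{\fD}{\a} + \dotp{\gD}{\b} - \varepsilon\dotp{e^{\fD/\varepsilon}}{\K e^{\gD/\varepsilon}}$. Since $Q_\varepsilon$ is affine (hence concave and convex) in $(\a,\b)$ for each fixed $(\fD,\gD)$, the function $\MKD_\C^\varepsilon$ is a pointwise supremum of affine functions of $(\a,\b)$, which is convex and lower semicontinuous. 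The same argument gives convexity for $\varepsilon=0$ from the dual~\eqref{eq-dual}.

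For differentiability when $\varepsilon>0$, I would invoke the classical envelope (Danskin) theorem. The two prerequisites to check are: (i) existence of a dual maximizer $(\fD^\star,\gD^\star)$, which follows from the convergence analysis of Sinkhorn iterations in Remark~\ref{rem-global-conv-sinkh}; and (ii) essential uniqueness of the maximizer. For (ii), I would argue that $Q_\varepsilon(\cdot,\cdot;\a,\b)$ is strictly concave modulo the one-dimensional translation invariance $(\fD,\gD)\mapsto(\fD+\lambda\ones_n,\gD-\lambda\ones_m)$: indeed the linear part is invariant because $\dotp{\ones_n}{\a}=\dotp{\ones_m}{\b}=1$, while the coupling term is invariant by direct computation, and the Hessian of the coupling term restricted to the orthogonal complement of the invariance direction is negative definite (it is, up to sign, a scaled version of the Jacobian appearing in~\eqref{eq-linearized-sinkh}, whose kernel is precisely spanned by $(\ones_n,-\ones_m)$). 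Once uniqueness is established in the quotient, the centering condition $\sum_i \fD_i = \sum_j \gD_j = 0$ picks a canonical representative.

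Given a unique maximizer, Danskin's theorem asserts that $\MKD_\C^\varepsilon$ is differentiable at $(\a,\b)$ with $\nabla\MKD_\C^\varepsilon(\a,\b) = \nabla_{(\a,\b)} Q_\varepsilon(\fD^\star,\gD^\star;\a,\b) = (\fD^\star,\gD^\star)$. The only subtle point is that the gradient is naturally defined on the tangent space to the simplex product, i.e.\ modulo translations by $(\lambda\ones_n,-\lambda\ones_m)$; the centering convention removes this ambiguity. For $\varepsilon=0$, $\MKD_\C^0$ is the value of a linear program, hence convex and concave-conjugate to the indicator of $\PotentialsD(\C)$; each dual optimum $(\fD,\gD)$ is then a subgradient by the general fact that the subdifferential of a pointwise supremum of affine functions equals the convex hull of the active affine functions' gradients (see, e.g., \citep[\S4.10]{bertsimas1997introduction}).

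The main obstacle I anticipate is a clean argument for essential uniqueness of the dual optimizer at $\varepsilon>0$: one must verify that the only degeneracy of $Q_\varepsilon(\cdot,\cdot;\a,\b)$ as a concave function is the translation invariance, which amounts to showing that its Hessian restricted to the orthogonal complement of $(\ones_n,-\ones_m)$ is strictly negative definite whenever $\a,\b$ have full support. Everything else (convexity, envelope-theorem differentiation, handling the $\varepsilon=0$ case via subdifferentials) is a routine application of standard convex-analytic tools.
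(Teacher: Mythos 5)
The paper does not actually supply a proof of this proposition: the earlier equivalent statement, Proposition~\ref{prop-convexity-dual}, carries only a \texttt{TODO} in place of an argument. So your proposal is evaluated on its own merits; it is correct and is the argument one would expect to see.

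Your route --- express $\MKD_\C^\varepsilon$ via the unconstrained dual~\eqref{eq-dual-formulation} as a supremum of functions that are affine in $(\a,\b)$, deduce joint convexity, then invoke Danskin's envelope theorem once essential uniqueness of the dual maximizer is established --- is exactly the standard proof. The ``main obstacle'' you flag is actually short to dispatch directly, without any appeal to the Sinkhorn Jacobian: the Hessian of $(\fD,\gD)\mapsto -\varepsilon\dotp{e^{\fD/\varepsilon}}{\K e^{\gD/\varepsilon}}$ has the quadratic form
\begin{equation*}
-\frac{1}{\varepsilon}\sum_{i,j} P_{i,j}\,(u_i+v_j)^2, \qquad P_{i,j}=e^{(\fD_i+\gD_j-\C_{i,j})/\varepsilon}>0,
\end{equation*}
which vanishes on $(u,v)$ if and only if $u_i+v_j=0$ for all $(i,j)$, i.e.\ $(u,v)\in\mathrm{span}\{(\ones_n,-\ones_m)\}$. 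Thus $Q_\varepsilon$ is strictly concave transverse to the invariance direction and the maximizer is unique in the quotient, with existence supplied by the Sinkhorn convergence analysis. Your $\varepsilon=0$ argument via subgradients of a pointwise supremum of affine forms is also correct.

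One small imprecision, inherited from the paper's own phrasing: the double centering $\sum_i\fD_i=\sum_j\gD_j=0$ is two constraints imposed on a one-parameter orbit of dual maximizers, and is achievable on that orbit only if $\bar\fD+\bar\gD=0$ at a maximizer, which is not automatic. The correct reading is that $(\fD^\star,\gD^\star)$ is any dual maximizer and the centering is a choice of representative for the gradient modulo $\mathrm{span}\{(\ones_n,0),(0,\ones_m)\}$, i.e.\ as a tangent vector to $\simplex_n\times\simplex_m$; the centered pair need not itself be a maximizer of the dual. This does not affect the substance of your argument --- adding multiples of $(\ones_n,0)$ or $(0,\ones_m)$ does not change directional derivatives along the simplex --- but if you wanted to make the statement fully precise you would phrase it this way rather than as ``the unique solution, centered.''
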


The zero mean condition on $(\fD,\gD)$ is important when using gradient descent to guarantee conservation of mass.
%
Using the chain rule, one thus obtains that $\Ee_E$ is smooth and that its gradient is
\eql{\label{eq-diff-loss-eul}
	\nabla \Ee_E(\th) = [ \partial \a(\th) ]^\top( \fD\, ),
}
where $\partial \a(\th) \in \RR^{n \times \text{dim}(\Theta)}$ is the Jacobian (differential) of the map $\a(\th)$, and 
where $\fD \in \RR^n$ is the dual potential vector associated to the dual entropic OT~\eqref{eq-dual-formulation} between $\a(\th)$ and $\b$ for the cost matrix $\C$ (which is fixed in a Eulerian setting, and in particular independent of $\th$). 
This result can be used to minimize locally $\Ee_E$ through gradient descent.

\subsection{Lagrangian Discretization}

A different approach consists in using instead fixed (typically uniform) weights and approximating an input measure $\alpha$ as an empirical measure $\al_\th = \frac{1}{n}\sum_i \de_{x(\th)_i}$ for a point-cloud parameterization map $x : \th \mapsto x(\th) = (x(\th)_i)_{i=1}^n \in \Xx^n$, where we assume here that $\X$ is Euclidean. Problem~\eqref{eq-wloss-generic} is thus approximated as
\eql{\label{eq-lagr-sinkloss}
	\umin{\th} \Ee_L(\th) \eqdef \MKD_{\C(x(\th))}^\epsilon(\ones_n/n,\b)
	\qwhereq
	\C(x)_{i,j} \eqdef c(x(\theta)_i,y_j).
}
Note that here the cost matrix $\C(x(\th))$ now depends on $\th$ since the support of $\al_\th$ changes with $\theta$.
The following proposition shows that the entropic OT loss is a smooth function of the cost matrix and gives the expression of its gradient.

\begin{proposition}[Derivative with respect to the cost]
For fixed input histograms $(\a,\b)$, for $\epsilon>0$, the mapping $\C \mapsto \Rr(\C) \eqdef \MKD_\C^\epsilon(\a,\b)$ is concave and smooth, and  
\eql{\label{eq-diff-cost}
	\nabla \Rr(\C) = \P, 
}
where $\P$ is the unique optimal solution of~\eqref{eq-regularized-discr}. 
For $\epsilon=0$, this formula defines the set of upper gradients. 
\end{proposition}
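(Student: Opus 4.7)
The plan is to view $\Rr$ as a parametric minimum and apply an envelope (Danskin-type) argument. Observe that
\eq{
	\Rr(\C) = \umin{\P \in \CouplingsD(\a,\b)} \Phi(\P,\C),
	\qwhereq
	\Phi(\P,\C) \eqdef \dotp{\P}{\C} - \varepsilon \HD(\P),
}
and that for each fixed admissible $\P$ the map $\C \mapsto \Phi(\P,\C) = \dotp{\P}{\C} - \varepsilon \HD(\P)$ is affine in $\C$. Since a pointwise infimum of a family of affine functions is concave, $\Rr$ is concave on $\RR^{n\times m}$. This argument is independent of $\varepsilon$ and thus covers both cases $\varepsilon>0$ and $\varepsilon=0$.

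For $\varepsilon>0$, concavity and smoothness come together via uniqueness of the minimizer. The entropy term $-\varepsilon \HD$ is strictly convex on the (nonempty compact) polytope $\CouplingsD(\a,\b)$, so $\Phi(\cdot,\C)$ admits a unique minimizer $\P^\star(\C)$, with the explicit scaling form given by Proposition~\ref{prop-regularized-primal}. From the closed form $\P^\star(\C)_{i,j} = \uD_i(\C)\,e^{-\C_{i,j}/\varepsilon}\,\vD_j(\C)$ together with the fact that the Sinkhorn fixed-point map defining $(\uD,\vD)$ is a smooth contraction in the Hilbert projective metric (Theorem~\ref{thm-birkoff} and Remark~\ref{rem-global-conv-sinkh}), the implicit function theorem yields that $\C \mapsto \P^\star(\C)$ is $C^\infty$.

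The identity $\nabla \Rr(\C) = \P^\star(\C)$ is then the standard envelope computation: writing $\Rr(\C) = \Phi(\P^\star(\C),\C)$ and differentiating,
\eq{
	\nabla \Rr(\C) = \partial_\C \Phi(\P^\star(\C),\C) + [\partial_\P \Phi(\P^\star(\C),\C)]\,\partial_\C \P^\star(\C),
}
and the second summand vanishes since the first-order optimality condition, written in the form $\partial_\P \Phi(\P^\star,\C) \in -N_{\CouplingsD(\a,\b)}(\P^\star)$, is orthogonal to the tangent directions produced by $\partial_\C \P^\star(\C)$ (which stay feasible because $\CouplingsD(\a,\b)$ does not depend on $\C$). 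Since $\partial_\C \Phi(\P,\C) = \P$, we conclude $\nabla \Rr(\C) = \P^\star(\C)$, which is~\eqref{eq-diff-cost}.

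For $\varepsilon = 0$ the objective is only convex (not strictly), so the set of minimizers $\Argmin \Phi(\cdot,\C) \subset \CouplingsD(\a,\b)$ may be a full face of the transportation polytope. The affine-pointwise-min argument still gives concavity of $\Rr$, and the standard subdifferential formula for a concave pointwise infimum of affine functions (equivalently, Danskin's theorem for concave parametric minima) yields
\eq{
	\partial \Rr(\C) = \mathrm{conv}\, \Argmin_{\P \in \CouplingsD(\a,\b)} \dotp{\P}{\C},
}
so every optimal coupling $\P$ is an upper gradient (i.e.\ an element of the superdifferential) of $\Rr$ at $\C$, as stated. The main technical obstacle is making the implicit-function argument for smoothness rigorous: one must quotient out the one-dimensional invariance $(\uD,\vD) \mapsto (\lambda \uD,\vD/\lambda)$ of the scaling representation, for instance by fixing $\uD_1 = 1$, and then check that the Jacobian of the reduced Sinkhorn fixed-point equations is invertible, which follows from the strict contraction estimate of Theorem~\ref{thm-birkoff}.
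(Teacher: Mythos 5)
The paper does not supply a proof here (the environment is a placeholder reading ``Use the primal formulation~\eqref{eq-regularized-discr}... follows using the chain rule''), so there is no argument to compare against directly---but the hint points exactly at the envelope/Danskin argument you use, so you are on the route the authors intended.

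Your proof is essentially correct. The three main pieces all hold up: concavity of $\Rr$ as a pointwise infimum over $\P \in \CouplingsD(\a,\b)$ of functions affine in $\C$; uniqueness of the minimizer $\P^\star(\C)$ for $\varepsilon>0$ from $\varepsilon$-strong convexity of $\Phi(\cdot,\C)$; and the envelope identity $\nabla\Rr(\C) = \partial_\C\Phi(\P^\star,\C) = \P^\star$, with the cross term vanishing. For that last cancellation it is worth being explicit (as you implicitly are) that the entropy barrier forces $\P^\star$ to have all entries strictly positive, so $\P^\star$ lies in the relative interior of the transportation polytope; the normal cone at $\P^\star$ is then exactly the orthogonal complement of $\{\mathbf{V}: \mathbf{V}\ones_m=0,\ \transp{\mathbf{V}}\ones_n=0\}$, which is where $\partial_\C\P^\star(\C)$ takes its values since the marginals of $\P^\star(\C)$ never vary with $\C$. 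The superdifferential characterization for $\varepsilon=0$ is the standard Danskin formula for a concave parametric minimum and matches the paper's ``upper gradient'' language.

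Two small refinements. First, if ``smooth'' is read as $C^1$, you can avoid the implicit function theorem entirely: a finite-valued concave function on $\RR^{n\times m}$ is differentiable at $\C$ if and only if its superdifferential there is a singleton, and $\partial\Rr(\C) = \{\P^\star(\C)\}$ follows immediately from uniqueness of the argmin; continuity of $\C \mapsto \P^\star(\C)$ (hence $C^1$ regularity of $\Rr$) is then a routine compactness argument (Berge's maximum theorem). Second, for $C^\infty$ regularity your implicit-function route is the right one, but the ingredient you need is not really Theorem~\ref{thm-birkoff} (which is a global contraction estimate in the Hilbert projective metric); it is the local linearization analyzed in Remark~\ref{rem-local-conv}, where the Jacobian $\partial\Phi(\fD) = \diag(\a)^{-1}\odot\P\odot\diag(\b)^{-1}\odot\transp{\P}$ is shown by Perron--Frobenius to have a simple dominant eigenvalue $1$ along the scaling invariance direction and remaining spectrum of modulus $1-\kappa<1$. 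After quotienting out that one-dimensional invariance as you suggest (e.g.\ by fixing $\sum_i\fD_i=0$), the reduced Jacobian is invertible, and the implicit function theorem applies. This closes the gap you flagged.
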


\todoK{
\begin{proof}
	Use the primal formulation~\eqref{eq-regularized-discr}. \todoK{write me}
	Formula~\eqref{eq-diff-positions} follows using the chain rule. 
\end{proof}
}

Assuming $(\X,\Y)$ are convex subsets of $\RR^\dim$, for discrete measures $(\al,\be)$ of the form~\eqref{eq-pair-discr}, one obtains using the chain rule that $x = (x_i)_{i=1}^n  \in \X^n \mapsto \Ff(x) \eqdef  \MKD_{\C(x)}(\ones_n/n,\b)$ is smooth and that
\eql{\label{eq-diff-positions}
	\nabla \Ff(x) = \pa{ \sum_{j=1}^m \P_{i,j} \nabla_1 \c(x_i,y_j) }_{i=1}^n \in \X^n,
}
where $\nabla_1 \c$ is the gradient with respect to the first variable. 
For instance, for $\X=\Y=\RR^d$, for $\c(s,t) = \norm{s-t}^2$ on $\X=\Y=\RR^d$, one has
\eql{\label{eq-diff-positions-eucl}
	\nabla \Ff(x) = 2 \pa{ \a_i x_i - \sum_{j=1}^m \P_{i,j} y_j }_{i=1}^n,
}
where $\a_i=1/n$ here.
Note that, up to a constant, this gradient is $\Id-\T$, where $\T$ is the barycentric projection defined in~\eqref{eq-baryproj}.
Using the chain rule, one thus obtains that the Lagrangian discretized problem~\eqref{eq-lagr-sinkloss} is smooth and its gradient is
\eql{\label{eq-diff-loss-lagr}
	\nabla \Ee_L(\th) = [ \partial x(\th) ]^\top( \nabla \Ff( x(\th) ) ),
}
where $\partial x(\th) \in \RR^{\text{dim}(\Theta) \times (n\dim)}$ is the Jacobian of the map $x(\th)$ and
where $\nabla \Ff$ is implemented as in~\eqref{eq-diff-positions} or~\eqref{eq-diff-positions-eucl} using for $\P$ the optimal coupling matrix between $\al_\th$ and $\be$.
One can thus implement a gradient descent to compute a local minimizer of $\Ee_L$, as used, for instance, in~\citep{CuturiBarycenter}.

\subsection{Automatic Differentiation}
\label{rem-auto-diff}

The difficulty when applying formulas~\eqref{eq-diff-loss-eul} and~\eqref{eq-diff-loss-lagr} is that one needs to compute the exact optimal solutions $\fD$ or $\P$ for these formulas to be valid, which can only be achieved with acceptable precision using a very large number of Sinkhorn iterates.
In challenging situations in which the size and the quantity of histograms to be compared are large, the computational budget to compute a single Wasserstein distance is usually limited, therefore allowing only for a few Sinkhorn iterations. In that case, and rather than approximating the gradient~\eqref{eq-dual-formulation} using the value obtained at a given iterate, it is usually better to differentiate directly the output of Sinkhorn's algorithm, using reverse mode automatic differentiation. This corresponds to using the ``algorithmic'' Sinkhorn divergences as introduced in~\eqref{eq-algorithmic-loss}, rather than the quantity $\MKD_\C^\varepsilon$ in~\eqref{eq-regularized-discr} which incorporates the entropy of the regularized optimal transport, and differentiating it directly as a composition of simple maps using the inputs, either the histogram in the Eulerian case or the cost matrix in the Lagrangian cases. Using definitions introduced in~\S\ref{sec-regularized-cost}, this is equivalent to differentiating 
$$\itL{\SINKHORND_{\C}}(\a(\theta),\b) \quad\text{ or }\quad \itL{\SINKHORND_{\C(x(\theta))}}(\a,\b)$$
with respect to $\theta$, in, respectively, the Eulerian and the Lagrangian cases for $L$ large enough.

The cost for computing the gradient of functionals involving Sinkhorn divergences is the same as that of computation of the functional itself; see, for instance,~\citep{2016-bonneel-barycoord,2017-Genevay-AutoDiff} for some applications of this approach. We also refer to~\citep{adams2011ranking} for an early work on differentiating Sinkhorn iterations with respect to the cost matrix (as done in the Lagrangian framework), with applications to learning rankings.
Further details on automatic differentiation can be found in~\citep{griewank2008evaluating,rall1981automatic,Neidinger10}, in particular on the ``reverse mode,'' which is the fastest way to compute gradients. 
In terms of implementation, all recent deep-learning Python frameworks feature state-of-the-art reverse-mode differentiation and support for GPU/TPU computations~\citep{theano2016,abadi2016tensorflow,pytorch}, they should be adopted for any large-scale application of Sinkhorn losses.
We strongly encourage the use of such automatic differentiation techniques, since they have the same complexity as computing~\eqref{eq-diff-loss-eul} and~\eqref{eq-diff-loss-lagr}, these formulas being mostly useful to obtain a theoretical understanding of what automatic differentation is computing. The only downside is that reverse mode automatic differentation is memory intensive (the memory grows proportionally with the number of iterations).  There exist, however, subsampling strategies that mitigate this problem~\citep{griewank1992achieving}.

\section{Wasserstein Barycenters, Clustering and Dictionary Learning}
\label{sec-bary}

A basic problem in unsupervised learning is to compute the ``mean'' or ``barycenter'' of several data points. A classical way to define such a weighted mean of points $(x_s)_{s=1}^S \in \X^S$ living in a metric space $(\X,d)$ (where $d$ is a distance or more generally a divergence) is by solving a variational problem 
\eql{\label{eq-frechet-means}
	\umin{x \in \X} \sum_{s=1}^S \la_s d(x,x_s)^p 
}
for a given family of weights $(\la_s)_s \in \simplex_S$, where $p$ is often set to $p=2$. 
When $\X=\RR^d$ and $d(x,y)=\norm{x-y}_2$, this leads to the usual definition of the linear average $x=\sum_s \la_s x_s$ for $p=2$ and the more evolved median point when $p=1$. One can retrieve various notions of means (\emph{e.g.} harmonic or geometric means over $\X=\RR_+$) using this formalism.
This process is often referred to as the ``Fr\'echet''  or ``Karcher'' mean (see~\citet{karcher2014riemannian} for a historical account). For a generic distance $d$, Problem~\eqref{eq-frechet-means} is usually a difficult nonconvex optimization problem. Fortunately, in the case of optimal transport distances, the problem can be formulated as a convex program for which existence can be proved and efficient numerical schemes exist.

\paragraph{Fr\'echet means over the Wasserstein space.}

Given input histogram $\{\b_s\}_{s=1}^S$, where $b_s \in \simplex_{n_s}$, and weights $\la \in \simplex_S$, a Wasserstein barycenter is computed by minimizing
\eql{\label{eq-wass-discr}
	\umin{\a \in \simplex_n} \sum_{s=1}^S \la_s \MKD_{\C_s}(\a,\b_s),
}
where the cost matrices $\C_s \in \RR^{n \times n_s}$ need to be specified. 
A typical setup is ``Eulerian,'' so that all the barycenters are defined on the same grid, $n_s=n$, $\C_s=\C=\distD^p$ is set to be a distance matrix, to solve
\eq{
	\umin{\a \in \simplex_n} \sum_{s=1}^S \la_s \WassD_p^p(\a,\b_s). 
}

The barycenter problem~\eqref{eq-wass-discr} was introduced in a more general form involving arbitrary measures in~\citet{Carlier_wasserstein_barycenter} following earlier ideas of~\citet{carlierekelandmatching}. That presentation is deferred to Remark~\ref{rem-bary-carlier}. 
The barycenter problem for histograms~\eqref{eq-wass-discr} is in fact a linear program, since one can look for the $S$ couplings $(\P_s)_s$ between each input and the barycenter itself, which by construction must be constrained to share the same row marginal,
\eq{
	\umin{\a \in \simplex_n, (\P_s \in \RR^{n \times n_s})_s } \enscond{
		\sum_{s=1}^S \la_s \dotp{\P_s}{\C_s}
	}{
		\foralls s, \P_s^\top \ones_{n_s}=\a, \P_s^\top \ones_{n} = \b_s
	}.
}
Although this problem is an LP, its scale forbids the use of generic solvers for medium-scale problems. One can resort to using first order methods such as subgradient descent on the dual~\citep{Carlier-NumericsBarycenters}.

\begin{rem2}{Barycenter of arbitrary measures}\label{rem-bary-carlier}
	Given a set of input measure $(\be_s)_s$ defined on some space $\X$, the barycenter problem becomes 
	\eql{\label{eq-barycenter-generic}
		\umin{\al \in \Mm_+^1(\X)} \sum_{s=1}^S \la_s \MK_{\c}(\al,\be_s).
	}
	In the case where $\X=\RR^d$ and $c(x,y)=\norm{x-y}^2$,~\citet{Carlier_wasserstein_barycenter} show that if one of the input measures has a density, then this barycenter is unique. 
	Problem~\eqref{eq-barycenter-generic} can be viewed as a generalization of the problem of computing barycenters of points $(x_s)_{s=1}^S \in \X^S$ to arbitrary measures. Indeed, if $\be_s=\de_{x_s}$ is a single Dirac mass, then a solution to~\eqref{eq-barycenter-generic} is $\de_{x^\star}$, where $x^\star$ is a Fr\'echet mean solving~\eqref{eq-frechet-means}.
	Note that for $c(x,y)=\norm{x-y}^2$, the mean of the barycenter $\al^\star$ is necessarily the barycenter of the mean, \ie 
	\eq{
		\int_\Xx x \d\al^\star(x) =  \sum_s \la_s \int_\Xx x \d\al_s(x), 
	}
	and the support of $\al^\star$ is located in the convex hull of the supports of the $(\al_s)_s$.
	The consistency of the approximation of the infinite-dimensional optimization~\eqref{eq-barycenter-generic} when approximating the input distribution using discrete ones (and thus solving~\eqref{eq-wass-discr} in place) is studied in~\citet{Carlier-NumericsBarycenters}.
	Let us also note that it is possible to recast~\eqref{eq-barycenter-generic} as a multimarginal OT problem; see Remark~\ref{eq-multimarg-bary}. 
\end{rem2}

\begin{rem2}{$k$-means as a Wasserstein variational problem}\label{rem-bary-kmeans}
When the family of input measures $(\be_s)_s$ is limited to but one measure $\be$, this measure is supported on a discrete finite subset of $\X=\RR^{\dim}$, and the cost is the squared Euclidean distance, then one can show that the barycenter problem 
	\eql{\label{eq-barycenter-kmeans}
		\umin{\al \in \Mm_{k}^1(\X)} \MK_{\c}(\al,\be),
	}
where $\al$ is constrained to be a discrete measure with a finite support of size up to $k$, is equivalent to the usual $k$-means problem taking $\be$. Indeed, one can easily show that the centroids output by the $k$-means problem correspond to the support of the solution $\al$ and that its weights correspond to the fraction of points in $\be$ assigned to each centroid. One can show that approximating $\MK_{\c}$ using entropic regularization results in smoothed out assignments that appear in soft-clustering variants of $k$-means, such as mixtures of Gaussians~\citep{dessein2017parameter}.
\end{rem2}

\begin{rem2}{Distribution of distributions and consistency}
	It is possible to generalize~\eqref{eq-barycenter-generic} to a possibly infinite collection of measures. This problem is described by considering a probability distribution $M$ over the space $\Mm_+^1(\X)$ of probability distributions, \ie $M \in \Mm_+^1(\Mm_+^1(\X))$. A barycenter is then a solution of 
	\eql{\label{eq-bary-infinite} 
		\umin{\al \in \Mm_+^1(\X)} \EE_M( \MK_{\c}(\al,\be) ) = \int_{\Mm_+^1(\X)} \MK_{\c}(\al,\be) \d M(\be), 
	}
	where $\be$ is a random measure distributed according to $M$. Drawing uniformly at random a finite number $S$ of input measures $(\be_s)_{s=1}^S$ according to $M$, one can then define $\hat \be_S$ as being a solution of~\eqref{eq-barycenter-generic} for uniform weights $\la_s=1/S$ (note that here $\hat \be_S$ is itself a random measure). 
	Problem~\eqref{eq-barycenter-generic} corresponds to the special case of a ``discrete'' measure $M=\sum_s \la_s \de_{\be_s}$.
	The convergence (in expectation or with high probability) of $\MK_{\c}(\hat\be_S,\al)$ to zero (where $\al$ is the unique solution to~\eqref{eq-bary-infinite}) corresponds to the consistency of the barycenters, and is proved in~\citep{BigotBarycenter,leGouic2016existence,bigot2012characterization}.
	This can be interpreted as a law of large numbers over the Wasserstein space. The extension of this result to a central limit theorem is an important problem; see~\citep{panaretos2016amplitude} and~\citep{agueh2017vers} for recent formulations of that problem and solutions in particular cases (1-D distributions and Gaussian measures).
\end{rem2}

\begin{rem2}{Fixed-point map}
When dealing with the Euclidean space $\X=\RR^\dim$ with ground cost $\c(x,y)=\norm{x-y}^2$, it is possible to study the barycenter problem using transportation maps. Indeed, if $\al$ has a density, according to Remark~\ref{rem-exist-mongemap}, one can define optimal transportation maps $\T_s$ between $\al$ and $\al_s$, in particular such that $\T_{s,\sharp}\al = \al_s$. The average map 
\eq{
	\T^{(\al)} \eqdef \sum_{s=1}^S \la_s \T_s
} 
(the notation above makes explicit the dependence of this map on $\al$) is itself an optimal map between $\al$ and $\T^{(\al)}_\sharp \al$ (a positive combination of optimal maps is equal by Brenier's theorem, Remark~\ref{rem-exist-mongemap}, to the sum of gradients of convex functions, equal to the gradient of a sum of convex functions, and therefore optimal by Brenier's theorem again). 
As shown in~\citep{Carlier_wasserstein_barycenter}, first order optimality conditions of the barycenter problem~\eqref{eq-bary-infinite} actually read $\T^{(\al^\star)}=\Identity_{\RR^\dim}$ (the identity map) at the optimal measure $\al^\star$ (the barycenter), and it is shown in~\citep{alvarez2016fixed} that the barycenter $\al^\star$ is the unique (under regularity conditions clarified in~\citep[Theo. 2]{zemel2017fr}) to the fixed-point equation 
\eql{\label{eq-fixed-point-bary}
	G(\al) = \al
	\qwhereq 
	G(\al) \eqdef \T^{(\al)}_\sharp \al,
}  
Under mild conditions on the input measures,~\citet{alvarez2016fixed} and~\citet{zemel2017fr} have shown that $\al \mapsto G(\al)$ strictly decreases the objective function of~\eqref{eq-bary-infinite} if $\al$ is not the barycenter and that the fixed-point iterations $\itt{\al} \eqdef G( \it{\al} )$ converge to the barycenter $\al^\star$. 
This fixed point algorithm can be used in cases where the optimal transportation maps are known in closed form (\emph{e.g.} for Gaussians). Adapting this algorithm for empirical measures of the same size results in computing optimal assignments in place of Monge maps. For more general discrete measures of arbitrary size the scheme can also be adapted~\citep{CuturiBarycenter} using barycentric projections~\eqref{eq-baryproj}.
\end{rem2}

\paragraph{Special cases.}

In general, solving~\eqref{eq-wass-discr} or~\eqref{eq-barycenter-generic} is not straightforward, but there exist some special cases for which solutions are explicit or simple.

\begin{rem1}{Barycenter of Gaussians}
	It is shown in~\citep{Carlier_wasserstein_barycenter} that the barycenter of Gaussians distributions $\al_s = \Nn(\mean_s,\cov_s)$, for the squared Euclidean cost $c(x,y)=\norm{x-y}^2$, is itself a Gaussian $\Nn(\mean^\star,\cov^\star)$.
	Making use of~\eqref{eq-dist-gauss}, one sees that the barycenter mean is the mean of the inputs
	\eq{
		\mean^\star = \sum_s \la_s \mean_s
	}
	while the covariance minimizes 
	\eq{
		\umin{ \cov } \sum_s \la_s \Bb(\cov,\cov_s)^2,
	}
	where $\Bb$ is the Bure metric~\eqref{eq-bure-defn}. As studied in~\citep{Carlier_wasserstein_barycenter}, the first order optimality condition of this convex problem shows that $\cov^\star$ is the unique positive definite fixed point of the map
	\eq{
		\cov^\star = \Psi(\cov^\star)
		\qwhereq
		\Psi(\cov) \eqdef \sum_s \la_s ( \cov^{\frac{1}{2}} \cov_s \cov^{\frac{1}{2}}  )^{\frac{1}{2}},
	}
	where $\cov^{\frac{1}{2}}$ is the square root of positive semidefinite matrices. 
	This result was known from~\citep{KnottSmith,RuschendorfUckelmann} and is proved in~\citep{Carlier_wasserstein_barycenter}. 
	While $\Psi$ is not strictly contracting, iterating this fixed-point map, \ie defining $\itt{\cov} \eqdef \Psi( \it{\cov} )$ converges in practice to the solution $\cov^\star$.
		%
	This method has been applied to texture synthesis in~\citep{2014-xia-siims}. \citet{alvarez2016fixed} have also proposed to use an alternative map 
	\eq{
		\bar\Psi(\cov) \eqdef 
		\cov^{-\frac{1}{2}} 
		\Big( \sum_s \la_s ( \cov^{\frac{1}{2}} \cov_s \cov^{\frac{1}{2}}  )^{\frac{1}{2}} \Big)^2 
		\cov^{-\frac{1}{2}}
	}
	for which the iterations $\itt{\cov} \eqdef \bar\Psi( \it{\cov} )$ converge.
	This is because the fixed-point map $G$ defined in~\eqref{eq-fixed-point-bary} preserves Gaussian distributions, and in fact, 
	\eq{
		G( \Nn(\mean,\cov) ) = \Nn(\mean^\star,\bar\Psi(\cov)). 		
	}
	Figure~\ref{fig-bary-gaussian} shows two examples of computations of barycenters between four 2-D Gaussians.
\end{rem1}

\begin{figure}[h!]
\centering
\begin{tabular}{@{}c@{\hspace{20mm}}c@{}}
\includegraphics[width=.3\linewidth]{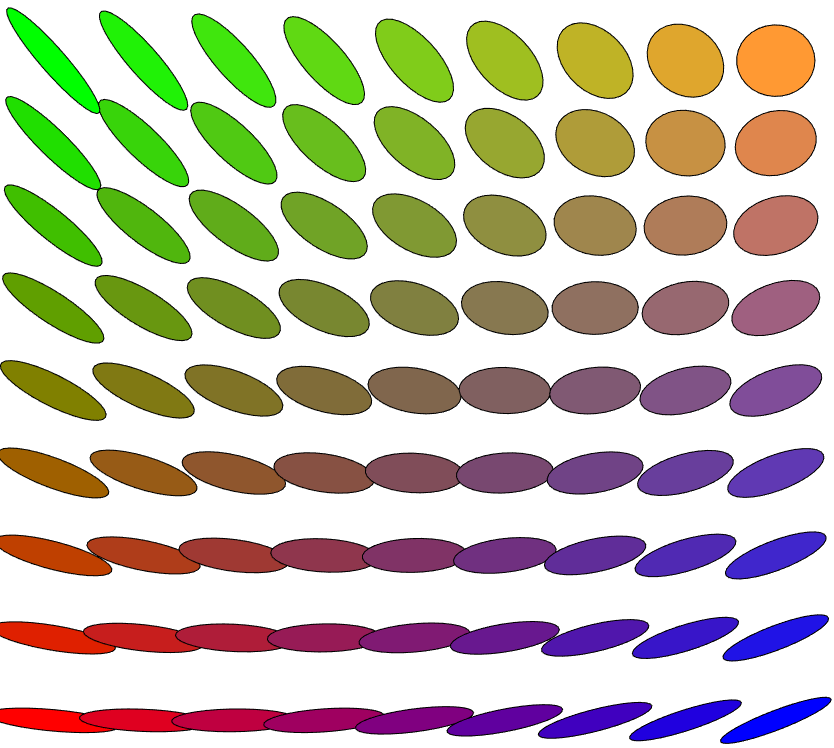} &
\includegraphics[width=.3\linewidth]{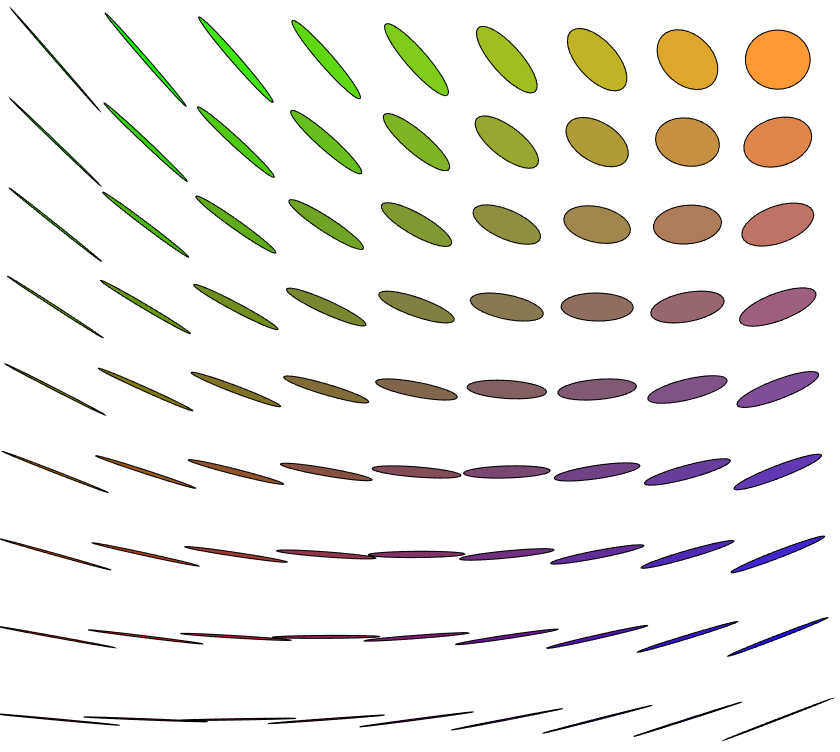} 
\end{tabular}
\caption{\label{fig-bary-gaussian}
Barycenters between four Gaussian distributions in 2-D. Each Gaussian is displayed using an ellipse aligned with the principal axes of the covariance, and with elongations proportional to the corresponding eigenvalues. 
}
\end{figure}

\begin{rem1}{1-D cases}\label{rem-bary-1d}
	For 1-D distributions, the $\Wass_p$ barycenter can be computed almost in closed form using the fact that the transport is the monotone rearrangement, as detailed in Remark~\ref{rem-1d-ot-generic}. 
	The simplest case is for empirical measures with $n$ points, \ie $\be_s = \frac{1}{n}\sum_{i=1}^n \de_{y_{s,i}}$, where the points are assumed to be sorted $y_{s,1} \leq y_{s,2} \leq \ldots$. Using~\eqref{eq-1d-empirical} the barycenter $\al_\la$ is also an empirical measure on $n$ points
	\eq{
		\al_\la = \frac{1}{n}\sum_{i=1}^n \de_{x_{\la,i}}
		\qwhereq
		x_{\la,i} = A_\la(x_{s,i})_s, 
	}
	where $A_\la$ is the barycentric map
	\eq{
	 	A_\la(x_s)_s \eqdef \uargmin{x \in \RR} \sum_{s=1}^S \la_s |x-x_{s}|^p.
	}
	For instance, for $p=2$, one has $x_{\la,i} = \sum_{s=1}^S \la_s x_{s,i}$. 
	In the general case, one needs to use the cumulative functions as defined in~\eqref{eq-cumul-defn}, and using~\eqref{eq-wass-cumul}, one has
	\eq{
		\foralls r \in [0,1], \quad
		\cumul{\al_\la}^{-1}(r) = A_\la( \cumul{\al_s}^{-1}(r) )_{s=1}^S, 
	}
	which can be used, for instance, to compute barycenters between discrete measures supported on less than $n$ points in $O(n\log(n))$ operations, using a simple sorting procedure.
\end{rem1}

\begin{rem1}{Simple cases}
	Denoting by $\T_{r,u} : x \mapsto rx+u$ a scaling and translation, and assuming that $\al_{s} = T_{r_s,u_s,\sharp} \al_0$ is obtained by scaling and translating an initial template measure, then a barycenter $\al_\la$ is also obtained using scaling and translation \todoK{to check}
	\eq{
		\al_\la = T_{r^\star,u^\star,\sharp} \al_0
		\qwhereq
		\choice{
			r^\star = (\sum_s \la_s/r_s)^{-1},  \\
			u^\star = \sum_s \la_s u_s.
		}
	}	
\end{rem1}

\begin{rem1}{Case $S=2$}
	In the case where $\X=\RR^d$ and $c(x,y)=\norm{x-y}^2$ (this can be extended more generally to geodesic spaces), the barycenter between $S=2$ measures $(\al_0,\al_1)$ is the McCann interpolant as already introduced in~\eqref{eq-mc-cann-interp}. Denoting $\T_\sharp \al_0=\al_1$ the Monge map, one has that the barycenter $\al_\la$ reads $\al_\la = (\la_1 \Id + \la_2 \T)_{\sharp} \al_0$.
	Formula~\eqref{eq-mccann-discrete} explains how to perform the computation in the discrete case.
\end{rem1}

\paragraph{Entropic approximation of barycenters.}

One can use entropic smoothing and approximate the solution of~\eqref{eq-wass-discr} using 
\eql{\label{eq-entropic-bary}
	\umin{\a \in \simplex_n} \sum_{s=1}^S \la_s \MKD_{\C_s}^\epsilon(\a,\b_s)
}
for some $\epsilon>0$. 
This is a smooth convex minimization problem, which can be tackled using gradient descent~\citep{CuturiBarycenter,GramfortPC15}. An alternative is to use descent methods (typically quasi-Newton) on the semi-dual~\citep{2016-Cuturi-siims}, which is useful to integrate additional regularizations on the barycenter, to impose, for instance, some smoothness w.r.t a given norm.
A simpler yet very effective approach, as remarked by~\citet{2015-benamou-cisc} is to rewrite~\eqref{eq-entropic-bary} as a (weighted) KL projection problem
\eql{\label{eq-bary-entropy-couplings}
	\umin{ (\P_s)_s } \enscond{ \sum_{s} \la_s \epsilon \KLD( \P_s|\K_s ) }{
		\foralls s, \transp{\P_s}\ones_m = \b_s, \:
		\P_1\ones_1 =  \cdots = \P_S\ones_S,
	}
}
where we denoted $\K_s \eqdef e^{-\C_s/\epsilon}$. Here, the barycenter $\a$ is implicitly encoded in the row marginals of all the couplings $\P_s \in \RR^{n \times n_s}$ as $\a = \P_1\ones_1 =  \cdots = \P_S\ones_S$.
As detailed by~\citet{2015-benamou-cisc}, one can generalize Sinkhorn to this problem, which also corresponds to iterative projections. This can also be seen as a special case of the generalized Sinkhorn detailed in~\S\ref{sec-generalized}.
The optimal couplings $(\P_s)_s$ solving~\eqref{eq-bary-entropy-couplings} are computed in scaling form as 
\eql{\label{eq-bary-opt}
	\P_s=\diag(\uD_s)\K\diag(\vD_s), 
}
and the scalings are sequentially updated as
\begin{align}\label{eq-sinkhorn-bary}
	\foralls s \in \range{1,S}, \quad \itt{\vD}_s &\eqdef \frac{\b_s}{\transp{\K}_s \it{\uD}_s}, \\
	\foralls s \in \range{1,S}, \quad  \itt{\uD}_s &\eqdef \frac{\itt{\a}}{\K_s \itt{\vD}_s}, \label{eq-sinkhorn-bary-2}\\
		\qwhereq
		\itt{\a} &\eqdef \prod_s (  \K_s \itt{\vD}_s )^{\la_s}. \label{eq-sinkhorn-bary-3}
\end{align}
An alternative way to derive these iterations is to perform alternate minimization on the variables of a dual problem, which is detailed in the following proposition.

\begin{prop}
	The optimal $(\uD_s,\vD_s)$ appearing in~\eqref{eq-bary-opt} can be written as $(\uD_s,\vD_s) = ( e^{\fD_s/\epsilon},e^{\gD_s/\epsilon} )$, where $( \fD_s,\gD_s )_s$ are the solutions of the following program (whose value matches the one of \eqref{eq-entropic-bary}):
	\eql{\label{eq-dual-bary-entropy}
		\umax{ ( \fD_s,\gD_s )_s } \enscond{
		\sum_s \la_s \pa{
			\dotp{\gD_s}{\b_s} - \epsilon \dotp{\K_s e^{\gD_s/\epsilon}}{e^{\fD_s/\epsilon}}
		}
		}{
			\sum_s \la_s \fD_s = 0
		}.
	}
\end{prop}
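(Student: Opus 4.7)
The plan is to derive the dual by standard Lagrangian duality applied to the constrained convex program (eq-bary-entropy-couplings). First I would introduce an auxiliary variable $\a \in \RR^n$ to parameterize the common row marginal, rewriting the constraints $\P_1\ones_1 = \cdots = \P_S\ones_S$ as the $S$ affine constraints $\P_s\ones_{n_s} = \a$. I would then attach Lagrange multipliers $\la_s \fD_s \in \RR^n$ to these (the factor $\la_s$ is a convenient normalization that will produce the symmetric form in (eq-dual-bary-entropy)) and $\la_s \gD_s \in \RR^{n_s}$ to the column-marginal constraints $\transp{\P_s}\ones_n = \b_s$, producing the Lagrangian
\eq{
L\bigl((\P_s,\a),(\fD_s,\gD_s)\bigr)
= \sum_s \la_s\,\epsilon\,\KLD(\P_s|\K_s)
+ \sum_s \la_s\, \dotp{\fD_s}{\a - \P_s\ones_{n_s}}
+ \sum_s \la_s\, \dotp{\gD_s}{\b_s - \transp{\P_s}\ones_n}.
}

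The next step is to perform the inner minimization in the two primal blocks separately. Because $\a$ is a free variable and $L$ is affine in $\a$ with coefficient $\sum_s \la_s \fD_s$, the minimum equals $-\infty$ unless $\sum_s \la_s \fD_s = 0$, which recovers the constraint in (eq-dual-bary-entropy). Minimizing then over each $\P_s \geq 0$ entry-wise, the first-order condition $\epsilon \log(\P_{s,i,j}/\K_{s,i,j}) = \fD_{s,i} + \gD_{s,j}$ yields the closed-form optimizer
\eq{
\P_{s,i,j}^\star = \K_{s,i,j}\, e^{\fD_{s,i}/\epsilon}\, e^{\gD_{s,j}/\epsilon}
= \uD_{s,i}\, \K_{s,i,j}\, \vD_{s,j},
}
which is exactly the scaling form (eq-bary-opt) with $(\uD_s,\vD_s) = (e^{\fD_s/\epsilon}, e^{\gD_s/\epsilon})$, and this recovers the claimed parameterization.

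I would then substitute $\P_s^\star$ back into $L$. The entropy term telescopes cleanly: using $\epsilon\log(\P_{s,i,j}^\star/\K_{s,i,j}) = \fD_{s,i} + \gD_{s,j}$ one finds
\eq{
\epsilon\, \KLD(\P_s^\star|\K_s)
= \dotp{\fD_s}{\P_s^\star\ones_{n_s}} + \dotp{\gD_s}{\transp{\P_s^\star}\ones_n}
- \epsilon \dotp{\ones}{\P_s^\star\ones} + \epsilon\, \dotp{\ones}{\K_s\ones},
}
so that after collecting terms and using $\dotp{\ones}{\P_s^\star\ones} = \dotp{e^{\fD_s/\epsilon}}{\K_s e^{\gD_s/\epsilon}}$, the Lagrangian reduces, up to a constant $C = \epsilon\sum_s \la_s \dotp{\ones}{\K_s\ones}$ independent of the dual variables, to the dual objective in (eq-dual-bary-entropy). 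Strong duality applies by strict convexity of $\P \mapsto \KLD(\P|\K)$ on the positive orthant together with an obvious Slater point (e.g.\ $\P_s = \a \otimes \b_s$ with $\a \eqdef \sum_s \la_s \cdot$ some normalized row marginal), so the maximum of the dual equals the primal infimum and the maximizer produces the optimal couplings via the scaling form above.

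The main obstacle is cosmetic but worth care: the additive constant $C$ created when rewriting the KL objective as a dual expression, and the fact that the primal value of (eq-bary-entropy-couplings) itself differs from that of (eq-entropic-bary) by an analogous constant because $\epsilon \KLD(\P|\K) = (\dotp{\C}{\P} - \epsilon\HD(\P)) + \epsilon\dotp{\ones}{\K\ones} - \epsilon$ on couplings of unit mass. One must verify that these two constants cancel at the optimum (where each $\P_s^\star$ is a genuine coupling) so that the maximum of the dual indeed equals (eq-entropic-bary), as stated.
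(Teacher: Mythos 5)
Your proof takes essentially the same route as the paper: Lagrangian duality with scaled multipliers $\la_s\fD_s$, $\la_s\gD_s$ attached to the constraints of~\eqref{eq-bary-entropy-couplings}, minimizing out the auxiliary variable $\a$ to extract $\sum_s\la_s\fD_s=0$, and then minimizing each $\P_s$-block in closed form before substituting back. The only cosmetic difference is that you obtain the minimizer $\P_s^\star=\diag(e^{\fD_s/\epsilon})\K_s\diag(e^{\gD_s/\epsilon})$ from the first-order condition, whereas the paper invokes the Legendre transform identity $\KLD^*(\VectMode{U}|\K)=\sum_{i,j}\K_{i,j}(e^{\VectMode{U}_{i,j}}-1)$; these are the same calculation. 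Your explicit remark about the additive constant is correct and fills a gap the paper leaves implicit: the same constant $\epsilon\sum_s\la_s\dotp{\ones}{\K_s\ones}$ by which~\eqref{eq-bary-entropy-couplings} exceeds~\eqref{eq-entropic-bary} reappears when $\KLD^*$ is evaluated, so the maximum of~\eqref{eq-dual-bary-entropy} genuinely matches~\eqref{eq-entropic-bary} as the proposition claims. One small correction: with the paper's convention $\HD(\P)=-\sum_{i,j}\P_{i,j}(\log\P_{i,j}-1)$ one has $\epsilon\KLD(\P|\K)=\dotp{\C}{\P}-\epsilon\HD(\P)+\epsilon\dotp{\ones}{\K\ones}$ identically in $\P$, without your extra $-\epsilon$ and without restricting to unit-mass couplings; this does not affect the conclusion since the same constant appears on both sides of the duality.
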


\begin{proof}	Introducing Lagrange multipliers in~\eqref{eq-bary-entropy-couplings} leads to
	\begin{align*}
		\umin{ (\P_s)_s, \a } 
		\umax{ ( \fD_s,\gD_s )_s }
		\sum_{s} \la_s \Big(
			\epsilon \KLD( \P_s|\K_s )
			+ 
			\dotp{\a - \P_s\ones_m}{ \fD_s } \\
		\qquad\qquad	+			
			\dotp{\b_s - \transp{\P_s}\ones_m}{ \gD_s }
		\Big).
	\end{align*}
	Strong duality holds, so that one can exchange the min and the max, to obtain
		\begin{align*}
		&\umax{ ( \fD_s,\gD_s )_s }
			\sum_s \la_s \pa{
				 \dotp{\gD_s}{\b_s}
		+ 
		\umin{\P_s} \epsilon \KLD( \P_s|\K_s ) - \dotp{\P_s}{\fD_s \oplus \gD_s }
		} \\
	&\qquad\qquad	+ \umin{\a}
			\dotp{ \sum_{s} \la_s \fD_s }{\a}.
	\end{align*}
	The explicit minimization on $\a$ gives the constraint $\sum_{s} \la_s \fD_s=0$ together with 
	\begin{align*}
		\umax{ ( \fD_s,\gD_s )_s }
			\sum_s \la_s 
				 \dotp{\gD_s}{\b_s}
		-
		\epsilon \KLD^*\pa{ \frac{\fD_s \oplus \gD_s}{\epsilon}|\K_s },
	\end{align*}
	where $\KLD^*(\cdot|\K_s)$ is the Legendre transform~\eqref{eq-legendre} of the function $\KLD^*(\cdot|\K_s)$.
	This Legendre transform reads
	\eql{\label{eq-legendre-kl}
		\KLD^*(\VectMode{U}|\K) = \sum_{i,j} \K_{i,j} (e^{\VectMode{U}_{i,j}}-1), 
	}
	which shows the desired formula. 
	To show~\eqref{eq-legendre-kl}, since this function is separable, one needs to compute
	\eq{
		\foralls (u,k) \in \RR_+^2, \quad
		\KLD^*(u|k) \eqdef \umax{r} ur -   \pa{ r \log(r/k) - r + k  }
	}
	whose optimality condition reads $u=\log(r/k)$, \ie $r = k e^{u}$, hence the result.
\end{proof}

Minimizing~\eqref{eq-dual-bary-entropy} with respect to each $\gD_s$, while keeping all the other variables fixed, is obtained in closed form by~\eqref{eq-sinkhorn-bary}. 
Minimizing~\eqref{eq-dual-bary-entropy} with respect to all the $(\fD_s)_s$ requires us to solve for $\a$ using~\eqref{eq-sinkhorn-bary-3} and leads to the expression~\eqref{eq-sinkhorn-bary-2}.

Figures~\ref{fig-barycenters-images} and~\ref{fig-barycenters-shapes} show applications to 2-D and 3-D shapes interpolation. Figure~\ref{fig-barycenters-surfaces} shows a computation of barycenters on a surface, where the ground cost is the square of the geodesic distance. For this figure, the computations are performed using the geodesic in heat approximation detailed in Remark~\ref{rem-geod-heat}. We refer to~\citep{2015-solomon-siggraph} for more details and other applications to computer graphics and imaging sciences.

\newcommand{\FigBaryImA}[2]{\includegraphics[width=.088\linewidth]{barycenters-2d/annulus-cross-heart-2disk/shape-#1-#2}}
\newcommand{\FigBaryImLineA}[1]{\FigBaryImA{#1}{1} & \FigBaryImA{#1}{2} & \FigBaryImA{#1}{3} & \FigBaryImA{#1}{4} & \FigBaryImA{#1}{5}}
\newcommand{\FigBaryImB}[2]{\includegraphics[width=.088\linewidth]{barycenters-2d/cat-star8-thinspiral-trefle/shape-#1-#2}}
\newcommand{\FigBaryImLineB}[1]{\FigBaryImB{#1}{1} & \FigBaryImB{#1}{2} & \FigBaryImB{#1}{3} & \FigBaryImB{#1}{4} & \FigBaryImB{#1}{5}}

\begin{figure}[h!]
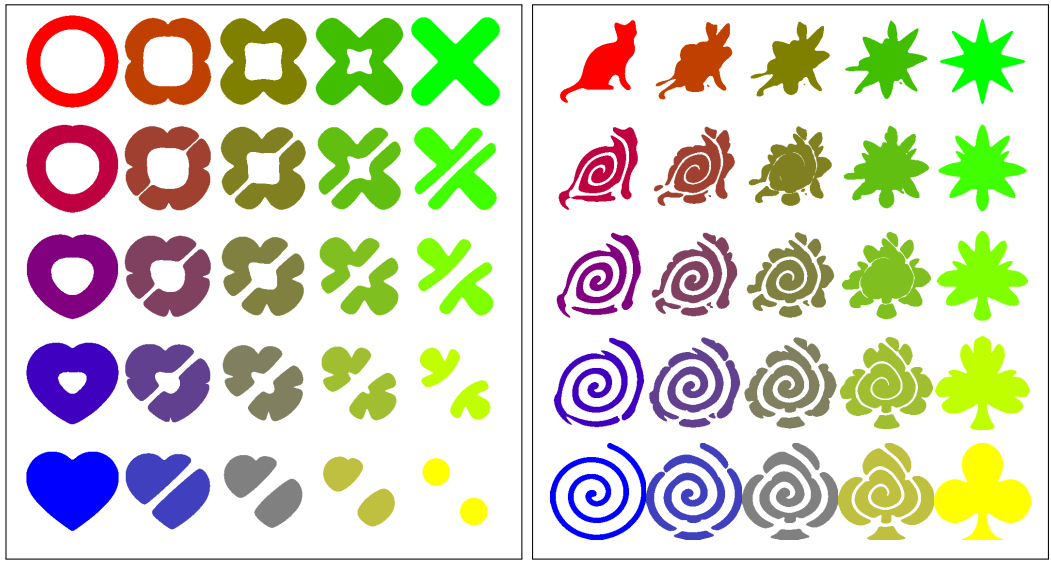

\centering
\fbox{
\begin{tabular}{@{}c@{}c@{}c@{}c@{}c@{}}
\FigBaryImLineA{1} \\
\FigBaryImLineA{2} \\
\FigBaryImLineA{3} \\
\FigBaryImLineA{4} \\
\FigBaryImLineA{5}
\end{tabular}
}
\fbox{
\begin{tabular}{@{}c@{}c@{}c@{}c@{}c@{}}
\FigBaryImLineB{1} \\
\FigBaryImLineB{2} \\
\FigBaryImLineB{3} \\
\FigBaryImLineB{4} \\
\FigBaryImLineB{5}
\end{tabular}
}
\caption{\label{fig-barycenters-images}
Barycenters between four input 2-D shapes using entropic regularization~\eqref{eq-entropic-bary}. To display a binary shape, the displayed images shows a thresholded density. 
The weights $(\la_s)_s$ are bilinear with respect to the four corners of the square.  
}
\end{figure}

\newcommand{\FigBaryShapeA}[2]{\includegraphics[width=.09\linewidth,trim=130 78 110 60,clip]{barycenters-shapes/duck-spiky-moomoo_s0-double-torus/barycenter-#1-#2}}
\newcommand{\FigBaryShapeLineA}[1]{\FigBaryShapeA{#1}{0} & \FigBaryShapeA{#1}{1} & \FigBaryShapeA{#1}{2} & \FigBaryShapeA{#1}{3} & \FigBaryShapeA{#1}{4}}
\newcommand{\FigBaryShapeB}[2]{\includegraphics[width=.087\linewidth,trim=160 65 150 55,clip]{barycenters-shapes/mushroom-torus-hand1-trim-star/barycenter-#1-#2}}
\newcommand{\FigBaryShapeLineB}[1]{\FigBaryShapeB{#1}{0} & \FigBaryShapeB{#1}{1} & \FigBaryShapeB{#1}{2} & \FigBaryShapeB{#1}{3} & \FigBaryShapeB{#1}{4}}

\begin{figure}[h!]
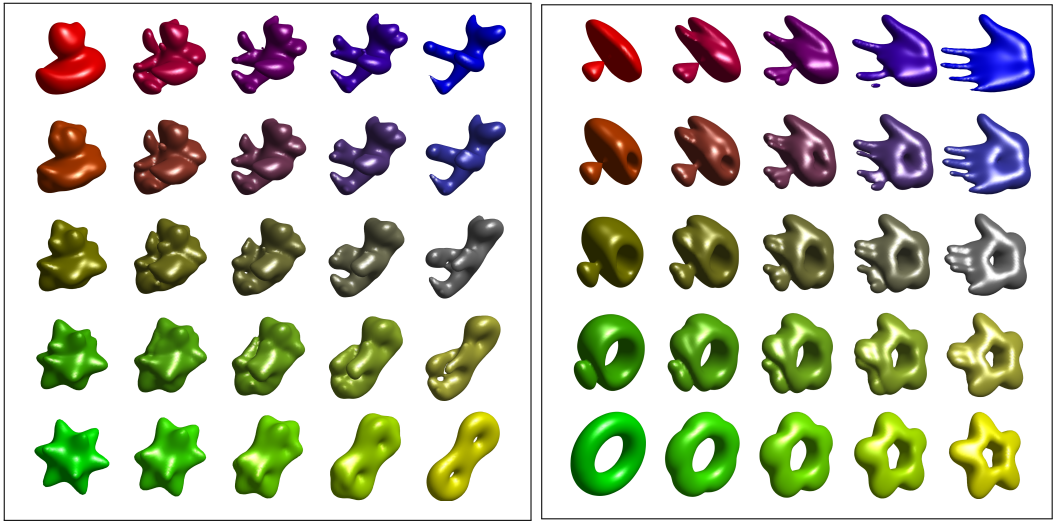

\centering
\fbox{
\begin{tabular}{@{}c@{}c@{}c@{}c@{}c@{}}
\FigBaryShapeLineA{0} \\
\FigBaryShapeLineA{1} \\
\FigBaryShapeLineA{2} \\
\FigBaryShapeLineA{3} \\
\FigBaryShapeLineA{4}
\end{tabular}
}
\fbox{
\begin{tabular}{@{}c@{}c@{}c@{}c@{}c@{}}
\FigBaryShapeLineB{0} \\
\FigBaryShapeLineB{1} \\
\FigBaryShapeLineB{2} \\
\FigBaryShapeLineB{3} \\
\FigBaryShapeLineB{4}
\end{tabular}
}
\caption{\label{fig-barycenters-shapes}
Barycenters between four input 3-D shapes using entropic regularization~\eqref{eq-entropic-bary}. The weights $(\la_s)_s$ are bilinear with respect to the four corners of the square.  
Shapes are represented as measures that are uniform within the boundaries of the shape and null outside.
}
\end{figure}
The efficient computation of Wasserstein barycenters remains at this time an active research topic~\citep{NIPS2017_6858,NIPS2018_8274}. Beyond their methodological interest, Wasserstein barycenters have found many applications outside the field of shape analysis. They have been used for image processing~\citep{rabin-ssvm-11}, in particular color modification~\citep{2015-solomon-siggraph} (see Figure~\ref{fig-colors}); Bayesian computations~\citep{srivastava2015scalable,srivastava2015wasp} to summarize measures; and nonlinear dimensionality reduction, to express an input measure as a Wasserstein barycenter of other known measures~\citep{2016-bonneel-barycoord}. All of these problems result in involved nonconvex objective functions which can be accurately optimized using automatic differentiation (see Remark~\ref{rem-auto-diff}). 
Problems closely related to the computation of barycenters include the computation of principal components analyses over the Wasserstein space (see, for instance,~\citep{SeguyCuturi,bigot2017geodesic}) and the statistical estimation of template models~\citep{boissard2015distribution}. The ability to compute barycenters enables more advanced clustering methods such as the $k$-means on the space of probability measures~\citep{del2016robust,ho2017multilevel}.

\begin{figure}[h!]
\centering
\begin{tabular}{@{}c@{}c@{}c@{}c@{}c@{}}
\includegraphics[width=.195\linewidth]{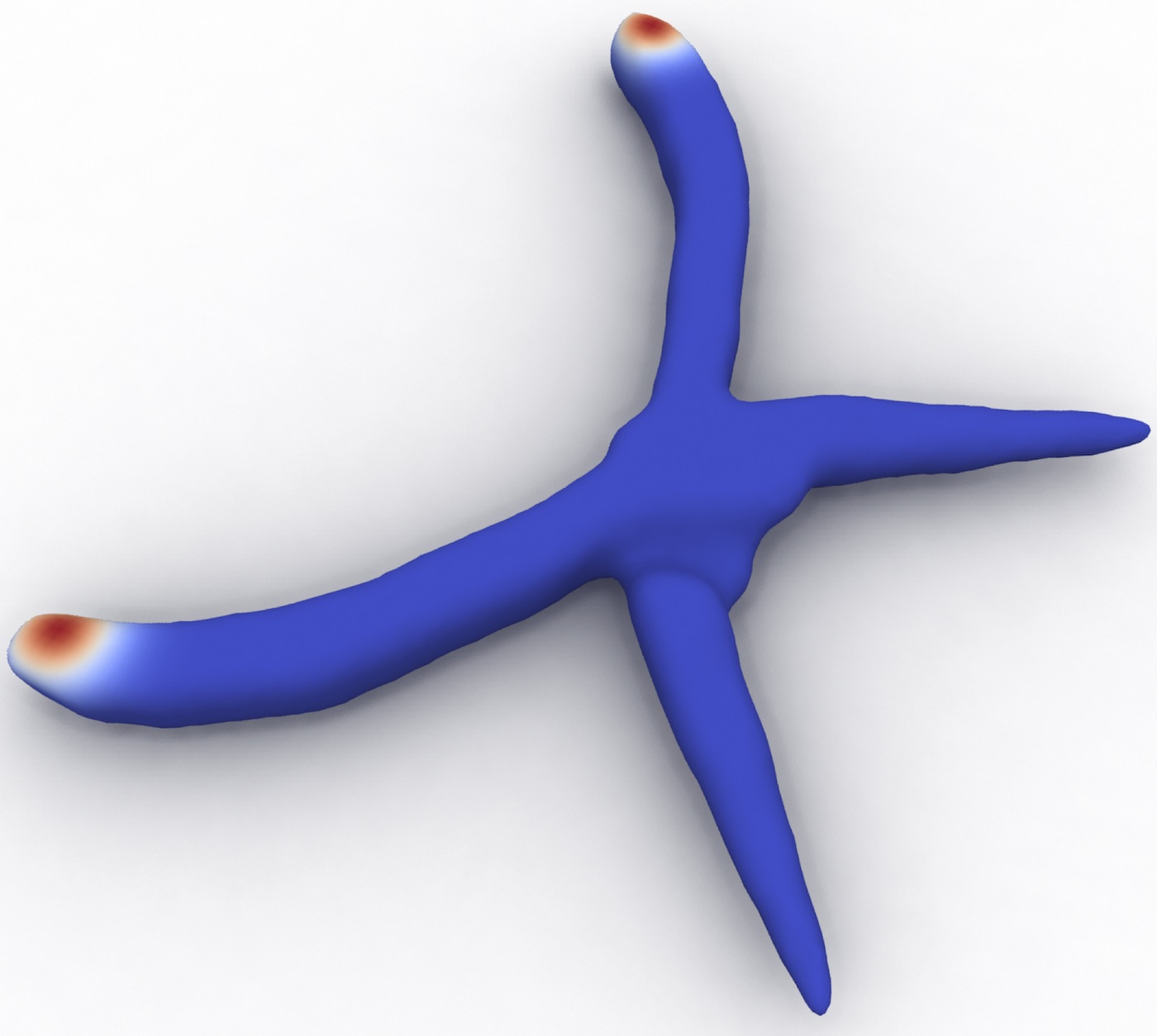}&
\includegraphics[width=.195\linewidth]{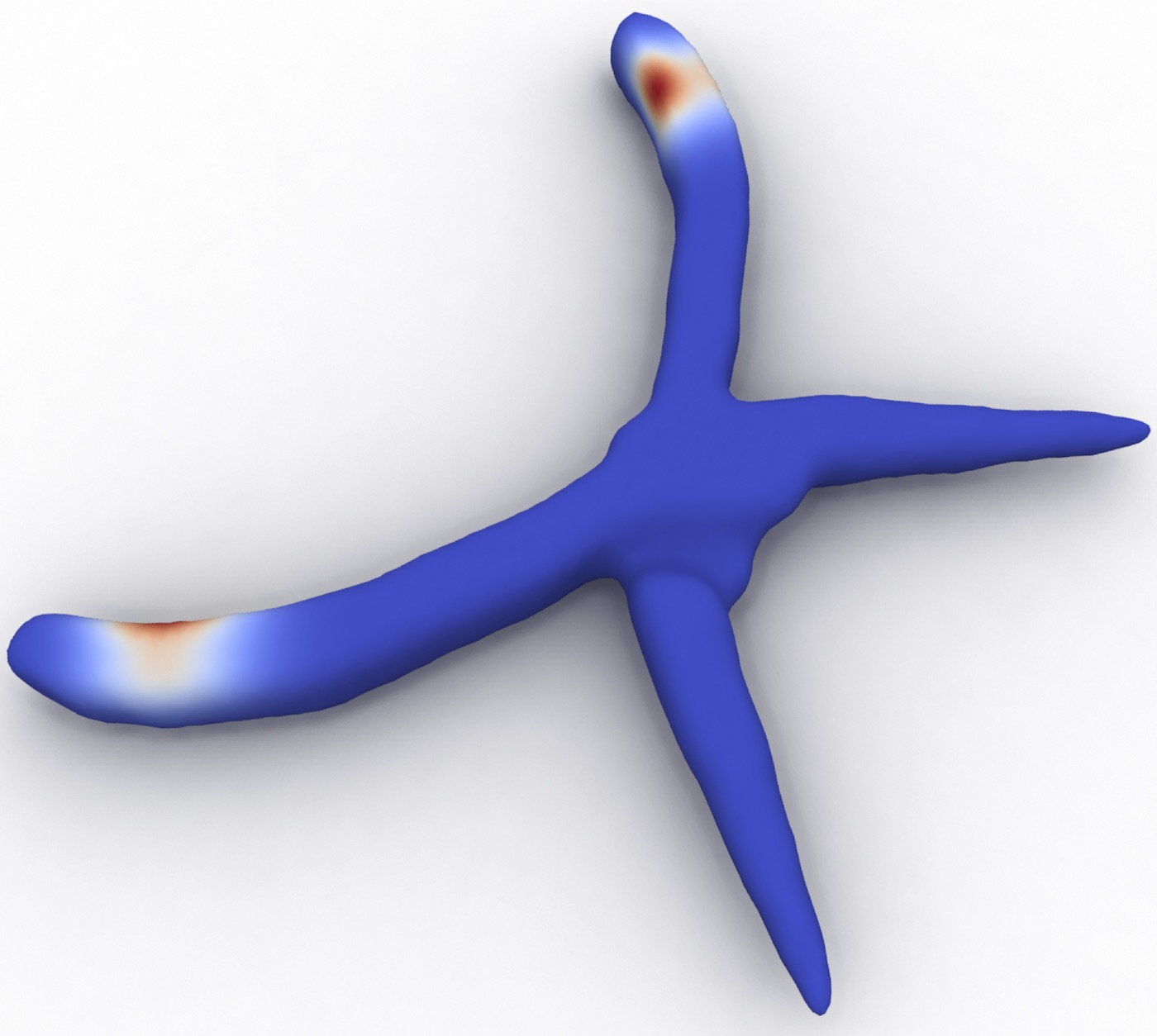}&
\includegraphics[width=.195\linewidth]{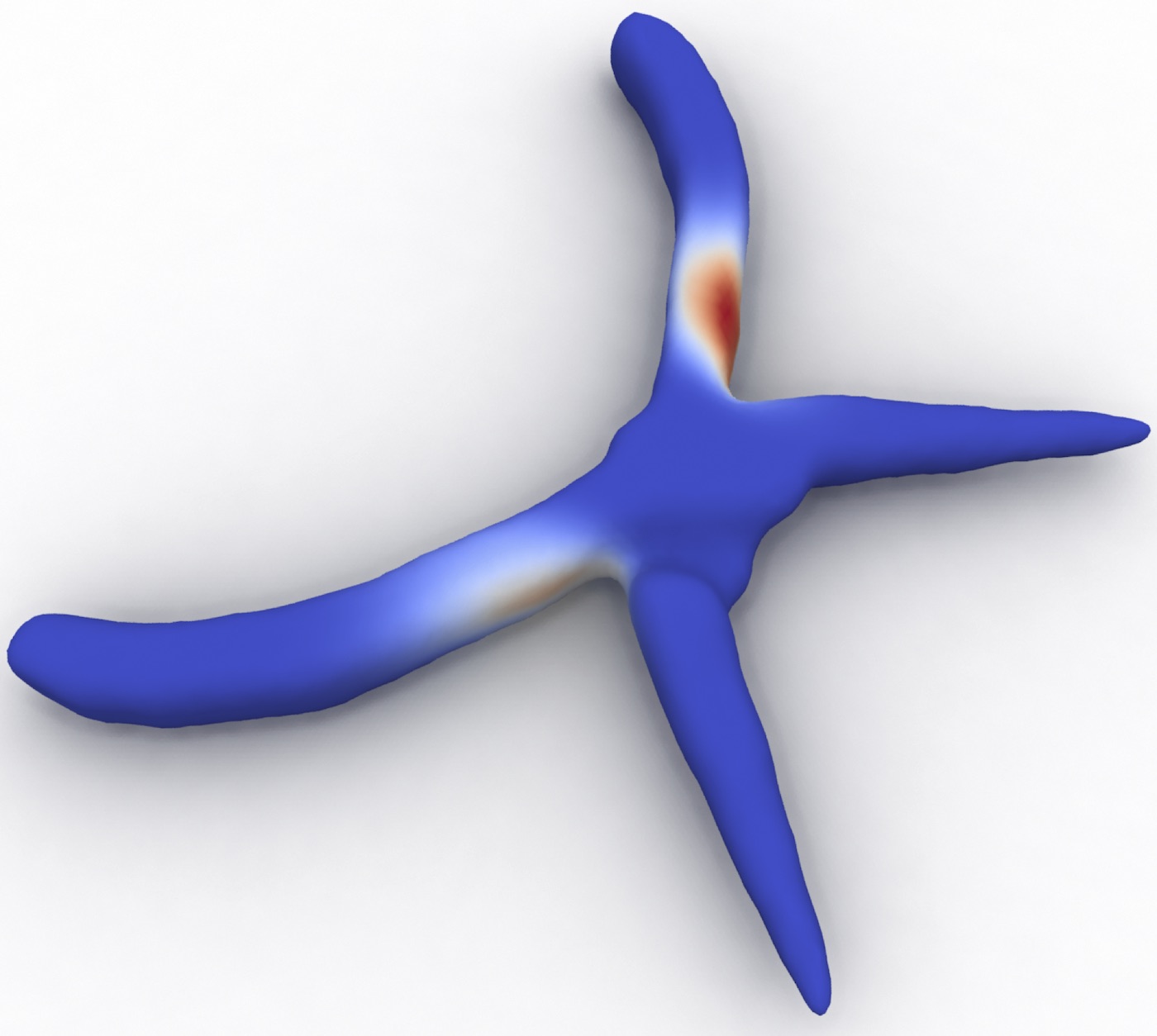}&
\includegraphics[width=.195\linewidth]{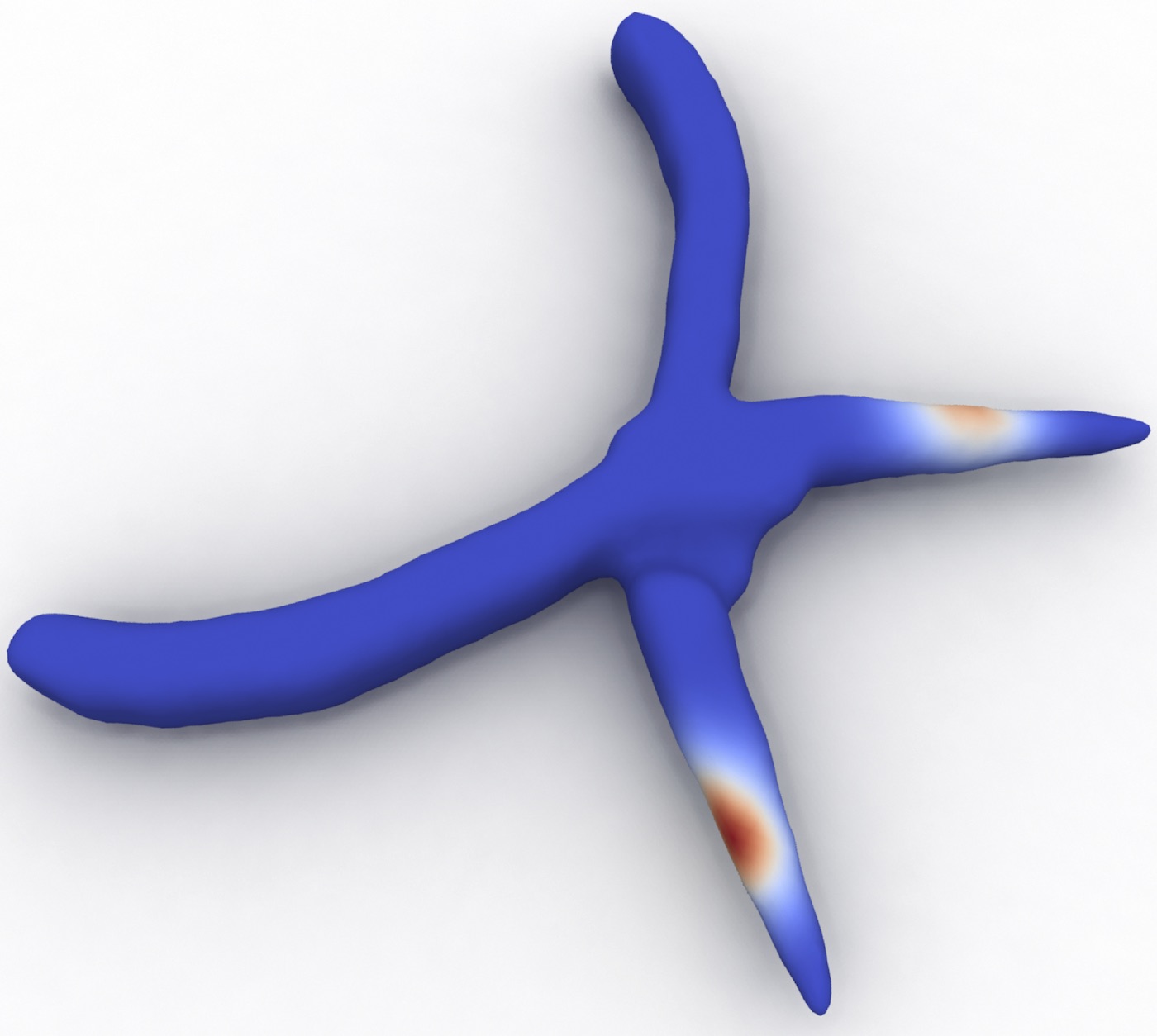}&
\includegraphics[width=.195\linewidth]{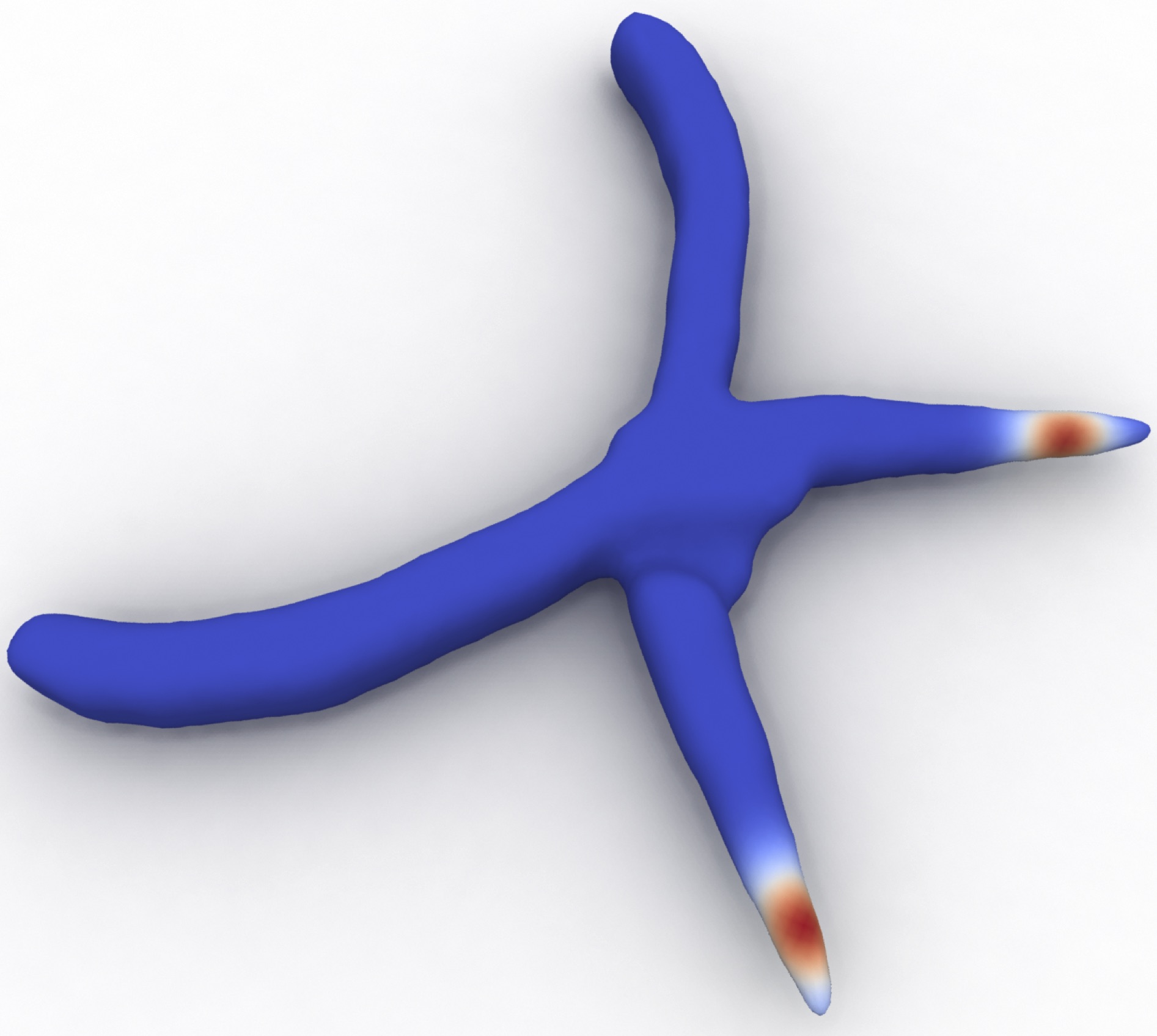}
\end{tabular}
\caption{\label{fig-barycenters-surfaces}
Barycenters interpolation between two input measures on surfaces, computed using the geodesic in heat fast kernel approximation (see Remark~\ref{rem-geod-heat}). Extracted from~\citep{2015-solomon-siggraph}.
}
\end{figure}

\begin{figure}[h!]
\centering
\includegraphics[width=1\linewidth]{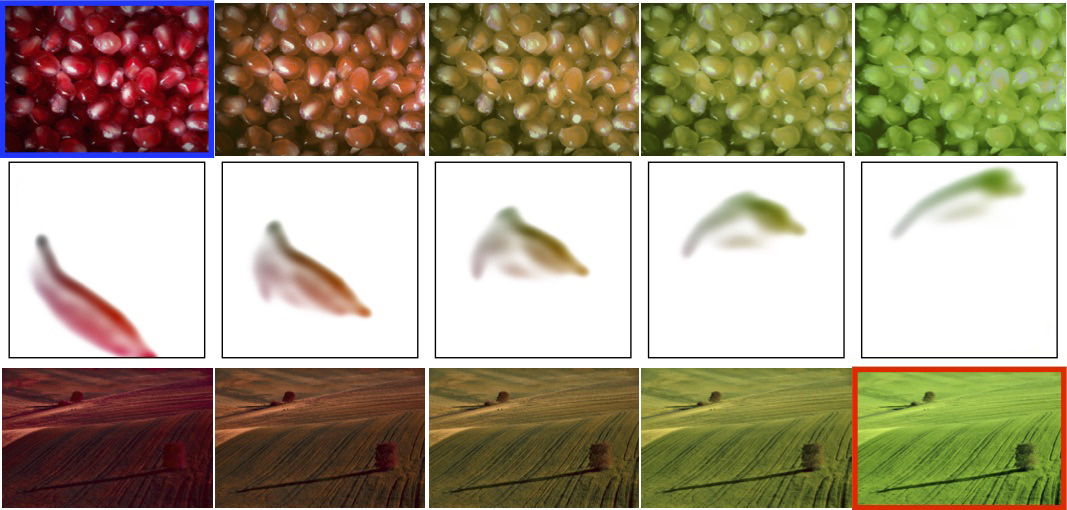}
\caption{\label{fig-colors}
Interpolation between the two 3-D color empirical histograms of two input images (here only the 2-D chromatic projection is visualized for simplicity). The modified histogram is then applied to the input images using barycentric projection as detailed in Remark~\ref{rem-barycenric-proj}. Extracted from~\citep{2015-solomon-siggraph}.
}
\end{figure}

\begin{rem1}{Wasserstein propagation}
	As studied in~\citet{Solomon-ICML}, it is possible to generalize the barycenter problem~\eqref{eq-wass-discr}, where one looks for distributions $(\b_u)_{u \in U}$ at some given set $U$ of nodes in a graph $\Gg$ given a set of fixed input distributions $(\b_v)_{v \in V}$ on the complementary set $V$ of the nodes. The unknown are determined by minimizing the overall transportation distance between all pairs of nodes $(r,s) \in \Gg$ forming edges in the graph
	\eql{\label{eq-w-propag}
		\umin{(\b_u  \in \simplex_{n_u})_{u \in U}} 
			\sum_{(r,s) \in \Gg}  \MKD_{\C_{r,s}}(\b_r,\b_s),
	}
	where the cost matrices $\C_{r,s} \in \RR^{n_r \times n_s}$ need to be specified by the user.
	The barycenter problem~\eqref{eq-wass-discr} is a special case of this problem where the considered graph $\Gg$ is ``star shaped,'' where $U$ is a single vertex connected to all the other vertices $V$ (the weight $\la_s$ associated to $\b_s$ can be absorbed in the cost matrix). 
	Introducing explicitly a coupling $\P_{r,s} \in \CouplingsD( \b_r,\b_s )$ for each edge $(r,s) \in \Gg$, and using entropy regularization, one can rewrite this problem similarly as in~\eqref{eq-bary-entropy-couplings}, and one extends Sinkhorn iterations~\eqref{eq-sinkhorn-bary} to this problem (this can also be derived by recasting this problem in the form of the generalized Sinkhorn algorithm detailed in~\S\ref{sec-generalized}). 
	\todoK{show a graphical illustration}
	This discrete variational problem~\eqref{eq-w-propag} on a graph can be generalized to define a Dirichlet energy when replacing the graph by a continuous domain~\citep{solomon2013dirichlet}. This in turn leads to the definition of measure-valued harmonic functions which finds application in image and surface  processing. We refer also to~\citet{Lavenant2017} for a theoretical analysis and to~\citet{vogt2017measure} for extensions to nonquadratic (total-variation) functionals and applications to imaging.
\end{rem1}

\section{Gradient Flows}
\label{sec-grad-flows}

Given a smooth function $\a \mapsto F(\a)$, one can use the standard gradient descent
\eql{\label{eq-explicit-euclidean}
	\itt{\a} \eqdef \it{\a} - \tau \nabla F(\it{\a}),
} 
where $\tau$ is a small enough step size. This corresponds to a so-called ``explicit'' minimization scheme and only applies for smooth functions $F$. For nonsmooth functions, one can use instead an ``implicit'' scheme, which is also called the proximal-point algorithm (see, for instance,~\citet{BauschkeCombettes11})
\eql{\label{eq-implicit-euclidean}
	\itt{\a} \eqdef \Prox_{\tau F}^{\norm{\cdot}}(\it{\a}) \eqdef \uargmin{\a} \frac{1}{2}\norm{\a-\it{\a}}^2 + \tau F(\a).
}
Note that this corresponds to the Euclidean proximal operator, already encountered in~\eqref{eq-prox-eucl}.
The update~\eqref{eq-explicit-euclidean} can be understood as iterating the explicit operator $\Id-\tau\nabla F$, while~\eqref{eq-implicit-euclidean} makes use of the implicit operator $(\Id+\tau\nabla F)^{-1}$. For convex $F$, iterations~\eqref{eq-implicit-euclidean} always converge, for any value of $\tau>0$. 

If the function $F$ is defined on the simplex of histograms $\simplex_n$, then it makes sense to use an optimal transport metric in place of the $\ell^2$ norm $\norm{\cdot}$ in~\eqref{eq-implicit-euclidean}, in order to solve
\eql{\label{eq-grad-flow-discr}
	\itt{\a} \eqdef \uargmin{\a} \WassD_p(\a,\it{\a})^p + \tau F(\a).
}

\begin{rem2}{Wasserstein gradient flows}
Equation~\eqref{eq-grad-flow-discr} can be generalized to arbitrary measures by defining the iteration
\eql{\label{eq-grad-flow-stepping-cont}
	\itt{\al} \eqdef \uargmin{\al} \Wass_p(\al,\it{\al})^p + \tau F(\al)
}
for some function $F$ defined on $\Mm_+^1(\X)$. This implicit time stepping is a useful tool to construct continuous flows, by formally taking the limit $\tau \rightarrow 0$ and introducing the time $t=\tau \ell$, so that $\it{\al}$ is intended to approximate a continuous flow $t \in \RR_+ \mapsto \al_t$. For the special case $p=2$ and $\X=\RR^d$, a formal calculus shows that $\al_t$ is expected to solve a PDE of the form
\eql{\label{eq-grad-flow-cont}
	\pd{\al_t}{t} = \diverg( \al_t \nabla( F'(\al_t) ) ),
}
where $F'(\al)$ denotes the derivative of the function $F$ in the sense that it is a continuous function $F'(\al) \in \Cc(\X)$ such that
\eq{
	F(\al+\epsilon\xi) = F(\al) + \epsilon \int_\X F'(\al) \d\xi(x) + o(\epsilon). 
}
A typical example is when using $F=-\H$, where $\H(\al)=\KL(\al|\Ll_{\RR^d})$ is the relative entropy with respect to the Lebesgue measure $\Ll_{\RR^d}$ on $\X=\RR^\dim$
\eql{\label{eq-entropy-cont}
	\H(\al) = - \int_{\RR^\dim} \density{\al}(x) (\log(\density{\al}(x)) - 1) \d x
}
(setting $\H(\al)=-\infty$ when $\al$ does not have a density), then~\eqref{eq-grad-flow-cont} shows that the gradient flow of this neg-entropy is the linear heat diffusion
\eql{\label{eq-heat}
	\pd{\al_t}{t} = \Delta \al_t,
}
where $\Delta$ is the spatial Laplacian.
The heat diffusion can therefore be interpreted either as the ``classical'' Euclidian flow (somehow performing ``vertical'' movements with respect to mass amplitudes) of the Dirichlet energy $\int_{\RR^\dim} \|\nabla \density{\al}(x)\|^2 \d x$ or, alternatively, as the entropy for the optimal transport flow (somehow a ``horizontal'' movement with respect to mass positions).
Interest in Wasserstein gradient flows was sparked by the seminal paper of Jordan, Kinderlehrer and Otto~\citep{jordan1998variational}, and these evolutions are often called ``JKO flows'' following their work. As shown in detail in the monograph by~\citet{ambrosio2006gradient}, JKO flows are a special case of gradient flows in metric spaces. We also refer to the recent survey paper~\citep{santambrogio2017euclidean}.
JKO flows can be used to study in particular nonlinear evolution equations such as the porous medium equation~\citep{otto2001geometry}, total variation flows~\citep{carlier2017total}, quantum drifts~\citep{GianazzaARMA}, or heat evolutions on manifolds~\citep{ErbarHeatManifold}. Their flexible formalism allows for constraints on the solution, such as the congestion constraint (an upper bound on the density at any point) that~\citeauthor{maury2010macroscopic} used to model crowd motion~\citep{maury2010macroscopic} (see also the review paper~\citep{SantambrogioCrowdReview}).
\end{rem2}

\begin{rem2}{Gradient flows in metric spaces}
The implicit stepping~\eqref{eq-grad-flow-stepping-cont} is a special case of a more general formalism to define gradient flows over metric spaces $(\Xx,\dist)$, where $\dist$ is a distance, as detailed in~\citep{ambrosio2006gradient}. 
For some function $F(x)$ defined for $x \in \Xx$, the implicit discrete minmization step is then defined as
\eql{\label{eq-implicit-metricflow}
	\itt{x} \in \uargmin{x \in \Xx} \dist(\it{x},x)^2 + \tau F(x). 
}
The JKO step~\eqref{eq-grad-flow-stepping-cont} corresponds to the use of the Wasserstein distance on the space of probability distributions. In some cases, one can show that~\eqref{eq-implicit-metricflow} admits a continuous flow limit $x_t$ as $\tau \rightarrow 0$ and $k\tau=t$. 
In the case that $\Xx$ also has a Euclidean structure, an explicit stepping is defined by linearizing $F$
\eql{\label{eq-explicit-metricflow}
	\itt{x} = \uargmin{x \in \Xx} \dist(\it{x},x)^2 + \tau \dotp{ \nabla F(\it{x}) }{ x }.
}
In sharp contrast to the implicit formula~\eqref{eq-implicit-metricflow} it is usually straightforward to compute but can be unstable. The implicit step is always stable, is also defined for nonsmooth $F$, but is usually not accessible in closed form.  
Figure~\ref{fig-gradflow-metric} illustrates this concept on the function $F(x) = \norm{x}^2$ on $\Xx=\RR^2$ for the distances $d(x,y)=\norm{x-y}_p=(|x_1-y_1|^p+|x_2-y_2|^p)^{\frac{1}{p}}$ for several values of $p$. 
The explicit scheme~\eqref{eq-explicit-metricflow} is unstable for $p=1$ and $p=+\infty$, and for $p=1$ it gives axis-aligned steps (coordinatewise descent). 
In contrast, the implicit scheme~\eqref{eq-implicit-metricflow} is stable. Note in particular how, for $p=1$, when the two coordinates are equal, the following step operates in the diagonal direction. 
\end{rem2}

\begin{figure}[h!]
\centering
\begin{tabular}{cc}
\includegraphics[width=.45\linewidth]{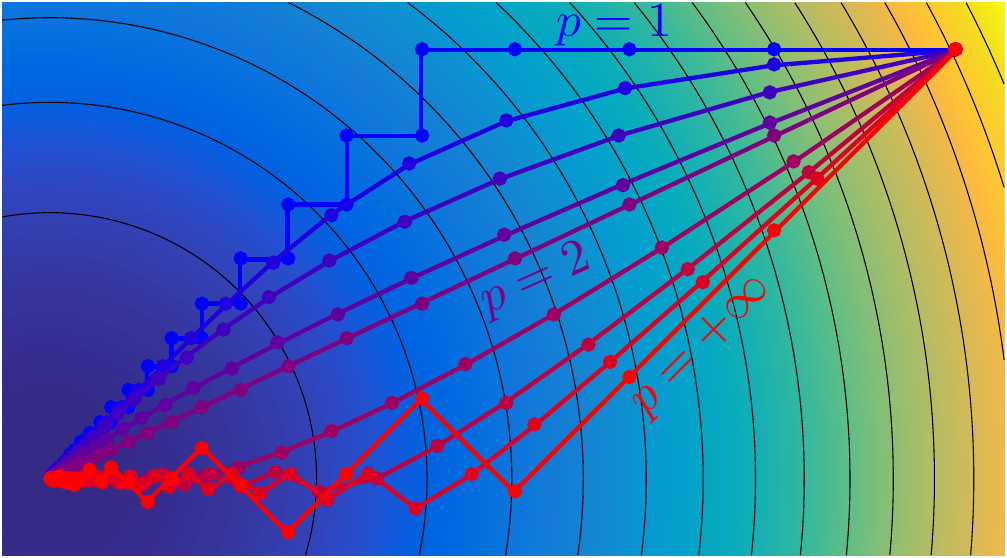}&
\includegraphics[width=.45\linewidth]{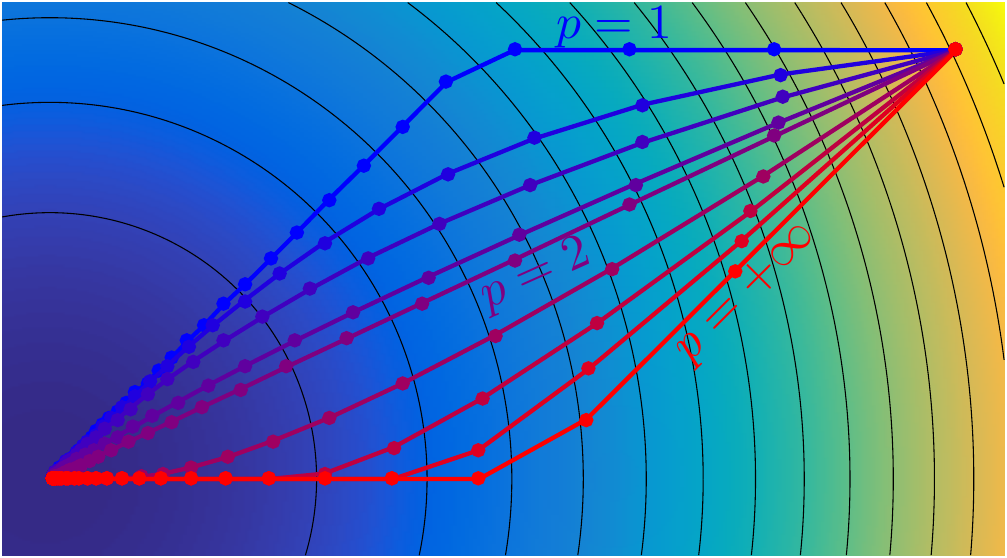}\\
Explicit & Implicit
\end{tabular}
\caption{\label{fig-gradflow-metric}
Comparison of explicit and implicit gradient flow to minimize the function $f(x) = \norm{x}^2$ on $\Xx=\RR^2$ for the distances $d(x,y)=\norm{x-y}_p$ for several values of $p$. 
}
\end{figure}

\begin{rem1}{Lagrangian discretization using particles systems}\label{rem-lagr-discretization}
The finite-dimensional problem in~\eqref{eq-grad-flow-discr} can be interpreted as the Eulerian discretization of a flow over the space of measures~\eqref{eq-grad-flow-stepping-cont}.
An alternative way to discretize the problem, using the so-called Lagrangian method using particles systems, is to parameterize instead the solution as a (discrete) empirical measure moving with time, where the locations of that measure (and not its weights) become the variables of interest. In practice, one can consider a dynamic point cloud of particles $\al_t = \frac{1}{n}\sum_{i=1}^n \de_{x_i(t)}$ indexed with time.
The initial problem~\eqref{eq-grad-flow-discr} is then replaced by a set of $n$ coupled ODE prescribing the dynamic of the points $X(t)=(x_i(t))_i \in \X^n$. 
If the energy $F$ is finite for discrete measures, then one can simply define $\Ff(X)=F( \frac{1}{n}\sum_{i=1}^n \de_{x_i} )$. Typical examples are linear functions $F(\al) = \int_\X V(x) \d\al(x)$ and quadratic interactions $F(\al) = \int_{\X^2} W(x,y) \d\al(x)\d\al(y)$, in which case one can use respectively 
\eq{
	\Ff(X) = \frac{1}{n} \sum_i V(x_i) \qandq \Ff(X) = \frac{1}{n^2} \sum_{i,j} W(x_i,x_j). 
}
For functions such as generalized entropy, which are only finite for measures having densities, one should apply a density estimator to convert the point cloud into a density, which allows us to also define function $\Ff(x)$ consistent with $F$ as $n \rightarrow +\infty$. A typical example is for the entropy $F(\mu) = \H(\al)$ defined in~\eqref{eq-entropy-cont}, for which a consistent estimator (up to a constant term) can be obtained by summing the logarithms of the distances to nearest neighbors
\eql{\label{eq-entropy-discretized}
	\Ff(X) = \frac{1}{n}\sum_{i} \log( d_X(x_i) )
	\qwhereq d_X(x) = \umin{x' \in X, x' \neq x} \norm{x-x'};
}
see~\citet{beirlant1997nonparametric} for a review of nonparametric entropy estimators.
For small enough step sizes $\tau$, assuming $\X=\RR^\dim$, the Wasserstein distance $\Wass_2$ matches the Euclidean distance on the points, \ie if $|t-t'|$ is small enough, $\Ww_2(\al_t,\al_{t'})=\norm{X(t)-X(t')}$. The gradient flow is thus equivalent to the Euclidean flow on positions $X'(t) = -\nabla \Ff(X(t))$, which is discretized for times $t_k=\tau k$ similarly to~\eqref{eq-explicit-euclidean} using explicit Euler steps
\eq{
	\itt{X} \eqdef \it{X} - \tau \nabla \Ff(\it{X}).
} 
Figure~\ref{fig-flow-lagr} shows an example of such a discretized explicit evolution for a linear plus entropy functional, resulting in a discretized version of a Fokker--Planck equation.   
Note that for this particular case of linear Fokker--Planck equation, it is possible also to resort to stochastic PDEs methods, and it can be approximated numerically by evolving a single random particle with a Gaussian drift. The convergence of these schemes (so-called Langevin Monte Carlo) to the stationary distribution can in turn be quantified in terms of Wasserstein distance; see, for instance,~\citep{dalalyan2017user}. \todoK{Detailed this much more.}
If the function $\Ff$ is not smooth, one should discretize similarly to~\eqref{eq-implicit-euclidean} using implicit Euler steps, \ie consider 
\eq{
	\itt{X} \eqdef \Prox_{\tau \Ff}^{\norm{\cdot}}(\it{X}) \eqdef \uargmin{Z \in \X^n} \frac{1}{2}\norm{Z-\it{X}}^2 + \tau \Ff(Z).
}
In the simplest case of a linear function $F(\al) = \int_X V(x) \d\al(x)$, the flow operates independently over each particule $x_i(t)$ and corresponds to a usual Euclidean flow for the function $V$, $x_i'(t)=-\nabla V(x_i(t))$ (and is an advection PDEs of the density along the integral curves of the flow).
\end{rem1}

\newcommand{\FigLagrFlow}[1]{\includegraphics[width=.19\linewidth]{lagrangian-flows/lagrange-flow-#1}}

\begin{figure}[h!]
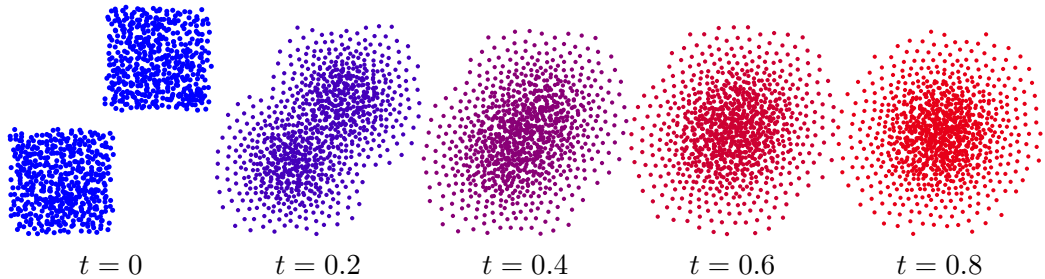

\centering
\begin{tabular}{@{}c@{}c@{}c@{}c@{}c@{}}
\FigLagrFlow{1} & 
\FigLagrFlow{3} & 
\FigLagrFlow{5} & 
\FigLagrFlow{7} & 
\FigLagrFlow{10} \\
$t=0$ & $t=0.2$  & $t=0.4$ & $t=0.6$ & $t=0.8$
\end{tabular}
\caption{\label{fig-flow-lagr}
Example of gradient flow evolutions using a Lagrangian discretization, for the function $F(\al) = \int V \d\al - H(\al)$, for $V(x)=\norm{x}^2$.
The entropy is discretized using~\eqref{eq-entropy-discretized}.
The limiting stationary distribution is a Gaussian. 
}
\end{figure}

\begin{rem2}{Geodesic convexity}
An important concept related to gradient flows is the convexity of the functional $F$ with respect to the Wasserstein-2 geometry, \ie the convexity of $F$ along Wasserstein geodesics (\ie displacement interpolations as shown in Remark~\ref{rem-displacement}). The Wasserstein gradient flow (with a continuous time) for such a function exists, is unique, and is the limit of the discrete stepping~\eqref{eq-grad-flow-stepping-cont} as $\tau \rightarrow 0$. It converges to a fixed stationary distribution as $t \rightarrow +\infty$.    
The entropy is a typical example of geodesically convex function, and so are linear functions of the form $F(\al) = \int_\X V(x) \d\al(x)$ and quadratic interaction functions $F(\al) = \int_{\X \times \X} W(x,y) \d\al(x) \d\al(y)$ for convex functions $V : \X \rightarrow \RR$, $W : \X \times \X \rightarrow \RR$. 
Note that while linear functions are convex in the classical sense, quadratic interaction functions might fail to be.
A typical example is $W(x,y)=\norm{x-y}^2$, which is a negative semi-definite kernel (see Definition~\ref{def-negativedefinitekernel}) and thus corresponds to $F(\al)$ being a concave function in the usual sense (while it is geodesically convex).  
An important result of~\citet{mccann1997convexity} is that generalized ``entropy'' functions of the form $F(\al) = \int_{\RR^\dim} \phi(\density{\al}(x)) \d x$ on $\X=\RR^\dim$ are geodesically convex if $\phi$ is convex, with $\phi(0)=0$, $\phi(t)/t \rightarrow +\infty$ as $t \rightarrow +\infty$ and such that  $s \mapsto s^{\dim} \phi(s^{-\dim})$ is convex decaying.
\end{rem2}

There is important literature on the numerical resolution of the resulting discretized flow, and we give only a few representative publications. For 1-D problems, very precise solvers have been developed because OT is a quadratic functional in the inverse cumulative function (see Remark~\ref{rem-1d-ot-generic}):~\citet{kinderlehrer1999approximation,blanchet2008convergence,agueh2013one,Matthes1D,blanchet2012optimal}. In higher dimensions, it can be tackled using finite elements and finite volume schemes:~\citet{CarrilloFiniteVolume,burger2010mixed}. Alternative solvers are obtained using Lagrangian schemes (\ie particles systems):~\citet{carrillo2009numerical,JDB-JKO,Westdickenberg2010}.
Another direction is to look for discrete flows (typically on discrete grids or graphs) which maintain some properties of their continuous counterparts; see~\citet{MielkeCVPDE,ErbarDCDS,ChowHuangLiZhou2012,Maas2011}.

An approximate approach to solve the Eulerian discretized problem~\eqref{eq-explicit-euclidean} relying on entropic regularization was initially proposed in~\citet{2015-Peyre-siims}, refined in~\citet{2016-chizat-sinkhorn} and theoretically analyzed in~\citet{2017-carlier-SIMA}.
With an entropic regularization, Problem~\eqref{eq-grad-flow-discr} has the form~\eqref{eq-generalized-ot-regul} when setting $G = \iota_{\it{\a}}$ and replacing $F$ by $\tau F$. One can thus use the iterations~\eqref{eq-gen-sinkh} to approximate $\itt{\a}$ as proposed initially in~\citet{2015-Peyre-siims}. The convergence of this scheme as $\epsilon \rightarrow 0$ is proved in~\citet{2017-carlier-SIMA}.
Figure~\ref{fig-jko} shows an example of evolution computed with this method.
An interesting application of gradient flows to machine learning is to learn the underlying function $F$ that best models some dynamical model of density. This learning can be achieved by solving a smooth nonconvex optimization using entropic regularized transport and automatic differentiation (see Remark~\ref{rem-auto-diff}); see~\citet{hashimoto2016learning}. 

Analyzing the convergence of gradient flows discretized in both time and space is difficult in general. Due to the polyhedral nature of the linear program defining the distance, using too-small step sizes leads to a ``locking'' phenomena (the distribution is stuck and does not evolve, so that the step size should be not too small, as discussed in~\citep{maury201713}). We refer to~\citep{Matthes1D,matthes2017convergent} for a convergence analysis of a discretization method for gradient flows in one dimension.

\begin{figure}[h!]
\centering
\begin{tabular}{@{}c@{\hspace{1mm}}c@{\hspace{1mm}}c@{\hspace{1mm}}c@{\hspace{1mm}}c@{}}
\includegraphics[width=.19\linewidth]{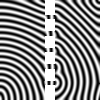}&
\includegraphics[width=.19\linewidth]{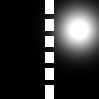}&
\includegraphics[width=.19\linewidth]{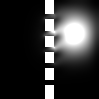}&
\includegraphics[width=.19\linewidth]{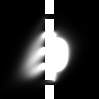}&
\includegraphics[width=.19\linewidth]{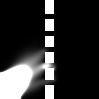}\\
\includegraphics[width=.19\linewidth]{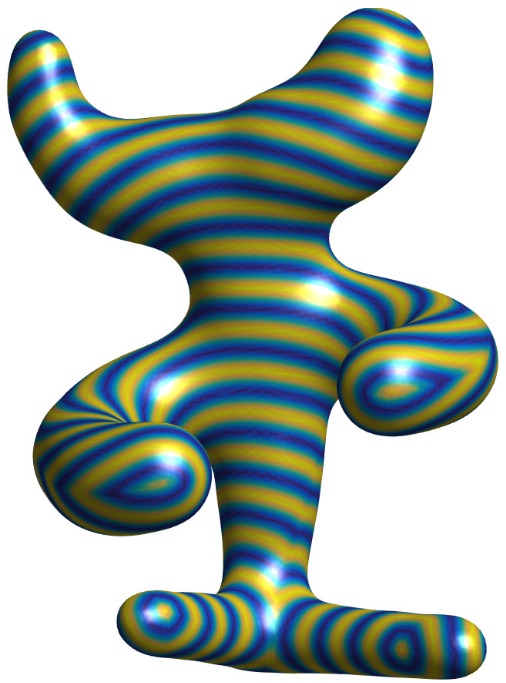}&
\includegraphics[width=.19\linewidth]{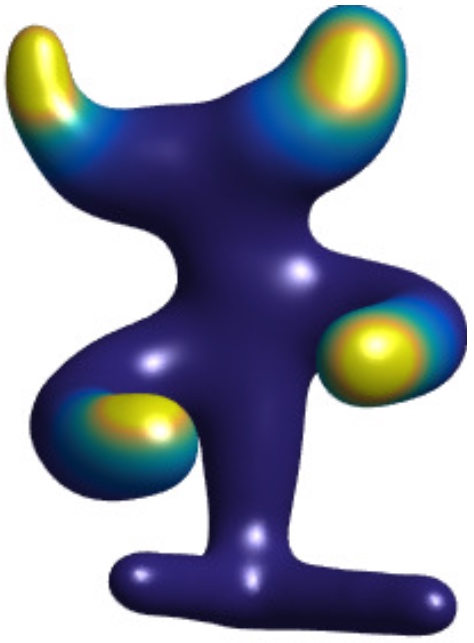}&
\includegraphics[width=.19\linewidth]{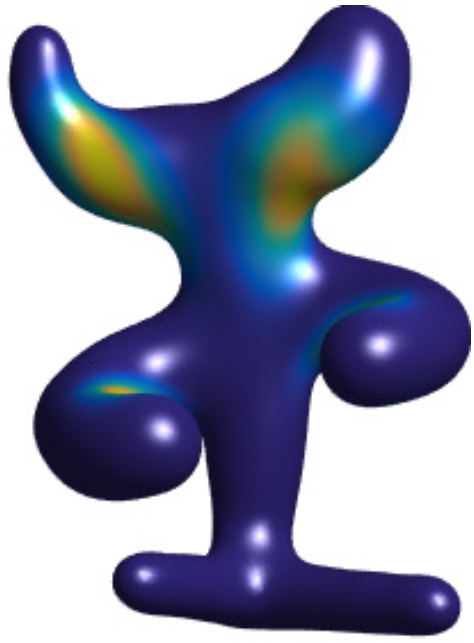}&
\includegraphics[width=.19\linewidth]{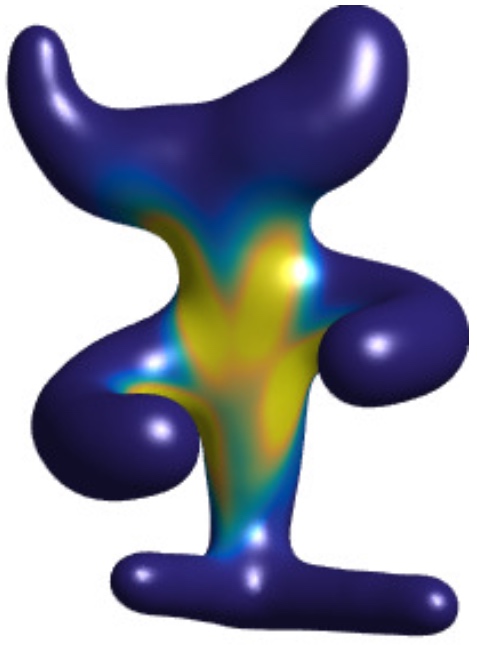}&
\includegraphics[width=.19\linewidth]{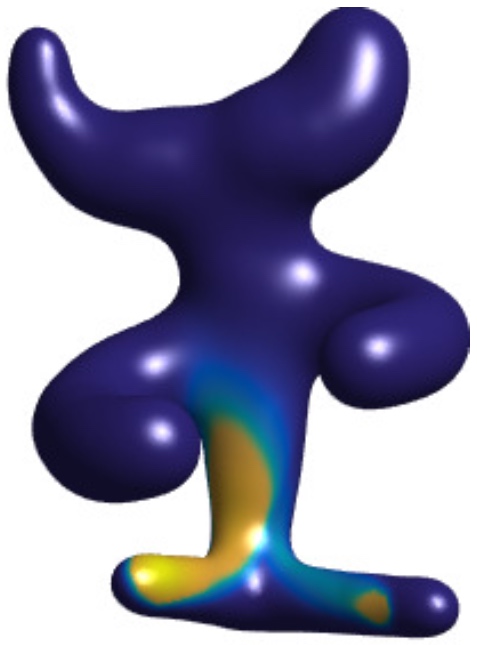}\\
$\cos(w)$ & $t=0$ & $t=5$ & $t=10$ & $t=20$
\end{tabular}
\caption{\label{fig-jko}
Examples of gradient flows evolutions, with drift $V$ and congestion terms (from~\citet{2015-Peyre-siims}), so that $F(\al) = \int_\X V(x) \d \al(x) + \iota_{\leq \kappa}(\density{\al})$.
}
\end{figure}

It is also possible to compute gradient flows for unbalanced optimal transport distances as detailed in~\S\ref{sec-unbalanced}. This results in evolutions allowing mass creation or destruction, which is crucial to model many physical, biological or chemical phenomena. 
An example of unbalanced gradient flow is the celebrated Hele-Shaw model for cell growth~\citep{PerthameTumor}, which is studied theoretically in~\citep{gallouet2017jko,marino2017tumor}. Such an unbalanced gradient flow also can be approximated using the generalized Sinkhorn algorithm~\citep{2016-chizat-sinkhorn}.


\section{Minimum Kantorovich Estimators}
\label{sec-mke}

Given some discrete samples $(x_i)_{i=1}^n \subset \Xx$ from some unknown distribution, the goal is to fit a parametric model $\th \mapsto \al_\th \in \Mm(\Xx)$ to the observed empirical input measure $\be$
\eql{\label{eq-density-fitting}
	\umin{\th \in \Theta} \Ll(\al_\th,\be)
	\qwhereq
	\be = \frac{1}{n} \sum_i \de_{x_i}, 
}
where $\Ll$ is some ``loss'' function between a discrete and a ``continuous'' (arbitrary) distribution (see Figure~\ref{fig-density-fitting}). 

In the case where $\al_\th$ as a density $\density{\th} \eqdef \density{\al_\th}$ with respect to the Lebesgue measure (or any other fixed reference measure), the maximum likelihood estimator (MLE) is obtained by solving
\eq{
	\umin{\th} \Ll_{\text{MLE}}(\al_\th,\be) \eqdef -\sum_i \log(\density{\th}(x_i)). 
}
This corresponds to using an empirical counterpart of a Kullback--Leibler loss since, assuming the $x_i$ are i.i.d. samples of some $\bar\be$, then 
\eq{
	\Ll_{\text{MLE}}(\al,\be) \overset{n \rightarrow +\infty}{\longrightarrow} \KL(\al|\bar\be).
}

\begin{figure}[h!]
\centering
\includegraphics[width=.4\linewidth]{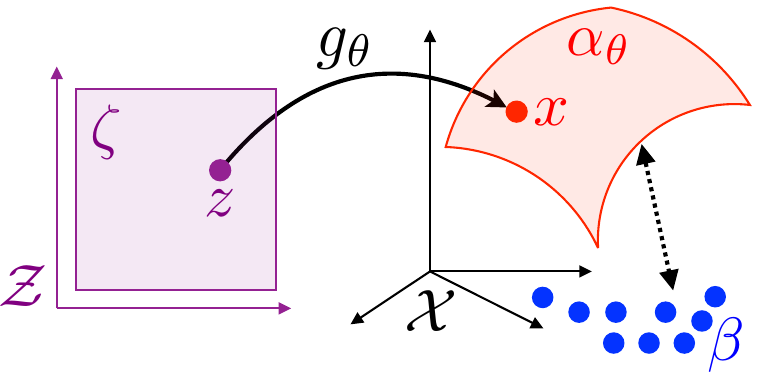}
\caption{\label{fig-density-fitting}
Schematic display of the density fitting problem~\ref{eq-density-fitting}.
}
\end{figure}

\newcommand{\fPF}{h}

This MLE approach is known to lead to optimal estimation procedures in many cases (see, for instance,~\citet{owen2001empirical}). However, it fails to work when estimating singular distributions, typically when the $\al_\th$ does not have a density (so that $\Ll_{\text{MLE}}(\al_\th,\be) = +\infty$) or when $(x_i)_i$ are samples from some singular $\bar\be$ (so that the $\al_\th$ should share the same support as $\be$ for $\KL(\al_\th|\bar\be)$ to be finite, but this support is usually unknown). Another issue is that in several cases of practical interest, the density $\density{\th}$ is inaccessible (or too hard to compute).

A typical setup where both problems (singular and unknown densities) occur is for so-called generative models, where the parametric measure is written as a push-forward of a fixed reference measure $\zeta \in \Mm(\Zz)$
\eq{
	\al_\th = \fPF_{\th,\sharp} \zeta \qwhereq \fPF_\th : \Zz \rightarrow \Xx,
}
where the push-forward operator is introduced in Definition~\ref{defn-pushfwd}. The space $\Zz$ is usually low-dimensional, so that the support of $\al_\th$ is localized along a low-dimensional ``manifold'' and the resulting density is highly singular (it does not have a density with respect to Lebesgue measure).
Furthermore, computing this density is usually intractable, while generating i.i.d. samples from $\al_\th$ is achieved by computing $x_i=\fPF_\th(z_i)$, where $(z_i)_i$ are i.i.d. samples from $\zeta$.

In order to cope with such a difficult scenario, one has to use weak metrics in place of the MLE functional $\Ll_{\text{MLE}}$, which needs to be written in dual form as 
\eql{\label{eq-dual-loss}
	\Ll(\al,\be) \eqdef 
	\umax{(\f,\g) \in \Cc(\X)^2} 
	\enscond{ \int_{\X} \f(x) \d\al(x) + \int_{\X} \g(x) \d\be(x) }{ (\f,\g) \in \Potentials}.
}
Dual norms shown in~\S\ref{sec-dual-norms} correspond to imposing 
\eq{
	\Potentials = \enscond{(\f,-\f)}{\f \in B}, 
}
while optimal transport~\eqref{eq-dual-generic} sets $\Potentials = \Potentials(\c)$ as defined in~\eqref{eq-dfn-pot-dual}. 

For a fixed $\th$, evaluating the energy to be minimized in~\eqref{eq-density-fitting} using such a loss function corresponds to solving a semidiscrete optimal transport, which is the focus of Chapter~\ref{c-algo-semidiscr}. Minimizing the energy with respect to $\th$ is much more involved and is typically highly nonconvex.

Denoting $\f_\th$ a solution to~\eqref{eq-dual-loss} when evaluating $\Ee(\th) = \Ll(\al_\th,\be)$, a subgradient is obtained using the formula
\eql{\label{eq-grad-wass-loss-dual}
	\nabla \Ee(\th) = \int_\Xx [\partial \fPF_\th(x)]^\top \nabla \f_\th(x) \d\al_\th(x), 
}
where $\partial \fPF_\th(x) \in \RR^{\text{dim}(\Theta) \times \dim}$ is the differential (with respect to $\th$) of $\th \in \RR^{\text{dim}(\Theta)} \mapsto \fPF_\th(x)$, while $\nabla \f_\th(x)$ is the gradient (with respect to $x$) of $\f_\th$. 
This formula is hard to use numerically, first because it requires first computing a continuous function $\f_\th$, which is a solution to a semi-discrete problem. As shown in~\S\ref{sec-entropy-ot-mmd}, for OT loss, this can be achieved using stochastic optimization, but this is hardly applicable in high dimension. Another option is to impose a parametric form for this potential, for instance expansion in an RKHS~(\citet{genevay2016stochastic}) or a deep-network approximation~(\citep{WassersteinGAN}). This, however, leads to important approximation errors that are not yet analyzed theoretically.
A last issue is that it is unstable numerically because it requires the computation of the gradient $\nabla \f_\th$ of the dual potential $\f_\th$.

For the OT loss, an alternative gradient formula is obtained when one rather computes a primal optimal coupling  for the following equivalent problem:
\eql{\label{eq-ot-couplings-fitting}
	\MK_\c(\al_\th,\be) = \umin{\ga \in \Mm(\Zz \times \Xx)} 
		\enscond{
			\int_{\Zz \times \Xx} c(\fPF_\th(z),x) \d\ga(z,x)
		}{
			\gamma \in \Couplings(\zeta,\be)
		}.
}
Note that in the semidiscrete case considered here, the objective to be minimized can be actually decomposed as 
\eql{\label{eq-ot-couplings-fitting-bis}
	\umin{(\ga_i)_{i=1}^n} \sum_{i=1}^n \int_\Zz c(\fPF_\th(z),x_i) \d \ga_i(z)
	\qwhereq
	\sum_{i=1}^n \ga_i = \zeta, \quad
	\int_\Zz \d\ga_i(z)  = \frac{1}{n}, 
}
where each $\ga_i \in \Mm_+^1(\Zz)$. 
Once an optimal $(\gamma_{\th,i})_i$ solving~\eqref{eq-ot-couplings-fitting-bis} is obtained, the gradient of $\Ee(\th)$ is computed as
\eq{
	\nabla \Ee(\th) = \sum_{i=1}^n \int_\Zz [\partial \fPF_\th(z)]^\top \nabla_1 c(\fPF_\th(z),x_i) \d \ga_i(z), 
}
where $\nabla_1 c(x,y) \in \RR^\dim$ is the gradient of $x \mapsto c(x,y)$. Note that as opposed to~\eqref{eq-grad-wass-loss-dual}, this formula does not involve computing the gradient of the potentials being solutions of the dual OT problem.

The class of estimators obtained using $\Ll=\MK_\c$, often called ``minimum Kantorovich estimators,'' was initially introduced in~\citep{bassetti2006minimum}; see also~\citep{CanasRosasco}. It has been used in the context of generative models by~\citep{CuturiBoltzman} to train restricted Boltzmann machines and in~\citep{bernton2017inference} in conjunction with approximate Bayesian computations.  Approximations of these computations using Deep Network are used to train deep generative models for both GAN~\citep{WassersteinGAN} and VAE~\citep{Bousquet2017}; see also~\citep{2017-Genevay-AutoDiff,2017-Genevay-gan-vae,salimans2018improving}. Note that the use of Sinkhorn divergences for parametric model fitting is used routinely for shape matching and registration, see~\citep{gold1998new,chui2000new,myronenko2010point,2017-feydy-miccai}.

\begin{rem1}{Metric learning and transfer learning}
	Let us insist on the fact that, for applications in machine learning, the success of OT-related methods very much depends on the choice of an adapted cost $c(x,y)$ which captures the geometry of the data. 
	While it is possible to embed many kinds of data in Euclidean spaces (see, for instance,~\citep{mikolov2013efficient} for words embedding), in many cases, some sort of adaptation or optimization of the metric is needed. 
	Metric learning for supervised tasks is a classical problem (see, for instance,~\citep{MAL-019,weinberger2009distance}) and it has been extended to the learning of the ground metric $c(x,y)$ when some OT distance is used in a learning pipeline~\citep{CuturiGroundMetric2014} (see also~\citealt{ZenICPR14,WangECCV12OLD,huang2016supervised}).
	Let us also mention the related inverse problem of learning the cost matrix from the observations of an optimal coupling $\P$, which can be regularized using a low-rank prior~\citep{dupuy2016estimating}.
	Related problems are transfer learning~\citep{pan2010survey} and domain adaptation~\citep{glorot2011domain}, where one wants to transfer some trained machine learning pipeline to adapt it to some new dataset. This problem can be modeled and solved using OT techniques; see~\citep{courty2017optimal,courty2017joint}.
\end{rem1}


\chapter{Extensions of Optimal Transport}
\label{c-extensions}

This chapter details several variational problems that are related to (and share the same structure of) the Kantorovich formulation of optimal transport. The goal is to extend optimal transport to more general settings: several input histograms and measures, unnormalized ones, more general classes of measures, and optimal transport between measures that focuses on local regularities (points nearby in the source measure should be mapped onto points nearby in the target measure) rather than a total transport cost, including cases where these two measures live in different metric spaces.

\section{Multimarginal Problems}
\label{sec-multimarginal}

Instead of coupling two input histograms using the Kantorovich formulation~\eqref{eq-mk-discr}, one can couple $S$ histograms $(\a_s)_{s=1}^S$, where $\a_s \in \simplex_{n_s}$, by solving the following multimarginal problem:
\eql{\label{eq-multimarginal-discr}
	\umin{\P \in \CouplingsD(\a_s)_s}
		\dotp{\C}{\P} \eqdef \sum_s \sum_{i_s=1}^{n_s} \C_{i_1,\ldots,i_S} \P_{i_1,\ldots,i_S},
}
where the set of valid couplings is
\eq{
	\CouplingsD(\a_s)_s = \enscond{\P \in \RR^{n_1 \times \ldots \times n_S}}{
		\foralls s, \foralls i_s, \sum_{\ell \neq s} \sum_{i_\ell=1}^{n_\ell}  \P_{i_1,\ldots,i_S} = \a_{s,i_s}
	}.
} 
The entropic regularization scheme~\eqref{eq-regularized-discr} naturally extends to this setting
\eq{
	\umin{\P \in \CouplingsD(\a_s)_s}
		\dotp{\P}{\C} - \varepsilon \HD(\P), 
}
and one can then apply Sinkhorn's algorithm to compute the optimal $\P$ in scaling form, where each entry indexed by a multi-index vector $i=(i_1,\ldots,i_S)$
\eq{	
		\P_i = \K_i \prod_{s=1}^S \uD_{s,i_s}
		\qwhereq
		\K \eqdef e^{-\frac{\C}{\varepsilon}}, 
}
where $\uD_s \in \RR_+^{n_s}$ are (unknown) scaling vectors, which are iteratively updated, by cycling repeatedly through $s=1,\ldots,S$, 
\eql{\label{eq-sinkh-multimarg}
	\uD_{s,i_s} \leftarrow 
	\frac{ \a_{s,i_s} }{
		\sum_{\ell \neq s} \sum_{i_\ell=1}^{n_\ell}  \K_i \prod_{r \neq s} \uD_{\ell,i_r}
	}
}.

\begin{rem2}{General measures}
The discrete multimarginal problem~\eqref{eq-multimarginal-discr} is generalized to measures $(\al_s)_s$ on spaces $(\X_1,\ldots,\X_S)$ by computing a coupling measure 
\eql{\label{eq-multi-marginal-generic}
	\umin{\pi \in \Couplings(\al_s)_s} \int_{\X_1 \times \ldots \times \X_S} c(x_1,\ldots,x_S) \d\pi(x_1,\ldots,x_S),
}
where the set of couplings is 
\eq{
	\Couplings(\al_s)_s \eqdef \enscond{ \pi \in \Mm_+^1(\X_1 \times \ldots \times \X_S) }{
		\foralls s=1,\ldots,S, P_{s,\sharp} \pi = \al_s,
	}
}
where $P_s : \X_1 \times \ldots \times \X_S \rightarrow \X_s$ is the projection on the $s$th component, $P_s(x_1,\ldots,x_S)=x_s$; see, for instance,~\citep{GangboSciech}.
We refer to~\citep{PassMultiReview,PassMultiMarginalStructure} for a review of the main properties of the multimarginal OT problem. 
A typical application of multimarginal OT is to compute approximation of solutions to quantum chemistry problems, and in particular, in density functional theory~\citep{CotarDFT,GorSeiVig,BuDePGor}. This problem is obtained when considering the singular Coulomb interaction cost
\eq{
	c(x_1,\ldots,x_S) = \sum_{i \neq j} \frac{1}{\norm{x_i-x_j}}. 
}
\end{rem2}

\begin{rem2}{Multimarginal formulation of the barycenter}\label{eq-multimarg-bary}
	It is pos\-si\-ble to recast the linear program optimization~\eqref{eq-barycenter-generic} as an optimization over a single coupling over $\X^{S+1}$ where the last marginal is the barycenter and the other ones are the input measure $(\al_s)_{s=1}^S$
	\eql{\label{eq-bary-multi-full}
		\umin{ \bar \pi \in \Mm_+^1(X^{S+1}) } 
			\int_{\X^{S+1}} \sum_{s=1}^S \la_s c(x,x_s) \d\bar\pi(x_1,\ldots,x_s,x)
	}
	\eq{
		\text{subject to} \quad
				\foralls s=1,\ldots,S, \quad P_{s,\sharp}\bar\pi=\al_s.
	}
	This stems from the ``gluing lemma,'' which states that given couplings $(\pi_s)_{s=1}^S$ where $\pi_s \in \Couplings(\al_s,\al)$, one can construct a higher-dimensional coupling $\bar \pi \in \Mm_+^1(X^{S+1})$ with marginals $\pi_s$, \ie such that $Q_{s\sharp}\bar\pi=\pi_s$, where $Q_s(x_1,\ldots,x_S,x) \eqdef (x_s,x) \in \X^2$.
	\todoK{also relate this to celebrated result in graphical model, see book of Wainright/Jordan}
	By explicitly minimizing in~\eqref{eq-bary-multi-full} with respect to the last marginal (associated to $x \in \X$), one obtains that solutions $\al$ of the barycenter problem~\eqref{eq-barycenter-generic} can be computed as $\al = A_{\la,\sharp} \pi$, where $A_\la$ is the ``barycentric map'' defined as
	\eq{
		A_\la : (x_1,\ldots,x_S) \in \X^S \mapsto 
		\uargmin{x \in \X} \sum_s \la_s c(x,x_s)
	}
	(assuming this map is single-valued), where $\pi$ is any solution of the multimarginal problem~\eqref{eq-multi-marginal-generic} with cost
	\eql{\label{eq-cost-bary-multi}
		c(x_1,\ldots,x_S) = \sum_\ell \la_\ell c(x_\ell, A_\la(x_1,\ldots,x_S)).
	}
	For instance, for $c(x,y)=\norm{x-y}^2$, one has, removing the constant squared terms, 
	\eq{
		c(x_1,\ldots,x_S) = -\sum_{r \leq s} \la_r \la_s \dotp{x_r}{x_s},
	} 
	which is a problem studied in~\citet{GangboSciech}.
	We refer to~\citet{Carlier_wasserstein_barycenter} for more details.
	This formula shows that if all the input measures are discrete $\be_s = \sum_{i_s=1}^{n_s} \a_{s,i_s} \de_{x_{s,i_s}}$, 
	then the barycenter $\al$ is also discrete and is obtained using the formula
	\eq{
		\al = \sum_{(i_1,\ldots,i_S)} \P_{(i_1,\ldots,i_S)}
			\de_{ A_\la(x_{i_1},\ldots,x_{i_S}) }, 
	}
	where $\P$ is an optimal solution of~\eqref{eq-multimarginal-discr} with cost matrix $\C_{i_1,\ldots,i_S} = c(x_{i_1},\ldots,x_{i_S})$
	as defined in~\eqref{eq-cost-bary-multi}. Since $\P$ is a nonnegative tensor of $\prod_s n_s$ dimensions obtained as the solution of a linear program with $\sum_s n_s-S+1$ equality constraints, an optimal solution $\P$ with up to $\sum_s n_s-S+1$ nonzero values can be obtained. A barycenter $\al$ with a support of up to $\sum_s n_s-S+1$ points can therefore be obtained. This result and other considerations in the discrete case can be found in~\citet{anderes2016discrete}.
\end{rem2}

\begin{rem2}{Relaxation of Euler equations}
A convex relaxation of Euler equations of incompressible fluid dynamics has been proposed by~\citeauthor{BrenierGeneralized} (\citeyear{BrenierEulerAMS}, \citeyear{BrenierEulerARMA}, \citeyear{BrenierEulerCPAM}, \citeyear{BrenierGeneralized}) and \citep{AmbrosioFigalliEuler}.
Similarly to the setting exposed in~\S\ref{sec-entropic-dynamic}, it corresponds to the problem of finding a probability distribution $\bar\pi \in \Mm_+^1(\bar\Xx)$ over the set $\bar\Xx$ of all paths $\ga : [0,1] \rightarrow \Xx$, which describes the movement of particules in the fluid.
This is a relaxed version of the initial partial differential equation model because, as in the Kantorovich formulation of OT, mass can be split. The evolution with time does not necessarily define a diffemorphism of the underlying space $\X$. 
The dynamic of the fluid is obtained by minimizing as in~\eqref{eq-ot-pathsspace} the energy $\int_0^1 \norm{\ga'(t)}^2 \d t$ of each path. 
The difference with OT over the space of paths is the additional incompressibilty of the fluid.
This incompressibilty is taken care of by imposing that the density of particules should be uniform at any time $t \in [0,1]$ (and not just imposed at initial and final times $t \in \{0,1\}$ as in classical OT). Assuming $\X$ is compact and denoting $\rho_{\Xx}$ the uniform distribution on $\Xx$, this reads $\bar P_{t,\sharp} \bar\pi = \rho_{\Xx}$ where $\bar P_t : \ga \in \bar\X \rightarrow \ga(t) \in \X$.
One can discretize this problem by replacing a continuous path $(\ga(t))_{t \in [0,1]}$ by a sequence of $S$ points $(x_{i_1}, x_{i_2},\ldots,x_{i_S})$ on a grid $(x_k)_{k=1}^n \subset \X$, and $\bar\Pi$ is represented by an $S$-way coupling $\P \in \RR^{n^S} \in \Couplings(\a_s)_s$, where the marginals are uniform $\a_s=n^{-1} \ones_n$. 
The cost of the corresponding multimarginal problem is then
\eql{\label{eq-separable-euler}
	\C_{i_1,\ldots,i_S} = \sum_{s=1}^{S-1} \norm{x_{i_s}-x_{i_{s+1}}}^2 + R \norm{x_{\sigma(i_1)}-x_{i_{S}}}^2.
}
Here $R$ is a large enough penalization constant, which is here to enforce the movement of particules between initial and final times, which is prescribed by a permutation $\sigma : \range{n} \rightarrow \range{n}$.
This resulting multimarginal problem is implemented efficiently in conjunction with Sinkhorn iterations~\eqref{eq-sinkh-multimarg} using the special structure of the cost, as detailed in~\citep{2015-benamou-cisc}.
Indeed, in place of the $O(n^S)$ cost required to compute the denominator appearing in~\eqref{eq-sinkh-multimarg}, one can decompose it as a succession of $S$ matrix-vector multiplications, hence with a low cost $Sn^2$.
Note that other solvers have been proposed, for instance, using the semidiscrete framework shown in~\S\ref{s-semidiscrete}; see~\citep{deGoes2015,gallouet2017lagrangian}.
\end{rem2}

\section{Unbalanced Optimal Transport}
\label{sec-unbalanced}

A major bottleneck of optimal transport in its usual form is that it requires the two input measures $(\al,\be)$ to have the same total mass. While many workarounds have been proposed (including renormalizing the input measure, or using dual norms such as detailed in \S~\ref{sec-dual-norms}), it is only recently that satisfying unifying theories have been developed. We only sketch here a simple but important particular case. 

Following~\citet{LieroMielkeSavareLong}, to account for arbitrary positive histograms $(\a,\b) \in \RR_+^n \times \RR_+^m$, the initial Kantorovich formulation~\eqref{eq-mk-discr} is ``relaxed'' by only penalizing marginal deviation using some divergence $\DivergmD_\phi$, defined in~\eqref{eq-discr-diverg}. This equivalently corresponds to minimizing an OT distance between approximate measures
\begin{align}\label{eq-unbalanced-pbm}
	\MKD_{\C}^\tau(\a,\b) &= \umin{\tilde\a,\tilde\b} \MKD_{\C}(\a,\b) + \tau_1 \DivergmD_\phi(\a,\tilde \a) + \tau_2 \DivergmD_\phi(\b,\tilde\b) \\
		&= \umin{\P \in \RR_+^{n \times m}} \dotp{\C}{\P} + \tau_1\DivergmD_\phi(\P\ones_m|\a) + \tau_2\DivergmD_\phi(\P^\top\ones_m|\b), 
\end{align}
where $(\tau_1,\tau_2)$ controls how much mass variations are penalized as opposed to transportation of the mass. In the limit $\tau_1=\tau_2 \rightarrow +\infty$, assuming $\sum_i \a_i=\sum_j \b_j$ (the ``balanced'' case), one recovers the original optimal transport formulation with hard marginal constraint~\eqref{eq-mk-discr}. 

This formalism recovers many different previous works, for instance introducing for $\DivergmD_\phi$ an $\ell^2$ norm~\citep{benamou2003numerical} or an $\ell^1$ norm as in partial transport~\citep{FigalliPartial,CaffarelliMcCannPartial}. 
A case of particular importance is when using $\DivergmD_\phi=\KLD$ the Kulback--Leibler divergence, as detailed in Remark~\ref{rem-wfr-static}.
For this cost, in the limit $\tau=\tau_1=\tau_2 \rightarrow 0$, one obtains the so-called squared Hellinger distance (see also Example~\ref{exmp-hellinger})
\eq{
	\MKD_{\C}^\tau(\a,\b) \overset{\tau \rightarrow 0}{\longrightarrow}
	\Hellinger^2(\a,\b) = \sum_i ( \sqrt{\a_i}-\sqrt{\b_i} )^2.
}

Sinkhorn's iterations~\eqref{eq-sinkhorn} can be adapted to this problem by making use of the generalized algorithm detailed in~\S\ref{sec-generalized}. The solution has the form~\eqref{eq-scaling-form} and the scalings are updated as
\eql{\label{eq-iterate-gen-sinkh}
	\uD \leftarrow \pa{ \frac{\a}{\K \vD} }^{\frac{\tau_1}{\tau_1+\varepsilon}}
	\qandq
	\vD \leftarrow \pa{ \frac{\b}{\transp{\K}\uD} }^{\frac{\tau_2}{\tau_2+\varepsilon}}.
}

\begin{rem2}{Generic measure}
For $(\al,\be)$ two arbitrary measures, the unbalanced version (also called ``log-entropic'') of~\eqref{eq-mk-generic} reads
\begin{align*}
	\MK_\c^\tau(\al,\be) \eqdef 
	&\umin{\pi \in \Mm_+(\X \times \Y)}
		\int_{\X \times \Y} \c(x,y) \d\pi(x,y) \\
		&+ \tau \Divergm_\phi(P_{1,\sharp}\pi|\al)
		+ \tau \Divergm_\phi(P_{2,\sharp}\pi|\be),
\end{align*}
where divergences $\Divergm_\phi$ between measures are defined in~\eqref{eq-phi-div}.
In the special case $c(x,y) = \norm{x-y}^2$, $\Divergm_\phi=\KL$, $\MK_\c^\tau(\al,\be)^{1/2}$ is the Gaussian--Hellinger distance~\citep{LieroMielkeSavareLong}, and it is shown to be a distance on $\Mm_+^1(\RR^d)$. 
\end{rem2}

\begin{rem2}{Wasserstein--Fisher--Rao}\label{rem-wfr-static}
For the particular choice of cost
\eq{
	c(x,y) = -\log \cos(\min(\dist(x,y)/\kappa,\pi/2)),
}
where $\kappa$ is some cutoff distance, and using $\Divergm_\phi=\KL$, then
\eq{
	\WFR(\al,\be) \eqdef \MK_\c^\tau(\al,\be)^{\frac{1}{2}}
} 
is the so-called Wasserstein--Fisher--Rao or Hellinger--Kantorovich distance. 
In the special case $\X=\RR^d$, this static (Kantorovich-like) formulation matches its dynamical counterparts~\eqref{eq-wfr-dynamic}, as proved independently by~\citet{LieroMielkeSavareLong,2015-chizat-unbalanced}. This dynamical formulation is detailed in~\S\ref{dynamic-unbalanced}.
%
\end{rem2}

The barycenter problem~\eqref{eq-barycenter-generic} can be generalized to handle an unbalanced setting by replacing $\MK_{\c}$ with $\MK_\c^\tau$. 
Figure~\ref{fig-interpolation-static} shows the resulting interpolation, providing a good illustration of the usefulness of the relaxation parameter $\tau$. 
The input measures are mixtures of two Gaussians with unequal mass. Classical OT requires the leftmost bump to be split in two and gives a nonregular interpolation. In sharp contrast, unbalanced OT allows the mass to vary during interpolation, so that the bumps are not split and local modes of the distributions are smoothly matched.
Using finite values for $\tau$ (recall that OT is equivalent to $\tau=\infty)$ is thus important to prevent irregular interpolations that arise because of mass splitting, which happens because of a ``hard'' mass conservation constraint.
The resulting optimization problem can be tackled numerically using entropic regularization and the generalized Sinkhorn algorithm detailed in~\S\ref{sec-generalized}. 

In practice, unbalanced OT techniques seem to outperform classical OT for applications (such as in imaging or machine learning) where the input data is noisy or not perfectly known. They are also crucial when the signal strength of a measure, as measured by its total mass, must be accounted for, or when normalization is not meaningful. This was the original motivation of~\citet{FrognerNIPS}, whose goal was to compare sets of word labels used to describe images. Unbalanced OT and the corresponding Sinkhorn iterations have also been used for applications to the dynamics of cells in~\citep{schiebinger2017reconstruction}.

\begin{figure}[h!]
\centering
\begin{tabular}{@{}c@{\hspace{10mm}}c@{}}
\includegraphics[width=.4\linewidth]{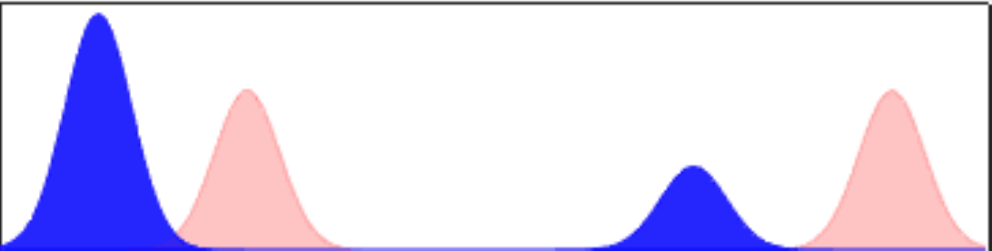} & 
\includegraphics[width=.4\linewidth]{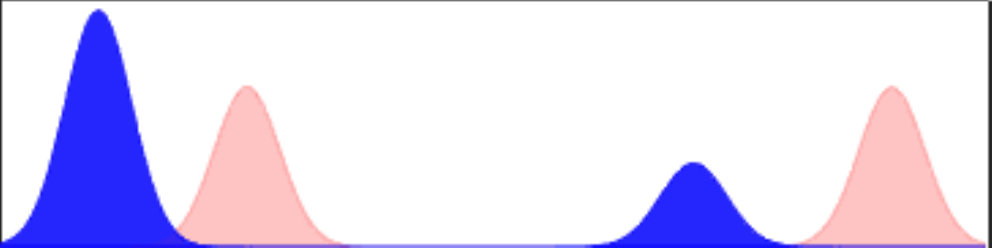} \\
\includegraphics[width=.4\linewidth]{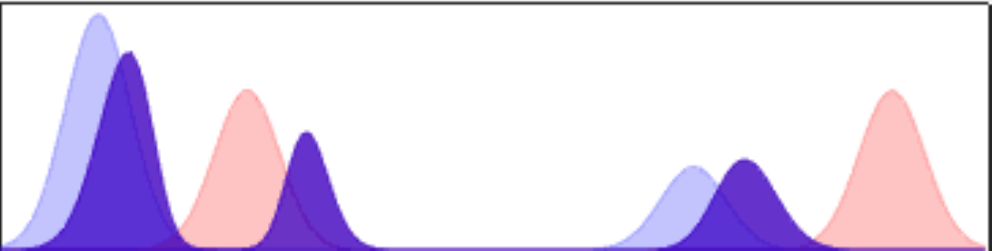} & 
\includegraphics[width=.4\linewidth]{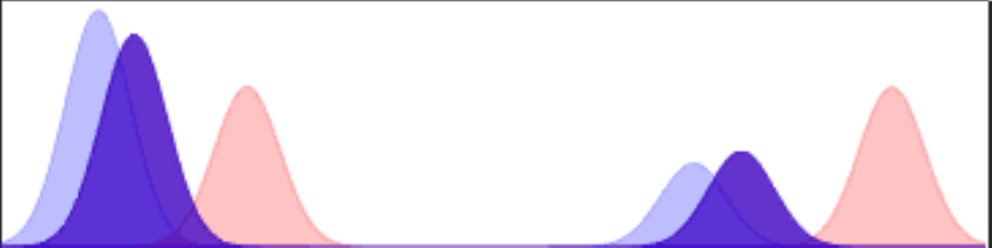} \\
\includegraphics[width=.4\linewidth]{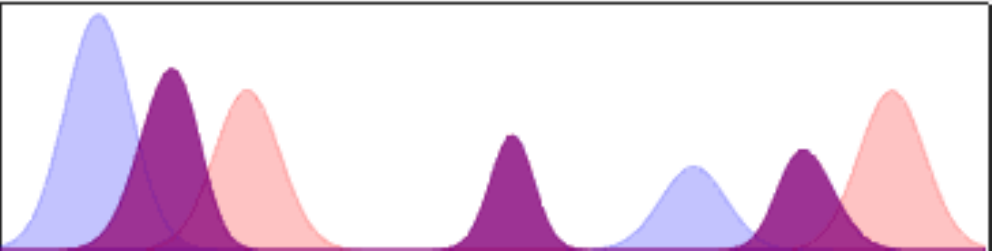} & 
\includegraphics[width=.4\linewidth]{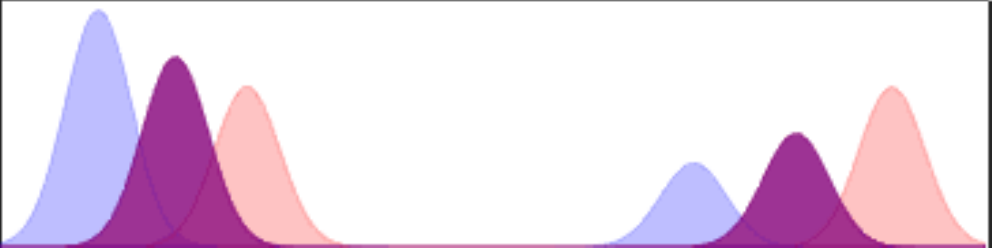} \\
\includegraphics[width=.4\linewidth]{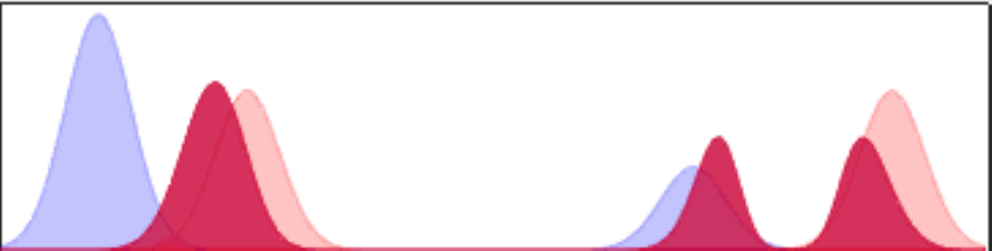} & 
\includegraphics[width=.4\linewidth]{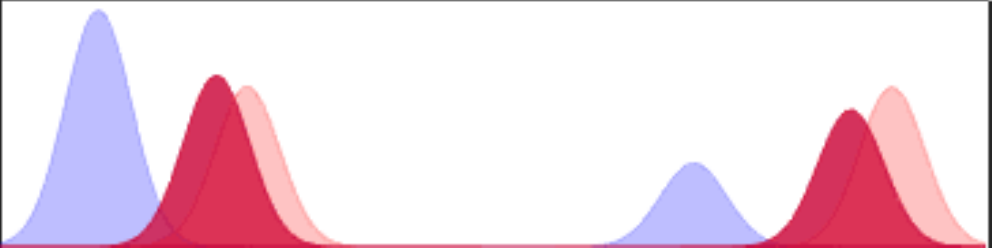} \\
\includegraphics[width=.4\linewidth]{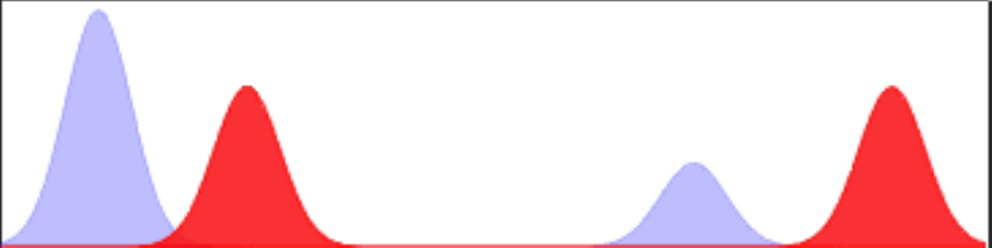} & 
\includegraphics[width=.4\linewidth]{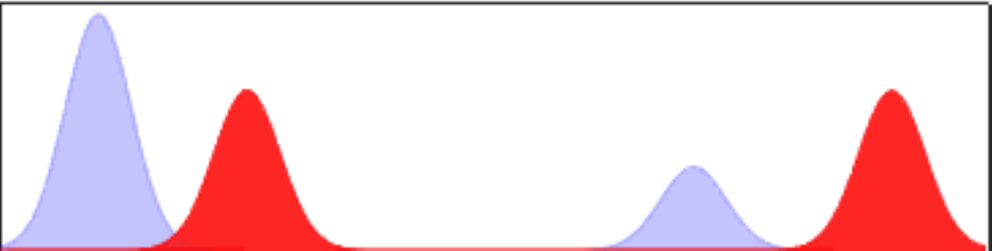} \\
Classical OT ($\tau=+\infty$) & Ubalanced OT ($\tau=1$)
\end{tabular}
\caption{\label{fig-interpolation-static}
Influence of relaxation parameter $\tau$ on unbalanced barycenters.
Top to bottom: the evolution of the barycenter between two input measures.
}
\end{figure}

\begin{rem}[Connection with dual norms]\label{rem-unb-dualnorms}
A particularly simple setup to account for mass variation is to use dual norms, as detailed in~\S\ref{sec-dual-norms}. By choosing a compact set $B \subset \Cc(\X)$ one obtains a norm defined on the whole space $\Mm(\X)$ (in particular, the measures do not need to be positive). A particular instance of this setting is the flat norm~\eqref{eq-set-flatnorm}, which is recovered as a special instance of unbalanced transport, when using $\Divergm_\phi(\al|\al')=\norm{\al-\al'}_{\TV}$ to be the total variation norm~\eqref{eq-defn-tv}; see, for instance,~\citep{hanin1992kantorovich,lellmann2014imaging}. 
We also refer to~\citep{schmitzer2017framework} for a general framework to define Wasserstein-1 unbalanced transport.
\end{rem}

\section{Problems with Extra Constraints on the Couplings}

Many other OT-like problems have been proposed in the literature. They typically correspond to adding extra constraints $\Cc$ on the set of feasible couplings appearing in the original OT problem~\eqref{eq-mk-generic}
\eql{\label{ot-extra-constr}
	\umin{\pi \in \Couplings(\al,\be)}
		\enscond{ \int_{\X \times \Y} \c(x,y) \d\pi(x,y) }{ \pi \in \Cc }.
}	
Let us give two representative examples.
The optimal transport with capacity constraint~\citep{km1} corresponds to imposing that the density $\density{\pi}$ (for instance, with respect to the Lebesgue measure) is upper bounded
\eql{\label{eq-capacity-constr}
	\Cc = \enscond{ \pi }{ \density{\pi} \leq \kappa }
}
for some $\kappa>0$. This constraint rules out singular couplings localized on Monge maps. 
The martingale transport problem (see, for instance,~\citet{GalichonMartingale,dolinsky2014martingale,TanTouzi,beiglbock2013model}), which finds many applications in finance, imposes the so-called martingale constraint on the conditional mean of the coupling, when $\X=\Y=\RR^d$:
\eql{\label{eq-martingale-constr}
	\Cc = \enscond{ \pi }{ \foralls x \in \RR^d,  \int_{\RR^d} y \frac{\d \pi(x,y)}{\d\al(x)\d\be(y)} \d\be(y) = x }.
}
This constraint imposes that the barycentric projection map~\eqref{eq-bary-proj} of any admissible coupling must be equal to the identity.
For arbitrary $(\al,\be)$, this set $\Cc$ is typically empty, but necessary and sufficient conditions exist ($\al$ and $\be$ should be in ``convex order'') to ensure $\Cc \neq \emptyset$ so that  $(\al,\be)$ satisfy a martingale constraint. This constraint can be difficult to enforce numerically when discretizing an existing problem.
It also forbids the solution to concentrate on a single Monge map, and can lead to couplings concentrated on the union of several graphs (a ``multivalued'' Monge map), or even more complicated support sets.
Using an entropic penalization as in~\eqref{eq-entropic-generic}, one can solve approximately~\eqref{ot-extra-constr} using the Dykstra algorithm as explained in~\citet{2015-benamou-cisc}, which is a generalization of Sinkhorn's algorithm shown in~\S\ref{sec-sinkhorn}. This requires computing the projection onto $\Cc$ for the $\KL$ divergence, which is straightforward for~\eqref{eq-capacity-constr} but cannot be done in closed form~\eqref{eq-martingale-constr} and thus necessitates subiterations; see~\citep{guo2017computational} for more details.

\section{Sliced Wasserstein Distance and Barycenters}\label{sec-sliced}

One can define a distance between two measures $(\al,\be)$ defined on $\RR^\dim$ by aggregating 1-D Wasserstein distances between their projections onto all directions of the sphere. This defines
\eql{\label{eq-sw-def}
	\SW(\al,\be)^2 \eqdef \int_{\SS^d} \Wass_2( P_{\th,\sharp} \al, P_{\th,\sharp} \be )^2 \d\th,
}
where $\SS^\dim = \ensconds{\th \in \RR^\dim}{\norm{\th}=1}$ is the $\dim$-dimensional sphere, and $P_\th : x \in \RR^\dim \rightarrow \RR$ is the projection. This approach is detailed in~\citep{2013-Bonneel-barycenter}, following ideas from Marc Bernot. It is related to the problem of Radon inversion over measure spaces~\citep{AbrahamRadon}. 

\paragraph{Lagrangian discretization and stochastic gradient descent. }

The advantage of this functional is that 1-D Wasserstein distances are simple to compute, as detailed in~\S\ref{sec:specialcases}. In the specific case where $m=n$ and 
\eql{\label{eq-lagrangian-sliced}
	\al=\frac{1}{n}\sum_{i=1}^n \de_{x_i}
	\qandq 
	\be=\frac{1}{n}\sum_{i=1}^m \de_{y_i}, 
}
this is achieved by simply sorting points
\eq{
	\SW(\al,\be)^2 = \int_{\SS^d} \pa{
		 \sum_{i=1}^n | \dotp{ x_{\si_\th(i)} - y_{\kappa_\th(i)} }{\th} |^2 
		}
		\d\th,
}
where $\si_\th,\kappa_\th \in \Perm(n)$ are the permutation ordering in increasing order, respectively, $(\dotp{x_i}{\th})_i$ and $(\dotp{y_i}{\th})_i$. 

Fixing the vector $y$, the function $\Ee_\be(x) \eqdef \SW(\al,\be)^2$ is smooth, and one can use this function to define a mapping by gradient descent 
\eql{\label{eq-grad-sliced-lagr}
	x \leftarrow x - \tau \nabla\Ee_\be(x)
	\qwhereq
}
\eq{
	\nabla\Ee_\be(x)_i = 
		2 \int_{\SS^d} \pa{
		 	\dotp{ x_{i} - y_{\kappa_\th \circ \si_\th^{-1}(i)} }{\th}  \th
		}
		\d\th
}
using a small enough step size $\tau>0$. 
To make the method tractable, one can use a stochastic gradient descent (SGD), replacing this integral with a discrete sum against randomly drawn directions $\th \in \SS^\dim$ (see~\S\ref{sec-sgd} for more details on SGD). 
The flow~\eqref{eq-grad-sliced-lagr} can be understood as (Langrangian implementation of) a Wasserstein gradient flow (in the sense of~\S\ref{sec-grad-flows}) of the function $\al \mapsto \SW(\al,\be)^2$. Numerically, one finds that this flow has no local minimizer and that it thus converges to $\al=\be$. The usefulness of the Lagrangian solver is that, at convergence, it defines a matching (similar to a Monge map) between the two distributions. This method has been used successfully for color transfer and texture synthesis in~\citep{rabin-ssvm-11} and is related to the alternate minimization approach detailed in~\citep{pitie2007automated}.

It is simple to extend this Lagrangian scheme to compute approximate ``sliced'' barycenters  of measures, by mimicking the Frechet definition of Wasserstein barycenters~\eqref{eq-barycenter-generic} and minimizing 
\eql{\label{eq-barycenter-sliced}
		\umin{\al \in \Mm_+^1(\X)} \sum_{s=1}^S \la_s \SW(\al,\be_s)^2,
}
given a set $(\be_s)_{s=1}^S$ of fixed input measure. Using a Lagrangian discretization of the form~\eqref{eq-lagrangian-sliced} for both $\al$ and the $(\be_s)_s$, one can perform the nonconvex minimization over the position $x=(x_i)_i$
\eql{\label{eq-bary-lagrangian-sliced}
	\umin{x} \Ee(x) \eqdef \sum_{s} \la_s \Ee_{\be_s}(x),
	\qandq 
	\nabla \Ee(x) = \sum_{s} \la_s \nabla \Ee_{\be_s}(x), 
}
by gradient descent using formula~\eqref{eq-grad-sliced-lagr} to compute $\nabla \Ee_{\be_s}(x)$ (coupled with a random sampling of the direction $\th$).

\paragraph{Eulerian discretization and Radon transform. }

A related way to compute an approximated sliced barycenter, without resorting to an iterative minimization scheme, is to use the fact that~\eqref{eq-sw-def} computes a distance between the Radon transforms $\Rr(\al)$ and $\Rr(\be)$ where
\eq{
	\Rr(\al) \eqdef ( P_{\th,\sharp} \al )_{\th \in \SS^d}.
}
A crucial point is that the Radon transform is invertible and that its inverse can be computed using a filtered backprojection formula. 
Given a collection of measures $\rho = (\rho_\th)_{\th \in \SS^d}$, one defines the filtered  backprojection operator as
\eql{\label{eq-filter-backproj}
	\Rr^+(\rho) = C_\dim \De^{\frac{d-1}{2}} \Bb(\rho),
}
where $\xi = \Bb(\rho) \in \Mm(\RR^d)$ is the measure defined through the relation 
\eql{\label{eq-backproj}
	\foralls g \in \Cc(\RR^d), \quad
	\int_{\RR^d} g(x) \d\xi(x) = 
	\int_{\SS^d}  \int_{\RR^{\dim-1}}   \int_{\RR}   
		g( r \th + U_\th z ) 
	\d \rho_\th(r) \d z \d \th,
}
where $U_\th$ is any orthogonal basis of $\th^\bot$, 
and where $C_\dim \in \RR$ is a normalizing constant which depends on the dimension.
Here $\De^{\frac{d-1}{2}}$ is a fractional Laplacian, which is the high-pass filter defined over the Fourier domain as 
$\hat \De^{\frac{d-1}{2}}(\om) = \norm{\om}^{d-1}$.
The definition of the backprojection~\eqref{eq-backproj} adds up the contribution of all the measures $(\rho_\th)_\th$ by extending each one as being constant in the directions orthogonal to $\th$.
One then has the left-inverse relation $\Rr^+ \circ \Rr = \Identity_{\Mm(\RR^\dim)}$, so that $\Rr^+$ is a valid reconstruction formula.

\newcommand{\MyFigRadon}[1]{\includegraphics[width=.19\linewidth]{radon-bary/#1}}
\begin{figure}[h!]
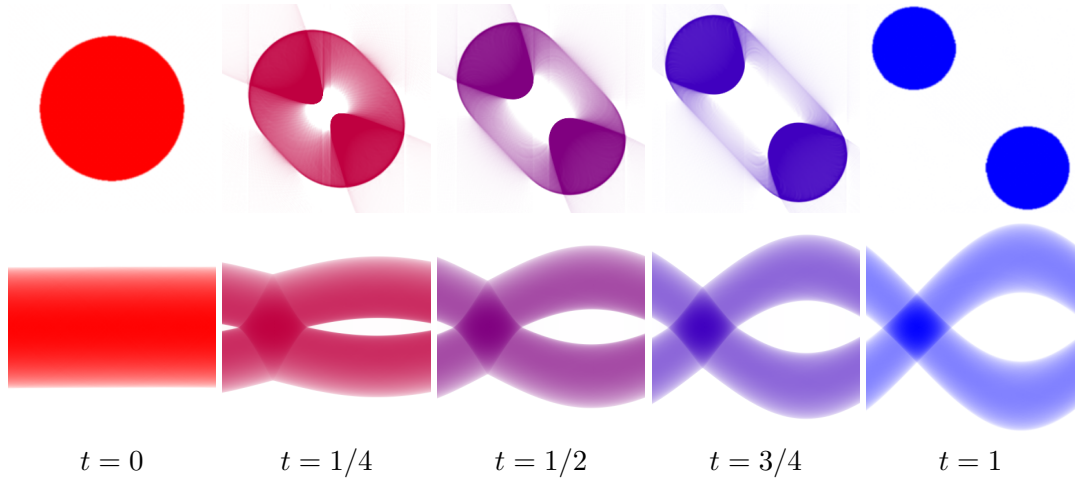

\centering
\begin{tabular}{@{}c@{\hspace{1mm}}c@{\hspace{1mm}}c@{\hspace{1mm}}c@{\hspace{1mm}}c@{}}
\MyFigRadon{bary-1} &
\MyFigRadon{bary-3} &
\MyFigRadon{bary-5} &
\MyFigRadon{bary-7} &
\MyFigRadon{bary-9} \\
\MyFigRadon{radon-1} &
\MyFigRadon{radon-3} &
\MyFigRadon{radon-5} &
\MyFigRadon{radon-7} &
\MyFigRadon{radon-9} \\
$t=0$ & $t=1/4$ & $t=1/2$ & $t=3/4$ & $t=1$  
\end{tabular}
\caption{\label{fig-radon-bary}
Example of sliced barycenters computation using the Radon transform (as defined in~\eqref{eq-barycenter-radon-sliced}). 
Top: barycenters $\al_t$ for $S=2$ two input and weights $(\la_1,\la_2)=(1-t,t)$. 
Bottom: their Radon transform $\Rr(\al_t)$ (the horizontal axis being the orientation angle $\th$). 
}
\end{figure}

In order to compute barycenters of input densities, it makes sense to replace formula~\eqref{eq-barycenter-generic} by its equivalent using Radon transform, and thus consider independently for each $\th$ the 1-D barycenter problem
\eql{\label{eq-barycenter-radon-sliced}
		\rho_\th^\star \in \uargmin{ (\rho_\th \in \Mm_+^1(\RR)) } \sum_{s=1}^S \la_s \Wass_2( \rho_\th, P_{\th,\sharp} \be_s)^2.
}
Each 1-D barycenter problem is easily computed using the monotone rearrangement as detailed in Remark~\ref{rem-bary-1d}.
The Radon approximation $\al_{R} \eqdef \Rr^+( \rho^\star )$ of a sliced barycenter solving~\eqref{eq-barycenter-generic} is then obtained by the inverse Radon transform $\Rr^+$. Note that in general, $\al_R$ is not a solution to~\eqref{eq-barycenter-generic} because the Radon transform is not surjective, so that  $\rho^\star$, which is obtained as a barycenter of the Radon transforms $\Rr(\be_s)$ does not necessarily belong to the range of $\Rr$. But numerically it seems in practice to be almost the case~\citep{2013-Bonneel-barycenter}.
Numerically, this Radon transform formulation is very effective for input measures and barycenters discretized on a fixed grid (\emph{e.g.} a uniform grid for images), and $\Rr$ and well as $\Rr^+$ are computed approximately on this grid using fast algorithms (see, for instance,~\citep{FastSlantStack}).
Figure~\ref{fig-radon-bary} illustrates this computation of barycenters (and highlights the way the Radon transforms are interpolated), while Figure~\ref{fig-sliced-bary-compar} shows a comparison of the Radon barycenters~\eqref{eq-barycenter-radon-sliced} and the ones obtained by Lagrangian discretization~\eqref{eq-bary-lagrangian-sliced}.

\begin{figure}[h!]
\centering
\begin{tabular}{@{}c@{\hspace{1mm}}c@{\hspace{1mm}}c@{}}
\includegraphics[width=.32\linewidth]{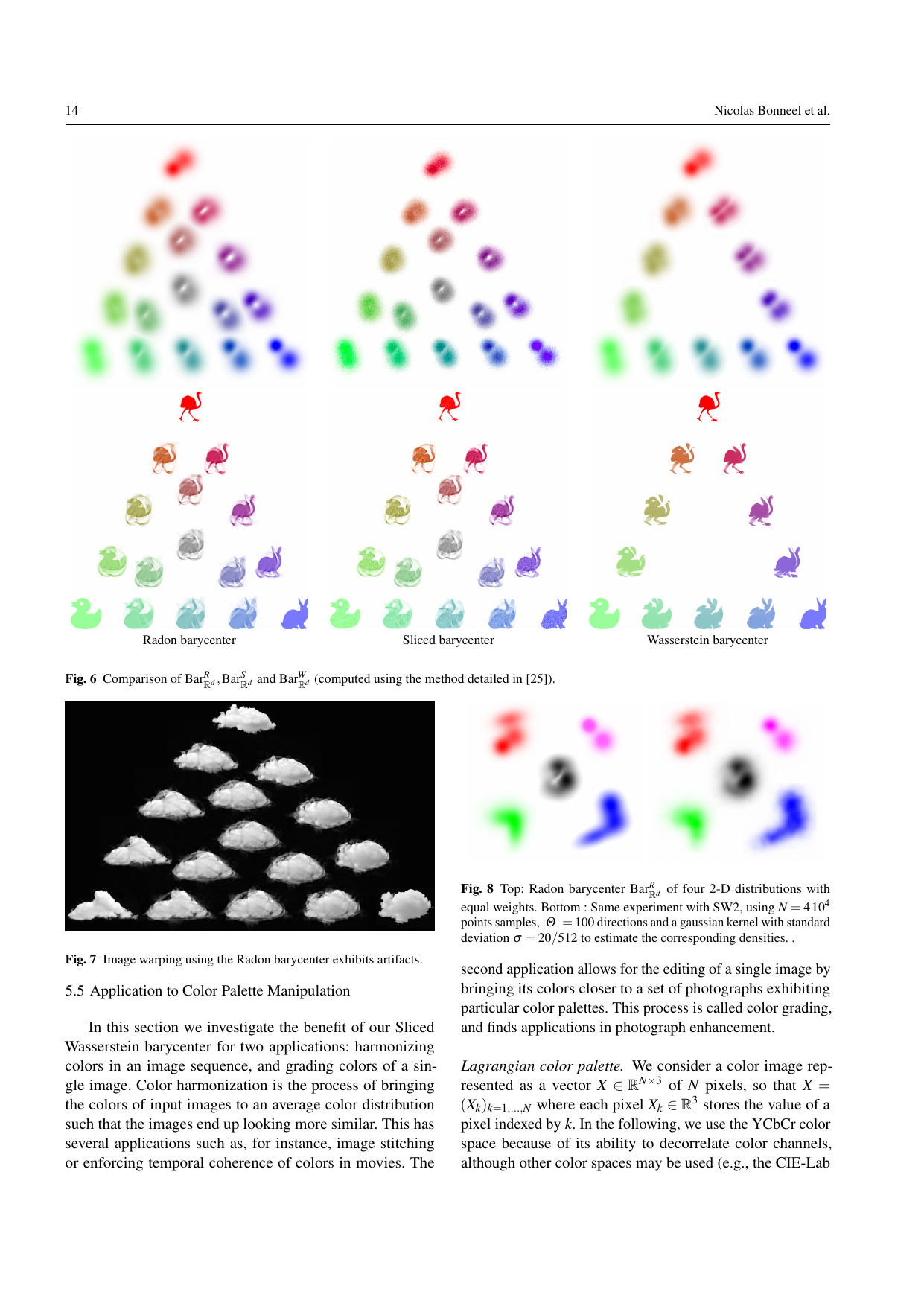}&
\includegraphics[width=.32\linewidth]{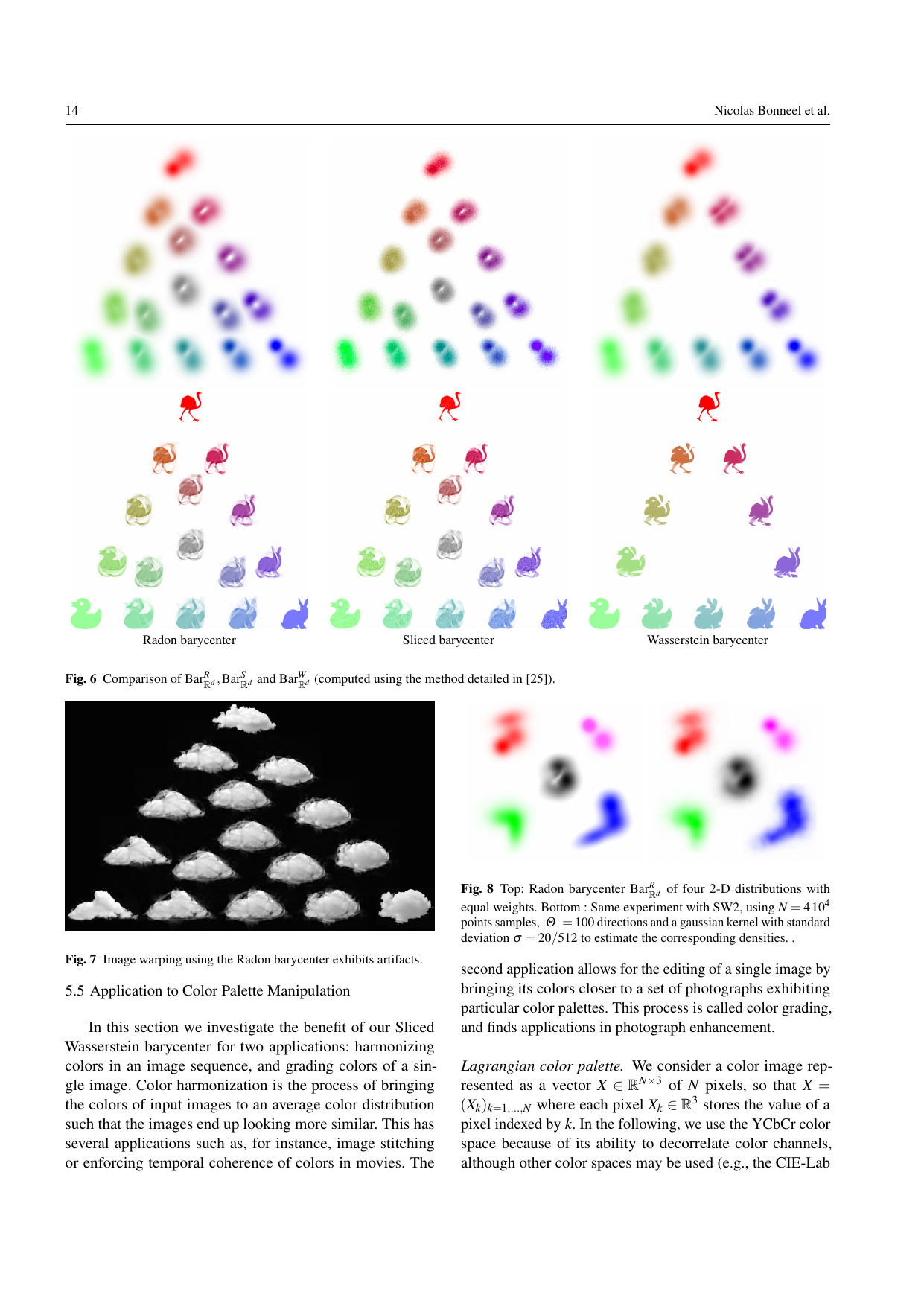}&
\includegraphics[width=.32\linewidth]{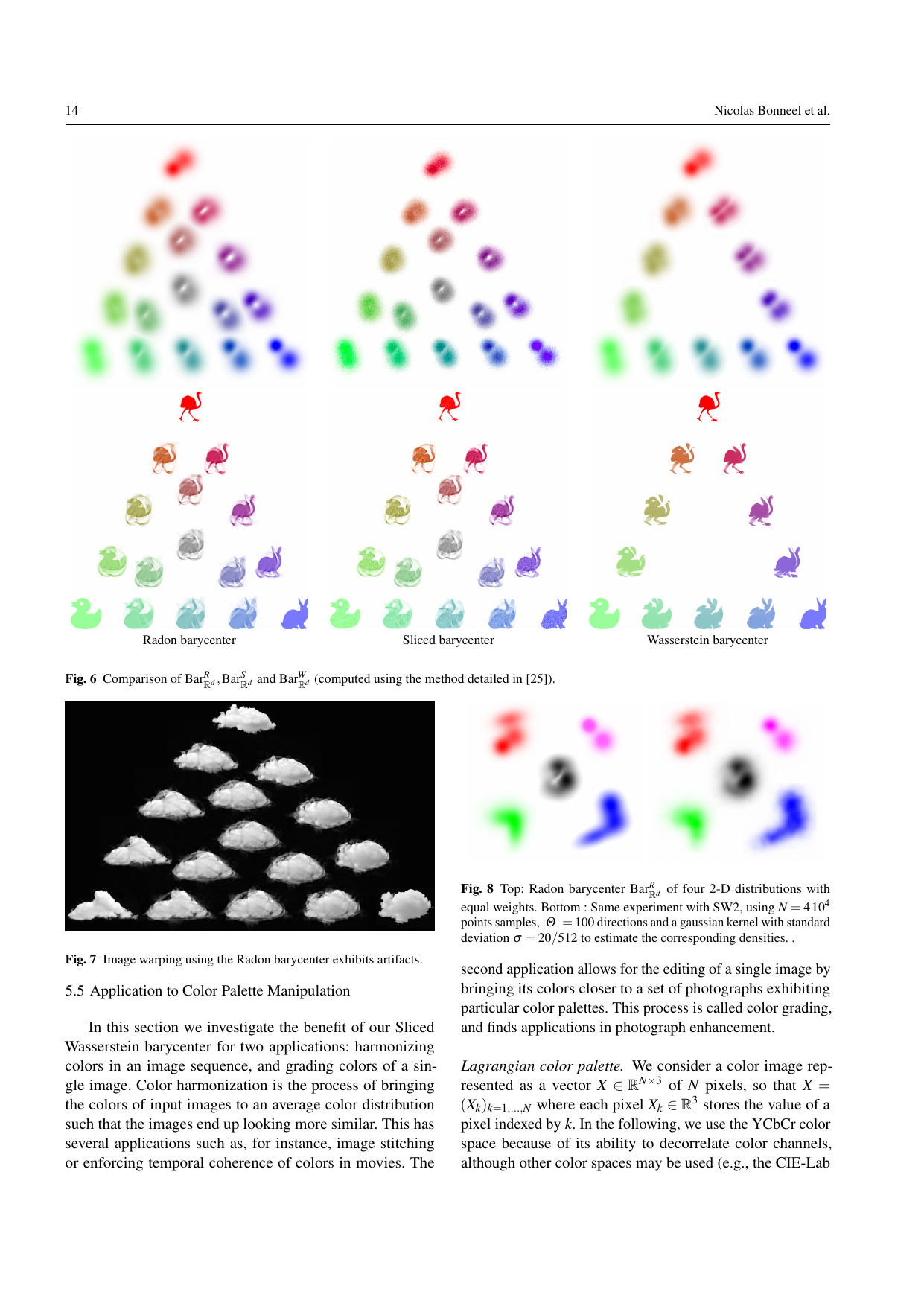} \\
Radon & Lagrangian & Wasserstein
\end{tabular}
\caption{\label{fig-sliced-bary-compar}
Comparison of barycenters computed using Radon transform~\eqref{eq-barycenter-radon-sliced} (Eulerian discretization), Lagrangian discretization~\eqref{eq-bary-lagrangian-sliced}, and Wasserstein OT (computed using Sinkhorn iterations~\eqref{eq-sinkhorn-bary}). 
}
\end{figure}

\paragraph{Sliced Wasserstein kernels.}

Beside its computational simplicity, another advantage of the sliced Wasserstein distance is that it is isometric to a Euclidean distance (it is thus a ``Hilbertian'' metric), as detailed in Remark~\ref{rem-1d-ot-generic}, and in particular formula~\eqref{eq-wass-cumul}. 
As highlighted in~\S\ref{sec-non-embeddability}, this should be contrasted with the Wasserstein distance $\Wass_2$ on $\RR^\dim$, which is not Hilbertian in dimension $\dim \geq 2$. 
It is thus possible to use this sliced distance to equip the space of distributions $\Mm_1^+(\RR^\dim)$ with a reproducing kernel Hilbert space structure (as detailed in~\S\ref{sec-non-embeddability}). 
One can, for instance, use the exponential and energy distance kernels
\eq{
	\Krkhs(\al,\be) = e^{-\frac{\SW(\al,\be)^p}{2\si^p}}
	\qandq
	\Krkhs(\al,\be) = -\SW(\al,\be)^p
}
for $1 \leq p \leq 2$ for the exponential kernels and $0 < p < 2$ for the energy distance kernels.
This means that for any collection $(\al_i)_i$ of input measures, the matrix $( \Krkhs(\al_i,\al_j) )_{i,j}$ is symmetric positive semidefinite. It is possible to use these kernels to perform a variety of machine learning tasks using the ``kernel trick,'' for instance, in regression, classification (SVM and logistic), clustering (K-means) and dimensionality reduction (PCA)~\citep{Hofmann2008}.
We refer to~\citet{kolouri2016sliced} for details and applications.

\section{Transporting Vectors and Matrices}

Real-valued measures $\al \in \Mm(\Xx)$ are easily generalized to vector-valued measures $\al \in \Mm(\Xx;\VV)$, where $\VV$ is some vector space. For notational simplicity, we assume $\VV$ is Euclidean and equipped with some inner product $\dotp{\cdot}{\cdot}$ (typically $\VV=\RR^\dim$ and the inner product is the canonical one). Thanks to this inner product, vector-valued measures are identified with the dual of continuous functions $g : \Xx \rightarrow \VV$, \ie for any such $g$, one defines its integration against the measure as 
\eql{\label{eq-integration-vec-valued}
	\int_\Xx g(x) \d\al(x) \in \RR,
}
which is a linear operation on $g$ and $\al$. A discrete measure has the form $\al = \sum_i \a_i \de_{x_i}$ where $(x_i,a_i) \in \Xx \times \VV$ and the integration formula~\eqref{eq-integration-vec-valued} simply reads
\eq{
	\int_\Xx g(x) \d\al(x) = \sum_i \dotp{\a_i}{g(x_i)} \in \RR.
}
Equivalently, if $\VV=\RR^\dim$, then such an $\al$ can be viewed as a collection $(\al_s)_{s=1}^\dim$ of $\dim$ ``classical'' real-valued measures (its coordinates), writing  
\eq{
	\int_\Xx g(x) \d\al(x) =  \sum_{s=1}^\dim \int_\Xx g_s(x) \d\al_s(x), 
}
where $g(x)=(g_s(x))_{s=1}^\dim$ are the coordinates of $g$ in the canonical basis. 

\paragraph{Dual norms.}

It is nontrivial, and in fact in general impossible, to extend OT distances to such a general setting. Even coping with real-valued measures taking both positive and negative values is difficult. The only simple option is to consider dual norms, as defined in~\S\ref{sec-dual-norms}. Indeed, formula~\eqref{eq-w1-cont} readily extends to $\Mm(\Xx;\VV)$ by considering $B$ to be a subset of $\Cc(\Xx;\VV)$. So in particular, $\Ww_1$, the flat norm and MMD norms can be computed for vector-valued measures. 

\paragraph{OT over cone-valued measures. }

It is possible to define more advanced OT distances when $\al$ is restricted to be in a subset $\Mm(\Xx;\Vv) \subset \Mm(\Xx;\VV)$. The set $\Vv$ should be a positively 1-homogeneous convex cone of $\VV$
\eq{
	\Vv \eqdef \enscond{ \la u }{ \la \in \RR^+,  u \in \Vv_0, }
}
where $\Vv_0$ is a compact convex set. 
A typical example is the set of positive measures where $\Vv=\RR_+^\dim$. Dynamical convex formulations of OT over such a cone have been proposed; see~\citep{MatthesPositive}. This has been applied to model the distribution of chemical components. 
Another important example is the set of positive symmetric matrices $\Vv=\Ss_+^\dim \subset \RR^{\dim \times \dim}$. It is of course possible to use dual norms over this space, by treating matrices as vectors; see, for instance,~\citep{Ning2014metrics}. Dynamical convex  formulations for OT over such a cone have been provided~\citep{Chen2016,JiangSpectral}. Some static (Kantorovich-like) formulations also have been proposed~\citep{ning2015matrix,2016-peyre-qot}, but a mathematically sound theoretical framework is still missing. In particular, it is unclear if these static approaches define distances for vector-valued measures and if they relate to some dynamical formulation.  Figure~\ref{fig-tensors} is an example of tensor interpolation obtained using the method detailed in~\citep{2016-peyre-qot}, which proposes a generalization of Sinkhorn algorithms using quantum relative entropy~\eqref{eq-quantum-entropy} to deal with tensor fields.

\paragraph{OT over positive matrices. }

A related but quite different setting is to replace discrete measures, \ie histograms $\a \in \Si_n$, by positive matrices with unit trace $A \in \Ss_n^+$ such that $\tr(A)=1$. The rationale is that the eigenvalues $\la(A) \in \Si_n$ of $A$ play the role of a histogram, but one also has to take care of the rotations of the eigenvectors, so that this problem is more complicated.

One can extend several divergences introduced in~\S\ref{sec-phi-div} to this setting. For instance, the Bures metric~\eqref{eq-bure-defn} is a generalization of the Hellinger distance (defined in Remark~\ref{exmp-hellinger}), since they are equal on positive diagonal matrices. 
One can also extend the Kullback--Leibler divergence~\eqref{eq-kl-defn} (see also Remark~\ref{ex_KLdiv}), which is generalized to positive matrices as
\eql{\label{eq-quantum-entropy}
	\KLD(A|B) \eqdef \tr\pa{ P \log(P)-P\log(Q)-P+Q, }
}
where $\log(\cdot)$ is the matrix logarithm. This matrix $\KLD$ is convex with both of its arguments. 

It is possible to solve convex dynamic formulations to define OT distances between such matrices~\citep{Carlen2014,Chen2016,ChenGangbo17}. There also exists an equivalent of Sinkhorn's algorithm, which is due to~\citet{gurvits2004classical} and has been extensively studied in~\citep{georgiou2015positive}; see also the review paper~\citep{ReviewSinkhorn}. It is known to converge only in some cases but seems empirically to always work.

\newcommand{\MyFigTensorMesh}[1]{\includegraphics[width=.100\linewidth,trim=140 10 120 0,clip]{tensors/mesh/interpol-#1}}
\newcommand{\MyFigTensorImg}[1]{\includegraphics[width=.105\linewidth,trim=80 20 80 20,clip]{tensors/2d/interpol-ellipses-#1}}
\begin{figure}[h!]
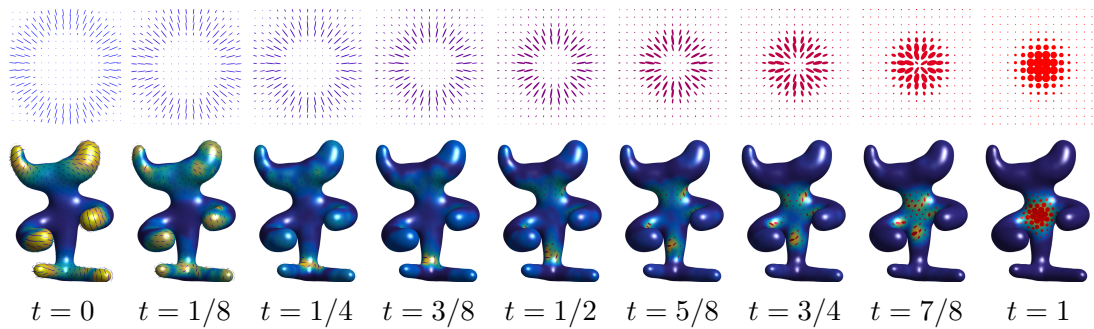
\centering
\centering
\begin{tabular}{@{\hspace{1mm}}c@{\hspace{1mm}}c@{\hspace{1mm}}c@{\hspace{1mm}}c@{\hspace{1mm}}c@{\hspace{1mm}}c@{\hspace{1mm}}c@{\hspace{1mm}}c@{\hspace{1mm}}c@{}}
\MyFigTensorImg{1}&
\MyFigTensorImg{2}&
\MyFigTensorImg{3}&
\MyFigTensorImg{4}&
\MyFigTensorImg{5}&
\MyFigTensorImg{6}&
\MyFigTensorImg{7}&
\MyFigTensorImg{8}&
\MyFigTensorImg{9}\\
\MyFigTensorMesh{1}&
\MyFigTensorMesh{2}&
\MyFigTensorMesh{3}&
\MyFigTensorMesh{4}&
\MyFigTensorMesh{5}&
\MyFigTensorMesh{6}&
\MyFigTensorMesh{7}&
\MyFigTensorMesh{8}&
\MyFigTensorMesh{9}\\
$t=0$ & $t=1/8$ & $t=1/4$ & $t=3/8$ & $t=1/2$ & $t=5/8$ & $t=3/4$ & $t=7/8$ & $t=1$ 
\end{tabular}
\caption{
Interpolations between two input fields of positive semidefinite matrices (displayed at times $t \in \{0,1\}$ using ellipses) on some domain (here, a 2-D planar square and a surface mesh), using the method detailed in~\citet{2016-peyre-qot}. 
Unlike linear interpolation schemes, this OT-like method transports the ``mass'' of the tensors (size of the ellipses) as well as their anisotropy and orientation.
} \label{fig-tensors}
\end{figure}

\section{Gromov--Wasserstein Distances}

For some applications such as shape matching, an important weakness of optimal transport distances lies in the fact that they are not invariant to important families of invariances, such as rescaling, translation or rotations. Although some nonconvex variants of OT to handle such global transformations have been proposed~\citep{cohen1999earth,pele2013tangent} and recently applied to problems such as cross-lingual word embeddings alignments~\citep{grave2018unsupervised,alvarez2018towards,grave2018unsupervised}, these methods require specifying first a subset of invariances, possibly between different metric spaces, to be relevant. We describe in this section a more general and very natural extension of OT that can deal with measures defined on different spaces without requiring the definition of a family of invariances.

\subsection{Hausdorff Distance}
\label{sec-hausdorff}

The Hausdorff distance between two sets $A,B \subset \Z$ for some metric $\dist_\Z$ is 
\eq{
	\Hh_\Zz(A,B) \eqdef \max\Big(
  		\sup_{a \in A} \inf_{b \in B} d_{\Zz}(a,b), 
  		\sup_{b \in B} \inf_{a \in A} d_{\Zz}(a,b)
	\Big), 
}
see Figure~\ref{fig-hausdorff}.
This defines a distance between compact sets $\Kk(\Z)$ of $\Z$, and if $\Z$ is compact, then $(\Kk(\Z),\Hh_\Zz)$ is itself compact; see~\citep{burago2001course}.

\begin{figure}[h!]
\centering
\includegraphics[width=.26\linewidth]{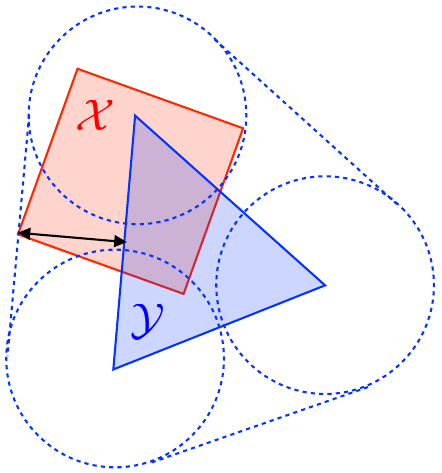}
\qquad
\includegraphics[width=.26\linewidth]{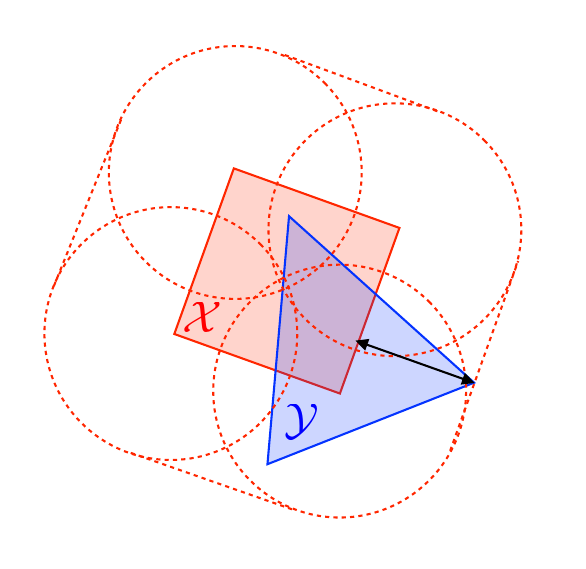}
\caption{\label{fig-hausdorff}
Computation of the Hausdorff distance in $\RR^2$. 
}
\end{figure}

Following~\citet{memoli-2011}, one remarks that this distance between sets $(A,B)$ can be defined similarly to the Wasserstein distance between measures (which should be somehow understood as ``weighted'' sets). One replaces the measures couplings~\eqref{eq-coupling-generic} by sets couplings
\eq{
	\Rr(A,B) \eqdef \enscond{ R \in \X \times \Y }{
	\begin{array}{l}
		\foralls a \in A, \exists b \in B, (a,b) \in R \\
		\foralls b \in B, \exists a \in A, (a,b) \in R
	\end{array}
	}.
}
With respect to Kantorovich problem~\eqref{eq-mk-generic}, one should replace integration (since one does not have access to measures) by maximization, and one has
\eql{\label{eq-hausf-couplings}
	\Hh_\Zz(A,B) = \uinf{ R \in \Rr(A,B) } \sup_{(a,b) \in R} d(a,b). 
}
Note that the support of a measure coupling $\pi \in \Couplings(\al,\be)$ is a set coupling between the supports, \ie $\Supp(\pi) \in \Rr( \Supp(\al),\Supp(\be) )$.
The Hausdorff distance is thus connected to the $\infty$-Wasserstein distance (see Remark~\ref{rem-p-inf}) and one has $\Hh(A,B) \leq \Wass_\infty(\al,\be)$ for any measure $(\al,\be)$ whose supports are $(A,B)$.

\subsection{Gromov--Hausdorff distance}

The Gromov--Hausdorff (GH) distance~\citep{gromov-2001} (see also~\citep{edwards1975structure}) is a way to measure the distance between two metric spaces $(\X,d_\X), (\Y,d_\Y)$ by quantifying how far they are from being isometric to each other, see Figure~\ref{fig-gh}. It is defined as the minimum Hausdorff distance between every possible isometric embedding of the two spaces in a third one,
\eq{
	\Gg\Hh(d_\Xx,d_\Yy) \eqdef 
	\inf_{\Zz,f,g} \enscond{
		\Hh_\Zz(f(\Xx), g(\Yy))
	}{
		\begin{array}{l}
		f : \Xx \overset{\text{isom}}{\longrightarrow} \Zz\\
		g : \Yy \overset{\text{isom}}{\longrightarrow} \Zz
		\end{array}
	}.
}
Here, the constraint is that $f$ must be an isometric embedding, meaning that $d_\Zz(f(x),f(x'))=d_\Xx(x,x')$ for any $(x,x') \in \X^2$ (similarly for $g$). One can show that $\Gg\Hh$ defines a distance between compact metric spaces up to isometries, so that in particular $\Gg\Hh(d_\Xx,d_\Yy)=0$ if and only if there exists an isometry $h : \Xx \rightarrow \Yy$, \ie $h$ is bijective and $d_\Yy(h(x),h(x'))=d_\Xx(x,x')$ for any $(x,x') \in \Xx^2$. 

\begin{figure}[h!]
\centering
\includegraphics[width=.45\linewidth]{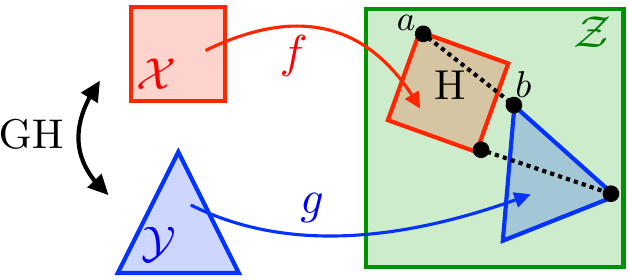}
\caption{\label{fig-gh}
The GH approach to compare two metric spaces.
}
\end{figure}

Similarly to~\eqref{eq-hausf-couplings} and as explained in~\citep{memoli-2011}, it is possible to rewrite equivalently the GH distance using couplings as follows:
\eq{
	\Gg\Hh(d_\Xx,d_\Yy) = \frac{1}{2} \uinf{ R \in \Rr(\X,\Y) } \sup_{ ((x,y),(x',y')) \in R^2 }
	|d_\Xx(x,x')-d_\Xx(y,y')|. 
}
For discrete spaces $\Xx = (x_i)_{i=1}^n, \Yy = (y_j)_{j=1}^m$ represented using a distance matrix $\distD = ( d_{\Xx}(x_i,x_{i'}) )_{i,i'} \in \RR^{n \times n}$,  $\distD' = ( d_{\Yy}(y_j,y_{j'}) )_{j,j'} \in \RR^{m \times m}$, one can rewrite this optimization using binary matrices $\VectMode{R} \in \{0,1\}^{n \times m}$ indicating the support of the set couplings $R$ as follows:
\eql{\label{eq-gh-coupling}
	\text{GH}(\distD,\distD') = \frac{1}{2} \uinf{ \VectMode{R} \ones >0,\VectMode{R}^\top \ones>0  } \max_{ (i,i',j,j') }
	\VectMode{R}_{i,j} \VectMode{R}_{j,j'} |\distD_{i,i'}-\distD_{j,j'}'|. 
}
The initial motivation of the GH distance is to define and study limits of metric spaces, as illustrated in Figure~\ref{fig-gh-limit}, and we refer to~\citep{burago2001course} for details. 
There is an explicit description of the geodesics for the GH distance~\citep{chowdhury2016constructing}, which is very similar to the one in Gromov--Wasserstein spaces, detailed in Remark~\ref{rem-geod-gw}. 

\begin{figure}[h!]
\centering
\includegraphics[width=1\linewidth]{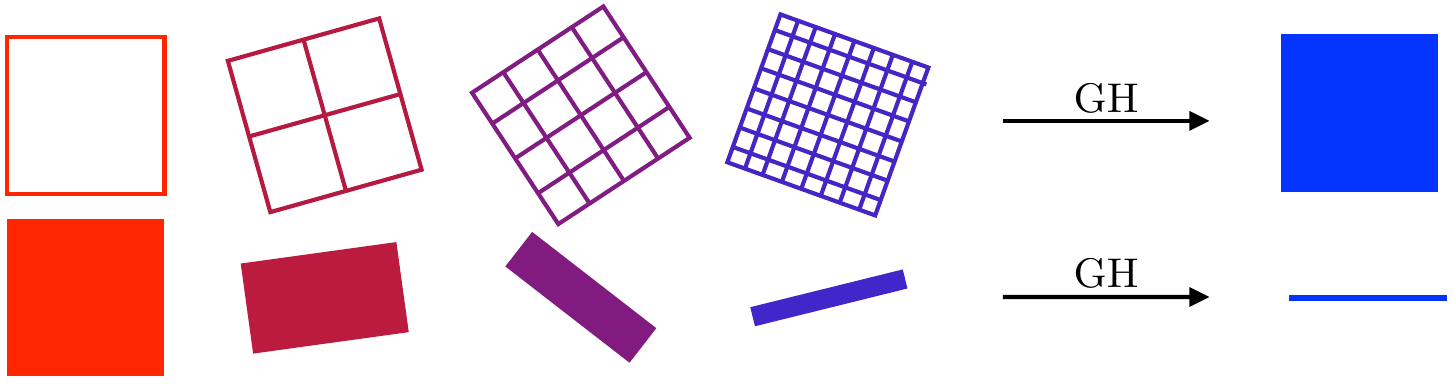}
\caption{\label{fig-gh-limit}
GH limit of sequences of metric spaces.  
}
\end{figure}

The underlying optimization problem~\eqref{eq-gh-coupling} is highly nonconvex, and computing the global minimum is untractable. It has been approached numerically using approximation schemes and has found applications in vision and graphics for shape matching~\citep{memoli2005theoretical,bronstein2006generalized}.

It is often desirable to ``smooth'' the definition of the Hausdorff distance by replacing the maximization by an integration. This in turn necessitates the introduction of measures, and it is one of the motivations for the definition of the GW distance in the next section.
 
\subsection{Gromov--Wasserstein Distance}

Optimal transport needs a ground cost $\C$ to compare histograms $(\a,\b)$ and thus cannot be used if the bins of those histograms are not defined on the same underlying space, or if one cannot preregister these spaces to define a ground cost between any pair of bins in the first and second histograms, respectively. 
To address this limitation, one can instead only assume a weaker assumption, namely that two matrices $\distD \in \RR^{n \times n}$ and $\distD' \in \RR^{m \times m}$ quantify similarity relationships between the points on which the histograms are defined. A typical scenario is when these matrices are (power of) distance matrices.
The GW problem reads
\eql{\label{eq-gw-def}
	\GWD( (\a,\distD), (\b,\distD') )^2 \eqdef \umin{ \P \in \CouplingsD(\a,\b) } 
		\Ee_{\distD,\distD'}(\P)
}
\eq{
	\qwhereq
	\Ee_{\distD,\distD'}(\P) \eqdef 
		\sum_{i,j,i',j'} |\distD_{i,i'} - \distD'_{j,j'}|^2 \P_{i,j}\P_{i',j'},  
}
see Figure~\ref{fig-gw}.
This problem is similar to the GH problem~\eqref{eq-gh-coupling} when replacing maximization by a sum and set couplings by measure couplings.
This is a nonconvex problem, which can be recast as a quadratic assignment problem~\citep{loiola-2007} and is in full generality NP-hard to solve for arbitrary inputs. 
It is in fact equivalent to a graph matching problem~\citep{lyzinski-2015} for a particular cost.\todoK{detail QAP/graph matching}

\begin{figure}[h!]
\centering
\includegraphics[width=.7\linewidth]{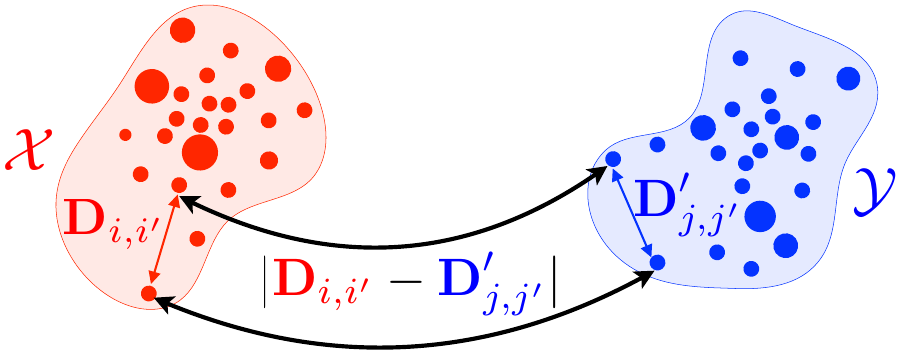}
\caption{\label{fig-gw}
The GW approach to comparing two metric measure spaces.  
}
\end{figure}

One can show that $\GWD$ satisfies the triangular inequality, and in fact it defines a distance between metric spaces equipped with a probability distribution, here assumed to be discrete in definition~\eqref{eq-gw-def}, up to isometries preserving the measures.
This distance was introduced and studied in detail by~\citet{memoli-2011}. An in-depth mathematical exposition (in particular, its geodesic structure and gradient flows) is given in~\citep{SturmGW}. See also~\citep{schmitzer2013modelling} for applications in computer vision.
This distance is also tightly connected with the GH distance~\citep{gromov-2001} between metric spaces, which have been used for shape matching~\citep{memoli-2007,bronstein-2010}.

\begin{rem2}{Gromov--Wasserstein distance}
	The general setting corresponds to computing couplings between metric measure spaces $(\X,\dist_\X,\al_\X)$
	and $(\Y,\dist_\Y,\al_\Y)$, where $(\dist_\X,\dist_\Y)$ are distances, while $\al_\X$ and $\al_\Y$ are measures on their respective spaces.
	One defines 
	\begin{equation}
		\label{eq-gw-generic}
		\begin{split}
		&\GW( (\al_\X,\dist_\X), (\al_\Y,\dist_\Y) )^2 \eqdef \\
		&\umin{ \pi \in \Couplings(\al_\X,\al_\Y) } 
		\int_{\X^2 \times \Y^2}
		| \dist_\X(x,x')-\dist_\Y(y,y') |^2
		\d\pi(x,y)\d\pi(x',y').
		\end{split}
	\end{equation}
	$\GW$ defines a distance between metric measure spaces up to isometries, where one says that $(\X,\al_\X,\dist_\X)$ and $(\Y,\al_\Y,\dist_\Y) $ are isometric if there exists a bijection $\phi : \X \rightarrow \Y$ such that $\phi_{\sharp}\al_\X=\al_\Y$ and $\dist_\Y(\phi(x),\phi(x'))=\dist_\X(x,x')$.
\end{rem2}

\begin{rem2}{Gromov--Wasserstein geodesics}\label{rem-geod-gw}
The space of metric spaces (up to isometries) endowed with this $\GW$ distance~\eqref{eq-gw-generic} has a geodesic structure. \citet{SturmGW} shows that the geodesic between  $(\X_0,\dist_{\X_0},\al_0)$ and $(\X_1,\dist_{\X_1},\al_1)$ can be chosen to be 
$t \in [0,1] \mapsto (\X_0 \times \X_1,\dist_t,\pi^\star),$ where $\pi^\star$ is a solution of~\eqref{eq-gw-generic} and for all $((x_0,x_1), (x_0',x_1')) \in (\X_0 \times \X_1)^2$, 
\eq{
	\dist_t((x_0,x_1), (x_0',x_1')) \eqdef
	(1-t)\dist_{\X_0}(x_0,x_0') + t\dist_{\X_1}(x_1,x_1').
}
This formula allows one to define and analyze gradient flows which minimize functionals involving metric spaces; see~\citet{SturmGW}. It is, however, difficult to handle numerically, because it involves computations over the product space $\X_0 \times \X_1$. 
A heuristic approach is used in~\citep{peyre2016gromov} to define geodesics and barycenters of metric measure spaces while imposing the cardinality of the involved spaces and making use of the entropic smoothing~\eqref{eq-gw-entropy} detailed below.
\end{rem2}

\subsection{Entropic Regularization}

To approximate the computation of $\GWD$, and to help convergence of minimization schemes to better minima, one can consider the entropic regularized variant
\eql{\label{eq-gw-entropy}
	\umin{ \P \in \CouplingsD(\a,\b) } 
		\Ee_{\distD,\distD'}(\P) - \varepsilon \HD(\P).
}
As proposed initially in~\citep{gold-1996,rangarajan-1999}, and later revisited in~\citep{2016-solomon-gw} for applications in graphics, one can use iteratively Sinkhorn's algorithm to progressively compute a stationary point of~\eqref{eq-gw-entropy}. 
Indeed, successive linearizations of the objective function lead to consider the succession of updates
\eql{\label{eq-gw-sinkh}
	\itt{\P} \eqdef \umin{ \P \in \CouplingsD(\a,\b) } \dotp{\P}{\it{\C}} - \varepsilon\H(\P)
		\qwhereq
}
\eq{
		\it{\C} \eqdef \nabla \Ee_{\distD,\distD'}(\it{\P}) = -\distD \it{\P} \distD', 
}
which can be interpreted as a mirror-descent scheme~\citep{2016-solomon-gw}. Each update can thus be solved using Sinkhorn iterations~\eqref{eq-sinkhorn} with cost $\it{\C}$.
Figure~\ref{fig-gw-iter} displays the evolution of the algorithm. 
Figure~\ref{fig-gw-sinkhorn} illustrates the use of this entropic GW to compute soft maps between domains.

\newcommand{\FigGW}[1]{\includegraphics[width=.21\linewidth]{gromov-wasserstein/#1}}

\begin{figure}[h!]
\centering
\begin{tabular}{@{}c@{\hspace{2mm}}c@{\hspace{2mm}}c@{\hspace{2mm}}c@{}}
\imgBox{\FigGW{coupling-1}}&
\imgBox{\FigGW{coupling-2}}&
\imgBox{\FigGW{coupling-3}}&
\imgBox{\FigGW{coupling-4}}\\
\FigGW{matching-1}&\FigGW{matching-2}&\FigGW{matching-3}&\FigGW{matching-4}\\
$\ell=1$ & $\ell=2$ & $\ell=3$ & $\ell=4$ 
\end{tabular}
\caption{\label{fig-gw-iter}
Iterations of the entropic GW algorithm~\eqref{eq-gw-sinkh} between two shapes $(x_i)_i$ and $(y_j)_j$ in $\RR^2$, initialized with $\P^{(0)}=\a \otimes \b$. The distance matrices are $\distD_{i,i'} = \norm{x_i-x_{i'}}$ and $\distD_{j,j'}' = \norm{y_j-y_{j'}}$. 
Top row: coupling $\it{\P}$ displayed as a 2-D image.
Bottom row: matching induced by $\it{\P}$ (each point $x_i$ is connected to the three $y_j$ with the three largest values among $\{\it{\P}_{i,j}\}_j$). The shapes have the same size, but for display purposes, the inner shape $(x_i)_i$ has been reduced.
}
\end{figure}

\begin{figure}[h!]
\centering
\includegraphics[height=.27\linewidth]{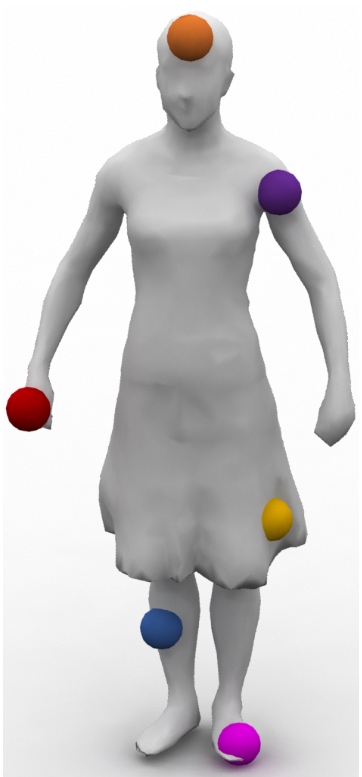}
\includegraphics[height=.27\linewidth]{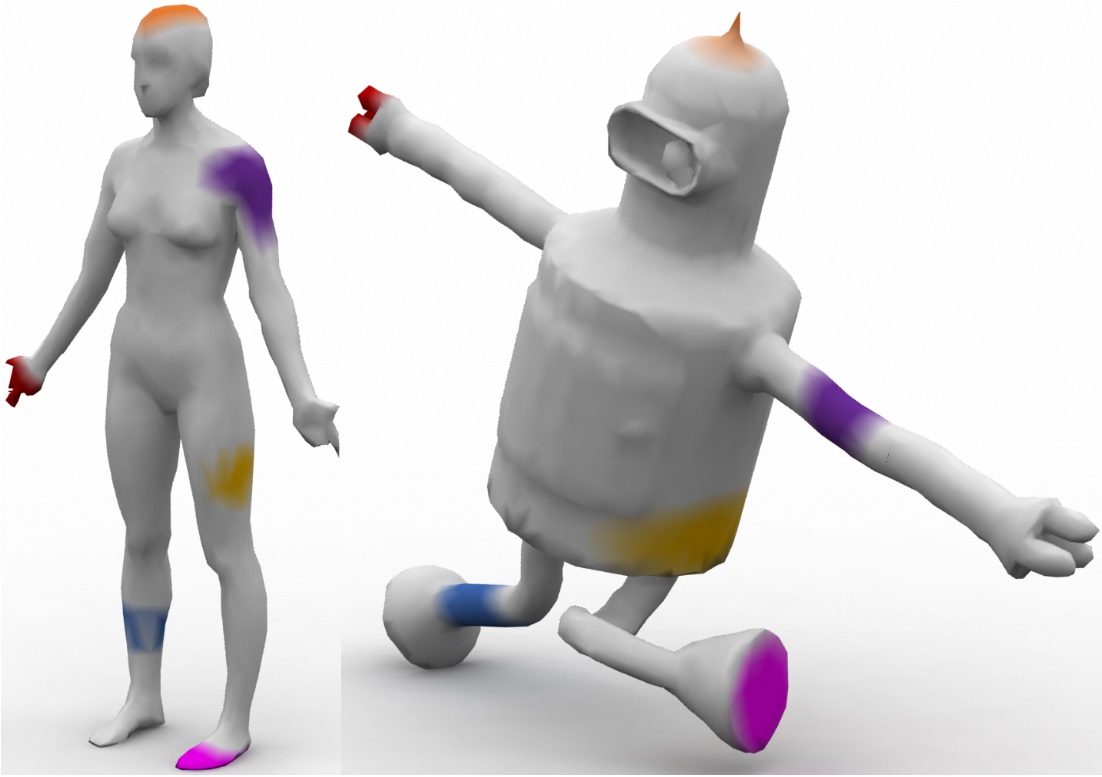}
\includegraphics[height=.27\linewidth]{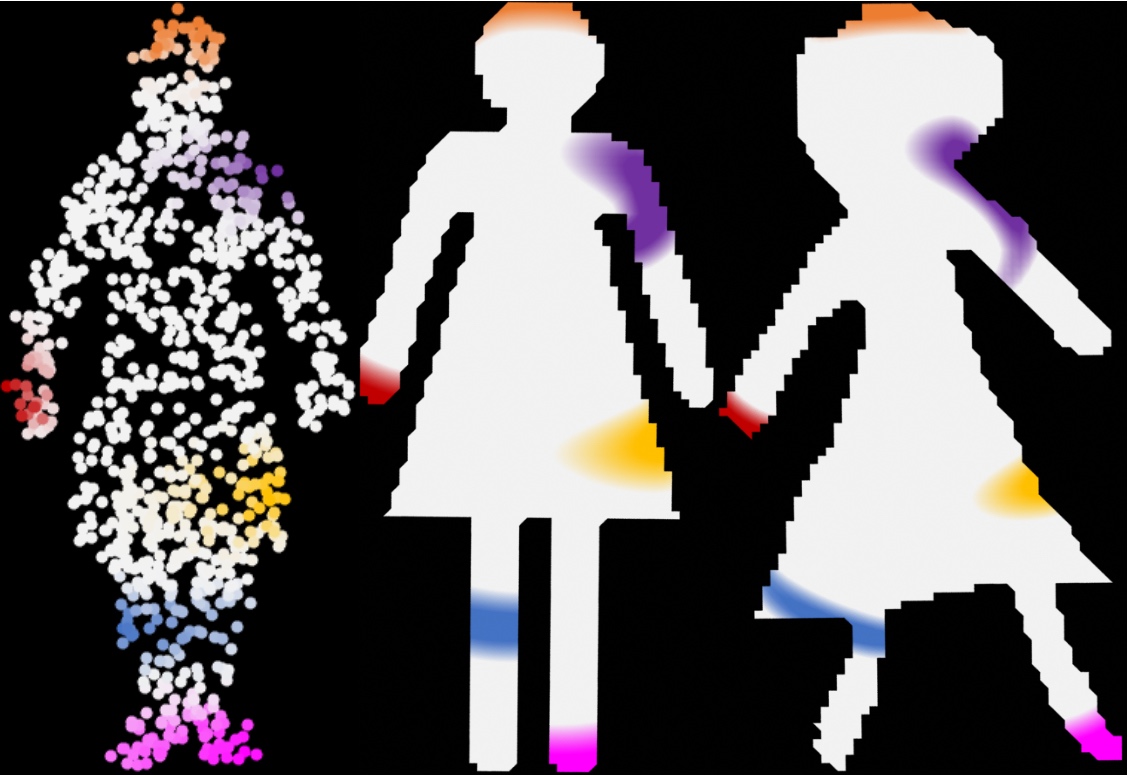}
\caption{\label{fig-gw-sinkhorn}
Example of fuzzy correspondences computed by solving GW problem~\eqref{eq-gw-entropy} with Sinkhorn iterations~\eqref{eq-gw-sinkh}. Extracted from~\citep{2016-solomon-gw}.
}
\end{figure}

\chapter*{Acknowledgements}
We would like to thank the many colleagues, collaborators and students who have helped us at various stages when preparing this survey. Some of their inputs have shaped this work, and we would like to thank in particular Jean-David Benamou, Yann Brenier, Guillaume Carlier, Vincent Duval and the entire MOKAPLAN team at Inria; Francis Bach, Espen Bernton, Mathieu Blondel, Nicolas Courty, R\'emi Flamary, Alexandre Gramfort, Young-Heon Kim, Daniel Matthes, Philippe Rigollet, Filippo Santambrogio, Justin Solomon, Jonathan Weed; as well as the feedback by our current and former students on these subjects, in particular Gwendoline de Bie, L\'ena\"ic Chizat, Aude Genevay, Hicham Janati, Th\'eo Lacombe, Boris Muzellec, Fran\c ois-Pierre Paty, Vivien Seguy.

\bibliographystyle{plainnat}
\bibliography{./biblio/all}

\end{document}